\newtheorem{theorem}{Theorem}
\newtheorem{lemma}{Lemma}
\newtheorem{defi}{Definition}
\definecolor{mygray}{gray}{0.95}
\newcommand{\WL}{\bfW^{[\hat{\lambda}]} }  
\newcommand{\Wp}{\bfW^{[p]} }  
\newcommand{\de}{\tilde{\delta}}
\newcommand{\C}{\mathbb{C}}
\newcommand{\R}{\mathbb{R}}
\newcommand{\tcr}{\textcolor{red}}
\newcommand\blfootnote[1]{
  \begingroup
  \renewcommand\thefootnote{}\footnote{#1}%
  \addtocounter{footnote}{-1}%
  \endgroup
}      
\def\bfa{{\boldsymbol a}}
\def\bfd{{\boldsymbol d}}
\def\bfe{{\boldsymbol e}}
\def\bfh{{\boldsymbol h}}
\def\bfl{{\boldsymbol l}}
\def\bfr{{\boldsymbol r}}
\def\bfs{{\boldsymbol s}}
\def\bfu{{\boldsymbol u}}
\def\bfv{{\boldsymbol v}}
\def\bfw{{\boldsymbol w}}
\def\bfx{{\boldsymbol x}}
\def\bfy{{\boldsymbol y}}
\def\bfz{{\boldsymbol z}}
\def\bfA{{\boldsymbol A}}
\def\bfB{{\boldsymbol B}}
\def\bfC{{\boldsymbol C}}
\def\bfD{{\boldsymbol D}}
\def\bfE{{\boldsymbol E}}
\def\bfG{{\boldsymbol G}}
\def\bfH{{\boldsymbol H}}
\def\bfI{{\boldsymbol I}}
\def\bfJ{{\boldsymbol J}}
\def\bfL{{\boldsymbol L}}
\def\bfM{{\boldsymbol M}}
\def\bfN{{\boldsymbol N}}
\def\bfP{{\boldsymbol P}}
\def\bfQ{{\boldsymbol Q}}
\def\bfS{{\boldsymbol S}}
\def\bfU{{\boldsymbol U}}
\def\bfV{{\boldsymbol V}}
\def\bfW{{\boldsymbol W}}
\def\bfX{{\boldsymbol X}}
\def\bfY{{\boldsymbol Y}}
\def\bfZ{{\boldsymbol Z}}
\newcommand{\cH}{\mathcal{H}}
\newcommand{\wcH}{\widetilde{\mathcal{H}}}
\newcommand{\wL}{{\widetilde{\bfL}}}
\newcommand{\wS}{{\widetilde{\bfS}}}
\newcommand{\wW}{{\widetilde{\bfW}}}
\newcommand{\tr}{{\tilde{r}}}
\newcommand{\tmu}{{\tilde{\mu}}}
\newcommand{\sL}[1][t-1]{|\lambda^*_{k+1}|+\Big(\frac{1}{2}\Big)^{#1}|\lambda^*_k|}
\renewcommand{\arraystretch}{1.5}
\def\v{{\bf v}}
\begin{document}
 
%
    
%
\thesistitle{Non-convex Optimizations for Machine Learning with Theoretical Guarantee:\\Robust Matrix Completion and Neural Network Learning}        
\author{Shuai Zhang}        
\degree{Doctor of Philosophy}        
\department{Electrical, Computer, and Systems Engineering} 
     
\signaturelines{4}     
\thadviser{Meng Wang}
\memberone{Birsen Yazici}        
\membertwo{Ali Tajer}        
\memberthree{John E. Mitchell}

\submitdate{[December 2021]\\
Submitted November 2021}        
\copyrightyear{2021}   

%
\titlepage     
\copyrightpage         
\tableofcontents        
\listoftables          
\listoffigures         

 \newpage
\specialhead{\uppercase{Mathematical Notations}}
\begin{minipage}{\textwidth}
\centering
\centerline{\bf Numbers and Arrays}
\bgroup
\def\arraystretch{1.5}
\begin{tabular}{cp{4in}}
$\displaystyle a$ & A scalar (integer or real)\\
$\displaystyle \bfa$ & A vector\\
$\displaystyle \bfA$ & A matrix\\
$\displaystyle \bfI_n$ & Identity matrix with $n$ rows and $n$ columns\\
$\displaystyle \bfI$ & Identity matrix with dimensionality implied by context\\
$\displaystyle \bfe_i$ & Standard basis vector $[0,\dots,0,1,0,\dots,0]$ with a 1 at position $i$\\
$\displaystyle \text{diag}(\bfa)$ & A square, diagonal matrix with diagonal entries given by $\bfa$\\
\end{tabular}
\egroup
\index{Scalar}
\index{Vector}
\index{Matrix}
\index{Tensor}
\end{minipage}

\vspace{0.1in}
\begin{minipage}{\textwidth}
\centering
\centerline{\bf Sets}
\bgroup
\def\arraystretch{1.5}
\begin{tabular}{cp{4in}}
$\displaystyle \mathbb{A}$ & A set\\
$\displaystyle \mathbb{Z}$ & The set of integers \\
$\displaystyle \R$ & The set of real numbers \\
$\displaystyle \C$ & The set of complex numbers \\
$\displaystyle \{0, 1\}$ & The set containing 0 and 1 \\
$\displaystyle [n]$ & The set of all integers between $0$ and $n$ as  $\{0, 1, \dots, n \}$\\
$\displaystyle [a, b]$ & The real interval including $a$ and $b$\\
$\displaystyle (a, b]$ & The real interval excluding $a$ but including $b$\\
\end{tabular}
\egroup
\index{Scalar}
\index{Vector}
\index{Matrix}
\index{Tensor}
\index{Graph}
\index{Set}
\end{minipage}

\vspace{0.1in}
\begin{minipage}{\textwidth}
\centerline{\bf Indexing}
\bgroup
\def\arraystretch{1.5}
\begin{tabular}{cp{4in}}
$\displaystyle \bfa_i$ & Element $i$ of vector $\bfa$, with indexing starting at 1 \\
$\displaystyle \bfa_{-i}$ & All elements of vector $\bfa$ except for element $i$ \\
$\displaystyle \bfA_{i,j}$ & Element $i, j$ of matrix $\bfA$ \\
$\displaystyle \bfA_{i, :} \text{ or } \bfA_{i*}$ & Row $i$ of matrix $\bfA$ \\
$\displaystyle \bfA_{:, i} \text{ or } \bfa_i$ & Column $i$ of matrix $\bfA$ \\
\end{tabular}
\egroup
\end{minipage}

\vspace{0.1in}
\begin{minipage}{\textwidth}
\centerline{\bf Linear Algebra Operations}
\bgroup
\def\arraystretch{1.5}
\begin{tabular}{cp{5in}}
$\displaystyle \bfA^{\top}$ & Transpose of matrix $\bfA$ \\
$\displaystyle \bfA^\mathsf{H}$ & Conjugate transpose of matrix $\bfA$ \\
$\displaystyle \bfA^\dagger$ & Moore-Penrose pseudoinverse of $\bfA$\\
$\displaystyle \bfA \odot \bfB $ & Element-wise (Hadamard) product of $\bfA$ and $\bfB$\\
$\displaystyle \mathrm{det}(\bfA)$ & Determinant of $\bfA$ \\
\end{tabular}
\egroup
\index{Transpose}
\index{Element-wise product|see {Hadamard product}}
\index{Hadamard product}
\index{Determinant}
\end{minipage}

\vspace{0.1in}
\begin{minipage}{\textwidth}
\centerline{\bf Calculus}
\bgroup
\def\arraystretch{1.5}
\begin{tabular}{cp{5in}}
$\displaystyle\frac{d y} {d x}$ & Derivative of $y$ with respect to $x$\\ [2ex]
$\displaystyle \frac{\partial y} {\partial x} $ & Partial derivative of $y$ with respect to $x$ \\
$\displaystyle \nabla_\bfx y $ & Gradient of $y$ with respect to $\bfx$ \\
$\displaystyle \frac{\partial f}{\partial \bfx} $ & Jacobian matrix $\bfJ \in \R^{m\times n}$ of $f: \R^n \rightarrow \R^m$\\
$\displaystyle \nabla_\bfx^2 f(\bfx)$ & The Hessian matrix of $f$ at input point $\bfx$\\
$\displaystyle \int_\mathbb{S} f(\bfx) d\bfx$ & Definite integral with respect to $\bfx$ over the set $\mathbb{S}$ \\
\end{tabular}
\egroup
\index{Derivative}
\index{Integral}
\index{Jacobian matrix}
\index{Hessian matrix}
\end{minipage}

\begin{minipage}{\textwidth}
\centerline{\bf Probability}
\bgroup
\def\arraystretch{1.5}
\begin{tabular}{cp{3.7in}}
$\displaystyle p(\bfa)$ & A probability distribution over a continuous variable\\
$\displaystyle \bfa \sim P$ & Random variable $\bfa$ has distribution $P$\\
$\displaystyle  \mathbb{E}_{\bfx\sim P} [ f(x) ]\text{ or } \mathbb{E} f(x)$ & Expectation of $f(x)$ with respect to $P(\bfx)$ \\
$\displaystyle \text{Var}(f(x)) $ &  Variance of $f(x)$ under $P(\bfx)$ \\
$\displaystyle \bfx \sim \mathcal{N} (\mu , \Sigma)$ & Gaussian distribution %
over $\bfx$ with mean $\mu$ and covariance $\Sigma$ \\
\end{tabular}
\egroup
\index{Independence}
\index{Conditional independence}
\index{Variance}
\index{Covariance}
\index{Kullback-Leibler divergence}
\index{Shannon entropy}
\end{minipage}

\vspace{0.1in}
\begin{minipage}{\textwidth}
\centerline{\bf Functions}
\bgroup
\def\arraystretch{1.5}
\begin{tabular}{cp{4in}}
$\displaystyle f: \mathbb{A} \rightarrow \mathbb{B}$ & The function $f$ with domain $\mathbb{A}$ and range $\mathbb{B}$\\
$\displaystyle f \circ g $ & Composition of the functions $f$ and $g$ \\
  $\displaystyle f(\bfx ; \theta) $ & A function of $\bfx$ parametrized by $\theta$.
  (Sometimes we write $f(\bfx)$ and omit the argument $\theta$ to lighten notation) \\
$\displaystyle \log x$ & Natural logarithm of $x$ \\
$\displaystyle || \bfx ||_p $ & $\ell_p$ norm of $\bfx$ \\
$\displaystyle || \bfx || $ & $\ell_2$ norm of $\bfx$ \\
$\displaystyle || \bfX ||_F $ & Frobenius norm of $\bfx$ \\
$\displaystyle || \bfX ||_2 $ & Spectrum norm of $\bfx$ \\
\end{tabular}
\egroup
\index{Sigmoid}
\index{Softplus}
\index{Norm}
\end{minipage}

\vspace{0.1in}
\begin{minipage}{\textwidth}
\centerline{\bf Order Analysis}
\def\arraystretch{1.5}
\begin{tabular}{cp{4in}}
$\displaystyle f(n) = \mathcal{O}\big(g(n)\big)$ & $f(n)\leq Cg(n)$ holds for some constant $C>0$ when $n$ is sufficiently large\\
$\displaystyle f(n) = {\Omega}\big(g(n)\big)$ & $f(n)\geq Cg(n)$ holds for some constant $C>0$ when $n$ is sufficiently large\\
$\displaystyle f(n) = \Theta\big(\big(g(n)\big)$ & $f(n) = Cg(n)$ holds for some constant $C>0$ when $n$ is sufficiently large\\
\end{tabular}
\end{minipage}
\newpage
\specialhead{ACKNOWLEDGEMENTS}
 
I would like to take this valuable opportunity to express my appreciation to everybody helping me during my Ph.D. career at RPI. 

First and foremost, I would like to thank my Ph.D. advisor Meng Wang, and three IBM mentors Sijia Liu, Pin-Yu Chen, and Jinjun Xiong. During my Ph.D. study, Prof. Wang  provided me the assistant in learning mathematical tools and helped me pick the proper project that I could make contributions to. She is also a kind and attentive teacher, which keeps providing me encourages and passions exploring the world. Prof. Liu, Dr. Chen, and Prof. Xiong are talented mentors and wise researchers. They are knowledgeable and experts in neural network learning and related fields. I learned a lot from them on how to find and formulate the problems. All of them provided me tremendous help in writing essays and being an independent researcher.

I would like to thank my committee members, Prof. Ali Tajer, Prof. Birsen Yazici, and Prof. John E. Mitchell. I am fortunate to take the statistical courses of Prof. Tajer and Prof. Yazici, which provide me with a broader vision of statistical learning and motivate me to explore the possibility in the related field. 
I also feel lucky to have the opportunity to learn the basics of non-convex optimization from Prof. Mitchell before starting my research. All of them are intelligent and inspiring teachers, and I have learned useful mathematical tools from them, which helps me in exploring my researcher. 

Besides them, I also want to thank my lab mates during my Ph.D. studies: Pengzhi Gao, Yingshuai Hao, Wenting Li, Ren Wang, Ming Yi, Hongkang Li, Nowaz Rabbani, Yating Zhou and Jiawei Sun. 
I enjoy the smooth and cheerful atmosphere in the room of JEC 6308, which motives  me in finishing my Ph.D. study. 
Special thanks to Yingshuai who helped me a lot in nearly every aspect of my research and life at my early stage as a Ph.D. student, and my first work (in Chapter 2) in this thesis is built upon our collaborated paper. 

Moreover, I would like to provide  thanks to  the administrative staff at RPI for making a grate environment for graduate studies. Special thanks to Kelley Kritz and Prof. Alhussein Abouzeid for covering up  my mistake when formulating the thesis committee. 

In addition, I am very grateful to all of my friends and peers for making my Ph.D. years in U.S. the most memorable years of my life.
Special thanks to Si Cheng, Songyang Zhang, Yulong Wang, Xiaoxiao Wang, Qiang Shen, Lei Li, Junze Yuan,  Xi Fu, and Lei Li for the enjoyable memorizes when having trips in American. 
Special thanks to Yanwen Chen, Xiang Zhou, Xueqing Liu, Chunheng Jiang, Siwen Zhang, Hao Lu, and Feike Wu for a wonderful campus life at RPI. 
Special thanks to    William Tang, Kev, Moon, Aaron, Vincent Wong, and Zenan Jin for the enjoyable time when isolation during COVID-19.

Last but not the least, I would like to thank my parents for their selfless love and supports on my Ph.D. study.  
 
\specialhead{ABSTRACT}
Despite the recent development in machine learning, most learning systems are still under the concept of ``black box'', where the performance cannot be understood and derived.
With the rise of safety and privacy concerns in public, designing an explainable learning system has become a new trend in machine learning. 
In  general, many machine learning problems are formulated as minimizing (or maximizing) some loss function. Since real data are most likely generated from non-linear models, the loss function is non-convex in general. 
Unlike the convex optimization problem, gradient descent algorithms will be trapped in  spurious local minima in solving non-convex optimization. Therefore, it is challenging to provide explainable algorithms when studying non-convex optimization problems.  
In this thesis,  two popular non-convex problems are studied: (1) low-rank matrix completion and (2) neural network learning.
 
In low-rank matrix completion (MC), the objective is to recover a low-rank matrix from partial observations that may contain significant errors. 
When there are errors in the observations, the problem is also called Robust Matrix Completion (RMC). 
Both RMC and MC problems are  non-convex due to the natural constraint of low-rankness. 
However, the low-rank structure does not capture the temporal correlations in some time series, i.e., power system monitoring, magnetic resonance imaging, and array signal processing. 
As a result, low-rank MC cannot handle the whole column/row being fully lost. 
In this thesis,
a new model, termed multi-channel Hankel matrices, is proposed to characterize the intrinsic low-dimensional structures in some multichannel time series. By exploiting the new model in this thesis, two projected gradient-based algorithms are developed to solve the non-convex MC and RMC problems, respectively. The algorithms converge linearly to the ground truth data matrix as long as all the measurements are in at most a constant fraction of lost or corrupted columns. Theoretical results suggest that the required number of observations for successful estimation is significantly less than the existing bounds for conventional MC/RMC. 



Learning a neural network is to find appropriate parameters for the hidden layers using the training data, and the learned model is achieved by minimizing a non-convex empirical loss function over the choices of the model parameters. A reliable learning model requires a small generalization error, which simultaneously achieves a small training error and generalization gap. 
According to the classic generalization theories, such as Vapnik–Chervonenkis (VC) dimension,  the bounded generalization requires a larger number of training samples than the model complexity. However, solving the optimization problems with such a number of  samples is not guaranteed to find a minimum with a small training error due to the high non-convexity of the objective functions. Therefore,  studying the convergence to the global optimum when training neural networks is vital and challenging. 
This thesis provides the convergence analysis to the global optimum when the number of samples is larger than the model complexity for one-hidden-layer neural networks with Gaussian inputs. 
Also, the minimal required training samples to guarantee zero generalization error (or bounded generalization error for noisy measurements and binary classification problems)
are presented for various neural network architectures.

Although the sample complexity for zero generalization errors is established, there are cases where the training process is not accessible to adequate training samples due to the difficulty of generating reliable data. 
Therefore,
this thesis further explores the methods in dealing with a limited number of training samples, and the focuses are the network pruning and self-training algorithms. The motivation for studying self-training comes naturally from a semi-supervised framework, which leverages many unlabeled data to improve learning when the labeled data are limited. In contrast, the network pruning is mainly inspired by the recent Lottery Ticket Hypothesis (LTH), which claims that a good pruned network achieves a faster convergence rate and higher test accuracy than the original dense network. As the model complexity of the pruned network is highly reduced, the required sample complexity should decrease simultaneously for a bounded generalization. 
In this thesis, the sample complexity and convergence analysis of a pruned network are characterized, and the properties of a good pruned network, improved convergence rate and test accuracy,  can be explained through the given theoretical results. 
For self-training, this thesis proves the benefits of unlabeled data in both training convergence and generalization ability, and the improvement of the convergence rate and generalization accuracy are characterized quantitatively given some sufficiently large number of unlabeled data.

\chapter{\uppercase{Introduction}}
Non-convex optimization are widely involved in the modern learning problems, including matrix completion \cite{AFSU07,CT10, Netflix,CP10} and neural network learning \cite{SHA16,CW08,KSHE12}. In matrix completion problems, the natural constraint is the low-rankness of the ground-truth matrices. In training neural networks, the data itself is usually assumed to be represented by a non-linear model, and the objective is a non-convex function in general. Such non-convex objectives and constraints naturally describe the learning problems themselves but leave the optimization problem NP-hard if no other assumptions are imposed. Unlike convex optimization problems, there is no general framework for the theoretical analysis of non-convex optimization problems. 

\section{Robust Matrix Completion}
Robust Matrix completion (RMC) \cite{CJSC13} aims to recover a low-rank matrix from its partial observations of the measurements that may contain significantly large but sparse bad data. 
RMC problems are closely related to miss data recovery and principal component analysis, and sampling applications include video surveillance \cite{WGRPM09}, face recognition \cite{CLMW11}, image processing \cite{OCS15}, network traffic analysis \cite{MMG13}, and power system \cite{GWCGFSR16}. 
Direct applications are the data recovery of missing points, detection/correction of outliers and anomalies, predictions of the future points in network analysis, power system, and video processing.  Take the background subtraction in video processing as an instance. The background is almost statics across frames and can be viewed as a low-rank matrix if each frame is vectorized.  Meanwhile, the foreground is dynamic and only appears in a small fraction of pixels in each frame so that the foreground data can be formulated as sparse errors. 
Typical RMC is formulated as an optimization problem by decomposing the observed measurements into two matrices, where one matrix is low-rank (represent the clean data) and the other one is sparse (represent the bad data). The non-convexity comes naturally from the rank constraint for clean data and $\ell_0$ norm for bad data.
 
The conventional method is to relax the non-convex constraints into the corresponding convex ones, i.e., rank into nuclear, and $\ell_0$ into $\ell_1$. With properly selected regularization parameters, the ground truth are approximately or  exactly the solutions to the convex relaxation \cite{CT10,CR09,Gross11,CJSC13,KLT17,CLMW11, CXCS11,HKZ11}. However, the convex relaxation is time-consuming to solve. Another line of research focuses on developing non-convex algorithms \cite{GWL16,KC12,CW15,NNSAJ14,YPCC16}. While these approaches are much more computationally efficient than the convex alternatives, the convergence to the global optimum often has a higher requirement in the number of observations theoretically.

\subsection{Column-wise Corruptions} 
In a practical setting, the data matrix may experience column-wise loss or corruption. For instance, in power systems, the measurements in all PMUs may be lost at a certain period due to communication congestion. 
However, the current MC algorithm can locate the column at best but cannot recover the actual values if all the observations in one column are corrupted or missing  \cite{CJSC13,CGJ16,XCS12,CXCS11}.
Since every column is a data point in the $r$-dimensional column subspace, where $r$ is the rank of the data matrix,
even if the column subspace is correctly identified, at least $r$
linearly independent equations, i.e., $r$ entries of each column,
are needed to determine the exact values of that column.

In exploring the low-rankness, some low-rank matrix has been proved to have the additional low-rank Hankel property that the structured Hankel matrix from the original matrix is still low-rank. 
For instance, if the data matrices contain the time series of multiple output channels in a dynamical system that 
a reduced order linear system can approximate, then the structured Hankel matrix of the original data matrix is still approximately low-rank \cite{ZHWC17,ZHWC18}. 
In array signal processing, the structured Hankel
matrix of a spectrally sparse signal is low-rank,
and the rank depends on the number of sinusoidal components \cite{CC14,YX16}.
The low-rank Hankel property also holds for a class of finite
rate of innovation (FRI) signals, which are motivated by MRI
imaging \cite{Haldar14,OBJ18,YKJL17}.
 
\subsection{Multi-channel Hankel Matrix}
This thesis develops a new model to characterize the intrinsic structures of multiple time series generated by a linear dynamical system. 
This model can be viewed as an extension of the traditional low-rank Hankel matrix model in \cite{CC14,CWW17} to an $n_c$-channel Hankel matrix. The degree of freedom for a traditional low-rank Hankel matrix with rank $r$ is in the order of $r$, while for a rank $r$, $n_c$-channel Hankel matrix in this thesis is in the order of $n_cr$.
Besides linear dynamical system, this multi-channel Hankel matrix model also characterizes spectrally sparse signals in applications like radar imaging and magnetic resonance imaging, and the structured models are proved to be low-rank in practical settings.  

Exploiting the proposed multi-channel Hankel matrix model, several fast algorithms, all with linear convergence rates, are developed to recover the missing points and corruptions with a provable performance guarantee to the ground truth.  Specifically, the computational complexity to recover a rank $r$, $n_c$-channel Hankel matrix with $n$ measurements in each channel is in the order of $r^2 n_c n$ per iteration, and the number of iterations to achieve $\varepsilon$ accuracy is in the order of $\log(1/\varepsilon)$, which is far less than the conventional RMC methods as $n_c^2n/\varepsilon$. 
Moreover, the number of required samples for the recovery is in the order of $\mu r^3 \log n$, which is significantly smaller than the existing bound of $rn\log n$ for a general rank $r$ matrix.
In addition, 
the algorithms can tolerate up to the order of $1/r$ fraction of fully corrupted columns.



\section{Neural Network Learning}
 Recently, neural networks have come into the sight of the public and have demonstrated  their superior performance in machine
learning tasks, for instant, image classification \cite{KSHE12} and recognition \cite{LGTB97}, natural language processing \cite{CW08}, and   strategic game program \cite{SHA16}. A basic neural network model contains the input layer, output layer, and several hidden layers in between.
Due to the diversity of the machine learning tasks, there is a brunch of different neural network structures, i.e., fully connected neural network, convolutional neural network \cite{LBBH98}, graph neural networks \cite{GMS05,SGAHM08}, to name a few. Different neural network structures may lead to significant differences in the neurons' connections among layers and thus the expressive power of the neural networks. For instance, convolutional neural networks use shared weights architecture and find applications mainly in image and video classification. In comparison, graph neural networks consider the topology of structured data and are mainly used in analyzing social network and biology data.  No matter which kind of neural network structure, 
 the targets are all to simulate the mapping from input to output with the appropriate parameters for the hidden layers. 
Therefore, training a neural network needs to find appropriate parameters for the hidden layers using the training data and is achieved by minimizing a non-convex empirical loss function over the choices of the model parameters. The first-order gradient descent (GD) algorithm is usually used to solve the non-convex learning problem and has been succeeded in many numerical experiments  \cite{SHA16, BPLR16, YHCEHL18, DMIBH15, MDSG06, su2020does,XLHL20,CKSNH20,YJCPM19,ZGLC20}. 

\subsection{Generalization of the Learned Model}

Despite the numerical success, neural networks still lack transparency due to the absence of theoretical explanations. 
The ability to explain the performance of neural networks is essential because of the potential overfitting problems, which is also known as the generalizability of the learned model.  
For instance, a successful self-driving car under experimental environments is not guaranteed to gain the public's trust. The safety concern that how accurately the car  would perform in the unseen road conditions still prevents customers from buying it. 
Same in general machine learning tasks,
the potentials of overfitting, where the parameters learned from the training data may perform poorly on the testing data, are one of the biggest hurdle to the widespread acceptance of deep neural network learning.  The study of the generalization error, which measures how accurately the trained parameters predict the unseen data, provides insight into when the overfitting will happen and how to avoid the overfitting.  

The significant challenge lies in achieving the small training error and small generalization gap (difference of test and training errors) simultaneously.
First, from the perspective of optimization, the learning problem is formulated as a non-convex paradigm in general. 
However, the convergence to the global optimal by gradient descent is not guaranteed naturally due
to the existence of spurious local minima. Therefore, achieving small training errors is not a trivial problem. 
Second, 
classical generalization theories, like VC-dimension \cite{BM02} and Rademacher Complexity, indicate that models in low complexity tend to  have small generalization errors. 
However, this requires either the trainable parameters to be much less than the training samples or the ranges of trainable parameters are restricted to a small space, and either of the assumptions fits the current neural network training setup.
Therefore,
the analytical performance about the generalizability of the learned model is still missing.
 
\textbf{Feasibility of a small training error.} 
The feasibility of small training error is to answer the convergence to the global optimum. However, it is well-known that solving the optimization problem for a one-hidden-layer neural network with multiple neurons is even NP-Complete \cite{BR92} without any  assumption. Therefore, various assumptions are imposed in existing works to make the theoretical analysis feasible.

A critical line of the research is over parameterized neural network learning, where the number of parameters is far more than training samples, and the learned model are guaranteed to reach the global optimum from a random initialization under some mild assumptions. 
Earlier works focus on the landscape analysis, and the objective loss functions are proved to have no spurious local minima for shallow neural networks \cite{LSS14,ZBHRV16,SJL18}. The most recent development in training over parameterized networks is the Neural Tangent Kernel (NTK) framework \cite{ZLL19, JGH18, DLLW19,ADHLW19}, which builds the connections of infinitely wide neural networks and  Gaussian kernel methods. In NTK, the gradient descent step size is assumed to be infinitely small, and the parameters are initialized as random Gaussian variables. This setting makes it possible to consider neural network parameters in a relatively small region around the initialization when the neuron widths are large enough.  From the perspective of optimizations, the iterations stay close to the initialization, and the objective functions become approximately linear during the training process, which leads the non-convex problems to be convex.

Another important line of research studied neural networks from the perspective of model estimation \cite{ZSJB17,FCL20,ZWLC20_1,ZWLC20_2,ZWLC20_3}. The input and output data are assumed to be generated from an underlying model with unknown ground truth parameters. Then, the set of the ground truth model parameters is a global minimizer of the objective loss function. Therefore, if the parameters are estimated correctly, the generalization error (test error) is bounded naturally. In the local convex region of the ground truth, achieving a small test error is equivalent to obtaining a small training error. As the landscape is intractable for unknown input distributions \cite{Sh18}, the data are usually assumed to follow Gaussian distributions. If we further have  the hidden layer only containing one neuron, the local convex region near the ground-truth parameters is sufficiently large such that random initialization can converge to global optimal with at least a constant probability \cite{DLTPS17,DLT17,BG17}. However, as the number of neurons increases in the hidden layer, the number of  spurious local minima becomes sufficiently large and cannot be characterized. 
The local convex region near the ground truth for neural networks with multiple neurons is characterized in
\cite{ZSJB17}, and an initialization method via tensor decomposition is proposed to generate an initial point in the characterized local convex region. Therefore, the convergence to the global optimal is achieved, and the learned model has zero training and thus test error.  

\textbf{Generalization theories.}
In general, small training errors cannot  guarantee that the learned model can generalize well on unseen testing data. The classic generalization theories, such as VC-dimension and Rademacher complexity, heavily depends on the model complexity. The concept of ``model complexity'' can be interpreted in two aspects: (1) the dimension of the trainable parameters; (2) the range of the trainable parameters. Therefore, there is no bounded generalization for over-parameterized neural networks without any constraints. 
Nevertheless, with the assumptions in NTK framework, the iterations lie in a small regime near the initialization.
Hence, the model complexity can be bounded by the small range of the trainable parameters \cite{ADHLW19}. To transfer the bounded model complexity into bounded generalization through classic generalization theories, both the data and the target functions needs to satisfy some nice properties, such as data needs to be bounded, and the target functions need to have bounded Lipschitz constant and be infinitely differentiable \cite{ZLL19,ADHLW19}. While in the framework of model estimation, the generalization bound is characterized naturally by the distance of learned weights and the ground truth \cite{ZSJB17}.


\subsection{Linear Convergence with Generalization Guarantee: Convolutional Neural Networks and Graph Neural Networks}
This thesis exploited the convergence analysis and generalization guarantee of neural networks from the perspective of model estimation. To make the analysis feasible, this thesis mainly focuses on  one-hidden layer cases, e.g., shallow non-overlapping  convolutional neural network and graph neural networks. The traditional neural networks can be viewed as a special case of convolutional neural networks when the feature is the input itself or graph neural networks when the graph has no edges. 

Specifically, the key findings of these two neural network structures can be summarized as the first sample complexity analysis for achieving zero generalization error with linear convergence rate via the accelerated gradient descent algorithms.  The required sample complexity for the exact estimation is a linear function of the input (feature) dimension. For graph neural networks, the sample complexity is also an increasing function of the largest singular value of the normalized adjacency matrix and the degree of the graph, where the two factors are related to the number of edges in the graph. Therefore, the theorem explains the intuition that
more edges in the graph correspond to the stronger dependence of the labels on neighboring features, thus requiring more samples to learn these dependencies.

\subsection{Improved Generalization via Network Pruning: Lottery Ticket Hypothesis}

Although learning neural networks with enough training samples can guarantee zero generalization errors. 
However, the recent success of modern machine learning tasks is built upon larger neural networks, which leads to the demand for the large size of reliable training samples.
The neural network pruning becomes a popular approach to reduce the computational cost of model training and inference by removing unnecessary neurons or weights in the neural networks \cite{LDS90,HS93,DCP17,HPTD15,HPTTC16}. However, random pruned networks usually suffer from a significant loss over the original dense networks, while structured pruning methods are time-consuming and not practical. 

The recent  \textit{Lottery Ticket Hypothesis} (LTH)  \cite{FC18}
casts an insight into understanding network pruning with numerical verifications.
Specifically, LTH claims that a randomly  initialized dense neural network always  contains a so-called ``winning ticket'', which is a sub-network  bundled with the corresponding initialization, such that when trained in isolation, this winning ticket can achieve at least the same testing accuracy as that of the original network by running at most the same amount of training time.
Since the proposition of LTH, it has attracted a significant  amount  of recent research interests   \cite{RWKKR20,ZLLY19,MYSS20}.

This thesis provides the systematic analysis of learning pruned neural networks with a finite number of training samples in the {oracle-learner} setup, where the training data are generated by an unknown neural network, the \textit{{oracle}}, and another network, the \textit{{learner}}, is trained on the dataset. The analytical results  also justify the LTH from the perspective of the sample complexity. 

\subsection{Improved Generalization via Unlabeled Data: Semi-supervised Learning}
Semi-supervised learning is another approach to solve the contradictions of large model size and a small portion of trainable data, which combines a small amount of labeled data with a large amount of unlabeled data during training.
Due to the easy access of the unlabeled data, research has seen the diversified developments of the semi-supervised learning approaches \cite{Yar95,GB05,LA16,SBCZZRCKL20,LWHL19,MMLI18}, and the key differences lie in how to generate and make use of the unlabeled data. 

Self-training is one of the most powerful semi-supervised learning  algorithms, which augments a limited number of labeled data with  unlabeled data to achieve
improved   generalization performance   on test data, compared with the model trained by  supervised learning using the labeled data only. 
The terminology ``self-training'' has been used to describe various algorithms in the literature, however, this thesis is centered on  the classic iterative   self-training method in particular \cite{Lee13}.  In this setup \cite{ZWLCX22}, an  initial teacher model (learned from the labeled data) is applied to the unlabeled data to generate pseudo labels. One then trains a student model   by minimizing the weighted empirical risk of both the labeled and unlabeled data. The student model is then used as the new teacher model to update the pseudo labels of the unlabeled data. This process is repeated multiple times to improve the eventual student model.

This thesis provides the theoretical study of iterative self-training on nonlinear neural networks. Focusing on one-hidden-layer neural networks, this thesis provides a quantitative analysis of the generalization performance of iterative self-training as a function of the number of labeled and unlabeled samples. 





\section{Organization of This Thesis}
The rest of the thesis is organized as follows. 
Chapter \ref{chapter: 1} introduces the relative works of low-rank Hankel matrix completion from partially observed data and the proposed recovery algorithm. 
In Chapter \ref{chapter: 2}, the robust Hankel matrix completion from partially observed data and bad data and the proposed recovery algorithm are included. Chapter \ref{chapter: 3} introduces the convergence analysis of non-overlapping convolutional neural networks via the stochastic gradient descent and tensor initialization algorithms. 
Chapter \ref{chapter: 4} summarized the theoretical and experimental results of graph neural networks in solving regression and classification problems. 
In Chapter \ref{chapter: 5}, the theoretical analysis of a pruned neural network is provided, and the connections with the existing Lottery Ticket Hypothesis \cite{FC18} are briefly discussed. 
In Chapter \ref{chapter: 6}, this thesis revisits the classic self-training algorithms and provides theoretical insights into understanding the roles of unlabeled data in semi-supervised learning.

 
\chapter{\uppercase{Multi-Channel Hankel Matrix Completion}}\label{chapter: 1}
\blfootnote{Portions of this chapter previously appeared as: S.~Zhang, Y.~Hao, M.~Wang, and J.~H. Chow, ``Multi-channel Hankel matrix completion through nonconvex
  optimization,'' \emph{IEEE J. Sel. Topics Signal Process.}, vol.~12,
  no.~4, pp. 617--632, Apr. 2018.}
\blfootnote{This chapter is based on the work collaborated with Yingshuai Hao. We contributed to the work equally.
In order to present the work coherently, this chapter includes all of my work and our collaborated work, and
part of Yingshuai’s work.}
\section{Introduction}\label{sec:intro}
Missing data recovery is an important task in various applications such as covariance estimation from partially observed correlations in  remote sensing \cite{CP10}, multi-class learning in machine learning \cite{AFSU07,CT10},  the   Netflix Prize \cite{Netflix} problem and other similar
questions in collaborative filtering \cite{GNOT92}. Moreover, the recent framework of super-resolution enables accurate signal recovery from   sparsely sampled measurements \cite{CF14}. Example applications include magnetic resonance imaging (MRI) \cite{Haldar14,JLY16,SLOE14} and target localization in radar imaging \cite{CC14,YX16}. In power system monitoring, Phasor Measurement Units (PMU) \cite{PT08} can measure voltage and current phasors directly at various locations and transmit the measurements  to the operator for state estimation \cite{AI09,DCTP10} or disturbance identification \cite{MZFBD10}. 
Some PMU data points, however, do not reach the operator due to PMU malfunction or communication congestions. These missing data points should be recovered for the subsequent applications on PMU data \cite{GWGCFS16}.

Since  practical datasets often have intrinsic low-dimensional structures, the missing data  recovery problem can be formulated as a low-rank matrix completion problem, which is nonconvex due to the rank constraint. Its convex relaxation, termed Nuclear Norm Minimization (NNM) problem, 
has been extensively investigated  \cite{CT10,CR09,Fazel01,Gross11}.   Given an $n_c\times n$ ($n_c \leq n$) matrix with rank $r$ ($r\ll n$), as long as $O(rn \log^2 n)$ entries are observed, one can recover   the remaining entries accurately by solving NNM \cite{CT10,CR09,Gross11}.  

Although elegant theoretical analyses exist, 
 convex approaches like NNM have high computational complexity and poor convergence rate. For
example, to decompose an $n_c \times n$ matrix, the per-iteration complexity of the best specialized
implementation is $O(n_c^2n)$ \cite{NNSAJ14}. To reduce the computational complexity, first-order algorithms like \cite{JNS13} have been developed to solve the non-convex problem directly. 
Despite the numerical superiority, the
theoretical analyses of the convergence and recovery performance of these nonconvex methods are still open problems.    Only a few recent work such as \cite{CWW17,JNS13} provided such analyses on a case-by-base basis.



The low-rank matrix model, however,  does not capture the temporal correlations in time series.
A permutation of measurements at different time steps would result in  different time series, but   the rank of the data matrix remains the same.  As a result, low-rank matrix completion methods require at least $r$ entries   in each column/row to recover the missing points and would fail if a complete column/row was lost.  
They cannot recover  simultaneous data losses among all channels. Simultaneous data losses are not uncommon in power systems due to communication congestions. 

There is limited  study of  the coupling of low-dimensional models and   temporal correlations. Parametric models like hidden Markov models \cite{ML13,MSR10} and autoregression (AR) models \cite{HW15,MSPD15} are employed to model  temporal correlations. The accuracy of the algorithms   depends on  the correct  estimation of  model parameters, and no theoretical analysis is reported. 


In this chapter, a new model is developed  to characterize the intrinsic structures of   multiple time series that are generated by a linear dynamical system.
Our model of \textit{multi-channel low-rank Hankel matrix} characterizes the temporal correlations in time series like PMU data   without directly modeling the dynamical systems and estimating the system parameters. 
Our model can also be viewed as an extension of  the single-channel low-rank Hankel matrix model with a $\Theta(r)$ degree of freedom in  \cite{CC14,CWW17}  to an $n_c$-channel matrix with a $\Theta(n_cr)$ degree of freedom. It can also characterize spectrally sparse signals in applications like radar imaging \cite{PEPC10} and magnetic resonance imaging \cite{LDP07}. 

Building upon the FIHT algorithm \cite{CWW17}, this thesis proposes two fast algorithms, termed accelerated multi-channel fast iterative hard thresholding (AM-FIHT) and robust AM-FIHT (RAM-FIHT) for multi-channel low-rank Hankel matrix completion.  They can recover missing points for simultaneous data losses. 
The heavy ball method \cite{P87, LESB16} is employed to accelerate the convergence rate, and the acceleration is evaluated  theoretically and numerically.  
 Our algorithms converge  linearly with a low per iteration complexity  $O(r^2n_cn+rn_cn\log n +r^3)$ to the original matrix (noiseless measurements) or a sufficiently close matrix depending on the noise level (noisy measurements). Theoretical analyses of FIHT with only noiseless measurements are reported \cite{CWW17}. Moreover, the recovery is successful as long as the number of observed measurements is $O(r^2\log^2n)$, significantly lower than $O(rn\log^2 n)$ for general rank-$r$ matrices. This number is also a constant fraction of the   required number  of measurements by applying the single-channel Hankel matrix completion methods like FIHT \cite{CWW17} to each channel separately. 

\section{Problem Formulation}\label{c1sec:model}
Consider an $n_p$-th order linear dynamical system after an impulse response. Let $\bfs_t\in \C^{n_p}$ and $\bfx_t\in \C^{n_c}$ denote deviations of state variables and observations at time $t$ from the equilibrium point. Then we have 
\vspace{-0.05in}
\begin{equation}\label{A}
\begin{split}
\bfs_{t+1} & = \bfA\bfs_t, \ \ \bfx_t= \bfC\bfs_t,\quad t=1,2,\cdots,n,
\end{split}
\end{equation}
where $\bfA \in \C^{n_p\times n_p}$, and $\bfC \in \C^{n_c \times n_p}$.
Let   $\bfX$ contain the measurements from time 1 to $n$, 
\vspace{-0.05in}
\begin{equation}\label{M}
\bfX=[\bfx_1,\  \bfx_2,\ \cdots, \ \bfx_n]\in \mathbb{C}^{n_c\times n}.
\end{equation}
Further, the Hankel matrix of $\bfX$ is defined as  
\small
\begin{equation}\label{eqn:Hankel}
\mathcal{H}(\bfX)=
\begin{bmatrix}
\bfx_1 & \bfx_2 & \cdots & \bfx_{n_2}\\
\bfx_2 & \bfx_3 & \cdots & \bfx_{n_2+1}\\
\vdots & \vdots & \ddots & \vdots\\
\bfx_{n_1} & \bfx_{n_1+1} & \cdots & \bfx_n
\end{bmatrix}\in \C^{n_cn_1 \times n_2},	
\end{equation}
\normalsize
where $n_1+n_2=n+1$.

Suppose $\bfA$ could be diagonalized, denoted by $\bfA=\bfP{\bf\Lambda}\bfP^{-1}$, 
where
 $\bfP=[\bfl_1, \bfl_2, \cdots,  \bfl_{n_p}]$, 
 $\bfP^{-1}=[\bfr_1, \bfr_2,  \cdots, \bfr_{n_p}]^{H}$, and $(\cdot)^H$ stands for the conjugate transpose. ${\bf \Lambda}=\textrm{diag}(\lambda_1,\cdots,\lambda_{n_p})$ contains the eigenvalues of $\bfA$. Then
{
\begin{equation}\label{eqn:x}
\begin{split}
 \bfx_{t+1} 
 =\bfC \underbrace{\bfA \cdots \bfA}_{t \text{ times}} \bfs_1
 &= \bfC \bfA^{t} \bfs_1
 =\bfC \bfP{\bf\Lambda}^{t}\bfP^{-1} \bfs_1
 =\sum_{i=1}^{n_p}\lambda_i^t\bfr_i^H \bfs_1\bfC\bfl_i.
\end{split}
 \end{equation}}

 All $n_p$ modes of the system are considered in \eqref{eqn:x}. In practice, a mode might be highly damped ($|\lambda_i|\approx 0$), or
not excited by the input ($|\bfr_i^H  \bf s_1| \approx $ 0), or not directly measured ($\|\bfC \bfl_i\|\approx 0$). If only $r$ ($r \ll n$) out of $n$ modes are  significant, assuming these modes to be $\lambda_1$,..., $\lambda_r$ for simplicity, we have  
\begin{equation}\label{dynamic model}
\bfx_{t+1}\simeq\sum_{i=1}^{r}\lambda_i^t\bfr_i^H \bfs_1\bfC\bfl_i.
\end{equation}
Then the corresponding Hankel matrix can be written as
\small
\begin{equation}\label{Hankel X}
\mathcal{H}\bfX
=\begin{bmatrix}
\bfx_1 & \bfx_2 & \cdots & \bfx_{n_2}\\
\bfx_2 & \bfx_3 & \cdots & \bfx_{n_2+1}\\
\vdots & \vdots & \ddots & \vdots\\
\bfx_{n_1} & \bfx_{n_1+1} & \cdots & \bfx_n
\end{bmatrix}=\bfP_L{\bf\Gamma}\bfP_R^T,
\end{equation}
\normalsize
where
\small
\begin{equation}\label{P_l}
\bfP_L=\begin{bmatrix}
\bfI_{n_c} & \bfI_{n_c} & \cdots & \bfI_{n_c}\\
\lambda_1\bfI_{n_c} & \lambda_2\bfI_{n_c} & \cdots & \lambda_r\bfI_{n_c}\\
\vdots & \vdots & \ddots & \vdots\\
\lambda_1^{n_1-1}\bfI_{n_c} & \lambda_2^{n_1-1}\bfI_{n_c} & \cdots & \lambda_r^{n_1-1}\bfI_{n_c}
\end{bmatrix} \in \C^{n_c n_1\times n_c r},
\end{equation}
\begin{equation}
\boldsymbol{\Gamma}=\begin{bmatrix}
\bfr_1^H\bfx_1\bfC\bfl_1 & \bf0  & \cdots  & \bf0\\
\bf0 &\bfr_2^H\bfx_1\bfC\bfl_2&  \cdots & \bf0   \\
\vdots &  \vdots  & \ddots& \vdots\\
\bf0 &\bf0   & \cdots &\bfr_r^H\bfx_1\bfC\bfl_{r}\\
\end{bmatrix}\in \C^{n_c r\times r},
\end{equation}
\normalsize
and
\small
\begin{equation}
\bfP_R=\begin{bmatrix}
1 & 1 & \cdots & 1\\
\lambda_1 & \lambda_2 & \cdots & \lambda_r\\
\vdots & \vdots & \ddots & \vdots\\
\lambda_1^{n_2-1} & \lambda_2^{n_2-1} & \cdots & \lambda_r^{n_2-1}
\end{bmatrix}\in \C^{n_2\times  r},
\end{equation}
\normalsize
where $\bfI_{n_c}\in \mathbb{C}^{n_c \times n_c}$ is the identity matrix. One can check that both $\bfX$ and $\mathcal{H}\bfX$ are rank $r$ matrices\footnote{We assume $\mathcal{H}(\bfX)$ is exactly rank $r$ throughout this chapter. The methods analyses can be extended to approximately  low-rank matrices with minor modifications. If $\mathcal{H}(\bfX)$ is approximately low-rank, i.e., its rank-$r$ approximation error is very small, we seek to find the best rank-$r$ approximation to $\mathcal{H}(\bfX)$. Then the recovery error is at least the approximation error.}.

 Let $\bfN \in \C^{n_c \times n}$ denote the measurement noise. $\bfM=\bfX+\bfN$ denotes the noisy measurements.  Some entries of $\bfM$ are not observed due to data losses. Let $\widehat{\Omega}$ denote the index set of observed entries. The objective of missing data recovery is to reconstruct the missing data based on the observed entries $\mathcal{P}_{\widehat{\Omega}}(\bfM)$.   Since the rank of $\mathcal{H}\bfX$ is $r$, the data recovery problem can be formulated as
\begin{equation}\label{m2}
\min_{{\bfZ}\in \mathbb{C}^{n_c\times n}}\left\| \mathcal{P}_{\widehat{\Omega}}({\bfZ}-{\bfM})\right\|^2_F \ \  \text{subject to}\  \text{rank}(\mathcal{H}\big({\bfZ})\big)=r,
\end{equation}
where   $\mathcal{P}_{\widehat{\Omega}}(\cdot)$ is the sampling operator with
 $(\mathcal{P}_{\widehat{\Omega}}(\bfZ))_{ij}=Z_{ij}$ if $(i,j)\in\widehat{\Omega}$ and $0$ otherwise. (\ref{m2}) is a nonconvex problem due to the rank constraint.  It reduces to the conventional matrix completion problem when $n_1=1$.

Clearly, the recovery is impossible if $\bfX$ is in the null space of $\mathcal{P}_{\widehat{\Omega}}(\cdot)$. Additionally, the definition of $\mu$ follows the standard   incoherence assumption in low-rank matrix completion \cite{CR09}:
\begin{defi}
	A matrix $\bfZ\in\mathbb{C}^{l_1 \times l_2}$ with singular value decomposition {\textrm (SVD)} as ${{\bfZ}}={\bfU}{\bf\Sigma}{\bfV}^H$, is said to be incoherent with parameter $\mu$ if 
	\begin{equation}\label{mu_1}
	\max_{1\le k_1\le l_1}\left\|{\bfe}_{k_1}^H{{\bfU}}\right\|^2\le\frac{\mu r}{l_1}, \  \max_{1\le k_2\le l_2}\left\|{\bfe}_{k_2}^H{{\bfV}}\right\|^2\le\frac{\mu r}{l_2},
	\end{equation}
	where ${\bfe}_{k_1},{\bfe}_{k_2}$ are the coordinate unit vectors.
\end{defi}
The incoherence definition guarantees that the singular vectors of the matrix are sufficiently spread, and  $\mathcal{P}_{\widehat{\Omega}}(\cdot)$ samples enough information about the matrix. The focus is to recover $\mu$-incoherence matrices here.

\section{Background and Related Work}\label{sec: related work}

The low-rank   property of a Hankel matrix is also recently exploited in the direction of arrival (DOA) problem in array signal processing \cite{CC14,YX16}, MRI image recovery from undersampled measurements \cite{Haldar14,OBJ18,YKJL17}, video inpainting \cite{DSC07} and system identification \cite{FPST13}. 
To see the connection with our model, the $k$-th row of $\bfX$ in \eqref{M}, denoted by $\bfX_{k*}$, can   be equivalently viewed as the discrete samples of a spectrally sparse signal $g_k(t)$, which is a weighted sum of $r$  damped or undamped sinusoids at $t=\{0,...,n-1\}$, where 
\vspace{-2mm}
\begin{equation}\label{eqn:model}
\vspace{-2mm}
g_k(t)=\sum_{i=1}^{r}d_{k,i}e^{(2\pi \imath f_i-\tau_i)t}, k=1,..., n_c,
\end{equation} 
and $f_i$  and $d_{k,i}$ 
are the frequency and  the normalized complex amplitude of the $i$th sinusoid, respectively. $\imath$ is the imaginary unit. 
The connection between \eqref{eqn:model} and \eqref{M} is that $\lambda_i=e^{2\pi \imath f_i-\tau_i}$ and $d_{k,i}=\bfr_i^H\bfs_1\bfC_{k*}\bfl_i$.

The signal of interest itself in array signal processing is spectrally sparse. In MRI imaging, 
if a signal reduces to a sparse linear combination of Dirac delta functions under some transformations, then its Fourier transform   is a sum of a few sinusoids \cite{JLY16,OJ16,YKJL17}. Most existing work on low-rank Hankel matrices studied single-channel signals, i.e., $n_c=1$ in our setup. References \cite{CC14,JY18,OBJ18,OJ16} considered 2-dimensional (2-D) and higher-dimensional  
signals, while a 2-D signal is still a sum of $r$ 2-D sinusoids, and the degree of freedom is still $\Theta(r)$. The focus of this chapter is multi-channel signals with $n_c > 1$. Each signal is a weighted sum of the same set of $r$ sinusoids, while the weights $d_{k,i}$ are different for each channel $k=1,...,n_c$. The degree of the freedom of  \eqref{eqn:model} is $\Theta(n_c r)$.

 The multi-channel signal in \eqref{eqn:model} is related to the multiple measurement vector (MMV) problem \cite{CREK05}. References \cite{LC16,YX16} considered data recovery of MMV when the signals are linear combinations of undamped sinusoids, i.e., $\tau_i=0$ for all $i$ in  \eqref{eqn:model}.  The data recovery is achieved in \cite{LC16,YX16} through atomic norm minimization, which requires solving large-scale semidefinite programs. Besides the high computational complexity,   it is not clear how the atomic norm can be extended to handle damped sinusoids, i.e., $\tau_i\neq 0$. 
 References \cite{BMJ17,JLY16} studied multi-channel signal recovery using Hankel structures and can thus handle damped sinusoids.   Despite the numerical evaluations, there is no theoretical analysis of the recovery guarantee in \cite{BMJ17,JLY16}. This chapter provides analytical recovery guarantees for  multi-channel damped and undamped sinusoids.

The recovery of a low-rank Hankel matrix can be formulated as a convex optimization, for example, nuclear norm minimization for missing data recovery \cite{DSC07,FPST13,JLY16,UC16,OBJ18}, \cite{YKJL17} and minimizing a weighted sum of the nuclear norm and the $\ell_1$ norm for bad data correction \cite{JY18}. Since it is computationally challenging to solve these convex problems for high-dimensional Hankel matrices,   fast algorithms to recover missing points in single-channel  \cite{CWW17} and multi-channel Hankel matrices \cite{BOJ16,DSC07} are proposed recently. Although   numerical results are reported in \cite{BOJ16,DSC07}, only \cite{CWW17} provides the theoretical performance analysis of the proposed fast iterative hard thresholding (FIHT) algorithm for single-channel Hankel matrix recovery. FIHT is a projected gradient descent method. In each iteration, the algorithm updates the estimate along the gradient descent direction and then projects it to a rank-$r$ matrix. To reduce the computational complexity, instead of solving singular value decomposition (SVD) directly, FIHT first projects a matrix onto a $2r$-dimensional subspace and then computes the SVD of the rank-$2r$ matrix.  The per-iteration complexity of FIHT is $O(r^2n+rn\log n+r^3)$.     

Motivated by PMU data analysis in power systems, this chapter connects   dynamical systems with low-rank Hankel matrices. It develops fast data recovery algorithms for multi-channel Hankel matrices with provable performance guarantees.

\vspace{-2mm}
\section{Data Recovery Algorithms}\label{sec:block}
Here, two algorithms are described to solve (\ref{m2}), and the theoretical analyses is defered to Section \ref{sec:analyses}. One is accelerated multi-channel fast iterative hard thresholding algorithm (AM-FIHT), and the other one is robust AM-FIHT (RAM-FIHT).
Both algorithms are    built upon the FIHT \cite{CWW17} with some major differences. 
First, FIHT recovers the missing points of one spectrally sparse signal, while (R)AM-FIHT  recovers the missing points of $n_c$ signals simultaneously. The simultaneous recovery can reduce the required number of measurements, as quantified in Theorem \ref{coherent}. Second, (R)AM-FIHT has a heavy-ball step \cite{P87,LESB16}, e.g., term $\beta({\bfW}_{l-1}-{\bfW}_{l-2})$ in line 5 of Algorithm \ref{HB} and line 14 of Algorithm \ref{FIHT2}, while FIHT does not. The basic idea of the heavy ball method is to compute the search direction using a linear combination of the gradient at the current iterate and the update direction in the previous step, rather than being memoryless of the past iterates' trajectory \cite{LESB16}. With the heavy-ball step, AM-FIHT is proved to converge faster while maintaining the recovery accuracy (Theorem \ref{t4}). Third, the theoretical guarantee of data recovery when the measurements are noisy is presented (Theorem \ref{t3}), while   \cite{CWW17} only has the performance guarantee of FIHT using noiseless measurements. 

{ In both algorithms, $\bfM$, $\bfX_l$, $\bfG_l\in\mathbb{C}^{n_c\times n}$, and $\bfW_{l}$, $\Delta\bfW_{l}$,  $\bfL_{l}\in\mathbb{C}^{n_cn_1\times n_2}$.  
	 $\bfL_l$ is a rank-$r$ matrix and its SVD is denoted as $\bfL_l=\bfU_l\boldsymbol{\Sigma}_l\bfV_l^*$, where ${\bfU}_l\in \mathbb{C}^{n_cn_1\times r}$, ${\bfV}_l\in \mathbb{C}^{n_2\times r}$ and $\boldsymbol{\Sigma_l}\in\mathbb{C}^{r\times r}$.
	$\mathcal{S}_l$ is the tangent subspace of the rank-$r$ Riemannian manifold at ${\bfL}_l$, and for any matrix ${\bfZ}\in \mathbb{C}^{n_cn_1 \times n_2}$, the projection of ${\bfZ}$ onto $\mathcal{{\bfS}}_l$ is defined as
	\begin{equation}\label{eqn: projection of matrix}
	\mathcal{P}_{{\bf\mathcal{S}}_l}({\bfZ})={\bfU}_l{\bfU}_l^*{\bfZ}+{\bfZ}{\bfV}_l{\bfV}_l^*-{\bfU}_l{\bfU}_l^*{\bfZ}{\bfV}_l{\bfV}_l^*.
	\end{equation}
	$\mathcal{Q}_r$ finds the best rank-$r$ approximation as
	\begin{equation}
	\mathcal{Q}_r({\bfZ})=\sum_{i=1}^{r}\sigma_i {\bfu}_i{\bfv}_i^* ,
	\end{equation} if ${\bfZ}=\sum_{i}^{}\sigma_i {\bfu}_i{\bfv}_i^*$ is the SVD of $\bfZ$ with $\sigma_1\ge\sigma_2\ge\cdots$.  $\mathcal{H}^{\dagger}$ is the Moore-Penrose pseudoinverse of ${\mathcal{H}}$. For any matrix $\bfZ\in \mathbb{C}^{n_cn_1\times n_2}$, $(\mathcal{H}^{\dagger}\bfZ)\in\mathbb{C}^{n_c\times n}$ satisfies
	\begin{equation}\label{w}
	\langle\mathcal{H}^{\dagger}\bfZ,\bfe_k \bfe_t^*\rangle=\frac{1}{w_t}\sum_{k_1+k_2=t+1}Z_{(k_1-1)n_c+k,k_2}, 
	\end{equation}
	where $w_t=\# \{(k_1,k_2)|k_1+k_2=t+1,1\le k_1\le n_1, 1\le k_2\le n_2 \}$ as the number of elements in the $t$-th anti-diagonal of an $n_1\times n_2$ matrix. }

\begin{algorithm}[h!]
	\caption{AM-FIHT for Data Recovery from Noiseless Measurements}\label{HB}
	\begin{algorithmic}[1]
 	\Require  $\mathcal{P}_{\widehat{\Omega}}(\bfM)$, $n_1$, $n_2$, $r$
		\State Set $\bfW_{-2}=\bf0$, $\bfW_{-1}=p^{-1}{\mathcal{H}}{\mathcal{P}}_{\widehat{\Omega}}({\bfM})$, ${{\bfL}}_0=\mathcal{Q}_r(\bfW_{-1})$;
		\State Initialize ${\bfX}_0=\mathcal{H}^{\dagger}{\bfL}_0$;
		\For{$l=0,1,\cdots$} 
		\State ${\bfG}_l=\mathcal{P}_{\widehat{\Omega}}(\bfM-{\bfX}_l)$;
		\State ${\bfW}_l=\mathcal{P}_{{\bf\mathcal{S}}_l}\left(\mathcal{H}({\bfX}_l+p^{-1}{\bfG}_l)+\beta({\bfW}_{l-1}-{\bfW}_{l-2})\right)$;
		\State ${\bfL}_{l+1}=\mathcal{Q}_r({\bfW}_l)$;
		\State ${\bfX}_{l+1}=\mathcal{H}^{\dagger}{\bfL}_{l+1}$;
		\EndFor
		\State \Return $\bfX_{l}$ 
	\end{algorithmic}
\end{algorithm}

The key steps in AM-FIHT are as follows. 
Here the measurements are  noiseless, thus $\bfM=\bfX$. 
 In each iteration, the current ${\bfX}_l$ is updated along the gradient descent direction ${\bfG}_l$, with a step size $p^{-1}=\frac{n_cn}{m}$, where $m$ is the number of observed entries. To improve the convergence rate, the update is further combined with an additional heavy-ball term $\beta({\bfW}_{l-1}-{\bfW}_{l-2})$, which represents the update direction in the previous iteration. 
    Next, $\mathcal{H}({\bfX}_l+p^{-1}{\bfG}_l)+\beta({\bfW}_{l-1}-{\bfW}_{l-2})$
is projected to a rank-$r$ matrix. To reduce the computational complexity, the iteration is first projected to the $2r$-dimensional space $\mathcal{S}_l$ and then apply SVD on the rank-$2r$ matrix \cite{CWW17}, instead of directly computing its SVD. 
The rank-$r$ matrix ${\bfL}_{l+1}$ is obtained in line 7 by thresholding the singular values of the rank-$2r$ matrix $\bfW_l$.
Finally,  ${\bfX}_{l+1}$ is updated by  $\mathcal{H}^{\dagger}{\bfL}_{l+1}$.

 The analysis of the computational complexity of AM-FIHT is similar to that of FIHT \cite{CWW17} with some modifications for the $n_c$-channel signal and the heavy-ball step. 
 Details are in the supplementary materials. The computational complexity of solving SVD of a matrix in $\mathbb{C}^{n_cn_1\times n_2}$  is generally $O(n_cn^2r)$. Due to the low rank structure of the matrices in $\mathcal{S}_l$,   the SVD of $\bfW_l\in \mathbb{C}^{n_cn_1\times n_2}$ can be computed in $O(n_cnr^2+r^3)$ via QR decompositions and SVD on a $2r\times 2r$ matrix \cite{CWW17}. 
 Moreover, it is not necessary to construct Hankel matrices following \eqref{eqn:Hankel} explicitly. The matrix multiplication of $\bfU_l^*\mathcal{H}\bfX_l\in \mathbb{C}^{r\times n_2}$ and $(\mathcal{H}\bfX_l)\bfV_l\in \mathbb{C}^{n_cn_1\times r}$ in line 5 can be completed via fast convolution algorithms with $O(n_cnr\log(n))$ flops, instead of the conventional complexity of $O(n_cn^2r)$.  
 Similar analysis can be applied to line 7, which costs $O(n_cnr\log(n))$ flops to compute $\bfX_{l+1}$ from the SVD of $\bfL_{l+1}$ directly. 
 
 With the heavy ball term, since the SVDs of $\bfW_{l-1}$ and $\bfW_{l-2}$ have been obtained in the last two steps, $\mathcal{P}_{\mathcal{S}_l}(\bfW_{l-1})-\mathcal{P}_{\mathcal{S}_l}(\bfW_{l-2})$ can be calculated as shown in line 5. From \eqref{eqn: projection of matrix}, the computation of ${\bfU}_l{\bfU}_l^*{\bfZ}{\bfV}_l{\bfV}_l^*$ plays the dominant part in computing $\mathcal{P}_{\mathcal{S}_l}(\bfZ)$. Let  $\bfW_{l}=\bfU_{\bfW_l}\mathbf{\Sigma}_{\bfW_l}\bfV_{\bfW_l}^*$ denote the SVD of $\bfW_l$, where $\bfU_{\bfW_l}\in \mathbb{C}^{n_cn_1\times 2r}, \bfV_{\bfW_l}\in\mathbb{C}^{n_2\times 2r}$. Then  computing $\bfU_l^*\bfU_{\bfW_{l-1}}$ and $\bfV_{\bfW_{l-1}}^*\bfV_l$ requires $O(n_cnr^2)$ and $O(nr^2)$ flops, respectively. Computing ${\bfU}_l^*\bfU_{\bfW_{l-1}}\mathbf{\Sigma}_{\bfW_{l-1}}\bfV_{\bfW_{l-1}}^*{\bfV}_l$ further requires $O(r^3)$ flops.
  
From the above analysis, line 4 requires $O(n_cn)$ flops. The complexity of line 5 is $O(n_cnr\log(n)+n_cnr^2+r^3)$. Line 6 requires $O(n_cnr^2+r^3)$ flops, and line 7 requires $O(n_cnr\log(n))$ flops.	Thus, the total per-iteration complexity of AM-FIHT is $O(r^2n_cn+rn_cn\log n +r^3)$.

\begin{algorithm}[h]
	\caption{RAM-FIHT}\label{FIHT2}
	\begin{algorithmic}[1]
	 \Require  $\mathcal{P}_{\widehat{\Omega}}(\bfM)$, $n_1$, $n_2$, $r$, $\mu$, and $\beta$.
		\State  Partition $\widehat{\Omega}$ into $L+1$ disjoint sets $\widehat{\Omega}_0, \widehat{\Omega}_1, \cdots, \widehat{\Omega}_L$ of equal size $\widehat{m}$, let $\widehat{p}=\frac{\widehat{m}}{n}$.
		\State Set
		$\bfW_{-2}=\bf0$, $\bfW_{-1}=\widehat{p}^{-1}{\mathcal{H}}{\mathcal{P}}_{\widehat{\Omega}_0}({\bfM})$, ${{\bfL}}_0=\mathcal{Q}_r(\bfW_{-1})$;
		\For{$l=0,1,\cdots$, $L-1$};
		\State $[{{\bfU}}_{l},{{\bf\Sigma}}_{l},{{\bfV}}_{l}]=$SVD$({{\bfL}}_{l})$;
		\For{$i=1,2,\cdots,n_cn_1$}
		\State $({\bfA}_{l})_{i*}=\displaystyle\frac{({\bfU}_{l})_{i*}}{\left\| ({\bfU}_{l})_{i*}\right\|}\min\bigg\{\left\|({\bfU}_{l})_{i*} \right\|, \sqrt{\frac{\mu r}{n_cn_1}} \bigg\}$;
		\EndFor
		\For{$i=1,2,\cdots,n_2$}
		\State $({\bfB}_{l})_{i*}=\displaystyle\frac{({\bfV}_{l})_{i*}}{\left\|({\bfV}_{l})_{i*}\right\|}\min\bigg\{\left\|({{\bfV}}_{l})_{i*} \right\|, \sqrt{\frac{\mu r}{n_2}} \bigg\}$;
		\EndFor
		\State ${{{\bfL}}_{l}^{\prime}}={{\bfA}}_{l}{{\bf\Sigma}}_{l}{{\bfB}}_{l}^*$;
		\State $\widehat{{\bfX}}_{l}={\mathcal{H}}^{\dagger}{{{\bfL}}_{l}^{\prime}}$;
		\State ${\bfG}_l=\mathcal{P}_{\widehat{\Omega}_{l+1}}({\bfM}-\widehat{\bfX}_l)$;
		\State ${\bfW}_l=\mathcal{P}_{\mathcal{S}^{\prime}_l}(\mathcal{H}(\widehat{{\mathbf{X}}}_l+\widehat{p}^{-1}{\bfG}_l)+\beta({\bfW}_{l-1}-{\bfW}_{l-2}))$;
		\State ${\bfL}_{l+1}=\mathcal{Q}_r({\bfW}_l)$;
		\EndFor
		\State \Return ${\bfX}_{L}=\mathcal{H}^{\dagger}{\bfL}_{L}$;
	\end{algorithmic}
\end{algorithm}


RAM-FIHT differs from AM-FIHT mainly in resampling (line 1) and trimming (lines 5-10). The resampling and trimming  are used in \cite{CWW17} to improve the initialization of FIHT. Here, these ideas are applied in the data recovery algorithm and are proved in Theorem \ref{t3} that the resulting RAM-FIHT can recover the matrix even when the observed measurements are noisy. There is no analytical analysis of FIHT on noisy measurements in \cite{CWW17}. Moreover, compared with AM-FIHT,  a tighter bound of the required observations amount for RAM-FIHT is provided 
(comparing Theorems \ref{t1} and   \ref{t2}).  

In RAM-FIHT, the sampling set  $\widehat{\Omega}$ is divided into $L$  disjoint subsets $\widehat{\Omega}_l$'s. During the $l$-th iteration, ${\bfL}_l$ is updated using the observed entries in $\widehat{\Omega}_l$,    instead of using all the entries in $\widehat{\Omega}$ as in AM-FIHT. 
The partition of the sampling set is a standard technique in analyzing matrix completion (MC) problems \cite{R11}. The disjointness of ${\bfL}_l$'s in different iterations simplifies the theoretical analyses\footnote{In fact,   the necessary condition is 
the mutual independence among the subsets $\widehat{\Omega}_l$'s, and the disjoint partition is a sufficient condition.}, since it ensures the independence between $\bfX_l$ and $\bfX_{l+1}$.
 The trimming procedure ensures that the estimate in each iteration remains close to $\mu$-incoherent, which in turn helps  to obtain tighter bounds of the recovery performance in Theorem \ref{t2}.  
 The resampling and trimming steps in RAM-FIHT are introduced mainly to simplify the theoretical analyses and obtain tighter bounds, while numerical observation suggests that AM-FIHT and RAM-FIHT perform similarly in Section \ref{sec:simu}. The per iteration computational complexity of RAM-FIHT is $O(r^2n_cn+rn_cn\log n +r^3)$.

\section{Theoretical Analyses}\label{sec:analyses}

The theoretical analyses of the convergence rates and recovery accuracy of AM-FIHT and RAM-FIHT are summarized in the following four theorems. All the proofs are deferred to the Appendix. Theorem \ref{t1}  records the recovery performance of AM-FIHT using noiseless measurements with $\beta=0$. Theorem \ref{t4} shows that the convergence rate  of AM-FIHT  can be further improved by using a small positive $\beta$.    Theorems \ref{t2} and \ref{t3} discuss  the recovery performance of RAM-FIHT from noiseless and noisy measurements, respectively.   
Additionally, the comparison with the recovery performance via recovering missing points on each individual row of $\bfX$ separately are provided to quantify the performance gain of our algorithms in Theorem \ref{coherent}. 

\begin{theorem}(AM-FIHT with noiseless measurements.)\label{t1}
	Assume ${\mathcal{H}}{\bfX}$ is $\mu$-incoherent. Let $0<\varepsilon_0<\frac{1}{10}$ be a numerical constant and $\nu=6\varepsilon_0<1$. Then with probability at least $1-3n_cn^{-2}$, the iterates $\bfX_l$'s generated by AM-FIHT with $\beta=0$ satisfy
	\vspace{-1mm} 
	\begin{equation}\label{AM-FIHT linear convergent rate}
	\vspace{-1mm} 
	 \left\|{\bfX}_l-{\bfX}\right\|_F\le{\nu^{l-1}}\left\|{{\bfL}}_0-{\mathcal{H}}{\bfX}\right\|_F, 
	\end{equation}
	provided that
	\vspace{-1mm} 
			\begin{equation}\label{eqn:m1}
	 		m\ge   C_1\max\left\{\frac{\mu c_sr\log(n)}{\varepsilon_0^{2}}, 
		 \frac{1+\varepsilon_0}{\varepsilon_0}(n_c\mu c_s n)^{\frac{1}{2}}\kappa r \log^{\frac{3}{2}}(n)\right\}
 		\end{equation}
	for some constant $C_1>0$, where $\kappa=\frac{\sigma_{\rm{max}}({\mathcal{H}}{\bfX})}{\sigma_{\rm{min}}({\mathcal{H}}{\bfX})}$ denotes the condition number of ${\mathcal{H}}{\bfX}$ and $c_s=\max\{\frac{n}{n_1},\frac{n}{n_2}\}$.
\end{theorem}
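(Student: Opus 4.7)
The plan is to establish the contraction $\|\bfL_{l+1}-\mathcal{H}\bfX\|_F \le \nu\|\bfL_l-\mathcal{H}\bfX\|_F$ for the hard-thresholded iterates, since the final bound on $\|\bfX_l-\bfX\|_F$ will follow from this via $\bfX_l-\bfX = \mathcal{H}^{\dagger}(\bfL_l-\mathcal{H}\bfX)$ and the fact that $\mathcal{H}^{\dagger}$ is non-expansive in the Frobenius norm up to a constant determined by the anti-diagonal weights $w_t$. The first reduction uses that $\bfL_{l+1}=\mathcal{Q}_r(\bfW_l)$ is a best rank-$r$ approximation while $\mathcal{H}\bfX$ is itself rank $r$, yielding by the triangle inequality $\|\bfL_{l+1}-\mathcal{H}\bfX\|_F \le 2\|\bfW_l-\mathcal{H}\bfX\|_F$.

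Next I would decompose $\bfW_l-\mathcal{H}\bfX$ into a deterministic bias term and a stochastic sampling term. Substituting $\bfG_l = \mathcal{P}_{\widehat{\Omega}}(\bfX-\bfX_l)$ and adding and subtracting $\mathcal{H}(\bfX-\bfX_l)$ inside the projection yields
\begin{equation*}
\bfW_l-\mathcal{H}\bfX \;=\; -(\mathcal{I}-\mathcal{P}_{\mathcal{S}_l})\mathcal{H}\bfX \;+\; \mathcal{P}_{\mathcal{S}_l}\bigl(p^{-1}\mathcal{H}\mathcal{P}_{\widehat{\Omega}}-\mathcal{H}\bigr)(\bfX-\bfX_l).
\end{equation*}
The bias term $(\mathcal{I}-\mathcal{P}_{\mathcal{S}_l})\mathcal{H}\bfX$ is controlled by a standard tangent-space perturbation lemma: because $\mathcal{S}_l$ is the rank-$r$ tangent space at $\bfL_l$ and $\mathcal{H}\bfX$ is rank $r$, one obtains a \emph{quadratic} bound $\|(\mathcal{I}-\mathcal{P}_{\mathcal{S}_l})\mathcal{H}\bfX\|_F \lesssim \|\bfL_l-\mathcal{H}\bfX\|_F^2/\sigma_{\min}(\mathcal{H}\bfX)$, which is dominated by the linear error term once the iterate is inside a small neighborhood of $\mathcal{H}\bfX$.

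The sampling term requires the key probabilistic ingredient: a restricted isometry/concentration estimate of the form
\begin{equation*}
\bigl\|\mathcal{P}_{\mathcal{S}_l}\bigl(p^{-1}\mathcal{H}\mathcal{P}_{\widehat{\Omega}}\mathcal{H}^{\dagger}-\mathcal{H}\mathcal{H}^{\dagger}\bigr)\mathcal{P}_{\mathcal{S}_l}\bigr\|_{\mathrm{op}} \;\le\; \varepsilon_0
\end{equation*}
on the $2r$-dimensional tangent subspace of multi-channel Hankel matrices. I would establish this via a matrix Bernstein inequality applied to the sum of rank-one sampling operators, with the $\mu$-incoherence assumption controlling the per-term operator norm and variance. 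The factor $c_s=\max\{n/n_1,n/n_2\}$ accounts for the non-uniform anti-diagonal weights $w_t$ in $\mathcal{H}^{\dagger}$, while the $\sqrt{n_c\mu c_s n}\,\kappa\,\log^{3/2}(n)$ branch arises from an $\ell_\infty$ rather than operator-norm bound needed to obtain the absolute (rather than relative) error bound when $\sigma_{\min}(\mathcal{H}\bfX)$ is small.

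Combining, one gets $\|\bfW_l-\mathcal{H}\bfX\|_F \le (\varepsilon_0 + O(\varepsilon_0))\|\bfL_l-\mathcal{H}\bfX\|_F$ and hence the contraction factor $\nu=6\varepsilon_0$ after the factor-of-two loss from hard thresholding. Induction on $l$ then gives \eqref{AM-FIHT linear convergent rate}, and a final application of $\|\mathcal{H}^{\dagger}(\bfL_l-\mathcal{H}\bfX)\|_F \le \|\bfL_l-\mathcal{H}\bfX\|_F$ translates the bound to $\bfX_l$. The main obstacle will be the probabilistic RIP step: because AM-FIHT (unlike RAM-FIHT) reuses the same sample set $\widehat{\Omega}$ across iterations and does not trim, one cannot invoke independence between $\bfX_l$ and $\widehat{\Omega}$ at iteration $l$. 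This forces a uniform bound over all low-rank, $\mu$-incoherent Hankel-projected tangent spaces simultaneously, which produces the extra $\sqrt{n_c n}\,\kappa\,\log^{3/2}(n)$ factor in \eqref{eqn:m1} relative to the sharper bound obtained under resampling in Theorem \ref{t2}. A secondary subtlety is propagating approximate incoherence of $\bfL_l$ along the iterations without an explicit trimming step, which is handled by a bootstrapping argument showing that while $\|\bfL_l-\mathcal{H}\bfX\|_F$ is small, the singular subspaces of $\bfL_l$ are close enough to those of $\mathcal{H}\bfX$ that the incoherence used in the concentration bound degrades only by a constant factor.
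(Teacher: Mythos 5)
Your decomposition of $\bfW_l-\mathcal{H}\bfX$ into the bias term $(\mathcal{I}-\mathcal{P}_{\mathcal{S}_l})\mathcal{H}\bfX$ plus the sampling term $\mathcal{P}_{\mathcal{S}_l}(p^{-1}\mathcal{H}\mathcal{P}_{\widehat{\Omega}}-\mathcal{H})(\bfX-\bfX_l)$, the quadratic control of the bias via a tangent-space perturbation lemma, the reduction $\|\bfL_{l+1}-\mathcal{H}\bfX\|_F\le 2\|\bfW_l-\mathcal{H}\bfX\|_F$, and the Bernstein-based RIP estimate on the $2r$-dimensional tangent subspace all match the paper's argument. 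But the final paragraph misdiagnoses the two technical obstacles, and the proposed fixes would not yield the stated bound.

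First, the dependence between $\widehat{\Omega}$ and $\bfX_l$ is \emph{not} resolved by a uniform net bound over all incoherent rank-$r$ tangent spaces. The concentration estimate (Lemma~\ref{condition2}) is established \emph{only} for the fixed, deterministic tangent space $\mathcal{S}$ at $\mathcal{H}\bfX$, so there is no independence issue at all. To pass from $\mathcal{S}$ to the iterate's tangent space $\mathcal{S}_l$, the proof uses the deterministic perturbation bound $\|\mathcal{P}_{\mathcal{S}_l}-\mathcal{P}_{\mathcal{S}}\|\le 2\|\bfL_l-\mathcal{H}\bfX\|_F/\sigma_{\min}(\mathcal{H}\bfX)$ together with the crude but iterate-independent estimate $\|\mathcal{P}_\Omega\|\le 3\log n$ (Lemma~\ref{condition1}), and the induction hypothesis is tuned precisely so that this transfer costs only $O(\varepsilon_0)$: it maintains $\|\bfW_l-\widetilde{\mathcal{G}}\bfY\|_F/\sigma_{\min}\le p^{1/2}\varepsilon_0/\big(12(1+\varepsilon_0)\log n\big)$ at every step. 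A covering argument over all rank-$r$ tangent subspaces in $\mathbb{C}^{n_cn_1\times n_2}$ would pay metric entropy of order $r(n_cn_1+n_2)=\Theta(rn)$, which is much worse than the $\sqrt{n_c n}$ in \eqref{eqn:m1}, so that mechanism cannot be what produces the stated sample complexity. Second, the $\kappa$ and $(n_cn)^{1/2}\log^{3/2}n$ factors come entirely from the base case, not from the sampling concentration: the initialization lemma (Lemma~\ref{condition3}) only provides a spectral-norm bound $\|\bfL_0-\mathcal{H}\bfX\|\lesssim\sqrt{\mu c_s r\log n/m}\,\|\mathcal{H}\bfX\|$, and converting to Frobenius via $\sqrt{2r}$, dividing by $\sigma_{\min}$, and enforcing the induction hypothesis with $p^{1/2}=(m/(n_cn))^{1/2}$ and the extra $\log n$ is exactly what yields $m\gtrsim(1+\varepsilon_0)\varepsilon_0^{-1}(n_c\mu c_s n)^{1/2}\kappa r\log^{3/2}n$. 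Correspondingly, no approximate incoherence of $\bfL_l$ is ever bootstrapped; incoherence is only used for the fixed $\mathcal{H}\bfX$, and the misalignment of $\mathcal{S}_l$ is charged purely through the perturbation factor.
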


Theorem \ref{t1} indicates that if  the number of noiseless observations is $O(rn_c^{1/2}n^{1/2}\log^{3/2}(n))$, then AM-FIHT  is    guaranteed to recover $\bfX$ exactly. Moreover, from \eqref{AM-FIHT linear convergent rate}, the iterates generated by  AM-FIHT converge linearly to the groundtruth $\bfX$, and the rate of convergence is $\nu$.
Since $\bfX$ is rank $r$, if one directly applies a conventional low-rank matrix completion method  such as NNM (\cite{CT10,CR09,Gross11}), the required number of observations is $O(rn\log^2(n))$. Thus, when $n$ is large, by exploiting the low-rank Hankel structure of correlated time series, the required number of measurements is   significantly reduced.
Note that the degree of freedom of $\bfX$ is $\Theta(n_cr)$, as one can see from \eqref{eqn:model}, 
the required number of observations by Theorem \ref{t1} is suboptimal due to the dependence upon $n$. This results from the artefacts in our  proof techniques. A tighter bound is provided in   Theorem \ref{t2} for RAM-FIHT. 

The required number of measurements depends on $c_s$, which is minimized when   $n_1=n_2=\frac{n+1}{2}$.
In practice, the selection of $n_1$ and $n_2$ of the Hankel matrix is also affected by the accuracy of the low-rank approximation.

$\beta$ is set as $0$ in Theorem 1 to simplify the analyses.  The improvement of the convergence rate by using a positive $\beta$ is quantified in the following theorem. 
\begin{theorem}(Faster convergence with a heavy-ball step)\label{t4}
	Given any $\beta \in [0, \tau)$ for some $\tau>0$, let $\bfX_l$'s denote the convergent iterates returned by AM-FIHT.  There exists an integer $s_0$, a   constant $q \in (0,1)$ that depends on $\beta$ such that 
	\begin{equation}\label{eqn:heavy}
	\|\bfX_{s_0+k}-\bfX\|_F\le 
	c(\delta)(q(\beta)+\delta)^k, \ \forall k \geq 0
	\end{equation}
holds	for any  $\delta \in (0, 1-q(\beta))$ and  a positive $c(\delta)$ that depends on $\delta$. Moreover,
\vspace{-2mm} 
\begin{equation}\label{eqn:q}
\vspace{-1mm}
q(0)>q(\beta),\ \forall \beta \in (0, \tau).
\end{equation}
\end{theorem}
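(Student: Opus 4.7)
The plan is to combine the contraction already proved for $\beta=0$ in Theorem \ref{t1} with a local linearization of the iteration once $\bfX_l$ is sufficiently close to $\bfX$, and then extract an improved rate from the spectral radius of the resulting companion matrix. Because the theorem only claims behaviour after some index $s_0$ and allows an arbitrary $\delta>0$ slack in the rate, the natural tool is Gelfand's spectral-radius formula $\|T^k\|\le c(\delta)(\rho(T)+\delta)^k$ rather than a single-step operator-norm bound.

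\textbf{Setting up the linearization.} First I would pick $\tau>0$ small enough that the contraction argument behind Theorem \ref{t1} still goes through with the heavy-ball term, producing a (possibly worse) factor $\nu_\beta<1$, so that $\bfX_l\to\bfX$ for every $\beta\in[0,\tau)$. Choose $s_0$ large enough that, for all $l\ge s_0$, the tangent space $\mathcal{S}_l$ lies in a prescribed neighborhood of $\mathcal{S}^\star$, the tangent space of the rank-$r$ manifold at $\mathcal{H}\bfX$, and $\mathcal{Q}_r$ agrees with the linear projection $\mathcal{P}_{\mathcal{S}^\star}$ up to a curvature term quadratic in the distance to $\mathcal{H}\bfX$. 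Writing $\bfE_l:=\mathcal{H}(\bfX_l-\bfX)$, unrolling lines 4--7 of Algorithm \ref{HB} under these approximations yields a second-order recursion
\begin{equation*}
\bfE_{l+1}=(I-M)\bfE_l+\beta(\bfE_l-\bfE_{l-1})+\bfR_l,
\end{equation*}
where $M$ is a positive self-adjoint operator on $\mathcal{S}^\star$ arising from $p^{-1}\mathcal{P}_{\mathcal{S}^\star}\mathcal{H}\mathcal{P}_{\widehat{\Omega}}\mathcal{H}^\dagger\mathcal{P}_{\mathcal{S}^\star}$, and $\bfR_l$ collects higher-order corrections of size $O(\|\bfE_l\|_F^2+\|\bfE_{l-1}\|_F^2)$. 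The restricted-isometry estimate already invoked in the proof of Theorem \ref{t1} guarantees, with the same high probability, that $\mathrm{spec}(M)\subset[1-\nu,1+\nu]$ on $\mathcal{S}^\star$.

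\textbf{Spectral analysis of the companion form.} Stacking $(\bfE_{l+1},\bfE_l)$ gives
\begin{equation*}
\begin{bmatrix}\bfE_{l+1}\\\bfE_l\end{bmatrix}=T(\beta)\begin{bmatrix}\bfE_l\\\bfE_{l-1}\end{bmatrix}+\begin{bmatrix}\bfR_l\\0\end{bmatrix},\quad T(\beta)=\begin{bmatrix}(1+\beta)I-M & -\beta I\\ I & 0\end{bmatrix}.
\end{equation*}
Diagonalizing $M$ decouples $T(\beta)$ into $2\times 2$ blocks, one per eigenvalue $\lambda\in\mathrm{spec}(M)$, with characteristic polynomial $z^2-(1+\beta-\lambda)z+\beta=0$. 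Define $q(\beta):=\rho(T(\beta))$ as the maximum modulus of all such roots; at $\beta=0$ these collapse to $\{0,\,1-\lambda\}$, so $q(0)=\max_\lambda|1-\lambda|\le\nu$. Since $|1-\lambda|<1$ on the relevant spectrum, a Taylor expansion of the dominant root yields $z_1(\beta)\approx(1-\lambda)-\beta\lambda/(1-\lambda)$ together with a new small root $z_2(\beta)\approx\beta/(1-\lambda)$; both are of strictly smaller modulus than $q(0)$ for sufficiently small $\beta>0$ at the extremal eigenvalue. Hence $q(\beta)<q(0)$ for all $\beta\in(0,\tau)$ after possibly shrinking $\tau$. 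Applying Gelfand's formula $\|T(\beta)^k\|\le c(\delta)(q(\beta)+\delta)^k$ for any $\delta\in(0,1-q(\beta))$, and then unrolling the stacked recursion while summing the geometric contribution from $\bfR_l$, yields both \eqref{eqn:heavy} and \eqref{eqn:q}.

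\textbf{Main obstacle.} The principal difficulty lies not in the $2\times 2$ spectral computation but in controlling $\bfR_l$, which packages three distinct nonlinearities: the curvature gap $\mathcal{P}_{\mathcal{S}_l}-\mathcal{P}_{\mathcal{S}^\star}$ between successive tangent spaces, the difference between $\mathcal{Q}_r$ and its linearization, and the momentum cross-term $\beta(\mathcal{P}_{\mathcal{S}_l}-\mathcal{P}_{\mathcal{S}_{l-1}})(\bfW_{l-1}-\bfW_{l-2})$ that couples three consecutive tangent spaces. Each must be shown to be $o(\|\bfE_l\|_F+\|\bfE_{l-1}\|_F)$ uniformly in $l\ge s_0$, which in turn requires propagating the incoherence of $\bfL_l$ through the heavy-ball update --- the same technical ingredient underlying Theorem \ref{t1}. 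Once these remainders are absorbed into the $\delta$ slack in Gelfand's bound, the spectral-radius rate survives and the statement closes.
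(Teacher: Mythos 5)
Your proposal is correct and follows essentially the same route as the paper: linearize the update near the fixed point (Lemma~\ref{lemma: linear approximation of update rule}), stack into a companion operator, analyze the per-eigenvalue quadratic $\lambda^2-(\eta_i+\beta)\lambda+\beta=0$ (identical to your $z^2-(1+\beta-\lambda)z+\beta$ under $\eta=1-\lambda$), and convert the spectral radius into a rate via the Gelfand-type estimate in Lemma~\ref{lemma: convergence rate with eigenvalue}. The remainder-control obstacle you flag is exactly what the paper's $o(\cdot)$ bound on $\widetilde{\bfZ}_{l-1}$ supplies before the $\delta$-slack absorbs it.
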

The exact expressions of $q$ and $\tau$ are deferred to the proofs in Appendix (equation \eqref{eqn: expression of q}). 
Theorem \ref{t4} indicates that by adding a heavy-ball term, when close enough to the ground-truth $\bfX$,  the iterates converge linearly to $\bfX$, and the rate of convergence is $q(\beta)+\delta$. Moreover, from \eqref{eqn:q}, with a small positive $\beta$, the iterates converge faster than those without the heavy-ball step.  Such improvement  is  numerically evaluated in Section \ref{sec:simu}.

\begin{theorem}(RAM-FIHT with noiseless measurements)\label{t2} 
	Assume ${\mathcal{H}}{\bfX}$ is $\mu$-incoherent. Let $0<\varepsilon_0<\frac{1}{2}$ and 
	\begin{equation}\label{eqn:L}
 L=\Big\lceil \varepsilon_0^{-1}\log\Big(\frac{\sigma_{\max}({\mathcal{H}}{\bfX})}{128\kappa^3\varepsilon}\Big)\Big\rceil.
	\end{equation}
	 Define $\nu=2\varepsilon_0<1$. Then with probability at least $1-(2L+3)n_cn^{-2}$,  for any arbitrarily small constant $\varepsilon>0$, the iterates 
	 $\bfL_l$'s and $\bfX_L$ generated by RAM-FIHT with $\beta=0$ satisfy
	 \begin{equation*}
	 \begin{split}
	 \left\|{\bfL}_l-{\bfX}\right\|_F& \le{\nu^{l}}\left\|{{\bfL}}_0-{\mathcal{H}}{\bfX}\right\|_F, 1\leq l\leq L,\\
\text{and }	 \left\|{\bfX}_L-{\bfX}\right\|_F& \le{\nu^{L}}\left\|{{\bfL}}_0-{\mathcal{H}}{\bfX}\right\|_F\le\varepsilon, 
	 \end{split}
	 \end{equation*}	 
	provided that
	\begin{equation}\label{eqn:m2}
	m\ge C_2\varepsilon_0^{-3}\mu c_s \kappa^6r^2\log(n)\log \Big(\frac{\sigma_{\max}({\mathcal{H}}{\bfX})}{\kappa^3\varepsilon}\Big)
	\end{equation}
	for some constant $C_2>0$.
\end{theorem}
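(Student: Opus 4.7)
The plan is to proceed by induction on the iteration index $l$, maintaining the composite inductive hypothesis that \emph{(i)} $\|\bfL_l-\mathcal{H}\bfX\|_F\le\nu^{l}\|\bfL_0-\mathcal{H}\bfX\|_F$ and \emph{(ii)} the trimmed iterate $\bfL_l^\prime$ is $c\mu$-incoherent for some absolute constant $c$. The framework mirrors that of FIHT~\cite{CWW17}, but the resampling and trimming are what enable the improvement from the $n^{1/2}$-type scaling of Theorem~\ref{t1} to the $r^{2}\log n$ scaling in~\eqref{eqn:m2}: since $\widehat{\Omega}_{l+1}$ is drawn fresh and is independent of $\bfL_l^\prime$, concentration inequalities can be applied against the fixed (at most $2r$-dimensional) tangent subspace $\mathcal{S}^\prime_l$ rather than via a union bound over all candidate subspaces.

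For the base case ($l=0$) I would analyze the spectral initialization $\bfL_0=\mathcal{Q}_r(\widehat{p}^{-1}\mathcal{H}\mathcal{P}_{\widehat{\Omega}_0}(\bfM))$. A matrix Bernstein inequality applied to the Hankel operator on Bernoulli-sampled matrices bounds $\|\widehat{p}^{-1}\mathcal{H}\mathcal{P}_{\widehat{\Omega}_0}(\bfM)-\mathcal{H}\bfX\|_{2}$ in spectral norm, after which the best rank-$r$ truncation incurs at most a factor-$2$ loss. The per-entry variance is controlled by the $\mu$-incoherence of $\mathcal{H}\bfX$ and the worst-case antidiagonal weight $c_s=\max\{n/n_1,n/n_2\}$ arising from the pseudoinverse formula~\eqref{w}.

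For the inductive step, I would first confirm that the trimming in lines 5--10 makes $\bfL_l^\prime$ $c\mu$-incoherent by construction while distorting it only mildly: $\|\bfL_l^\prime-\mathcal{H}\bfX\|_F\lesssim\kappa\|\bfL_l-\mathcal{H}\bfX\|_F$, using that the ground truth $\mathcal{H}\bfX$ is itself $\mu$-incoherent so the clipping is a small perturbation. Setting $\widehat{\bfX}_l=\mathcal{H}^\dagger\bfL_l^\prime$ and using $\mathcal{P}_{\widehat{\Omega}_{l+1}}(\bfM-\widehat{\bfX}_l)=\mathcal{P}_{\widehat{\Omega}_{l+1}}(\bfX-\widehat{\bfX}_l)$ in the noiseless regime, I would decompose
\[
\bfW_l-\mathcal{H}\bfX=\bigl(\mathcal{P}_{\mathcal{S}^\prime_l}-\widehat{p}^{-1}\mathcal{P}_{\mathcal{S}^\prime_l}\mathcal{H}\mathcal{P}_{\widehat{\Omega}_{l+1}}\mathcal{H}^\dagger\mathcal{P}_{\mathcal{S}^\prime_l}\bigr)(\bfL_l^\prime-\mathcal{H}\bfX)-(\mathcal{I}-\mathcal{P}_{\mathcal{S}^\prime_l})\mathcal{H}\bfX,
\]
so that $\|\bfW_l-\mathcal{H}\bfX\|_F$ is bounded by a small contraction factor times $\|\bfL_l^\prime-\mathcal{H}\bfX\|_F$ plus a quadratic residual $\|(\mathcal{I}-\mathcal{P}_{\mathcal{S}^\prime_l})\mathcal{H}\bfX\|_F=\mathcal{O}(\|\bfL_l^\prime-\mathcal{H}\bfX\|_F^{2}/\sigma_{\min}(\mathcal{H}\bfX))$. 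Applying $\mathcal{Q}_r$ costs at most another factor of $2$ (best rank-$r$ approximation), producing the overall contraction $\nu=2\varepsilon_0$.

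The main obstacle is the tangent-space concentration estimate
\[
\bigl\|\mathcal{P}_{\mathcal{S}^\prime_l}-\widehat{p}^{-1}\mathcal{P}_{\mathcal{S}^\prime_l}\mathcal{H}\mathcal{P}_{\widehat{\Omega}_{l+1}}\mathcal{H}^\dagger\mathcal{P}_{\mathcal{S}^\prime_l}\bigr\|\le\varepsilon_0
\]
with high probability, which is what ultimately drives the $r^{2}$ scaling in~\eqref{eqn:m2}. This requires a careful matrix Bernstein argument restricted to the tangent subspace: the $c\mu$-incoherence of $\bfL_l^\prime$ (enforced by trimming) controls the $\ell_\infty$ norms of the rank-one summands, $c_s$ enters through the nonuniform Hankel antidiagonal weights $w_t$ in~\eqref{w}, and the $\kappa$ factors arise when relating operator norms on $\mathcal{S}^\prime_l$ to those on the tangent space of $\mathcal{H}\bfX$. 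A final union bound over the $L$ iterations specified by~\eqref{eqn:L} yields the failure probability $(2L+3)n_cn^{-2}$, and telescoping the per-iteration contraction gives $\|\bfX_L-\bfX\|_F\le\nu^{L}\|\bfL_0-\mathcal{H}\bfX\|_F\le\varepsilon$ for the stated choice of $L$.
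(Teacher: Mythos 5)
Your overall strategy matches the paper's exactly: induction on the iterate, a trimming lemma to restore incoherence of $\bfL_l'$ at cost $O(\kappa)$ in Frobenius distance, resampling so that $\widehat{\Omega}_{l+1}$ is independent of the tangent subspace $\mathcal{S}^{\prime}_l$, matrix-Bernstein concentration restricted to that $O(r)$-dimensional tangent space, and a union bound over the $L$ iterations. However, the central algebraic identity you write down for the inductive step is incorrect, and the error hides precisely the term that is hardest to bound.

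Starting from $\bfW_l = \mathcal{P}_{\mathcal{S}^{\prime}_l}\mathcal{H}\big(\widehat{\bfX}_l + \widehat{p}^{-1}\mathcal{P}_{\widehat{\Omega}_{l+1}}(\bfX-\widehat{\bfX}_l)\big)$ with $\widehat{\bfX}_l=\mathcal{H}^{\dagger}\bfL_l'$, the correct noiseless decomposition is
\begin{equation*}
\bfW_l - \mathcal{H}\bfX = \big(\mathcal{P}_{\mathcal{S}^{\prime}_l}\mathcal{H}\mathcal{H}^{\dagger} - \widehat{p}^{-1}\mathcal{P}_{\mathcal{S}^{\prime}_l}\mathcal{H}\mathcal{P}_{\widehat{\Omega}_{l+1}}\mathcal{H}^{\dagger}\big)(\bfL_l'-\mathcal{H}\bfX) - (\mathcal{I}-\mathcal{P}_{\mathcal{S}^{\prime}_l})\mathcal{H}\bfX ,
\end{equation*}
which differs from yours in two places. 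First, the deterministic term is $\mathcal{P}_{\mathcal{S}^{\prime}_l}\mathcal{H}\mathcal{H}^{\dagger}$, not $\mathcal{P}_{\mathcal{S}^{\prime}_l}$: the trimmed iterate $\bfL_l'$ is a rank-$r$ matrix that is \emph{not} Hankel, so $\mathcal{H}\mathcal{H}^{\dagger}\bfL_l'\neq\bfL_l'$, and the algorithm's detour through $\mathcal{H}^{\dagger}$ and back through $\mathcal{H}$ really does project onto the Hankel-structured subspace. Second, and more seriously, there is no $\mathcal{P}_{\mathcal{S}^{\prime}_l}$ sandwiched on the right of the sampled operator. Inserting $\mathcal{I}=\mathcal{P}_{\mathcal{S}^{\prime}_l}+(\mathcal{I}-\mathcal{P}_{\mathcal{S}^{\prime}_l})$ on the input side recovers your restricted-concentration term, but also produces a genuine cross-term of the form $\big\|\mathcal{P}_{\mathcal{S}^{\prime}_l}\mathcal{H}(\mathcal{I}-\widehat{p}^{-1}\mathcal{P}_{\widehat{\Omega}_{l+1}})\mathcal{H}^{\dagger}(\mathcal{I}-\mathcal{P}_{\mathcal{S}^{\prime}_l})(\bfL_l'-\mathcal{H}\bfX)\big\|_F$, which your expansion silently drops. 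This cannot be absorbed into the deterministic ``quadratic residual'' $\|(\mathcal{I}-\mathcal{P}_{\mathcal{S}^{\prime}_l})\mathcal{H}\bfX\|_F$, because the cross-term contains the random sampling operator and the residual does not. The paper controls it by noting that $(\mathcal{I}-\mathcal{P}_{\mathcal{S}^{\prime}_l})(\bfL_l'-\mathcal{H}\bfX)=(\mathcal{P}_{\bfU}-\mathcal{P}_{\bfU_l'})(\mathcal{I}-\mathcal{P}_{\bfV})(\bfL_l'-\mathcal{H}\bfX)$, where $\bfU,\bfV$ are the singular subspaces of $\mathcal{H}\bfX$ and $\bfU_l'$ that of $\bfL_l'$, and then proving a dedicated concentration lemma (Lemma \ref{t2c1}) for the operator sandwiched between $\mathcal{P}_{\mathcal{S}^{\prime}_l}$ on the left and $(\mathcal{P}_{\bfU}-\mathcal{P}_{\bfU_l'})$ on the right. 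That refactoring is essential: without it, bounding the cross-term would require concentration over the full ambient space rather than over an $O(r)$-dimensional subspace, and the $r^{2}\log n$ scaling in \eqref{eqn:m2} would be lost. A secondary point worth flagging: the paper carries out the whole argument in a ``whitened'' coordinate system via the isometry $\widetilde{\mathcal{G}}=\widetilde{\mathcal{H}}\widetilde{\mathcal{D}}^{-1}$ with $\widetilde{\mathcal{G}}^*\widetilde{\mathcal{G}}=\mathcal{I}$, so that $\widetilde{\mathcal{G}}\widetilde{\mathcal{G}}^*$ is an orthogonal projection and the nonuniform anti-diagonal weights $w_t$ are absorbed into the change of basis; running the argument directly with $\mathcal{H}$ and $\mathcal{H}^{\dagger}$ is possible, but you will have to account for the operator $\mathcal{H}\mathcal{H}^{\dagger}$ and the weights explicitly throughout, which is where the constants and the $c_s$ dependence come from.
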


Theorem \ref{t2} shows that the iterates of  RAM-FIHT converge to the ground truth $\bfX$ with a linear convergence rate, and the number of required measurements is further reduced from that needed by AM-FIHT. 
To see this, since $\sigma_{\max}({\mathcal{H}}{\bfX})\leq \sqrt{n_cn}\|\bfX\|_\infty$. If $\|\bfX\|_\infty$ is a constant, selecting $\varepsilon=O(n^{-\alpha})$ with a positive constant $\alpha$ yields $L=O(\log(n))$ from \eqref{eqn:L} and $m \geq O(r^2\log^2n)$ from \eqref{eqn:m2}. 
Compared with the bound of $O(rn_c^{1/2}n^{1/2}\log^{3/2}(n))$ in Theorem \ref{t1}, the dependence on $n$ is significantly reduced to $\log^2n$, while the dependence on $r$ is worse, from $r$ to $r^2$. Since $r$ is usually very small, and $n$ is much larger, $O(r^2\log^2n)$ by Theorem \ref{t2} is tighter than $O(rn_c^{1/2}n^{1/2}\log^{3/2}(n))$ by Theorem \ref{t1}. Since the degree of freedom of $\Theta(n_cr)$, the bound could be further improved using better proof techniques than ours.

\begin{theorem}(RAM-FIHT with noisy measurements)\label{t3} 
	Assume ${\mathcal{H}}{\bfX}$ is $\mu$-incoherent and 
	\begin{equation}\label{eqn:N}
	\left\|\bfN \right\|_{\infty}\le
	\frac{\varepsilon_0\left\|\mathcal{H} \bfX\right\|}{2048\kappa^3r^{1/2}n_c^{1/2}n}.
	\end{equation} Let $L=\Big\lceil \varepsilon_0^{-1}\log\big(\frac{\sigma_{\max}({\mathcal{H}}{\bfX})}{128\kappa^3\varepsilon}\big)\Big\rceil$ and $0<\varepsilon_0<\frac{1}{4}$. Define $\nu=2\varepsilon_0<\frac{1}{2}$. Then with probability at least $1-(3L+3)n_cn^{-2}$ and for any arbitrarily small constant $\varepsilon>0$, the iterates $\bfL_l$'s ($l=1,...,L$) generated by RAM-FIHT with $\beta=0$ satisfies
	\vspace{-2mm}
	\begin{equation*}
	\vspace{-2mm}
		\begin{split}
		\|{\bfL}_l-\mathcal{H}{\bfX}\|_F \le&{\nu^{l}}\left\|{{\bfL}}_0-{\mathcal{H}}{\bfX}\right\|_F
		+128n_c^{1/2}n\left\|\bfN \right\|_{\infty}+8r^{1/2}\|{\mathcal{H}}\bfN\|,
		\end{split}	
	\end{equation*}
	\begin{equation}\label{eqn:noise}
\textrm{and } \left\|{\bfL}_L-\mathcal{H}{\bfX}\right\|_F
	\le\varepsilon+128n_c^{1/2}n\left\|\bfN \right\|_{\infty}+8r^{1/2}\left\|{\mathcal{H}}\bfN \right\|,
	\end{equation}
	provided that
	\vspace{-3mm}
	\begin{equation}\label{eqn:m3}
	\vspace{-3mm}
	m\ge C_3\varepsilon_0^{-3}\mu c_s \kappa^6r^3\log(n)\log \Big(\frac{\sigma_{\max}({\mathcal{H}}{\bfX})}{\kappa^3\varepsilon}\Big)
	\end{equation}
	for some constant $C_3>0$.
\end{theorem}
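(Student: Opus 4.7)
The plan is to extend the noiseless analysis of Theorem \ref{t2} by tracking how the additive noise $\bfN$ propagates through each RAM-FIHT iteration. Since $\bfM = \bfX + \bfN$, the gradient step in line 13 decomposes as
\[
\mathcal{P}_{\widehat{\Omega}_{l+1}}(\bfM - \widehat{\bfX}_l) = \mathcal{P}_{\widehat{\Omega}_{l+1}}(\bfX - \widehat{\bfX}_l) + \mathcal{P}_{\widehat{\Omega}_{l+1}}(\bfN),
\]
so the update inherits the ``signal'' contribution already controlled in Theorem \ref{t2} plus a ``noise'' contribution driven by $\mathcal{P}_{\widehat{\Omega}_{l+1}}(\bfN)$. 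I would prove by induction on $l$ that with high probability the iterate $\bfL_l$ is $\mu$-incoherent (thanks to the trimming in lines 5--10) and satisfies a one-step recursion $\|\bfL_{l+1} - \mathcal{H}\bfX\|_F \le \nu\,\|\bfL_l - \mathcal{H}\bfX\|_F + e_{\bfN}$, where $e_{\bfN}$ depends only on $\bfN$. Iterating this recursion and using $\nu = 2\varepsilon_0 < 1/2$ gives $\|\bfL_l - \mathcal{H}\bfX\|_F \le \nu^l\|\bfL_0 - \mathcal{H}\bfX\|_F + \tfrac{1}{1-\nu}e_{\bfN} \le \nu^l\|\bfL_0 - \mathcal{H}\bfX\|_F + 2e_{\bfN}$, which yields the constants $128$ and $8$ once $e_{\bfN}$ is identified. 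Setting $l = L$ as in \eqref{eqn:L} drives the first term below $\varepsilon$, delivering \eqref{eqn:noise}.

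The signal portion of the one-step bound is inherited almost verbatim from the proof of Theorem \ref{t2}: because the partitioning in line 1 makes $\widehat{\Omega}_{l+1}$ independent of $\bfL_l$, the restricted-isometry-type inequality for $(\widehat{p}^{-1}\mathcal{P}_{\widehat{\Omega}_{l+1}} - \mathcal{I})\mathcal{H}$ on the tangent space $\mathcal{S}'_l$ of the trimmed rank-$r$ iterate holds with failure probability $O(n^{-2})$ under the sample complexity \eqref{eqn:m3}. Combining this RIP-type bound with the standard fact that the rank-$r$ truncation $\mathcal{Q}_r$ worsens the tangent-space error by at most a factor of two produces the factor $\nu$ in the recursion.

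For the noise part, the quantity to control is $\mathcal{P}_{\mathcal{S}'_l}\mathcal{H}\bigl(\widehat{p}^{-1}\mathcal{P}_{\widehat{\Omega}_{l+1}}(\bfN)\bigr)$. I would split
\[
\widehat{p}^{-1}\mathcal{P}_{\widehat{\Omega}_{l+1}}(\bfN) = \bfN + \bigl(\widehat{p}^{-1}\mathcal{P}_{\widehat{\Omega}_{l+1}} - \mathcal{I}\bigr)(\bfN).
\]
For the first piece, since $\mathcal{P}_{\mathcal{S}'_l}$ has rank at most $2r$, the operator-norm-to-Frobenius-norm bound gives $\|\mathcal{P}_{\mathcal{S}'_l}\mathcal{H}\bfN\|_F \le \sqrt{2r}\,\|\mathcal{H}\bfN\|$, which furnishes the $r^{1/2}\|\mathcal{H}\bfN\|$ term. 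For the second (centered) piece, a matrix Bernstein argument applied to the independent Bernoulli-sampled entries of $\bfN$ (whose magnitudes are controlled by $\|\bfN\|_\infty$) produces a bound of order $\sqrt{n_c n}\,\|\bfN\|_\infty$, giving the $n_c^{1/2} n\,\|\bfN\|_\infty$ term; the hypothesis \eqref{eqn:N} is tight enough to ensure that the various cross terms arising when the restricted-isometry argument is applied to $\bfX - \widehat{\bfX}_l + \bfN$ rather than to $\bfX - \widehat{\bfX}_l$ alone can be absorbed into $\varepsilon_0$-small perturbations of the signal-side contraction. A union bound over the $L$ iterations, with three failure events per iteration (signal RIP, incoherence preservation under trimming, and noise Bernstein bound) plus three initialization events, accounts for the $(3L+3)n_c n^{-2}$ probability.

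The hardest step will be the third one: controlling the noise term through the projection onto the data-dependent tangent space $\mathcal{S}'_l$. Resampling makes $\widehat{\Omega}_{l+1}$ statistically independent of $\bfL_l$, but the projector $\mathcal{P}_{\mathcal{S}'_l}$ is still a random object whose singular vectors depend on past iterates, so the Bernstein-type argument must be run conditionally on $\bfL_l$ and then combined with the incoherence bound $\|(\bfU_l)_{i*}\|^2 \le \mu r/(n_c n_1)$, $\|(\bfV_l)_{j*}\|^2 \le \mu r/n_2$ enforced by the trimming. Coupling this with the hypothesis \eqref{eqn:N} to keep the interaction between signal contraction and noise perturbation under $\varepsilon_0$ is the delicate calculation that distinguishes Theorem \ref{t3} from Theorem \ref{t2}.
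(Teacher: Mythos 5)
Your proposal matches the paper's proof in all essentials: split $\bfM = \bfX + \bfN$ in the gradient step, inherit the signal-side contraction $I_5+I_6+I_7$ directly from the proof of Theorem~\ref{t2}, center the noise contribution as $\widehat{p}^{-1}\mathcal{P}_{\Omega_{l+1}}(\bfN) = \bfN + (\widehat{p}^{-1}\mathcal{P}_{\Omega_{l+1}} - \mathcal{I})(\bfN)$ to obtain a matrix-Bernstein term of order $n_c^{1/2}n\|\bfN\|_\infty$ plus a rank-controlled term of order $r^{1/2}\|\mathcal{H}\bfN\|$, and then sum the one-step recursion geometrically using $\nu < 1/2$. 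One small difference in emphasis: the paper handles the signal and noise contributions purely additively via the triangle inequality, so there are no ``cross terms'' to absorb; the hypothesis \eqref{eqn:N} is instead used to keep each iterate $\widetilde{\bfL}_l$ within the region $\|\widetilde{\bfL}_l - \widetilde{\mathcal{G}}\bfY\|_F \lesssim \varepsilon_0\sigma_{\min}(\widetilde{\mathcal{G}}\bfY)/(\kappa^2 r^{1/2})$ where the trimming and tangent-space lemmas apply, which is what closes the induction.
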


Theorem \ref{t3} explores the performance of RAM-FIHT in the noisy case. Note that $\left\|\mathcal{H}\bfX \right\|_{\infty}=\left\|\bfX \right\|_{\infty}$, and 
 \[\frac{n_c^{1/2}n}{\mu c_sr}\left\|\mathcal{H}\bfX \right\|_{\infty}\le\left\|\mathcal{H}\bfX \right\|\le n_c^{1/2}n\left\|\mathcal{H}\bfX \right\|_{\infty}.\]
 If $\mu$ and $r$ are both constants, \eqref{eqn:N} implies that $\|\bfN\|_\infty$ can be as large as a constant fraction of $\left\|\bfX \right\|_{\infty}$.
When the number of observations is at least $O(r^3\log^2(n))$,
the error between the ground truth and the iterates returned by RAM-FIHT is controlled by the noise level. To evaluate the optimality of this error bound, 
consider a special case that $\bfX$ is a constant matrix with each entry being $c$, and $\bfN$ is a constant matrix with each entry being $-c$. Then every observation is zero, and the estimated matrix from partial observations by any recovery method would be a zero matrix. Then the recovery error is $\left\|\mathcal{H}\bfN\right\|_F=\sqrt{n_cn_1n_2}|c|=\sqrt{n_cn_1n_2}\left\| \bfN\right\|_{\infty}$. The sum of the second and the third term in the right hand side of \eqref{eqn:noise} is bounded by $(128c_s+8r^{1/2})\sqrt{n_cn_1n_2}\left\| \bfN\right\|_{\infty}$. Thus, the error bound of RAM-FIHT is in the same order of the minimum error by any method.

{\bf Comparison with single-channel missing data recovery}. FIHT \cite{CWW17} is a single-channel Hankel matrix completion method. When $n_c=1$, Theorems \ref{t1} and \ref{t2} reduce to the results in \cite{CWW17}. One can apply FIHT to recover the missing points of each row of $\bfX$  and solve $n_c$ data recovery problems separately.
Let $\mathcal{H}\bfX_{k*}$ denote the single-channel Hankel matrix constructed from the $k$th row of $\bfX$. Suppose $\mathcal{H}\bfX_{k*}$  is $\mu_0$-incoherent for every $1\le k\le n_c$. Then, setting $n_c=1$ in Theorems \ref{t1} and \ref{t2} (or using Theorems 1 and 2 in \cite{CWW17}), if each $\mathcal{H}\bfX_{k*}$ is recovered separately, the required number of measurements   is proportional to $\sqrt{\mu_0}$ (AM-FIHT) or $\mu_0$  (RAM-FIHT). Then the total number of observations to recover $\bfX$ is proportional to $n_c\sqrt{\mu_0}$ or $n_c\mu_0$. In contrast, the required number of observations by our methods is proportional to $\sqrt{n_c\mu}$ (AM-FIHT) or $\mu$ (RAM-FIHT). Thus, the ratio of the number of measurements by our method to FIHT is 
$\sqrt{\frac{\mu}{n_c\mu_0}}$ (or $\frac{\mu}{n_c\mu_0}$). 
To this end, our method only requires a constant   fraction of the measurements by using FIHT through the following theorem. 

 \begin{theorem}\label{coherent}
 \begin{equation}\label{eqn:ratio}
\frac{\mu}{n_c\mu_0}<1. 
 \end{equation}
 If it further holds that $(1-\delta)|\hat{d}|\le |d_{k,i}|\le (1+\delta)|\hat{d}|, \forall k \in \{1,...,n_c\}, i \in\{1,...,r\}$ for some $\delta\in(0,1)$ and $\hat{d}\in \mathbb{C}$, where $d_{k,i}=\bfr_i^*\bfs_1\bfC_{k*}\bfl_i$, we have
  	\begin{equation}\label{eqn:ratio_improvement}
 	\frac{\mu}{n_c\mu_0}\le\frac{1}{1+(n_c-1)\frac{(1-\delta)^2}{\kappa_L^2(1+\delta)^2}},
 	\end{equation}
 	where $\kappa_L$ is the conditional number of $\bfP_L$ when $n_c=1$.
 \end{theorem}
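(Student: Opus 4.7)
The plan is to write both $\mu$ and $\mu_0$ via the orthogonal projections onto the column and row spaces of the relevant Hankel matrices, and then observe that the row parts cancel exactly. From \eqref{eqn:x}--\eqref{dynamic model}, the $k_2$-th column of $\mathcal{H}\bfX$ equals $\sum_{i=1}^r\lambda_i^{k_2-1}(\bfv_i\otimes\bfd_i)$ with $\bfv_i=(1,\lambda_i,\ldots,\lambda_i^{n_1-1})^T$ and $\bfd_i=(d_{1,i},\ldots,d_{n_c,i})^T$, while the $((k_1-1)n_c+k_0)$-th row is a scalar combination of $\bfw_i=(1,\lambda_i,\ldots,\lambda_i^{n_2-1})^T$. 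Since each single-channel row matrix $\mathcal{H}\bfX_{k*}$ has rows that are also linear combinations of $\{\bfw_i\}$ (the $d_{k,i}$ only rescale them), the row-space projection is identical for $\mathcal{H}\bfX$ and for every channel, so the row contributions to $\mu$ and $\mu_0$ coincide. The problem reduces to comparing $\max_k \bfQ_{kk}$ with $\max_{k_1}(P^L)_{k_1,k_1}$, where $\bfQ=\bfB(\bfB^H\bfB)^{-1}\bfB^H$ with $\bfB=[\bfv_1\otimes\bfd_1,\ldots,\bfv_r\otimes\bfd_r]$ and $P^L=\bfV(\bfV^H\bfV)^{-1}\bfV^H$ with $\bfV=[\bfv_1,\ldots,\bfv_r]$.

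\textbf{Pointwise inequality and \eqref{eqn:ratio}.}
I would use the Hadamard identity $\bfB^H\bfB=(\bfV^H\bfV)\odot(\bfD^H\bfD)$, with $\bfD=[\bfd_1,\ldots,\bfd_r]$, together with the variational formula $\bfQ_{kk}=\sup_{\bfc}|\bfe_k^H\bfB\bfc|^2/(\bfc^H\bfB^H\bfB\bfc)$ for $k=(k_1-1)n_c+k_0$. The change of variable $c_i'=c_i d_{k_0,i}$ (well-defined since $d_{k_0,i}\neq 0$, as required for $\mathcal{H}\bfX_{k_0*}$ to have rank $r$ and well-defined $\mu_0$) leaves the numerator equal to $|\sum_i c_i'\lambda_i^{k_1-1}|^2$, precisely the numerator appearing in $(P^L)_{k_1,k_1}$, while the denominator expands as $\|\bfV\bfc'\|^2+\sum_{k_0'\neq k_0}\|\bfV\tilde\bfc_{k_0'}'\|^2$ with $(\tilde\bfc_{k_0'}')_i=c_i'd_{k_0',i}/d_{k_0,i}$. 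Discarding the $n_c-1$ nonnegative residuals yields $\bfQ_{kk}\leq(P^L)_{k_1,k_1}$; because $\bfV$ is Vandermonde with distinct $\lambda_i$ and hence has full column rank, the residuals can vanish only at $\bfc'=0$, so the inequality is strict whenever the right-hand side is positive. Maximizing over $k$ gives $\max_k\bfQ_{kk}<\max_{k_1}(P^L)_{k_1,k_1}$, which combined with the equal row contributions and $n_c\geq 2$ proves \eqref{eqn:ratio}.

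\textbf{Quantitative refinement \eqref{eqn:ratio_improvement}.}
For the stronger inequality I would retain, rather than discard, the residuals. The two-sided amplitude hypothesis yields $|d_{k_0',i}/d_{k_0,i}|\geq(1-\delta)/(1+\delta)$ for every $k_0',i$, so $\|\tilde\bfc_{k_0'}'\|^2\geq((1-\delta)/(1+\delta))^2\|\bfc'\|^2$. Combining $\|\bfV\tilde\bfc_{k_0'}'\|^2\geq\sigma_{\min}(\bfV)^2\|\tilde\bfc_{k_0'}'\|^2$ with $\|\bfV\bfc'\|^2\leq\sigma_{\max}(\bfV)^2\|\bfc'\|^2$ and recalling that $\kappa_L=\sigma_{\max}(\bfV)/\sigma_{\min}(\bfV)$ (since $\bfV$ coincides with $\bfP_L$ when $n_c=1$), each residual is at least $\kappa_L^{-2}((1-\delta)/(1+\delta))^2\|\bfV\bfc'\|^2$. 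Summing over the $n_c-1$ remaining channels and dividing gives $\bfQ_{kk}\leq(P^L)_{k_1,k_1}\bigl/\bigl[1+(n_c-1)(1-\delta)^2/(\kappa_L^2(1+\delta)^2)\bigr]$, and maximizing over $k$ delivers \eqref{eqn:ratio_improvement}.

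\textbf{Main obstacle.}
The hardest step is identifying the change of variable $c_i'=c_id_{k_0,i}$ that cleanly separates the denominator of the variational expression for $\bfQ_{kk}$ into the single-channel quadratic $\|\bfV\bfc'\|^2$ plus a sum of channel-wise residuals with the same form. Without the Hadamard factorization $\bfB^H\bfB=(\bfV^H\bfV)\odot(\bfD^H\bfD)$, the Vandermonde block structure of $\bfB$ remains entangled with the amplitudes $d_{k,i}$, and neither the sharp pointwise bound $\bfQ_{kk}\leq(P^L)_{k_1,k_1}$ nor the explicit $\kappa_L$ dependence in the quantitative bound is transparent.
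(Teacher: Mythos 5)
Your proposal is correct and arrives at both inequalities by a route that, while close in spirit to the paper's, is packaged differently and is a bit more elementary. The paper applies a row permutation $\bfQ_1$ to stack the single-channel blocks $\bfE_L\bfD_k$, computes the diagonal of the column-space projection as $\widehat{\bfe}_{k_1}^*\bfE_L\bfD_1\big(\sum_k\bfD_k^*\bfE_L^*\bfE_L\bfD_k\big)^{-1}\bfD_1^*\bfE_L^*\widehat{\bfe}_{k_1}$, and then invokes two matrix-analysis lemmas: the positive-definite order reversal $(\bfA\succeq\bfB\succ 0\Rightarrow\bfB^{-1}\succeq\bfA^{-1})$ to discard the off-channel terms for \eqref{eqn:ratio}, and the Merikoski--Kumar eigenvalue-product bound to quantify how much the sum $\sum_k\bfD_k^*\bfE_L^*\bfE_L\bfD_k$ exceeds $\bfD_1^*\bfE_L^*\bfE_L\bfD_1$ for \eqref{eqn:ratio_improvement}. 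You replace the explicit inverse and these lemmas with the variational identity $\bfQ_{kk}=\sup_{\bfc}|\bfe_k^H\bfB\bfc|^2/\|\bfB\bfc\|^2$, the Hadamard factorization $\bfB^H\bfB=(\bfV^H\bfV)\odot(\bfD^H\bfD)$, and the rescaling $c_i'=c_id_{k_0,i}$, after which the denominator splits by channel into $\|\bfV\bfc'\|^2+\sum_{k_0'\neq k_0}\|\bfV\tilde\bfc_{k_0'}'\|^2$. Dropping the residuals reproduces \eqref{eqn:ratio} (with strictness because the residuals vanish only at $\bfc'=\mathbf{0}$ when $n_c\geq 2$), and bounding each residual from below via $\sigma_{\min}(\bfV)$, $\sigma_{\max}(\bfV)$, and the amplitude ratio $(1-\delta)/(1+\delta)$ gives \eqref{eqn:ratio_improvement} directly, replacing the paper's appeal to eigenvalue-product inequalities with a one-line Rayleigh-quotient argument. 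Both routes rest on the same structural fact — the multichannel Gram matrix dominates each single-channel Gram matrix — but yours exposes it more transparently and needs less machinery; the cost is a slightly longer bookkeeping step in the change of variables, which you correctly flag as the crux. The only minor caveat, present in the paper as well, is that the row-incoherence comparison is handled informally (both $\mu$ and $\mu_0$ are maxima of row and column contributions; since the row contributions are equal and $(1-\delta)^2/(\kappa_L^2(1+\delta)^2)\leq 1$, the stated bounds survive the row-dominated case too), but this is not a gap in your reasoning specifically.
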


Theorem \ref{coherent}  indicates that the required number of measurements is reduced when collectively processing $\bfX$. 
 Note that $\mu_0$ is independent of the amplitude parameters $d_{k,i}$'s and depends only on the separations of the frequencies $f_i$'s in \eqref{eqn:model}. As a direct corollary of Theorem 2 in \cite{LF16},  if the separation among frequencies $f_i$'s is at least   $1/c_sn$, then $\mu_0$ is a constant. In contrast, $\mu$  depends on both $d_{k,i}$'s and $f_i$'s. \eqref{eqn:ratio} shows that  $\mu$ is always less than $n_c\mu_0$. 
 Moreover, in the special case that $d_{k,i}$'s are all in a small range, $\mu/(n_c\mu_0)$ can be reduced to approximately $\kappa_L^2/n_c$ from \eqref{eqn:ratio_improvement}. With well separated frequencies, the maximum and minimum singular values of $\bfP_L$ when $n_c=1$ are both proportional to $\sqrt{n_1}$ \cite{LF16}. 
 That implies  $\kappa_L$ is a constant. Then, $\kappa_L^2/n_c$ is in the order of $1/n_c$ for large $n_c$, and $\mu/\mu_0=O(1/n_c)$ holds. Combining these results with the arguments before Theorem 3, one can see that the required number of measurements is significantly reduced by collective processing.

\section{Numerical Results}\label{sec:simu}
The numerical performances of AM-FIHT and RAM-FIHT are summarized in this section. The simulations are implemented in MATLAB on a desktop with 3.4 GHz Intel Core i7 and 16 GB memory.  In all the experiments, some data points are deleted in the datasets and test the recovery performance.  Three modes of missing data patterns are considered, as illustrated in Fig.~\ref{fig: missing data modes}. Given a data loss percentage, 
\begin{itemize} 
	\item Mode 1: Data losses occur randomly and independently across time and channels.
	\item Mode 2: At randomly selected time instants,  the data points in all channels are lost simultaneously. 
	\item Mode 3: Starting from a randomly selected time instant, in half of the channels that are randomly selected, the data points are lost simultaneously and consecutively lost.  	
\end{itemize}

\begin{figure}[h]
	\centering
	\includegraphics[width=0.6\textwidth]{./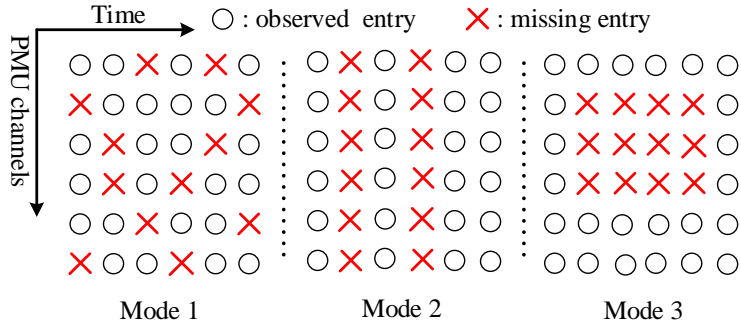}
	\centering
	\caption{Three modes of missing data} 
	\label{fig: missing data modes}
\end{figure}

\subsection{Numerical Experiments on Synthetic Data}
The synthetic data experiments are conducted with spectrally sparse signals. Each row of the matrix $\bfX\in\mathbb{C}^{n_c\times n}$ is a weighted sum of $r$ sinusoids as shown in \eqref{eqn:model}.  Each $f_i$ is randomly selected from $(0, 1)$. $\tau_i$ is $0$ for all $i$. The complex coefficient $d_{k,i}$ has its angle randomly selected from $(0, 2\pi)$  and its magnitude  chosen as $1+10^{a_{k,i}}$, where $a_{k,i}$ is randomly selected from $(0, 1)$.

\subsubsection{(R)AM-FIHT with Noiseless Measurements} 
The performance of  AM-FIHT and RAM-FIHT with noiseless measurements are compared first. For RAM-FIHT, instead of dividing the observation set into disjoint subsets, the entire observation set is used in every iteration.  Hence, RAM-FIHT differs from AM-FIHT in the trimming step, {and the thresholding is set as the ground truth $\mu$ throughout this section}. AM-FIHT is tested with both $\beta=0$ and $\beta=(1-p)^2/5$, while only $\beta=(1-p)^2/5$ is tested on RAM-FIHT.  An algorithm terminates if 
\begin{equation}\label{eqn:termination}
	\|\mathcal{P}_{\widehat{\Omega}}(\bfX_l-\bfX_{l-1})\|_F/\|\mathcal{P}_{\widehat{\Omega}}(\bfX_{l-1})\|_F\leq 10^{-6}
\end{equation}
is satisfied before reaching  the maximum iteration number, which is set as 300 here. 

Figs.~\ref{fig: comparison of RAM and AM in mode 1}, \ref{fig: comparison of RAM and AM in mode 2}, and \ref{fig: comparison of RAM and AM in mode 3} show the recovery phase transitions of AM-FIHT and RAM-FIHT with missing data patterns following different modes. $n=300$, $n_1=150$, and $n_c=30$. The $x$-axis is the fraction of observations $p=\frac{m}{n_cn}$.  The $y$-axis is the rank $r$. For each fixed $p$ and $r$, $100$ independent realizations of synthetic data matrices and data erasures are generated. 
The recovery is said to be successful in a test case if 
\begin{equation}\label{ratio}
\|\mathcal{P}_{\widehat{\Omega}^c}(\bfX_l-\bfX)\|_F/\|\mathcal{P}_{\widehat{\Omega}^c}(\bfX)\|_F<10^{-3}
\end{equation}
holds when the algorithm terminates after $l$-th iteration, and
$\widehat{\Omega}^c$ is the complement of $\widehat{\Omega}$.    A white block corresponds to $100\%$ success, and a black one means failures in all $100$ tests.
\begin{figure}[h!]
	\centering
	\includegraphics[width=0.8\textwidth]{./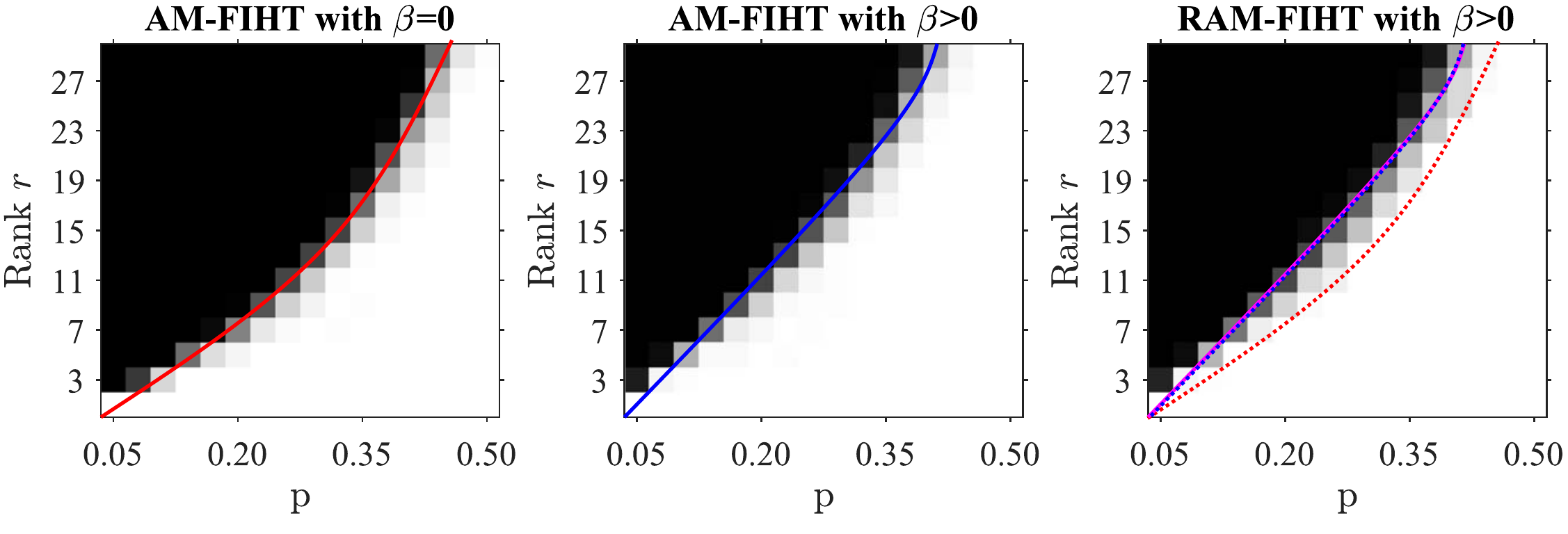}	
	\caption{Phase transition under mode 1} 
	\label{fig: comparison of RAM and AM in mode 1}
\end{figure}
\begin{figure}[h!]
	\centering
	\includegraphics[width=0.80\textwidth]{./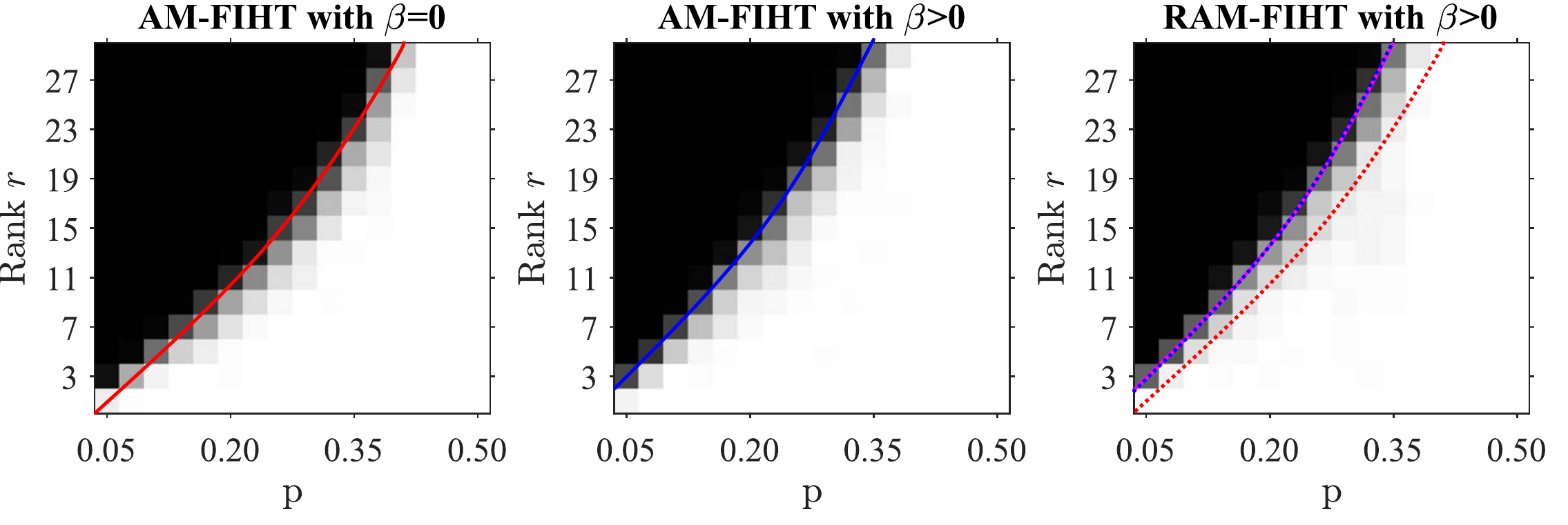}
	
	\caption{Phase transition under mode 2} 
	\label{fig: comparison of RAM and AM in mode 2}
\end{figure}
\begin{figure}[h!]
	\centering
	\includegraphics[width=0.8\textwidth]{./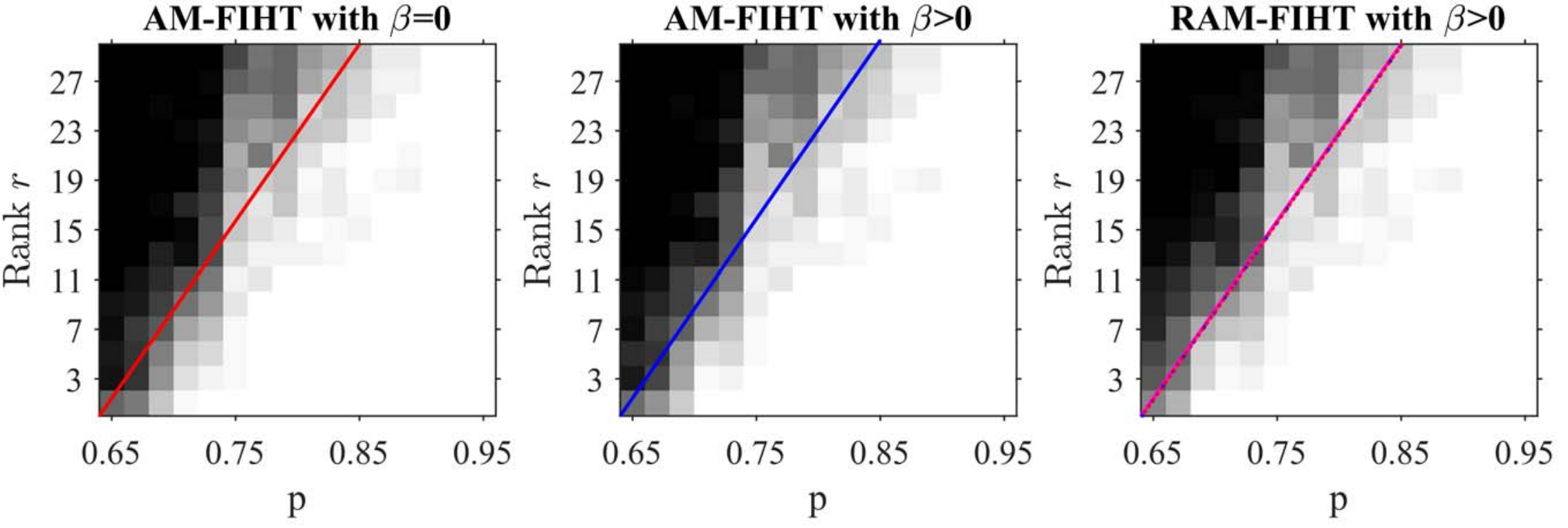}
	\caption{Phase transition under mode 3} 
	\label{fig: comparison of RAM and AM in mode 3}
\end{figure}

 {Auxiliary solid lines (red for AM-FIHT with $\beta=0$, blue for AM-FIHT with $\beta>0$, and magenta for RAM-FIHT) are added in Figs.~\ref{fig: comparison of RAM and AM in mode 1}, \ref{fig: comparison of RAM and AM in mode 2}, and \ref{fig: comparison of RAM and AM in mode 3} to highlight the phase transition. In the subfigures for RAM-FIHT, the phase transition curves for AM-FIHT are repeated in dotted curves to compare. 
 Both AM-FIHT and RAM-FIHT with $\beta>0$ preform very similarly, as the blue dotted line and the magenta solid line coincide in all three modes. }

{The phase transition threshold of $\beta>0$ is higher than that of $\beta=0$ for all the modes.   The recovery improvement by the heavy-ball step can be intuitively explained as follows. 
Note that Theorem \ref{t4} shows that the heavy ball can speed up the convergence by reducing $q(0)$ to $q(\beta)$. With a certain percentage of data losses, it might hold that $q(\beta)<1<q(0)$, which indicates that  the iterates with $\beta>0$ are still convergent, while those with $\beta=0$ may be divergent. }


{One can see from the phase transition lines that the required ratio of observations is approximately linear in $r$ when other parameters are fixed.  Note that the degree of freedom of the signal is $\Theta(n_c r)$.  Although our bound   of the required number of measurements $O(r^2\log n)$ in Theorem \ref{t2} is not order-wise optimal due to the artefacts of the proof, the required number of measurements in  numerical experiments is approximately linear in the degree of freedom.  }

\subsubsection{Comparison with Existing Algorithms}
Here,  our methods are compared with FIHT \cite{CWW17} and Singular Value Thresholding (SVT) \cite{CCS10}. The rank $r$ is fixed as $15$, and $n=600, n_1=300, n_c=20.$ All the other setups remain the same for the proposed algorithm.  Since FIHT recovers the missing points in a single channel, each row of $\bfX$ is converted into a Hankel matrix with the size of $300 \times 301$ and apply FIHT separately. SVT solves the convex NNM problem approximately, and the algorithm is applied on the original observation matrix and the constructed Hankel matrix, respectively.
The relative recovery error is calculated as $\|\mathcal{P}_{\widehat{\Omega}^c}(\bfX_l-\bfX)\|_F/\|\mathcal{P}_{\widehat{\Omega}^c}(\bfX)\|_F$.

\begin{figure}[H]
	\centering
	\includegraphics[width=0.9\textwidth]{./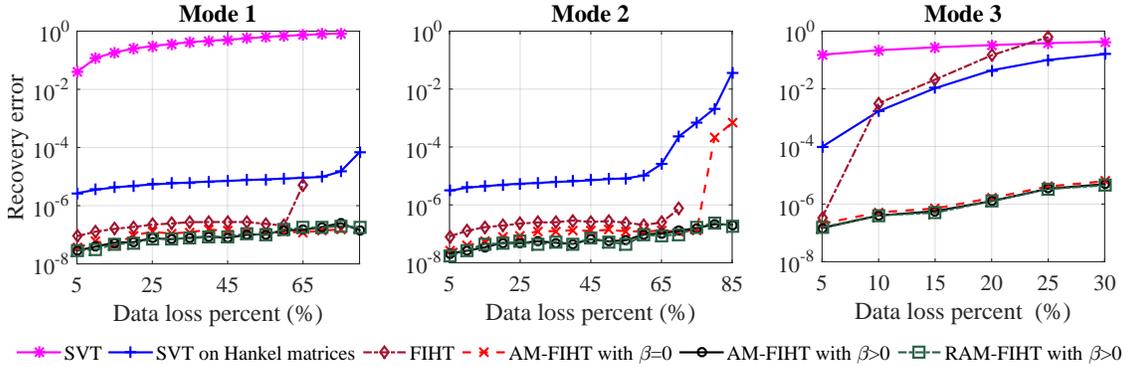}
	\caption{Performance comparison of recovery methods in noiseless setting} 
	\label{fig: sparse spectral signal recovery error noiseless}
\end{figure}
    \begin{figure}[h]
	\centering
	\includegraphics[width=0.8\textwidth]{./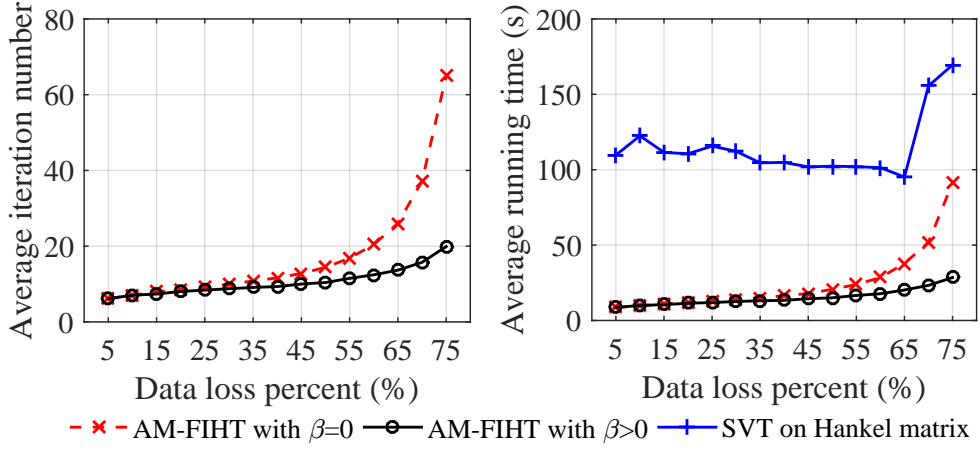}
	\caption{Running time comparison in mode $1$} 
	\label{fig: sparse spectral signal iteration number}
\end{figure} 

Fig.~\ref{fig: sparse spectral signal recovery error noiseless}
	compares the relative recovery error of convergent tests by different methods with noiseless measurements and different data loss patterns. (R)AM-FIHT with a nonzero $\beta$ performs the best among all the methods. As the original data matrix is not low-rank, SVT fails in all cases. When applied to the constructed Hankel matrix, SVT exhibits better performance,  however,  the recovery errors are still much larger compared with (R)AM-FIHT. SVT also needs the much longer running time, as shown in Fig. \ref{fig: sparse spectral signal iteration number}. To achieve the error bound of $10^{-5}$, SVT requires around $100$ iterations at a time cost of $100$ seconds, while AM-FIHT with a nonzero $\beta$ only takes less than 12 seconds to obtain an error bound of $10^{-7}$. With $65\%$ data loss in Mode $1$, $13.3\%$ tests of FIHT diverge. In contrast, all tests of AM-FIHT are convergent, even in the case with $75\%$ data loss.
   A nonzero $\beta$ also increases the percentage of convergent tests. With 80\% of data loss, only $76.7\%$ tests of AM-FIHT with $\beta=0$ converge, while all the tests of AM-FIHT with $\beta>0$ are convergent.  Moreover, AM-FIHT performs much better than FIHT and SVT in Mode 3.
\begin{figure}[h]
	\centering
	\includegraphics[width=0.99\textwidth]{./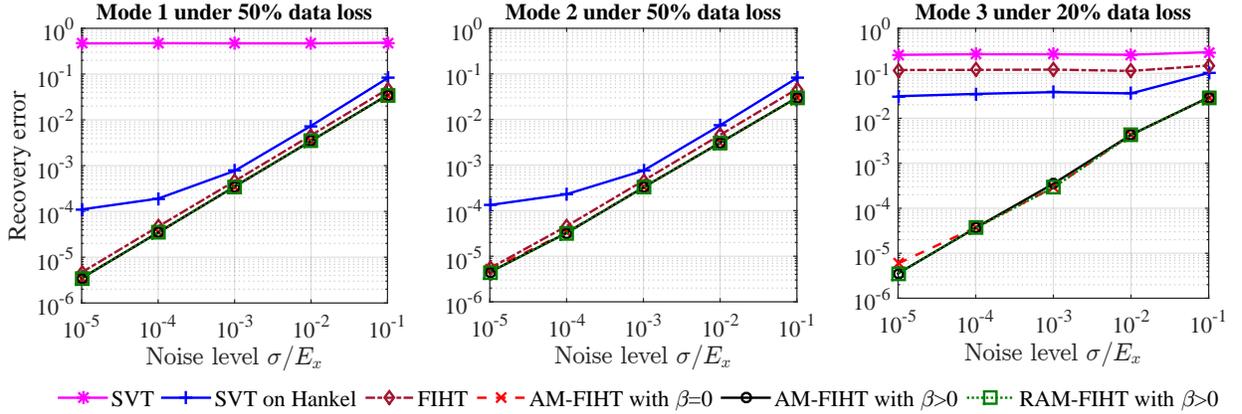}
	\caption{Performance comparison of recovery methods in noisy setting under 50\% data loss in modes 1, 2 and 20\% data loss in mode 3} 
	\label{fig: sparse spectral signal recovery error noisy}
\end{figure}

{When measurements are noisy, every entry of $\bfN$ is independently drawn from Gaussian $\mathcal{N}(0, \sigma^2)$, where $\sigma$ is the standard deviation.  
	Fig.~\ref{fig: sparse spectral signal recovery error noisy}
shows the relative recovery error of convergent tests against the relative noise level $\sigma/E_x$, where $E_x$ is the average energy of $\bfX$ calculated as $E_x=||\bfX||_F/\sqrt{n_cn}$. The data loss percentage is fixed as 50\% in modes 1, 2 and 20\% for mode 3, respectively. 
In all three modes, AM-FIHT and RAM-FIHT perform similarly and achieve the smallest error among all the methods.  The relative recovery error is proportional to the relative noise level, with a  ratio between $0.3$ to $0.4$.
FIHT is sightly worse than these two methods in modes 1 and 2, but its performance degrades significantly in mode 3.  SVT has a better performance when applied on the Hankel matrix instead of the data matrix, but it is  still worse than (R)AM-FIHT. }

\subsection{Numerical Experiments on Actual PMU Data}
The low-rank property of the Hankel PMU data matrix is verified on a recorded PMU dataset in Central New York \cite{GWGCFS16}. $11$ voltage phasors are measured at a rate of $30$ samples per second. 
Fig.~\ref{fig: voltage mag ang samples} shows the recorded voltage magnitudes and angles of a $10$-second dataset that contains a disturbance at $2.3$s.   
Fig.~\ref{fig: approximation erros} shows the approximation errors of $\bfX$ and $\mathcal{H}\bfX$ by rank-$r$ matrices with varying $n_1$ and $r$. 
The approximation error of $\bfX$ with the rank-$r$ matrix $\mathcal{Q}_r(\bfX)$ is defined as   $\|\bfX-\mathcal{Q}_r(\bfX)\|_F/\|\bfX\|_F$, and likewise for $\mathcal{H}\bfX$. One can see from Fig.~\ref{fig: approximation erros} that  all these data matrices can be approximated by rank-8 matrices with negligible errors. 
\begin{figure}[ht]
	\centering
	\includegraphics[width=0.9\textwidth]{./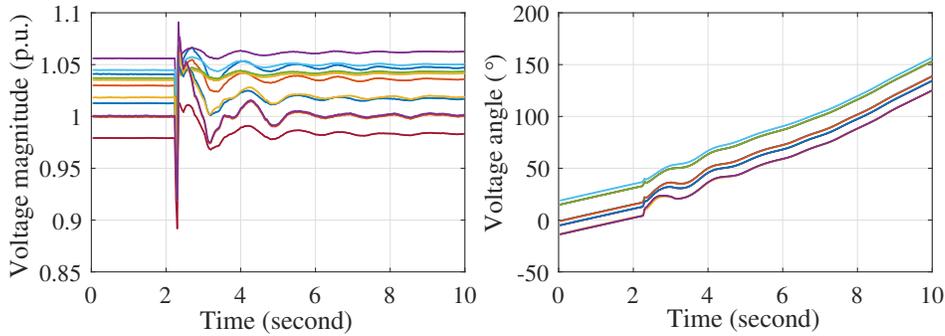}
	\caption{The measured voltage phasors of 11 channels} 
	\label{fig: voltage mag ang samples}
\end{figure}
	
\begin{figure}[h!]
	\centering
	\includegraphics[width=0.7\textwidth]{./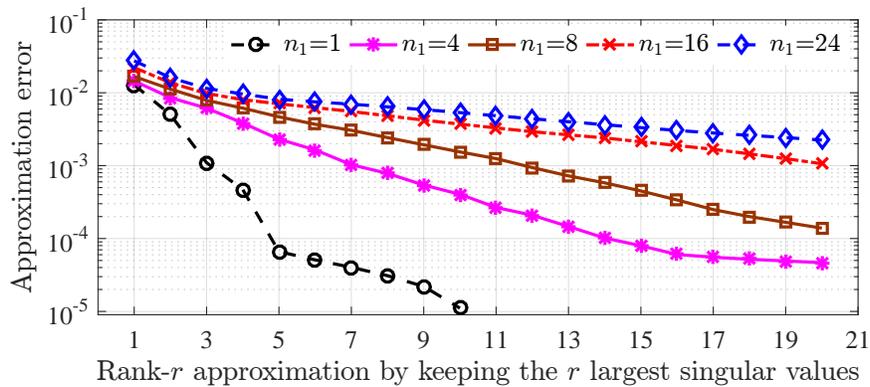}
	\caption{The approximation errors of the data block and the Hankel matrices} 
	\label{fig: approximation erros}
\end{figure}

\begin{figure}[h!]
	\centering
	\includegraphics[width=0.9\textwidth]{./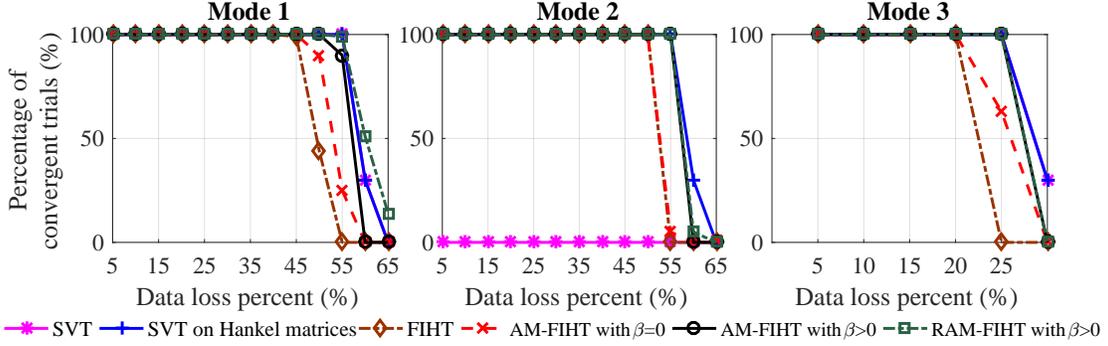}
	\caption{Percentage of convergent trials of recovery algorithms} 
	\label{fig: simulation result 2}
\end{figure}

\begin{figure}[h!]
	\centering
	\includegraphics[width=0.9\textwidth]{./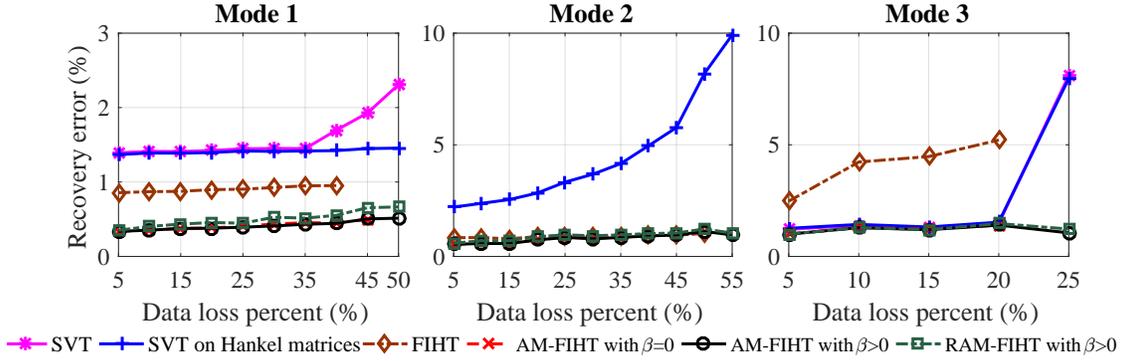}
	\caption{Performance comparison of recovery algorithms} 
	\label{fig: simulation result}
\end{figure}

The parameters are set as $n_1=8, r=8$ and test both $\beta=0$ and $\beta=(1-p)/5$ in the simulation. 
Fig.~\ref{fig: simulation result 2} shows the percentage of convergent runs out of $100$ runs  for different algorithms. Fig.~\ref{fig: simulation result} compares the average recovery error of convergent runs.
Overall, AM-FIHT with $\beta>0$ achieves a small recovery error, tolerates a high data loss rate, and does not require   much computation. 
For example, when the data loss rate is $55\%$ in Mode 2,  
AM-FIHT with $\beta=(1-p)/5$ converges every time.   The number of iterations is $47.2$ on average, and the running time is $0.62$ seconds. It takes $4.34$ seconds to run 400 iterations of SVT on Hankel matrices.
AM-FIHT with $\beta=0$ diverges for $95\%$ of the runs.  FIHT diverges completely. Similar result to Fig.~\ref{fig: sparse spectral signal iteration number} about the average iteration numbers of AM-FIHT with $\beta=0$ and $\beta>0$ from respective successful trials is observed as well, thus a small positive $\beta$ helps improve the convergence rate.
 
There are minor differences between AM-FIHT $(\beta>0)$ and RAM-FIHT $(\beta>0)$ in mode 1.   RAM-FIHT tolerates a slightly higher data loss percentage,  while its average  recovery error of convergent runs is slightly larger than that of AM-FIHT.   AM-FIHT and RAM-FIHT perform similarly in modes 2 and 3.

\section{Summary}
This chapter characterizes the  intrinsic low-dimensional structures of correlated time series through multi-channel low-rank Hankel matrices. 

	Two iterative hard thresholding algorithms with linear convergence rates are proposed to solve the nonconvex missing data recovery problem. 
	Our bound of the required number of observed entries for successful recovery is $O (r^2 \log^2n)$,   significantly smaller than $O(rn \log^2 n)$  by conventional low-rank matrix completion methods. Our bound is slightly larger than the degree of the freedom $\Theta(n_cr)$, and 
	we suspect that the bound can be improved with better proof techniques. 
	The convergence rate is proved to be accelerated further by adding a heavy-ball step, which also increases the tolerable missing data percentage numerically. 
	
	One motivating application of our methods is power system synchrophasor data recovery.  Other applications include array signal processing and MRI image recovery. This chapter provides the first analytical characterization of multi-channel Hankel matrix completion methods, while existing works mostly focused on single-channel Hankel matrix recovery.  
	One future direction is to study data recovery from both data losses and corruptions, where partial measurements contain significant errors. 
	The bad data should be first located and removed before recovering the missing points. 
 
\chapter{\uppercase{Multi-Channel Hankel Robust Matrix Completion with corrupted columns}\label{chapter: 2}}
\blfootnote{Portions of this chapter previously appeared as: S. Zhang and M. Wang, ``Correction of corrupted columns through fast robust Hankel matrix
  completion,'' \emph{{IEEE} Trans. Signal Process.}, vol.~67, no.~10, pp.
  2580--2594, Mar. 2019.}
\section{Introduction}\label{sec:intro}

 Robust matrix completion (RMC) \cite{CJSC13} aims to recover a low-rank matrix $\bfX^*$ in $\mathbb{C}^{n_c\times n}$ ($n_c\le n$) from partial observations  of measurements $\bfM=\bfX^*+\bfS^*$ 
, where the sparse matrix $\bfS^*$ in $\mathbb{C}^{n_c\times n}$ represents arbitrary errors. Due to the wide existence of low-rank matrices, RMC finds applications in areas like  video surveillance \cite{WGRPM09}, face recognition \cite{CLMW11}, MRI image processing \cite{OCS15}, network traffic analysis \cite{MMG13}, and  power systems   \cite{GWCGFSR16}. 
{ For  instance, each row of $\bfX^*$ represents the measurements from one phasor measurement unit (PMU) in power systems, and each column corresponds to the time-synchronized measurements from multiple PMUs \cite{GWGCFS16}.} 
 $\bfS^*$ represents the bad measurements. 

 Let $\widehat{\Omega} \subseteq \{1, \cdots, n_c\}\times \{1, \cdots, n\}$ contain the indices of the observed entries. 
  If $\bfX^*$ is at most rank $r$ and $\bfS^*$ contains at most $s$ nonzero entries, RMC
  can be formulated as a nonconvex optimization problem,  
 \begin{equation}\label{eqn:rpca}
 \begin{split}
 &\min_{{\bfX}, {\bfS} \in \C^{n_c\times n}} \       \sum_{(i,j)\in\widehat{\Omega}}|M_{i,j}- X_{i,j}-S_{i,j}|^2 \\
  &\qquad\textrm{s.t. }\quad     \text{rank}({\bfX})\le r \quad \textrm{and} \quad \|{\bfS}\|_0\le s,
 \end{split}
  \end{equation} 
  where $\|\bfS\|_0$ measures the number of nonzero entries in $\bfS$.  
  If all the entries are observed, i.e., $\widehat{\Omega}$ contains all the indices,
   \eqref{eqn:rpca} reduces to the robust principal component analysis (RPCA) problem, which decomposes a low-rank matrix and a sparse matrix from their sum. If $\bfS^*$ is a zero matrix,    \eqref{eqn:rpca} reduces to the low-rank matrix completion problem. 

One line of research is to relax the nonconvex rank and $\ell_0$-norm terms in   \eqref{eqn:rpca} into the corresponding approximated convex nuclear norm and $\ell_1$-norm.
 Under mild assumptions, $\bfX^*$ and $\bfS^*$ are indeed the solution to the convex relaxation (see e.g., \cite{CJSC13,KLT17} for RMC and \cite{CLMW11, CXCS11,HKZ11} for RPCA). 
Since the convex relaxation is still time-consuming to solve,  fast algorithms based on alternating minimization or gradient descent are developed recently to solve the nonconvex problem directly, for example \cite{GWL16,KC12} for RMC and \cite{CW15,NNSAJ14,YPCC16} for RPCA.  These   approaches are more computationally efficient than the convex alternatives.  

 If the fraction of nonzeros in each column and row of $\bfS^*$ is at most $\Theta(\frac{1}{r})$, then both the  convex method in \cite{CJSC13} and the nonconvex method in \cite{CGJ16} are proven to be able to recover $\bfX^*$ successfully\footnote{$f(n)=O(g(n))$ means that if for some constant $C>0$,  $f(n)\leq Cg(n)$ holds when $n$ is sufficiently large.   $f(n)=\Theta(g(n))$ means that   for some constants $C_1>0$ and $C_2>0$,  $C_1g(n)\leq f(n)\leq C_2g(n)$ holds when $n$ is sufficiently large.}. 
If all the  observations in a column are corrupted, however, with the prior assumption that the matrix is low-rank,  one can locate the corrupted column at best but cannot recover the actual values in either RPCA \cite{XCS12} or RMC \cite{CXCS11}. Since every column is a data point in the $r$-dimensional column subspace,  even if the  column subspace is correctly identified, at least $r$ linearly independent equations, i.e., $r$ entries of each column, are needed to determine the exact values of that column.
 
 {
 In many  applications, the low-rank matrix has the additional low-rank Hankel property. For instance, if $\bfX^*$ contains the time series of $n_c$ output channels in a dynamical system, then the   Hankel matrix of $\bfX^*$ is approximately low-rank, provided that the dynamical system can be approximated by a reduced-order linear system. As demonstrated in \cite{ZHWC18}, the  Hankel matrix of the spatial-temporal blocks of PMU data in power systems is   low-rank. 
 In array signal processing, the Hankel matrix of a spectrally sparse signal is   low-rank \cite{CC14,CWW17,YX16}, and the rank depends on the number of sinusoidal components.  
 The low-rank Hankel property also holds for a class of  finite rate of innovation (FRI) signals, which are motivated by MRI imaging \cite{Haldar14,JLY16,OJ16,OBJ18,YKJL17}.
 
 The low-rank Hankel property has been exploited 
 for data recovery and error correction.   Refs. \cite{CC14,CWW17,ZHWC18} studied the low-rank Hankel matrix completion problem from missing data and proved analytically that the required number of measurements by their respective approaches are significantly smaller than that needed to recover a general low-rank matrix.   
  Error correction by exploiting the low-rank Hankel structure has been exploited in RPCA  \cite{JY18} and RMC \cite{CC14}. 
   Ref.~\cite{CC14} provides the analytical guarantee of low-rank Hankel matrix recovery from randomly located  data losses and corruptions. No analytical guarantee is provided for fully corrupted channels in \cite{CC14}. Moreover, the recovery approach in \cite{CC14} requires solving Semidefinite Programming (SDP), which is computationally expensive in large datasets. 
 }
This chapter solves the RMC problem of a low-rank Hankel matrix. 
 Extending from the methods in \cite{CGJ16,NNSAJ14}, this chapter 
  develops an alternation-projection-based algorithm,
and the iterates are proved to converge to the ground-truth data matrix linearly with a complexity of   $O\big(r^2n_cn\log(n)\log(1/\varepsilon)\big)$, where $\varepsilon$ is the recovery error of  $\bfX^*$. The computational cost is significantly smaller than the approach in \cite{CC14}. The required number of observations for the successful recovery is $O(\mu^2r^3\log^2(n)\log(1/\varepsilon))$, where $\mu$ is the incoherence of the corresponding Hankel matrix. This number is   significantly smaller than the existing bound of $O(rn\log^2(n))$ for recovering a general rank-$r$ matrix \cite{R11}.

Our data model follows the multi-channel Hankel matrix studied in \cite{ZHWC18}, which models multiple signals with common sinusoidal components. The multi-channel Hankel matrix is different from the single-channel Hankel matrix studied in \cite{CC14,CWW17}, where the recovery of only one spectrally sparse signal is considered.
 Our work provides the  first algorithmic development with the theoretical performance guarantee for multi-channel  low-rank Hankel matrix recovery from corrupted measurements. Our method can tolerate up to $\Theta(\frac{1}{r})$ fraction of corruptions per row and does not have any constraint on the number of corruptions per column. In fact, our method can recover $\bfX^*$ accurately even if  $\bfS^*$ contains a constant fraction of fully corrupted columns.
 Full corrupted columns happen in many applications. For example, simultaneous bad data across all channels can happen  due to device malfunctions, communication errors, or cyber data attacks in power systems.


\section{Problem Formulation}\label{sec:formulation}

\newtheorem{assumption}{Assumption}
Let $\bfX^*=[\bfx_1, \bfx_2, \cdots, \bfx_n] \in \mathbb{C}^{n_c\times n}$ denote the actual data.  
We define a linear operator $\mathcal{H}_{n_1}: \mathbb{C}^{n_c\times n}\rightarrow \mathbb{C}^{n_cn_1\times n_2}$ that maps a matrix into its corresponding Hankel matrix, i.e.,
{\small\begin{equation}\label{eqn:Hankel_structure}
\mathcal{H}_{n_1}(\bfX^*)=
\begin{pmatrix}
\bfx_1&\bfx_2&\cdots&\bfx_{n_2}\\
\bfx_2&\bfx_3&\cdots&\bfx_{n_2+1}\\
\vdots&\vdots&\ddots&\vdots\\
\bfx_{n_1}&\bfx_{n_1+1}&\cdots&\bfx_{n}
\end{pmatrix}\in\mathbb{C}^{n_cn_1\times n_2}
\end{equation}}

\noindent with $n_1+n_2=n+1$.  We say $\bfX^*$ satisfies the low-rank Hankel property if rank$(\mathcal{H}_{n_1}(\bfX^*))\leq r$  for { some $r\ll  n$  and some integer $n_1$ in $[r, n+1-r]$}.  
Throughout this chapter, we assume   $n_1>1$ is known and fixed and use $\mathcal{H}\bfX^*$ instead of $\mathcal{H}_{n_1}(\bfX^*)$ for simplicity.

Let $\bfS^*$ denote the additive errors in the measurements. We assume at most $s$ measurements are corrupted, i.e.,   $\|\bfS^*\|_0\leq s$. The values of the nonzero entries can be arbitrary. The measurements are presented by 
\begin{equation}
\bfM=\bfX^*+\bfS^*.
\end{equation}
Define the operator $\mathcal{P}_{\widehat{\Omega}}$ with  
$\mathcal{P}_{\widehat{\Omega}}(\bfM)_{i,j}=M_{i,j}$ if $(i,j)\in\widehat{\Omega}$, and $0$ otherwise.
 The robust low-rank Hankel matrix completion problem aims to recover $\bfX^*$ from $\mathcal{P}_{\widehat{\Omega}}(\bfM)$.  We formulate it as the following nonconvex optimization problem,
\begin{equation}\label{eqn:OP_partial_observation}
\begin{split}
&\min_{{\bfX}, {\bfS}}\quad \left\| \mathcal{P}_{\widehat{\Omega}}(\bfM-{\bfX}-{\bfS})\right\| _F \\
&\text{s.t.} \quad \text{rank}(\mathcal{H}{\bfX})\le r \quad \text{and} \quad \|{\bfS}\|_0\le s,
\end{split}
\end{equation}
where the nonconvexity results from the constraints. 
  \begin{defi}\label{def:mu}
	A rank-$r$ matrix ${{\bfL}}\in\mathbb{C}^{l_1\times l_2}$, with  its Singular Value Decomposition  (SVD)   ${{\bfL}}={\bfU}{\bf\Sigma} {\bfV}^H$, is $\mu$-incoherent if 
	\begin{equation}
	\max_{1\le i\le l_1} \|{\bfe_i^T\bfU} \|^2\le \frac{\mu r}{l_1},\quad   \max_{1\le j\le l_2} \|\bfe_j^T{\bfV} \|^2\le \frac{\mu r}{l_2}.
	\end{equation} 
\end{defi}

 The incoherence assumption   is   standard   in analyzing RPCA and MC problem, see, e.g., \cite{CR09,XCS12}.  
 { If a matrix  is both low-rank and sparse, like $\bfe_i\bfe_j^T$,  then there is no way to separate the sparse component and the low-rank component. The incoherence assumption  prevents the low-rank matrix $L$ to be sparse itself. The incoherence measures the closeness of $\bfL$ to matrices like $\bfe_i\bfe_j^T$. 
 If $\bfL=\bfe_i\bfe_j^T$, the corresponding $\mu$ is as large as $\max\{l_1, l_2\}/r$. 
 When $\mu$ is small, 
 the energy  of $\bfL$ is spread over all its entries.
 } 



The $\bfS^*$ and $\bfX^*$ are assumed to satisfy the following assumption throughout the chapter. We will show our method can accurately recover $\bfX^*$ based on this assumption.
\begin{assumption}
	Each row of $\bfS^*$ contains at most $\alpha$ fraction of non-zero entries with $\alpha\le\frac{C_1}{\mu c_s r} $  for some small positive constant 
	$C_1\le \frac{1}{840}$, 
	where $c_s=\max\Big(\frac{n}{n_1},\frac{n}{n_2}\Big)$; $\mathcal{H}\bfX^*$ is rank-$r$ and $\mu$-incoherent.   
\end{assumption}

In the successful recovery of $\bfX^*$ in  conventional RPCA, $\bfS^*$ can 
have at most $\Theta(\frac{1}{r})$ fraction of 
  nonzeros in each row and each column \cite{NNSAJ14}. In contrast, Assumption 1   only requires an upper limit for each row, while  the entries in one column of $\bfS^*$ can   all be nonzero. In fact, $\bfS^*$ can contain  $\alpha$ fraction of consecutive columns with all nonzero entries.
   If $n_1$ and $n_2$ are in the same order, i.e.,   both proportional to $n$, then $c_s$ is a constant. $\alpha$ could be as large as $\Theta(\frac{1}{r})$.
  Thus, our method can handle bad data across all the channels consecutively for a nearly constant fraction of time if each row of $\bfX^*$ corresponding to a time series.

\section{Applications and Related Work}\label{sec:background}
\subsection{Low-rank Hankel Matrices}
The low-rank Hankel property has been recently exploited  in different areas including array signal processing \cite{CC14,YX16}, dynamic system monitoring \cite{ZHWC18}, and magnetic resonance imaging (MRI) \cite{JLY16,OJ16,YKJL17}.

 {One 
  example of  the low-rank Hankel property is the class of spectrally sparse signals \cite{CWW17}, 
which are  weighted sums of $r$  damped or undamped  sinusoids.  The mathematical expression of an one-dimensional spectrally sparse signal is  
 \begin{equation}\label{eqn:single_sss}
	g[t]=\sum_{i=1}^{r}d_{i}e^{(2\pi\imath f_i-\tau_i)t},\quad t\in\mathbb{N},
\end{equation}
where $f_i$ and $d_{i}$ are the frequency and the normalized complex amplitude of the $i$-th sinusoid, respectively, and $\imath$ is the imaginary unit. As $g[t]$ is the sum of $r$ sinusoids,  its degree of freedom is $\Theta(r)$. The one-dimensional spectrally sparse signal $g[t]$ can be viewed as a special case of $\bfX^*$ in this chapter. Specifically, 
   $\bfX^*$ only contains one row, i.e., $n_c=1$, and let its $i$-th entry be $g[i]$. We follow \cite{ZHWC18} and refer to the resulting Hankel matrix as a single-channel Hankel matrix to differentiate from our general model of a  multi-channel Hankel matrix with $n_c>1$ in \eqref{eqn:Hankel_structure}.  

Ref.~\cite{CC14} also considers two-dimensional (2-D) and higher-dimensional spectrally sparse signals that are the sums of $r$ 2-D or higher-dimensional sinusoids. The data matrix $\bfX^*$ of a 2-D spectrally sparse signal in \cite{CC14} can be represented as
\begin{equation}\label{eqn: 2-D sss}
X^*_{t_1,t_2}= 
\sum_{i=1}^{r}d_{i}e^{(2\pi\imath f_{1i}-\tau_{1i})t_1+(2\pi\imath f_{2i}-\tau_{2i})t_2},
\end{equation}
where $X_{t_1,t_2}$ is the entry in row $t_1$ and column $t_2$.
Note that the degree of freedom of $\bfX$ is still  $\Theta(r)$ for a 2-D signal. 

The second example of the low-rank Hankel property is the outputs of linear dynamic system  discussed in \cite{ZHWC18}. Consider  a discrete-time system with the state vector $\bfs_t\in\mathbb{C}^{n_p}$, and the observation vector $\bfx_t\in\mathbb{C}^{n_c}$, 
\begin{equation}\label{eqn:dynamic}
\begin{split}
\bfs_{t+1}=\bfA\bfs_t,\qquad \qquad\\
\bfx_{t+1}=\bfC\bfs_{t+1}, \quad t=0,1,\cdots, n.
\end{split}
\end{equation}
As described in \cite{ZHWC18}, the data matrix 
$\bfX^*=[\bfx_1, \bfx_2, \cdots, \bfx_n]$ satisfies low-rank Hankel property.
$\mathcal{H}_{n_1}(\bfX^*)$ is rank $r$ for $n_1 \in [r, n+1-r]$, and the rank $r$ is the number of  the observed modes of the dynamical system.   If $r< n_c$, $\mathcal{H}_{n_1}(\bfX^*)$ is also rank $r$ for any $n_1 \in [1, n+1-r]$.  
Each row of $\bfX^*$ can be represented by an one-dimensional spectrally sparse signal. 
All rows share the same set of sinusoids but have different weights. 
The entry in row $k$ and column $t$, denoted by  $X^*_{k,t}$, can be written as
  \begin{equation}\label{eqn: spectrally_sparse_signal}
 X^*_{k,t}= 
 \sum_{i=1}^{r}d_{k,i}e^{(2\pi\imath f_i-\tau_i)t}, \quad k=1,\cdots, n_c,
 \end{equation}
 where $d_{k,i}$ is the normalized complex amplitude of the $i$-th sinusoid in the $k$-th signal.
The degree of freedom  of $\bfX^*$ is $\Theta(n_cr)$.
We also remark that this chapter considers the recovery of $\bfX^*$, which is irrelevant to the observability and identifiability of the linearly dynamical system  in \eqref{eqn:dynamic}. Our method directly recovers the data from partial observations and does not need to estimate the system model. 


In MRI imaging, 
a signal is called finite rate of innovation (FIR) \cite{OJ16} if   there exists a finite sequence $\bfh[t]$ such that
\begin{equation}\label{eqn: FRI_signal}
 (\bfx^**\bfh)[t]=0, \quad \forall t,
\end{equation} 
where $(\cdot)*(\cdot)$ computes the convolution of two signals. Such $\bfh[t]$ is also known as annihilating filter. If the length of $\bfh[t]$ is   $r+1$, then the Hankel matrix $\mathcal{H}_{n_1}(\bfx)$ is a rank-$r$ matrix for some $n_1\in[r, n-r+1]$. The MRI images satisfy \eqref{eqn: FRI_signal} after some transformation. The low-rank Hankel property has been exploited in MRI image recovery \cite{JLY16,OJ16,YKJL17}.  

}

\subsection{Robust Matrix Completion}
When $n_1=1$, \eqref{eqn:OP_partial_observation}   reduces to the conventional RMC problem studied in 
\cite{CLMW11,CGJ16,CJSC13}, \cite{CXCS11,Char12,Li13,GWL16,KC12,KLT17}.
If all the measurements are available, RMC reduces   to the RPCA problem. The state-of-art RPCA algorithms such as \cite{HKZ11,NNSAJ14} can recover the low-rank matrix even if at most $O(\frac{1}{r})$ fraction of  entries per row and per column are corrupted. This bound is also proved to be order-wise  optimal \cite{NNSAJ14}. If no corruptions exist, RMC reduces to the low-rank matrix completion problem, and  $O(\mu_0rn \log n)$ measurements are needed to recover an $n_c$-by-$n$ ($n_c<n$) rank-$r$ matrix with incoherence $\mu_0$ \cite{CR09}.

For the general RMC problem, one approach is to relax  the nonconvex rank and $\ell_0$-norm into the convex nuclear norm and $\ell_1$ norm and then solve the resulting convex optimization problem \cite{CJSC13,CLMW11,KLT17,Li13}. Refs.~\cite{CLMW11,CJSC13} show that the convex approach can correct a constant fraction of randomly distributed outliers, provided that a constant fraction of the matrix entries are observed. 
Based on a stronger requirement on the incoherence of the matrix,  ref.~\cite{Li13} improves the theoretical bound such that only $O(\mu_0 r n \log^2(n))$ observed entries are required while   tolerating a constant fraction of bad data. Although fully corrupted columns are considered in \cite{CXCS11, KLT17}, both p{a}pers cannot recover the corrupted columns. 
Ref.~\cite{CXCS11} shows that when $\bfS^*$ contains fully corrupted columns, the convex approach can recover non-corrupted columns and estimate the column subspace accurately. However, their approach can only locate the corrupted column but cannot recover its actual entries.  Ref.~\cite{KLT17} provides an upper bound of RMC when the measurements contain noise. The error bound is large when some columns are fully corrupted, because the recovery of corrupted columns is not accurate.

Fast algorithms to solve the nonconvex formulation directly have been recently developed. Ref.~\cite{Char12} adds a nonconvex penalty function to speed up the minimization through a shrinkage operator, but no analytical analysis is reported.
Ref.~\cite{KC12}  proposes a projected gradient descent algorithm over the nonconvex sets. Ref.~\cite{GWL16} proposes an alternating minimization algorithm.  
 Both \cite{GWL16} and \cite{KC12} prove the proposed algorithms converge under the assumption of Restrict Isometric Property (RIP), but no theoretical analyses of the recovery performance are provided.
 
 {
 	The low-rank Hankel has been exploited in missing data recovery but not much in error corrections.  Refs.~\cite{CWW17} analyze the matrix completion performance for single-channel Hankel matrices, i.e., $n_c=1$. Ref.~\cite{ZHWC18} extends the analysis to multi-channel Hankel matrices with $n_c>1$. If $\bfS^*$ is a zero matrix, one can recover $\bfX^*$ from   $O(\mu r^3 \log n)$ observations \cite{ZHWC18}, where $\mu$  is the incoherence of $\mathcal{H}\bfX^*$.   Theorem 5 of \cite{ZHWC18} indicates that  $\mu$ is a constant for a group of well separated frequencies $f_{i}$'s and concentrated normalized amplitude $d_{k,i}$'s. 
 	
 	Only Refs.~\cite{CC14} and \cite{JY18} consider the RMC problems for the low-rank Hankel matrix. The nonconvex rank  and $\ell_0$-norm are relaxed into the convex nuclear norm and $\ell_1$-norm, respectively  in both \cite{CC14} and \cite{JY18},   and only Ref. \cite{CC14} provides the theoretical guarantee.  Although  high-dimensional spectral sparse signals are considered in \cite{CC14}, the degree of freedom of these signals is still $\Theta(r)$, which corresponds to single-channel Hankel matrices in our setup. 
 	We consider multi-channel Hankel matrix where $n_c>1$ in this chapter. Moreover,
 	Ref. \cite{CC14} assumes the locations of the corrupted entries are randomly distributed and does not provide any theoretical  recovery guarantee when column-wise corruptions exist. This chapter provides the first theoretical study of RMC and RPCA for multi-channel low-rank Hankel matrices with fully corrupted columns. Furthermore, the convex approach in  \cite{CC14} requires  solving SDP, 
 	which 
 	is time-consuming for large-scale problems. The computational complexity of solving the SDP  to recover the Hankel matrix $\mathcal{H}\bfX^*$
 	is $\mathcal{O}(n_c^3n^3)$, while the computational complexity of our algorithm is $O\big(r^2n_cn\log(n)\log(1/\varepsilon)\big)$, where $\varepsilon$ is the approximation error.
 	 }
 
 \subsection{Rank-based Stagewise (R-RMC) Algorithm}
 Ref.~\cite{CGJ16} proposed a nonconvex algorithm called Rank-based stagewise (R-RMC) algorithm to solve RMC. The R-RMC algorithm is directly extended from  the  AltProj algorithm in \cite{NNSAJ14} for RPCA by adjusting to partial measurements. 
 R-RMC  contains two loops of iterations.
 In the $k$-th stage of the outer loop, it decomposes $\bfM$ into a rank-$k$ matrix and a sparse matrix. The resulting matrices are used for initial points in the $(k+1)$-th stage. 
 In the $t$-th iteration of the inner loop, it updates the sparse matrix $\bfS_{t}$ and the rank-$k$ matrix $\bfL_{t+1}$ based on $\bfS_{t-1}$ and $\bfL_{t}$. 
 $\bfS_t$ is obtained by a hard thresholding  over the residual error between $\bfM$ and $\bfL_{t}$. 
 $\bfL_{t+1}$ is updated by 
 first moving along the gradient descent direction 
 and then truncating it to a rank-$k$ matrix.    
 The reason of using an outer loop instead of directly decomposing into a rank-$r$ and a sparse matrix is that by the initial thresholding, the remaining sparse corruptions in the residual is in the order of $\sigma_1(\bfX^*)$.  These corruptions would lead to large errors in the estimation of the lower singular values of $\bfX^*$. Through the outer loop, the algorithm recovers the lower singular values  after the corruptions at higher values are already removed.  The computational complexity of R-RMC is $O\Big((mr^2+nr^3)\log(1/\varepsilon)\Big)$, where $m$ is the number of observed measurements.  
 
  To achieve a recovery accuracy of $\varepsilon$, 
  R-RMC requires at least $O(\mu_0^2 r^3n\log^2(n)\log(1/\varepsilon))$ observed measurements. The percentage of  outliers per row and per column is at most $O(\frac{1}{r})$.
 
 This chapter develops an algorithm based upon R-RMC \cite{CGJ16} to solve the nonconvex problem \eqref{eqn:OP_partial_observation}. By exploiting the Hankel structure, our algorithm can correct fully corrupted columns, which cannot be corrected by R-RMC. Moreover, the required number of measurements by our method is significantly less than that by R-RMC.

\section{Structured Alternating Projection (SAP) Algorithm}\label{sec:algorithm}

 Here we present the structured alternating projections (SAP) algorithm to solve \eqref{eqn:OP_partial_observation}. 
 In the algorithm, $\bfM, \bfX_t, \bfS_t\in\mathbb{C}^{n_c\times n}$, and $\bfW_{t}, \bfL_{t}\in\mathbb{C}^{n_cn_1\times n_2}$. 
  $\mathcal{T}_{\xi}$ is the hard thresholding operator, 
 \begin{equation}
 \mathcal{T}_{\xi}(\bfZ)_{i,j}
 = Z_{ij}\quad \text{if}\quad { |Z_{ij}|\ge \xi},\quad \text{and }0\quad \text{otherwise}. 
 \end{equation}
 Let $\bfZ=\sum_{i=1}\sigma_i\bfu_i\bfv_i^H$ denote the SVD of $\bfZ$ with $\sigma_1\ge\sigma_2\ge \cdots$.  $\mathcal{Q}_k$ finds the best rank-$k$ approximation to $\bfZ$, i.e., 
\begin{equation}
\mathcal{Q}_k(\bfZ)=\sum_{i=1}^{k}\sigma_i\bfu_i\bfv_i^H.
\end{equation}  
$\mathcal{H}^{\dagger}$ denote the Moore-Penrose pseudoinverse of ${\mathcal{H}}$. Given any matrix $\bfZ\in \mathbb{C}^{n_cn_1\times n_2}$, $\mathcal{H}^{\dagger}(\bfZ)\in\mathbb{C}^{n_c\times n}$ satisfies
\begin{equation}
(\mathcal{H}^{\dagger}(\bfZ))_{i,j}=\frac{1}{w_j}\sum_{k_1+k_2=j+1}Z_{(k_1-1)n_c+i,k_2}, 
\end{equation}
where $w_j$ denotes the number of elements in the $j$-th anti-diagonal of an $n_1\times n_2$ matrix. 

\begin{algorithm}[h]
	\caption{Structured Alternating Projections (SAP)}\label{Alg3}
	\begin{algorithmic}[1]
		\State  {\bf Input} Observations $\mathcal{P}_{\widehat{\Omega}}(\boldsymbol{M})$,  thresholding parameter $\varepsilon$, 
		 the largest singular value $\sigma_{1}=\sigma_{1}(\mathcal{H}\bfX^*)$, and { convergence criterion $\eta=\frac{4\mu c_s r}{\sqrt{n_c}n}$}.
		\State  {\bf Initialization}  $\bfX_0=\bf0$, $\xi_{0}=\eta\sigma_{1}$.
		\State  {\bf Partition} $\widehat{\Omega}$ into disjoint subsets $\widehat{\Omega}_{k,t}$ ($1\le k \le r, 0\le t\le T$) of equal size $\widehat{m}$, let $\hat{p}=\frac{\widehat{m}}{n_cn}$.
		\For	{Stage $k=1, 2, \cdots, r$}
		\For	{$t=0,1,\cdots, T=\log(\eta\sqrt{n_c}n\sigma_{1}/\varepsilon)$}
		\State $\bfS_{t}=\mathcal{T}_{\xi_{t}}\big(\mathcal{P}_{\widehat{\Omega}_{k,t}}(\bfM-\bfX_t)\big)$;
		\State 
		$\bfW_{t}=\cH\Big(\bfX_t+\hat{p}^{-1}\big(\mathcal{P}_{\widehat{\Omega}_{k,t}}(\bfM-\bfX_t)-\bfS_t\big)\Big)$;
		\State
		$\xi_{t+1}=\eta\big(\sigma_{k+1}(\bfW_t)+(\frac{1}{2})^t\sigma_{k}(\bfW_t)\big)$;
		\State $\bfL_{t+1}=\mathcal{Q}_k(\bfW_t)$;
		\State $\bfX_{t+1}=\cH^{\dagger}\bfL_{t+1}$;
		\EndFor	
		\If {$\eta\sigma_{k+1}(\bfW_{T})\le\frac{\varepsilon}{\sqrt{n_c}n}$}
		\State	{\bf Return} $\bfX_{T+1}$; 
		\EndIf
		\State $\bfX_0=\bfX_{T+1}$, $\xi_0=\xi_{T+1}$;
		\EndFor
	\end{algorithmic}
\end{algorithm}
SAP is built upon  Rank-based stagewise (R-RMC) \cite{CGJ16}. 
The major differences of SAP from R-RMC are the additional Hankel structure. The main contribution of this chapter is   the analytical performance guarantee of SAP, which we defer  to Section \ref{sec:guarantee}. 
    The key steps are summarized as follows. Similar to R-RMC \cite{CGJ16}, SAP also contains two stages of iterations.
    In the $t$-th iteration of the inner loop, it updates the estimated sparse error matrix $\bfS_{t}$ and data matrix $\bfX_{t+1}$ based on $\bfS_{t-1}$ and $\bfX_{t}$. 
    $\bfS_t$ is obtained by a hard thresholding  over the residual error between $\bfM$ and $\bfX_{t}$. The thresholding $\xi_{t}$ decreases  as $t$ increases. The entire sampling set $\widehat{\Omega}$ is first divided into several disjoint subsets. The disjointness guarantees the independence across $\bfX_{t}$ and $\bfX_{t+1}$, which is a standard analysis trick in solving RMC (see  \cite{R11}).  
    To obtain $\bfX_{t+1}$, we first updated $X_t$ by 
     moving  along the gradient descent direction with a step size $\hat{p}^{-1}=\frac{n_cn}{|\widehat{\Omega}_{k,t}|}$. Then, $\bfW_{t}$ is calculated as the  projection of the updated $\bfX_{t}$ to the Hankel matrix space. Finally, $\bfX_{t+1}$ is obtained by $\mathcal{H}^{\dagger}\bfL_{t+1}$, and $\bfL_{t+1}$ is updated by  truncating $\bfW_{t}$ to a rank-$k$ matrix. 
     { The maximum  number of iterations in each inner loop, denoted as $T$, is set as $\log(\eta\sqrt{n_c}n\sigma_{1}/\varepsilon)$.}
     { In practice, 
     the algorithm can exit the loop before reaching the maximum number of iterations if 
     $\bfX_{t+1}$  is already very close to $\bfX_{t}$.}  
       In the $k$-th iteration of the outer loop, the target rank increases from $1$ gradually, and the resulting matrices are used {as the  initialization} in the $(k+1)$-th stage. 
     
     The reason of using an outer loop instead of directly applying rank-$r$ approximation when calculating $\bfL_{t}$ is the same as that in R-RMC \cite{CGJ16} and AltProj \cite{NNSAJ14}. By the initial thresholding, the remaining sparse corruptions in the residual is in the order of $\sigma_1(\mathcal{H}\bfX^*)$, the largest singular value of $\mathcal{H}\bfX^*$. These corruptions would lead to large errors in the estimation of the lower singular values of $\mathcal{H}\bfX^*$. Through the outer loop, the algorithm recovers the lower singular values  after the corruptions with higher values are already removed. 
     
     Calculating the best rank-$k$ approximation in line $9$ dominates the computation complexity.
     Generally for a matrix $\bfW_t\in \mathbb{C}^{n_cn_1\times n_2}$, the best rank-$k$ approximation can be solved in $O(kn_cn^2)$, and $n_cn^2$ results from calculating  $\bfW_{t}\bfz$ for $\bfz\in\mathbb{C}^{n_2}$. Here, due to the Hankel structure of $\bfW_{t}$, a fast convolution algorithm (see \cite{CWW17,ZHWC18}) only requires computational complexity at $O(n_cn\log(n))$ to compute  $(\mathcal{H}\bfZ)\bfz$ for any $\bfZ\in\mathbb{C}^{n_c\times n}$ and $\bfz\in\mathbb{C}^{n_2}$. 
     The fast convolution can also applied to reduce the computational time to $O\big(rn_cn\log(n)\big)$ when calculating $\bfX_{t+1}=\mathcal{H}^{\dagger}\bfL_{t+1}$ with stored SVD of $\bfL_{t+1}$\cite{CWW17,ZHWC18}. Hence, the computational complexity per iteration is  $O\big(rn_cn\log(n)\big)$, and the total computational complexity is $O\big(r^2n_cn\log(n)\log(1/\varepsilon)\big)$.
     
     One can directly apply R-RMC on the structured Hankel matrix. 
     The resulting algorithm differs from SAP in line 7 and 10 that the updated rank-$k$ matrix is not projected to the Hankel matrix space. 
     Based on the analysis in \cite{CGJ16}, the computational time per iteration of R-RMC on Hankel matrix is $O(m_sr+n_cnr^2)$, and $m_s$ is the number of observed measurements in the structured Hankel matrix.
     With full observations,  the computational complexity per iteration of R-RMC on Hankel matrices is as large as $O(n_cn^2r)$. By downsampling the observation set to its theoretical limit in Theorem 2 of \cite{CGJ16}, the computational complexity per iteration  can be reduced to $O(\mu^2r^3n_cn\log^2(n))$. However, it is still larger than $O(rn_cn\log(n))$ of SAP. Moreover, the constant item of the theoretical limit in theorem 2 \cite{CGJ16} is hard to determine in practice. Furthermore, downsampling will increases the iteration number numerically. Though the complexity per iteration is reduced by downsampling, the computational time may increase, which is  reported in Fig. 1(b) \cite{CGJ16} as well.

{  We remark that $\sigma_{1}(\mathcal{H}\bfX^*)$ and $\mu$ may not be computed directly.   $\sigma_1(\mathcal{H}\bfX^*)$ is only used to obtain the initial  estimation of the sparse matrix.  In practice,  we use $p^{-1}(\mathcal{H}\mathcal{P}_{\widehat{\Omega}}(\bfM))$ to estimate $\sigma_1(\mathcal{H}\bfX^*)$.  This  idea is borrowed from \cite{NNSAJ14,CGJ16}.  As long as the estimated value is in the same order as   $\sigma_{1}(\mathcal{H}\bfX^*)$, all the theoretical results in the following Theorem  1 still hold,   with a different constant $C_1$ in Assumption 1.  $\mu$ is only used in $\eta$ as $\eta =\frac{\mu c_s r}{\sqrt{n_c}n}$. If the estimated incoherence is in the same order as $\mu$, all the results still hold with a different constant  $C_1$ in Assumption 1 and a different constant $C_2$ in \eqref{eqn:num_measurements_theoretical_bound}. In practice, one can estimate $\eta$ by $\frac{r}{\sqrt{n_cn_1n_2}}$   for incoherent matrices without computing $\mu$. This idea has been used in    \cite{CLMW11,NNSAJ14,CGJ16}, which all require $\mu$ in their algorithms but do not actually compute it.  Thus, we present SAP using  $\sigma_{1}(\mathcal{H}\bfX^*)$ and $\mu$ to simply the following theoretical analysis, and one can replace them with estimated values in implementation.

%
%
%
%
  }  

{
We also note that the recovery of $\bfX^*$ is irrelevant to the observability and identifiability of the linearly dynamical system  in \eqref{eqn:dynamic}, when $\bfX^*$ contains the output time series of \eqref{eqn:dynamic}. Our method directly recovers the data from partial observations and does not need to estimate the system model. }

\section{Recovery Guarantee of SAP}\label{sec:guarantee}
The recovery guarantee of SAP is summarized in  Theorem \ref{Theorem: bad data and missing data}, and the proof is deferred to the Appendix.
\begin{theorem}\label{Theorem: bad data and missing data}
	{ Suppose $\bfX^*$, $\bfS^*$ satisfy the Assumption 1, and the support of the sampling set $\widehat{\Omega}$ is randomly selected. 
	 Let $\eta=\frac{4\mu c_s r}{\sqrt{n_c}n}$ and $T=\log(\eta\sqrt{n_c}n\sigma_{1}/\varepsilon)$} in Algorithm \ref{Alg3}. If
	\begin{equation}\label{eqn:num_measurements_theoretical_bound}
	\begin{split}
	m\ge C_2\mu^2 r^3\log^2(n)\log\big({\mu c_s r\sigma_1}/{\varepsilon}\big),
	\end{split}
	\end{equation} 
	 with probability at least $1-\frac{rn_cT\log^3(n_cn)}{n^{2}}$,  its output $\boldsymbol{{X}}$ and $\bfS$
	satisfy:
	\begin{equation}\label{eqn:THM1}
	\begin{split}	
	&\qquad \qquad \qquad \qquad  \|\boldsymbol{{X}}-\bfX^*\|_F\le\varepsilon\\
	&	\|\bfS-\mathcal{P}_{\widehat{\Omega}}(\bfS^*) \|_F\le \varepsilon, \quad {Supp}(\bfS)\subseteq Supp\big(\mathcal{P}_{\widehat{\Omega}}(\bfS^*)\big)
	\end{split}
	\end{equation}
	for some large constant $C_2>0$.
\end{theorem}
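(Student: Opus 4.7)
The plan is to follow the two-level induction framework of R-RMC/AltProj but with every concentration bound replaced by its Hankel-structured counterpart. First, I would isolate the inner loop inside a fixed stage $k$ and prove the invariant that, writing $\bfL^*_k:=\mathcal{Q}_k(\mathcal{H}\bfX^*)$ and $\sigma_i^*:=\sigma_i(\mathcal{H}\bfX^*)$, the iterate satisfies
\[
\|\bfL_t-\bfL^*_k\|_\infty \;\lesssim\; \eta\bigl(\sigma^*_{k+1}+(1/2)^t\sigma^*_k\bigr),
\qquad
\mathrm{Supp}(\bfS_t)\subseteq \mathrm{Supp}\bigl(\mathcal{P}_{\widehat{\Omega}_{k,t}}(\bfS^*)\bigr),
\]
and $|(\bfS_t-\mathcal{P}_{\widehat{\Omega}_{k,t}}\bfS^*)_{ij}|\lesssim \xi_t$. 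The threshold $\xi_t$ is set exactly so that a standard ``no false positives / no missed large corruptions'' argument goes through: any entry of $\bfX^*-\bfX_t$ is controlled in $\ell_\infty$ by $\|\mathcal{H}^\dagger\|\cdot\|\bfL_t-\bfL_k^*\|_\infty$ plus the tail $\sigma^*_{k+1}$, so choosing $\xi_t = \eta(\sigma_{k+1}(\bfW_{t-1})+(1/2)^{t-1}\sigma_k(\bfW_{t-1}))$ ensures that $\mathcal{T}_{\xi_t}$ keeps exactly the truly corrupted observed entries.

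Second, I would prove the one-step contraction. The update can be written as
\[
\bfW_t-\bfL^*_k \;=\; \mathcal{H}\bigl((\bfX_t-\bfX^*)+ \hat p^{-1}(\mathcal{P}_{\widehat{\Omega}_{k,t}}-\hat p \mathcal{I})(\bfX^*-\bfX_t)\bigr)\;+\;\mathcal{H}\bigl(\hat p^{-1}\mathcal{P}_{\widehat{\Omega}_{k,t}}\bfS^* - \bfS_t\hat p^{-1}\bigr)\;+\;\bigl(\mathcal{H}\bfX^*-\bfL^*_k\bigr).
\]
The third term is the tail $\sigma^*_{k+1}$. The first term is controlled by the partial Hankel concentration lemma from Chen--Wang--Wei / Zhang--Hao--Wang--Chow: for the fresh subsample $\widehat{\Omega}_{k,t}$, $\|(\hat p^{-1}\mathcal{P}_{\widehat{\Omega}_{k,t}}-\mathcal{I})(\bfX^*-\bfX_t)\|$ is small when restricted to the tangent space at $\bfL_t$, which is where $\mathcal{Q}_k$ effectively acts after the Wedin/Weyl step. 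The fresh-sampling partition plus $m\ge C_2\mu^2r^3\log^2 n\cdot\log(\cdot)$ delivers the $1/2$-contraction factor with the required failure probability. The second term, corresponding to the residual sparse error, is a matrix whose rows have at most $2\alpha$ fraction of nonzeros (by Assumption~1 and the thresholding step) and entries bounded by $O(\xi_t)$; its spectral norm is therefore $O(\sqrt{\alpha n}\,\xi_t/\hat p)$, which under the hypothesis $\alpha\le C_1/(\mu c_s r)$ is absorbed into a $1/4$ factor. Applying $\mathcal{Q}_k$ and converting spectral-norm bounds to $\ell_\infty$ via the incoherence of $\bfL_t$ (inherited from $\bfL^*_k$ up to $O(1)$) closes the induction.

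Third, I would run the outer induction over $k$. The exit test $\eta\sigma_{k+1}(\bfW_T)\le \varepsilon/(\sqrt{n_c}n)$ is shown to fail until $k$ reaches the ``effective rank level'' at which $\sigma^*_{k+1}$ is comparable to $\varepsilon$. When we enter stage $k{+}1$, the invariant at the end of stage $k$ gives $\|\bfL_T-\bfL^*_k\|_F\lesssim \eta\sqrt{k}\sigma^*_{k+1}$, which becomes the starting error for stage $k{+}1$, and the inner loop drives it down to $\eta\sigma^*_{k+2}$ again; this is exactly the R-RMC warm-start structure. After stage $r$ (or earlier termination) a final conversion $\|\bfX_T-\bfX^*\|_F\le \|\bfL_T-\mathcal{H}\bfX^*\|_F$ via $\|\mathcal{H}^\dagger\|\le 1$ produces the stated $\varepsilon$ accuracy, and a union bound over at most $rT=O(r\log(\sigma_1/\varepsilon))$ iterations yields the claimed failure probability.

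The main obstacle will be the concentration step for the structured sampling of a Hankel matrix in the presence of column-dense (but row-sparse) errors. Standard Bernoulli-sampling matrix concentration loses a factor of $n$ once the corruption is allowed to wipe out whole columns, so I cannot reuse the RMC bounds verbatim. The fix is to route everything through the \emph{row-wise} $\ell_\infty$ bound (Assumption~1 is a per-row condition), exploit the fact that the residual sparse matrix after thresholding still has $O(1/(\mu c_s r))$ row-sparsity, and combine with the tangent-space concentration of $\mathcal{H}\mathcal{P}_{\widehat{\Omega}}\mathcal{H}^\dagger$ proved in the single-channel Hankel work and extended to the multi-channel setting in the preceding chapter. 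Pushing the parameter $c_s$ and the factor $n_c$ through this argument carefully is what determines the $\mu^2r^3\log^2 n$ sample-complexity constant $C_2$.
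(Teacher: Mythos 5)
Your overall scaffold — two-level induction, $\ell_\infty$ invariant on the iterate, support-preservation invariant for $\bfS_t$, and a one-step decomposition of $\bfW_t-\bfL_k^*$ into a tail term, a sampling-noise term, and a residual sparse-error term — does match the shape of the paper's argument (the paper's $\bfH_{1,t}=\wcH(\bfS^*-\widetilde{\bfS}_t)$ and $\bfH_{2,t}=\wcH(\mathcal{I}-\hat p^{-1}\mathcal{P}_{\Omega_{k,t}})(\bfX_t+\widetilde{\bfS}_t-\bfX^*-\bfS^*)$ are essentially your second and first perturbation terms). The exit condition for the outer loop and the union bound over $rT$ rounds are also handled the way you describe.

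However, there is a genuine gap in your step ``applying $\mathcal{Q}_k$ and converting spectral-norm bounds to $\ell_\infty$ via the incoherence of $\bfL_t$.'' This conversion does not close the induction. Incoherence of $\bfL_{t+1}$ bounds $\|\bfL_{t+1}\|_\infty$ in terms of $\sigma_1(\bfL_{t+1})$, but the quantity you must control is $\|\bfL_{t+1}-\bfL_k^*\|_\infty$, and the \emph{difference} of two $\mu$-incoherent rank-$k$ matrices is neither rank-$k$ nor $\mu$-incoherent in general, so a spectral-norm bound on the difference plus incoherence of each factor gives nothing useful about the entrywise error. This is precisely the technical obstacle that the paper's key lemma is built to handle: it expands the action of $\mathcal{Q}_{2k}$ on $\wcH\bfX^*+\bfH_t$ as a series in powers of $\bfH_t$, and bounds $\max_i\|\bfe_i^T(\bfH_t)^a\bfZ\|_2$ for $a\le\log(n)$ by induction on $a$, splitting each step into a sparse piece (bounded deterministically using the per-row and per-column sparsity of the Hankelized residual) and a random piece (bounded by a Bernstein-type argument applied row-wise, not through a tangent-space projection). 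Without this power-series, row-$\ell_2$ machinery, the $\ell_\infty$ contraction you posit is not established. Your alternative of importing the FIHT tangent-space concentration $\|\mathcal{P}_{\mathcal{S}}\widetilde{\mathcal{G}}(\hat p^{-1}\mathcal{P}_\Omega-\mathcal{I})\widetilde{\mathcal{G}}^*\mathcal{P}_{\mathcal{S}}\|$ is not a drop-in replacement here, because after hard-thresholding $\bfS_t$ the residual lives nowhere near a fixed tangent space, and that bound gives Frobenius/spectral control on the projected perturbation, not the row-wise $\ell_\infty$ control the thresholding invariant requires. A secondary issue: your spectral-norm estimate $O(\sqrt{\alpha n}\,\xi_t/\hat p)$ for the residual-sparse Hankelized term is an under-count; for a matrix with at most $s$ nonzeros per row and per column and $\ell_\infty$ bound $\xi$, the correct bound is $s\xi$ (which with $s=\alpha n$ and $\alpha\le C_1/(\mu c_s r)$ is exactly what absorbs into the contraction after the $\eta$ normalization), not $\sqrt{s}\,\xi$.
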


 Theorem \ref{Theorem: bad data and missing data} indicates that the resulting $\bfX$ returned by SAP can be arbitrarily close to the ground truth $\bfX^*$ as long as the number of observations exceeds $O\Big( \mu^2r^3\log^2n\log({\mu r\sigma_1}/{\varepsilon})\Big)$, and each row of $\bfS^*$ has at most $\Theta(\frac{1}{\mu r})$ fraction of outliers. 
{ If $\bfX^*$ contains spectrally sparse signals as shown in \eqref{eqn: spectrally_sparse_signal}, then  $\bfX^*$ is also rank $r$. If we directly apply a low-rank MC method via convex relaxation \cite{R11} to recover $\bfX^*$ from $\mathcal{P}_{\widehat{\Omega}}(\bfX^*)$,  the required number of  observations is at least ${O}(\mu_0 rn\log^2(n))$.} Since $n\gg r$, SAP reduces the required number of observations significantly by exploiting the Hankel structure. Moreover, SAP can identity and correct fully corrupted columns up to a fraction at $\Theta({1}/{\mu r})$. In contrast, traditional RMC methods 
can locate the fully corrupted columns but cannot recover the corrupted columns \cite{CXCS11,XCS12}.
The number of iterations $rT$ depends on  $\log(1/\varepsilon)$, where $\varepsilon$ is desired accuracy. Therefore, the algorithm also enjoys a linear convergent rate. 



If there is no bad data, i.e. $\bfS^*=\bf0$, \eqref{eqn:OP_partial_observation} is reduced to the MC problem. Under the setup of spectrally sparse signals in \eqref{eqn: spectrally_sparse_signal},
according to the Theorem 5 in \cite{ZHWC18}, a group of well separated frequencies $f_{i}$'s can guarantee that the incoherence $\mu<O(n_c)$.
If we further assume on the normalized amplitude $d_{k,i}$, say that  $d_{k,i}$'s are close to each other, the incoherence $\mu$ is a constant. 
The degree of freedom depends linearly on the rank $r$, while the theoretical bound in \eqref{eqn:num_measurements_theoretical_bound} relies on $(r^3)$. When $r$ is small, the theoretical bound in \eqref{eqn:num_measurements_theoretical_bound} is nearly optimal.
Compared with our  algorithm AM-FIHT in \cite{ZHWC18},
SAP does not have the heavy-ball step and increases the rank gradually instead of keeping fixed rank. To achieve a recovery error of $\varepsilon$, AM-FIHT requires $O\Big(\mu\kappa^6r^2\log(n)\log\big(\frac{\sigma_{1}}{\kappa^3\varepsilon}\big)\Big)$ observations. 
In contrast, SAP depends on $r^3$ but does not rely on the conditional number $\kappa$, where $\kappa$ is defined as the ratio of the largest to smallest singular values of $\mathcal{H}\bfX^*$.   

If there is no missing data, i.e. $\widehat{\Omega}=\{(k,t)| 1\le k\le n_c, 1\le t\le n  \}$, \eqref{eqn:OP_partial_observation} is reduced to RPCA problem. Each row of $\bfS^*$ can have up to $\alpha\le \frac{C_1}{\mu c_s r}$ fraction of corrupted entries. If we choose $n_1=n_2$, $c_s$ is constant. 
The existing results in \cite{NNSAJ14,HKZ11} for RPCA can tolerate at most $\Theta(\frac{1}{r})$ fraction of outliers per row and per column.  
SAP also tolerates at most $\Theta(\frac{1}{r})$ fraction of outliers in each row. Moreover, SAP can recover fully corrupted columns. There is no upper bound of the number of corruptions per column.  
One can directly apply a general RPCA algorithm such as AltProj \cite{NNSAJ14} on the structured Hankel matrix $\mathcal{H}(\bfM)$, Altproj can recover the corrupted data correctly based on the same analysis  as in \cite{NNSAJ14}.
 However, the computational time  per iteration of Altproj  is  $O(rn_cn^2)$,  which is much large  than  $O(rn_cn\log(n))$ by SAP. 


\section{Numerical Results}\label{sec:numerical}
We evaluate the performance of SAP numerically. The experiments are implemented in MATLAB 2015 on a desktop with 3.4GHz Intel Core i7-4770 CPU. 
Here, we study several modes of missing data and bad data as shown in Figs. \ref{fig:missing data modes} and \ref{fig:bad data modes}.  For  each pair of data loss and bad data modes, the supports of the bad data matrix $\bfS^*$ and the observed indices $\widehat{\Omega}$ are generated independently. The models are summarized as: 
\begin{itemize}
	\item M1/B1: Missing data or bad data occur randomly across the all  channels and times;
	\item M2/B2: Missing data or bad data occur in all channel simultaneously s at randomly selected time indices;
	\item B3: Bad data occurs simultaneously and consecutively in all the channels. The starting point is selected randomly. 
\end{itemize}
\begin{figure}[h]
	\centering
	\includegraphics[width=0.5\textwidth]{./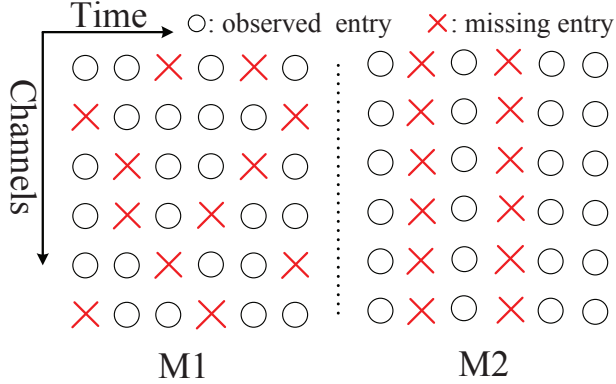}
	\centering
	\caption{Two modes of missing data} 
	\label{fig:missing data modes}
\end{figure}
\begin{figure}[h]
	\centering
	\includegraphics[width=0.6\textwidth]{./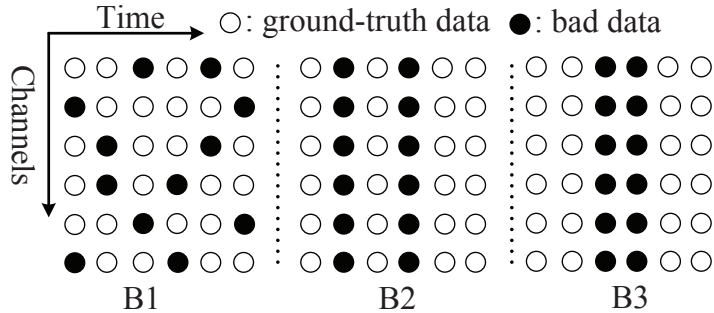}
	\centering
	\caption{Three modes of bad data} 
	\label{fig:bad data modes}
\end{figure}

The performance is tested on the spectrally sparse signals as shown in \eqref{eqn: spectrally_sparse_signal}.
Each $f_i$ in \eqref{eqn: spectrally_sparse_signal} is randomly selected from $(0,1)$. $\tau_i$ is set as $0$ for all $i$. For the complex coefficient $d_{k,i}$, its angle is randomly selected from $(0, 2\pi)$, and its magnitude is set as $1+10^{0.5a_{k,i}}$, where $a_{k,i}$ is randomly selected from $(0,1)$. For each non-zero entry in the bad data matrix $\bfS^*$, its angle is randomly selected from $(0, 2\pi)$ ({ except for Fig.~\ref{fig: Phase transition of SAP with outliers restricted in  Quadrants I}}), 
and its magnitude is randomly selected from $(\bar{X}^*, 5\bar{X}^*)$, where $\bar{X}^*=\|\bfX^*\|_F/\sqrt{n_cn}$ is the average energy of $\bfX^*$.  Unless otherwise stated, the size of the data matrix $\bfX^*\in\mathbb{C}^{n_c \times n}$ is set as $n_c=30$ and $n=300$, and $n_1=n/2=150$.

In SAP, the SVD algorithm  for a structured Hankel matrix  is computed via PROPACK \cite{La04}. 
PROPACK provides a general framework to compute the partial SVD of a structured matrix that denoted by $\bfA$, and
the user is required to implement the functions to compute $\bfA\bfy_1$ and $\bfA^H\bfy_2$. For a Hankel matrix $\mathcal{H}\bfZ_1$, the function to compute $(\mathcal{H}\bfZ_1)\bfz_2$ is implemented by calculating the convolution of $\bfz_2$ and each row of $\bfZ_1$. 
Since $\sigma_{1}(\mathcal{H}\bfX^*)$ is unknown, we use $p^{-1}(\mathcal{H}\mathcal{P}_{\widehat{\Omega}}(\bfM))$ to approximate $\sigma_1(\mathcal{H}\bfX^*)$ in our experiments following the same idea in \cite{NNSAJ14,CGJ16}.
Also, during each iteration, we use the entire observed set rather than the disjoint subsets as shown in line 3 of Alg. \ref{Alg}. 
In each inner loop, instead of keeping a fixed number of iterations, SAP will jump out of the current inner loop if 
\begin{equation}
\frac{\|\mathcal{P}_{\widehat{\Omega}}(\bfX_{t+1}-\bfX_{t})\|_F}{\|\mathcal{P}_{\widehat{\Omega}}(\bfX_{t})\|_F}\le 10^{-3}
\end{equation}
before reaching the maximum iteration number, which is set  as 200. 
SAP finally terminates if $\sigma_{k+1}(\bfW_t)\le 10^{-3}$ holds.

The results in Figs. 3-10 are all obtained by averaging over $100$ independent trials for each block. We say that the trial is successful if the returned $\bfX$ satisfies that
\begin{equation}\label{eqn:success_criterion}
\frac{\|\mathcal{P}_{\widehat{\Omega}^c}(\bfX-\bfX^*)\|_F}{\|\mathcal{P}_{\widehat{\Omega}^c}(\bfX^*)\|_F}\le 10^{-2},
\end{equation}
where $\widehat{\Omega}^c$ is the complementary set of $\widehat{\Omega}$ over $\{1,2, \cdots, n_c\} \times \{1,2, \cdots, n\}$.
A white block means that all 100 trials are successful, while all trials fail in a black block.    
\begin{figure}[h]
		\centering
		\includegraphics[width=1.0\textwidth]{./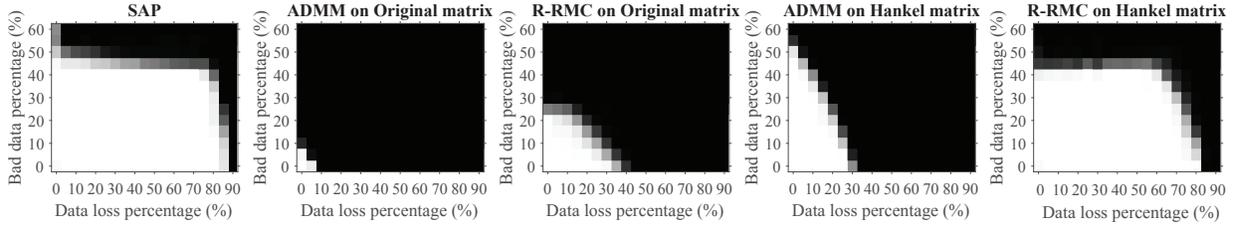}
		\centering
		\caption{Phase transition of SAP, ADMM and R-RMC under mode  M1$\times$B1} 
		\label{fig: Phase transition of SAP, ADMM and R-RMC under mode 1}
\end{figure}

\subsection{Performance of SAP}\label{section: performance of SAP}
In this experiment, we vary the rank and bad data percentage to test the performance of SAP for several combined modes, 
 M1$\times$B1, M2$\times$B2, and M2$\times$B3.   M1$\times$B1 means missing data model M1 and bad data mode B1. We only provide the simulation results of these three combined modes because
the performances of SAP are almost the same under modes M1$\times$B1, M2$\times$B1, M1$\times$B2 and M2$\times$B2. 
 The data loss percentage is fixed as $50\%$. 

  Fig.~\ref{fig: Phase transition of SAP with random outliers} shows the recovery performance  when    the angles of nonzero entries in $\bfS^*$ is randomly selected from $(0, {2\pi})$.
The $x$-axis is the bad data percentage, and the $y$-axis is the rank.
The results under M1$\times$B1 and M2$\times$B2 are included in  Fig.~\ref{fig: Phase transition of SAP with random outliers} to illustrate the similarity of SAP under these modes, and the similarity also shows that columnwise corruptions and missing entries do not affect the performance of SAP. 
Under mode M2$\times$B3, we test SAP under simultaneous and consecutive bad data. It can tolerate $9\%$ outliers for a rank-17 matrix, and $27$ out of $300$ consecutive columns are corrupted. 
\begin{figure}[h]
	\centering
	\includegraphics[width=0.9\textwidth]{./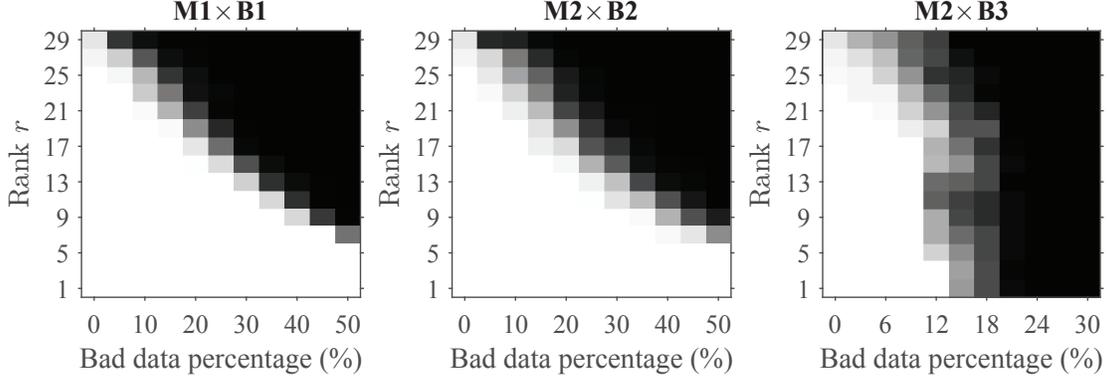}
	\centering
	\caption{Phase transition of SAP with random outliers} 
	\label{fig: Phase transition of SAP with random outliers}
\end{figure}

Fig.~\ref{fig: Phase transition of SAP with outliers restricted in  Quadrants I}   shows the recovery performance when the angles of nonzero entries in $\bfS^*$ is randomly selected from $(0, {\pi}/{2})$ such that both the real and imaginary parts of $\bfS^*$ are   positive.  Comparing  Figs.~\ref{fig: Phase transition of SAP with random outliers} and \ref{fig: Phase transition of SAP with outliers restricted in  Quadrants I},   one can see that SAP performs very similar when the corruptions have random signs and when the corruptions have positive signs. The recovery performance with random signs is slightly better in all three modes.

\begin{figure}[h]
	\centering
	\includegraphics[width=0.9\textwidth]{./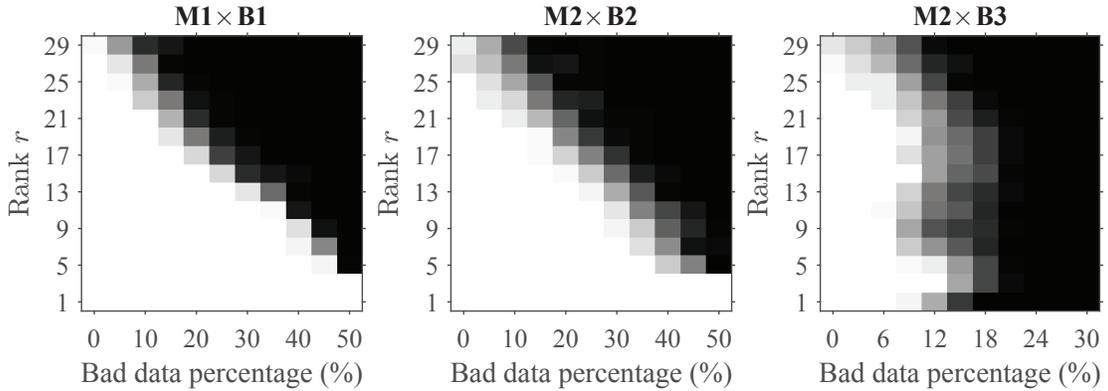}
	\centering
	\caption{Phase transition of SAP with outliers restricted in  quadrant  I} 
	\label{fig: Phase transition of SAP with outliers restricted in  Quadrants I}
\end{figure}
%
 
\subsection{Comparison with Existing RMC Methods}
We compare SAP with two other RMC methods to recover $\bfX^*$ from $\mathcal{P}_{\widehat{\Omega}}(\bfM)$. One is R-RMC \cite{CGJ16}, and the other is the convex relaxation of \eqref{eqn:OP_partial_observation} by relaxing rank and $\ell_0$-norm to the approximated convex nuclear norm and $\ell_1$-norm, and the convex optimization is solved by  Alternating Direction Method of Multipliers (ADMM) \cite{CLMW11}.
\footnote{We downloaded the codes from \textit{https://github.com/andrewssobral} for R-RMC 
	and  \textit{https://github.com/dlaptev/RobustPCA} for ADMM}
Under M1$\times$B1, we apply ADMM and R-RMC on both $\mathcal{P}_{\widehat{\Omega}}(\bfM)$ and the Hankel matrix $\mathcal{H}\big(\mathcal{P}_{\widehat{\Omega}}(\bfM)\big)$.
 Since ADMM and R-RMC cannot tolerate columnwise data losses or corruptions, they can not recover $\bfX^*$ under M2$\times$B2 and M2$\times$B3. Hence, we only test ADMM and R-RMC on $\mathcal{H}\mathcal{P}_{\widehat{\Omega}}(\bfM)$ under these two modes.  
The phase transitions in Fig. \ref{fig: Phase transition of SAP, ADMM and R-RMC under mode 1} are obtained by varying the data loss  and bad data percentages, and the rank is set as $5$.

\setcounter{figure}{5}
\begin{figure}[h]
	\centering
	\includegraphics[width=0.9\textwidth]{./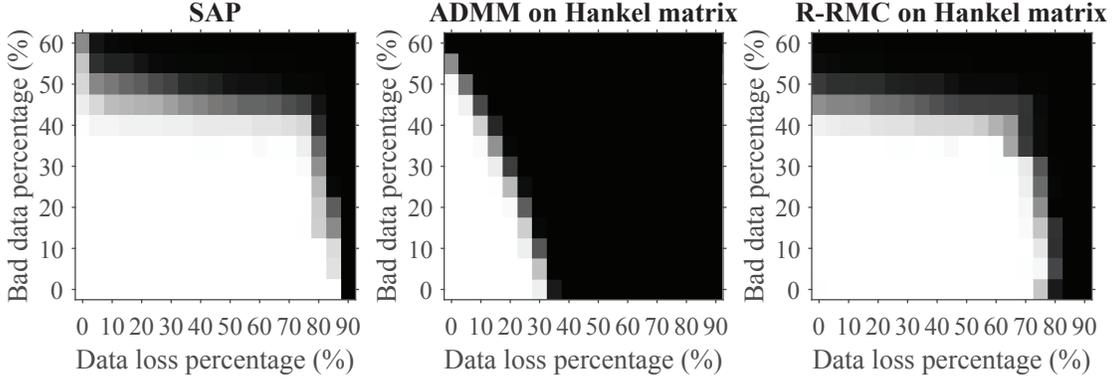}
	\centering
	\caption{Phase transition of SAP, ADMM and R-RMC on Hankel matrix under mode M2$\times$B2} 
	\label{fig: Phase transition of SAP, ADMM and R-RMC on Hankel matrix under mode 2}
\end{figure}
\begin{figure}[h]
	\centering
	\includegraphics[width=0.9\textwidth]{./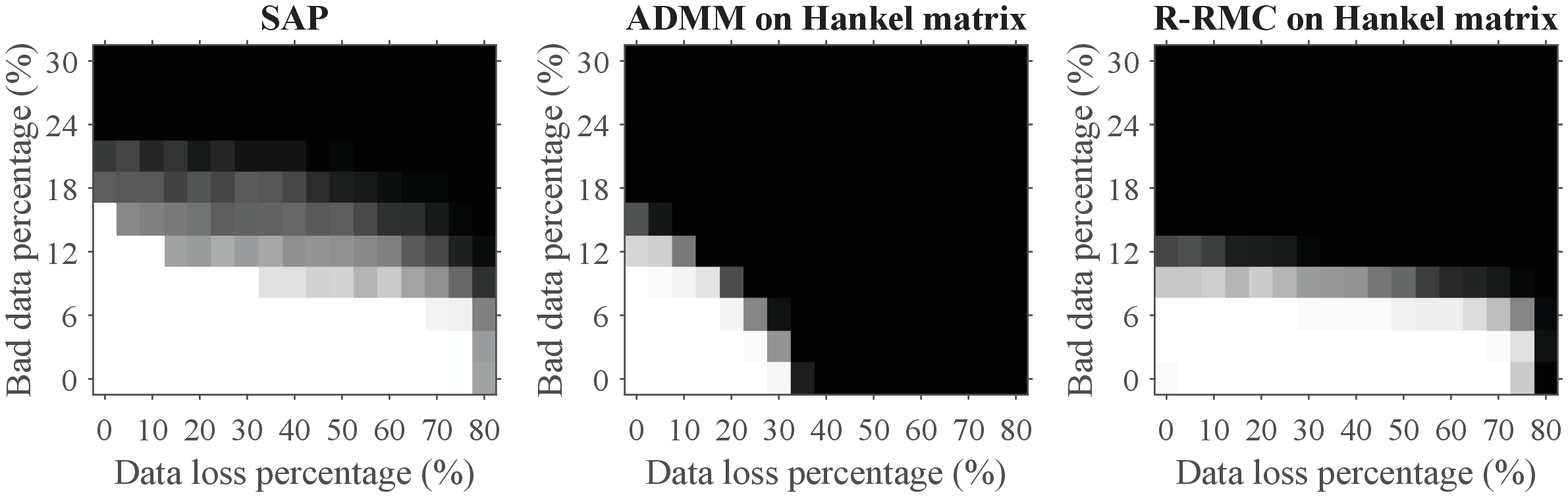}
	\centering
	\caption{Phase transition of SAP, ADMM and R-RMC on Hankel matrix under mode  M2$\times$B3} 
	\label{fig: Phase transition of SAP, ADMM and R-RMC on Hankel matrix under mode 3}
\end{figure}

From the results shown in Figs. \ref{fig: Phase transition of SAP, ADMM and R-RMC under mode 1}-\ref{fig: Phase transition of SAP, ADMM and R-RMC on Hankel matrix under mode 3}, under all these three modes, SAP performs the best among all methods. In Fig. \ref{fig: Phase transition of SAP, ADMM and R-RMC under mode 1}, ADMM and R-RMC are both applied on the original observed data matrix $\mathcal{P}_{\widehat{\Omega}}(\bfM)$, and the performances are much worse than SAP. When applying ADMM and R-RMC on the structured Hankel matrices, they both achieve higher success rates as shown in Figs. \ref{fig: Phase transition of SAP, ADMM and R-RMC under mode 1}, \ref{fig: Phase transition of SAP, ADMM and R-RMC on Hankel matrix under mode 2} and \ref{fig: Phase transition of SAP, ADMM and R-RMC on Hankel matrix under mode 3}. However, under modes M1$\times$B1 and M2$\times$B2, ADMM can only handle up to $30\%$ data loss even on the structured Hankel matrix, while SAP can still recover the data matrix under $80\%$ data loss. R-RMC performs slightly worse than SAP when applying on the structured Hankel matrix in modes 1 and 2, but SAP obtains a much larger rate of success in mode M2$\times$B3.

Moreover, SAP is significantly faster than ADMM and R-RMC on structured Hankel matrix as shown in Fig. \ref{fig: computational time of SAP, R-RMC}. We vary number of columns $n$ from $2000$ to $8000$ with a step size of $1000$, and the results are averaging over 100 successful independent trials for each $n$. Since the computational complexities of all these methods depend linearly on $n_c$, we keep $n_c=1$. 
The rank is fixed as $5$, and $n_1$ is set as $n/2$ throughout the experiments. The size of the  Hankel matrix is approximately $\frac{n}{2}\times\frac{n}{2}$.  We consider the mode of M1$\times$B1 where the locations of both bad data and miss data are generated randomly,
and the bad data percentage is set as $20\%$. Since the computational complexity of R-RMC depends on   the size of observed set, we study both $50\%$ and $95\%$ data loss percentages. The computational time of ADMM with $95\%$ data loss is not included since it does not converge in this setting. 
\begin{figure}[h!]
	\centering
	\includegraphics[width=0.9\textwidth]{./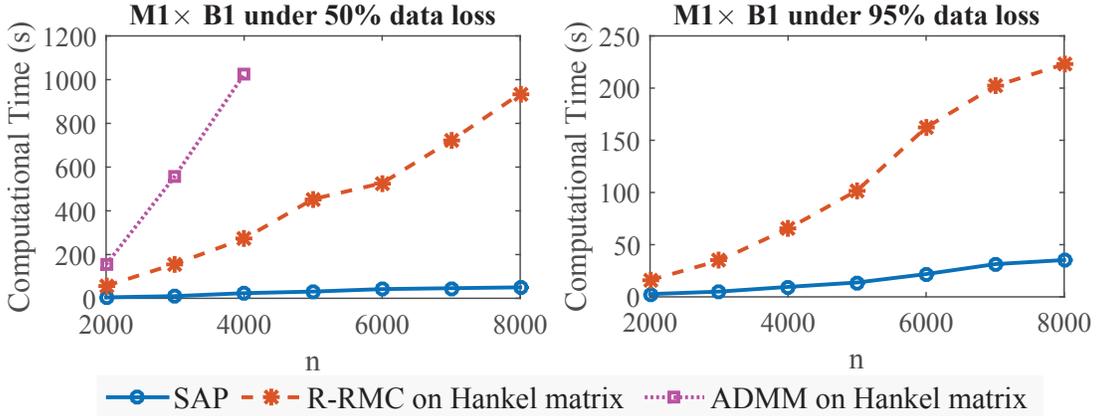}
	\centering
	\caption{Computational time of SAP, ADMM and R-RMC  on the structured Hankel matrix} 
	\label{fig: computational time of SAP, R-RMC}
\end{figure}

The computational time of SAP increases the least as the matrix size increases among all the methods. The convex method ADMM is the slowest with $50\%$ data loss.
ADMM takes over $1000$ seconds to recover the Hankel matrix of size $2000\times 2000$. R-RMC takes around $935$ seconds to recover the Hankel matrix of size $4000\times 4000$. 
With $95\%$ data loss, the computational time of R-RMC decreases by applying fast algorithms to compute the sparse matrix multiplication. 
It takes much more time than SAP.
 For example, SAP takes less than $40$ seconds to recover the Hankel matrix of size $4000\times 4000$, while R-RMC takes around $227$ seconds in the same setting. 

\subsection{Comparison with AM-FIHT in MC}
In this experiment, we compare SAP with AM-FIHT \cite{ZHWC18} to solve MC problem. We do not include other MC methods, such as SVT, because AM-FIHT is demonstrated to outperform other methods in both recovering accuracy and computational time \cite{ZHWC18}. 
We fix rank as 5. Since the number of observed entries for successful recovery depends on the conditional number $\kappa$, we consider both well-conditioned matrices, where $\kappa$ is small, and ill-conditioned matrices, where $\kappa$ is large. To generate a well-conditioned matrix, we just follow the same setup for generating $\bfX^*$ in the previous experiments. To generate a ill-conditioned matrix, we enlarges the amplitude of the first sinusoid $d_{1,i}$ by a factor of $r$ in all channels.

\begin{figure}[h]
	\centering
	\includegraphics[width=0.8\textwidth]{./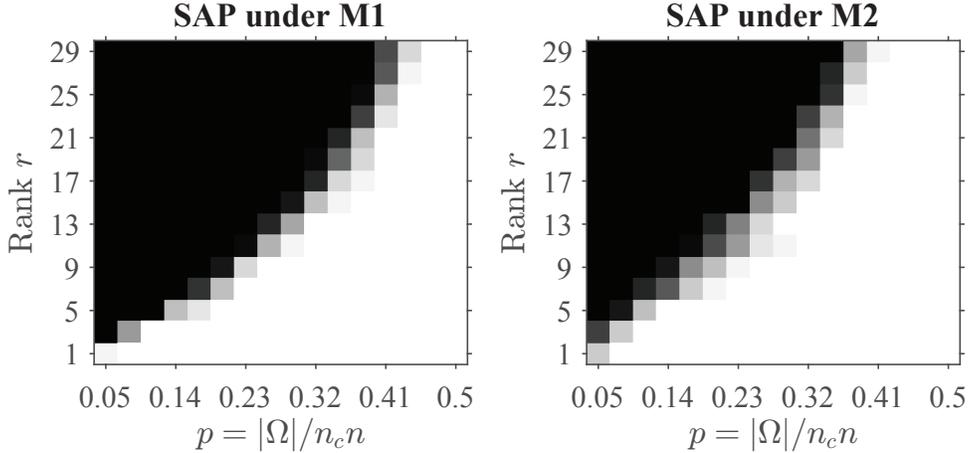}
	\centering
	\caption{Phase transition of SAP for ill-conditioned matrix} 
	\label{fig: Phase transition of SAP for ill-conditioned matrix}
\end{figure}

\begin{figure}[h]
	\centering
	\includegraphics[width=0.8\textwidth]{./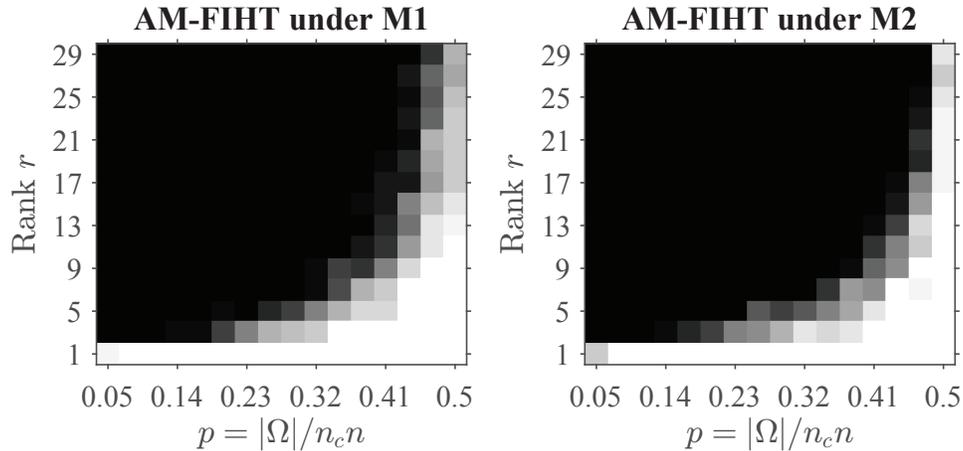}
	\centering
	\caption{Phase transition of AM-FIHT for ill-conditioned matrix} 
	\label{fig: Phase transition of AM-FIHT for ill-conditioned matrix}
\end{figure}

Fig. \ref{fig: Phase transition of SAP for ill-conditioned matrix} shows the performance of SAP when recovering ill-conditioned matrices. 
When the matrix is well-conditioned, both SAP and AM-FIHT perform very similarly. Moreover, SAP performs similarly on both well-conditioned and ill-conditioned matrices. 
This verify our result in \eqref{eqn:num_measurements_theoretical_bound} that the performance of SAP does not depend on $\kappa$.
We do not include the results of SAP and AM-FIHT in well-conditioned matrices because they are both similar to Fig. \ref{fig: Phase transition of SAP for ill-conditioned matrix}. When the matrix is ill-conditioned, AM-FIHT is much worse than SAP.

\section{Summary}
The multi-channel low-rank Hankel matrix naturally characterizes the correlations among columns of a matrix in addition to the low-rankness. Exploiting the low-rank Hankel structure, we develop a non-convex approach to recover the low-rank matrix  from partial observations in this chapter, even when a constant fraction of the columns are all corrupted simultaneously and consecutively. The proposed algorithm converges to the ground-truth matrix linearly. The required number of observations is significantly smaller than all the existing bounds for robust matrix completion. Our method applies to power system monitoring, MRI imaging, and array signal processing.  
 
\chapter{\uppercase{Training Convolutional Neural Networks with Generalizability Guarantees: A One-hidden-layer Case}}\label{chapter: 3}
\blfootnote{Portions of this chapter previously appeared as: S.~Zhang, M.~Wang, J.~Xiong, S.~Liu, and P.-Y. Chen, ``Improved linear
  convergence of training CNNs with generalizability guarantees: A
  one-hidden-layer case,'' \emph{IEEE Trans. Neural Networks Learn. Syst.}, vol.~32, no.~6, pp. 2622--2635, Jul. 2020.}
\section{Introduction}
Neural networks, especially convolutional neural networks (CNNs), have demonstrated superior performance in machine learning for  image classification \cite{KSHE12} and recognition \cite{LGTB97}, natural language processing \cite{CW08}, and   strategic game program \cite{SHA16}. Compared with fully connected neural networks,  CNNs   require fewer  coefficients and can better capture local features \cite{LBBH98}, and thus perform well in applications like image and video processing.


Learning a neural network needs to find appropriate parameters for  the hidden layers  using the training data  and is achieved by minimizing a non-convex  empirical   loss function  over the choices of the model parameters. 
The non-convex learning problem is usually solved by a first-order gradient descent (GD) algorithm. The convergence to the global optimal, however, is not guaranteed naturally due to the existence of spurious local minima.
Another major hurdle to the widespread acceptance of deep learning is the lack of analytical
performance guarantees about whether the parameters learned from the training data perform well on the
testing data, i.e., the generalizability of the learned model.
A learned model generalizes well to the testing data provided that it is a global minimizer of the population
loss function, which takes the expectation over the distribution of testing samples. Since the distribution
is unknown, one minimizes the empirical loss function of the training data assuming that the training data
are drawn from the same distribution.
 Moreover, a large number of
training samples are required to obtain a network model with powerful feature representation capability
\cite{DDSL09}, while the method may perform poorly when the number of training samples is small \cite{CJLJGP16}. The theoretical
characterization of the required size of the  training data for a given network architecture is vastly unavailable.

To analyze the learning performance, one line of research focuses on the over parameterized case  that the number of parameters in the   neural network is larger than the  number of training samples \cite{BE02,HRS16,KMNST16,LSS14,NBMS17,RHW88,SJL18,ZLL19}. In particular, the optimization problem has no spurious local minima \cite{LSS14,ZBHRV16,SJL18}, and GD methods can indeed find the global minimum of the empirical loss function. 
Nevertheless, the over-parameterized models may experience overfitting issues in practice \cite{YS19,ZBHRV16}. Moreover, when over-parameterized,  there is no guarantee by VC-dimension learning theory that the empirical loss function is close to the population loss and thus, the generalizability of the learned model to the testing data is unknown. Ref.~\cite{ZLL19} develops a new analysis tool to explore the generalizability under over-parameterization assumption. The convergence rate provided by  \cite{ZLL19} is sub-linear, and the sizes of neural networks increase as a polynomial function of the inverse of the desired testing error, which implies a high computational cost. Moreover, the  training error and the generalization error are analyzed separately,  and it is not clear if both a small training error and a small generalization error can be achieved simultaneously. 

Refs.~\cite{LB18,WGGC19} study the convergence to the global optimal for shallow neural networks when the data is linearly separable. Assuming the Rectified Linear Unit (ReLU)  activation function and the hinge loss function, 
ref.~\cite{WGGC19} can detect all the spurious local minima and saddle points, and the generalization error of the learned model approaches zero when the number of samples goes to infinite. 
However, if the data are linearly separable, simple algorithms, such as Perceptron \cite{R58}, can find a classifier in finite steps. Moreover, the detection method of the spurious local minima and saddle points in ~\cite{WGGC19} only apply 
the ReLU activation function and hinge loss function, and the method does not extend to other activation functions and loss functions.

One recent line of research assumes 
the existence of a ground-truth model    that maps the input data to the output data. Then the set of the ground-truth model parameters is a global minimizer of both the population and the empirical risk functions. The learning problem can be viewed as a model estimation problem.  
If the parameters are accurately estimated,  the generalizability to the testing data is guaranteed. 
This chapter follows this line of research. 

To simplify the analysis, one standard trick in this line of research is to assume that the number of input data is infinite so that the empirical loss function is simplified to the population loss function that is easier to analyze. 
Most existing  theoretical results are centered on one-hidden-layer shallow neural networks as the analyses quickly become intractable when the number of layers increases.   The input data are usually assumed to follow the Gaussian distribution   \cite{Sh18}  or some distributions that are rationally invariant \cite{DLT17}.
Refs. \cite{BG17,DLTPS17,Tian17} analyze   the landscape of  the population loss function 
of a simple  one-hidden-layer neural network with only one or two nodes  
 and show   that there exists a considerably large   convex  region near the global optimum. Then a  random initial point lies in this region with a constant probability, and gradient descent algorithms converge to the global minimum.   
 This result does not easily generalize to general neural networks  as  spurious local minima are fairly common for neural networks with even one hidden layer but multiple nodes \cite{SS17}.
Some works \cite{GLM17,LY17,LSL18} seek to obtain a good optimization landscape through changing the neural network structure. Ref.~\cite{LY17} adds an identity mapping   after the hidden layer to improve the convergence of GD algorithm.   An additional regularization term is added to the loss function in \cite{GLM17} such that the ground-truth parameters are still close to the global minimum, and spurious local minima are excluded.  
  An exponential node is added in each layer  of an arbitrary neural network such that all local minima are global minima  \cite{LSL18}. {Another work \cite{GKM18} developed a new iterative algorithm named Convotron, which applied a modified gradient desecnt update in each iteration and does not require initialization.}

In the practical case of a finite number of samples, the nice properties of the population loss function do not directly generalize to the empirical loss function. Some recent works study the training performance with a finite number of samples \cite{WDS19,FCL20,ZYWG18,ZSD17,ZSJB17}. 
If the number of samples is greater than a certain threshold, referred to the sample complexity, ref. \cite{WDS19} shows that the iterates converge to the ground-truth parameters for one-hidden-layer neural networks. However, the sample complexity is sub-optimal as it is a high order polynomial with respect to the dimension of the input data.
With the tensor initialization method \cite{ZSJB17},  GD algorithms are proved to converge to the ground-truth parameters linearly in one-hidden-layer neural networks, and the sample complexity is   nearly linear in the dimension of the input data \cite{FCL20,ZYWG18,ZSD17,ZSJB17}. However, the analyses in \cite{FCL20,ZSD17,ZSJB17} are limited to smooth activation functions {and exclude} the widely used   non-smooth activation function, ReLU. 
Among them, only ref.~\cite{ZYWG18} studies the ReLU activation function, 
but focuses on fully connected neural networks. 
Ref.~\cite{ZYWG18} can only guarantee the convergence to the ground truth up to some nonzero estimation error, even when the data are noiseless.  

The majority of the above works assume that data are noiseless, which may not be realistic in practice. 
	Only \cite{GLM17} and \cite{ZYWG18}   consider the cases that the output data contain additive noise that is independent of the input.   The noise is assumed to be zero mean in \cite{GLM17}, and the authors analyze the stochastic gradient descent through expectation. Thus, the noise does not affect their analyses and results. 
The result in \cite{ZYWG18} guarantees  the convergence of GD provided that the initialization  is sufficiently close to the ground-truth parameters, but no discussion is provided about whether the initialization in \cite{ZYWG18} satisfies this assumption or not. 


All the aforementioned works analyze standard GD algorithms. It is well known that  Accelerated Gradient Descent (AGD) methods such as Nesterov's accelerated gradient (NAG) method \cite{Nesterov13}  and Heavy ball method \cite{P87}  converge faster than vanilla GD. { However, the analyses for GD   do not generalize directly to AGD  because of the additional  momentum term introduced in AGD. 
Only refs. \cite{SMDH13} and \cite{YLL16} explore the  numerical performance of AGD  in  neural networks. No theoretical analysis  of AGD is reported in \cite{SMDH13}.  Ref.~\cite{YLL16} analyzes AGD from a general  optimization perspective, and it is not clear whether 
the   neural network learning problem satisfies the   assumptions in \cite{YLL16}. 
}

In this chapter, we provide novel contributions to the theoretical analyses of neural networks in three aspects. First, in this chapter, we provide the first theoretical analysis of AGD methods in learning neural networks. We prove analytically that the AGD method can converge to the ground-truth parameters linearly, and its convergence rate is faster  than vanilla GD. Second, it is the first work that explicitly proves the convergence of the proposed learning algorithm     to the ground-truth parameters  (or nearby) when the { data contain noise}. 
We characterize the relationship between the learning accuracy and the noise level quantitatively.  Our error bound is much tighter than that in \cite{ZYWG18}, and \cite{ZYWG18} makes assumptions about the initialization without any justification. 
In the special case of noiseless data,  our parameter estimation is exact, while the method in \cite{ZYWG18} is not.  Third,  it  provides the first tight generalizability  analysis of the   widely used   convolutional neural networks with the nonsmooth ReLU activation functions. Specifically, we prove that for one-hidden-layer non-overlapping convolutional neural networks, if initialized using the  tensor method, and the number of samples exceeds our characterized sample complexity,  both GD and ADG converge to a global minimum linearly    up to the noise level. Our sample complexity is order-wise optimal with respect to the dimension of the node parameters. Our estimation error bound of the ground-truth parameters is much tighter than a direct application of the existing results for  fully connected neural networks such as  \cite{ZYWG18} to CNN.   

\section{Problem Formulation}
	Following \cite{ZSD17}, we consider the regression setup in this chapter as follows.
	Given $N$ input data $\bfx_n\in \mathbb{R}^{p}$, $n=1,2,\cdots, N$, that are independent and identically distributed (i.i.d.) from the standard Gaussian distribution $\mathcal{N}({\boldsymbol{0}},\bfI_{p\times p})$, the resulting outputs $y_n\in\mathbb{R}$, $n=1,2,\cdots, N$, are generated from $\{\bfx_n\}_{n=1}^{N}$ by a one-hidden-layer non-overlapping convolutional neural network shown in Fig. \ref{fig:CNN}. 
The hidden layer has $K$ nodes. 
We use the vector $\bfw^*_{j}\in \mathbb{R}^{d}$ to denote the weight parameters for the $j$-th node in the hidden layer and define the weight matrix $\bfW^*=\begin{bmatrix}
	\bfw^*_1,& \bfw^*_2,& \cdots,&\bfw^*_K \end{bmatrix}\in \R^{d\times K}$. 
	{Followed by the hidden layer, there is a pooling layer with ground-truth parameters $\bfv^*\in \mathbb{R}^{d}$.}
	 We assume $K<d$ throughout this chapter because $K$ is the constant, while $d$ increases as the dimension of the input data increases. 
		 $\sigma_{i} = \sigma_{i}(\bfW^*)$ denotes the $i$-th largest singular value of $\bfW^*$. We 
		define $\kappa = \sigma_{1}(\bfW^*)/\sigma_{K}(\bfW^*)$ as the conditional number of $\bfW^*$ and $\gamma=\displaystyle\Pi_{j=1}^{K}\big(\sigma_{j}(\bfW^*)/\sigma_{K}(\bfW^*)\big)$.
	\begin{figure}[h]
		\centering
		\includegraphics[width=0.65\linewidth]{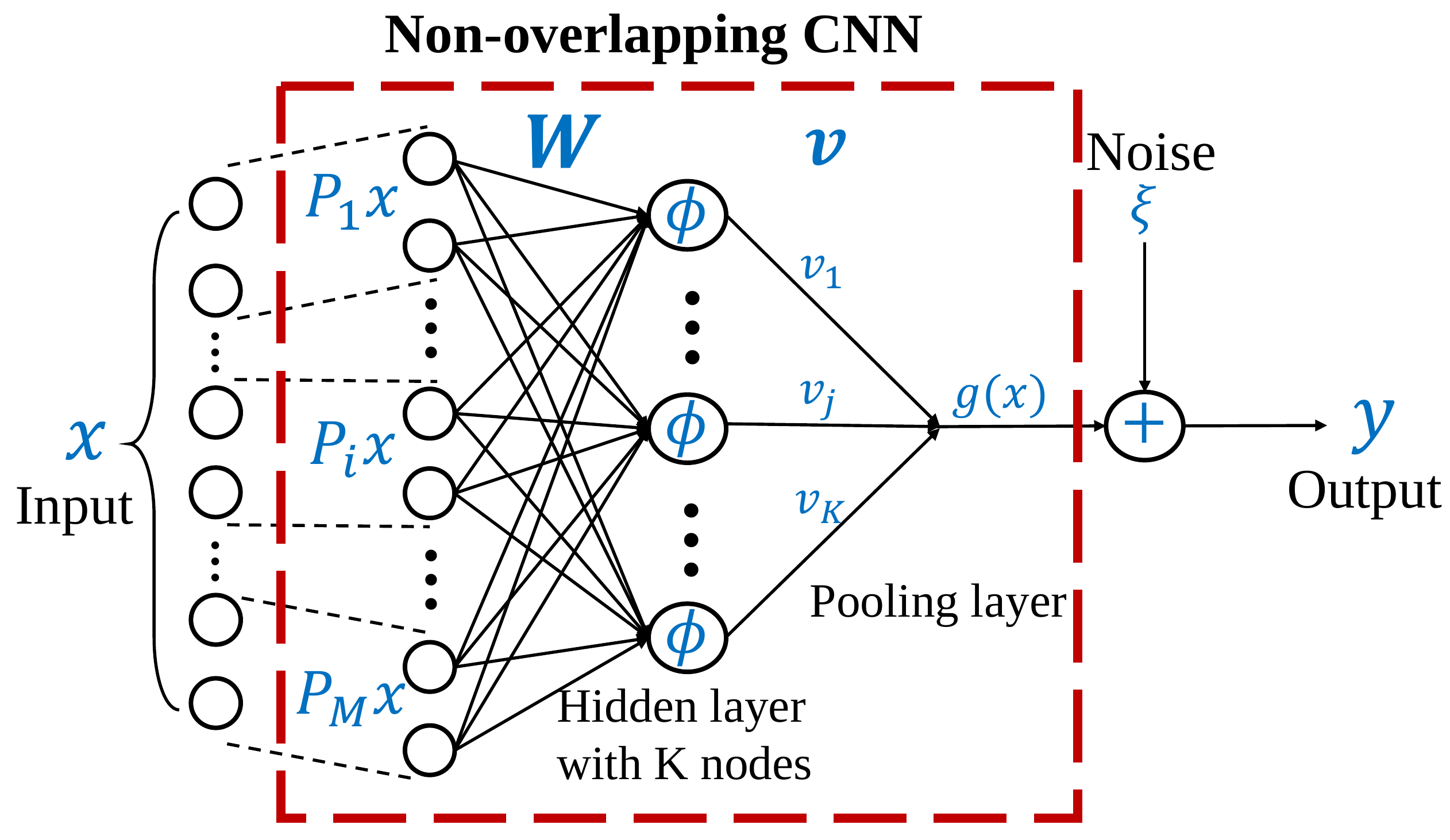}
		\centering
		\caption{One-hidden-layer non-overlapping CNN}
		\label{fig:CNN}
	\end{figure}
	
Each input data $\bfx_n$ is partitioned into $M$ non-overlapping patches, denoted by $\bfP_i\bfx_n\in\mathbb{R}^d$, $i=1,\cdots,M$.  
  $\bfP_i\in \mathbb{R}^{d\times p}$, $ i=1,\cdots,M$,  are a series of matrices that satisfy the following properties: (1) there exists one and only one non-zero entry with value $1$ in each row of $\bfP_i$; (2) $\langle\bfP_{i_1}, \bfP_{i_2} \rangle=0$ for $i_1\neq i_2$.
  \footnote{  Such requirement on $\bfP_i$ guarantees the independence of each patches and will be used in the proofs.}
   A simple   example of $\{\bfP_i\}_{i=1}^{M}$ is 
	\begin{equation*}
	\begin{split}
		&\bfP_i
		=\begin{bmatrix}
		\smash{\underbrace{\begin{matrix}{\boldsymbol{0}}_{d\times d}&\cdots&{\bf0}_{d\times d}\end{matrix}}_{\displaystyle  (i-1)  \text{ submatrices}}}
		&\begin{matrix}
			\bfI_{d\times d}
		 \end{matrix}
		&\smash{\underbrace{\begin{matrix}{\boldsymbol{0}}_{d\times d}&\cdots&{\bf0}_{d\times d}\end{matrix}}_{\displaystyle (M-i) \text{ submatrices}}}
		\end{bmatrix}.
	\end{split}
	\end{equation*}
\vspace{2mm}\\
	The output $y_n$ can be written as
	\begin{equation}\label{eqn: major_equ}
	y_n=g(\bfx_n)+\xi_n=\sum_{j=1}^{K}\sum_{i=1}^{M}v^*_j\phi({\bfw^*_j}^T\bfP_i\bfx_n) + \xi_n
	\end{equation}
	for $1\le n \le N$, where $\xi_n$ is the additive stochastic noise.

	Throughout this chapter, we assume bounded noise  with zero mean and  use  $|\xi|$ to denote the upper bound such that $|\xi_n|\leq|\xi|$ for all $n$. In practice, the mapping from the input to output data may not be  modeled exactly by a neural network due to the random fluctuations or measurement errors in the data. The additive  noise   better characterizes the relations in real datasets.
	
	The activation function $\phi(z)=\max\{z, 0 \}$ is the ReLU function, which is 
	widely used in various applications \cite{GBB11,HS01,YYG15,MHN13}.
	Note that if the activation function is homogeneous, such as ReLU, one can assume $v_j^*$ to be either $+1$ or $-1$ without loss of generality. That is 
	because $v_j^*\phi({\bfw_j^*}^T\bfP_i \bfx_n) = \text{sign}(v_j^*)\phi(|v_j^*|{\bfw_j^*}^T\bfP_i \bfx_n)$ for a homogeneous $\phi$. We can just let
	$\widetilde{\bfw}^*_j =|v_j^*|\bfw^*_j$  and    $\widetilde{v}^*_j=\text{sign}(v^*_j)$ and use $\{\widetilde{\bfw}^*_j\}_{j=1}^K$ and $\{\widetilde{v}^*_j\}_{j=1}^K$ as ground-truth parameters equivalently. Therefore,  we  assume $v_j^*\in\{+1,-1 \}$ for any $1\le j\le K$ throughout the chapter.	

	Given any estimated {$\bfW\in \R^{d\times K}$ and $\bfv\in \R^{K} $} of the weight matrix {$\bfW^*$ and $\bfv^*$}, the empirical squared loss function\footnote{{Besides the mean squared error, another  choice of the loss function, especially in classification problems, is the cross entropy, see. e.g.,   \cite{FCL20}.}} of the training set $\mathcal{D}=\{\bfx_n, y_n \}_{n=1}^N$ is defined as 
	{\begin{equation}\label{eqn: sample_risk}
	\begin{split}
	\hat{f}_{\mathcal{D}}(\bfW,\bfv)
	=&\frac{1}{2N}\sum_{n=1}^{N}\Big(\sum_{j=1}^{K}\bfv_j\sum_{i=1}^{M}\phi(\bfw_j^T\bfP_i\bfx_n)-y_n\Big)^2.
	\end{split}
	\end{equation}}Our goal is to estimate the ground-truth weight matrix { $\bfW^*$ and $\bfv^*$} via solving the following problem:
	{ \begin{equation}\label{eqn: optimization_problem}
		 \begin{split}
		 &\min_{\bfW\in \mathbb{R}^{d\times K},\bfv\in \mathbb{R}^{K}}:\quad \hat{f}_{\mathcal{D}}(\bfW,\bfv).
		 \end{split}
	\end{equation}}Clearly  { ($\bfW^*$, $\bfv^*$)} is a global minimizer  to \eqref{eqn: optimization_problem} when measurements are noiseless, i.e., $\xi_n=0$ for all $n$.  
	However, 
	\eqref{eqn: optimization_problem} is a non-convex optimization problem and is not easy to solve. 

\section{Algorithm and Theoretical Results}
 We propose to solve  the non-convex problem \eqref{eqn: optimization_problem} via 
 the Heavy Ball method {\cite{P87}}.
 {The algorithm is initialized via  the tensor method \cite{ZSJB17}. Although the tensor initialization   is designed for fully connected neural networks in \cite{ZSJB17}, we can extend it to non-overlapping convolutional neural networks with minor changes. $\hat{\bfv}$ is estimated through the tensor initialization. 
 	During each iteration, we  update $\bfW$ through the AGD algorithm.} 
 Compared with the vanilla gradient descent, in the $(t+1)$-th iteration, an additional   momentum term, denoted by $\beta(\bfW^{(t)}- \bfW^{(t-1)})$, is added to the update, where $\bfW^{(t)}$ is the estimation in iteration $t$.  
 The momentum represents the direction of the previous iterations. 
Hence, besides moving along the gradient descent direction with a step size of  {$\eta$}, $\bfW^{(t)}$ is further moved along the direction of  previous steps with a parameter of $\beta$. 
During each iteration, a fresh subset of data is applied to estimate the gradient descent. Such disjoint subsets guarantee the independence of $\hat{f}_{\mathcal{D}_t}$ over the iterations. This is a standard analysis technique \cite{ZSD17,ZSJB17} and  not necessary in numerical experiments. The initialization algorithm is summarized  in Section \ref{Sec:IIIA3}, and Algorithm \ref{Alg4} summarizes our proposed algorithm
to solve \eqref{eqn: optimization_problem}. 
\begin{algorithm}[h]
	\caption{Accelerated Gradient Descent Algorithm with Tensor Initialization}\label{Alg4}
	\begin{algorithmic}[1]
		\State \textbf{Input:} training data $\mathcal{D}=\{(\bfx_n, y_n) \}_{n=1}^{N}$, gradient step size $\eta$, momentum parameter $\beta$, and thresholding error parameter $\varepsilon$;
		\State \textbf{Initialization:} {$\bfW^{(0)}, \hat{\bfv}$} through Tensor   Initialization via Subroutine 1;
		\State {Partition} $\mathcal{D}$ into $T=\log(1/\varepsilon)$ disjoint subsets, denoted as $\{\mathcal{D}_i\}_{i=1}^T$;
		\For  {$t=1, 2, \cdots, T$}		
		\State {$\bfW^{(t+1)}=\bfW^{(t)}-\eta\nabla \hat{f}_{\mathcal{D}_t}(\bfW^{(t)},\hat{\bfv})$$+\beta(\bfW^{(t)}-\bfW^{(t-1)})$}
		\EndFor\\
		\Return $\bfW^{(T)}$ and $\hat{\bfv}$.
		
	\end{algorithmic}
	
\end{algorithm}

\subsection{Initialization via Tensor Method}\label{Sec:IIIA3}
In this section, we first briefly introduce the tensor initialization method that is built upon Algorithm 1 in \cite{ZSJB17}. We then provide the first  theoretical performance guarantee of the tensor initialization method when the output contains noise in Lemma \ref{Thm: initialization}, while the result in \cite{ZSJB17} only applies to noiseless measurements. 

	The tensor initialization method in \cite{ZSJB17} is designed for  the fully connected neural networks. To handle the convolutional neural networks, the definitions of the high-order moments (see \eqref{eqn: M_13}-\eqref{eqn: M_33}) are modified by replacing $\bfx$ in  Definition 5.1 in \cite{ZSJB17} with $\bfP_i\bfx$. All the other steps mainly follow \cite{ZSJB17}.

 Following \cite{ZSJB17},  we define a special outer product, denoted by $\widetilde{\otimes}$. For any vector $\bfv\in \mathbb{R}^{d_1}$ and $\bfZ\in \mathbb{R}^{d_1\times d_2}$,   
 \begin{equation}
 	\bfv\widetilde{\otimes} \bfZ=\sum_{i=1}^{d_2}(\bfv\otimes \bfz_i\otimes \bfz_i +\bfz_i\otimes \bfv\otimes \bfz_i + \bfz_i\otimes \bfz_i\otimes \bfv),
 \end{equation} 
 where $\otimes$ is the outer product and $\bfz_i$ is the $i$-th column of $\bfZ$.
 Next, we pick any $i\in\{1, 2, \cdots, K\}$ and define
\begin{equation}\label{eqn: M_13}
\bfM_{i,1} = \mathbb{E}_{\bfx}\{y \bfx \}\in \mathbb{R}^{d},
\end{equation}
\begin{equation}\label{eqn: M_23}
\bfM_{i,2} = \mathbb{E}_{\bfx}\Big\{y\big[(\bfP_i\bfx)\otimes (\bfP_i\bfx)-\bfI\big]\Big\}\in \mathbb{R}^{d\times d},
\end{equation}
\begin{equation}\label{eqn: M_33}
\bfM_{i,3} = \mathbb{E}_{\bfx}\Big\{y\big[(\bfP_i\bfx)^{\otimes 3}- (\bfP_i\bfx)\widetilde{\otimes} \bfI  \big]\Big\}\in \mathbb{R}^{d\times d \times d},
\end{equation}
where $\bfz^{\otimes 3} := \bfz \otimes \bfz \otimes \bfz$, and $\mathbb{E}_{\bfx}$ is the expectation over $\bfx$.

{ 
From Claim 5.2 in \cite{ZSJB17},  there exist some known constants $\psi_i, i =1, 2, 3$, such that
\begin{equation}\label{eqn: M_1_23}
\bfM_{i,1} = \sum_{j=1}^{K} \psi_1\cdot v_j^*\|{\bfw}_{j}^*\|\cdot\overline{\bfw}_j^*,
\end{equation}
\begin{equation}\label{eqn: M_2_23}
\bfM_{i,2} = \sum_{j=1}^{K} \psi_2\cdot v_j^*\|{\bfw}_{j}^*\|\cdot\overline{\bfw}_{j}^* \overline{\bfw}_{j}^{*T},
\end{equation}\label{eqn: M_3_23}
\begin{equation}\label{eqn:M_3_v23}
\bfM_{i,3} = \sum_{j=1}^{K} \psi_3\cdot v_j^*\|{\bfw}_{j}^*\|\cdot\overline{\bfw}_{j}^{*\otimes3},
\end{equation} 
where $\overline{\bfw}^*_{j}={\bfw}_{j}^*/\|{\bfw}_{j}^*\|_2$ in \eqref{eqn: M_13}-\eqref{eqn: M_33} is the normalization of ${\bfw}_{j}^*$.}

 $\bfM_{i,1}$, $\bfM_{i,2}$ and $\bfM_{i,3}$ can be estimated through the samples $\big\{(\bfx_n, y_n)\big\}_{n=1}^{N}$, and let $\widehat{\bfM}_{i,1}$, $\widehat{\bfM}_{i,2}$, $\widehat{\bfM}_{i,3}$ denote the corresponding estimates. 
{First, we will decompose the rank-$k$ tensor $\bfM_{i,3}$ and obtain the $\{\overline{\bfw}^*_{j}\}_{j=1}^K$. By applying the tensor decomposition method \cite{KCL15} to $\widehat{\bfM}_{i,3}$, the outputs, denoted by $\widehat{\overline{\bfw}}_{j}^*$, are the estimations of $\{\bfs_j\overline{\bfw}^*_{j}\}_{j=1}^K$, where $s_j$ is an unknown sign.
Second, we will estimate $s_j$, $\bfv_j^*$ and $\|\bfw_j^*\|_2$ through $\bfM_{i,1}$ and $\bfM_{i,2}$. Note that $\bfM_{i,2}$ does not contain the information of $\bfs_j$ because $\bfs_j^2$ is always $1$. Then, through solving the following two optimization problem:
\begin{equation}\label{eqn: int_op3}
\begin{gathered}
\widehat{\boldsymbol{\alpha}}_1 = \arg\min_{\boldsymbol{\alpha}_1\in\mathbb{R}^K}:\quad  \Big|\widehat{\bfM}_{i,1} - \sum_{j=1}^{K}\psi_1 \alpha_{1,j} \widehat{\overline{\bfw}}^*_{j}\Big|,\\
\widehat{\boldsymbol{\alpha}}_2 = \arg\min_{\boldsymbol{\alpha}_2\in\mathbb{R}^K}:\quad  \Big|\widehat{\bfM}_{i,2} - \sum_{j=1}^{K}\psi_2 \alpha_{2,j} \widehat{\overline{\bfw}}^*_{j} {\widehat{\overline{\bfw}}_j^{*T}}\Big|,
\end{gathered}
\end{equation}
The estimation of $s_j$ can be given as $\hat{s}_j = \text{sign}(\widehat{\alpha}_{1,j}/\widehat{\alpha}_{2,j})$. Also, we know that $|\widehat{\alpha}_{1,j}|$ is the estimation of $\|\bfw_j^*\|$ and $\hat{v}_j = \text{sign}(\widehat{\alpha}_{1,j}/\bfs_j)$. Thus, 
$\bfW^{(0)}$ is given as 
 $[
 \text{sign}(\widehat{\alpha}_{2,1})\widehat{\alpha}_{1,1}\widehat{\overline{\bfw}}^*_{1}, \cdots,\\
  \text{sign}(\widehat{\alpha}_{2,K})\widehat{\alpha}_{1,K}
 \widehat{\overline{\bfw}}^*_{K}]
 $.
}


To reduce the computational complexity of tensor decomposition, one can project $\widehat{\bfM}_{i,3}$ to a lower-dimensional tensor \cite{ZSJB17}. The idea is to first estimate the subspace spanned by $\{\bfw_{j}^* \}_{j=1}^{K}$, and let $\widehat{\bfV}$ denote the estimated subspace. Then, from \eqref{eqn: M_33} and \eqref{eqn:M_3_v23}, we know that
$\bfM_{i,3}(\widehat{\bfV}, \widehat{\bfV}, \widehat{\bfV})\in\mathbb{R}^{K\times K \times K}$ is represented by 
\begin{equation}\label{eqn: M_3_v13}
\begin{split}
	 \bfM_{i,3}(\widehat{\bfV}, \widehat{\bfV}, \widehat{\bfV}) 
	=& \mathbb{E}_{\bfx} \Big\{ y\big[(\widehat{\bfV}^T\bfP_i\bfx)^{\otimes 3}
	   - (\widehat{\bfV}^T\bfP_i\bfx)\widetilde{\otimes} \bfI  \big]\Big\}\\
	=&\sum_{j=1}^{K} \psi_3(\widehat{\bfV}^T{\bfw}_{j}^*)\cdot(\widehat{\bfV}^T\overline{\bfw}_{j}^*)^{\otimes3}
\end{split}
\end{equation}
and can be estimated by training samples as well. Next, one can decompose the estimate $\widehat{\bfM}_{i,3}(\widehat{\bfV}, \widehat{\bfV}, \widehat{\bfV})$ to obtain unit vectors $\{\widehat{\bfu}_j \}_{j=1}^K \in \mathbb{R}^{K}$. Since $\overline{\bfw}^*$ lies in the subspace $\bfV$, we have $\bfV\bfV^T\overline{\bfw}_j^*=\overline{\bfw}_j^*$. Then, $\widehat{\bfV}\widehat{\bfu}_j$ is an estimate of $s_j\overline{\bfw}_j^*$. The initialization process is summarized in Subroutine 1.

 \floatname{algorithm}{Subroutine}
 \setcounter{algorithm}{0}
\begin{algorithm}[h]
	\caption{Tensor Initialization Method}\label{Alg: initia_CNN}
	\begin{algorithmic}[1]
		\State \textbf{Input:} training data $\mathcal{D}=\{(\bfx_n, y_n) \}_{n=1}^{N}$;
		\State Partition $\mathcal{D}$ into three disjoint subsets $\mathcal{D}_1$, $\mathcal{D}_2$, $\mathcal{D}_3$;
		\State Calculate $\widehat{\bfM}_{i,1}$, $\widehat{\bfM}_{i,2}$ {following \eqref{eqn: M_13}, \eqref{eqn: M_23}} using $\mathcal{D}_1$, $\mathcal{D}_2$, respectively;
		\State Obtain the estimate subspace $\widehat{\bfV}$ of $\widehat{M}_{i,2}$;
		\State Calculate $\widehat{\bfM}_{i,3}(\widehat{\bfV},\widehat{\bfV},\widehat{\bfV})$ {using \eqref{eqn: M_3_v1}} through $\mathcal{D}_3$;
		\State Obtain $\{ \widehat{\bfu}_j \}_{j=1}^K$ via {tensor decomposition method \cite{KCL15}};
		\State Obtain $\widehat{\boldsymbol{\alpha}}_1$, $\widehat{\boldsymbol{\alpha}}_2$ by solving  optimization problem $\eqref{eqn: int_op3}$;
		\State \textbf{Return:}  $\bfw^{(0)}_j=\text{sign}(\widehat{\alpha}_{2,j})\widehat{\alpha}_{1,j}\widehat{\bfV}\widehat{\bfu}_j$ and $\hat{\bfv}=\text{sign}(\widehat{\boldsymbol{\alpha}}_{2}) $, {$j=1,...,K$.}
	\end{algorithmic}
\end{algorithm}

\subsection{Parameter Estimation Through Accelerated Gradient Descent}
 {In this part, we provide the major theoretical results. 
 Lemma \ref{Thm: initialization} provides the first error bound of the initialization using the tensor initialization method in the presence of noise. 
Based on the tensor initialization method,
 Theorem \ref{Thm: major_thm} summarizes the recovery accuracy of $\bfW^*$ using Algorithm  \ref{Alg4}.
}
{
\begin{lemma}\label{Thm: initialization}
	Assume the noise level $|\xi|\le KM\sigma_{1}$ and the number of samples 
	\begin{equation}
	    N\ge C_1 \kappa^8M^2Kd\log^4 d    
	\end{equation}
	for some large positive constant $C_1$, the tensor initialization method in Subroutine 1 outputs 	$\hat{\bfv}$, $\bfW^{(0)}$  such that
	\begin{equation}
	\hat{\bfv} = \bfv^*,
	\end{equation}
	and
	\begin{equation}\label{eqn: ini}
	\|\bfW^{(0)} -\bfW^* \|_2\le C_2 \kappa^6 \sqrt{\frac{K^4d\log d}{N}}(KM\sigma_{1}+|\xi|)
	\end{equation}
	with probability at least $1-d^{-10}$.
\end{lemma}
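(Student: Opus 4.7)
The plan is to follow the three-step template of the noiseless tensor method in \cite{ZSJB17}, but adapt it on two fronts: (i) replace $\bfx$ by the patch $\bfP_i\bfx$ in all moment identities to account for the convolutional structure, and (ii) decompose each empirical moment as the sum of a \emph{signal} term plus a \emph{noise} term proportional to $\xi_n$, then bound each separately. Concretely, I would write $\widehat{\bfM}_{i,\ell} = \widehat{\bfM}_{i,\ell}^{\mathrm{sig}} + \widehat{\bfM}_{i,\ell}^{\mathrm{noise}}$ for $\ell=1,2,3$, where $\widehat{\bfM}_{i,\ell}^{\mathrm{sig}}$ uses $g(\bfx_n)$ in place of $y_n$ and $\widehat{\bfM}_{i,\ell}^{\mathrm{noise}}$ is $N^{-1}\sum_n \xi_n\cdot(\text{polynomial in }\bfP_i\bfx_n)$. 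The first piece plugs into the noiseless concentration analysis of \cite{ZSJB17}; the second piece is new and is where the $|\xi|$ dependence in \eqref{eqn: ini} comes from.

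\textbf{Population moments and signal concentration.} Since $\bfP_i$ has mutually orthogonal rows each of which is a standard basis vector, $\bfP_i\bfx_n\sim\mathcal{N}(\bfzero,\bfI_d)$ and, crucially, $\bfP_i\bfx_n$ is independent of $\bfP_{i'}\bfx_n$ for $i\neq i'$. Using Stein's identity one derives the population identities \eqref{eqn: M_1_23}--\eqref{eqn:M_3_v23}: for $i\neq i'$ the cross terms $\mathbb{E}[\phi(\bfw_j^{*T}\bfP_{i'}\bfx)\cdot f(\bfP_i\bfx)]$ vanish because the two arguments are independent and $\phi$ has zero odd-order Hermite coefficients appropriately, so only the $i'=i$ patch survives and yields the single-channel identities with known constants $\psi_1,\psi_2,\psi_3$. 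For the signal concentration $\|\widehat{\bfM}_{i,\ell}^{\mathrm{sig}}-\bfM_{i,\ell}\|$, I would apply matrix Bernstein for $\ell=1,2$ and its tensor analog (after truncating the high-probability tails of $\|\bfP_i\bfx_n\|$ at $O(\sqrt{d\log d})$) for $\ell=3$, following Lemmas B.1--B.3 of \cite{ZSJB17}, picking up a factor of $M$ in the variance because $g(\bfx_n)$ is a sum over $M$ patches. This yields signal error of order $M\sigma_1\sqrt{d\log d/N}$ (up to $K$-factors).

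\textbf{Noise-part concentration.} For the noise piece, each summand in $\widehat{\bfM}_{i,\ell}^{\mathrm{noise}}$ is $\xi_n$ times a degree-$\ell$ Hermite-type polynomial of the standard Gaussian $\bfP_i\bfx_n$. Since $\xi_n$ is independent of $\bfx_n$, has zero mean, and is bounded by $|\xi|$, a sub-exponential Bernstein bound (with truncation at $O(\sqrt{d\log d})$ of $\|\bfP_i\bfx_n\|$) gives $\|\widehat{\bfM}_{i,\ell}^{\mathrm{noise}}\|\lesssim |\xi|\sqrt{d\log d/N}$ with failure probability at most $d^{-11}$. Summing the signal and noise contributions yields the unified moment error bound $\|\widehat{\bfM}_{i,\ell}-\bfM_{i,\ell}\|\lesssim (KM\sigma_1+|\xi|)\sqrt{d\log d/N}$ after union-bounding over $\ell$.

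\textbf{Error propagation through decomposition and sign recovery.} Feeding this moment perturbation into the subspace step, Wedin's theorem on $\widehat{\bfM}_{i,2}$ gives $\|\widehat{\bfV}\widehat{\bfV}^T-\bfV\bfV^T\|\lesssim \kappa^2\cdot (KM\sigma_1+|\xi|)\sqrt{d\log d/N}/\sigma_K$, which, combined with the tensor decomposition robustness result of \cite{KCL15} applied to $\widehat{\bfM}_{i,3}(\widehat{\bfV},\widehat{\bfV},\widehat{\bfV})$, yields $\|\widehat{\bfV}\widehat{\bfu}_j-s_j\overline{\bfw}_j^*\|\lesssim \kappa^{\mathcal{O}(1)}K^{\mathcal{O}(1)}(KM\sigma_1+|\xi|)\sqrt{d\log d/N}$; tracking the $\kappa$-powers carefully through the reduction-to-orthogonal step and the robust power iteration is the main obstacle and is where the $\kappa^6$ factor is generated. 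Finally, solving the two least-squares problems \eqref{eqn: int_op3} to recover $\widehat{\boldsymbol{\alpha}}_1,\widehat{\boldsymbol{\alpha}}_2$ incurs an additional condition-number factor from inverting the Gram matrix of $\{\widehat{\overline{\bfw}}_j^*\}$, which is well-conditioned once the unit vectors are close enough. Under the sample lower bound $N\ge C_1\kappa^8M^2Kd\log^4 d$, the resulting error bound on each $\widehat{\alpha}_{2,j}$ is strictly smaller than $|\alpha_{2,j}^*|$, so $\mathrm{sign}(\widehat{\alpha}_{2,j})=v_j^*$ exactly, giving $\hat{\bfv}=\bfv^*$. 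Combining the magnitude estimate $\widehat{\alpha}_{1,j}\approx\|\bfw_j^*\|$ with the direction estimate $\widehat{\bfV}\widehat{\bfu}_j$ then produces the column-wise bound, and taking the spectral norm over the $K$ columns yields \eqref{eqn: ini}.
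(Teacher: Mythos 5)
Your strategy matches the paper's: decompose each empirical moment $\widehat{\bfM}_{i,\ell}$ into a signal part (replace $y_n$ by $g(\bfx_n)$) plus a mean-zero noise part proportional to $\xi_n$, bound the signal part by the \cite{ZSJB17} matrix-/tensor-Bernstein machinery adapted to the patch $\bfP_i\bfx_n$, bound the noise part by a separate sub-exponential Bernstein argument using independence and boundedness of $\xi_n$, then propagate the combined moment error of order $(KM\sigma_1+|\xi|)\sqrt{d\log d/N}$ through Wedin's theorem and the robustness guarantee of the tensor decomposition in \cite{KCL15}, finally reading off $\hat\bfv=\bfv^*$ from the sign of $\widehat{\boldsymbol\alpha}_2$ once the residual drops below $\sigma_K$. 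This is the same route the paper takes, and the sample-complexity and error bounds you derive are the ones stated in the lemma.

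One technical slip worth fixing before writing it out in full: the vanishing of the cross-patch terms $\mathbb{E}\big[\phi(\bfw_j^{*T}\bfP_{i'}\bfx)\cdot H_\ell(\bfP_i\bfx)\big]$ for $i'\neq i$ is \emph{not} because $\phi$ has zero odd-order Hermite coefficients; ReLU has nonzero odd Hermite coefficients, and that is precisely why $\psi_1,\psi_3\neq 0$. The cross terms vanish for the simpler reason that $\bfP_i\bfx$ and $\bfP_{i'}\bfx$ are independent (by the disjoint supports of $\bfP_i,\bfP_{i'}$), so the expectation factorizes, and the centered polynomial factor $H_\ell(\bfP_i\bfx)$ (i.e.\ $\bfz$, $\bfz\bfz^T-\bfI$, $\bfz^{\otimes 3}-\bfz\widetilde{\otimes}\bfI$) has mean zero. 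Your independence observation is the load-bearing fact here; the Hermite-coefficient remark is both unnecessary and, for ReLU, false, so it should be dropped.
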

}
\begin{theorem}\label{Thm: major_thm}
	{Let $\{ \bfW^{(t)} \}_{t=1}^T$ be the sequence generated in Algorithm \ref{Alg4} with $\eta= \frac{1}{12M^2K}$. }
	Suppose the noise level $|\xi|\le KM\sigma_{1}$ and the number of samples satisfies 
	\begin{equation}\label{eqn: sample_complexity}
		N\ge C_3 \varepsilon_0^{-2} \kappa^9 \gamma^3 M^3K^8d\log^4 d\log(1/\varepsilon)
	\end{equation} 
	 for some constants $C_3>0$ and $\varepsilon_0\in(0,\frac{1}{2})$. Then $\{ \bfW^{(t)} \}_{t=1}^{T}$ converges linearly to $\bfW^*$ with probability at least $1-K^2M^2T\cdot d^{-10}$ as  
	\begin{equation}\label{eqn: linear_convergence}
			\begin{split}
			\|	\W[t]-\bfW^*\|_2
			\le\nu(\beta)^t\|	\W[0]-\bfW^*\|_2
			+ C_4 \sqrt{\frac{\kappa^2\gamma MK^2d\log d}{N}} \cdot |\xi|,
			\end{split}
	\end{equation}
	and
	\begin{equation} \label{eqn: linear_convergence2}
			\begin{gathered}
			\|	\W[T]-\bfW^*\|_2
			\le \varepsilon \|\bfW^* \|_2 
			+ C_4 \sqrt{\frac{\kappa^2\gamma MK^2d\log d}{N}}\cdot |\xi|,
		\end{gathered}
	\end{equation}
	where $\nu(\beta)$ is the convergence rate that depends on $\beta$, and $C_4$ is some positive constant.
	Moreover, we have
	\begin{equation}\label{eqn: accelerated_rate}
	\nu(\beta)< \nu(0) \text{\quad for some small nonzero } \beta,
	\end{equation} 
	Specifically, let $\beta^* = \Big(1-\sqrt{\frac{1-\varepsilon_0}{132\kappa^2 \gamma KM}}\Big)^2$, we have
	\begin{equation}
	\begin{gathered}
	1-\frac{1-\varepsilon_0}{{132\kappa^2 \gamma KM}}\le \nu(0)\le 1-\frac{1-2\varepsilon_0}{{132\kappa^2 \gamma KM}},\\
	   \nu(\beta^*)\le 1-\frac{1-\varepsilon_0}{\sqrt{132\kappa^2 \gamma KM}}.	
	\end{gathered}
 	\end{equation}
\end{theorem}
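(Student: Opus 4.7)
Since Subroutine 1 (analyzed in Lemma \ref{Thm: initialization}) returns $\hat{\bfv}=\bfv^*$ exactly and places $\bfW^{(0)}$ inside a small ball around $\bfW^*$ whose radius is controlled by the sample size and noise level, I would treat $\bfv$ as fixed at $\bfv^*$ and reduce the proof to a local convergence analysis of the heavy-ball iteration in $\bfW$. The disjoint partition of $\mathcal{D}$ in Algorithm \ref{Alg4} makes $\mathcal{D}_t$ independent of $\bfW^{(t)}$, so I can condition on the iterate and treat the per-step stochastic gradient as a fresh sample average. The central identity is the error decomposition
\begin{equation*}
\bfW^{(t+1)}-\bfW^* \;=\; \bigl(\bfW^{(t)}-\bfW^*\bigr) - \eta\,\nabla f\bigl(\bfW^{(t)}\bigr) + \beta\bigl(\bfW^{(t)}-\bfW^{(t-1)}\bigr) - \eta\,\bfE_t,
\end{equation*}
where $f(\bfW):=\mathbb{E}\,\hat f_{\mathcal{D}}(\bfW,\bfv^*)$ is the population loss and $\bfE_t:=\nabla\hat f_{\mathcal{D}_t}(\bfW^{(t)},\bfv^*)-\nabla f(\bfW^{(t)})$ is the sampling noise, to be controlled through inductively maintaining $\bfW^{(t)}$ inside the basin.

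\textbf{Local curvature of the population loss.} The first pillar is a uniform two-sided bound on $\nabla^2 f$ inside the basin. Although $\phi=\mathrm{ReLU}$ is non-smooth pointwise, the Gaussian expectation smooths it, so $\nabla^2 f$ exists and can be written as a block Gram matrix whose entries involve expectations such as $\mathbb{E}[\phi'(\bfw_j^\top \bfP_i \bfx)\phi'(\bfw_k^\top \bfP_i \bfx)\bfP_i\bfx\bfx^\top\bfP_i^\top]$. Using the independence of patches $\bfP_i\bfx$ (which is guaranteed by the non-overlapping patch assumption on $\bfP_i$) and standard Hermite-expansion calculations as in \cite{ZSJB17,ZSD17}, I expect to establish
\begin{equation*}
\frac{\rho}{\kappa^2\gamma K}\cdot M\,\|\bfU\|_F^2 \;\le\; \mathrm{vec}(\bfU)^\top \nabla^2 f(\bfW)\,\mathrm{vec}(\bfU) \;\le\; 11 M^2 K\,\|\bfU\|_F^2
\end{equation*}
for every $\bfU$ and every $\bfW$ in a $\|\bfW-\bfW^*\|_2\le c\sigma_K/\kappa$ neighborhood, with an absolute constant $\rho>0$. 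The ratio of these bounds will be exactly the condition number $132\,\kappa^2\gamma K M$ appearing in the stated rate; tracking the explicit constants from the Hermite expansion will be the main bookkeeping task here.

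\textbf{Concentration and the heavy-ball recursion.} Next, I would bound the per-iteration noise $\bfE_t$ via a matrix-Bernstein inequality applied to the patch-indexed sums, yielding, with probability $1-K^2M^2 d^{-10}$,
\begin{equation*}
\|\bfE_t\|_2 \;\lesssim\; \sqrt{\tfrac{KM d\log d}{|\mathcal{D}_t|}}\Bigl(\sigma_1\,\|\bfW^{(t)}-\bfW^*\|_2 + |\xi|\Bigr).
\end{equation*}
Plugged into the error decomposition and combined with the Hessian bounds above, the deterministic part of the update contracts by the operator
\begin{equation*}
\bfT_\beta \;=\; \begin{pmatrix} (1+\beta)\bfI-\eta\,\bfH & -\beta\bfI \\ \bfI & \bfzero\end{pmatrix}, \qquad \bfH:=\int_0^1 \nabla^2 f\bigl(\bfW^*+\tau(\bfW^{(t)}-\bfW^*)\bigr)\,d\tau,
\end{equation*}
whose spectral radius on each eigen-coordinate $\lambda_i$ of $\bfH$ is given by the roots of $z^2-(1+\beta-\eta\lambda_i)z+\beta=0$. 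With $\eta=1/(12M^2K)$ the worst eigenvalue determines $\nu(\beta)$; standard Polyak heavy-ball tuning gives $\nu(0)\ge 1-(1-2\varepsilon_0)/(132\kappa^2\gamma KM)$ and, for $\beta^*=\bigl(1-\sqrt{(1-\varepsilon_0)/(132\kappa^2\gamma KM)}\bigr)^2$, $\nu(\beta^*)\le 1-(1-\varepsilon_0)/\sqrt{132\kappa^2\gamma KM}$, establishing both \eqref{eqn: accelerated_rate} and the explicit ranges for $\nu(0),\nu(\beta^*)$. Iterating the recursion unrolls the noise into a geometric series whose sum gives the $\sqrt{\kappa^2\gamma MK^2d\log d/N}\cdot|\xi|$ term of \eqref{eqn: linear_convergence}; \eqref{eqn: linear_convergence2} then follows by choosing $T=\log(1/\varepsilon)$ after verifying that the contracted initialization error drops below $\varepsilon\|\bfW^*\|_2$.

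\textbf{Main obstacle.} The hardest step will be the sharp lower bound on $\lambda_{\min}(\nabla^2 f)$ uniformly over the neighborhood: the $\sigma_K$-dependence (buried in $\kappa$, $\gamma$) must be tracked through the Gaussian moment calculations that mix indicator functions $\mathbf{1}\{\bfw_j^\top\bfP_i\bfx\ge 0\}$ across different neurons and patches, and the constant $1/(132\,\kappa^2\gamma K M)$ has to survive a perturbation argument that propagates from $\bfW^*$ to a genuinely $O(\sigma_K/\kappa)$-sized ball. All other pieces, the concentration of $\bfE_t$, the acceleration tuning, and combining everything via induction, are technically standard once this curvature bound is in place.
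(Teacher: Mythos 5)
Your proposal follows essentially the same route as the paper's proof: fix $\hat{\bfv}=\bfv^*$ via the initialization lemma, decompose $\nabla\hat f_{\mathcal D_t}$ into $\nabla f$ plus sampling noise, apply the mean-value/integral form of the Hessian to linearize $\nabla f$ around $\bfW^*$, bound $\nabla^2 f$ locally, write the heavy-ball step as a $2\times 2$ block companion operator, and extract $\nu(\beta)$ from the quadratic $z^2-(1+\beta-\eta\lambda_i)z+\beta=0$. One bookkeeping point worth correcting before you run the constants: your stated lower Hessian bound $\frac{\rho}{\kappa^2\gamma K}\cdot M$ carries a spurious factor of $K$ in the denominator — with $\eta=\frac{1}{12M^2K}$ it would produce a rate $1-\Theta\bigl(\frac{1}{\kappa^2\gamma K^2 M}\bigr)$, off by a factor of $K$ from the theorem's $1-\Theta\bigl(\frac{1}{\kappa^2\gamma KM}\bigr)$; the correct bound is $\frac{(1-\varepsilon_0)M}{11\kappa^2\gamma}\bfI \preceq \nabla^2 f(\bfW)$ (no $K$), which together with the upper bound $6M^2K\bfI$ gives exactly the advertised condition number $132\kappa^2\gamma KM/(1-\varepsilon_0)$.
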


\noindent
{\textbf{Remark 1 (Zero generalization error of learned model):}
Lemma \ref{Thm: initialization} shows that  the weight vector $\bfv^*$ of the second layer can be exactly recovered when the noise is bounded, and there exist enough samples. 
Theorem \ref{Thm: major_thm} shows that the iterates returned by Algorithm \ref{Alg} converge to $\bfW^*$ exactly in the noiseless case or approximately  in noisy case. For the convenience of presentation, we refer to the second   term  on the right-hand side of \eqref{eqn: linear_convergence} and \eqref{eqn: linear_convergence2} as the noise error term.
Specifically, when the relation of input $\bfx$ and the output $y$ can be exactly described by the CNN model, i.e., the   noise $\xi=0$, then the noise error term vanishes, and the ground-truth $\bfW^*$ can be estimated exactly with a finite number of samples. 
When the   noise is not zero,  the noise error term decreases as the number of samples $N$ increases in the order of $\sqrt{1/N}$. With a sufficiently large sample size, the iterates can approach $\bfW^*$ for an arbitrarily small error.}  
{With the number of samples satisfies \eqref{eqn: sample_complexity}}, 
{the second error term  on the right-hand side of \eqref{eqn: linear_convergence} 
 is proportional to the  noise magnitude $|\xi|$.
 From the definition of $g(\cdot)$, one can check that $\kappa KM\sigma_{1} \le\mathbb{E}_{\bfx}|g(\bfx)|\le KM\sigma_{1}$ when $\bfx$ follows $\mathcal{N}(0,1)$. Then the condition in Lemma \ref{Thm: initialization} and Theorem  \ref{Thm: major_thm}  
 that $|\xi|\le KM\sigma_{1}$ means that the noise   can be as high as the order of the average energy of the noiseless output  $g(\bfx)$.}
 
\noindent 
{\textbf{Remark 2 (Faster linear convergence rate than GD in learning neural networks):}  
Theorem \ref{Thm: major_thm} indicates that the Heavy Ball step can accelerate the rate of convergence as shown in \eqref{eqn: accelerated_rate}.  Without the second momentum term, i.e., $\beta=0$, the rate of convergence is $1-\Theta\big(\frac{1}{KM}\big)$ for the vanilla GD. If $\beta$ is selected appropriately, the rate of convergence is improved and upper bounded by $1-\Theta\big(\frac{1}{\sqrt{KM}}\big)$.  
This is the first attemp to provide theoretical guarantees for the convergence of AGD methods in learning neural networks.}  

\noindent 
{\textbf{Remark 3 (Sample complexity analysis):} 
 Theorem \ref{Thm: major_thm} requires  $O\big(M^3 K^8  d \log^4 d \log(\frac{1}{\varepsilon})\big)$ number of samples for the successful estimation. 
  $K$ is the number of nodes in the hidden layer and usually a fixed constant for a given neural network. $d$ is the dimension of patches and scales with the size of input data. $\varepsilon$ is the estimation error of $\bfW^*$. 
Note that the degree of freedom of $\bfW^*$ is $Kd$.   The required number of samples in Theorem \ref{Thm: major_thm} depends on $d\log^4 d$ and thus is nearly optimal with respect to $d$.}

\subsection{Comparisons with Related Works}
{

We compare our results with all the exiting works to the best of our knowledge that provide generalizability guarantees. We focus on the following three aspects.

\textbf{(1) Tensor initialization method and AGD algorithm:} Tensor initialization method is first introduced and analyzed in \cite{ZSJB17} for fully connected neural networks with homogeneous activation functions. Ref. \cite{FCL20} extends the analysis to  the  non-homogeneous   sigmoid activation. 
However, both works only consider noiseless settings. 
When  reduced  to the case of fully connected neural networks without noise, i.e., $\xi = 0$ and $M=1$,   the bound in \eqref{eqn: ini} is as tight as that in \cite{ZSJB17}. 

Existing works only consider the convergence of GD instead of AGD in neural networks. 
Due to the additional momentum term, the analysis of GD does not directly generalize to AGD. Specifically, the convergence of GD is based on establishing  $\|\bfW^{(t+1)}-\bfW^* \|_2\le \nu\|\bfW^{(t)}-\bfW^* \|_2$ for some $|\nu|<1$,  so this analysis does not directly apply to AGD. Instead, our analysis of AGD is based on the augmented iteration as 
$\begin{bmatrix}
\bfW^{(t+1)}-\bfW^*\\
\bfW^{(t)}-\bfW^*
\end{bmatrix}
$, and the convergence rate is calculated as a function of $\beta$. Note our analysis also applies to the special case that $\beta=0$, i.e., the GD algorithm. 
}

{\textbf{(2) Noisy outputs:}
Refs.~\cite{GLM17}, \cite{ZYWG18} consider noisy outputs in fully connected neural networks.
	In \cite{GLM17},   the authors analyze  stochastic gradient descent through expectation, and the noise is assumed to be zero mean. Thus, the noise level does not appear in the theoretical bounds. 
	In \cite{ZYWG18}, the authors  assume the existence of a proper initialization, but there is no theoretical guarantee in \cite{ZYWG18} about whether their proposed initialization method in the noisy setting   can return a desirable initialization. 
	Moreover, our error bound \eqref{eqn: linear_convergence} is tighter than that in \cite{ZYWG18}.  Specifically, the second term on the right-hand side of  \eqref{eqn: linear_convergence} only depends on noise factor $\xi$. In contrast, 
	eqn. (4.1) in \cite{ZYWG18}  shows
  that the GD algorithm  converges to $\bfW^*$ up to an estimation error that depends on both $\|\bfW^* \|_F$ and the noise level. 
  Even when there is no noise, the additional error term in eqn. (4.1) of \cite{ZYWG18} is nonzero. }

{
\textbf{(3) Theoretical guarantees:} 
As most existing works only focus on GD algorithm with noiseless outputs, we compare with these works  by   reducing to $\beta=0$ and $\xi = 0$ in Theorem \ref{Thm: major_thm}.
Refs. \cite{BG17,DLTPS17,FCL20,ZSD17} consider {one-hidden-layer} non-overlapping convolutional neural networks. 
Refs.~\cite{BG17} and \cite{DLTPS17} show that the GD algorithm converges to the ground-truth with a constant probability from one random initialization, but the result only applies to the case of one node in the hidden layer, i.e.,   $K=1$. Moreover, the analyses assume an infinite number of input samples and do not consider the sample complexity. 
 Based on the tensor initialization method \cite{ZSJB17},  refs. \cite{FCL20} and \cite{ZSD17}  show that the GD algorithm converges to the ground-truth with a linear convergence rate, but the result  only  applies to smooth activation functions, like sigmoid functions, and excludes ReLU functions. 
Refs.~\cite{GLM17,ZYWG18} provide the sample complexity analysis with ReLU activation function but focus on {one-hidden-layer} fully connected neural networks, which  can be viewed as a special case of the convolutional neural network studied in this chapter by selecting $M=1$.
The sample complexity in \cite{GLM17} with respect to $d$ is $\text{poly}(d)$, but the power of $d$ is not provided explicitly. 
Moreover, the convergence rate   in \cite{GLM17} is sub-linear, while our theorem shows that {both GD and AGD} enjoy    linear convergence rates.}

\section{Simulation}
The input data $\{\bfx_n\}_{n=1}^{N}$ are randomly selected from the Gaussian distribution $\mathcal{N}( 0 , \bfI )$. 
The number of patches $M$ is selected as  
a factor of the signal dimension $p$,  and all the patches have the same size $d$ with $d=p/M$. 
Entries of the weight matrix $\bfW^*$ are i.i.d generated from   $\mathcal{N}(0, 1^2)$. 
The noise $\{\xi_n \}_{n=1}^N$ are i.i.d from $\mathcal{N}(0, \sigma^2)$,
 and the noise level is measured by $\sigma/E_y$, where $E_y$ is the average energy of the noiseless outputs $\{g(\bfx_n)\}_{n=1}^{N}$ calculated as $E_y = \sqrt{\frac{1}{N}\sum_{n=1}^{N}|g(\bfx_n)|^2}$.   
{The output data $\{y_n\}_{n=1}^N$ are generated by \eqref{eqn: major_equ}.  }  
In the following numerical experiments, {the whole dataset $\{\bfx_n, y_n \}_{n=1}^N$ instead of a fresh subset is used  to calculate the gradient in each iteration.}
The initialization is randomly selected from $\big\{ \bfW_0\big| \| \bfW_0 - \bfW^* \|_F/\|\bfW^* \|_F <0.5 \big\}$ and $\bfv^{(0)} =\bfv^*$ to reduce the computation. As shown in \cite{FCL20,ZYWG18},
random initialization and the tensor method have very similar numerical performance.

If not otherwise specified, we use the following parameter setup. 
 $p$ is chosen as $50$, and $M$ is selected as $5$. Hence, $d=p/M$  is $10$.  
The number of nodes in hidden layer $K$ is chosen as $5$.
The number of samples $N$ is chosen as $200$. 
The step size of the gradient {$\eta$} is   $\frac{2K}{M^2}$, and $\beta$ is selected as $(1-\frac{1}{\sqrt{KM}})^2$. 
All the simulations are implemented in MATLAB 2015a on a desktop with 3.4 GHz Intel Core i7.
 
\subsection{Performance of AGD with Different $\bfv^*$} 
Figs. \ref{Fig: sign_1} and \ref{Fig: sign_2} show the performance of AGD with different $v_j^*$, and the results are averaged over 100 independent trials.
In Fig. \ref{Fig: sign_1}, the relative error is defined as 
$	{\| \bfW^{(t)}- \bfW^*  \|_F}/{\|\bfW^*\|_F}$, where $\bfW^{(t)}$ is the estimate in the $t$-th iteration. 
In Fig. \ref{Fig: sign_2}, each trial is called a success if the relative error is less than $10^{-6}$.
We generate two cases of $\bfv^*$. In Case 1, all the entries of $\bfv^*$ are $1$, while  each entry is i.i.d. selected from $\{+1, -1 \}$ with equal probability in Case 2. $k$ is set as $5$, and $d$ is set as $60$ with $p=300$. In both figures, the results of Case 1 is shown by the lines marked as ``$v_j=+1$'', and the second group is marked as ``$v_j\in\{+1, -1 \}$''.  We can see that the performances of these two cases are almost the same. 
In the following experiments, we fix  $\bfv_j^*$ as $1$ for all $j$.
\begin{figure}[h] 
 		\centering
 		\includegraphics[width=0.6\linewidth]{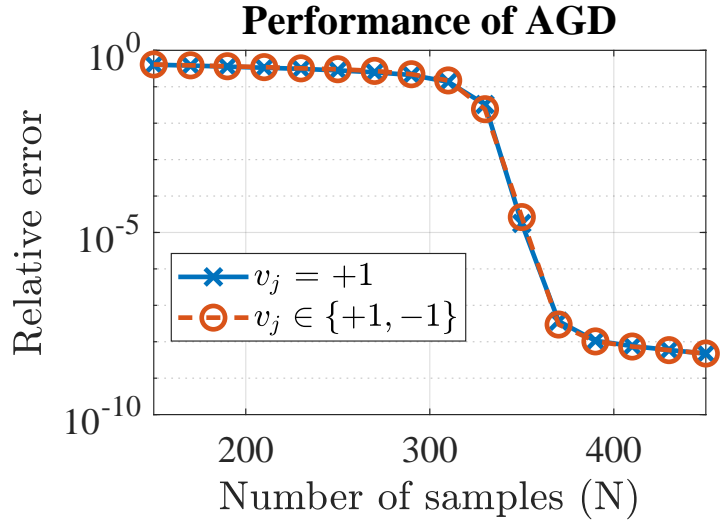}
 		\vspace{-2.5mm}
 		\caption{Recovery error of AGD under different $\bfv^*$}
 		\label{Fig: sign_1}
 \end{figure}
 \begin{figure}[h]
 		\centering
 		\includegraphics[width=0.6\linewidth]{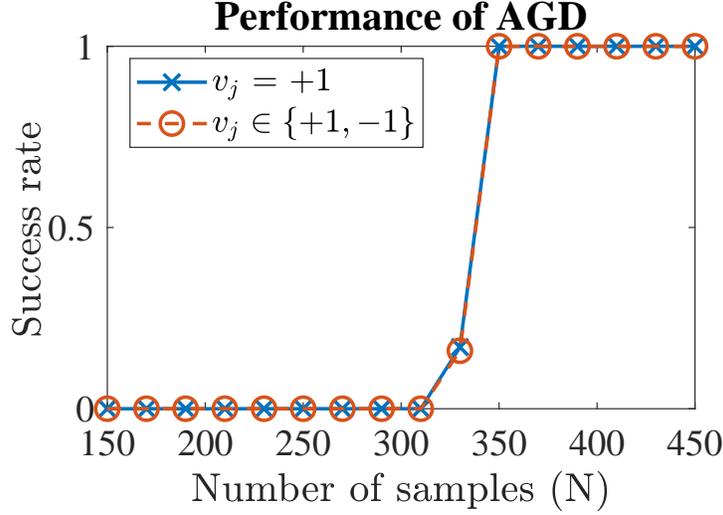}
 		\vspace{-2.5mm}
 		\caption{Success rate of AGD under different $\bfv^*$}
 		\label{Fig: sign_2}
 \end{figure}
\subsection{Performance of AGD with Noiseless Output}
Figs. \ref{Fig: convergence_vs_K} and \ref{Fig: convergence_vs_M} show the convergence of AGD by varying $K$ and $M$. 
In Fig. \ref{Fig: convergence_vs_K},   $\eta$, $\beta$ are calculated based the value of $K$, and other parameters are fixed.  
For each $K$, we conducted independent trials with random selected $\bfx_n$, $\bfW^*$ and the corresponding $y_n$. 
Given $K$, the convergence rates of different trials vary slightly. Fig. \ref{Fig: convergence_vs_K} shows one example of these trials for each $K$.  
We can see that the convergence rate decreases as $K$ increases. Similarly, Fig. \ref{Fig: convergence_vs_M} shows that the convergence rate decreases as $M$ increases. 
\begin{figure}[h] 
		\centering
		\includegraphics[width=0.6\linewidth]{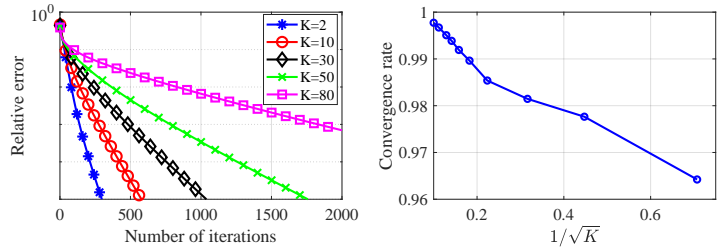}
		\vspace{-3mm}
		\caption{Convergence of AGD with different $K$}
		\label{Fig: convergence_vs_K}
\end{figure}
\begin{figure}[h] 
		\centering
		\includegraphics[width=0.6\linewidth]{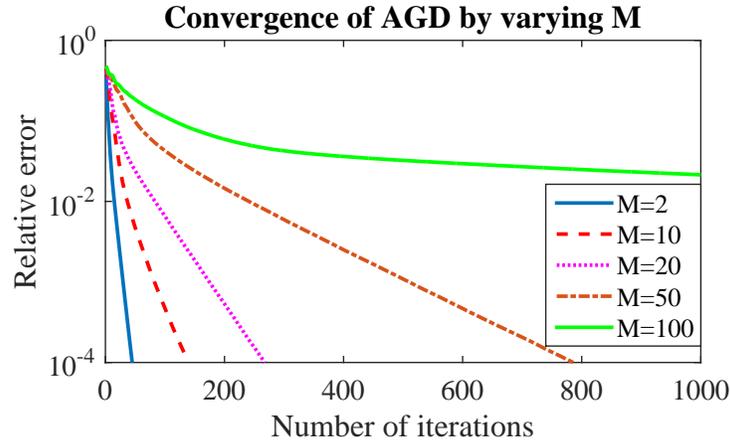}
		\vspace{-2.4mm}
		\caption{Convergence of AGD with different $M$}
		\label{Fig: convergence_vs_M}
\end{figure}

Figs. \ref{Fig: ps of N_vs_d} and \ref{Fig: ps of N_vs_K} show  the phrase transition where the number of samples $N$, the dimension of patches $d$, and the number of nodes in the hidden layer $K$ change. 
All the other parameters except $N$ and $d$ $(\text{or }k) $ remain the same as the default values.
For each $({N}, {d})$ or $(N, K)$ pair, we conduct $100$ independent trials.  Each trial is called a success if the relative error is less than $10^{-6}$. 
A white block means all the trails are successful, while a black one means all the trials fail. 
\begin{figure}[h] 
		\centering
		\includegraphics[width=0.6\linewidth]{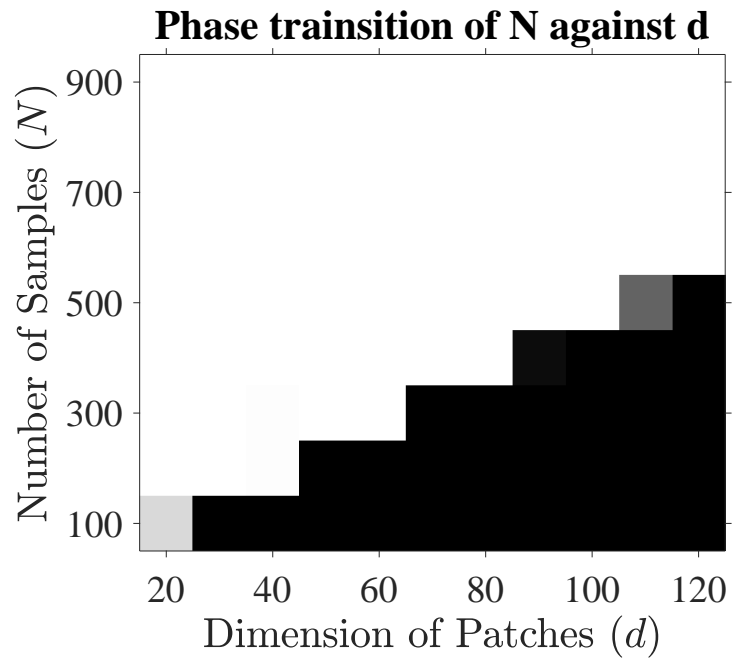}
		\caption{Phrase transition of $N$ against $d$}
		\label{Fig: ps of N_vs_d}
\end{figure}
\begin{figure}[h] 
		\centering
		\includegraphics[width=0.6\linewidth]{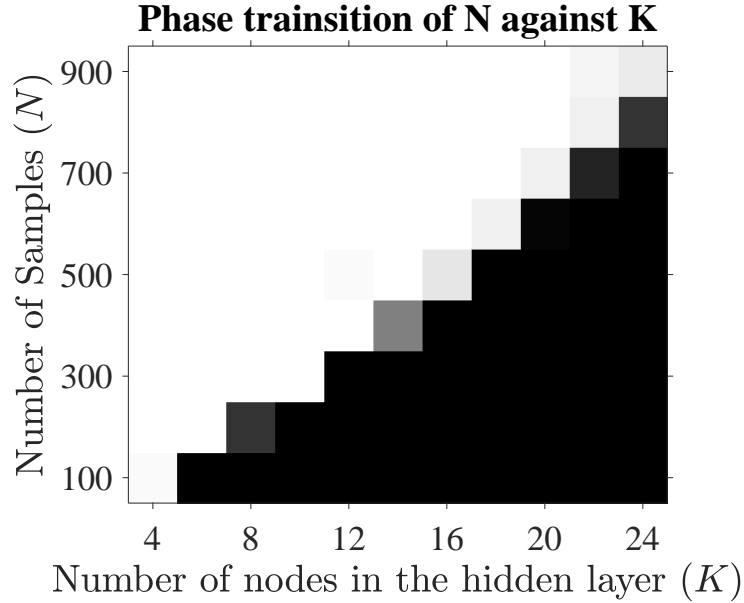}
		\caption{Phrase transition of $N$ against $K$}
		\label{Fig: ps of N_vs_K}
\end{figure}

\subsection{Performance of AGD with Noisy Output}
Fig. \ref{fig:err_vs_noise} shows the relative error of AGD algorithm by varying the number of samples $N$ in the noisy case. 
$K$ is set as $5$, and  $d$ is set as $60$ with $p=300$. Hence, the degree of freedom of $\bfW^*$ is $300$.
Y-axis stands for the relative error, and the results are averaging over $100$ independent trials. 
We can see that the relative errors are high when $N$ is less than the degree of freedom as $300$. Once the number of samples exceeds the degree of freedom, the relative error decreases dramatically in both noisy and noiseless settings. As $N$ increases,  the relative error in the noisy setting  converges fast to  the noise level. 

\begin{figure}[h]
	\centering
	\includegraphics[width=0.5\linewidth]{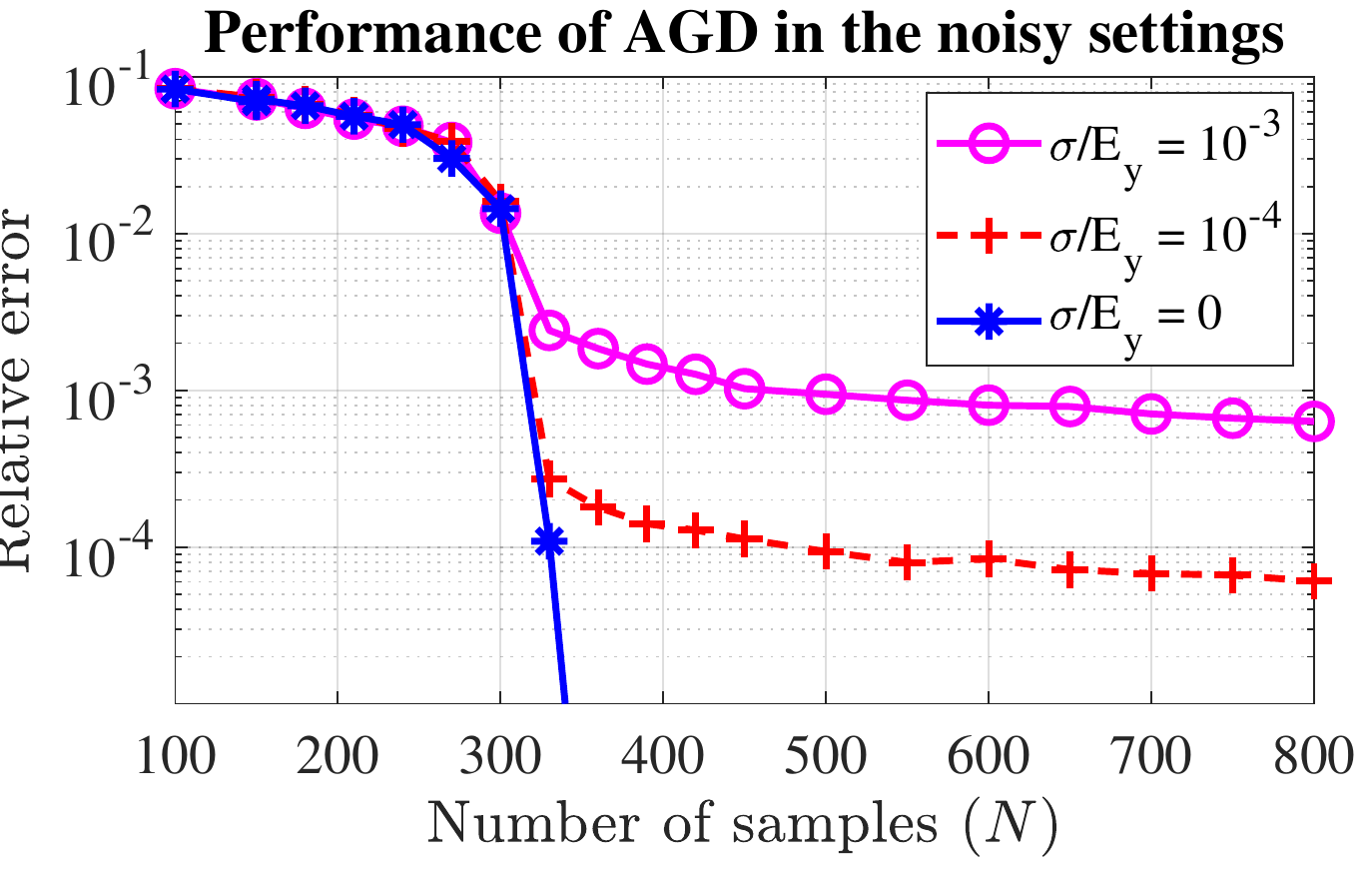}
	\caption{The performance of Algorithm \ref{Alg4} with noisy measurements}
	\label{fig:err_vs_noise}
\end{figure}

Fig.~\ref{Fig: ps_noise} shows the phrase transition of $N$ against $d$ with different noise levels.
A trial is considered successful if the returned $\bfW$ satisfies $\|\bfW - \bfW^*\|_2/\|\bfW^* \|_2\le \sigma/E_y$ (or $10^{-6}$ in noiseless settings). 
 As  $d$ increases, the required number of samples for all successful estimations increases as well. 
 Also, with a higher noise level, the success region becomes smaller. 
\begin{figure}[h] 
	\centering
	\includegraphics[width=1.0\linewidth]{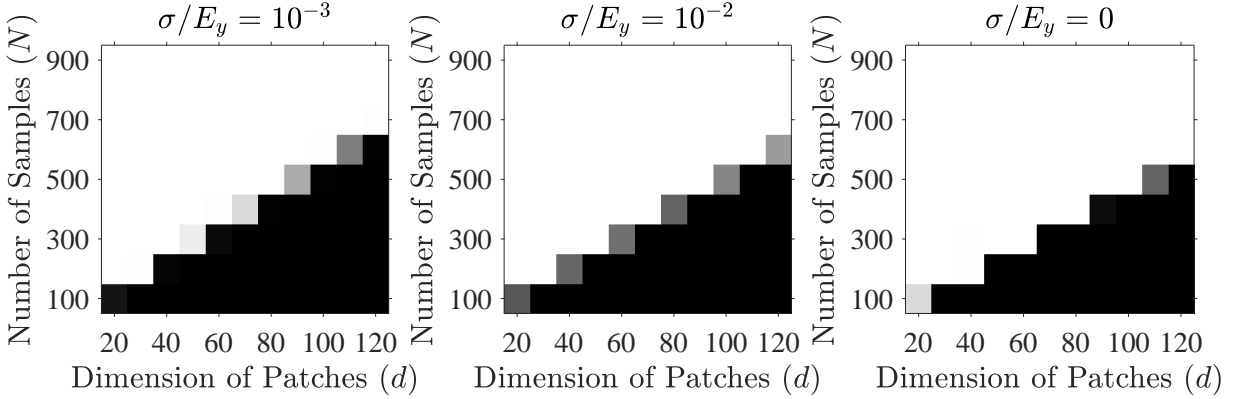}
	\caption{The phrase transition of AGD  in noisy settings}
	\label{Fig: ps_noise}
\end{figure}

\subsection{Comparison of GD and AGD}
Fig.~\ref{fig:CASE} shows the progress of both GD and AGD methods across iterations.  We fix the same initialization for GD and AGD in  Fig.~\ref{fig:CASE}(a) and (b), respectively. In both cases, $\beta$ and other parameters except for $\eta$ are fixed as the default values.  The only difference is that the step size $\eta$ is $\frac{2K}{M^2}$ in  Fig.~\ref{fig:CASE}(a) and $\frac{3K}{M^2}$ in Fig.~\ref{fig:CASE}(b).  One can see that starting from the same initialization,  GD sometimes diverges in (b) with a large step size. By adding the heavy-ball term, the AGD method can converge to the global minimum. Moreover, when both GD and AGD converge, AGD converges faster than GD.

\begin{figure}[H] 
	\begin{minipage}[c]{0.48\textwidth}
		\centering
		\includegraphics[width=0.9\linewidth]{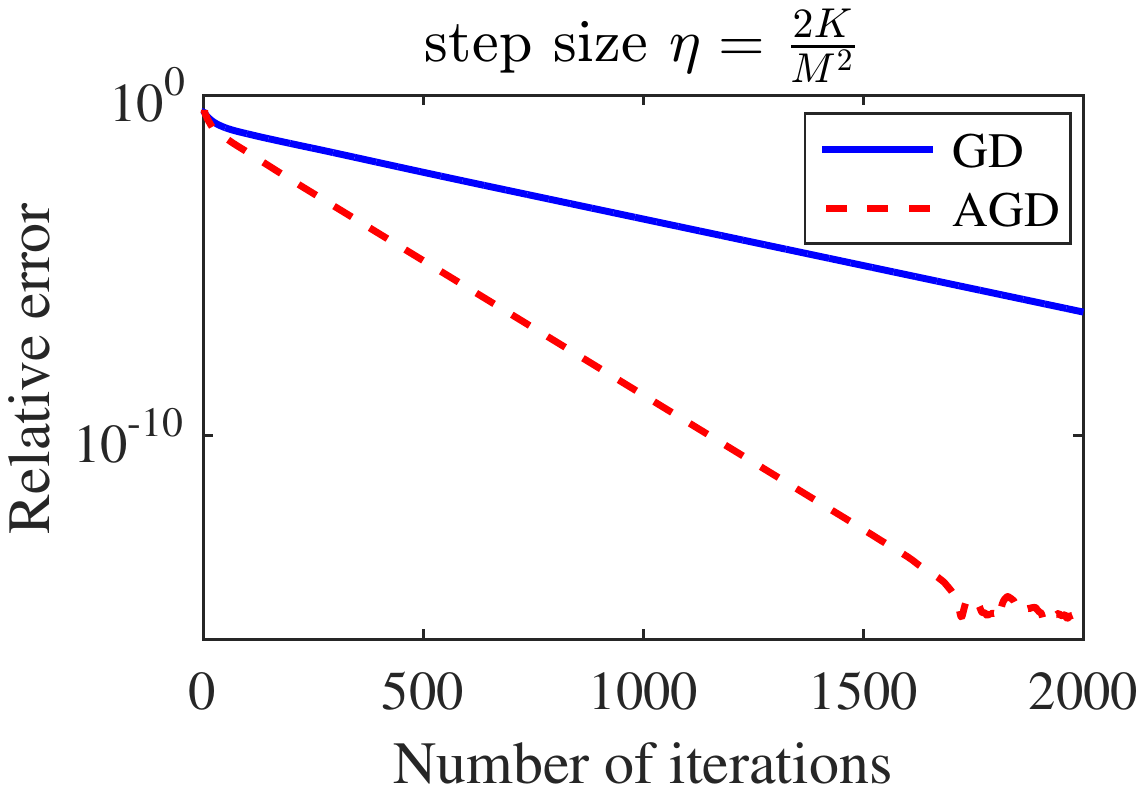}
		\vspace{-4mm}
		\scriptsize \center{(a)}
	\end{minipage}
	\begin{minipage}[c]{0.48\textwidth}
		\centering
		\includegraphics[width=0.9\linewidth]{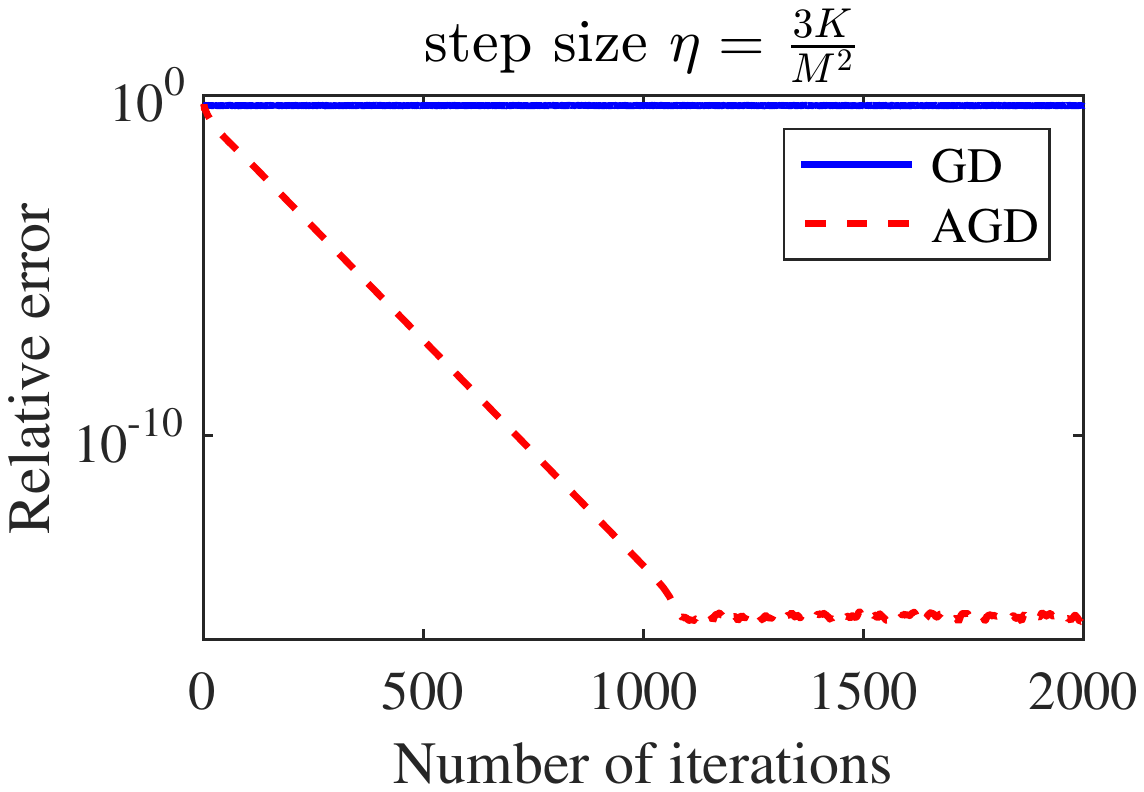}
		\vspace{-4mm}
		\scriptsize \center{(b)}
	\end{minipage}
	\caption{ Performance of AGD and GD under different $\eta$}
	\label{fig:CASE}
\end{figure}

Fig.~\ref{fig: N_vs_err} compares the convergence rates of AGD and GD. The number of samples $N$ is set as $500$, 
and other parameters are  the default values.   Each   point means the smallest number of iterations needed to reach the corresponding estimation error, and the results are averaged over $100$ independent trials. AGD requires a smaller number of the iterations than GD  to achieve the same relative error.

\begin{figure}[h]
	\centering
	\includegraphics[width=0.5\linewidth]{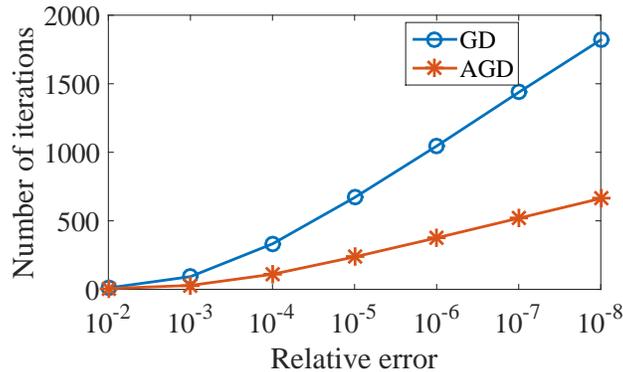}
	\caption{Comparison of AGD and GD in number of iterations }
	\label{fig: N_vs_err}
\end{figure}

Fig.~\ref{fig:phase_com} shows the phrase transition of GD and AGD by varying $N$ and $d$ when the output is noiseless. 
AGD has a larger successful region than GD so that AGD requires a smaller number of samples to guarantee successful recovery for a given $d$. 
\begin{figure}[H]
	\centering
 	\includegraphics[width=0.7\linewidth]{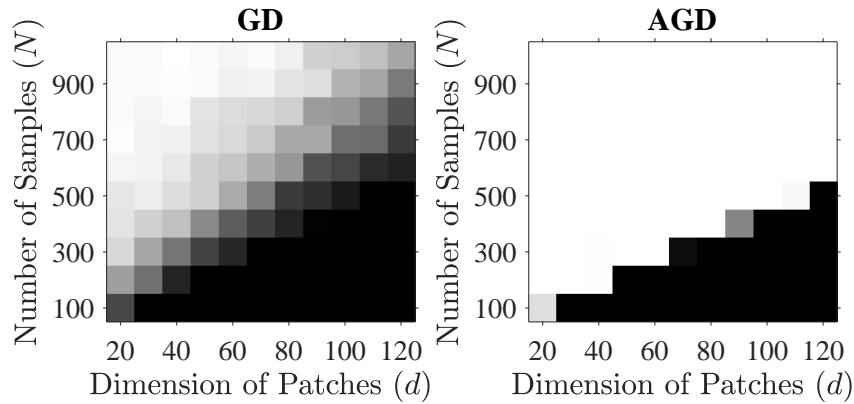}
	\caption{The phase transition of GD and AGD}
	\label{fig:phase_com}
\end{figure}

\section{Summary}
We have  analyzed the  performance of (accelerated) gradient descent methods in learning one-hidden-layer non-overlapping convolutional neural networks with multiple nodes and ReLU activation function. We have shown that if the number of samples exceeds our provided sample complexity,  
gradient descent methods 	with the tensor initialization find the ground-truth parameters  with a linear convergence rate. The parameters can be estimated exactly when the data are noiseless. Moreover, accelerated gradient descent is proved to converge faster than vanilla gradient descent. {One future direction is to extend the analysis framework to multi-layer overlapping convolutional neural networks.}
 
\chapter{\uppercase{Training Graph Neural Networks with Guaranteed Generalizability: One-hidden-layer Case}}\label{chapter: 4}
\blfootnote{Portions of this chapter previously appeared as: S.~Zhang, M.~Wang, S.~Liu, P.~Chen, J.~Xiong, ``Fast learning of graph neural networks with guaranteed generalizability: One-hidden-layer case,'' in \emph{Proc. Int. Conf. Mach. Learn.}, 2020, pp. 11268-11277.}
\section{Introduction}

 Graph neural networks (GNNs)   \cite{GMS05,SGAHM08} have demonstrated great practical performance in learning with graph-structured data. 
Compared with traditional (feed-forward) neural networks, GNNs introduce an additional neighborhood aggregation layer, where the features of each node are aggregated with the features of  the neighboring nodes \cite{GSRVD17,XLTSTKKJ18}. GNNs have a better learning  performance in
  applications including physical reasoning \cite{BPLR16}, recommendation systems \cite{YHCEHL18},  biological analysis \cite{DMIBH15}, and compute vision \cite{MDSG06}. 
Many variations of GNNs, such as Gated Graph Neural Networks (GG-NNs) \cite{LTBZ16}, Graph Convolutional Networks (GCNs) \cite{KW17} and   others  \cite{HYL17, VCCRLB18}     have recently  been  developed to enhance the learning performance on graph-structured data.



Despite the numerical success, the theoretical understanding of the generalizability of the learned GNN models to the testing data is very limited. Some works \cite{XLTSTKKJ18,XHLJ19,WSZFYW19,MRFMW19} analyze the expressive power of GNNs but do not provide learning algorithms that are guaranteed to return the desired GNN model with proper parameters.  Only    few works  \cite{DHPSWX19,VZ19} explore the generalizabilty of GNNs, under the one-hidden-layer setting, as  even with one hidden layer  the models are  already    complex to analyze,  not to mention the multi-layer setting. Both works show that for regression problems, the generalization gap of the training error and the testing error decays with respect to the number of training samples at a sub-linear rate. The  analysis in   Ref.~\cite{DHPSWX19} analyzes GNNs through Graph Neural Tangent Kernels (GNTK)  
{which is an extension of Neural Tangent kernel (NTK) model \cite{JGH18,CB18, NS19,CG20}.   When over-parameterized, this line of works   shows sub-linear convergence to the global optima of the learning problem with assuming enough filters in the hidden layer \cite{JGH18,CB18}. }  
Ref.~\cite{VZ19} only applies to the case of one single filter in the hidden layer, and the activation function  needs to be smooth, excluding the popular ReLU activation function. 
Moreover, refs.~\cite{DHPSWX19,VZ19} do not consider classification and do not discuss {if a small training error and a small generalization error can be achieved simultaneously.}


 One recent line of research analyzes the generalizability of neural networks (NNs) from the perspective of model estimation \cite{BG17,DLTPS17,DLT17,FCL20,GLM17,SS17,ZSJB17,ZSD17}. These works assume   
the existence of a ground-truth NN model with some unknown parameters that maps the input features to the output labels for both training and testing samples. Then the learning objective is to estimate  the ground-truth model parameters from the training data, and this ground-truth model is guaranteed to have  a zero generalization                                             error on the testing data. The analyses are  focused on one-hidden-layer NNs, assuming the input features following the Gaussian   distribution  \cite{Sh18}.  
 If one-hidden-layer NNs only have one filter in the hidden layer, gradient descent (GD) methods can learn the ground-truth parameters with a high probability \cite{DLTPS17,DLT17,BG17}. When there are multiple filters in the hidden layer, the learning problem is much more challenging to solve because of the common spurious local minima \cite{SS17}. \cite{GLM17} revises the learning objective and shows the global convergence of GD   to the global optimum of the new learning problem. 
 The required number for training samples, referred to as the sample complexity in this chapter, is a high-order polynomial function of the   model size. A few works  \cite{ZSJB17,ZSD17,FCL20} study a learning algorithm that initializes using the  tensor initialization method \cite{ZSJB17} and iterates using GD. This algorithm is proved to  converge  to the ground-truth model parameters with a zero generalization error for the one-hidden-layer NNs with multiple filters, and the sample complexity is shown to be linear in the model size. All these works only consider NNs rather than GNNs.

We provides the first algorithmic design and theoretical analysis to learn a GNN model with a zero generalization error, assuming the existence of such a ground-truth model. 
  We study GNNs  in semi-supervised learning,  
 and the results apply to  both regression and binary classification problems. 
Different from NNs, each output label on the graph depends on multiple neighboring features in GNNs, and such dependence significantly complicates the analysis of the learning problem.  Our proposed algorithm uses the tensor initialization \cite{ZSJB17} and updates by accelerated gradient descent (AGD). We prove that with a sufficient number of training samples, our algorithm returns the ground-truth model with the zero generalization error for regression problems. For binary classification problems, our algorithm returns a model sufficiently close to the ground-truth model, and its distance to the ground-truth model decays to zero as the number of samples increases. Our algorithm converges linearly, with a rate that is proved to be faster than that of vanilla GD. We quantifies the dependence of the sample complexity on the model size and the underlying graph structural properties. The required number of samples is linear in the model size. It is also a polynomial function of the graph degree and the largest singular value of the normalized adjacency matrix. Such dependence of the sample complexity on graph parameters is exclusive to GNNs and does not exist in NNs. 
\section{Problem Formulation}\label{Section: pf_1}
Let $\mathcal{G}=\{ \mathcal{V}, \mathcal{E} \}$ denote an un-directed graph, where $\mathcal{V}$ is the set of nodes with size $|\mathcal{V}|=N$ and $\mathcal{E}$  is the set of edges. Let $\delta$ and $\delta_{\textrm{ave}}$ denote the maximum and average node degree  of $\mathcal{G}$, respectively.  Let $\boldsymbol{\tilde{A}}\in \{0,1\}^{N\times N}$ be the adjacency matrix of  $\mathcal{G}$ with added self-connections. Then,   $\tilde{A}_{i,j}=1$ if and only if  there exists an edge between node $v_i$ and node $v_j$, $i,j \in [N]$, and $\tilde{A}_{i,i}=1$ for all $i\in [N]$.  Let  $\bfD$ be the degree matrix with diagonal elements $D_{i,i} = \sum_{j}\tilde{A}_{i,j}$ and zero entries otherwise. $\bfA$     denotes  the normalized adjacency matrix with $\bfA= \bfD^{-1/2}\boldsymbol{\tilde{A}}\bfD^{-1/2}$, and $\sigma_1(\bfA)$ is the largest singular value of   $\bfA$.  

 Each node $v_n$ in $\mathcal{V}$  $(n=1,2,\cdots, N)$ corresponds to an input feature vector, denoted by $\bfx_n\in \mathbb{R}^{d}$, and a label $y_n\in\mathbb{R}$. $y_n$ depends on not only $\bfx_n$ but also all $\bfx_j$ where $v_j$ is a neighbor of $v_n$. Let $\bfX = \begin{bmatrix}
\bfx_1, \bfx_2, \cdots, \bfx_N
\end{bmatrix}^T\in \mathbb{R}^{N\times d}$ denote the feature matrix.  Following the analyses of NNs \cite{Sh18}, we assume $\bfx_n$'s are i.i.d. samples from the standard Gaussian distribution $\mathcal{N}(\boldsymbol{0}, \bfI_{d})$.    
 For GNNs,  we consider the \ typical  semi-supervised learning  problem setup.  Let $\Omega\subset [N]$ denote the set of node indices with known labels, and let  $\Omega^c$ be its complementary set.  
The objective of the GNN is to predict $y_i$ for every $i$ in $\Omega^c$. 

Suppose there exists a one-hidden-layer GNN that maps  node  features to labels, as shown in Figure \ref{Figure: GNN}. There are $K$ filters\footnote{We assume $K\leq d$   to simplify the representation of the analysis, while the result still holds for $K>d$ with   minor changes. }
 in    the hidden layer, and the weight matrix is denoted by 
	 $\bfW^* = \begin{bmatrix} 
	\bfw_1^*&\bfw_2^*&\cdots&\bfw_K^*
	\end{bmatrix}\in \mathbb{R}^{d\times K}$ . The hidden layer is followed by a pooling layer. Different from NNs, GNNs have an additional aggregation layer with $\bfA$ as the aggregation factor matrix \cite{KW17}.   For every node $v_n\in\mathcal{V}$, the input to the hidden layer is $\bfa_n^T\bfX$, where $\bfa_n^T$ denotes the $n$-th row of $\bfA$. When there is no edge in $\mathcal{V}$, $\bfA$  is reduced to the identify matrix, and a GNN model is reduced to an NN model. 
	\begin{figure}[h]
		\centering
		\includegraphics[width=0.65\textwidth]{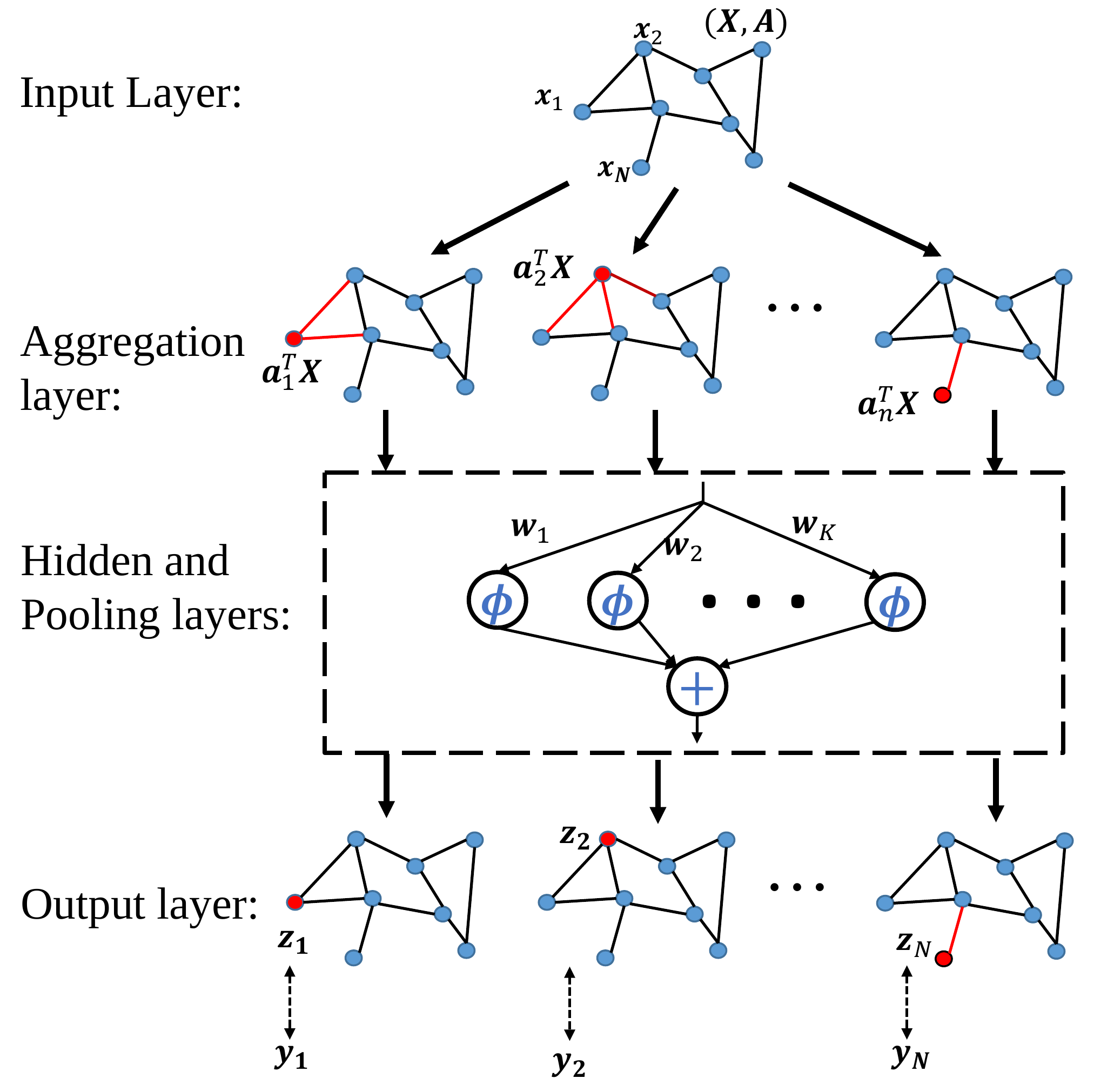}
		\caption{Structure of the graph neural network}
		\label{Figure: GNN}
	\end{figure}	
 
The output $z_n$ of the node $v_n$ of the GNN is 
	 	\begin{equation}\label{eqn: y_n}
	 \begin{gathered}
	z_n= g(\bfW^*;\bfa_n^T\bfX) = \frac{1}{K}\sum_{j=1}^{K}\phi(\bfa_n^T\bfX{\bfw_j^*}), \forall n \in[N],
	 \end{gathered}
	 \end{equation}
	 where $\phi(\cdot)$ is  the activation function. We consider both regression and binary classification in this chapter. For regression,  $\phi(\cdot)$ is the ReLU function\footnote{Our result can be extended to the sigmoid activation function with minor changes.} 
	 $\phi(x)=\max\{x , 0  \}$, 
	 and  
	 $y_n = z_n$. 
	 For binary classification, we consider the sigmoid activation function where $\phi(x)=1/(1+e^{-x})$. Then $y_n$ is  a binary variable generated from $z_n$ by  $\text{Prob}\{y_n=1\}=z_n$, and $\text{Prob}\{y_n=0\}=1-z_n$.

Given $\bfX$, $\bfA$, and $y_i$ for all $i\in \Omega$, the learning objective is to  estimate $\bfW^*$, which  is assumed to have a zero generalization error. The  {training objective} is to minimize the empirical risk function,
	\begin{equation}\label{eqn: optimization}
	\min_{\bfW\in\mathbb{R}^{d\times K}}  \hat{f}_{\Omega}(\bfW):=\frac{1}{|\Omega|}\sum_{n\in \Omega} \ell(\bfW;\bfa_n^T\bfX),
	\end{equation}
	where $\ell$  is the loss function. 
For regression,	 we use the squared loss function
, and \eqref{eqn: optimization} is written as
	\begin{equation}\label{eqn:linear_regression}
	\min_{\bfW}: \quad \hat{f}_{\Omega}(\bfW)=\frac{1}{2|\Omega|}\sum_{n\in \Omega}\Big| y_n - g(\bfW;\bfa_n^T\bfX) \Big|^2.
	\end{equation}
For classification, we use the cross entropy loss function, and \eqref{eqn: optimization} is written as
	\begin{equation}\label{eqn: classification}
		\begin{split}
		\min_{\bfW}: \quad \hat{f}_{\Omega}(\bfW)
		=\frac{1}{|\Omega|}\sum_{n\in \Omega} -y_n \log \big(g(\bfW;\bfa_n^T\bfX)\big)
		-(1&-y_n) \log \big(1-g(\bfW;\bfa_n^T\bfX)\big).
		\end{split}
	\end{equation}

	Both \eqref{eqn:linear_regression} and \eqref{eqn: classification} are nonconvex due to the nonlinear function $\phi$. 
 Moreover,   while 	 $\bfW^*$ is a global minimum of \eqref{eqn:linear_regression},   $\bfW^*$ is	 not necessarily  a global minimum of \eqref{eqn: classification}\footnote{$\bfW^*$ is a global minimum if replacing all $y_n$ with $z_n$ in \eqref{eqn: classification}, but $z_n$'s are unknown in practice. }. Furthermore, compared with NNs, the additional difficulty of analyzing the generalization performance of GNNs lies in the fact that each label $y_n$ is correlated with all the input features that are connected to node $v_n$, as shown in the risk functions in \eqref{eqn:linear_regression} and \eqref{eqn: classification}.

Note that our model with $K=1$ is equivalent to  the one-hidden-layer convolutional network (GCN)  \cite{KW17} for binary classification. 
To study the  multi-class classification problem,  the GCN model in  \cite{KW17} has $M$ nodes for $M$ classes in the second layer and employs the softmax activation function at the output. Here, our model has a pooling layer and uses the sigmoid function for binary classification.
Moreover,
 we consider both regression and binary classification problems   using the same model architecture with different activation functions. 
We consider one-hidden-layer networks following the state-of-art works in NNs \cite{DLTPS17,DLT17,BG17,ZSJB17,ZSD17,FCL20} and GNNs \cite{DHPSWX19,VZ19} because the theoretical analyses are extremely complex and still being developed for multiple hidden layers.

\section{Proposed Learning Algorithm}
 

  In what follows, we illustrate the algorithm used for solving problems \eqref{eqn:linear_regression} and \eqref{eqn: classification}, summarized in Algorithm \ref{Alg}. Algorithm \ref{Alg} has two   components: a) accelerated gradient descent and b) tensor initialization. We initialize  $\bfW$  using the tensor initialization method    \cite{ZSJB17} with minor modification for GNNs  and update iterates by the Heavy Ball method \cite{P87}.


	\textbf{Accelerated gradient descent}. Compared with the vanilla GD method,
  each iterate in the Heavy Ball method is updated along the combined directions of both the gradient and the moving direction of the previous iterates. 
 	 Specifically, one computes the difference of the  estimates in the previous two iterations, and the difference is scaled by a constant $\beta$. This additional  momentum term   is added to the gradient descent update. When $\beta$ is $0$, AGD reduces to GD.
	 
	During each iteration, a fresh subset 
	of data is applied to estimate the gradient.  The assumption of disjoint subsets   is  standard     to simplify the analysis \cite{ZSD17,ZSJB17} but  not necessary in numerical experiments. 
	\floatname{algorithm}{Algorithm}
	\setcounter{algorithm}{04}
	\begin{algorithm}[h]
		\caption{Accelerated Gradient Descent Algorithm with Tensor Initialization}
		\label{Alg}
		\begin{algorithmic}[1]
			\State \textbf{Input:} $\bfX$, $\big\{y_n\big\}_{n\in\Omega}$, $\bfA$, 
			the  step size $\eta$, the momentum constant $\beta$, and the  error tolerance  $\varepsilon$;
			\State \textbf{Initialization:} Tensor   Initialization via Subroutine 1;
			\State {Partition} ${\Omega}$ into $T=\log(1/\varepsilon)$ disjoint subsets, denoted as $\{{\Omega}_t\}_{t=1}^T$;
			\For  {$t=1, 2, \cdots, T$}		
			\State $\bfW^{(t+1)}=\bfW^{(t)}-\eta\nabla \hat{f}_{{\Omega}_t}(\bfW^{(t)}) +\beta(\bfW^{(t)}-\bfW^{(t-1)})$
			\EndFor
		\end{algorithmic}
	\end{algorithm}
	
	
  	\textbf{Tensor initialization}.  The main idea of the tensor initialization method \cite{ZSJB17} is to utilize the homogeneous property of an activation function such as ReLU to estimate the magnitude and direction separately for each $\bfw_j^*$ with $j\in[K]$. 
   A non-homogeneous function can be approximated by piece-wise linear functions,  if the function is strictly monotone with lower-bounded derivatives \cite{FCL20}, like the sigmoid function. Our  initialization is similar to those in \cite{ZSJB17,FCL20} for NNs with some definitions {are changed} to handle the graph structure, and the initialization process is summarized in Subroutine 1.   
   
Specifically, following \cite{ZSJB17},  we define a special outer product, denoted by $\widetilde{\otimes}$, such that for any vector $\bfv\in \mathbb{R}^{d_1}$ and $\bfZ\in \mathbb{R}^{d_1\times d_2}$,   
 \begin{equation}
 	\bfv\widetilde{\otimes} \bfZ=\sum_{i=1}^{d_2}(\bfv\otimes \bfz_i\otimes \bfz_i +\bfz_i\otimes \bfv\otimes \bfz_i + \bfz_i\otimes \bfz_i\otimes \bfv),
 \end{equation} 
 where $\otimes$ is the outer product and $\bfz_i$ is the $i$-th column of $\bfZ$.
 Next, we define
 \footnote{ $\mathbb{E}_{\bfX}$ stands for the expectation over the distribution of random variable $\bfX$.}
\begin{equation}\label{eqn: M_1_GNN}
\bfM_{1} = \mathbb{E}_{\bfX}\{y \bfx \} \in \mathbb{R}^{d},
\end{equation}
\begin{equation}\label{eqn: M_2_GNN}
\bfM_{2} = \mathbb{E}_{\bfX}\Big\{y\big[(\bfa_n^T\bfX)\otimes (\bfa_n^T\bfX)-\bfI\big]\Big\}\in \mathbb{R}^{d\times d},
\end{equation}
\begin{equation}\label{eqn: M_3_GNN}
\bfM_{3} = \mathbb{E}_{\bfX}\Big\{y\big[(\bfa_n^T\bfX)^{\otimes 3}- (\bfa_n^T\bfX)\widetilde{\otimes} \bfI  \big]\Big\}\in \mathbb{R}^{d\times d \times d},
\end{equation}
where $\bfz^{\otimes 3} := \bfz \otimes \bfz \otimes \bfz$. {The tensor $M_3$ is used to identify the directions of $\{w_j^*\}_{j=1}^K$. $M_1$ depends on both the magnitudes and directions of $\{w_j^*\}_{j=1}^K$. We will sequentially estimate the directions  and magnitudes of
$\{w_j^*\}_{j=1}^K$ from $M_3$ and $M_1$. The matrix $M_2$ is  used to identify
the subspace spanned by $w_j^*$. We will project to this subspace to reduce
the computational complexity of decomposing $M_3$.}
\floatname{algorithm}{Subroutine}
 \setcounter{algorithm}{1}
\begin{algorithm}[t]
	\caption{Tensor Initialization Method}\label{Alg: initia}
	\begin{algorithmic}[1]
		\State \textbf{Input:}  $\bfX$, $\big\{y_n\big\}_{n\in\Omega}$ and $\bfA$;
		\State Partition $\Omega$ into three disjoint subsets $\Omega_1$, $\Omega_2$, $\Omega_3$;
		\State Calculate $\widehat{\bfM}_{1}$, $\widehat{\bfM}_{2}$ {following \eqref{eqn: M_1_GNN}, \eqref{eqn: M_2_GNN}} using $\Omega_1$, $\Omega_2$, respectively;
		\State Estimate $\widehat{\bfV}$  by orthogonalizing the eigenvectors with respect to the $K$ largest eigenvalues of  $\widehat{\bfM}_{2}$;
		\State Calculate $\widehat{\bfM}_{3}(\widehat{\bfV},\widehat{\bfV},\widehat{\bfV})$ {using \eqref{eqn: M_3_v1}} through $\Omega_3$;
		\State Obtain $\{ \widehat{\bfu}_j \}_{j=1}^K$ via {tensor decomposition method \cite{KCL15}};
		\State Obtain $\widehat{\boldsymbol{\alpha}}$ by solving  optimization problem $\eqref{eqn: int_op_GNN}$;
		\State \textbf{Return:} $\bfw^{(0)}_j=\widehat{\alpha}_j\widehat{\bfV}\widehat{\bfu}_j$, {$j=1,...,K$.}
	\end{algorithmic}
\end{algorithm} 

{Specifically, the values of $\bfM_1$, $\bfM_2$ and $\bfM_3$ are all estimated through samples, and let $\widehat{\bfM}_{1}$, $\widehat{\bfM}_{2}$, $\widehat{\bfM}_{3}$ denote the corresponding estimates of these high-order momentum.}
Tensor decomposition method \cite{KCL15} provides the estimates of {the vectors $\bfw_j^*/\|\bfw_j^*\|_2$ 
from $\widehat{\bfM}_{3}$, and the estimates are denoted as $\widehat{\overline{\bfw}}_{j}^*$.}

However, the computation complexity of estimate through $\widehat{\bfM}_{3}$ depends on poly($d$). To reduce the computational complexity of tensor decomposition, $\widehat{\bfM}_{3}$ is in fact first projected to a lower-dimensional tensor \cite{ZSJB17} through a matrix $\widehat{\bfV}\in \mathbb{R}^{d\times K}$. {$\widehat{\bfV}$ is the estimation of matrix $\bfV$ and can be computed from the right singular vectors of $\widehat{\bfM}_{2}$.} 
The column vectors of ${\bfV}$ form a basis for the subspace spanned by $\{\bfw_{j}^* \}_{j=1}^{K}$, which indicates that $\bfV\bfV^T\bfw_j^* = \bfw_j^*$ for any $j\in[K]$.
Then, from \eqref{eqn: M_3_GNN}, 
$\bfM_{3}(\widehat{\bfV}, \widehat{\bfV}, \widehat{\bfV})\in\mathbb{R}^{K\times K \times K}$  is defined as
 \begin{equation}\label{eqn: M_3_v1}
 \begin{split}
 	 &\bfM_{3}(\widehat{\bfV}, \widehat{\bfV}, \widehat{\bfV}) 
 	:= \mathbb{E}_{\bfX} \Big\{ y\big[(\bfa_n^T\bfX\widehat{\bfV})^{\otimes 3}
 	   - (\bfa_n^T\bfX\widehat{\bfV})\widetilde{\otimes} \bfI  \big]\Big\}.
\end{split}
\end{equation}
 
Similar to the case of $\widehat{\bfM}_{3}$, by applying the tensor decomposition method in $\widehat{\bfM}_{3}(\widehat{\bfV}, \widehat{\bfV}, \widehat{\bfV})$, one can obtain a series of normalized vectors, denoted as $\{\widehat{\bfu}_j \}_{j=1}^K \in \mathbb{R}^{K}$, which are the estimates of $\{\bfV^T\overline{\bfw}_j^*\}_{j=1}^K$.
Then, $\widehat{\bfV}\widehat{\bfu}_j$ is an estimate of $\overline{\bfw}_j^*$ since $\overline{\bfw}^*_j$ lies in the column space of $\bfV$ with $\bfV\bfV^T\overline{\bfw}_j^*=\overline{\bfw}_j^*$. 

From \cite{ZSJB17}, \eqref{eqn: M_1_GNN} can be written as 
\begin{equation}\label{eqn: M_1_2_GNN}
\bfM_1 = \sum_{j=1}^{K} \psi_1(\overline{\bfw}_{j}^*){\|\bfw_j^*\|_2{\bar{\bfw}}_j^*},
\end{equation}
where $\psi_1$ depends on the distribution of $\bfX$. 
Since the distribution of $\bfX$ is known, the values of $\psi(\widehat{\overline{\bfw}}^*_j)$ can be calculated exactly. Then, the magnitudes of $\bfw_j^*$'s are estimated through solving the following optimization problem:
\begin{equation}\label{eqn: int_op_GNN}
	\widehat{\boldsymbol{\alpha}}=\arg\min_{\boldsymbol{\alpha}\in\mathbb{R}^K}:\quad  \Big|\widehat{\bfM}_{1} - \sum_{j=1}^{K}\psi(\widehat{\overline{\bfw}}^*_j) \alpha_j \widehat{\overline{\bfw}}^*_{j}\Big|.
\end{equation}
 Thus,  $\bfW^{(0)}$ is given as 
 $\begin{bmatrix}
 \widehat{\alpha}_1\widehat{\overline{\bfw}}^*_{1}, &\cdots,& \widehat{\alpha}_K\widehat{\overline{\bfw}}^*_{K}
 \end{bmatrix}$.


	\section{Main Theoretical Results}
	
	
	Theorems \ref{Thm: major_thm_lr} and \ref{Thm: major_thm_cl} state our major results about the GNN model  for regression  and binary classification, respectively. Before formally presenting the results, we first summarize the key findings as follows. 
	
	\textbf{1. Zero generalization error of the learned model.} Algorithm ~\ref{Alg} can return $\bfW^*$ exactly for   regression (see \eqref{eqn: linear_convergence_lr})   and approximately for binary classification (see \eqref{eqn: linear_convergence2_cl}). 
	Specifically, since $\bfW^*$ is {often} not  a solution to \eqref{eqn: classification}, 
	Algorithm~\ref{Alg} returns a critical point $\widehat{\bfW}$ that is sufficiently close to $\bfW^*$, and the distance decreases with respect to the number of samples in the order of $\sqrt{1/|\Omega|}$. 
	Thus, with a sufficient number of samples, $\widehat{\bfW}$ {will be close to $\bfW^*$ and}  achieves a zero generalization error approximately for binary classification.  Algorithm~\ref{Alg} always returns $\bfW^*$ exactly for   regression, a zero {generalization} error is thus achieved.
	


\textbf{2. 	Fast linear convergence of  Algorithm~\ref{Alg}.}  Algorithm~\ref{Alg} is proved to converge linearly to $\bfW^*$ for regression and $\widehat{\bfW}$ for classification, as shown in \eqref{eqn: linear_convergence_lr} and \eqref{eqn:converge2}. That means the distance of the estimate during the iterations to $\bfW^*$ (or $\widehat{\bfW}$) decays exponentially. Moreover,  Algorithm~\ref{Alg} converges faster than the vanilla GD. The rate of convergence is $1-\Theta\big(\frac{1}{\sqrt{K}}\big)$
for regression
and $1-\Theta\big(\frac{1}{K}\big)$ for classification, where $K$ is the number of filters in the hidden layer. In comparison, the convergence rates of GD are $1-\Theta\big(\frac{1}{K}\big)$ and $1-\Theta\big(\frac{1}{K^2}\big)$, respectively. Note that a smaller value of the rate of convergence corresponds to faster convergence. 
 We remark that this is the first theoretical guarantee of  AGD methods for learning GNNs.

\textbf{3. Sample complexity analysis.} $\bfW^*$ can be estimated exactly for regression and approximately for classification, provided that the number of  samples is in the order of $(1+\delta^2) \textrm{poly}(\sigma_1(\bfA), K) d\log N\log(1/\varepsilon)$, as shown in \eqref{eqn: sample_complexity_lr} and \eqref{eqn: sample_complexity_cl}, where $\varepsilon$ is the desired estimation error tolerance. $\bfW^*$ has $Kd$ parameters,  where $K$ is the number of nodes in the hidden layer, and $d$ is the feature dimension.  Our sample complexity is order-wise optimal with respect to $d$  and  only logarithmic with respect to the total number of features $N$. 
{We further show that} the sample complexity is also positively   associated  with $\sigma_1(\bfA)$ and $\delta$.  That characterizes the relationship between the sample complexity and graph structural properties.  From Lemma \ref{Lemma: sigma_1}, we know that given $\delta$, $\sigma_1(\bfA)$ is positively correlated with the average node degree $\delta_{\text{ave}}$. Thus, the required number of samples increases when the maximum and average degrees of the graph increase. That coincides with the intuition that more edges in the graph corresponds to the stronger dependence of the labels on neighboring features, {thus requiring} more samples to learn these dependencies. Our sample complexity quantifies this intuition explicitly. 
 
Note that the graph structure affects this bound only through $\sigma_1(\bfA)$ and $\delta$. Different graph structures may require a similar number of samples to estimate $\bfW^*$, as long as they have similar $\sigma_1(\bfA)$ and $\delta$. We will verify this property on different graphs numerically in Figure~\ref{Figure: N_vs_graph}.
  
        \begin{lemma}\label{Lemma: sigma_1}
 	Give an un-directed graph $\mathcal{G}=\{\mathcal{V}, \mathcal{E} \}$ and the   normalized adjacency matrix $\bfA$ as defined in Section \ref{Section: pf_1}, 
 	the largest singular value $\sigma_1(\bfA)$ of $\bfA$ satisfies
		\begin{equation}
			\frac{ 1+\delta_{\textrm{ave}} }{1+ \delta_{\max} } \le \sigma_1(\bfA)\le 1, 
		\end{equation}
 		where $\delta_{\text{ave}}$ and $\delta$ are the average and maximum node degree, respectively. 
	\end{lemma}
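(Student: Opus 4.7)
The plan is to prove the two bounds separately using standard spectral tools, exploiting that $\bfA$ is symmetric (since the underlying graph is undirected) so its largest singular value equals its spectral radius $\rho(\bfA)$.

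For the upper bound $\sigma_1(\bfA)\le 1$, I would observe that $\bfA=\bfD^{-1/2}\tilde{\bfA}\bfD^{-1/2}$ is similar to $\bfD^{-1}\tilde{\bfA}$ via $\bfD^{-1}\tilde{\bfA}=\bfD^{-1/2}\bfA\bfD^{1/2}$, so these two matrices share the same eigenvalues. Because $\tilde{\bfA}$ has added self-loops, every $D_{ii}\ge 1$ so $\bfD$ is invertible, and by construction $\sum_j \tilde{A}_{ij}=D_{ii}$. Hence $\bfD^{-1}\tilde{\bfA}$ is row-stochastic, which immediately forces $\rho(\bfD^{-1}\tilde{\bfA})\le 1$, and by similarity and symmetry $\sigma_1(\bfA)=\rho(\bfA)\le 1$.

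For the lower bound, the plan is to invoke the Rayleigh quotient characterization
\[
\sigma_1(\bfA)\ \ge\ \frac{\bfv^\top\bfA\bfv}{\bfv^\top\bfv}
\]
with the test vector $\bfv=\boldsymbol{1}$. The denominator is $N$. For the numerator, I would expand the quadratic form as $\boldsymbol{1}^\top\bfA\boldsymbol{1}=\sum_{i,j}\tilde{A}_{ij}/\sqrt{D_{ii}D_{jj}}$ and bound the denominator inside the sum uniformly: since $D_{ii}=1+d_i\le 1+\delta_{\max}$ for all $i$, we get $\sqrt{D_{ii}D_{jj}}\le 1+\delta_{\max}$. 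Then $\boldsymbol{1}^\top\bfA\boldsymbol{1}\ge (1+\delta_{\max})^{-1}\sum_{i,j}\tilde{A}_{ij}=(1+\delta_{\max})^{-1}\sum_i D_{ii}$. Finally, $\sum_i D_{ii}=\sum_i(1+d_i)=N+2|\mathcal{E}|=N(1+\delta_{\text{ave}})$, from which dividing by $N$ yields the claimed $(1+\delta_{\text{ave}})/(1+\delta_{\max})$.

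Neither step is an obstacle in the usual sense; the only nontrivial choice is picking $\bfv=\boldsymbol{1}$ in the Rayleigh quotient (rather than, say, $\bfv=\bfD^{1/2}\boldsymbol{1}$, which would recover the exact eigenvalue $1$ but would not surface the desired degree-based ratio). The matching uniform bound $\sqrt{D_{ii}D_{jj}}\le 1+\delta_{\max}$ is what converts a purely combinatorial quantity (total degree) into the claimed spectral lower bound, and it is also where the maximum degree $\delta_{\max}$ enters the denominator.
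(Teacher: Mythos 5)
Your proof is correct and complete. The upper bound via similarity of $\bfA$ to the row-stochastic matrix $\bfD^{-1}\tilde{\bfA}$ is the standard argument; the lower bound via the Rayleigh quotient with test vector $\boldsymbol{1}$, combined with the uniform bound $\sqrt{D_{ii}D_{jj}}\le 1+\delta$ inside the double sum, produces exactly the claimed ratio $(1+\delta_{\text{ave}})/(1+\delta)$, and the bookkeeping $\sum_i D_{ii}=N(1+\delta_{\text{ave}})$ is right. Your parenthetical observation is sharper than the lemma itself: since $\bfD^{1/2}\boldsymbol{1}$ is an eigenvector of $\bfA$ with eigenvalue $1$, and all eigenvalues lie in $[-1,1]$, one actually has $\sigma_1(\bfA)=1$ identically, so the lower bound is slack and the inequality as stated is never tight unless the graph is regular. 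The lemma in the paper is only invoked to expose a degree-dependent quantity in the sample-complexity heuristic, so it is formally correct, but it is worth being aware that the object being bounded is constant.
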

 
 \subsection{Formal Theoretical Guarantees}\label{sec:theorem}
	To formally present the results, some parameters in the results are defined as follows. $\sigma_j(\bfW^*)$ ($j\in [N]$) is the $j$-th singular value of $\bfW^*$. $\kappa=\sigma_1(\bfW^*)/\sigma_K(\bfW^*)$ is the conditional number of $\bfW^*$. 
	$\gamma$ is defined as $\prod_{j=1}^K\sigma_j(\bfW^*)/\sigma_K(\bfW^*)$. For a fixed $\bfW^*$, both $\gamma$ and $\kappa$ can be viewed as constants and do not affect the order-wise analysis.

	\begin{theorem}\label{Thm: major_thm_lr}
		(Regression) Let $\{ \bfW^{(t)} \}_{t=1}^T$ be the sequence generated by  Algorithm \ref{Alg} to solve \eqref{eqn:linear_regression} with $\eta= {K}/{(8\sigma_1^2(\bfA))}$. 
		Suppose the number of samples satisfies 
		\begin{equation}\label{eqn: sample_complexity_lr}
			|\Omega|\ge C_1\varepsilon_0^{-2} \kappa^9\gamma^2 { (1+\delta^2)\sigma_1^4(\bfA)} K^8 d\log N \log(1/\varepsilon)
		\end{equation} 
		for some constants $C_1>0$ and $\varepsilon_0\in(0,\frac{1}{2})$. Then $\{ \bfW^{(t)} \}_{t=1}^{T}$ converges linearly to $\bfW^*$ with probability at least ${1-K^2T}\cdot N^{-10}$ as  
		\begin{equation}\label{eqn: linear_convergence_lr}
		\begin{gathered}
		\|	\W[t]-\bfW^*\|_2
		\le\nu(\beta)^t\|	\W[0]-\bfW^*\|_2,\\
		\text{and}\quad \|	\W[T]-\bfW^*\|_2
		\le \varepsilon \|\bfW^* \|_2 
		\end{gathered}
		\end{equation}
		where $\nu(\beta)$ is the  rate of convergence that depends on $\beta$.
		Moreover, we have
		\begin{equation}\label{eqn: accelerated_rate_lr}
		\nu(\beta)< \nu(0) \text{\quad for some small nonzero } \beta.
		\end{equation} 
		Specifically, let $\beta^* = \Big(1-\sqrt{\frac{1-\varepsilon_0}{88\kappa^2 \gamma }}\Big)^2$, we have
		\begin{equation}\label{eqn:rate}
		\begin{gathered}
		 \nu(0)\ge 1-\frac{1-\varepsilon_0}{{88\kappa^2 \gamma K}},\quad 
		\nu(\beta^*)= 1-\frac{1-\varepsilon_0}{\sqrt{88\kappa^2 \gamma  K}}.	
		\end{gathered}
		\end{equation}
	\end{theorem}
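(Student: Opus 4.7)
The plan is to establish Theorem~\ref{Thm: major_thm_lr} by combining a tensor-initialization guarantee with a local linear-convergence analysis of AGD in a basin around $\bfW^*$, mirroring the strategy used for CNNs in Chapter~\ref{chapter: 3} but with careful bookkeeping for the graph aggregation. First, I would adapt the tensor initialization analysis to the GNN setting: the moments $\bfM_1,\bfM_2,\bfM_3$ in \eqref{eqn: M_1_GNN}--\eqref{eqn: M_3_GNN} now involve $\bfa_n^T\bfX$ in place of $\bfx_n$, but since $\bfa_n^T\bfX\in\mathbb{R}^d$ is still Gaussian with covariance $\|\bfa_n\|_2^2\bfI_d$, the decompositions of the population moments in terms of $\{\bfw_j^*/\|\bfw_j^*\|_2\}$ carry over up to a known scaling by $\|\bfa_n\|_2\le\sqrt{1+\delta}$. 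Concentration of $\widehat{\bfM}_k$ to its expectation is where the graph structure first enters the sample complexity: the variance of the empirical sum over $n\in\Omega$ scales with $\|\bfA\|_F^2$ and the subgaussian norm of $\bfa_n^T\bfX$, which together contribute the factor $(1+\delta^2)\sigma_1^4(\bfA)$ appearing in \eqref{eqn: sample_complexity_lr}. This yields $\|\bfW^{(0)}-\bfW^*\|_2 = O\!\big(\text{poly}(\kappa,\gamma,\sigma_1(\bfA),\delta,K)\sqrt{d\log N/|\Omega|}\,\big)\|\bfW^*\|_2$, which is smaller than the required basin radius once $|\Omega|$ meets \eqref{eqn: sample_complexity_lr}.

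Second, I would characterize the local landscape of $\hat{f}_{\Omega_t}$ in a neighborhood of $\bfW^*$ by computing the Hessian $\nabla^2\hat{f}_{\Omega_t}(\bfW)$, which is a sum over $n\in\Omega_t$ of rank-one blocks involving $\phi'(\bfa_n^T\bfX\bfw_j)(\bfa_n^T\bfX)^T(\bfa_n^T\bfX)$. At a Gaussian input, the ReLU derivative is almost-surely well-defined, and a direct computation analogous to the NN case shows that the population Hessian at $\bfW^*$ has smallest eigenvalue $\Theta\!\big(1/(K\kappa^2\gamma)\big)$ and largest eigenvalue $\Theta\!\big(\sigma_1^2(\bfA)/K\big)$, yielding an effective condition number $\kappa_{\text{eff}}=\Theta(\kappa^2\gamma)$ consistent with the constant $88\kappa^2\gamma$ in \eqref{eqn:rate}. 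Passing from the population to the empirical Hessian uses matrix Bernstein; the fresh-sample partitioning in Algorithm~\ref{Alg} guarantees that $\nabla^2\hat{f}_{\Omega_t}$ is independent of $\bfW^{(t)}$, so a union bound over $T=\log(1/\varepsilon)$ iterations produces the $\log(1/\varepsilon)$ factor in the sample complexity and the $K^2T\cdot N^{-10}$ failure probability.

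Third, given uniform spectral bounds on the empirical Hessian in the basin, the Heavy Ball convergence follows by studying the augmented iteration
\begin{equation*}
\begin{bmatrix}\bfW^{(t+1)}-\bfW^*\\ \bfW^{(t)}-\bfW^*\end{bmatrix}=\bfP(\bfH_t,\beta)\begin{bmatrix}\bfW^{(t)}-\bfW^*\\ \bfW^{(t-1)}-\bfW^*\end{bmatrix},
\end{equation*}
where $\bfP$ is a $2\times 2$ block matrix whose blocks are $\bfI-\eta\bfH_t+\beta\bfI$, $-\beta\bfI$, $\bfI$, $\bfzero$. Minimizing the spectral radius of $\bfP$ over $\beta$ gives the classical heavy-ball rate $1-\Theta(1/\sqrt{\kappa_{\text{eff}}K})$ at $\beta^*=(1-\sqrt{(1-\varepsilon_0)/(88\kappa^2\gamma)}\,)^2$, while $\beta=0$ yields only $1-\Theta(1/(\kappa_{\text{eff}}K))$; plugging in $\kappa_{\text{eff}}=\Theta(\kappa^2\gamma)$ produces the exact expressions in \eqref{eqn:rate} and, in particular, \eqref{eqn: accelerated_rate_lr}. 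Combining the linear contraction with the initialization bound and iterating $T$ times yields \eqref{eqn: linear_convergence_lr}.

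The main obstacle will be the Hessian concentration, because the summands $(\bfa_n^T\bfX)^T(\bfa_n^T\bfX)$ indexed by $n\in\Omega_t$ are \emph{not} independent: neighboring nodes share rows of $\bfX$. I plan to resolve this by expanding $\bfa_n^T\bfX=\sum_j A_{nj}\bfx_j^T$ and re-expressing the Hessian as a double sum over the truly independent Gaussians $\{\bfx_j\}_{j=1}^N$, then bounding the resulting operator-valued chaos using a decoupling argument together with $\|\bfA\|_F^2\le N\sigma_1^2(\bfA)$ and $\|\bfA\|_\infty\le 1+\delta$. The uniform bound $\|\bfa_n\|_2^2\le 1+\delta$ enters through the subgaussian parameter of each $\bfa_n^T\bfX$, producing the $(1+\delta^2)$ multiplier, while the operator norm $\sigma_1(\bfA)$ controls the variance proxy for the matrix Bernstein step and explains its fourth-power appearance in \eqref{eqn: sample_complexity_lr}. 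Once this concentration is in place, the remaining steps follow the standard AGD template and the proof closes.
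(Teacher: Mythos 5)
Your overall strategy --- tensor initialization into a local basin, spectral control of a Hessian in that basin, and heavy-ball convergence via the augmented $2Kd\times 2Kd$ block iteration with a fresh sample per step --- is the same as the paper's, and your accounting for where the factors $(1+\delta^2)$, $\sigma_1^4(\bfA)$, and the effective condition number $\Theta(K\kappa^2\gamma)$ enter is essentially right. However, your second step has a genuine gap. You propose to concentrate the empirical Hessian $\nabla^2\hat{f}_{\Omega_t}$ around the population Hessian via matrix Bernstein and then run the augmented recursion using uniform spectral bounds on $\nabla^2\hat{f}_{\Omega_t}$ over the basin. For ReLU, $\hat{f}_{\Omega_t}$ is only piecewise quadratic, so $\nabla^2\hat{f}_{\Omega_t}$ has jump discontinuities on every activation-pattern boundary. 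The mean-value point $\widehat{\bfW}^{(t)}$ that your recursion implicitly needs lies on the segment joining $\bfW^*$ and $\bfW^{(t)}$ and is determined by $\Omega_t$ itself, so a pointwise Bernstein bound at a fixed $\bfW$ does not suffice; you would need uniform concentration over the whole segment, and the lack of Lipschitz continuity of $\nabla^2\hat{f}_{\Omega_t}$ prevents the standard $\epsilon$-net argument from closing without an additional argument controlling the probability of activation-pattern flips between nearby iterates.

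The paper sidesteps this entirely by applying the intermediate value theorem to the \emph{population} gradient: since $\nabla f_{\Omega_t}$ is smooth (the Gaussian expectation averages out the ReLU kinks), one writes $\nabla f_{\Omega_t}(\bfW^{(t)}) = \nabla^2 f_{\Omega_t}(\widehat{\bfW}^{(t)})(\bfW^{(t)}-\bfW^*)$ with a Hessian evaluated at a point that does not depend on $\Omega_t$, and then separately bounds the first-order deviation $\|\nabla\hat{f}_{\Omega_t}(\bfW^{(t)})-\nabla f_{\Omega_t}(\bfW^{(t)})\|_2$ at the single point $\bfW^{(t)}$, which is measurable with respect to $\Omega_0,\dots,\Omega_{t-1}$ only. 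That is exactly why the fresh-sample partition and a pointwise concentration lemma for the gradient suffice. The Hessian-concentration route you sketch is in fact what the paper uses for the binary classification theorem, where the sigmoid activation makes the empirical loss $C^2$; for ReLU regression the non-smoothness forces the first-order decomposition. Your decoupling idea for the shared-feature dependence, expanding $\bfa_n^T\bfX=\sum_j A_{nj}\bfx_j^T$ and bounding the resulting chaos using $\sigma_1(\bfA)$ and $1+\delta^2$, is sound and is needed in whichever concentration lemma you ultimately prove, but the lemma you should be proving is the gradient lemma, not the Hessian lemma.
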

	\begin{theorem}\label{Thm: major_thm_cl}
	(Classification)	Let $\{ \bfW^{(t)} \}_{t=1}^T$ be the sequence generated by Algorithm \ref{Alg} to solve \eqref{eqn: classification} with $\eta= {1}/{(2\sigma_1^2(\bfA))}$. 
		Suppose the number of samples satisfies 
		\begin{equation}\label{eqn: sample_complexity_cl}
		|\Omega|\ge C_2\varepsilon_0^{-2}(1+\delta^2)\kappa^8\gamma^2 \sigma_1^4(\bfA)K^8d\log N \log(1/\varepsilon)
		\end{equation} 
		for some positive constants $C_2$ and $\varepsilon_0\in(0,1)$. Then, let $\widehat{\bfW}$ be the nearest critical point of  \eqref{eqn: classification} to $\bfW^*$, we have that $\{ \bfW^{(t)} \}_{t=1}^{T}$ converges linearly to $\widehat{\bfW}$ with probability at least {$1-K^2T\cdot N^{-10}$} as  
		\begin{equation}\label{eqn:converge2}
		\begin{gathered}
		\|\bfW^{(t)} -\widehat{\bfW} \|_2 \le \nu(\beta)^t\|\bfW^{(0)} -\widehat{\bfW}\|_2,\\
		\textrm{ and }\|\bfW^{(T)} -\widehat{\bfW} \|_2 \le \varepsilon\|\bfW^{(0)} -\widehat{\bfW}\|_2.
		\end{gathered}
		\end{equation}
		The distance between $\widehat{\bfW}$ and $\bfW^*$ is bounded by
		\begin{equation} \label{eqn: linear_convergence2_cl}
		\| \widehat{\bfW} -\bfW^*  \|_2 \le C_3(1-\varepsilon_0)^{-1}\kappa^2 \gamma K \sqrt{\frac{(1+\delta^2)d\log N}{|\Omega|}},
		\end{equation}
		where $\nu(\beta)$ is the rate of convergence   that depends on $\beta$, and $C_3$ is some positive constant.
		Moreover, we have
		\begin{equation}\label{eqn: accelerated_rate_cl}
		\nu(\beta)< \nu(0) \text{\quad for some small nonzero } \beta,
		\end{equation} 
		Specifically, let $\beta^* = \Big(1-\sqrt{\frac{1-\varepsilon_0}{11\kappa^2\gamma K^2}}\Big)^2$, we have
		\begin{equation}\label{eqn: accelerated_rate1_cl}
		\begin{gathered}
		\nu(0)= 1-\frac{1-\varepsilon_0}{11\kappa^2\gamma K^2}, \quad
		\nu(\beta^*)= 1-\sqrt{\frac{1-\varepsilon_0}{11\kappa^2\gamma K^2}}.	
		\end{gathered}
		\end{equation}
	\end{theorem}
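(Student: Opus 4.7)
\textbf{Proof proposal for Theorem \ref{Thm: major_thm_cl}.} The plan is to follow a local-geometry-plus-initialization strategy, which is the standard template in the model-recovery line of work (e.g.\ \cite{ZSJB17,ZSD17,FCL20}), but with two nontrivial twists forced by the GNN setting: (i) the effective ``feature vector'' at node $n$ is $\bfa_n^T\bfX$, whose rows are no longer i.i.d.\ standard Gaussians but correlated Gaussians with covariance $\|\bfa_n\|^2\bfI$ and cross-correlations dictated by $\bfA\bfA^T$, and (ii) $\bfW^*$ is not a minimizer of the cross-entropy empirical risk \eqref{eqn: classification}, so one only obtains a nearby critical point $\widehat{\bfW}$. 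The high-level structure is (a) characterize the population Hessian $\nabla^2 f(\bfW)$ at $\bfW^*$ and prove local strong convexity and smoothness in a neighborhood, (b) use matrix/vector Bernstein concentration to transfer these bounds to $\nabla^2\hat{f}_\Omega$ under the stated sample complexity, (c) combine with the tensor-initialization guarantee (analogous to Lemma~\ref{Thm: initialization}, adapted to the graph setting via the redefinitions in \eqref{eqn: M_1_GNN}--\eqref{eqn: M_3_GNN}) to ensure $\bfW^{(0)}$ lies inside the strongly convex region, and finally (d) run the Heavy Ball recursion on the augmented iterate and read off the convergence rate.

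First, I would compute the population gradient and Hessian of the cross-entropy loss. Because $\phi$ is the sigmoid and the labels are Bernoulli with parameter $g(\bfW^*;\bfa_n^T\bfX)$, a direct calculation gives $\mathbb{E}\,\nabla\hat{f}_\Omega(\bfW^*)=\boldsymbol{0}$, so $\bfW^*$ is a critical point of the population risk $f$. Then $\nabla^2 f(\bfW^*)$ can be written as an expectation of rank-one outer products of Gaussian vectors weighted by $\phi'(\bfa_n^T\bfX\bfw_j^*)$, scaled by $1/K^2$. Following the Hermite-expansion approach of \cite{ZSJB17}, I would lower-bound $\lambda_{\min}(\nabla^2 f(\bfW^*))$ by a constant times $1/(K^2\kappa^2\gamma \sigma_1^2(\bfA))$ and upper-bound $\lambda_{\max}(\nabla^2 f(\bfW^*))$ by $O(\sigma_1^2(\bfA))$; the appearance of $\sigma_1(\bfA)$ is the only real change from the NN analysis and comes from $\mathbb{E}[\bfX^T\bfa_n\bfa_n^T\bfX]=\|\bfa_n\|^2\bfI$ and $\sum_n\|\bfa_n\|^2\le\sigma_1^2(\bfA)|\Omega|$. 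A Lipschitz-in-$\bfW$ argument on the Hessian then extends these bounds to a Euclidean ball of radius $\Theta(1/\text{poly}(K,\kappa))$ around $\bfW^*$.

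Next, to pass from $f$ to $\hat{f}_\Omega$, I would apply a matrix Bernstein inequality to the sum $\tfrac1{|\Omega|}\sum_{n\in\Omega}\nabla^2\ell(\bfW;\bfa_n^T\bfX)$. The sub-exponential norm of each summand scales like $(1+\delta^2)\sigma_1^2(\bfA)$: the $(1+\delta^2)$ factor arises because the rows of $\bfa_n^T\bfX$ indexed by distinct $n$ are only weakly correlated when the max degree is small, and the $\log N$ in \eqref{eqn: sample_complexity_cl} absorbs the union-bound cost over an $\varepsilon$-net of $\bfW$. This yields, with probability at least $1-N^{-10}$ per iteration, a uniform spectral approximation of $\nabla^2\hat{f}_{\Omega_t}$ by $\nabla^2 f$ to within $\varepsilon_0\lambda_{\min}(\nabla^2 f(\bfW^*))$; disjointness of the $\Omega_t$'s makes the iterations independent, and a union bound over $T$ delivers the stated probability. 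Combining with step (a), $\hat{f}_{\Omega_t}$ is locally $m$-strongly convex and $L$-smooth with $L/m = \Theta(\kappa^2\gamma K^2)$ inside the relevant ball. The bound \eqref{eqn: linear_convergence2_cl} on $\|\widehat{\bfW}-\bfW^*\|$ then follows from the usual inequality $\|\widehat{\bfW}-\bfW^*\|\le \|\nabla \hat{f}_\Omega(\bfW^*)\|/m$, where a separate vector Bernstein bound shows $\|\nabla\hat{f}_\Omega(\bfW^*)\|=O\big(\sqrt{(1+\delta^2)d\log N/|\Omega|}\big)$ because $\mathbb{E}\,\nabla\hat{f}_\Omega(\bfW^*)=\boldsymbol{0}$.

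Finally, with local strong convexity and smoothness in hand, I would analyze the Heavy Ball update by studying the block iteration matrix acting on $[\bfW^{(t+1)}-\widehat{\bfW};\,\bfW^{(t)}-\widehat{\bfW}]$. Writing $\bfW^{(t+1)}-\widehat{\bfW}=(\bfI-\eta\bfH_t+\beta\bfI)(\bfW^{(t)}-\widehat{\bfW})-\beta(\bfW^{(t-1)}-\widehat{\bfW})$ with $\bfH_t$ a mean-value Hessian in the strongly convex region, the spectral radius of this $2\times2$ block operator on each eigen-direction of $\bfH_t$ equals $\max_\lambda\rho(\lambda,\beta)$ with the classical Heavy Ball formula; choosing $\eta=1/(2\sigma_1^2(\bfA))$ and $\beta^*=(1-\sqrt{(1-\varepsilon_0)/(11\kappa^2\gamma K^2)})^2$ balances the two eigenvalues and yields the rate in \eqref{eqn: accelerated_rate1_cl}, strictly faster than the GD rate obtained by $\beta=0$. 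The tensor-initialization guarantee, proved by repeating the argument of Lemma~\ref{Thm: initialization} with $\bfx_n$ replaced by $\bfa_n^T\bfX$ (so the additional factor of $\sigma_1(\bfA)$ and $\delta$ shows up in the concentration of $\widehat{\bfM}_1,\widehat{\bfM}_2,\widehat{\bfM}_3$), places $\bfW^{(0)}$ inside the basin, completing the argument. The main obstacle I anticipate is step (b): the aggregated features $\{\bfa_n^T\bfX\}_{n\in\Omega}$ are not independent across $n$, so extra care is needed in the Bernstein argument to extract the sharp $(1+\delta^2)\sigma_1^4(\bfA)$ dependence in \eqref{eqn: sample_complexity_cl} rather than a looser $\sigma_1^2(\bfA)\cdot\sigma_1^2(\bfA^T\bfA)$ bound, and this is where the max-degree $\delta$ enters through a Hanson--Wright-type inequality for quadratic forms in the correlated Gaussians $\bfX^T\bfa_n$.
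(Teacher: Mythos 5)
Your proposal is correct and follows essentially the same route as the paper: bound the (local) Hessian spectrum of the cross-entropy empirical risk around $\bfW^*$ (the paper's Lemma~\ref{lemma: local_convex_cl}), control the gradient deviation $\|\nabla\hat f_{\Omega_t}-\nabla f_{\Omega_t}\|$ (Lemma~\ref{lemma: first_order_cl}), expand the Heavy-Ball recursion on the augmented iterate around the nearest critical point $\widehat{\bfW}$ using that $\nabla\hat f_{\Omega_t}$ is smooth for the sigmoid, and read off $\nu(\beta)$ from the spectral radius of the $2\times 2$ block operator; the bound on $\|\widehat{\bfW}-\bfW^*\|$ likewise comes from strong convexity together with $\nabla f_{\Omega_t}(\bfW^*)=0$ and a vector-Bernstein bound on $\nabla\hat f_{\Omega_t}(\bfW^*)$, exactly as in the paper's \eqref{eqn: cccccc1}--\eqref{eqn: cccccc3}. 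One small internal inconsistency: you state $\lambda_{\min}\bigl(\nabla^2 f(\bfW^*)\bigr)\gtrsim 1/(K^2\kappa^2\gamma\,\sigma_1^2(\bfA))$, with $\sigma_1^2(\bfA)$ in the denominator, yet then conclude $L/m=\Theta(\kappa^2\gamma K^2)$; the paper's Lemma~\ref{lemma: local_convex_cl} has $\lambda_{\min}\gtrsim\sigma_1^2(\bfA)/(\kappa^2\gamma K^2)$ (numerator, not denominator), which is what makes the $\sigma_1(\bfA)$-dependence cancel in the condition number and is also needed for $\eta=1/(2\sigma_1^2(\bfA))$ to give $\eta\lambda_{\min}\approx 1/(11\kappa^2\gamma K^2)$ in \eqref{eqn: accelerated_rate1_cl}.
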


\subsection{Comparison with Existing Works}
 
Only \cite{VZ19,DHPSWX19} analyze the generalization error of one-hidden-layer GNNs in regression problems, while there is no existing work about the generalization error in classification problems. \cite{VZ19,DHPSWX19} show that the difference between the risks in the testing data and the training data
decreases in the order of $1/\sqrt{|\Omega|}$ as the sample size increases.  The GNN model in \cite{VZ19} only has one filter in the hidden layer, i.e., $K=1$, and the loss function is required to be a smooth function, excluding ReLU. Ref.~\cite{DHPSWX19} only considers infinitely wide GNNs.  In contrast, $\bfW^*$  returned by Algorithm \ref{Alg} can achieve zero risks for both training data and testing data in regression problems. Our results apply to an arbitrary number of filters and the ReLU activation function. 
 Moreover, this chapter is the first work that characterizes the {generalization} error of GNNs for binary classification.

When $\delta$ is zero, our model reduces to one-hidden-layer NNs, and the corresponding sample complexity is $O\Big( \textrm{poly}(K) d\log N\log(1/\varepsilon)\Big)$. Our results are at least comparable to, if not better than, the state-of-art theoretical guarantees {that from the prespective of model estimation} for NNs. 
For example, \cite{ZSJB17} considers   one-hidden-layer NNs for regression  and proves the linear convergence of their algorithm to the ground-truth model parameters. The sample complexity of ~\cite{ZSJB17} is also linear in  $d$, but the activation function must be smooth. \cite{ZYWG18} considers one-hidden-layer 
NNs with the ReLU  activation function for regression, but the algorithm cannot converge to the ground-truth parameters exactly but up to a statistical error.  Our result in Theorem \ref{Thm: major_thm_lr} applies to the nonsmooth ReLU function and can recover $\bfW^*$ exactly.
   \cite{FCL20} considers   one-hidden-layer NNs for classification and proves linear convergence of their algorithm to a critical point sufficiently close to   $\bfW^*$ with the distance bounded by   $O(\sqrt{1/|\Omega|})$. The convergence rate in \cite{FCL20} is $1-\Theta({1}/{K^2})$, while Algorithm~\ref{Alg} has a faster convergence   rate of $1-\Theta({1}/{K})$.
\section{Numerical Results}
We verify our results on synthetic graph-structured data.
We consider four types of graph structures as shown in Figure ~\ref{Figure: graph structure}: (a) a connected-cycle graph {having} each node connecting to its $\delta$ closet neighbors;   (b) a two-dimensional grid {having} each node connecting to its nearest neighbors in axis-aligned directions;  (c) a random $\delta$-regular graph {having} each node connecting to $\delta$ other nodes randomly;   (d) a random graph with bounded degree   {having} each node degree selected from $0$ with probability $1-p$ and $\delta$ with probability $p$ for some $p\in[0,1]$. 
\begin{figure}[h]
    \centering
    \begin{subfigure}[h]{0.4\textwidth}
        \centering
        \includegraphics[width=0.8\textwidth]{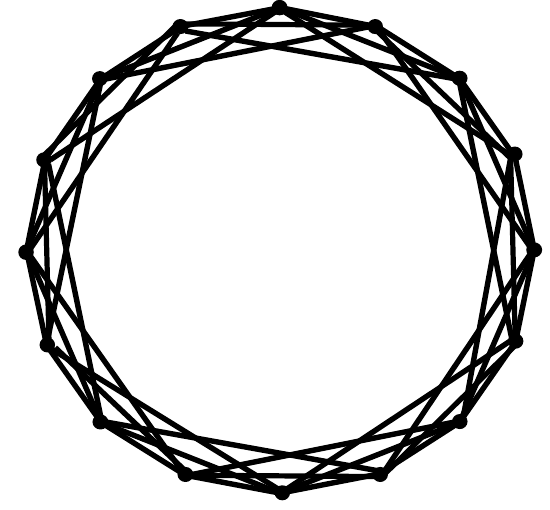}
        \caption{Connected-cycle graph}
    \end{subfigure}%
    ~ 
    \begin{subfigure}[h]{0.4\textwidth}
        \centering
        \includegraphics[width=0.8\textwidth]{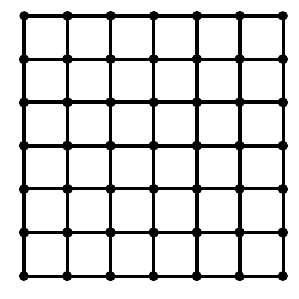}
        \caption{Two-dimensional grid}
    \end{subfigure}
    ~
    \begin{subfigure}[h]{0.4\textwidth}
        \centering
        \includegraphics[width=0.8\textwidth]{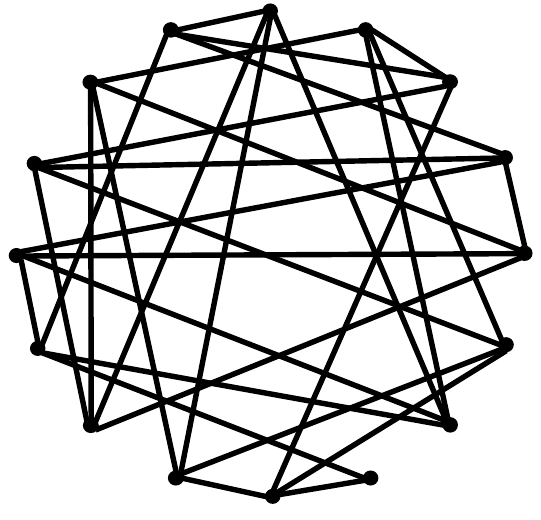}
        \caption{Random regular graph}
    \end{subfigure}%
    ~ 
    \begin{subfigure}[h]{0.4\textwidth}
        \centering
        \includegraphics[width=0.8\textwidth]{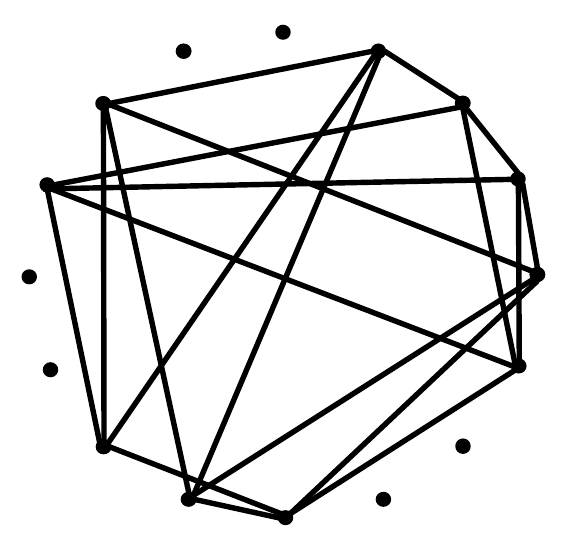}
        \caption{Random graph with a bounded degree}
    \end{subfigure}
    \caption{Different graph structures}
    \label{Figure: graph structure}
\end{figure}
The feature vectors $\{\bfx_n\}_{n=1}^{N}$ are randomly generated from the standard Gaussian distribution $\mathcal{N}(0 , \bfI_{d\times d})$. 
Each entry of  $\bfW^*$ is generated from   $\mathcal{N}(0, 5^2)$ independently. $\{z_n\}_{n=1}^N$ are computed based on \eqref{eqn: y_n}. 
The labels $\{y_n\}_{n=1}^N$ are generated by $y_n=z_n$ and  $\text{Prob}\{y_n=1\}=z_n$ for regression and classification problems, respectively.

During each iteration of Algorithm~\ref{Alg}, we use the whole training data  to calculate the gradient.
The initialization is randomly selected from $\big\{ \bfW^{(0)}\big| \| \bfW^{(0)} - \bfW^* \|_F/\|\bfW^* \|_F <0.5 \big\}$ to reduce the computation. As shown in \cite{FCL20,ZYWG18}, the
random initialization and the tensor initialization have very similar numerical performance. We consider the learning algorithm to be successful in estimation if the relative error,  defined as $\|\bfW^{(t)}-\bfW^*\|_F/\|\bfW^*\|_F$,  is less than $10^{-3}$, where $\bfW^{(t)}$ is the weight matrix returned by Algorithm ~\ref{Alg} when it terminates.


\subsection{Convergence Rate}
We first verify the linear convergence of Algorithm~\ref{Alg}, 
 as shown in \eqref{eqn: linear_convergence_lr} and \eqref{eqn:converge2}.
{Figure \ref{Figure: convergence_k} (a) and (b) show the convergence rate of Algorithm \ref{Alg} when varying the number of nodes in the hidden layer $K$. The dimension $d$ of the feature vectors   is chosen as $10$, and the sample size $|\Omega|$ is chosen as $2000$. We consider the connected-cycle graph in Figure~\ref{Figure: graph structure} (a) with  $\delta = 4$. All cases   converge to $\bfW^*$ with the exponential decay. Moreover, from Figure \ref{Figure: convergence_k}, we can also see that the  rate of convergence is almost a linear function of  $1/\sqrt{K}$. That verifies our theoretical result of the convergence rate of $1-O(1/\sqrt{K})$ in \eqref{eqn:rate}.}
\begin{figure}[h]
    \centering
    \begin{subfigure}[h]{0.5\textwidth}
        \centering
        \includegraphics[width=1.0\textwidth]{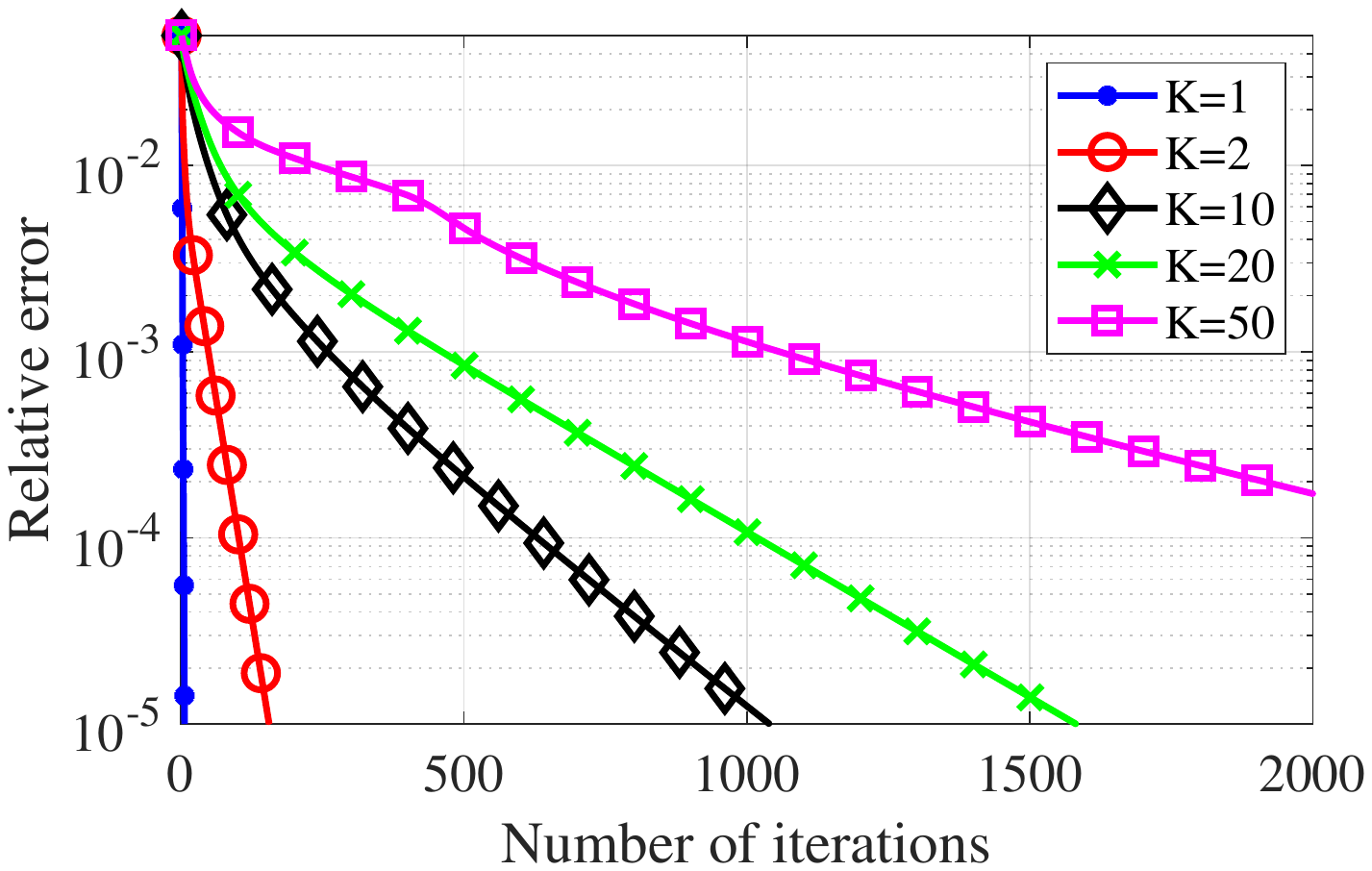}
        \caption{Relative error against iterations}
    \end{subfigure}%
    ~ 
    \begin{subfigure}[h]{0.48\textwidth}
        \centering
        \includegraphics[width=1.0\textwidth]{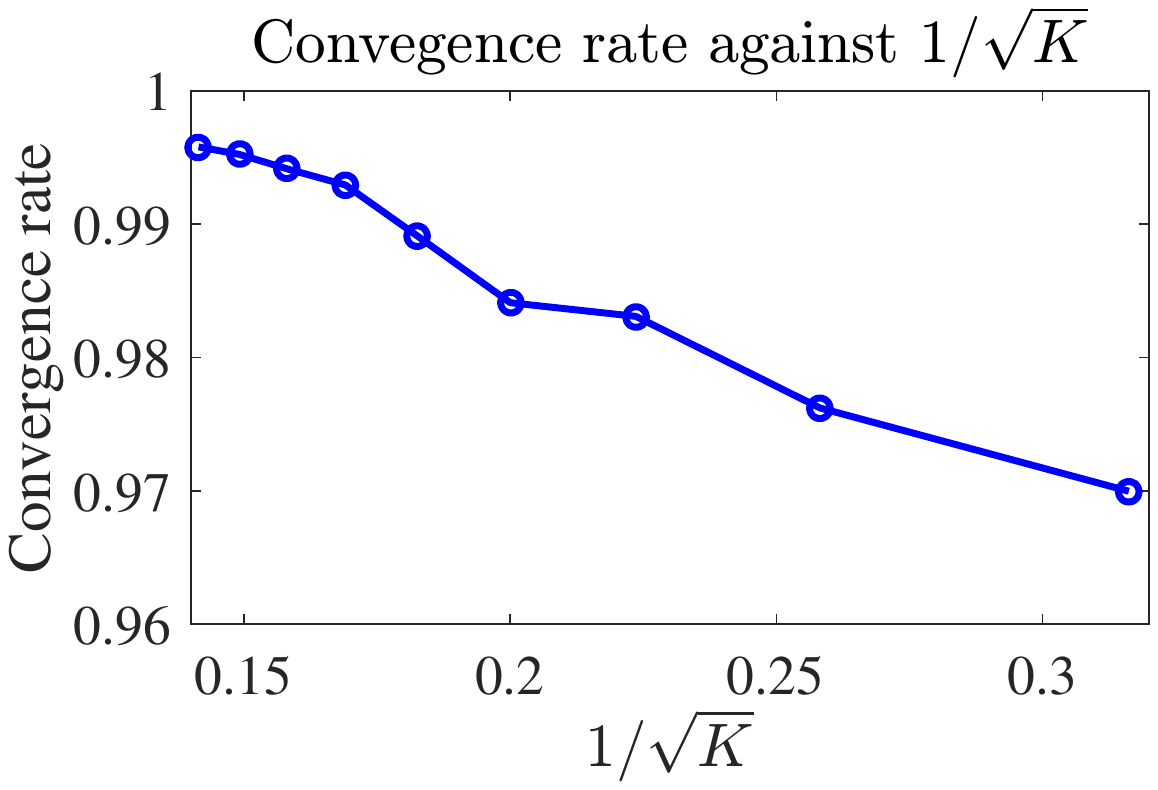}
        \caption{Convergence rate against $1/\sqrt{K}$}
    \end{subfigure}
     	\caption{Convergence rate with different $K$ for a connected-circle graph}
 	\label{Figure: convergence_k}
\end{figure}

Figure~\ref{fig: N_vs_err} compares the  rates of convergence of AGD and GD in regression problems. We consider a connected-cycle graph with  $\delta=4$. The number of samples  $|\Omega|=500$,  $d=10$, and $K=5$.   Starting with the same initialization, we show the smallest number of iterations needed to reach a certain estimation error, and the results are averaged over $100$ independent trials. Both AGD and GD converge linearly. AGD requires a smaller number of the iterations than GD  to achieve the same relative error.

\begin{figure}[h]
	\centering
	\includegraphics[width=0.5\linewidth]{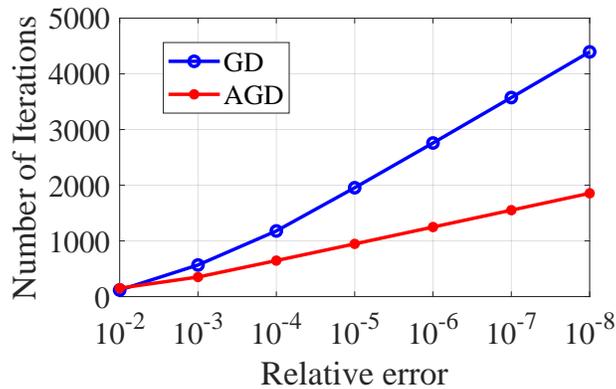}
	\caption{Convergence rates of AGD and GD}
	\label{fig: N_vs_err}
\end{figure}

\subsection{Sample Complexity}
We next study the influence of   $d$, $\delta$, $\delta_{\text{ave}}$,  {and} different graph structures  on the  estimation performance 
of Algorithm \ref{Alg}. These relationships are summarized in the sample complexity analyses in \eqref{eqn: sample_complexity_lr} and \eqref{eqn: sample_complexity_cl} of section \ref{sec:theorem}. We have similar numerical results for both regression and classification,  and here we only present the regression case.


Figures \ref{Figure: Phrase transition of number of samples} (a) and (b)  show  the successful estimation rates  when the degree of graph  $\delta$ and the feature dimension  $d$ changes. We consider the connected-cycle graph in Figure~\ref{Figure: graph structure} (a), and  
$K$ is kept as 5. $d$ is  $40$ in Figure \ref{Figure: Phrase transition of number of samples} (a),  and  $\delta$ is   $4$ in Figure \ref{Figure: Phrase transition of number of samples} (b).
The results are averaged over $100$ independent trials. White block means all trials are successful while black block means all trials fail. We can see that the required number of samples for successful estimation increases as $d$ and $\delta$ increases. 
\begin{figure}[H]
    \centering
    \begin{subfigure}[h]{0.48\textwidth}
        \centering
        \includegraphics[width=0.96\textwidth]{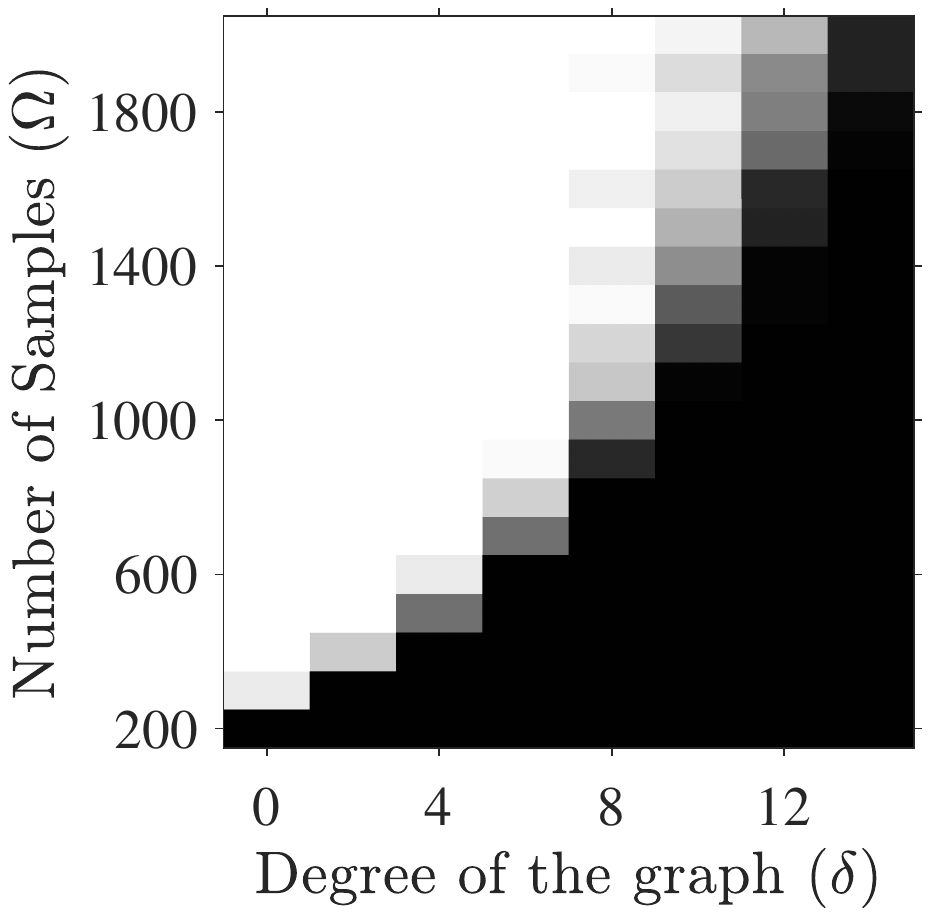}
        \caption{}
    \end{subfigure}%
    ~ 
    \begin{subfigure}[h]{0.48\textwidth}
        \centering
        \includegraphics[width=1.0\textwidth]{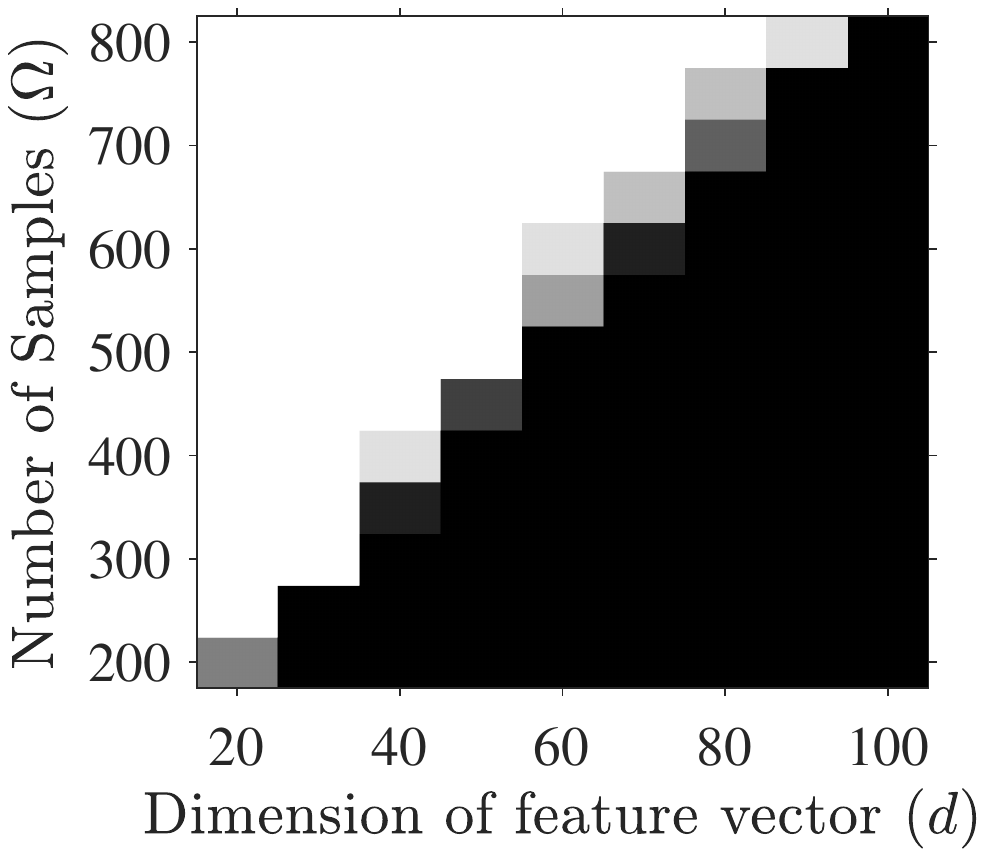}
        \caption{}
    \end{subfigure}
    \caption{Successful estimation rate for varying the required number of samples,  $\delta$, and $d$ in a connected-circle graphs }
    \label{Figure: Phrase transition of number of samples}
\end{figure}

Figure \ref{Figure: sigma} shows the success rate against the sample size $|\Omega|$ for the random graph in Figure~\ref{Figure: graph structure}(d) with different average node degrees.  We vary $p$ to change the average node degree  $\delta_{\text{ave}}$.  $K$ and $d$ are fixed as $5$ and $40$, respectively. The successful rate is calculated based on $100$ independent trials.    
We can see that more samples are needed for successful recovery for a larger $\delta_{\text{ave}}$ when the maximum degree $\delta$ is fixed. 
\begin{figure}[h]
	\centering
	\includegraphics[width=0.65\textwidth]{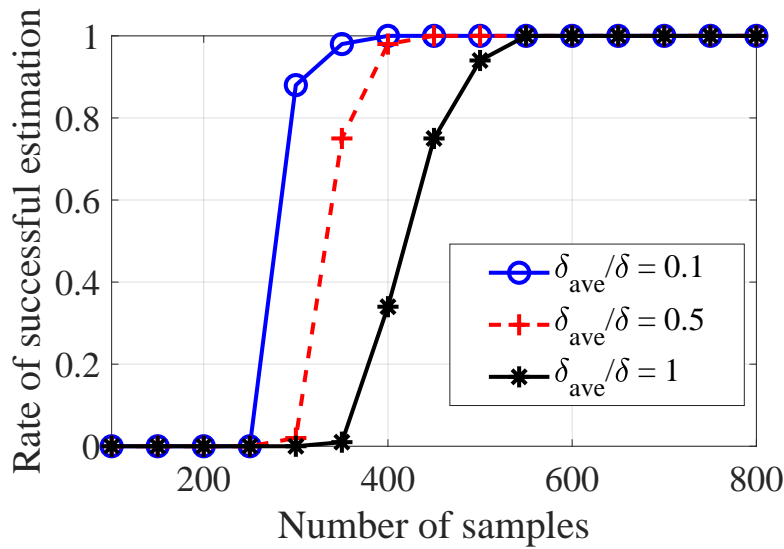}
	\caption{The success rate against the number of  samples for different $\delta_{\text{ave}}/ \delta$}
	\label{Figure: sigma}
\end{figure}

Figure \ref{Figure: N_vs_graph} shows the success rate against $|\Omega|$ for three different graph structures, including a connected cycle, a two-dimensional grid,  and  a random  regular graph  in Figure \ref{Figure: graph structure} (a), (b), and (c).  The maximum degrees of these graphs are all fixed with $\delta=4$. The average degrees of the connected-circle and the random $\delta$-regular graphs are also $\delta_{\text{ave}}=4$.  $\delta_{\text{ave}}$ is very close to $4$ for the two-dimensional grid when the graph size is large enough, because only the boundary nodes have smaller degrees, and the percentage of boundary nodes decays as the graph size increases. Then from Lemma \ref{Lemma: sigma_1}, we have $\sigma_1(\bfA)$ is $1$ for all these graphs. Although these graphs have different structures, the required numbers of samples to estimate $\bfW^*$ accurately are  the same, because both $\delta$  and $\sigma_1(\bfA)$ are the same. One can verify this property from Figure~\ref{Figure: N_vs_graph} where all three curves almost coincide.

\begin{figure}[h]
	\centering
	\includegraphics[width=0.60\textwidth]{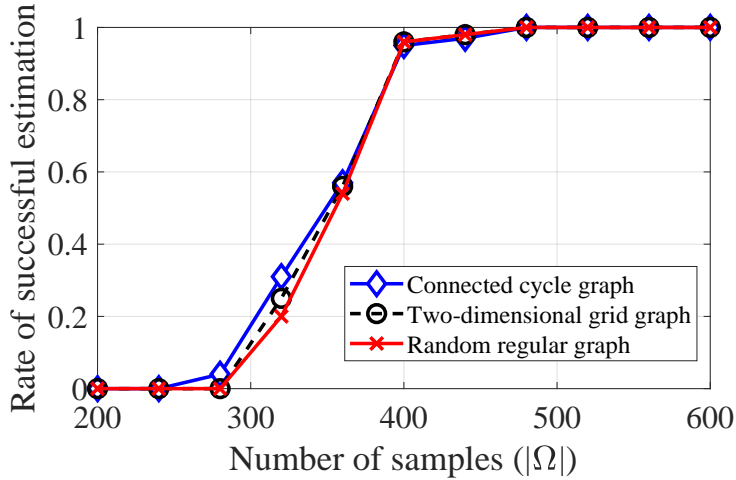}
	\caption{The success rate with respect to sample complexity for various graph structures} 
	\label{Figure: N_vs_graph}
\end{figure}

\subsection{Accuracy in Learning $\bfW^*$}
We study the learning accuracy of $\bfW^*$, characterized in 
  \eqref{eqn: linear_convergence_lr} for regression and \eqref{eqn: linear_convergence2_cl}  for classification. 
For   regression problems, we simulate the general cases when the labels are noisy, i.e., $y_n = z_n+\xi_n$.
The noise $\{\xi_n \}_{n=1}^N$ are i.i.d. from $\mathcal{N}(0, \sigma^2)$,
and the noise level is measured by $\sigma/E_z$, where $E_z$ is the average energy of the noiseless labels $\{z_n\}_{n=1}^{N}$, calculated as $E_z = \sqrt{\frac{1}{N}\sum_{n=1}^{N}|z_n|^2}$.
The number of hidden nodes $K$ is   5, and the dimension of each feature   $d$ is  as $60$. We consider a connected-circle graph with $\delta=2$.  Figure \ref{Figure: noise_case} shows the performance of Algorithm \ref{Alg} in the noisy case. We can see that when the number of samples exceeds $Kd=300$, which is the degree of freedom of $\bfW^*$, the relative error decreases dramatically. Also, as $N$ increases, the relative error converges to the noise level. When there is no noise, the estimation of $\bfW^*$ is accurate.

\begin{figure}[h]
	\centering
	\includegraphics[width=0.60\textwidth]{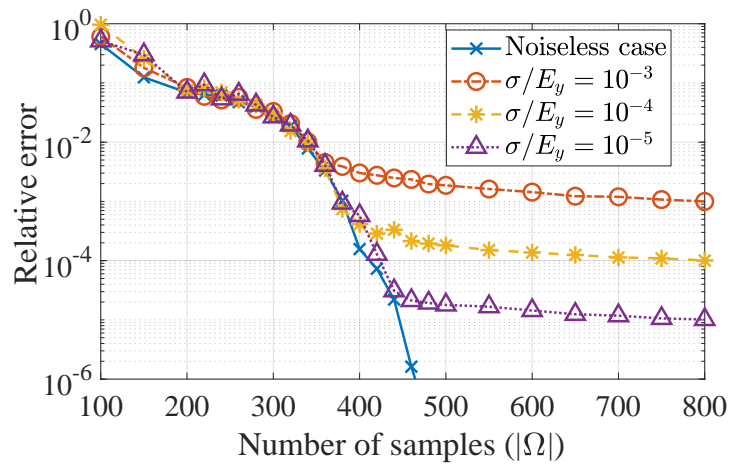}
	\caption{Learning accuracy of Algorithm~\ref{Alg} with noisy measurements for regression}
	\label{Figure: noise_case}
\end{figure}

For binary classification problems, Algorithm~\ref{Alg} returns  the nearest critical point $\widehat{\bfW}$ to $\bfW^*$. 
We show  the distance between the returned model and the ground-truth model $\bfW^*$   against the number of samples
in Figure \ref{Figure: N_error_cl}. 
We consider  a connected-cycle graph with the degree $\delta=2$. 
$K=3$ and $d=20$. 
The relative error $\|\widehat{\bfW}-\bfW^*\|_F/\|\bfW^*\|_F$ is averaged over  $100$ independent trials. 
We can see that the  distance between the returned model and the ground-truth model indeed decreases as the number of samples increases.

\begin{figure}[h]
	\centering
	\includegraphics[width=0.60\textwidth]{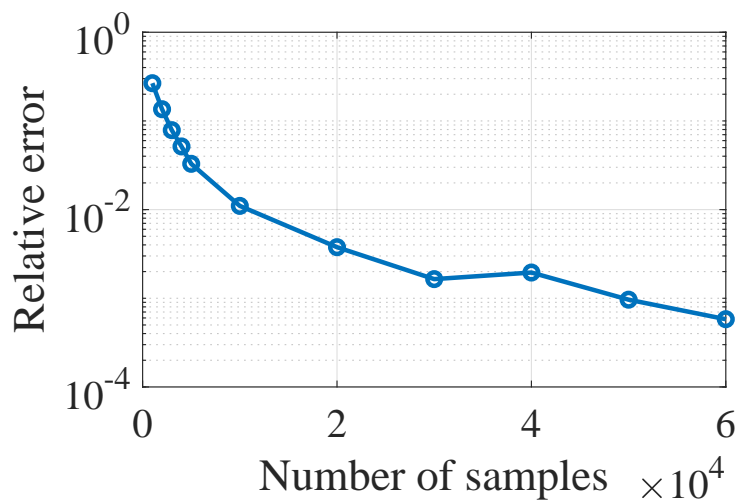}
	\caption{Distance between the returned model by Algorithm \ref{Alg} and the ground-truth model for binary  classification}
	\label{Figure: N_error_cl}
\end{figure}

\section{Summary}
Despite the practical success of graph neural networks in learning graph-structured data, the theoretical guarantee of the generalizability of graph neural networks is still elusive. Assuming the existence of a ground-truth model, this chapter shows theoretically, for the first time,
learning a one-hidden-layer graph neural network  with a generation error that is zero for  regression or approximately zero for binary classification. With the tensor initialization, we prove that the accelerated gradient descent method converges to the ground-truth model  exactly for regression or approximately for binary classification at a linear rate. We also characterize the required number of training samples   as a function of the feature dimension, the model size, and the   graph structural properties. One future direction is to extend the analysis to multi-hidden-layer neural networks. 

\begingroup
\newcommand{\w}{\widetilde{\bfw}}
\chapter{\uppercase{A Theoretical Perspective of Sample Complexity on Pruned Neural Networks: One-hidden-layer Case}}\label{chapter: 5}
\blfootnote{Portions of this chapter previously appeared as: S.~Zhang, M. Wang, S.~Liu, P. Chen, J. Xiong, ``Why lottery ticket wins? A theoretical perspective of sample complexity on pruned neural networks'',  2021, \emph{arXiv: 2110.05667}.}

\section{Introduction}
Neural network pruning  can reduce the computational cost of {model training and inference} significantly and potentially lessen the chance of overfitting 
\cite{LDS90,HS93,DCP17,HPTD15,HPTTC16,MAV17}, \cite{SB15,YCS17}. 
The recent  \textit{Lottery Ticket Hypothesis} (LTH)  \cite{FC18} 
claims that a randomly  initialized dense neural network always  contains a so-called ``winning ticket,'' which is a sub-network  bundled with the corresponding initialization, such that when trained in isolation, this winning ticket can achieve at least the same testing accuracy as that of the original network {by running at most the same amount of training time.}
This so-called   ``improved generalization of winning tickets'' is verified empirically in  \cite{FC18}. 
LTH has attracted a significant  amount  of recent research interests   \cite{RWKKR20,ZLLY19,MYSS20}.
Despite the empirical success \cite{MGE20,YLXF19,WZG19,CFCLZWC20}, the theoretical justification of winning tickets  remains elusive  except for a few recent works.  
 \cite{MYSS20}  provides the first
theoretical evidence that within a randomly initialized neural network, there exists a good sub-network that can achieve the same test performance as the original network. 
 Meanwhile, recent work \cite{N20} trains neural network by adding the $\ell_1$ regularization term to obtain a relatively sparse neural network, which has a better performance  numerically.  


However, {the theoretical foundation of 
{network pruning}
is limited. The existing theoretical works usually focus on  finding a sub-network  that achieves a tolerable loss in either expressive power or training accuracy, compared with the original dense network \cite{AGNZ18,ZVAAO18,TWL20,OHR20,BLGFR19,BLGFR18,LBLFR19,BOBZF20,TGNZKL20}.
To the best of our knowledge, {there exists no theoretical support for the}
\textit{improved} generalization achieved by
winning tickets, i.e., pruned networks with faster convergence and better  test accuracy. 
}

\textbf{Contributions}: This chapter provides the \textit{first} systematic   analysis of learning pruned neural networks with a finite number of training samples in the {oracle-learner} setup, where the training data are generated by a unknown neural network, the \textit{{oracle}}, and another network, the \textit{{learner}}, is trained on the dataset. Our analytical results  also provide  a  justification  of the LTH from the perspective of the sample complexity. 
In particular,
we provide the \textit{first} 
theoretical justification of the improved generalization of winning tickets.    Specific contributions include:


1. \textbf{Pruned neural network learning 
via accelerated gradient descent (AGD)}: 
We propose an AGD algorithm  with tensor initialization to learn the  {pruned} model from  training samples.
Our algorithm converges to the {oracle} model linearly, which has guaranteed generalization.

2. \textbf{First sample complexity analysis for pruned networks}: We characterize the required number of samples for successful convergence, termed as the \textit{sample complexity}. Our sample complexity bound  depends linearly on the number of {the non-pruned weights} and   is a significant reduction  from directly applying conventional complexity bounds in 
\cite{ZSJB17,ZWLC20_1,ZWLC20_3}.


3. \textbf{Characterization of the benign optimization landscape of pruned networks}: We show analytically that the empirical risk function has an \textit{enlarged} convex region for a pruned network, 
justifying 
the importance of a good sub-network {(i.e., the winning ticket)}.

4.  \textbf{Characterization of the improved generalization of winning tickets}: We show that gradient-descent methods converge faster to the {{oracle} model} when the neural network is properly pruned, or equivalently, learning on a pruned network returns a model closer to the {{oracle} model} with the same number of iterations, indicating the improved generalization of winning tickets.


\subsection{Related Work}

{\textbf{Network pruning}. Network pruning methods seek a compressed model 
while maintaining the expressive power. 
Numerical experiments have shown that  over 90\% of the parameters can be pruned without a significant performance loss \cite{CFCLZCW20}.  
Examples of pruning methods include {irregular weight pruning \cite{HPTD15}, structured weight pruning \cite{WWWYL16},} 
neuron-based pruning \cite{HPTTC16}, and projecting the weights to a low-rank subspace   \cite{DSDR13}.}  


\textbf{Winning tickets}.  \cite{FC18} employs an \textit{Iterative Magnitude Pruning} (IMP) algorithm to obtain the proper sub-network and initialization.  
IMP and its variations \cite{FDRC19,RFC19} 
succeed  in deeper networks like Residual Networks (Resnet)-50 and Bidirectional Encoder Representations from Transformers (BERT) network \cite{CFCLZWC20}. 
\cite{FDRC19_1} shows that IMP succeeds in finding the ``winning ticket'' if the ticket  is stable to stochastic gradient descent noise. In parallel, \cite{LSZHD18} shows numerically that the ``winning ticket'' initialization does not 
improve over a random initialization once the correct sub-networks are found, 
suggesting that the benefit  of ``winning ticket'' mainly comes from 
the sub-network structures.  {\cite{EKT20} analyzes the sample complexity of IMP from the perspective of recovering a sparse vector in a linear model rather than learning neural networks. } 

{\textbf{Feature sparsity}. High-dimensional data often contains   redundant features, and only a subset of the features is used in training \cite{BPSH15,DH20,HZRS16,YS18,ZHW15}. Conventional approaches like wrapper and filter methods score the importance of each feature in a certain way and select the ones with highest scores  \cite{GE03}. Optimization-based methods add variants of the $\ell_0$ norm as a   regularization to promote feature sparsity \cite{ZHW15}. Different from network pruning where the feature dimension   still remains high during training, the feature dimension is significantly reduced in training when promoting feature sparsity. } 

\textbf{Over-parameterized model.}
When  the number of weights in a neural network is  much larger than the number of training samples,  the landscape of the objective function of the learning problem  has no spurious local minima, and  first-order algorithms converge  to one of the global optima \cite{LSS14, OS20, ZBHRV16, SJL18,CRBD18,SM17,LMLLY20}. However, the global optima 
is
not guaranteed to generalize well 
 on testing data \cite{YS19,ZBHRV16}.

\textbf{Generalization analyses.}
The existing generalization analyses mostly fall within three categories. One line of research employs the Mean Field approach to model the training process   by a differential equation assuming infinite network width and infinitesimal training step size 
\cite{CB18b,MMN18, WLLM18}. 
Another approach is the neural tangent kernel (NTK) \cite{JGH18}, which requires   strong and probably unpractical over-parameterization such that the nonlinear neural network model behaves as its linearization around the initialization \cite{ALS19,DZPS19,ZCZG20,ZG19}. The third line of works follow the {oracle-learner} setup, where the data are generated by an unknown {oracle} model, and the  learning objective is to estimate the {oracle} model, which has a generalization guarantee on testing data. However, the objective function   has {intractably} many spurious local minima even for one-hidden-layer neural networks \cite{Sh18,SS17,ZBHRV16}. Assuming an infinite number of training samples, \cite{BG17,DLTPS17,Tian17} develop learning methods to estimate the {oracle} model. \cite{FCL20,ZSJB17,ZWLC20_1,ZWLC20_3} extend to the practical case of a finite number of samples and characterize  the sample complexity for recovering the {oracle} model. Because the analysis complexity explodes when the number of hidden layers increases, all the analytical results about estimating the {oracle} model are limited to one-hidden-layer neural networks, and the input distribution is often assumed to be the standard Gaussian distribution.

\section{Problem Formulation}\label{Sec: Problem}

{In an oracle-learner model,}
given any input $\bfx \in \R^d$, the corresponding output $y$ 
	is generated  
 	by a pruned one-hidden-layer neural network,   called \textbf{\textit{{oracle}}}, 
  as shown in Figure \ref{fig:SNN}.
The {oracle} network is equipped with $K$ neurons where the $j$-th neuron is connected to  any arbitrary $r_j^*$ ($r_j^* \leq d$)
input features.
Let ${\bfW}^* =[{\bfw}^{*}_1,\cdots, {\bfw}^{*}_K]\in
\mathbb{R}^{d\times K}$ denotes all the  weights ({pruned ones are represented by zero}). 
The number of non-zero entries in $\bfw_j^*$ is at most $r_j^*$. The {oracle} network is not unique because permuting neurons together with the corresponding weights does not change the output. Therefore, the  output label $y$ obtained by the {oracle} network satisfies \footnote{It is extendable to binary classification, and the output is generated by $\text{Prob}\big(y_n=1|\bfx_n\big)= g(\bfx_n;{\bfW}^*)$.} 
\begin{equation}\label{eqn: SNN_model}
\begin{split}
\vspace{-1mm}
y =&  \frac{1}{K} \sum_{j=1}^K \phi( {{\bfw}^{*T}_j}\bfx) +  \xi
:=   g(\bfx;{\bfW}^*) + \xi = g(\bfx;{\bfW}^*\bfP) + \xi, 
\vspace{-1mm}
\end{split}
\end{equation}
where $\xi$ is arbitrary unknown {additive noise} bounded by some constant $|\xi|$,  $\phi$ is the rectified linear unit (ReLU) activation function with $\phi(z) = \max\{z , 0 \}$, and   $\bfP\in \{0,1\}^{K\times K}$ is any permutation matrix.
$\bfM^*$ is a \textbf{\textit{mask matrix}}  for the {oracle} network, such that $M^*_{j,i}$ equals to $1$ if the weight $\bfw^*_{j,i}$ is not pruned, and $0$ otherwise.  
Then, $\bfM^*$ is an indicator matrix for the non-zero entries of ${\bfW}^*$
with
 $\bfM^* \odot {\bfW}^* = {\bfW}^*$, where $\odot$ is entry-wise multiplication.

\begin{figure}
\centering
  	\includegraphics[width=0.5\linewidth]{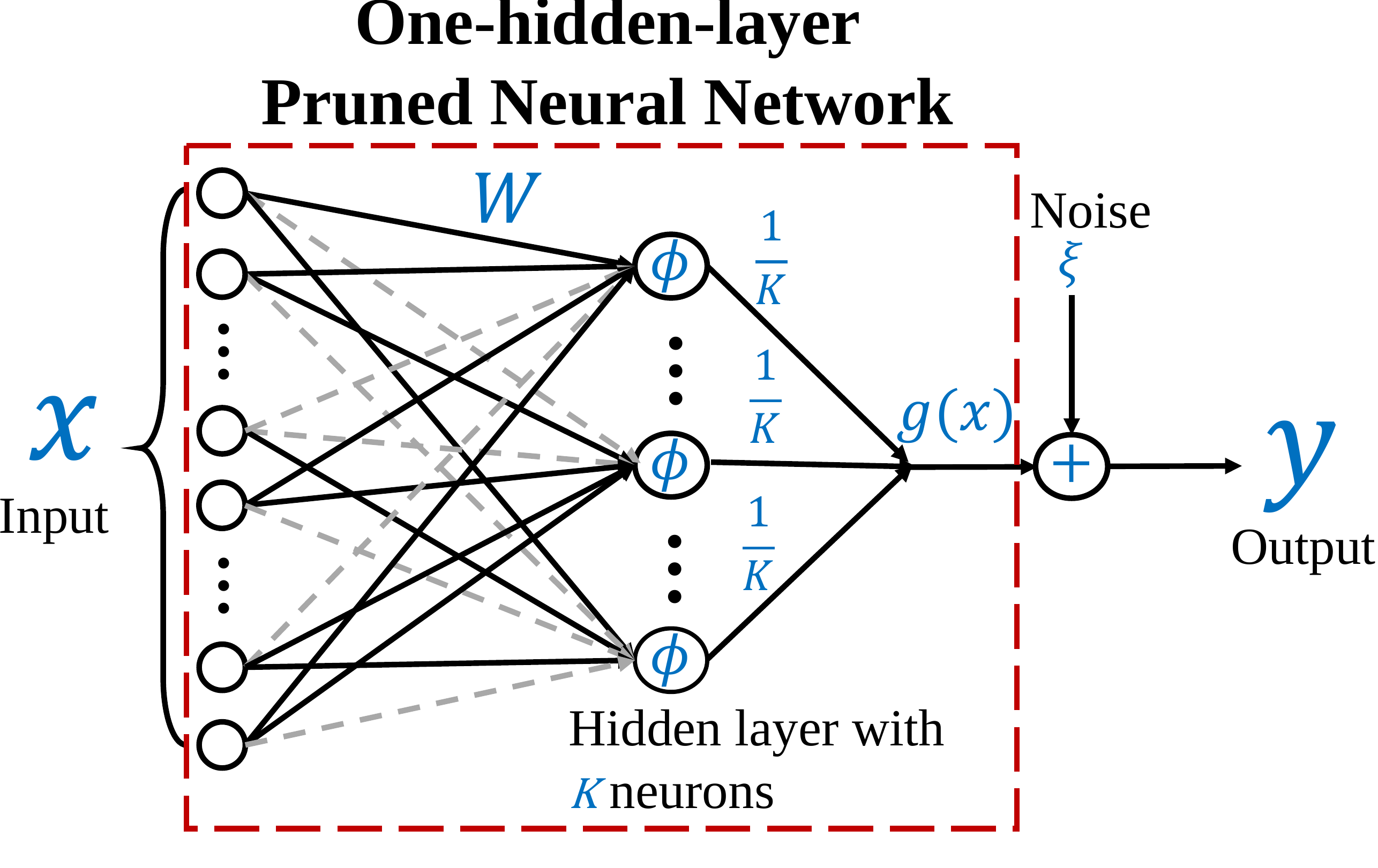}
  	\caption{Illustration of the model}
  	\label{fig:SNN}
  	\vspace{-4mm}
\end{figure}

{Based on  
$N$ pairs of training samples $\mathcal{D}=\{\bfx_n, y_n\}_{n=1}^N$ generated by the {oracle}, 
 we train on a \textbf{\textit{{learner}}} network equipped with the same number of neurons in the {oracle} network. However, the $j$-th neuron in the {learner} network is connected to $r_j$   input features rather than $r_j^*$. Let $r_{\min}$,   $r_{\max}$, and  $r_{\textrm{ave}}$ denote the minimum, maximum, and average value of $\{r_j\}_{j=1}^K$, respectively.  
Let $\bfM$ denote the mask matrix with respect to the {learner} network, and $\bfw_j$ is the $j$-th column of $\bfW$. } 
The empirical risk function is defined as
\begin{equation}\label{eqn: sample_risk}
\begin{split}
\hat{f}_{\mathcal{D}}(\bfW)
=&\frac{1}{2N}\sum_{n=1}^{N}\big(\frac{1}{K} \sum_{j=1}^K\phi({\bfw}_j^T\bfx_{n})-y_n\big)^2.
\end{split}
\end{equation}
When the mask $\bfM$ is given, the learning objective is to estimate a proper weight matrix ${\bfW}$ for the \textit{{learner}} network from the training samples $\mathcal{D}$ via solving
\begin{equation}\label{eqn: optimization_problem_5}
\begin{split}
&{\textstyle\min_{{\bfW}\in \mathbb{R}^{d\times K}} } \quad \hat{f}_{\mathcal{D}}({\bfW}) \qquad \text{s.t.}\quad \bfM\odot \bfW =\bfW.
\end{split}
\end{equation} 
$\bfM$ is called an \textbf{\textit{accurate 
mask}} if the support of $\bfM$ covers the support of a permutation of $\bfM^*$, i.e., there exists a permutation matrix $\bfP$ such that  $(\bfM^*\bfP) \odot \bfM = \bfM^*$. 
When $\bfM$ is accurate, and $\xi=0$,  there exists a permutation matrix $\bfP$ such that $\bfW^*\bfP$ is a global optimizer to \eqref{eqn: optimization_problem_5}.
Hence, if $\bfW^*\bfP$ can be estimated 
by solving  \eqref{eqn: optimization_problem_5},  one can learn the {oracle} network accurately, which has guaranteed generalization performance on the testing data. 

We assume $\bfx_n$ is independent and identically distributed from the standard Gaussian distribution $\mathcal{N}({\boldsymbol{0}},\bfI_{d\times d})$. The Gaussian assumption is motivated by the data whitening \cite{LBOM12} and batch normalization techniques \cite{IS15} that are commonly used in practice to improve learning performance.
Moreover, training one-hidden-layer neural network with multiple neurons has intractable many fake minima \cite{SS17} without any input distribution assumption. In addition,
the theoretical results in Section \ref{sec:algorithm_and_theorem} assume an accurate mask, and inaccurate mask is evaluated empirically in Section \ref{sec:simu}.

The questions addressed in this chapter include: 1. \textbf{what algorithm} to  solve \eqref{eqn: optimization_problem_5}? 
2. 
what is the \textbf{sample complexity}  for the accurate estimate of
{the weights in the {oracle} network}?
 3. what is the \textbf{impact of the network pruning} on the difficulty of the learning problem and the performance of the {learned} model?

\section{Algorithm and Theoretical Results}\label{sec:algorithm_and_theorem}
Section \ref{sec:structure}  studies the geometric structure of \eqref{eqn: optimization_problem_5},  and  the main results are in Section \ref{sec: algorithm}. Section \ref{sec: proof_sketch} briefly introduces the proof sketch and technical novelty..
\subsection{Local Geometric Structure}\label{sec:structure}
Theorem \ref{Thm: convex_region}  characterizes the local convexity of $\hat{f}_{\mathcal{D}}$ in \eqref{eqn: optimization_problem_5}. It has two important implications. 

1. \textbf{Strictly locally convex near ground truth}:  $\hat{f}_{\mathcal{D}}$  is strictly convex near   $\bfW^*\bfP$ for some permutation matrix $\bfP$, and the radius of the convex ball is negatively correlated with $\sqrt{\widetilde{r}}$, where $\widetilde{r}$  is in the order of $r_\textrm{ave}$. 
Thus, the convex ball enlarges as  any $r_j$  decreases.

2. \textbf{Importance of the winning ticket architecture}: Compared with training on the dense network directly, training on a properly pruned sub-network has a larger local convex region near   $\bfW^*\bfP$, which   may 
lead to   easier estimation of $\bfW^*\bfP$.    To some extent, this result can be viewed as a theoretical validation of  the importance of the winning architecture (a good sub-network) in \cite{FC18}.  
Formally, we have
\begin{theorem}[Local Convexity]
\label{Thm: convex_region}
Assume the mask $\bfM$ of the {learner} network is accurate. 
Suppose constants $\varepsilon_0$, $\varepsilon_1\in(0,1)$ and  the number of samples satisfies
    ~
	\begin{equation}\label{eqn: sample_complexity_1}
	N   = \Omega\big(\varepsilon_1^{-2} K^4 {\widetilde{r}}\log q\big), 
	\end{equation}
    ~	
	for some large constant $q>0$, where 
	\begin{equation}\label{eqn: r_tilde}
	\widetilde{r} = \frac{1}{8K^4}\Big({\textstyle \sum_{k=1}^K\sum_{j=1}^K}{(1+\delta_{j,k})}({r_j+r_k})^{\frac{1}{2}}\Big)^2,
	\end{equation}
	$\delta_{j,k}$ 
	is $1$ if the indices of non-pruned weights in the $j$-th and $k$-th neurons overlap and 0 otherwise.
	Then, there exists a permutation matrix $\bfP$ such that for any $\bfW$ that satisfies
	\begin{equation}\label{eqn: convex_region}
	 \| \bfW -\bfW^*\bfP \|_F  = \mathcal{O}\big(\textstyle\frac{\varepsilon_0}{K^2}\big),
	 \quad \text{and } \bfM\odot\bfW = \bfW,
	\end{equation} 
	its Hessian of $\hat{f}_{\mathcal{D}}$, with probability at least $1- K\cdot q^{-r_{\min}}$, is bounded as: 
	\begin{equation}\label{eqn: convex_condition}
	\Theta\Big({\frac{1-\varepsilon_0-\varepsilon_1}{K^2}} \Big) \bfI \preceq
	\nabla^2\hat{f}_{\mathcal{D}}({\bfW}) 
	\preceq
	 \Theta \Big({\frac{1}{K}}\Big)\bfI.  
	\end{equation}
\end{theorem}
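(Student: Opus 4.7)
The plan is to decompose the analysis into two parts: first bound the Hessian of the population risk $f(\bfW):=\mathbb{E}_{\bfx}[\hat{f}_{\mathcal{D}}(\bfW)]$ at $\bfW^*\bfP$, then transfer to $\hat{f}_{\mathcal{D}}$ at an arbitrary $\bfW$ in the ball via two perturbation arguments (smoothness of the population Hessian in $\bfW$, and concentration of $\nabla^2 \hat{f}_{\mathcal{D}}$ around $\nabla^2 f$). To make the sample complexity scale with $\widetilde{r}$ rather than $d$, I would exploit the hard constraint $\bfM\odot \bfW=\bfW$ by reparametrizing each column $\bfw_j\in\mathbb{R}^d$ by its restriction $\widetilde{\bfw}_j\in\mathbb{R}^{r_j}$ to the support of $\bfM$ and analyze the reduced Hessian in $\mathbb{R}^{\sum_j r_j}$. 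The original Hessian vanishes on directions orthogonal to the support, so convexity on the constrained manifold is equivalent to positive definiteness of this reduced Hessian, and every operator norm estimate will therefore depend only on the sparsity pattern.

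\textbf{Population Hessian at the ground truth.} Fix the permutation $\bfP$ that aligns the {learner} mask with the {oracle}. Using the ReLU identity $\phi''=0$ distributionally, the $(j,k)$ block of $\nabla^2 f(\bfW^*\bfP)$ reduces to $\tfrac{1}{K^2}\mathbb{E}[\phi'(\bfw_j^{*T}\bfx)\phi'(\bfw_k^{*T}\bfx)\,\bfx\bfx^{T}]$ restricted to the intersection of supports of neurons $j$ and $k$. Standard Gaussian calculations give a diagonal block of order $\Theta(1/K^2)$ times a positive-definite matrix on $\mathbb{R}^{r_j}$, and off-diagonal blocks that are nonzero only when the supports intersect, producing the $(1+\delta_{j,k})$ factor that appears in $\widetilde{r}$. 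Summing blocks and controlling cross terms via the angle between $\bfw_j^*$ and $\bfw_k^*$, I would prove $\Theta(1/K^2)\bfI\preceq \nabla^2 f(\bfW^*\bfP)\preceq \Theta(1/K)\bfI$ on the reduced space. Extending to any $\bfW$ with $\|\bfW-\bfW^*\bfP\|_F=O(\varepsilon_0/K^2)$ requires a local Lipschitz bound on the Hessian of $f$; a direct differentiation of the population objective gives Lipschitz constant $\Theta(K)$, so the perturbation costs at most $O(\varepsilon_0/K^2)$, preserving the lower bound $\Theta((1-\varepsilon_0)/K^2)$.

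\textbf{Concentration, the main obstacle.} The crux is showing that $\|\nabla^2 \hat{f}_{\mathcal{D}}(\bfW)-\nabla^2 f(\bfW)\|_{\mathrm{op}}\le \Theta(\varepsilon_1/K^2)$ uniformly on the convex ball, with a sample complexity that scales as $K^4\widetilde{r}/\varepsilon_1^2$ rather than $K^4 d/\varepsilon_1^2$. I would write the reduced empirical Hessian as a sum $\tfrac{1}{N}\sum_n \bfH_n$ of independent random matrices acting on $\mathbb{R}^{\sum_j r_j}$, where each $\bfH_n$ inherits the sparse block structure above: its $(j,k)$ block is supported on coordinates in $\mathrm{supp}(\bfw_j)\cap\mathrm{supp}(\bfw_k)$ if the supports overlap, and depends only on $(\bfx_n)_{\mathrm{supp}(\bfw_j)\cup\mathrm{supp}(\bfw_k)}$. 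Applying a truncated matrix Bernstein inequality to this sum, the matrix variance proxy and the uniform bound on $\|\bfH_n\|_{\mathrm{op}}$ are both controlled by $\bigl(\sum_{j,k}(1+\delta_{j,k})\sqrt{r_j+r_k}\bigr)^2/K^4$, which is precisely $8\widetilde{r}$; the sub-exponential tail of ReLU-weighted Gaussians contributes the extra $\log q$ factor. This gives pointwise concentration. To handle all $\bfW$ in the ball uniformly, I would place an $\varepsilon$-net of cardinality $q^{O(\widetilde{r})}$ on the reduced-dimensional ball and combine a net-bound argument with the Lipschitz property of $\nabla^2 \hat{f}_{\mathcal{D}}$; this is exactly where the condition $N=\Omega(\varepsilon_1^{-2}K^4\widetilde{r}\log q)$ emerges, and the failure probability $Kq^{-r_{\min}}$ absorbs the union bound over the $K$ neurons and the net.

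\textbf{Assembly.} Combining the three perturbation bounds via the triangle inequality yields
\begin{equation*}
\bigl(\Theta(1/K^2)-\Theta(\varepsilon_0/K^2)-\Theta(\varepsilon_1/K^2)\bigr)\bfI \preceq \nabla^2\hat{f}_{\mathcal{D}}(\bfW) \preceq \bigl(\Theta(1/K)+\Theta(\varepsilon_0/K^2)+\Theta(\varepsilon_1/K^2)\bigr)\bfI,
\end{equation*}
which simplifies to \eqref{eqn: convex_condition}. The conceptual hard part is the sparse matrix Bernstein step: naively each $\bfH_n$ lives in $\mathbb{R}^{(\sum_j r_j)\times(\sum_j r_j)}$ but its effective rank and variance must be tied to the block-overlap pattern encoded by $\delta_{j,k}$; getting the $\widetilde{r}$ bound tight, rather than a loose $(\sum_j r_j)^2$ bound, is what distinguishes the pruned analysis from simply invoking the dense counterparts in \cite{ZSJB17,ZWLC20_1}.
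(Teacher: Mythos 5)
Your proposal is correct and follows essentially the same architecture as the paper's proof: reparametrize onto the reduced space $\mathbb{R}^{\sum_j r_j}$ using the hard mask constraint, bound the population Hessian on the local ball, concentrate $\nabla^2\hat{f}_{\mathcal{D}}$ around $\nabla^2 f$ with a variance proxy controlled by the overlap structure $\delta_{j,k}$ (hence $\widetilde{r}$), and combine via Weyl's inequality. One small organizational difference: the paper's Lemma on the population Hessian (Lemma \ref{Lemma:local_convex_population}) directly states two-sided bounds for any $\bfW$ in the $\mathcal{O}(\varepsilon_0/K^2)$ ball, so the separate ``bound at $\bfW^*\bfP$ then apply a Hessian-Lipschitz step'' is folded into that lemma rather than done as an explicit perturbation argument; your quoted Lipschitz constant $\Theta(K)$ would in fact give a perturbation of $\Theta(\varepsilon_0/K)$, not $\Theta(\varepsilon_0/K^2)$, so that intermediate estimate as you stated it is off by a factor of $K$, though the final conclusion you reach matches the paper (the correct bookkeeping shows the perturbation is $\Theta(\varepsilon_0/K^2)$, which is what the lemma delivers).
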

\noindent

\textbf{Remark 1.1 (Parameter $\widetilde{r}$)}: Clearly ${\widetilde{r}}$ is a monotonically increasing function of any $r_j$ from \eqref{eqn: r_tilde}. Moreover, one can check that 
    $\frac{1}{8}r_{\text{ave}}\le\widetilde{r}\le r_{\text{ave}}$. 
Hence, $\widetilde{r}$ is in the order of $r_{\text{ave}}$.

\textbf{Remark 1.2 (Local landscape)}:
Theorem \ref{Thm: convex_region} shows that  with   enough samples as shown in \eqref{eqn: sample_complexity_1},  in a local region of  $\bfW^*\bfP$   as shown in \eqref{eqn: convex_region},   all the eigenvalues of the Hessian matrix of the empirical risk function are 
lower and upper bounded by two positive constants. 
This property  is useful in designing efficient algorithms to recover $\bfW^*\bfP$, as shown 
in Section \ref{sec: algorithm}.

\textbf{Remark 1.3 (Size of the convex region)}:  When the number of samples $N$ is fixed and $r$ changes,  $\varepsilon_1$ can be {$\Theta(\sqrt{\widetilde{r}/N})$} while \eqref{eqn: sample_complexity_1} is still met.     $\varepsilon_0$  in (\ref{eqn: convex_condition}) can be arbitrarily close to but smaller than $1-\varepsilon_1$ so that the Hessian matrix is still positive definite.  Then from \eqref{eqn: convex_region},  the radius of the convex ball is  $ \Theta(1)-\Theta(\sqrt{\widetilde{r}/N})$, indicating an enlarged region when $\widetilde{r}$ decreases. The enlarged convex region serves as an important component in proving the faster convergence rate, summarized in  Theorem \ref{Thm: major_thm}. Besides this,  as Figure 1 shown in [20], the authors claim that the learning is stable if the linear interpolation of the learned models with SGD noises still remain  similar in performance, which is summarized as the concept ``linearly connected region.'' Intuitively, we conjecture that the winning ticket shows a better performance in the stability analysis because it has a larger convex region. In the other words, a larger convex region indicates that the learning is more likely to be stable in the linearly connected region.

\section{Numerical Experiments}\label{sec:simu}
 
The theoretical results are first verified on synthetic data, and we then analyze the pruning performance 
on both synthetic and real datasets. 
{In Section 4.1, Algorithm \ref{Alg5} is implemented   
with minor modification, 
such that, the initial point is randomly selected as
$ \| \bfW^{(0)} - \bfW^* \|_F/\|\bfW^* \|_F < \lambda$
for some $\lambda>0$ to reduce the computation.}
Algorithm   \ref{Alg5} terminates when $\|\bfW^{(t+1)}-\bfW^{(t)} \|_F/\|\bfW^{(t)}\|_F$ is smaller than $10^{-8}$ or reaching $10000$   iterations. In Sections   \ref{sec: grasp} and \ref{sec:IMP},   the
Gradient Signal Preservation
(GraSP) algorithm  \cite{WZG19} and IMP algorithm \cite{CFCLZCW20,FC18}\footnote{The source codes used are downloaded from https://github.com/VITA-Group/CV\_LTH\_Pre-training.} are implemented to prune the neural  networks. 
As   many works like \cite{CFCLZWC20,CFCLZCW20,FC18}  
 have already verified the faster convergence   and better generalization {accuracy} of the winning tickets empirically,  we only include  the results of some representative experiments, such as training MNIST and CIFAR-10 on Lenet-5 \cite{lenet5} and Resnet-50 \cite{HZRS16} networks, to verify our theoretical findings.

The synthetic data are generated using a {oracle} model in Figure  \ref{fig:SNN}. The input $\bfx_n$'s are randomly generated from Gaussian distribution $\mathcal{N}(0, \bfI_{d\times d})$ independently, and indices of non-pruned weights of the $j$-th neuron  are obtained by randomly selecting  $r_j$ numbers without replacement from  $[d]$.  
For the convenience of generating specific $\widetilde{r}$, the indices of non-pruned weights are almost overlapped ($\sum_j\sum_k\delta_j\delta_k > 0.95 K^2$) except for Figure \ref{Figure: delta_N}.
In Figures \ref{fig: r_eps} and \ref{fig: sparse}, $r_j$ is selected uniformly from $[0.9\widetilde{r}, 1.1\widetilde{r}]$ for a given $\widetilde{r}$, and $r_j$  are the same in value for all $j$ in other figures. 
Each non-zero entry of   $\bfW^*$ is randomly selected from $[-0.5, 0.5]$ independently.   The noise $\xi_n$'s are i.i.d. from $\mathcal{N}(0, \sigma^2)$,
 and the noise level is measured by $\sigma/E_y$, where $E_y$ is the root mean square of the noiseless outputs.



\subsection{Evaluation of Theoretical Findings on  Synthetic Data}

\textbf{Local convexity near $\bfW^*$.}  We set the  number of neurons $K= 10$, the dimension of the data $d=500$ and the sample size $N=5000$. Figure \ref{fig: r_eps} illustrates the success rate of Algorithm \ref{Alg5} when  $\widetilde{r}$ changes. The $y$-axis is the relative distance of the initialization $\bfW^{(0)}$ to the ground-truth. For each pair of $\widetilde{r}$ and the initial distance,  $100$ trails are constructed with  the network weights, training data and the initialization $\bfW^{(0)}$ are all generated independently in each trail. Each trail is called successful if the  relative error of the solution $\bfW$ returned by Algorithm \ref{Alg5}, measured by $\|\bfW -\bfW^*\|_F/\|\bfW^*\|_F$, is less than   $10^{-4}$.
 A black block means Algorithm 1 fails in estimating $\bfW^*$ in all trails while a white block indicates all success. As Algorithm \ref{Alg5} succeeds if   $\bfW^{(0)}$ is in the local convex region near $\bfW^*$, we can see that the radius of   convex region is indeed  linear in $-\widetilde{r}^{\frac{1}{2}}$, as predicted by Theorem \ref{Thm: convex_region}.

\textbf{Convergence rate.} Figure \ref{fig: con_r} shows the convergence rate of Algorithm \ref{Alg5} when   $\widetilde{r}$ changes.   $N=5000$, $d=300$,  $K=10$, $\eta=0.5$, and $\beta=0.2$.  Figure \ref{fig: con_r}(a) shows that the relative error decreases exponentially as the number of iterations increases, indicating the linear convergence of Algorithm \ref{Alg5}. {As shown in Figure \ref{fig: con_r}(b), 
the results are averaged over $20$ trials with different initial points, and the areas in low transparency represent the standard deviation errors.}
We can see that the  convergence rate is almost linear in $1/\sqrt{\widetilde{r}}$, as predicted by Theorem \ref{Thm: major_thm}. We also compare with GD by setting $\beta$ as 0. One can see that AGD has a smaller convergence rate than GD, indicating faster convergence. 
\begin{figure}
    \centering
	\includegraphics[width=0.5\linewidth]{./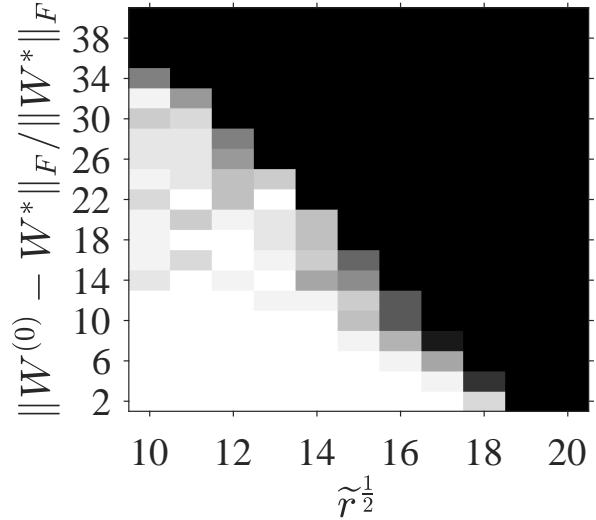}
	\caption{The radius of the local convex region against $\widetilde{r}^{\frac{1}{2}}$}
	\label{fig: r_eps}
\end{figure}

\textbf{Sample complexity.} Figures \ref{fig: sparse} and \ref{Figure: delta_N} show the success rate of Algorithm \ref{Alg5} when varying $N$ and $\widetilde{r}$.  $d$ is fixed as $100$. In Figure \ref{fig: sparse},  we construct $100$ independent trails for each pair of $N$ and $\widetilde{r}$, where the ground-truth model and training data are generated independently in each trail. 
One can see that the required number of samples for successful estimation is   linear in $\widetilde{r}$, as predicted by (\ref{eqn: sample_complexity_main}). In Figure \ref{Figure: delta_N}, $r_j$ is fixed as $20$ for all neurons, but different network architectures after pruning are considered. One can see that although the number of remaining weights is the same,  $\widetilde{r}$  can be different in different architectures, and the sample complexity increases as $\widetilde{r}$ increases, as predicted by (\ref{eqn: sample_complexity_main}). 
\begin{figure}[h]
\begin{minipage}{1.0\linewidth}
	\centering
	\includegraphics[width=0.8\linewidth]{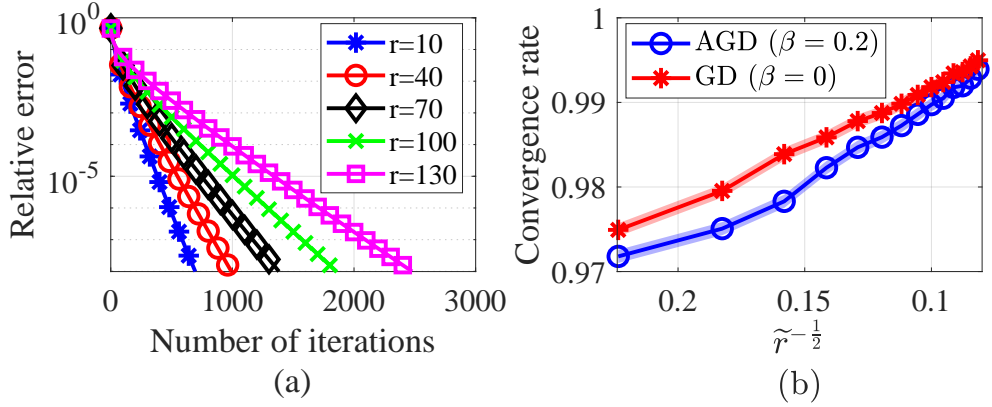}
	\caption{Convergence rate when $\widetilde{r}$ changes}
	\label{fig: con_r}
\end{minipage}
\end{figure} 

\begin{figure}
    \centering
    \includegraphics[width=0.6\linewidth]{./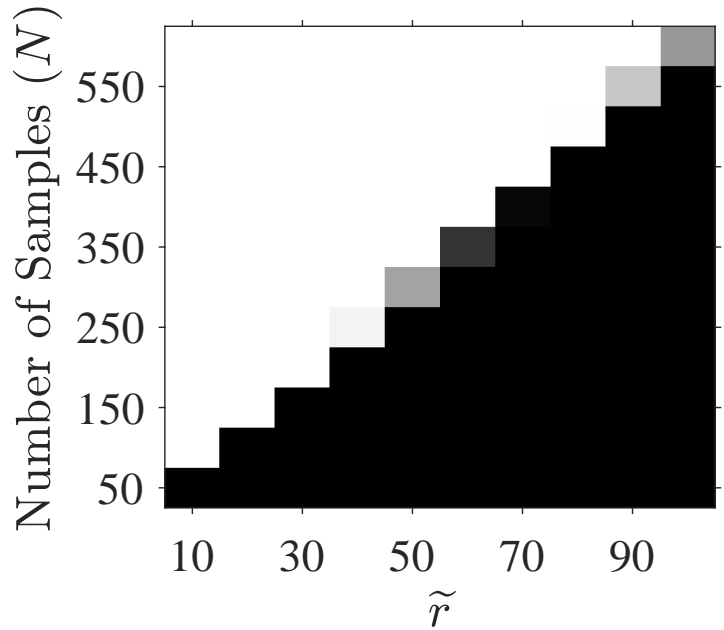}
	\caption{Sample complexity when $\widetilde{r}$ changes}
	\label{fig: sparse} 
\end{figure}

\begin{figure}
        \centering
	\includegraphics[width=0.6\textwidth]{./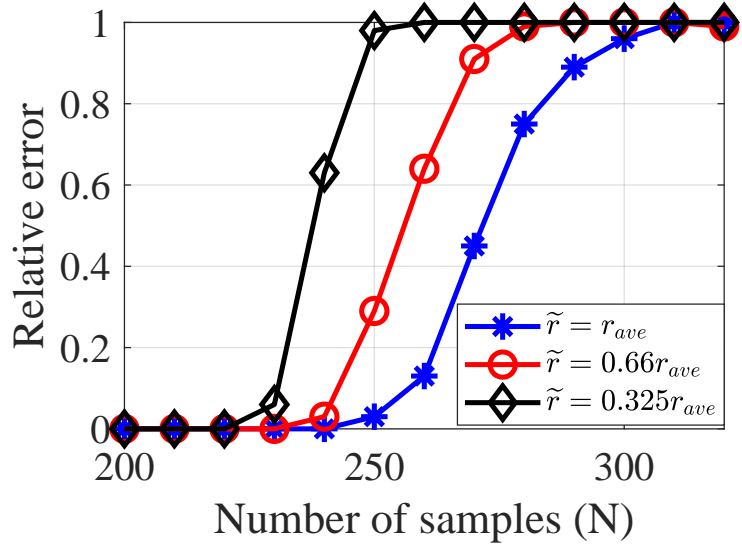}
	\caption{Relative error against $\widetilde{r}$}
	\label{Figure: delta_N}
\end{figure}

\begin{figure}
        \centering
	\includegraphics[width=0.6\textwidth]{./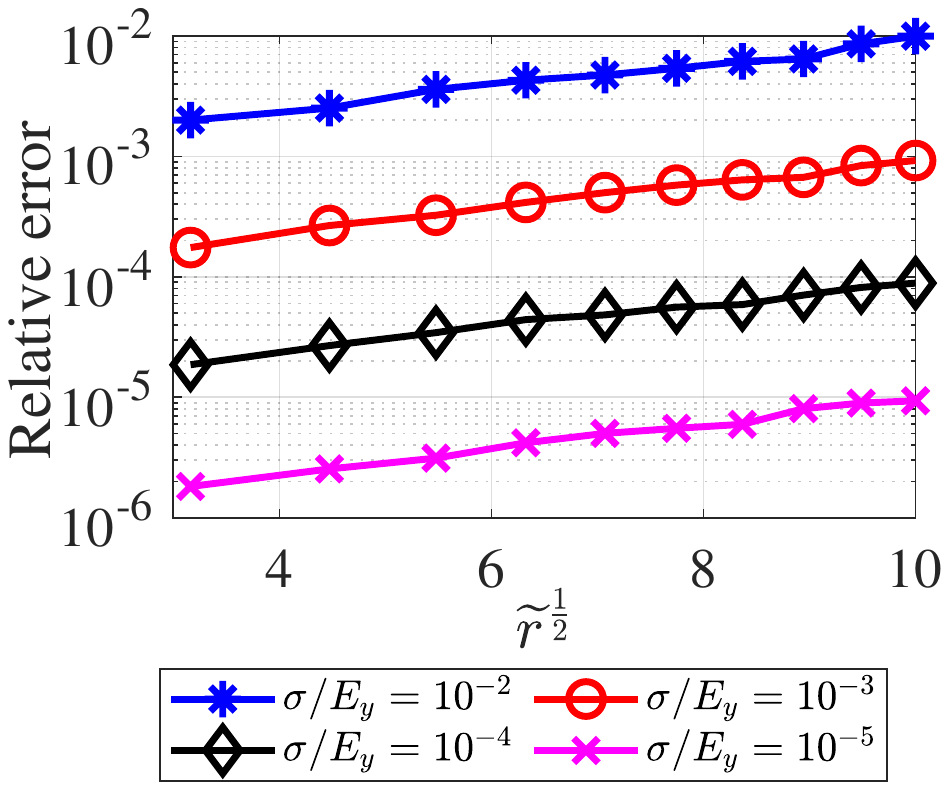}
	\caption{Relative error against $\widetilde{r}^{\frac{1}{2}}$ at different noise level}
	\label{Figure: noise_case}
\end{figure}

\textbf{Performance in noisy case.} Figure \ref{Figure: noise_case} shows the relative error of the learned model by Algorithm \ref{Alg5} from noisy measurements  when $\widetilde{r}$ changes. 
$N=1000$, $K=10$, and $d=300$. The results are averaged over $100$ independent trials, and standard deviation is around $2\%$ to $8\%$ of the corresponding relative errors.
The relative error is linear in $\widetilde{r}^{\frac{1}{2}}$, as predicted by (\ref{eqn: linear_convergence_lr}). Moreover, the relative error is proportional to the noise level $|\xi|$. 

\subsection{Performance with Inaccurate Mask on Synthetic Data}\label{sec: grasp}
The performance of Algorithm \ref{Alg5} is evaluated when  the mask $\bfM$ of the {learner} network is inaccurate.  The number of neurons $K$ is $5$. The dimension of inputs $d$ is  $100$. $r_j^*$ of the {oracle} model is $20$ for all $j\in[K]$.
GraSP algorithm  \cite{WZG19} is employed to find  masks based only on early-trained weights in $20$ iterations of AGD.
The mask accuracy is measured by $\|\bfM^* \odot \bfM\|_0/\|\bfM^* \|_0$, where $\bfM^*$ is the mask of the {oracle} model. The pruning ratio is defined as $(1-{r_{\textrm{ave}}}/{d}) \times 100\%$. The number of training samples   $N$ is $200$.
{The model returned by  Algorithm \ref{Alg5}  is evaluated on $N_{\text{test}}=10^5$ samples, and the test error is measured by  $\sqrt{ \sum_{n} |y_n - \hat{y}_n|^2/ N_{\text{test}} }$, where $\hat{y}_n$ is the   output of the learned model with the input  $\bfx_n$, and $(\bfx_n, y_n)$ is the $n$-th testing sample generated by the {oracle} network. }




{\bf Improved generalization by 
{GraSP}.
} Figure \ref{fig: g_n} shows the test error with different pruning ratios.  
 For each pruning ratio, we randomly generate $1000$ independent trials. Because the mask of the {learner} network in each trail is generated independently, 
we compute the average  test error of the learned models in all the trails with same mask accuracy. If there are less than $10$ trails for  certain mask accuracy, the result of that mask accuracy is not reported as it is statistically meaningless. 
The test error decreases as the mask accuracy increases. More importantly, at   fixed mask accuracy, the test error decreases as the pruning ratio increases. That means the generalization performance improves when $\widetilde{r}$ deceases, even if the mask is not accurate. 

\begin{figure}
    \centering
	\includegraphics[width=0.5\linewidth]{./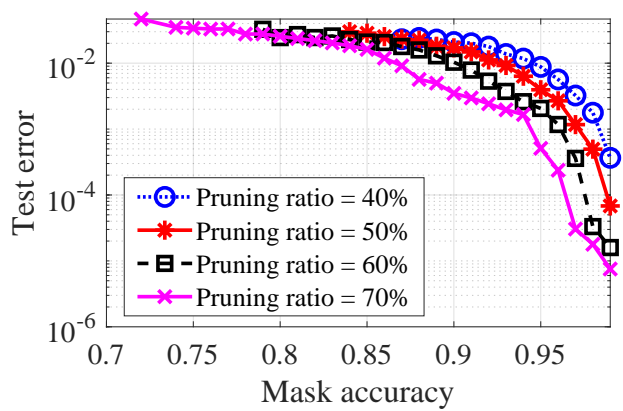}
	\caption{Test error against mask accuracy with different pruning ratios}
	\label{fig: g_n}
\end{figure}

\begin{figure}
    	\centering
	\includegraphics[width=0.5\linewidth]{./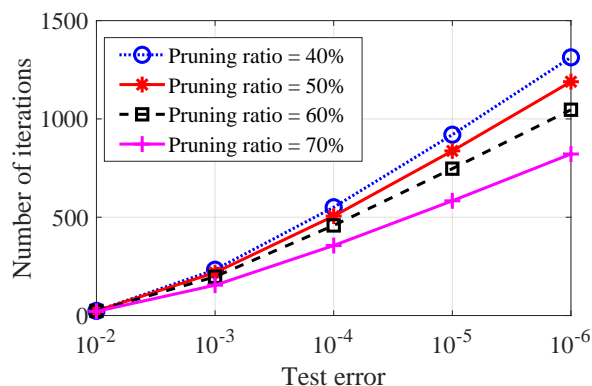}
	\caption{Convergence rate with mask accuracy in $[0.85,0.9]$}
	\label{fig: ite_rate}
\end{figure}

\begin{figure}
    	\centering
	\includegraphics[width=0.5\linewidth]{./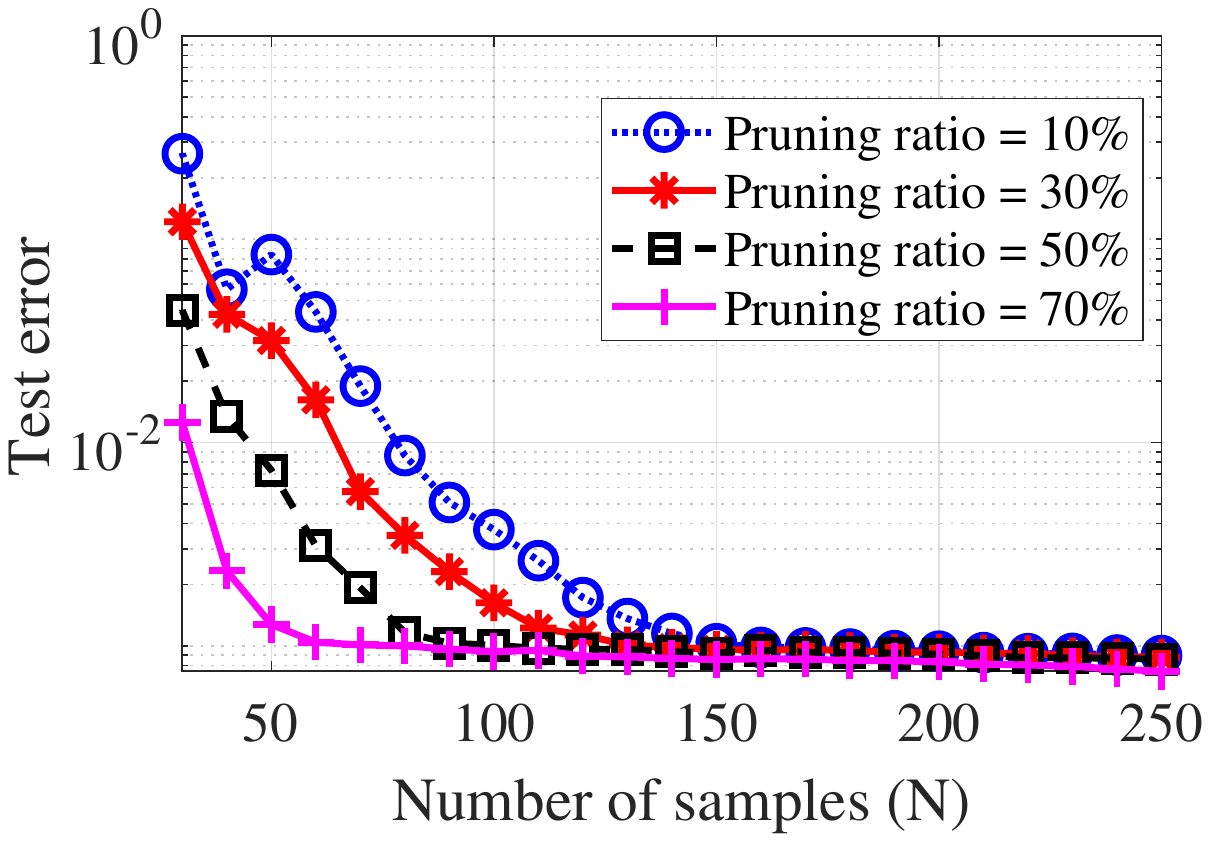}
	\caption{Test error against the number of samples with mask accuracy in $[0.85,0.9]$}
	\label{fig: sample}
\end{figure}




{{\bf Linear convergence.} Figure \ref{fig: ite_rate} shows the convergence rate of Algorithm \ref{Alg5} with different pruning ratios. 
We show the smallest number of iterations required to achieve a certain test error of the learned model, and the results are averaged over the  independent trials with mask accuracy between $0.85$ and $0.90$. Even with inaccurate mask, the test error converges linearly. Moreover,   as the pruning ratio increases, Algorithm \ref{Alg5} converges faster.}

{{\bf Sample complexity with respect to the pruning ratio.} Figure \ref{fig: sample} shows the test error when the number of training samples  $N$ changes. 
All the other parameters except $N$ remain the same. 
The results are averaged over the trials  with mask accuracy between $0.85$ and $0.90$. We can see the test error decreases  when $N$ increases. More importantly, as the pruning ratio increases, the required number of samples to achieve the same test error (no less than $10^{-3}$) decreases dramatically.  That means the sample complexity decreases as $\widetilde{r}$ decreases even if the mask is inaccurate.}

\subsection{Performance of IMP on  Synthetic, MNIST and CIFAR-10 Datasets}\label{sec:IMP}

We implement the {IMP} algorithm    to obtain pruned networks on  synthetic, MNIST and CIFAR-10 datasets.
Figure \ref{fig: test} shows the test performance of a pruned network   on synthetic data with different sample sizes. Here in the {oracle} network model, $K=5, d=100$, and $r_j^*=20$ for all $j\in[K]$. 
The noise level $\sigma/E_y = 10^{-3}$. One observation is that for a fixed sample size $N$ greater than 100, the test error decreases as  the pruning ratio increases. This verifies that the {IMP} algorithm indeed prunes the network properly. It also shows that the learned model improves as   the pruning progresses, verifying our  theoretical result in Theorem \ref{Thm: major_thm} that the difference of the learned model from the {oracle} model decreases  as  $r_j$ decreases. The second observation is that the test error decreases as $N$ increases for any fixed pruning ratio. This verifies our result in Theorem \ref{Thm: major_thm} that the difference of the learned model from the {oracle} model decreases as the number of training samples increases. When the pruning ratio is too large (greater than 80\%), 
the pruned network cannot explain the data properly, and thus the test error is large for all $N$. When the number of samples is too small, like  $N=100$, the test error is always large, because it does not meet the sample complexity requirement for estimating the {oracle} model even though the network is properly pruned. 

{Figures \ref{fig: test_minst} and \ref{fig: test_real}   show the test performance of learned models by implementing    the IMP algorithm on MNIST  and CIFAR-10 using Lenet-5 \cite{lenet5} and Resnet-50 \cite{HZRS16} architecture, respectively.
The experiments follow  the standard setup in \cite{CFCLZCW20} except for the size of the training sets. 
{To demonstrate the effect of sample complexity,}
we randomly selected $N$ samples from the original training set without replacement.
{As we can see, a properly pruned network (i.e., winning ticket) helps reduce the  sample complexity required to reach the test accuracy of the original dense model. For example, 
 training on a pruned network    returns a model (e.g., $P_1$ and $P_3$ in Figures \ref{fig: test_minst} and \ref{fig: test_real}) that has better testing performance than a dense model (e.g., $P_2$ and $P_4$ in Figures \ref{fig: test_minst} and \ref{fig: test_real}) trained on a larger data set. 
 Given the number of samples, we consistently find the characteristic behavior of winning tickets:}
 That is,  the test accuracy could increase  when the pruning ratio increases, indicating the effectiveness of   pruning. The test accuracy then  drops when the network is overly pruned.}
{The results show that our theoretical characterization of sample complexity is well aligned with the empirical performance of pruned neural networks and explains the improved generalization observed in LTH.}
\begin{figure}[h]
	\centering
	\includegraphics[width=0.6\linewidth]{./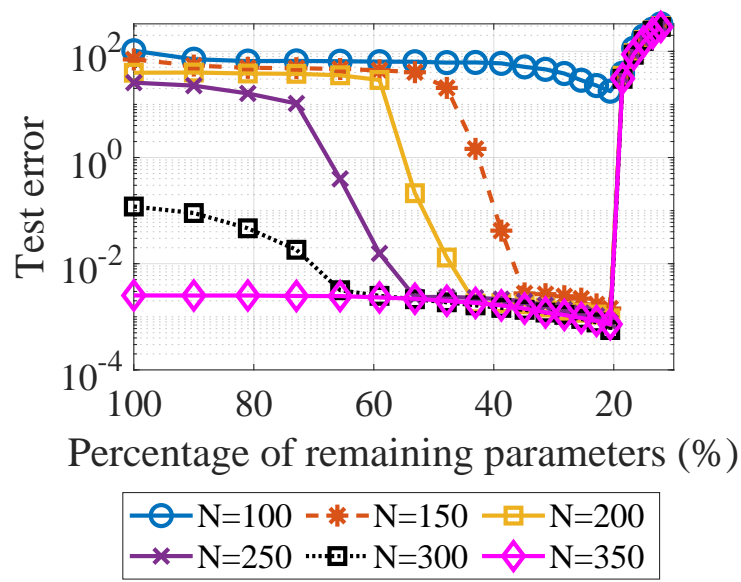}
	\caption{Test error of pruned models on the synthetic dataset}
	\label{fig: test}
\end{figure}

\begin{figure}[h]
	\centering
	\includegraphics[width=0.6\linewidth]{./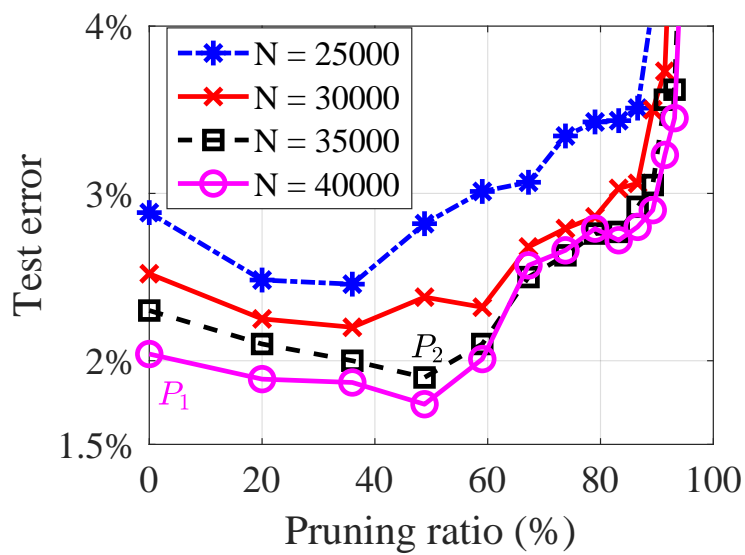}
 \caption{Test accuracy of pruned LeNet-5 on Mnist}
	\label{fig: test_minst}
\end{figure} 
\begin{figure}[h]
	\centering
	\includegraphics[width=0.6\linewidth]{./}
 \caption{Test accuracy of pruned Resnet-50 on Cifar-10}
	\label{fig: test_real}
\end{figure}

\section{Summary}
This section provides the first theoretical analysis of learning one-hidden-layer pruned neural networks, which offers formal justification of the improved generalization of winning ticket observed from empirical findings in LTH. We characterize analytically the impact of the number of remaining weights in a pruned network on the required number of samples for training, the convergence rate of the  learning algorithm, and the accuracy of the learned model. We also provide extensive numerical validations of our theoretical findings.
\endgroup

\begingroup
\newcommand{\w[1]}{\widetilde{#1}}
\chapter{\uppercase{Generalization analysis of Self-training with Unlabeled data: A one-hidden-layer Case}}\label{chapter: 6}
\blfootnote{Portions  of this chapter  have been submitted to: S.~Zhang, M.~Wang, S.~Liu, P.-Y. Chen, and J.~Xiong, ``How unlabeled data
  improve generalization in self-training? A one-hidden-layer theoretical
  analysis,''  submitted for publication.}

\section{Formalizing {Self-Training}: Notation, Formulation, and Algorithm}\label{sec: problem}




\textbf{Problem formulation.} 
Given $N$ labeled data sampled from distribution $P_{l}$, denoted by $\mathcal{D} = \{ \bfx_n, y_n\}_{n=1}^N$, and $M$ unlabeled data drawn from distribution $P_{u}$, denoted by $\widetilde{\mathcal{D}} = \{ \widetilde{\bfx}_m \}_{m=1}^M$. The objective function is to find a neural network model $g(\bfW)$, where $\bfW$ denotes the trainable weights, that minimize the testing error on data sampled from $P_{l}$.

\textbf{Iterative self-training.} In each iteration, given the current teacher predictor $g(\W[\ell])$, 
the pseudo-labels for the unlabeled data in $\w[\mathcal{D}]$ are computed as $\tilde{y}_m = g(\W[\ell];\w[\bfx]_m)$. 
The method then minimizes  the weighted empirical risk  $\hat{f}_{\mathcal{D},\w[\mathcal{D}]}(\bfW)$ of both labeled and unlabeled data through stochastic gradient descent, where
\begin{equation}\label{eqn: empirical_risk_function}
\vspace{-1mm}
    \hat{f}_{\mathcal{D},\w[\mathcal{D}]}(\bfW) = \frac{\lambda}{2N}\sum_{n=1}^N\big(y_n - g(\bfW;\bfx_n) \big)^2 + \frac{\w[\lambda]}{2M}\sum_{m=1}^M \big(\w[y]_m - g(\bfW;\w[\bfx]_m) \big)^2, 
\end{equation}
and $\lambda+\w[\lambda]= 1$. The learned student model $g(\W[\ell+1])$   is used as the teacher model in the next iteration. The initial model $g(\W[0])$ is learned from labeled data. 
The formal algorithm is summarized as followed. 

    
    
    

\textbf{Model and assumptions.} In this chapter, 
$g$ is a one-hidden-layer fully connected neural network equipped with $K$ neurons. Namely, given the input $\bfx\in\mathbb{R}^d$ and weights $\bfW = [\bfw_1, \bfw_2, \cdots, \bfw_K]\in \mathbb{R}^{d\times K}$, we have
\begin{equation}\label{eqn: teacher_model}
\vspace{-1mm}
    g(\bfW;\bfx) : = \frac{1}{K}\sum_{j=1}^K \phi(\bfw_j^{T}\bfx),
\end{equation}
where $\phi$ is the ReLU activation function,  and $\phi(z) = \max\{ z, 0 \}$ for any 
input $z\in \R$.

Moreover, we assume an unknown ground-truth model with weights $\bfW^*$ that maps all the features to the corresponding labels drawn from $P_l$, i.e., $y = g(\bfW^*;\bfx)$, where $(\bfx, y)\sim P_l$. 
The generalization function (GF) with respect to $g(\bfW)$  is defined as 
\begin{equation}\label{eqn: GF}
    I\big(g(\bfW)\big) = \mathbb{E}_{(\bfx, y)\sim P_l }\big(y - g(\bfW;\bfx)\big)^2 = \mathbb{E}_{(\bfx, y)\sim P_l }\big( g(\bfW^*;\bfx) - g(\bfW;\bfx)\big)^2.
\end{equation}
By definition $I\big(g(\bfW^*)\big)$ is zero. 
Clearly, $\bfW^*$ is not unique because any column permutation of $\bfW^*$, which corresponds to permuting neurons, represents the same function as $\bfW^*$ and minimizes GF
in \eqref{eqn: GF} too. To simplify the representation, we follow the convention and abuse the notation that 
the distance from $\bfW$ to $\bfW^*$, denoted by  $\|\bfW-\bfW^*\|_F$, means the smallest distance from $\bfW$ to any permutation of $\bfW^*$.  Additionally, some important notations are summarized in Table \ref{table:problem_formulation}.

We assume the inputs of both the labeled and unlabeled data belong to the zero mean Gaussian distribution,  
i.e., $\bfx \sim\mathcal{N}(0, \delta^2\bfI_{d})$, and $\w[\bfx]\sim \mathcal{N}(0, \de^2\bfI_d)$.  
The Gaussian assumption is motivated by the data whitening \cite{LBOM12} and batch normalization techniques \cite{IS15} that are commonly used in practice to improve learning performance. 
 
\textbf{The focus of this chapter.} 
This chapter will analyze two aspects about self-training:  
(1) the generalization performance of $\W[L]$, the returned model by self-training after $L$ iterations, measured by $\|\W[L]-\bfW^*\|_F$
\footnote{We use this metric because  $I\big(g(\bfW)\big)$ is shown to be linear in $\|\W[L]-\bfW^*\|_F$ numerically when $\W[L]$ is close to  $\bfW^*$, see Figure \ref{fig:GF_1M}.}; 
and (2) the impact of unlabeled data on the training and generalization performance.  

\begin{table}[h]
    \centering
        \caption{Some important notations for iterative self-training}
    \begin{tabular}{c|p{11cm}}
    \hline
    \hline
    $\mathcal{D} =\{ \bfx_n, \bfy_n \}_{n=1}^N$ & Labeled dataset with $N$  number of samples;\\
    \hline
    $\widetilde{\mathcal{D}} =\{ \widetilde{\bfx}_m \}_{m=1}^M$ & Unlabeled dataset with $M$  number of samples;\\
    \hline
    $d$ & Dimension of the input $\bfx$ or $\widetilde{\bfx};$
    \\
    \hline
    $K$ & Number of neurons in the hidden layer;
    \\
    \hline
    $\W[\ell]$ & Model returned by self-training after $\ell$ iterations; $\W[0]$ is the initial model;\\
    \hline
    $\bfW^*$ & Weights of the ground truth model;\\
    \hline
    $\WL$ & $\WL = \hat{\lambda} \bfW^* + (1-\hat{\lambda})\W[0]$;\\
    \hline
    \hline
    \end{tabular}
    \label{table:problem_formulation}
\end{table}

\section{Theoretical Results}\label{sec: theoretical}


\paragraph{{Beyond supervised learning: Challenge of self-training.}}
The existing theoretical works such as  \cite{ZSJB17} verify that for one-hidden-layer neural networks, if only labeled data are available, and $\bfx$ are drawn from the standard Gaussian distribution, then supervised learning by minimizing \eqref{eqn: empirical_risk_function} with $\lambda=1$ 
 can return a model with ground-truth weights $\bfW^*$ (up to column permutation), as long as  the number of labeled data $N$ is at least  {$N^*$, which  depends on $\kappa, K$ and $d$}. In contrast, this chapter focuses 
on the \textbf{low labeled-data regime} when $N$ is less than $N^*$. 
Specifically,
\begin{equation}\label{eqn: definition_N*}
   \max\{N^*/K^2, N^* /4\} < N \leq N^*. 
\end{equation}%

Intuitively, if $N<N^*$, the landscape of the empirical risk of the labeled data becomes highly nonconvex, even in a neighborhood   of $\bfW^*$, thus, the existing analyses for supervised learning do not hold in this region.  With additional unlabeled data, the landscape of the weighted empirical risk becomes smoother near  $\bfW^*$. Moreover, as $M$ increases, and starting from a nearby initialization, the returned model $\W[L]$ by iterative self-training  can converge to a local minimum that is closer to $\bfW^*$ (see  illustration in Figure \ref{fig: landscape_1}). 
\begin{figure}
    \centering
    \includegraphics[width=0.60
    \linewidth]{./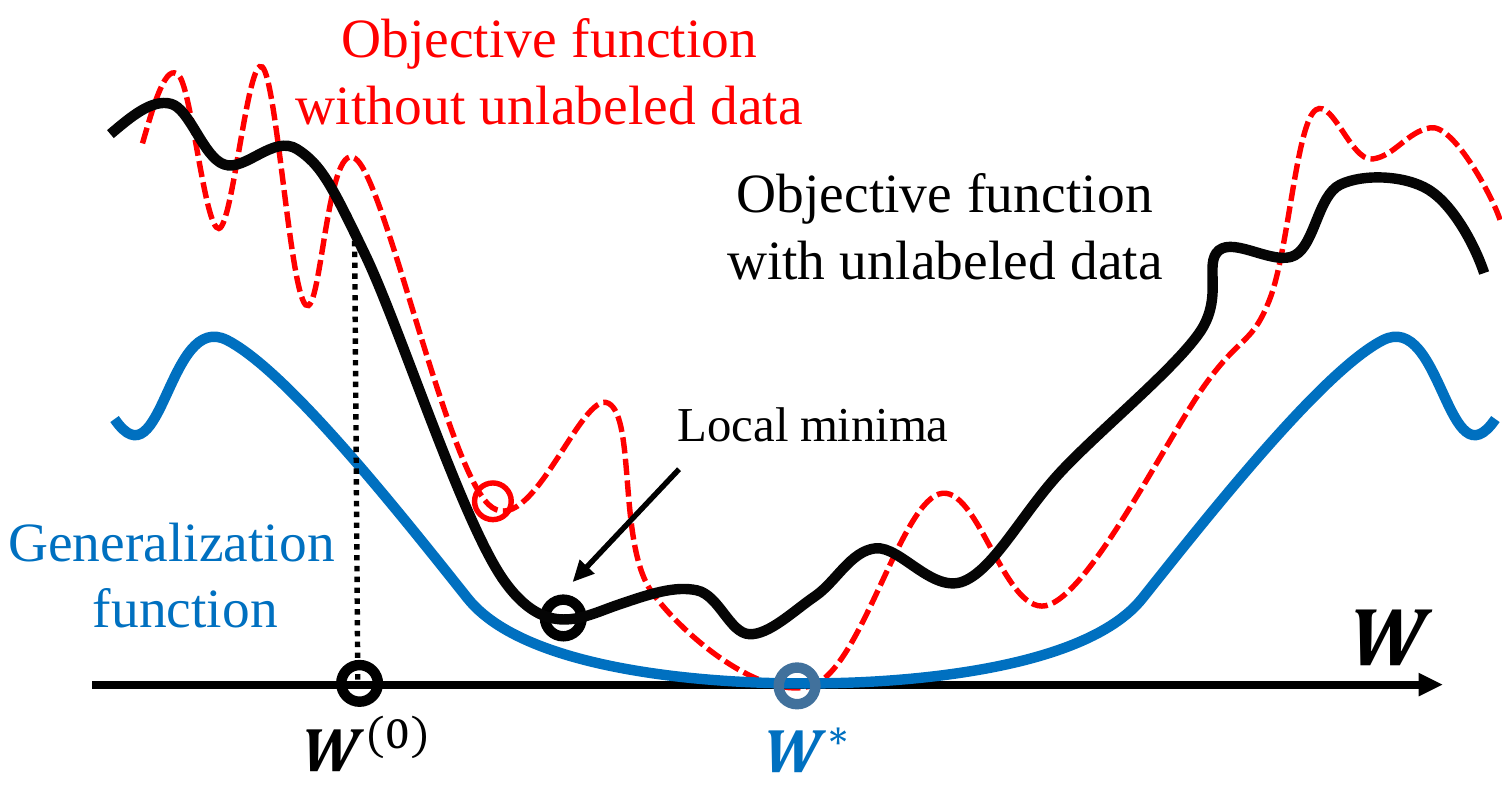}
     \caption{Adding unlabeled data in the empirical risk function drives its local minimum closer to $\bfW^*$, which minimizes the generalization function}
    \label{fig: landscape_1}
\end{figure} 

Compared with supervised learning, the formal analyses of self-training need to handle new technical challenges from two aspects. First, the existing analyses of supervised learning exploit the fact that   the GF and the empirical risk have the same minimizer, i.e., $\bfW^*$. This property does not hold for self-training as $\bfW^*$ no longer minimizes the weighted empirical risk in (\ref{eqn: empirical_risk_function}). Second, the iterative manner of self-training complicates the analyses. Specifically, the empirical risk   in each iteration is different and depends on the model trained in the previous iteration through the pseudo labels.  

{In what follows, we   provide theoretical insights and the formal theorems. 
Some important quantities $\hat{\lambda}$ and $\mu$  are defined below 
\vspace{-1mm}
\begin{equation}\label{eqn: definition_lambda}
    \hat{\lambda} : =\textstyle\frac{\lambda\delta^2}{\lambda\delta^2+\w[\lambda]\de^2},
    \text{\quad and \quad}
    \mu
    = \mu( \delta, \de) 
    := \sqrt{\frac{\lambda\delta^2+\w[\lambda]\de^2}{\lambda\rho(\delta)+\w[\lambda]\rho(\de)}},
    \vspace{-1mm}
\end{equation}
 where $\rho$ is a positive function defined in \eqref{eqn: rho}.
$\hat{\lambda}$ is an increasing function of $\lambda$. Also,
 from  Lemma \ref{lemma: rho_order} (in appendix),   $\rho(\delta)$ is in the order of $\delta^2$ when $\delta\leq 1$   for ReLU activation functions. Thus,   $\mu$ is a fixed constant, denoted by $\mu^*$, for all $\delta, \de \leq 1$. 
 When $\delta$ and $\de$ are large,   $\mu$ increases as they increase.} {The formal definition of $N^*$ in \eqref{eqn: definition_N*} is $c(\kappa)\mu^{*2}K^3d\log q$, where $c(\kappa)$ is some polynomial function of $\kappa$ and can be viewed as constant.}


\subsection{Informal Key Theoretical Findings}

 \begin{figure}
    \centering
    \includegraphics[width=0.70\linewidth]{./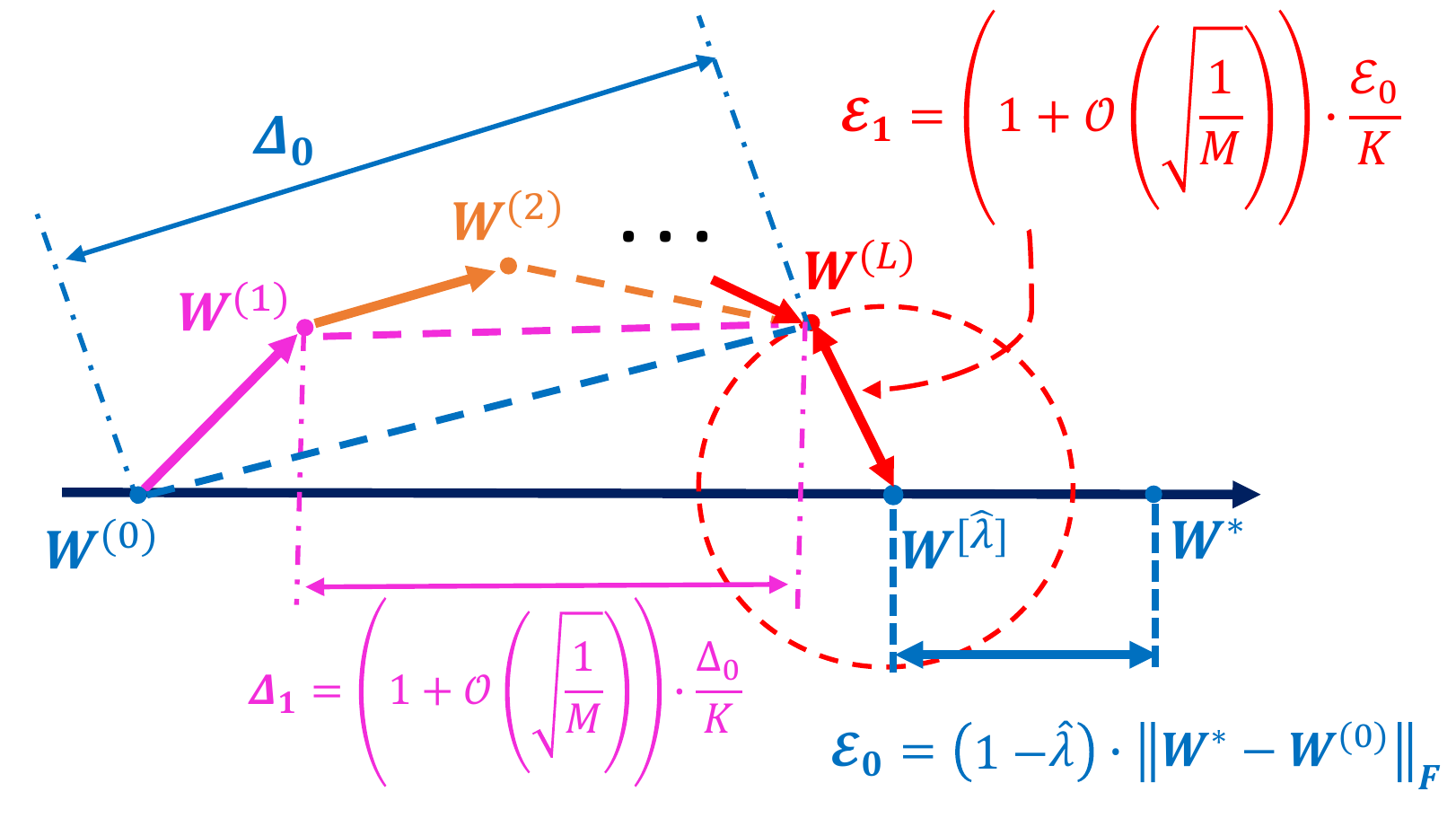}
    \caption{Illustration of the (1) ground truth $\bfW^*$, (2) iterations $\{ \W[\ell] \}_{\ell=0}^L$, (3) convergent point $\W[L]$ and (4) $\WL = \hat{\lambda}\bfW^*+(1-\hat{\lambda})\W[0]$}
    \label{fig: Thm2_region}
\end{figure}
To the best of our knowledge,   Theorems \ref{Thm: p_convergence} and \ref{Thm: sufficient_number} provide the first  theoretical characterization of iterative self-training on nonlinear neural networks. Before formally presenting them, we summarize the highlights as follows. 

\textbf{1. Linear convergence of the learned models.} 
The learned models 
converge linearly to a model close to $\bfW^*$. 
Thus, the iterative approach returns a model with better generalization than that by the single-shot method. Moreover,    the convergence rate is a constant term plus a term in the order of $1/\sqrt{M}$ (see $\Delta_1$ in Figure \ref{fig: Thm2_region}), indicating a faster convergence with more unlabeled data. 

\textbf{2. Returning a model with guaranteed generalization   in the low labeled-data regime.}  Even when the number of labeled data is much less than the required sample complexity to obtain $\bfW^*$ in supervised learning, we prove that with the help of unlabeled data, 
 the iterative self-training can return a model in 
 the neighborhood 
 of $\WL$, where $\WL$ is in the line segment of  $\W[0]$ ($\hat{\lambda}=0$) and ground truth $\bfW^*$ ($\hat{\lambda}=1$). 
Moreover, 
$\hat{\lambda}$ is upper bounded by $\sqrt{N/N^*}$. Thus $\W[L]$ moves closer to $\bfW^*$ as $N$ increases ($\mathcal{E}_0$ in Figure \ref{fig: Thm2_region}), indicating a better generalization performance with more labeled data. 



\textbf{3. Guaranteed generalization improvement by unlabeled data.} 
The distance between $\W[L]$ and $\WL$ ($\mathcal{E}_1$ in Figure \ref{fig: Thm2_region})
scales in the order of $1/\sqrt{M}$. With a larger number of unlabeled data $M$, $\W[L]$ moves closer to $\WL$ and thus $\bfW^*$,  indicating an improved generalization performance 
(Theorem \ref{Thm: p_convergence}).
When $N$ is close to $N^*$ but still smaller as defined in (\ref{eqn: main_sample}), 
 both $\W[L]$ and $\WL$ converge to $\bfW^*$, and thus the learned model achieves zero generalization error (Theorem \ref{Thm: sufficient_number}).  

\subsection{Formal Theory in Low Labeled-data Regime}


{\textit{Takeaways of Theorem\,1:}
Theorem 1 characterizes the convergence rate of the proposed algorithm and the accuracy of the learned model $\W[L]$ 
in a low labeled-data regime. Specifically, the iterates converge linearly, and the learned model is close to  $\WL$ and guaranteed to outperform the initial model $\bfW^{(0)}$.} 


\begin{theorem}\label{Thm: p_convergence}
{Suppose the initialization  $\bfW^{(0)}$ and the number of labeled data  satisfy 
    \begin{equation}\label{eqn: definition_p}
        \| \bfW^{(0)} -\bfW^* \|_F \le  p^{-1}\cdot \frac{\|\bfW^*\|_F}{c(\kappa) \mu^2 K^{3/2}} \quad \text{with} \quad p\in \big(\frac{1}{2}, 1\big],
    \end{equation}
    \begin{equation} \label{eqn:Nbound}
\hspace{-1in} \textrm{ and }\quad         \max\Big\{ \frac{1}{K}, p - \frac{2p-1}{\mu\sqrt{K}}\Big\}^2 \cdot N^* \le N \le N^*. 
    \end{equation}
If the value of $\hat{\lambda}$ in \eqref{eqn: definition_lambda} and unlabeled data amount $M$ satisfy
    \begin{equation}\label{eqn: thm1_condition1}
        \max\Big\{ \frac{1}{K}, p - \frac{2p-1}{\mu\sqrt{K}}\Big\}\le\hat{\lambda} \le \min \Big\{\sqrt{\frac{N}{N^*}}, p+ \frac{2p-1}{\mu\sqrt{K}}\Big\},
    \end{equation}
    \begin{equation}\label{eqn: thm1_condition2}
   \hspace{-1in}     \text{ and } \quad  M\ge (2p-1)^{-2} c(\kappa) \mu^2 \big( 1-  \hat{\lambda}\big)^2 K^3 d\log q.
    \end{equation}
    Then, when the number $T$ of SGD iterations  is large enough in each loop $\ell$, with probability at least $1-q^{-d}$,
     the iterates $\{ \bfW^{(\ell)} \}_{\ell=0}^{L}$ converge to $\WL$ as 
    \begin{equation}\label{eqn: p_convergence}
    \begin{split}
    \| \W[L]-\WL \|_F 
            \le& \bigg(\Big(1+\Theta\big(\frac{\mu(1-\hat{\lambda})}{\sqrt{M}}\big)\Big)\cdot \hat{\lambda} \bigg)^L \cdot \| \bfW^{(0)}-\WL \|_2\\  &+\Big(1+\Theta\big(\frac{\mu(1-\hat{\lambda})}{\sqrt{M}}\big)\Big) \cdot \| \bfW^* - \WL \|_F,
    \end{split}
    \end{equation}
    where $\WL = \hat{\lambda}\bfW^* + (1-\hat{\lambda})\W[0]$. Typically, when the iteration number $L$ is sufficient large, we have
    \begin{equation}\label{eqn: p_convergence_*}
         \begin{split}
        \vspace{-1mm}
             \| \W[L]-\bfW^* \|_F 
            \le \Big(1+\Theta\big(\frac{\mu(1-\hat{\lambda})}{\sqrt{M}}\big)\Big) \cdot 2(1-\hat{\lambda})\cdot \| \bfW^* - \bfW^{(0)} \|_F.
         \end{split}
    \end{equation}}
    \vspace{-1mm}
\end{theorem}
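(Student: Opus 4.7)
The plan is to exploit the fact that, in the population limit, the weighted risk
$\lambda\,\mathbb{E}_{\bfx}[(g(\bfW^*;\bfx)-g(\bfW;\bfx))^2] + \tilde{\lambda}\,\mathbb{E}_{\tilde{\bfx}}[(g(\bfW^{(\ell)};\tilde{\bfx})-g(\bfW;\tilde{\bfx}))^2]$
is locally minimized, around a neighborhood of $\bfW^{(\ell)}$ and $\bfW^*$, by the convex combination $\bar{\bfW}^{(\ell+1)} := \hat{\lambda}\bfW^* + (1-\hat{\lambda})\bfW^{(\ell)}$, with $\hat{\lambda}$ exactly the mixing ratio in \eqref{eqn: definition_lambda}. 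This comes from computing the expected gradient of the squared ReLU error under $\bfx\sim\mathcal{N}(0,\delta^2\bfI)$ and $\tilde{\bfx}\sim\mathcal{N}(0,\de^2\bfI)$: near a reference weight each term's gradient is proportional to $\delta^2(\bfW-\bfW^*)$ and $\de^2(\bfW-\bfW^{(\ell)})$ respectively (up to a common Hessian-like factor that depends only on the local geometry), and setting their sum to zero yields $\bar{\bfW}^{(\ell+1)}$.

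First I would establish local strong convexity of the weighted population risk on a ball of radius $\Theta(\|\bfW^*\|_F/(\mu^2 K^{3/2}))$ around $\bar{\bfW}^{(\ell+1)}$, with an explicit curvature lower bound in terms of $\lambda\rho(\delta)+\tilde{\lambda}\rho(\de)$. This adapts the one-hidden-layer Hessian analysis used for supervised learning to the situation where the two quadratic terms are centered at different weight matrices; the initialization assumption in \eqref{eqn: definition_p} is precisely what keeps every $\bfW^{(\ell)}$ inside this region. Next I would quantify the deviation between the empirical weighted gradient and its population counterpart via a matrix-Bernstein / sub-exponential bound: the labeled contribution produces a deviation of order $\lambda\sqrt{\mu^2 K^3 d\log q / N}$, and the unlabeled contribution $\tilde{\lambda}\sqrt{\mu^2 K^3 d\log q / M}$. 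Combining these with the sample-size hypotheses \eqref{eqn:Nbound} and \eqref{eqn: thm1_condition2}, and running $T$ inner SGD steps, the approximate student returned in outer loop $\ell$ satisfies
\[
\bigl\|\bfW^{(\ell+1)}-\bar{\bfW}^{(\ell+1)}\bigr\|_F \;\le\; \Theta\!\left(\tfrac{\mu(1-\hat{\lambda})}{\sqrt{M}}\right)\bigl\|\bfW^{(\ell)}-\bfW^*\bigr\|_F \;+\; (\text{small labeled-side noise}).
\]

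The outer-loop recursion then becomes, after rewriting in terms of the affine center $\WL=\hat{\lambda}\bfW^*+(1-\hat{\lambda})\bfW^{(0)}$,
\[
\bfW^{(\ell+1)}-\WL \;=\; \hat{\lambda}\bigl(\bfW^{(\ell)}-\WL\bigr) \;+\; (1-\hat{\lambda})\bigl(\bfW^*-\WL\bigr) \;+\; \bm{\epsilon}_{\ell},
\]
where $\|\bm{\epsilon}_{\ell}\|_F$ is of the perturbation order above and $(1-\hat{\lambda})(\bfW^*-\WL)$ is the fixed affine drift that accounts for $\WL$ being the natural resting point of the dynamics. Unrolling $L$ times, the homogeneous part contracts geometrically with factor $\hat{\lambda}(1+\Theta(\mu(1-\hat{\lambda})/\sqrt{M}))$ while the drift and noise terms sum to a geometric series bounded by $(1+\Theta(\mu(1-\hat{\lambda})/\sqrt{M}))\|\bfW^*-\WL\|_F$; this yields \eqref{eqn: p_convergence}, and \eqref{eqn: p_convergence_*} follows by triangle inequality using $\|\WL-\bfW^*\|_F=(1-\hat{\lambda})\|\bfW^{(0)}-\bfW^*\|_F$ and letting the homogeneous term die out.

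The main obstacle, as I see it, is coupling the pseudo-label drift with the concentration error uniformly across iterations: because $\bfW^{(\ell)}$ enters both the target of the unlabeled-data term and the SGD initialization for the next loop, a naive per-loop union bound would inflate the error with $L$ instead of closing up. The range constraint $\hat{\lambda}\le\sqrt{N/N^*}$ in \eqref{eqn: thm1_condition1} is what saves this: it guarantees that the labeled-side perturbation is strictly smaller than the local curvature's reach, so an inductive argument keeps every iterate inside the strongly-convex ball and the concentration events can be taken jointly with the probability budget $1-q^{-d}$. Once this invariant is maintained, the remaining steps are standard geometric-series bookkeeping.
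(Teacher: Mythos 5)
Your overall plan — local strong convexity of a weighted population risk, a matrix-Bernstein control of the empirical-versus-population gradient, an outer-loop recursion unrolled as a geometric series, with the labeled-data condition $\hat{\lambda}\le\sqrt{N/N^*}$ keeping the iterates inside the convex ball — captures the right ingredients. But you take a genuinely different decomposition from the paper, and your central recursion contains an algebraic error that needs to be fixed before the argument closes.

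\textbf{Different decomposition.} You track a \emph{moving} per-loop target $\bar{\bfW}^{(\ell+1)}=\hat{\lambda}\bfW^*+(1-\hat{\lambda})\bfW^{(\ell)}$ and then rewrite in the affine frame centered at $\WL$. The paper instead defines a \emph{fixed} population risk $f(\bfW;p)$ whose pseudo-labels are generated by the fixed matrix $\WP=p\bfW^*+(1-p)\W[0]$ and then specializes $p=\hat{\lambda}$. This means the population minimizer is fixed once and for all, and the mismatch between the actual teacher $\W[\ell]$ and the anchor $\WP$ is pushed entirely into the gradient error bound (the third term in Lemma \ref{Lemma: p_first_order_derivative}), which the intermediate value theorem around the smooth $f(\cdot;p)$ then handles cleanly. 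Your moving-target formulation is more intuitive, but it forces you to justify that the per-loop weighted population minimizer really is the exact convex combination $\hat{\lambda}\bfW^*+(1-\hat{\lambda})\bfW^{(\ell)}$; the paper's fixed-anchor construction sidesteps that step by never needing the minimizer of the mixed loss centered at two different weight matrices.

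\textbf{Coefficient error in the recursion.} From $\bfW^{(\ell+1)}\approx\hat{\lambda}\bfW^*+(1-\hat{\lambda})\bfW^{(\ell)}$ and $\WL=\hat{\lambda}\bfW^*+(1-\hat{\lambda})\bfW^{(0)}$, subtraction gives
\begin{equation*}
\bfW^{(\ell+1)}-\WL \;=\; (1-\hat{\lambda})\bigl(\bfW^{(\ell)}-\bfW^{(0)}\bigr)
\;=\; (1-\hat{\lambda})\bigl(\bfW^{(\ell)}-\WL\bigr)+\hat{\lambda}\bigl(\bfW^*-\WL\bigr)+\bm{\epsilon}_\ell,
\end{equation*}
using $\WL-\bfW^{(0)}=\hat{\lambda}(\bfW^*-\bfW^{(0)})$ and $\bfW^*-\WL=(1-\hat{\lambda})(\bfW^*-\bfW^{(0)})$. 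Your display has $\hat{\lambda}$ and $1-\hat{\lambda}$ interchanged: the contraction factor in the moving-target picture is $1-\hat{\lambda}$, not $\hat{\lambda}$, and the affine drift coefficient is $\hat{\lambda}$, not $1-\hat{\lambda}$. Because your version of the recursion happens to reproduce the $\hat{\lambda}^L$ form written in \eqref{eqn: p_convergence}, the mistake is easy to miss — but as a \emph{derivation} from the per-loop target you asserted, it does not follow. Unrolling the corrected recursion still yields the residual $\|\bfW^*-\WL\|_F$ (both recursions do), so the end conclusion \eqref{eqn: p_convergence_*} survives, but you should either repair the algebra or switch to the fixed-anchor formulation, where the contraction factor comes out of the heavy-ball spectral bound on $\bfI-\eta\nabla^2 f(\cdot;\hat{\lambda})$ plus the concentration error, exactly as in the paper's proof of Theorem \ref{Thm: p_main_thm}.
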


The accuracy of the learned model $\W[L]$ with respect to $\bfW^*$ is characterized as
\eqref{eqn: p_convergence}, and the learning model is better than initial model as in
\eqref{eqn: p_convergence_*} if the following conditions hold.
First, the weights $\lambda$ in \eqref{eqn: empirical_risk_function} are properly chosen as in \eqref{eqn: thm1_condition1}. Second, the number of unlabeled data is sufficiently large as in \eqref{eqn: thm1_condition2}. 

{
 \textbf{Selection of $\lambda$ in self-training algorithms}.   When $\hat{\lambda}$ increases, the required number of unlabeled data  is reduced from  \eqref{eqn: thm1_condition2}, and  the convergence point $\bfW^{(L)}$ becomes closer to $\bfW^*$ from \eqref{eqn: p_convergence_*}, which indicates a smaller generalization error. Thus, a large  $\hat{\lambda}$ within its feasible range \eqref{eqn: thm1_condition1} is desirable.  When the initial model $\bfW^{(0)}$  is closer to  $\bfW^*$ (corresponding to a larger $p$), and the number of labeled data $N$ increases, the upper bound in \eqref{eqn: thm1_condition1} increases, and thus, one can select a larger $\hat{\lambda}$.}

{
\textbf{The initial model $\bfW^{(0)}$}.  The 
tensor initialization from \cite{ZSJB17} can return  a  $\bfW^{(0)}$ that satisfies \eqref{eqn: definition_p} when the number of labeled data is $N = p^2N^*$ (see Lemma \ref{Thm: p_initialization} in Appendix). Combining with the requirement in \eqref{eqn:Nbound}, Theorem 1 applies to the case that $N$ is at least $N^*/4$.}

\subsection{Formal Theory of Achieving Zero Generalization Error}
\textit{Takeaways of Theorem\,2:} 
{Theorem \ref{Thm: sufficient_number} indicates the model returned by the proposed algorithm converges linearly to the ground truth $\bfW^*$. Thus the distance between the learned model and the ground truth can be arbitrarily small with the ability to achieve zero generalization error. The required sample complexity is reduced by a constant factor compared with supervised learning.}

\begin{theorem}\label{Thm: sufficient_number}
Consider the number of unlabeled data satisfies
   \begin{equation}\label{eqn: main_sample}
    \begin{split}
    \big(1 - {1}/{(\mu\sqrt{K})} \big)^2\cdot N^* \leq    N \leq    N^*,
    \end{split}
    \end{equation}
    we choose $\hat{\lambda}$ such that
    \begin{equation}\label{eqn: lambda_hat_s}
   1 - {1}/{(\mu\sqrt{K})} \leq     \hat{\lambda} \leq \sqrt{{N}/{N^*}}. 
    \end{equation}
Suppose the initial model $\bfW^{(0)}$  and the number of unlabeled data $M$ satisfy
\begin{equation}\label{eqn: Thm2_condition}
    \| \bfW^{(0)} -\bfW^* \|_F  \le \frac{\|\bfW^*\|_F}{c(\kappa)\mu^{2}K^{3/2}} \quad \text{and}\quad  M \geq c(\kappa)\mu^{2}(1-\hat{\lambda})^2 K^3 d\log q,
\end{equation}
    the iterates $\{ \bfW^{(\ell)} \}_{\ell=0}^{L}$ converge to the ground truth $\bfW^*$  as  follows,
    \begin{equation}\label{eqn: convergence}
    \vspace{-1mm}
         \| \W[L] -\bfW^* \|_F \le \Big[ \big(1+\frac{c(\kappa)\hat{\lambda}}{\sqrt{N}}+\frac{c(\kappa)(1-\hat{\lambda})}{\sqrt{M}}\big)\cdot\mu\sqrt{K}(1-\hat{\lambda}) \Big]^L 
        \cdot \| \W[0]-\bfW^* \|_F.
    \vspace{-1mm}
    \end{equation}
\end{theorem}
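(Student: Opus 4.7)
The plan is to view Theorem \ref{Thm: sufficient_number} as a refinement of Theorem \ref{Thm: p_convergence} in the parameter regime where $\hat{\lambda}$ is pushed close enough to $1$ that each outer loop contracts the distance to $\bfW^*$ itself (rather than merely to the interpolated target $\WL=\hat{\lambda}\bfW^*+(1-\hat{\lambda})\W[0]$). Concretely, I would re-centre the one-loop analysis at $\bfW^*$: instead of tracking $\|\W[\ell]-\WL\|_F$, I would derive a recursion of the form
\[
\|\W[\ell+1]-\bfW^*\|_F \;\le\; \rho \,\|\W[\ell]-\bfW^*\|_F,
\]
with $\rho=\bigl(1+c(\kappa)\hat{\lambda}/\sqrt N+c(\kappa)(1-\hat{\lambda})/\sqrt M\bigr)\,\mu\sqrt{K}\,(1-\hat{\lambda})$, and then unfold this $L$ times to obtain (\ref{eqn: convergence}). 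Under the hypothesis $\hat{\lambda}\ge 1-1/(\mu\sqrt K)$ we have $\mu\sqrt{K}(1-\hat{\lambda})\le 1$, and the sample complexities in \eqref{eqn: main_sample} and \eqref{eqn: Thm2_condition} keep the multiplicative correction strictly bounded, so $\rho<1$, which delivers linear convergence all the way to $\bfW^*$.

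To execute the per-loop step, I would first invoke the local landscape estimates that underpin Theorem \ref{Thm: p_convergence}: the weighted empirical risk $\hat f_{\mathcal D,\widetilde{\mathcal D}}$ is strongly convex and smooth on a neighbourhood of $\bfW^*$ of radius $\Theta(\|\bfW^*\|_F/(c(\kappa)\mu^2K^{3/2}))$, with condition number controlled by $\mu\sqrt K$. Given the current teacher $\W[\ell]$ in this neighbourhood, the inner SGD (run long enough) returns a point $\W[\ell+1]$ that solves the first-order stationarity condition of $\hat f_{\mathcal D,\widetilde{\mathcal D}}$ up to sampling fluctuation. Taking expectation over the Gaussian inputs and using the closed-form population gradient for ReLU networks computed previously, the population minimizer of the $\ell$-th loop objective is $\hat{\lambda}\bfW^*+(1-\hat{\lambda})\W[\ell]$; subtracting $\bfW^*$ then yields the noiseless contraction factor $(1-\hat{\lambda})$, which after multiplication by the Hessian conditioning gives $\mu\sqrt K(1-\hat{\lambda})$.

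The sampling fluctuation splits into two independent pieces: a labeled-data term of order $\hat{\lambda}\sqrt{d\log q/N}$ from the concentration of the gradient restricted to $\mathcal D$, and an unlabeled-data term of order $(1-\hat{\lambda})\sqrt{d\log q/M}$ from the concentration of the pseudo-label gradient on $\widetilde{\mathcal D}$. I would bound these with the same matrix-Bernstein/$\varepsilon$-net machinery invoked for Theorem \ref{Thm: p_convergence}, which under \eqref{eqn: main_sample}-\eqref{eqn: Thm2_condition} translates to the multiplicative corrections $c(\kappa)\hat{\lambda}/\sqrt N$ and $c(\kappa)(1-\hat{\lambda})/\sqrt M$ appearing in (\ref{eqn: convergence}). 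A union bound over the $L=O(\log(1/\varepsilon))$ outer loops gives the failure probability $q^{-d}$ and finishes the unfolding.

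The main obstacle is converting Theorem \ref{Thm: p_convergence}'s static recursion (with a fixed reference $\WL=\hat{\lambda}\bfW^*+(1-\hat{\lambda})\W[0]$) into a dynamic recursion whose centre slides with $\W[\ell]$. This requires showing that the population minimizer of the $\ell$-th loop's weighted risk genuinely tracks $\hat{\lambda}\bfW^*+(1-\hat{\lambda})\W[\ell]$ (not $\W[0]$) once pseudo-labels are refreshed from $\W[\ell]$, and that the $\mu\sqrt K$ factor from Hessian conditioning is indeed the correct bound rather than a looser $\kappa\mu K$ type quantity; this is what makes the threshold $\hat{\lambda}\ge 1-1/(\mu\sqrt K)$ sharp. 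Verifying that every $\W[\ell]$ stays in the local convex neighbourhood throughout the iteration (so that the one-loop estimate can be applied inductively) is a secondary but essential bookkeeping step, handled by the contraction property itself together with the initialization hypothesis in \eqref{eqn: Thm2_condition}.
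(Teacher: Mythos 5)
Your overall skeleton — re-centre the per-loop recursion at $\bfW^*$, show a contraction factor $\rho<1$ that scales like $\mu\sqrt K(1-\hat{\lambda})$ plus sampling corrections, and unfold over outer loops — matches the structure of the paper's proof (see the proof in Appendix~F: the paper also works with $\|\W[\ell,t]-\bfW^*\|$ directly, not $\|\W[\ell]-\WL\|$, precisely because it sets $p=1$ in the population risk so that $\bfW^*$ is the exact reference). But two of the pivotal mechanisms in your sketch are not actually correct, and both are handled differently in the paper.

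First, you assert that the population minimizer of the $\ell$-th loop's weighted objective, with pseudo-labels generated by $\W[\ell]$, is exactly $\hat{\lambda}\bfW^*+(1-\hat{\lambda})\W[\ell]$, and you want to subtract $\bfW^*$ to read off the $(1-\hat{\lambda})$ noiseless contraction. For a one-hidden-layer ReLU network this is false: the map $\bfW\mapsto g(\bfW;\cdot)$ is nonlinear, so the minimizer of $\lambda\,\mathbb{E}(g(\bfW^*;\bfx)-g(\bfW;\bfx))^2+\widetilde\lambda\,\mathbb{E}(g(\W[\ell];\widetilde\bfx)-g(\bfW;\widetilde\bfx))^2$ is not a convex combination of the weights. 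The paper sidesteps this entirely: it defines a population risk $f(\bfW;p)$ in which \emph{both} labeled and unlabeled expectations use the reference $\WP$ to generate the targets, so $\WP$ is the minimizer \emph{by construction} (eqn.~\eqref{eqn: p_population_risk_function}), and then absorbs the teacher-label mismatch into the gradient concentration bound (Lemma~\ref{Lemma: p_first_order_derivative} / Lemma~\ref{Lemma: first_order_distance}), where it appears as an additive bias term proportional to $\|\W[\ell,0]-\bfW^*\|$. Your plan would require you to prove a closeness statement about the true loop minimizer that you have not justified and that is not needed if you adopt the paper's device.

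Second, your explanation of the $\mu\sqrt K(1-\hat{\lambda})$ factor — take the noiseless $(1-\hat{\lambda})$ and multiply by ``Hessian conditioning'' — does not identify the actual source of the $\sqrt K$. In the paper's proof (\eqref{eqn:inner_loop}), that factor emerges from the geometric series accumulation of the per-step pseudo-label bias inside one outer loop, and it is crucially tied to the \emph{heavy-ball momentum} in Algorithm~\ref{Algorithm: Alg1}: the per-step contraction gap is $\Theta\bigl(1/(\mu\sqrt K)\bigr)$ rather than $\Theta\bigl(1/(\mu^2 K)\bigr)$ only because the paper uses the acceleration analysis of the augmented operator $\bfA(\beta)$. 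Your sketch treats the inner SGD as a black box that ``returns a point that solves the first-order stationarity condition'', but if you carry out that analysis with vanilla gradient descent (i.e., bound the stationary point by $\|\nabla\hat f(\bfW^*)\|/\gamma_{\min}$), you land on a per-loop rate of order $\mu^2 K(1-\hat{\lambda})$, which would require the strictly tighter threshold $\hat{\lambda}\ge1-1/(\mu^2 K)$ and would \emph{not} establish the theorem as stated. So the momentum step is not a bookkeeping detail; you need to bring it into the argument to reach the stated threshold $\hat{\lambda}\ge1-1/(\mu\sqrt K)$.
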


The models $\W[\ell]$'s  converge linearly  to the ground truth $\bfW^*$ as \eqref{eqn: convergence} when the number of labeled data 
satisfies \eqref{eqn: main_sample}. In contrast, supervised learning requires at least $N^*$ labeled samples to estimate $\bfW^*$ accurately without unlabeled data, which suggests self-training at least saves a constant fraction of labeled data.

\subsection{The Main Proof Idea}
Our proof  builds upon and extends one recent line of works on supervised learning such as \cite{ZSJB17,ZWLC20_2,ZWLC20_3}. The standard framework of these works is first to show that the generalization function $I(g(\bfW))$ in \eqref{eqn: GF} is locally convex near $\bfW^*$, which is its global minimizer. Then, when $M=0$ and $N$ is sufficiently large, the empirical risk function using labeled data only can approximate  $I(g(\bfW))$ well  in the neighborhood of $\bfW^*$. Thus, if initialized in this local convex region, iterations returned by applying gradient descent on the empirical risk converges to $\bfW^*$.

The technical challenge here is that in self-training, when unlabeled data are paired with pseudo labels, $\bfW^*$ is no longer a global minimizer of the empirical risk $\hat{f}_{\mathcal{D},\widetilde{\mathcal{D}}}$ in \eqref{eqn: empirical_risk_function}, and $\hat{f}_{\mathcal{D},\widetilde{\mathcal{D}}}$ does not approach $I(g(\bfW))$ even when $M$ and $N$ increase to infinity. Our new idea is to design a population risk function $f(\bfW; \hat{\lambda})$ in \eqref{eqn: p_population_risk_function} (see appendix), which is a lower bound of $\hat{f}_{\mathcal{D},\widetilde{\mathcal{D}}}$ when $M$ and $N$ are infinity. $f(\bfW; \hat{\lambda})$ is locally convex around its minimizer $\WL$, and $\WL$ approaches $\bfW^*$ as $\hat{\lambda}$ increases. Then we show the iterates generated by $\hat{f}_{\mathcal{D},\widetilde{\mathcal{D}}}$ stay close to $f(\bfW; \hat{\lambda})$, and the returned model $\W[L]$ is close to $\WL$. New technical tools are developed to bound the distance between the functions $\hat{f}_{\mathcal{D},\widetilde{\mathcal{D}}}$ and $f(\bfW; \hat{\lambda})$.

\section{Empirical Results}\label{sec: empirical_results}
\subsection{Synthetic Data Experiments} We generate a ground-truth   neural network with the width $K=10$.  Each entry of $\bfW^*$ is uniformly selected from $[-2.5, 2.5]$. 
The input of    labeled  data $\bfx_n$ 
are generated from Gaussian distribution $\mathcal{N}(0, \bfI_{d})$ independently, and the corresponding label $y_n$ is generated through \eqref{eqn: teacher_model} using $\bfW^*$. The unlabeled data 
$\w[\bfx]_m$ 
are generated from $\mathcal{N}(0, \widetilde{\delta}^2 \bfI_{d})$ independently with $\widetilde{\delta}=1$ except in Figure \ref{fig: delta}. $d$ is set as $50$ except in Figure \ref{fig: M_pt}. 
The value of $\lambda$ is selected as $\sqrt{N/(2Kd)}$ except in Figure \ref{fig: lambda}.
We consider one-hidden-layer except in Figure \ref{fig:GF_1M}.
The initial teacher model $\bfW^{(0)}$ in self-training is randomly selected from $\{\bfW|\|\bfW-\bfW^* \|_F/\|\bfW^*\|_F \le 0.5  \}$ to reduce the computation. In each iteration, the  maximum number of SGD steps $T$ is   $10$. Self-training terminates if $\|\bfW^{(\ell+1)}-\bfW^{(\ell)}\|_F/\|\bfW^{(\ell)}\|_F\le 10^{-4}$ or reaching $1000$  iterations. In Figures \ref{fig: M_N} to \ref{fig: lambda}, all the points on the curves are averaged over $1000$ independent trials, and the regions in lower transparency indicate  the corresponding  one-standard-deviation error bars. Our \textbf{empirical observations} are summarized below.

\textbf{(a) GF (testing performance) proportional to $\|\bfW -\bfW^*\|_F$.}
Figure \ref{fig:GF_1M} illustrates the GF in \eqref{eqn: GF} against the distance to the ground truth $\bfW^*$. To visualize results for different networks together, 
GF is normalized  in $[0, 1]$, divided by its   largest value for each network architecture.  
All the results are averaged over $100$ independent choice of $\bfW$. 
One can see that for one-hidden-layer neural networks,  in a 
large region near  $\bfW^*$,   GF is almost linear in $\|\bfW -\bfW^*\|_F$. When the number of hidden layers increases, this region decreases, but the linear dependence still holds locally. This is an empirical justification of using $\|\bfW -\bfW^*\|_F$  to evaluate the GF and, thus, the testing error in Theorems 1 and 2. 
\begin{figure}[h]
    \centering
    \includegraphics[width=0.6\textwidth]{./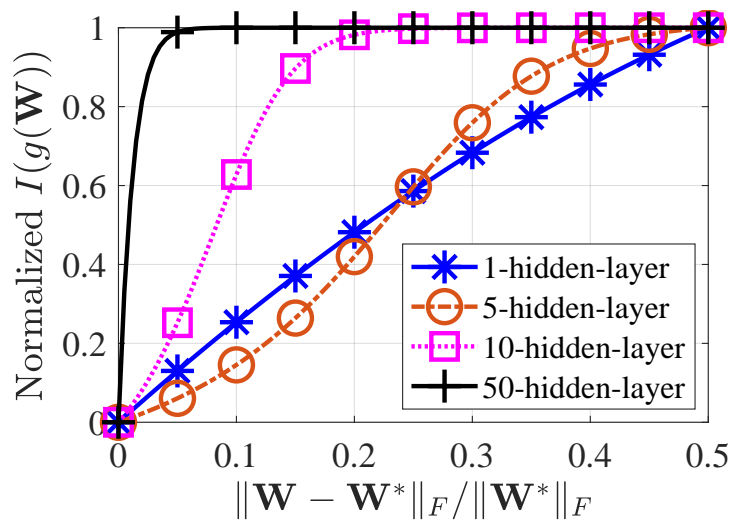}
    \caption{The generalization function against the distance to the ground truth neural network}
    \label{fig:GF_1M}
\end{figure}

\textbf{(b) $\|\W[L]-\bfW^*\|_F$ as a linear function of $1/\sqrt{M}$.}
 Figure \ref{fig: M_N} shows the relative error $\|\W[L]-\bfW^*\|_F/\| \bfW^* \|_F$ when the number of unlabeled data and labeled data changes. 
 One can see that the relative error decreases when either $M$ or $N$ increases. Additionally, the dash-dotted lines represent the best fitting of the linear functions of $1/\sqrt{M}$ using the least square method. Therefore, the relative error is indeed 
 a linear function of $1/\sqrt{M}$, as predicted by our results in \eqref{eqn: p_convergence_*} and \eqref{eqn: convergence}.  
\begin{figure}[h]
  	    \centering
  	    \includegraphics[width=0.6\linewidth]{./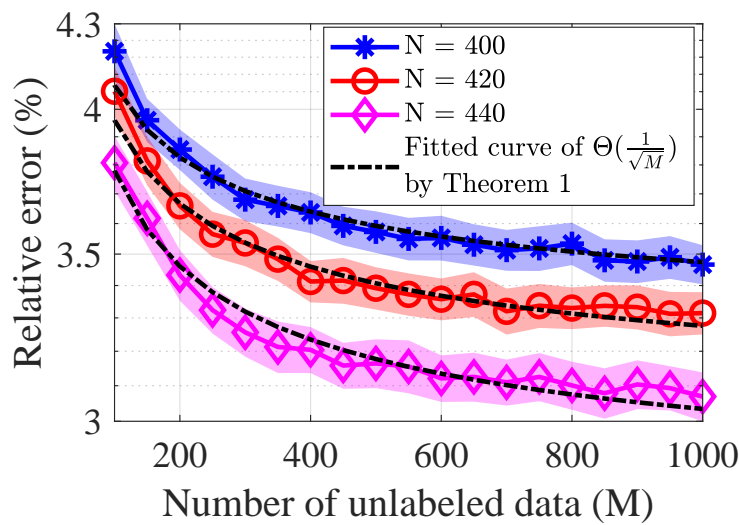}
  	    \caption{The relative error against the number of unlabeled data}
  	    \label{fig: M_N}
\end{figure}
\begin{figure}[h]
        \centering
  	    \includegraphics[width=0.6\linewidth]{./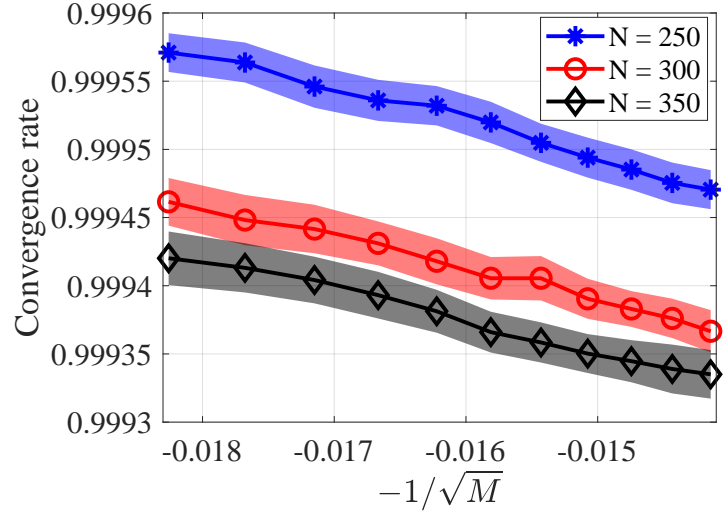}
        \caption{The convergence rate with different $M$ when $N<N^*$}
        \label{fig:M_ite}
\end{figure}

\textbf{(c) Convergence rate as a linear function of $1/\sqrt{M}$.}
Figure \ref{fig:M_ite} illustrates the convergence rate when $M$ and $N$ change. 
We can see that the convergence rate is a linear function of ${1}/{\sqrt{M}}$, as predicted by our results \eqref{eqn: p_convergence_*} and \eqref{eqn: convergence}. When $M$ increases, {the convergence rate is improved, and the method converges faster.} 

\textbf{(d) Increase of $\widetilde{\delta}$ slows down convergence.}  Figure \ref{fig: delta} shows that the convergence rate {becomes worse} when the variance of the unlabeled data $\widetilde{\delta}$ increases from $1$. When $\widetilde{\delta}$ is less than $1$, the convergence rate almost remains the same, which is consistent with our characterization in \eqref{eqn: p_convergence} that the convergence rate is linear in $\mu$. From the discussion after \eqref{eqn: definition_lambda}, $\mu$ increases as $\widetilde{\delta}$ increases from $1$ and stays constant when $\widetilde{\delta}$ is less than $1$.

\textbf{(e) Both $\| \W[L] - \bfW^* \|_F/\|\bfW^* \|_F$ and $\| \W[L] - \WL \|_F/\|\WL \|_F$ {are} linear functions of $-\hat{\lambda}$.}   Figure \ref{fig: lambda} shows that the relative errors of $\W[L]$ with respect to both $\bfW^*$ and $\W[L]$ decrease as a linear function of $-\hat{\lambda}$,  which are consistent with   the theoretical bounds in \eqref{eqn: p_convergence} and \eqref{eqn: p_convergence_*}.

\begin{figure}[h]
        \centering
        \includegraphics[width=0.6\linewidth]{./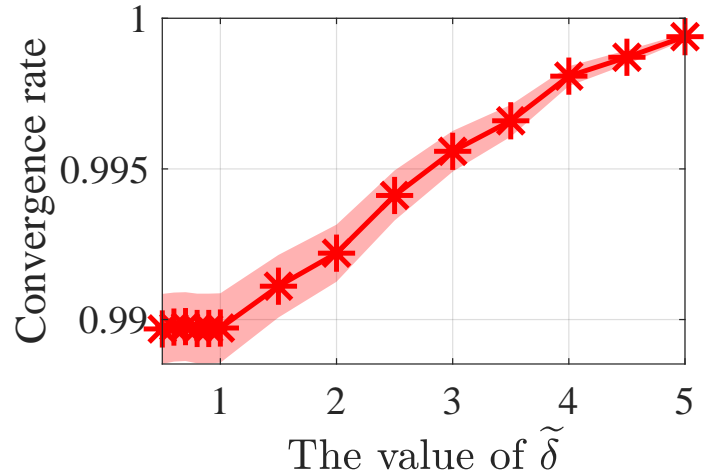}
  	    \caption{Convergence rate with different $\hat{\delta}$}
  	    \label{fig: delta}
\end{figure}
\begin{figure}[h]
        \centering
        \includegraphics[width=0.6\linewidth]{./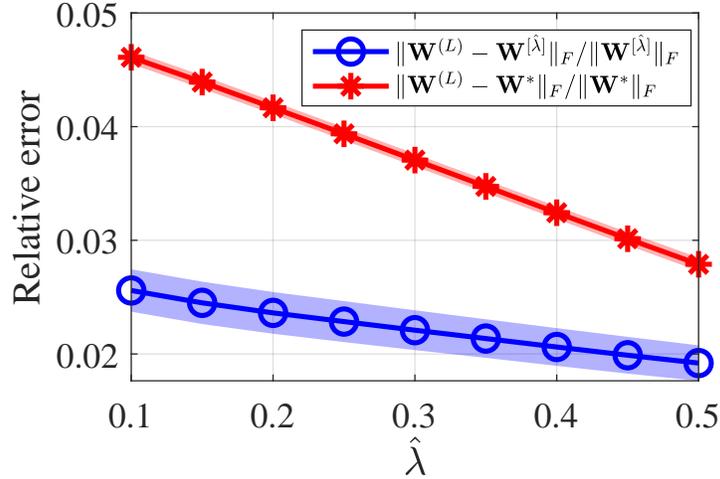}
  	    \vspace{-4mm}
  	    \caption{Relative errors of $\W[L]$ with respect to $\bfW^*$ and $\WL$ with different $\hat{\lambda}$}
  	    \label{fig: lambda}
\end{figure}

\textbf{(f) Unlabeled data reduce the sample complexity to learn $\bfW^*$.} Figure \ref{fig: M_pt} depicts 
the phase transition of {returning $\W[L]$}. 
For every pair of $d$ and $N$, we construct $100$ independent trials, and each trial is said to be successful if $\|\W[L]-\bfW^*\|_F/\| \bfW^* \|_F\le 10^{-2}$. The block in white depicts all the trials are successful, while the block in black indicates all failed. When $d$ increases, the required number of labeled data to learn $\bfW^*$ is linear in $d$. Thus, the sample complexity bound in \eqref{eqn: main_sample} is order-wise optimal for $d$. Moreover,  the phase transition line when $M=1000$ is below the one when $M=0$. Therefore, with unlabeled data, the required sample complexity of $N$ is reduced.

\begin{figure}[h]
    \centering
  	    \includegraphics[width=0.6\linewidth]{./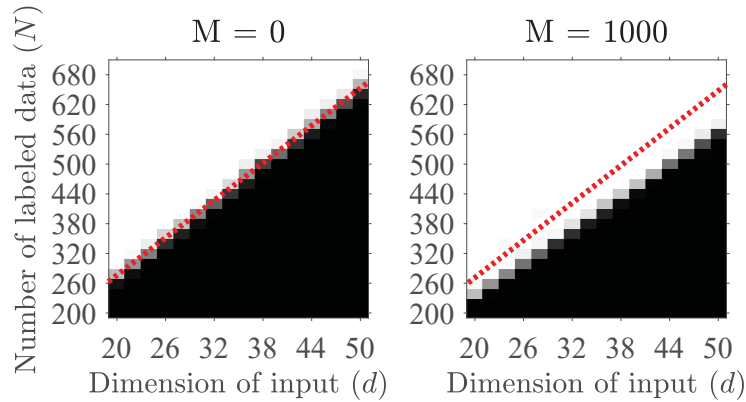}
  	\caption{Empirical phase transition of the curves with (a) $M = 0$ and (b) $M = 1000$}
  	\label{fig: M_pt}  
\end{figure}

\subsection{Image Classification on Augmented CIFAR-10 Dataset}
\label{sec: result_practice}
We evaluate  self-training  on the augmented CIFAR-10 dataset, which has 50K labeled data. The unlabeled data are mined from 80 Million Tiny Images following the setup in \cite{CRSD19}\footnote{The codes are downloaded from https://github.com/yaircarmon/semisup-adv}, and additional 50K images are selected for each class, which is a total of 500K images, to form the unlabeled data. The self-training method is the same implementation as that in \cite{CRSD19}. $\lambda$ and $\widetilde{\lambda}$ is selected as ${N}/{(M+N)}$ and ${M}/{(N+M)}$, respectively, and the algorithm stops after $200$ epochs.
In Figure \ref{fig: real_accuracy}, the dash lines stand for the best fitting of the linear functions of $1/\sqrt{M}$ via the least square method. One can see that the test accuracy is improved by up to $7\%$ using unlabeled data, and the empirical evaluations match the theoretical predictions. Figure \ref{fig: real_rate} shows the convergence rate calculated based on the first $50$ epochs, and the convergence rate is almost a linear function of $1/\sqrt{M}$, as predicted by \eqref{eqn: p_convergence}.

\begin{figure}[h]
        \centering
        \includegraphics[width=0.6\linewidth]{./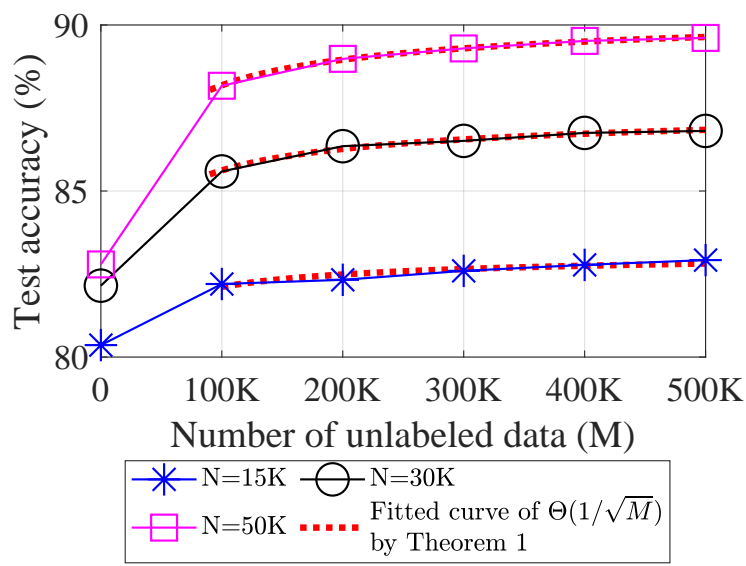}
        \caption{The test accuracy against the number of unlabeled data}
        \label{fig: real_accuracy}
\end{figure}
\begin{figure}[h]
        \centering
        \includegraphics[width=0.6\linewidth]{./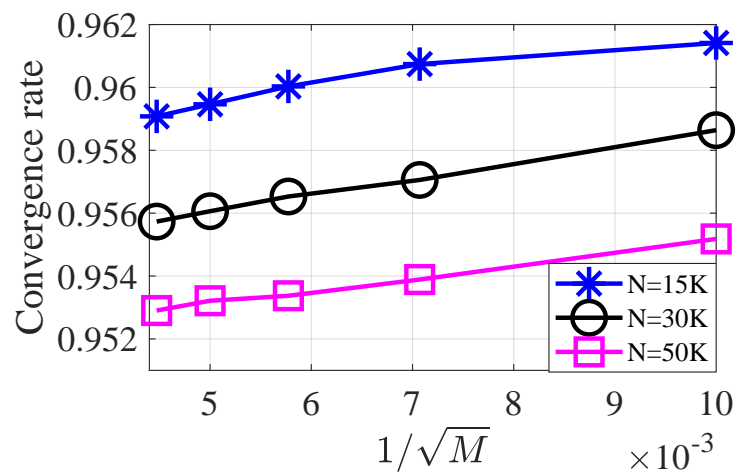}
        \caption{The convergence rate against the number of unlabeled data}
        \label{fig: real_rate}
\end{figure}

\section{Summary}

In this chapter, a new theoretical insight is provided  in understanding the influence of unlabeled data in  the iterative self-training algorithm.
The improved generalization error and convergence rate is proved to be a linear function of $1/\sqrt{M}$, where $M$ is the number of unlabeled data. Moreover, compared with  supervised learning, using unlabeled data reduces the required sample complexity of labeled data for achieving zero generalization error. Future directions include generalizing the analysis to multi-layer neural networks and other semi-supervised learning problems such as domain adaptation.
\endgroup

 
%
\specialhead{REFERENCES}
\bibliographystyle{IEEEtranN}

\appendix    
\addtocontents{toc}{\parindent0pt\vskip12pt APPENDICES} 

\chapter{\uppercase{Supplementary proofs of the theorems in Chapter 2}}

In this part, part of the proofs of the theorems in chapter 1 are provided, while the full proofs can be found in \cite{ZHWC18,ZHWC17}. In Appendix \ref{sec: c1n1}, some important notations and techniques are introduced first to provide a comprehensive understanding about the roadmap of the proofs. In what followed, the proofs of Theorems 1, 2, 3, 4 and 5 are presented in Appendix  \ref{sec: c1p1}, \ref{sec: c1p2}, \ref{sec: c1p3}, \ref{sec: c1p4} and \ref{sec: c1p5}, respectively. 
\section{Notations and Assumptions}\label{sec: c1n1}
We first introduce notations used in the proof. 
For matrix $\bfZ_1\in\mathbb{C}^{n_c\times n}$, we define the \textbf{Block Hankel Operator} $\widetilde{\mathcal{H}}$ as
\vspace{-1mm}
$$\widetilde{\mathcal{H}}\bfZ_1
\vspace{-1mm}
=\begin{bmatrix}
\mathcal{H}\bfZ_1&\mathcal{H}\bfZ_1&\cdots&\mathcal{H}\bfZ_1
\end{bmatrix}
\in\mathbb{C}^{n_cn_1\times n_cn_2}.$$
$\widetilde{\mathcal{H}}\bfZ_1$ is an $n_c$-block Hankel matrix.
 $\widetilde{\mathcal{H}}^*$ is the adjoint operator of $\widetilde{\mathcal{H}}$. For any matrix $\bfZ_2\in \mathbb{C}^{n_cn_1\times n_cn_2}$, $(\widetilde{\mathcal{H}}^*\bfZ_2)\in\mathbb{C}^{n_c\times n}$ satisfies
\vspace{-3mm}
\begin{equation*}
\langle\widetilde{\mathcal{H}}^*\bfZ_2,\bfe_k \bfe_t^H\rangle=\sum_{l=0}^{n_c-1}\sum_{k_1+k_2=t+1}\langle\bfZ, \bfe_{(k_1-1)n_c+k}\bfe_{k_2+ln_2}^H\rangle.
\end{equation*}
Define  $\mathcal{\widetilde{D}}^2:=\widetilde{\mathcal{H}}^*\mathcal{\widetilde{H}}$, an operator from an $n_c\times n$ matrix $\bfZ$ to an $n_c \times n$ matrix with $$\widetilde{\mathcal{D}}^2\bfZ=\sum_{t=1}^{n}\sum_{k=1}^{n_c}n_cw_t\langle{\bfZ},\bfe_{k}\bfe_{t}^H\rangle\bfe_{k}\bfe_{t}^H,$$ 
where $w_t$ is defined in \eqref{w}. Then the Moore-Penrose pseudoinverse of $\widetilde{\mathcal{H}}$, denoted as $\widetilde{\mathcal{H}}^{\dagger}$, equals to $\widetilde{\mathcal{D}}^{-2}\widetilde{\mathcal{H}}^*$.  Further, we define $\mathcal{\widetilde{G}}=\mathcal{\widetilde{H}}\mathcal{\widetilde{D}}^{-1},$ then the adjoint operator of $\mathcal{\widetilde{G}}$ is defined as $\mathcal{\widetilde{G}}^*=\mathcal{\widetilde{D}}^{-1}\mathcal{\widetilde{H}}^*$. 
Additionally,
\begin{equation}\label{y_l}
\vspace{-1mm}
\bfY :=\widetilde{\mathcal{D}}{\bfX} \quad \text{and} \quad {\bfY}_l:=\widetilde{\mathcal{D}}{\bfX}_l.
\end{equation}
 
For any matrix $\bfZ\in\mathbb{C}^{n_c\times n}$, one can check that $ \| \widetilde{\mathcal{H}}{\bfX}\|=\sqrt{n_c}\left\| {\mathcal{H}}{\bfX}\right\|$ and $\| \widetilde{\mathcal{H}}{\bfX}\|_F=\sqrt{n_c}\left\| {\mathcal{H}}{\bfX}\right\|_F $. Immediately, 
$\widetilde{\mathcal{H}}\bfX$ and ${\mathcal{H}}\bfX$ share the same conditional number $\kappa$. Moreover, it is clear that $\widetilde{\mathcal{G}}$ is a unit operator as $\mathcal{\widetilde{G}}^*\mathcal{\widetilde{G}}=\mathcal{{I}}$, and $\mathcal{\widetilde{H}}{\bfX}=\mathcal{\widetilde{G}}{\bfY}$.

The following proofs will be established on Block Hankel Operator $\widetilde{\mathcal{H}}$. Consider AM-FIHT in terms of $\widetilde{\mathcal{H}}$,
the initialization can be written as  $\widetilde{{\bfL}}_0=p^{-1}\mathcal{Q}_r(\widetilde{{\mathcal{H}}}\mathcal{P}_{\Omega}({\bfX}))$. Further, the major steps can be represented as
\vspace{-1mm}
\begin{equation*}
\vspace{-2mm}
\widetilde{{\bfW}}_l=\mathcal{P}_{\widetilde{{\bf\mathcal{S}}}_l}\widetilde{\mathcal{H}}({\bfX}_l+p^{-1}{\bfG}_l+\beta\Delta\widetilde{\bfW}),\quad \widetilde{{\bfL}}_{l+1}=\mathcal{Q}_r(\widetilde{{\bfW}}_l),
\end{equation*}
where $\widetilde{\mathcal{{\bfS}}}_l$ is the tangent subspace at $\widetilde{{\bfL}}_l$. The resulting $\widetilde{\bfX}_l$ returned by AM-FIHT in terms of $\widetilde{\mathcal{H}}$ is given as $\widetilde{\mathcal{H}}^{\dagger}\widetilde{\bfL}_{l}$. 

Moreover, by replacing $\mathcal{H}$ with $\widetilde{\mathcal{H}}$,  AM-FIHT returns the same result as $\widetilde{\bfX}_l=\bfX_l$. In other words, 
we can show  that 
\begin{equation}\label{bar}
\widetilde{{\bfL}}_l=\begin{bmatrix}
{\bfL}_l &{\bfL}_l &\cdots &{\bfL}_l
\end{bmatrix}, \forall l\ge0.
\end{equation}
To see this, it is clear that (\ref{bar}) holds for $l=0$ from the definition of $\widetilde{\bfL}_0$.	Then suppose (\ref{bar}) holds when $l=k$. Immediately, we have $\widetilde{{\bfW}}_{k}=\begin{bmatrix}
	{\bfW}_{k} &{\bfW}_{k} &\cdots &{\bfW}_{k}
	\end{bmatrix}$. Let  ${\bfW}_{k}=\sum_{i=1}^{\min(n_cn_1, n_2)}\sigma_i {\bfu}_i{\bfv}_i^*$ be the SVD with $\sigma_1\ge\sigma_2\ge\cdots\ge\sigma_{\min(n_cn_1, n_2)}$.  Then for $l=k+1$, 
	\small
	\vspace{-2mm}
	\begin{equation*}
	\vspace{-2mm}
	\begin{split}
	\widetilde{{\bfL}}_{k+1}=\mathcal{Q}_r(\widetilde{{\bfW}}_{k})
	&=\sum_{i=1}^{r}\sqrt{n_c}\sigma_i {\bfu}_i \frac{1}{\sqrt{n_c}}
	\begin{bmatrix}
	{\bfv}_i^*& \cdots& {\bfv}_i^*
	\end{bmatrix}\\
	&=\sum_{i=1}^{r}\begin{bmatrix}
	\sigma_i {\bfu}_i{\bfv}_i^*& \cdots& \sigma_i {\bfu}_i{\bfv}_i^*
	\end{bmatrix}.\\
	&=\begin{bmatrix}
	{\bfL}_{k+1} &{\bfL}_{k+1} &\cdots &{\bfL}_{k+1}
	\end{bmatrix}.
	\end{split}
	\end{equation*}
	\normalsize
Hence, the connection between $\bfX_l$ and $\widetilde{\bfL}_l$ can be given as 
\begin{equation}
\begin{split}
&\|\bfX_l-\bfX\|_F
=\|\widetilde{\mathcal{D}}^{-1}({\bfY}_l-{\bfY})\|_F\le \frac{1}{\sqrt{n_c}}\|{\bfY}_l-{\bfY}\|_F\\
&~~~=\frac{1}{\sqrt{n_c}}\|\widetilde{\mathcal{G}}^{*}(\widetilde{\bfL}_l-\widetilde{\mathcal{G}}{\bfY})\|_F\le \frac{1}{\sqrt{n_c}}\|\widetilde{\bfL}_l-\widetilde{\mathcal{G}}{\bfY}\|_F.
\end{split}
\end{equation}

For RAM-FIHT, 
similarly define an $n_c$-block matrix 
 $\widetilde{\bfL}^{\prime}_l=\begin{bmatrix}
\bfL^{\prime}_l&\bfL^{\prime}_l&\cdots&\bfL^{\prime}_l
\end{bmatrix}$. From the discussion above, one can verify that (\ref{bar}) also holds in RAM-FIHT. Then 
the SVD of $\widetilde{\bfL}_l$ in \eqref{bar} is $\widetilde{\bfL}_l=\widetilde{\bfU}_l\widetilde{\bf\Sigma}_l\widetilde{\bfV}_l^*
=
\bfU_l \big(\sqrt{n_c}{\bf\Sigma}_l\big)
\big(\frac{1}{\sqrt{n_c}}[\bfV_l^* \cdots \bfV_l^*]
\big)$. Then $\widetilde{\bfA}_l$ and $\widetilde{\bfB}_l$ are defined as 
\small
\begin{equation}\label{trimming1}
(\widetilde{\bfA}_{l})_{i*}=\displaystyle\frac{(\widetilde{\bfU}_{l})_{i*}}{\left\| (\widetilde{\bfU}_{l})_{i*}\right\|}\min\bigg\{\left\|(\widetilde{\bfU}_{l})_{i*} \right\|, \sqrt{\frac{\mu r}{n_cn_1}} \bigg\},
\end{equation}
\begin{equation}\label{trimming2}
(\widetilde{\bfB}_{l})_{i*}=\displaystyle\frac{(\widetilde{\bfV}_{l})_{i*}}{\left\| (\widetilde{\bfV}_{l})_{i*}\right\|}\min\bigg\{\left\|(\widetilde{\bfV}_{l})_{i*} \right\|, \sqrt{\frac{\mu r}{n_cn_2}} \bigg\}.
\end{equation}
\normalsize

{\bf Sampling model with replacement}. As shown in  \cite{R11}, 
due to the duplications,  
 the number of observed entries in a sampling model with replacement is less than or equal to that in a sampling model without replacement. 
 Thus, it is sufficient to study the sampling model with replacement. 
 To distinguish $\widehat{\Omega}$ in \eqref{m2}, which represents the sampling set without replacement, let $\Omega$ be $m$ unions of indices sampled uniformly from set $\{1,2,\cdots,n_c\}\times\{1,2,\cdots,n\}$ with replacement, and
 \vspace{-2mm}
 \begin{equation}\label{sampling with replacement}
 \vspace{-2mm}
 \mathcal{P}_{\Omega}(\bfZ_1)=\sum_{a=1}^{m}\langle\bfZ_1, \bfe_{k_a}\bfe_{t_a}^H\rangle\bfe_{k_a}\bfe_{t_a}^H,
 \end{equation}
 for any $\bfZ_1\in\mathbb{C}^{n_c\times n}$. By changing the sampling model, several critical lemmas can be derived from Bernstein Inequality.
\begin{lemma}[\cite{T12}, Theorem 1.6]\label{prob}
	Consider a finite sequence $\{{\bfZ}_k\}$ of independent, random matrices with dimensions $d_1\times d_2$. Assume that such random matrix satisfies
	\vspace{-2mm}
	\begin{equation*}
	\vspace{-2mm}
	\mathbb{E}({\bfZ_k})=0\quad and \quad \left\|{\bfZ_k}\right\|\le R \quad almost ~ surely.	
	\end{equation*}
	Define
	\vspace{-2mm}
	\begin{equation*}
	\vspace{-2mm}
	\sigma^2:=\max\Big\{\Big\|\sum_{k}\mathbb{E}({\bfZ}_k{\bfZ}_k^H)\Big\|,\Big\|\displaystyle\sum_{k}\mathbb{E}({\bfZ}_k^H{\bfZ}_k)\Big\|\Big\}.
	\end{equation*}
	Then for all $t\ge0$,
	\begin{equation*}
	\mathbb{P}\Bigg\{ \left\|\displaystyle\sum_{k}{\bfZ}_k\right\|\ge t \Bigg\}\le(d_1+d_2)\exp\Big(\frac{-t^2/2}{\sigma^2+Rt/3}\Big).
	\end{equation*}
\end{lemma}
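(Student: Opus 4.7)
The plan is to establish this matrix Bernstein inequality via the matrix Laplace transform method, which reduces the tail bound to controlling the matrix cumulant generating function (cgf) of the sum. Since the ${\bfZ}_k$ are rectangular of size $d_1 \times d_2$, the first step will be a \emph{Hermitian dilation}: define
\begin{equation*}
\mathcal{D}({\bfZ}_k) = \begin{bmatrix} \bf0 & {\bfZ}_k \\ {\bfZ}_k^H & \bf0 \end{bmatrix} \in \mathbb{C}^{(d_1+d_2)\times(d_1+d_2)},
\end{equation*}
which is Hermitian and satisfies $\|\mathcal{D}({\bfZ}_k)\| = \|{\bfZ}_k\|$ as well as $\mathcal{D}({\bfZ}_k)^2 = \mathrm{diag}({\bfZ}_k{\bfZ}_k^H,\,{\bfZ}_k^H{\bfZ}_k)$. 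Hence $\|\sum_k {\bfZ}_k\| = \|\sum_k \mathcal{D}({\bfZ}_k)\|$, and the variance proxy $\sigma^2$ equals $\|\sum_k \mathbb{E}\mathcal{D}({\bfZ}_k)^2\|$. This allows us to assume without loss of generality that the summands are Hermitian, bounded in spectral norm by $R$, with zero mean, and with total variance $\sigma^2$.

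Next I would carry out the Laplace transform step. For any $\theta>0$, by Markov's inequality applied to $\lambda_{\max}$,
\begin{equation*}
\mathbb{P}\Big\{\Big\|\textstyle\sum_k \mathcal{D}({\bfZ}_k)\Big\|\ge t\Big\} \le 2\,e^{-\theta t}\,\mathbb{E}\,\mathrm{tr}\exp\Big(\theta \textstyle\sum_k \mathcal{D}({\bfZ}_k)\Big),
\end{equation*}
where the factor $2$ accounts for bounding both $\lambda_{\max}$ and $-\lambda_{\min}$. The key ingredient is the subadditivity of the matrix cgf, which follows from Lieb's concavity theorem applied iteratively over the independent summands:
\begin{equation*}
\mathbb{E}\,\mathrm{tr}\exp\Big(\textstyle\sum_k \theta \mathcal{D}({\bfZ}_k)\Big) \le \mathrm{tr}\exp\Big(\textstyle\sum_k \log\mathbb{E} e^{\theta \mathcal{D}({\bfZ}_k)}\Big).
\end{equation*}

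The third step is to bound each individual matrix cgf $\log\mathbb{E} e^{\theta \mathcal{D}({\bfZ}_k)}$ in the semidefinite order. Using the mean-zero condition and the boundedness $\|\mathcal{D}({\bfZ}_k)\|\le R$, the standard scalar inequality $e^x \le 1 + x + \frac{x^2/2}{1 - |x|/3}$ for $|x|<3$ lifts to the operator setting via the spectral theorem, yielding
\begin{equation*}
\mathbb{E} e^{\theta\mathcal{D}({\bfZ}_k)} \preceq \bfI + \frac{\theta^2/2}{1-\theta R/3}\,\mathbb{E}\mathcal{D}({\bfZ}_k)^2,
\end{equation*}
and hence $\log\mathbb{E} e^{\theta\mathcal{D}({\bfZ}_k)} \preceq \frac{\theta^2/2}{1-\theta R/3}\,\mathbb{E}\mathcal{D}({\bfZ}_k)^2$ by operator monotonicity of $\log$. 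Summing, taking $\mathrm{tr}\exp$, and using $\mathrm{tr}\exp(\bfA) \le d\cdot\lambda_{\max}(e^{\bfA})$ converts the bound into one involving only $\sigma^2$ and the dimension $d_1+d_2$.

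The final step is optimization over $\theta \in (0, 3/R)$. Plugging the individual bounds back produces
\begin{equation*}
\mathbb{P}\Big\{\Big\|\textstyle\sum_k {\bfZ}_k\Big\| \ge t\Big\} \le (d_1+d_2)\exp\!\Big(-\theta t + \frac{\theta^2 \sigma^2/2}{1-\theta R/3}\Big),
\end{equation*}
and choosing $\theta = t/(\sigma^2 + Rt/3)$ yields the stated inequality. The main technical obstacle is the operator-monotone/trace step: justifying that the scalar bound on $e^x$ transfers to a semidefinite inequality requires Lieb's concavity theorem (for the subadditivity of the cgf) and care with operator monotonicity of $\log$, which is why a careful invocation of these two deep facts is the heart of the argument; the remaining steps are bookkeeping.
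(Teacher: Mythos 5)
This lemma is imported verbatim from Tropp's monograph (\cite{T12}, Theorem 1.6), and the paper gives no proof of its own; it is simply quoted and then invoked in the proofs of the Hankel-completion results. Your proposal is therefore not an alternative to a proof in the paper but a reconstruction of the standard argument for the matrix Bernstein inequality, and it is essentially the right one: Hermitian dilation, matrix Laplace transform, Lieb's concavity theorem for subadditivity of the matrix cgf, the scalar bound $e^x \le 1 + x + \frac{x^2/2}{1-|x|/3}$ lifted through the spectral theorem and operator monotonicity of $\log$, then optimization over $\theta$.

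One step in your sketch is inconsistent. You write the Laplace-transform bound with a prefactor of $2$, ``because the factor $2$ accounts for bounding both $\lambda_{\max}$ and $-\lambda_{\min}$,'' yet your final display has the prefactor $(d_1+d_2)$ rather than $2(d_1+d_2)$. Carrying the factor $2$ through would give a bound that is a factor of two weaker than the stated lemma. The way to get the sharp constant is to observe that the Hermitian dilation of any rectangular matrix has spectrum symmetric about $0$ (if $\lambda$ is an eigenvalue so is $-\lambda$), and since dilation is linear, $\mathcal{D}\big(\sum_k \bfZ_k\big) = \sum_k \mathcal{D}(\bfZ_k)$ inherits this symmetry; hence $\big\|\sum_k \mathcal{D}(\bfZ_k)\big\| = \lambda_{\max}\big(\sum_k \mathcal{D}(\bfZ_k)\big)$ exactly, and only the upper tail of $\lambda_{\max}$ needs to be bounded. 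With that correction the argument closes cleanly and matches the stated dimensional factor $d_1+d_2$.
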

Suppose that $t\le \sigma^2/R$, then the right-hand side can be released as $(d_1+d_2)\exp(-\frac{3}{8}t^2/\sigma^2)$. Such kind of manipulation will be adopted in several proofs.

Note that the set of matrices $$\big\{\widetilde{{\bfH}}_{k.t}|\widetilde{{\bfH}}_{k,t}=\frac{1}{\sqrt{n_cw_t}}\widetilde{\mathcal{H}}({\bfe}_k{\bfe}_t^H),1\le k \le n_c, 1\le t \le n\big\}$$ forms an orthonormal basis of the $n_c$-block Hankel matrix, where
$$\widetilde{\mathcal{H}}\bfX=\sum_{k=1}^{n_c}\sum_{t=1}^{n}\langle\widetilde{{\bfH}}_{k,t}, \widetilde{\mathcal{H}}\bfX\rangle\widetilde{{\bfH}}_{k,t}.
$$
Then, for all $(k_a,t_a)\in\Omega$, $\mathcal{P}_{\Omega}$ is also used as the operator
 \begin{equation*}
 \mathcal{P}_{\Omega}({\bfZ}_2)=\sum_{a=1}^{m}\langle{\bfZ}_2,\widetilde{{\bfH}}_{k_a,t_a}\rangle\widetilde{{\bfH}}_{k_a,t_a},
 \end{equation*}
 for any ${\bfZ}_2\in\mathbb{C}^{n_cn_1 \times n_cn_2}$. In spite of abuse of notation, the meaning of $\mathcal{P}_{\Omega}$ is clear from context. By such definition,  $\mathcal{P}_{\Omega}(\widetilde{\mathcal{H}}{\bfZ}_1)=\widetilde{\mathcal{H}}\mathcal{P}_{\Omega}({\bfZ}_1)$ for any ${\bfZ}_1\in\mathbb{C}^{n \times n_c}$. Additionally,
$\widetilde{{\bfH}}_{k,t}$ only has $n_c{w}_t$ nonzero entries of magnitude $1/\sqrt{n_c{w}_t}$, so $\|\widetilde{{\bfH}}_{k,t} \|_F =1$. The following lemma can be established directly from the definition of incoherence.
\begin{lemma}[\cite{ZHWC18}, Lemma 2]
\label{def}
	Let $\mathcal{\widetilde{H}}{\bfX}={\widetilde{\bfU}\widetilde{\bf\Sigma} \widetilde{\bfV}^H}$ be the SVD of $\widetilde{\mathcal{H}}\bfX$. Assume $\mathcal{\widetilde{H}}{\bfX}$ is $\mu$-incoherent. Then\\
	\begin{equation}\label{co_1}
	\left\|{\bfe}_{k_1}^H\widetilde{{\bfU}}\right\|^2\le\frac{\mu c_sr}{n_cn} \quad, \quad \left\| {\bfe}_{k_2}^H\widetilde{{\bfV}}\right\|^2\le\frac{\mu c_sr}{n_cn},
	\end{equation}
	\begin{equation}\label{co_2}
	\left\|\mathcal{P}_{\widetilde{{\bfU}}}(\widetilde{{\bfH}}_{k,t})\right\|_F^2\le \displaystyle\frac{\mu c_sr}{n_cn} \quad ,\quad   \left\|\mathcal{P}_{\widetilde{{\bfV}}}(\widetilde{{\bfH}}_{k,t})\right\|_F^2\le \displaystyle\frac{\mu c_sr}{n_cn}.
	\end{equation}
	where ${\bfe}_{k_1},{\bfe}_{k_2}$ are the coordinate unit vectors.
\end{lemma}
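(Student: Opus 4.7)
The plan is to exploit the fact that $\widetilde{\mathcal{H}}\bfX = [\mathcal{H}\bfX,\ldots,\mathcal{H}\bfX]$ (with $n_c$ horizontal copies) admits an SVD directly derivable from the SVD of $\mathcal{H}\bfX$. Specifically, if $\mathcal{H}\bfX = \bfU\bfSigma\bfV^H$, then computing $\widetilde{\mathcal{H}}\bfX(\widetilde{\mathcal{H}}\bfX)^H = n_c\bfU\bfSigma^2\bfU^H$ shows $\widetilde{\bfU} = \bfU$, $\widetilde{\bfSigma} = \sqrt{n_c}\bfSigma$, and $\widetilde{\bfV} = \tfrac{1}{\sqrt{n_c}}[\bfV^H,\ldots,\bfV^H]^H \in \mathbb{C}^{n_cn_2 \times r}$. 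With this structure in hand, both \eqref{co_1} and \eqref{co_2} reduce to direct computations.

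For \eqref{co_1}, I would invoke Definition~1 applied to $\widetilde{\mathcal{H}}\bfX$, which lives in $\mathbb{C}^{n_cn_1 \times n_cn_2}$, so the $\mu$-incoherence hypothesis gives $\|\bfe_{k_1}^H\widetilde{\bfU}\|^2 \le \mu r/(n_cn_1)$ and $\|\bfe_{k_2}^H\widetilde{\bfV}\|^2 \le \mu r/(n_cn_2)$. Converting these into the claimed uniform bound $\mu c_s r/(n_cn)$ is a one-line consequence of the definition $c_s = \max(n/n_1,n/n_2)$, which gives $\tfrac{1}{n_1} \le \tfrac{c_s}{n}$ and $\tfrac{1}{n_2} \le \tfrac{c_s}{n}$.

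For \eqref{co_2}, the main task is to expand $\|\mathcal{P}_{\widetilde{\bfU}}(\widetilde{\bfH}_{k,t})\|_F^2 = \|\widetilde{\bfU}^H\widetilde{\bfH}_{k,t}\|_F^2$ using the Hankel support structure. Recall that $\widetilde{\mathcal{H}}(\bfe_k\bfe_t^H)$ has nonzero entries at positions $((i-1)n_c+k,\,\ell n_2 + j)$ for every $(i,j)$ with $i+j=t+1$, $1\le i \le n_1$, $1\le j \le n_2$, and every $\ell = 0,\ldots,n_c-1$; there are exactly $w_t$ such $(i,j)$ pairs. Substituting this sparse representation into $\widetilde{\bfU}^H\widetilde{\bfH}_{k,t}$ and taking Frobenius norm squared splits the sum across the $w_t$ anti-diagonal entries, each contributing a row-norm $\|\bfe_{(i-1)n_c+k}^H\widetilde{\bfU}\|^2$. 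Summing over the $w_t$ terms and accounting for the normalization $(n_c w_t)^{-1/2}$ in $\widetilde{\bfH}_{k,t}$ cancels the $w_t$, and the $\mu$-incoherence bound from part (i) then delivers the claim. The $\widetilde{\bfV}$ half proceeds identically after observing that multiplying $\widetilde{\mathcal{H}}(\bfe_k\bfe_t^H)$ on the right by $\widetilde{\bfV}$ reduces (via the block-stacking structure of $\widetilde{\bfV}$) to $\sqrt{n_c}\,\mathcal{H}(\bfe_k\bfe_t^H)\bfV$, after which each nonzero row contributes $\|\bfe_j^H\bfV\|^2$ and the same bookkeeping goes through.

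The only delicate point will be tracking the constants through the $\sqrt{n_c}$ factors introduced by the stacking normalization; this is a purely mechanical bookkeeping exercise, and no new analytic inequality beyond Definition~1 is required. I expect no substantive obstacles, and the two parts together give the four inequalities stated in the lemma.
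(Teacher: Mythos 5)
Your plan---invoke Definition~1 for the $\mu$-incoherent matrix $\widetilde{\mathcal{H}}\bfX\in\mathbb{C}^{n_cn_1\times n_cn_2}$ to get \eqref{co_1}, then exploit the anti-diagonal support of $\widetilde{\bfH}_{k,t}$ for \eqref{co_2}---is the intended route, and the thesis itself signals as much when it says the lemma follows ``directly from the definition of incoherence.'' Your derivations of \eqref{co_1} and of the $\widetilde{\bfU}$ half of \eqref{co_2} are correct: the $\ell$-sum over the $n_c$ repeated blocks cancels the $n_c$ in the normalization $(n_cw_t)^{-1/2}$, the $w_t$ anti-diagonal contributions cancel the remaining $w_t$, and one is left with a single row norm $\|\bfe_{(i-1)n_c+k}^H\widetilde{\bfU}\|^2\le\mu r/(n_cn_1)\le\mu c_s r/(n_cn)$.

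The $\widetilde{\bfV}$ half, however, does \emph{not} ``proceed identically,'' and the deferred ``same bookkeeping'' is exactly where the argument breaks. After your own reduction $\widetilde{\bfH}_{k,t}\widetilde{\bfV}=\frac{1}{\sqrt{w_t}}\mathcal{H}(\bfe_k\bfe_t^H)\bfV$, each of the $w_t$ nonzero rows of $\mathcal{H}(\bfe_k\bfe_t^H)\bfV$ contributes $\|\bfe_j^H\bfV\|^2$, where $\bfV\in\mathbb{C}^{n_2\times r}$ is the right singular factor of $\mathcal{H}\bfX$, not $\widetilde{\bfV}$. Incoherence gives $\|\bfe_j^H\bfV\|^2=n_c\|\bfe_j^H\widetilde{\bfV}\|^2\le\mu r/n_2$, a factor of $n_c$ larger than the analogous left-side quantity $\|\bfe_{(i-1)n_c+k}^H\widetilde{\bfU}\|^2\le\mu r/(n_cn_1)$: the asymmetry is real because $\widetilde{\bfU}=\bfU$ already lives in $\mathbb{C}^{n_cn_1\times r}$ and so carries $n_c$ in its row-norm bound, while $\bfV$ does not. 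Carrying the arithmetic through gives $\|\mathcal{P}_{\widetilde{\bfV}}(\widetilde{\bfH}_{k,t})\|_F^2\le\mu r/n_2\le\mu c_sr/n$, which is $n_c$ times the stated bound $\mu c_sr/(n_cn)$. A concrete check confirms the mismatch: take $n_c=2$, $n_1=n_2=2$, $n=3$, $r=1$, $k=1$, $t=2$, $\bfV=[1/\sqrt2,\;1/\sqrt2]^H$; then $\mu=1$, $c_s=3/2$, $\widetilde{\bfH}_{1,2}\widetilde{\bfV}=[1/2,\;0,\;1/2,\;0]^H$, so $\|\mathcal{P}_{\widetilde{\bfV}}(\widetilde{\bfH}_{1,2})\|_F^2=1/2$, exceeding the claimed $\mu c_sr/(n_cn)=1/4$ (while matching $\mu c_sr/n=1/2$). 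The ``purely mechanical bookkeeping'' you postponed is therefore not a formality---you must carry it out, and when you do, either the $\widetilde{\bfV}$ side of \eqref{co_2} needs the weaker constant $\mu c_sr/n$, or an additional argument absent from your proposal is required to recover the extra $1/n_c$.
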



\section{Proof of Theorem \ref{t1}}\label{sec: c1p1}
We first present some supporting lemmas to prove Theorem \ref{t1}. Lemma \ref{condition1} shows that the maximum number of repetitions is bounded by ${O}(\log n)$ with high probability in uniform sampling. {Lemma \ref{condition2}} derives the properties of $p^{-1}\mathcal{P}_\mathcal{\widetilde{\mathcal{S}}}\widetilde{\mathcal{G}}{\mathcal{P}}_{\Omega}\widetilde{\mathcal{G}}^*\mathcal{P}_\mathcal{\widetilde{\mathcal{S}}}$, and the random operator can be close enough to its mean $\mathcal{P}_\mathcal{\widetilde{\mathcal{S}}}\widetilde{\mathcal{G}}\widetilde{\mathcal{G}}^*\mathcal{P}_\mathcal{\widetilde{\mathcal{S}}}$ with a significant large amount of observed entries. Lemma \ref{sub1} connects the angle of two subspaces, represented as $\| \mathcal{P}_{\mathcal{\widetilde{\mathcal{S}}}_l}-\mathcal{P}_{\mathcal{\widetilde{\mathcal{S}}}}\|$, with $\|\widetilde{{\bfL}}_l-\widetilde{\mathcal{G}}{\bfY}\|_F$, and shows the angle decreases as $\bfL_l$ approaches to the ground truth. Lemma \ref{condition3} indicates the distance between the initial point and ground truth.  Lemmas \ref{condition2} and \ref{condition3} are built upon Lemmas 5 and 2 of \cite{CWW17}, respectively,  by extending   from single-channel signals to multi-channel signals.
 When $n_c=1$, Lemmas \ref{condition2} and \ref{condition3} are reduced to corresponding lemmas in \cite{CWW17}.
 
 The proof of Theorem \ref{t1} is extended from  that of Theorem 3 in \cite{CWW17} with some modifications. 
The majority of the  efforts are devoted to the ``Inductive Step'' to build the connections between $\bfW_{l-1}$ and $\bfW_{l}$  through \eqref{I_4}. In \eqref{I_4}, the major issue is to bound 
$I_1, I_2, I_3$ and $I_4$, and \eqref{eqn:t1.I124} and \eqref{eqn:t1.I3} provide critical steps in bounding these items. This inductive step is built upon a similar analysis for 
 $\bfL_l$'s in \cite{CWW17}. Here we study   $\bfW_{l}$'s instead of $\bfL_{l}$'s since the analysis of Theorem 2 is based on analyzing $\bfW_{l}$'s. 
 Although Lemma \ref{condition3} provides the theoretical bound for $\bfL_0$ directly,   a similar result for $\bfW_{0}$ is lacking. Thus, some efforts to analyze $\bfW_{0}$ is needed in the ``Base Case'' part. \eqref{eqn:m3} in Theorem \ref{t1} is obtained from  \eqref{eqn:t1.bound}, which provides the theoretical bound for the required number of observations to ensure successful recovery. We include detailed steps as follows for the completeness of the proof.
\begin{lemma}[\cite{ZHWC18}, Lemma 3]
	\label{condition1}
	Under sampling with replacement model, the maximum number of repetitions of any entry in $\Omega$ is less than $3\log(n)$ with probability at least $1-n_cn^{-2}$ for $n\ge 12$.
\end{lemma}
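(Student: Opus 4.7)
\medskip

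The plan is to recognize the statement as a standard ``maximum load'' inequality in a balls-into-bins model, where the $m$ samples in $\Omega$ are the balls and the $n_c n$ index pairs in $\{1,\dots,n_c\}\times\{1,\dots,n\}$ are the bins. Let $N_{k,t}$ denote the number of times the entry $(k,t)$ appears in $\Omega$. Because the samples are drawn i.i.d.\ uniformly with replacement, each $N_{k,t}$ is a Binomial$(m,p)$ random variable with $p=1/(n_c n)$. The goal is to show that $\max_{(k,t)} N_{k,t} < 3\log n$ holds with probability at least $1-n_c n^{-2}$.

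First I would peel off a union bound over the $n_c n$ entries, reducing the statement to: for any fixed $(k,t)$,
\begin{equation*}
\Pr\{N_{k,t}\ge 3\log n\} \le n^{-3}.
\end{equation*}
Next I would apply the elementary binomial tail inequality
$\Pr\{N_{k,t}\ge k_0\}\le \binom{m}{k_0}p^{k_0}\le (emp/k_0)^{k_0}$.
In the regime of interest (the sample-complexity bound of Theorem 1 leaves $m\le n_c n$, which is also the natural setting for the Hankel completion problem), we have $mp\le 1$, so the tail bound simplifies to $(e/k_0)^{k_0}$. Setting $k_0=3\log n$, the condition $(e/k_0)^{k_0}\le n^{-3}$ becomes, after taking logarithms, $k_0(\ln k_0-1)\ge 3\ln n$, i.e.\ $3\ln n(\ln(3\ln n)-1)\ge 3\ln n$. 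This is equivalent to $\ln(3\ln n)\ge 2$, i.e.\ $3\ln n\ge e^2\approx 7.39$, which holds precisely when $n\ge 12$ (this is exactly where the hypothesis $n\ge 12$ in the lemma enters).

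Combining the per-entry tail bound with the union bound over the $n_c n$ entries would then give
\begin{equation*}
\Pr\Bigl\{\max_{(k,t)} N_{k,t}\ge 3\log n\Bigr\}\le n_c n\cdot n^{-3}=n_c n^{-2},
\end{equation*}
which is the desired conclusion. The only minor subtlety, and the main place where one must be careful, is the assumption $m\le n_c n$ used to simplify $(emp/k_0)^{k_0}$ to $(e/k_0)^{k_0}$; if $m$ were substantially larger than $n_c n$ the bound of $3\log n$ would need to be enlarged. In the present context this is a non-issue since the sample budget set by Theorem 1 comfortably satisfies $m\le n_c n$, but I would verify this explicitly. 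Other than that, the argument is entirely a routine combination of a Chernoff-type binomial tail bound with a union bound and the arithmetic check on the constant $3$ and the threshold $n\ge 12$.
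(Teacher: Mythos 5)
Your proposal is correct and uses exactly the standard argument one would expect for this lemma: model the repetitions as a balls-into-bins problem, observe that each $N_{k,t}\sim\mathrm{Binomial}(m,1/(n_cn))$, apply the elementary tail bound $\Pr\{N_{k,t}\ge k_0\}\le\binom{m}{k_0}p^{k_0}\le(emp/k_0)^{k_0}$, set $k_0=3\log n$, and union-bound over the $n_cn$ entries. Your arithmetic is right: with $mp\le 1$ the condition $(e/k_0)^{k_0}\le n^{-3}$ reduces to $\ln(3\ln n)\ge 2$, i.e.\ $n\ge e^{e^2/3}\approx 11.7$, which is precisely why the lemma requires $n\ge 12$, and the union bound then produces the advertised $n_cn\cdot n^{-3}=n_cn^{-2}$ failure probability. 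You were also right to flag $m\le n_cn$ as the one hidden hypothesis: it is harmless in the matrix-completion setting (you cannot usefully observe more entry-slots than exist), and in the regime where Theorems~\ref{t1}--\ref{t3} apply it is automatic. This matches the approach one would find in the cited source (the lemma is imported from the earlier paper and is a routine Chernoff-plus-union-bound fact), so there is nothing to add.
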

\begin{lemma}[\cite{ZHWC18}, Lemma 4]
	\label{condition2}
	Let $\widetilde{\mathcal{S}}$ be the tangent subspace of $\widetilde{\mathcal{H}}\bfX$. Assume $\mathcal{\widetilde{H}}{\bfX}$ is $\mu$-incoherent. Then with $m\ge32\mu c_sr\log(n)$,
	\begin{equation*}	
	\left\| \mathcal{P}_{\mathcal{\widetilde{\mathcal{S}}}}\widetilde{\mathcal{G}}\widetilde{\mathcal{G}}^*\mathcal{P}_{\mathcal{\widetilde{\mathcal{S}}}}-p^{-1}\mathcal{P}_{\mathcal{\widetilde{\mathcal{S}}}}\widetilde{\mathcal{G}}{\mathcal{P}}_{\Omega}\widetilde{\mathcal{G}}^*\mathcal{P}_{\mathcal{\widetilde{\mathcal{S}}}} \right\|\le\displaystyle\sqrt{\frac{32\mu c_sr\log(n)}{m}}
	\end{equation*}
	holds with probability at least $1-n_cn^{-2}$.
\end{lemma}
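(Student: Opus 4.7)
The plan is to apply the matrix Bernstein inequality (Lemma \ref{prob}) to the centered random operator $p^{-1}\mathcal{P}_{\widetilde{\mathcal{S}}}\widetilde{\mathcal{G}}\mathcal{P}_\Omega\widetilde{\mathcal{G}}^*\mathcal{P}_{\widetilde{\mathcal{S}}} - \mathcal{P}_{\widetilde{\mathcal{S}}}\widetilde{\mathcal{G}}\widetilde{\mathcal{G}}^*\mathcal{P}_{\widetilde{\mathcal{S}}}$. First, under the sampling-with-replacement model in \eqref{sampling with replacement}, I would use the identities $\widetilde{\mathcal{G}}(\bfe_k\bfe_t^H)=\widetilde{\bfH}_{k,t}$ and $\widetilde{\mathcal{G}}^*(\widetilde{\bfH}_{k,t})=\bfe_k\bfe_t^H$ (which follow from $\widetilde{\mathcal{D}}^2(\bfe_k\bfe_t^H)=n_cw_t\bfe_k\bfe_t^H$) to rewrite the random operator as a sum of $m$ i.i.d.\ rank-one operators
\[
p^{-1}\mathcal{P}_{\widetilde{\mathcal{S}}}\widetilde{\mathcal{G}}\mathcal{P}_\Omega\widetilde{\mathcal{G}}^*\mathcal{P}_{\widetilde{\mathcal{S}}} \;=\; \sum_{a=1}^{m} p^{-1}\bfY_a,\quad \bfY_a(\bfX):=\langle\bfX,\bfP_{k_a,t_a}\rangle\bfP_{k_a,t_a},
\]
where $\bfP_{k,t}:=\mathcal{P}_{\widetilde{\mathcal{S}}}(\widetilde{\bfH}_{k,t})$.

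Next, since $(k_a,t_a)$ is uniform over the $n_cn$ index pairs and $p=m/(n_cn)$, a direct computation gives $\mathbb{E}[p^{-1}\bfY_a]=\tfrac{1}{m}\sum_{k,t}\bfP_{k,t}\bfP_{k,t}^*=\tfrac{1}{m}\mathcal{P}_{\widetilde{\mathcal{S}}}\widetilde{\mathcal{G}}\widetilde{\mathcal{G}}^*\mathcal{P}_{\widetilde{\mathcal{S}}}$, using the fact that $\{\widetilde{\bfH}_{k,t}\}$ is an orthonormal basis for the range of $\widetilde{\mathcal{H}}$ and that $\widetilde{\mathcal{G}}\widetilde{\mathcal{G}}^*$ is the projection onto that range. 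Summing over $a=1,\dots,m$ recovers the desired mean $\mathcal{P}_{\widetilde{\mathcal{S}}}\widetilde{\mathcal{G}}\widetilde{\mathcal{G}}^*\mathcal{P}_{\widetilde{\mathcal{S}}}$, so the quantity we must bound is $\|\sum_a \bfZ_a\|$ with the zero-mean summands $\bfZ_a:=p^{-1}\bfY_a-\mathbb{E}[p^{-1}\bfY_a]$.

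For the Bernstein parameters, the incoherence bound \eqref{co_2} in Lemma \ref{def} yields $\|\bfP_{k,t}\|_F^2\le 2\mu c_sr/(n_cn)$, hence the operator norm of each $p^{-1}\bfY_a$ is at most $p^{-1}\cdot 2\mu c_sr/(n_cn)=2\mu c_sr/m$, giving a uniform bound $R=\mathcal{O}(\mu c_sr/m)$ on $\|\bfZ_a\|$. For the variance, the rank-one structure $\bfY_a^2=\|\bfP_{k_a,t_a}\|_F^2\,\bfY_a$ combined with the incoherence bound gives
\[
\Big\|\sum_{a=1}^m\mathbb{E}[(p^{-1}\bfY_a)^2]\Big\|\;\le\;\frac{2\mu c_sr}{m}\,\big\|\mathcal{P}_{\widetilde{\mathcal{S}}}\widetilde{\mathcal{G}}\widetilde{\mathcal{G}}^*\mathcal{P}_{\widetilde{\mathcal{S}}}\big\|\;\le\;\frac{2\mu c_sr}{m},
\]
so $\sigma^2=\mathcal{O}(\mu c_sr/m)$. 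Setting $t=\sqrt{32\mu c_sr\log(n)/m}$ and using $m\ge 32\mu c_sr\log(n)$ to ensure $Rt\lesssim\sigma^2$, Lemma \ref{prob} yields the tail bound $\|\sum_a\bfZ_a\|\le t$ with probability at least $1-n_cn^{-2}$, which is exactly the claim.

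The main technical step will be verifying the expectation identity cleanly and obtaining the variance bound, where care is needed because the ``pixel'' basis $\{\widetilde{\bfH}_{k,t}\}$ indexes the full $n_cn_1\times n_cn_2$ block-Hankel space but is indexed only by $(k,t)\in[n_c]\times[n]$; the correct accounting relies on $\widetilde{\mathcal{G}}\widetilde{\mathcal{G}}^*=\sum_{k,t}\widetilde{\bfH}_{k,t}\widetilde{\bfH}_{k,t}^*$, which is the projection onto the block-Hankel subspace rather than the identity on the ambient space. Beyond this the argument is a direct application of matrix Bernstein.
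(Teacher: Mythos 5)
Your proposal is correct and follows the same route the paper takes: the appendix explicitly introduces matrix Bernstein (Lemma \ref{prob}) and remarks that the critical lemmas, including this one, are derived from it after switching to the sampling-with-replacement model; your decomposition of $p^{-1}\mathcal{P}_{\widetilde{\mathcal{S}}}\widetilde{\mathcal{G}}\mathcal{P}_\Omega\widetilde{\mathcal{G}}^*\mathcal{P}_{\widetilde{\mathcal{S}}}$ into i.i.d.\ rank-one operators built from $\bfP_{k,t}=\mathcal{P}_{\widetilde{\mathcal{S}}}(\widetilde{\bfH}_{k,t})$, the expectation computation via $p=m/(n_cn)$, and the incoherence bound $\|\bfP_{k,t}\|_F^2\le 2\mu c_sr/(n_cn)$ (from \eqref{co_2}) to control $R$ and $\sigma^2$ is exactly the standard argument. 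The only thing to tighten in a full write-out is the dimension appearing in the Bernstein tail (restrict to $\widetilde{\mathcal{S}}$ so the effective dimension is $O(n_cnr)$ rather than $(n_cn)^2$), which is what delivers the stated $1-n_cn^{-2}$ probability from the exponent $\Theta(\log n)$.
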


\begin{lemma}[\cite{WCCL16}, Lemma 4.1]\label{sub1}
	Let ${\bfZ}_l$ be a rank-$r$ matrix and $\mathcal{S}_l$ be the tangent subspace of $\bfZ_l$. If $\bfZ$ is also a rank-$r$ matrix and its tangent subspace is denoted as $\mathcal{S}$, then 
	\begin{equation}\label{sub1.1}
	\left\| ({\mathcal{I}}-\mathcal{P}_{\mathcal{S}_l})(\bfZ_l-\bfZ) \right\|_F\le\displaystyle\frac{\left\|\bfZ_l-\bfZ\right\|_F^2 }{\sigma_{\rm min}(\bfZ)},\\
	\end{equation}
	\vspace{-2mm}
	\begin{equation}\label{sub1.3}
	\left\| \mathcal{P}_{\mathcal{S}_l}-\mathcal{P}_{\mathcal{S}}\right\|\le\displaystyle\frac{2\left\|\bfZ_l-\bfZ\right\|_F }{\sigma_{\rm min}(\bfZ)}.
	\end{equation}
\end{lemma}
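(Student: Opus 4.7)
The plan is to prove each of the two inequalities in Lemma~\ref{sub1} separately, using two properties of the Riemannian geometry of the fixed-rank manifold: an explicit formula for the tangent-space projection, and Wedin's $\sin\Theta$ theorem applied to the two rank-$r$ matrices $\bfZ_l$ and $\bfZ$. Throughout, let $\bfZ=\bfU\bfSigma\bfV^H$ and $\bfZ_l=\bfU_l\bfSigma_l\bfV_l^H$ be reduced SVDs, write $\bfP_{U_l}=\bfU_l\bfU_l^H$, $\bfP_{U_l}^\perp=\bfI-\bfP_{U_l}$ (and similarly for $V_l$, $U$, $V$), and set $\bfE=\bfZ_l-\bfZ$. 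Because both matrices are rank $r$, Wedin's theorem specializes to the clean bounds $\|\bfP_{U_l}^\perp\bfU\|_{op}\le\|\bfE\|_F/\sigma_{\min}(\bfZ)$ and $\|\bfP_{V_l}^\perp\bfV\|_{op}\le\|\bfE\|_F/\sigma_{\min}(\bfZ)$; these are the only ``perturbation'' estimates I would use.

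For inequality \eqref{sub1.1}, the starting observation is that $\bfZ_l$ lies in its own tangent space, so $\mathcal{P}_{\mathcal{S}_l}(\bfZ_l)=\bfZ_l$ and therefore $(\mathcal{I}-\mathcal{P}_{\mathcal{S}_l})(\bfZ_l-\bfZ)=-(\mathcal{I}-\mathcal{P}_{\mathcal{S}_l})(\bfZ)=-\bfP_{U_l}^\perp\bfZ\bfP_{V_l}^\perp$. Inserting the SVD $\bfZ=\bfU\bfSigma\bfV^H$ and setting $\bfR=\bfP_{U_l}^\perp\bfU$, $\bfS=\bfP_{V_l}^\perp\bfV$ rewrites this as $\bfR\bfSigma\bfS^H$. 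I would then apply the submultiplicative bound $\|\bfR\bfSigma\bfS^H\|_F\le\|\bfR\|_{op}\,\|\bfSigma\bfS^H\|_F$, and handle the Frobenius factor through the SVD identity $\bfV\bfSigma=\bfZ^H\bfU$: this gives $\|\bfSigma\bfS^H\|_F=\|\bfP_{V_l}^\perp\bfZ^H\bfU\|_F\le\|\bfZ\bfP_{V_l}^\perp\|_F$, and since $\bfZ_l\bfP_{V_l}^\perp=0$ the last quantity equals $\|\bfE\bfP_{V_l}^\perp\|_F\le\|\bfE\|_F$. Combining with $\|\bfR\|_{op}\le\|\bfE\|_F/\sigma_{\min}(\bfZ)$ produces exactly the bound $\|\bfE\|_F^2/\sigma_{\min}(\bfZ)$.

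For inequality \eqref{sub1.3}, I would start from the explicit tangent-space projection formula $\mathcal{P}_{\mathcal{S}_l}\bfX=\bfP_{U_l}\bfX+\bfX\bfP_{V_l}-\bfP_{U_l}\bfX\bfP_{V_l}$ (and the analogous formula for $\mathcal{P}_\mathcal{S}$). Subtracting and rearranging, the difference applied to an arbitrary $\bfX$ collapses into the two-term expression $(\bfP_{U_l}-\bfP_U)\bfX(\bfI-\bfP_{V_l})+(\bfI-\bfP_U)\bfX(\bfP_{V_l}-\bfP_V)$. Taking the operator norm, each idempotent factor has norm $1$, so the operator norm of the whole difference is bounded by $\|\bfP_{U_l}-\bfP_U\|_{op}+\|\bfP_{V_l}-\bfP_V\|_{op}$. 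By the equivalent form of Wedin's theorem for rank-$r$ matrices, each of these projector differences is bounded by $\|\bfE\|_F/\sigma_{\min}(\bfZ)$, which yields the factor $2$ in the statement.

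The main technical obstacle I expect is producing the correct constants and picking the right mixture of operator and Frobenius norms at each step. In particular, the temptation in \eqref{sub1.1} is to bound $\|\bfR\bfSigma\bfS^H\|_F$ by $\|\bfR\|_{op}\|\bfS\|_{op}\|\bfSigma\|_F$, which brings in a spurious $\sqrt r\,\sigma_{\max}(\bfZ)$ factor; avoiding this requires the SVD identity trick that cancels $\bfSigma$ against $\bfV$ so only one $\|\bfE\|_F$ is produced by the Frobenius factor. Similarly, in \eqref{sub1.3} the sharp constant $2$ comes from the non-trivial regrouping of four terms into two; a naive triangle inequality on each of the four would give $4$ instead. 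Once these two manipulations are in place, the remainder of the proof is routine.
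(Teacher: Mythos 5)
The paper does not give its own proof of this lemma; it is quoted verbatim from Lemma 4.1 of \cite{WCCL16}, so there is no in-paper argument to compare against. That said, your blind reconstruction is correct and is essentially the standard route.

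Two small remarks. First, the algebra you carry out is right: $(\mathcal{I}-\mathcal{P}_{\mathcal{S}_l})(\bfZ_l-\bfZ)=-\bfP_{U_l}^\perp\bfZ\bfP_{V_l}^\perp=-(\bfP_{U_l}^\perp\bfU)\bfSigma(\bfP_{V_l}^\perp\bfV)^H$, the Frobenius-norm massaging via $\bfV\bfSigma=\bfZ^H\bfU$ and $\bfZ_l\bfP_{V_l}^\perp=\bf0$ is exactly what is needed to land on $\|\bfE\|_F$ instead of $\sqrt{r}\,\sigma_{\max}$, and your regrouping of the four terms of $\mathcal{P}_{\mathcal{S}_l}-\mathcal{P}_{\mathcal{S}}$ into $(\bfP_{U_l}-\bfP_U)\bfX(\bfI-\bfP_{V_l})+(\bfI-\bfP_U)\bfX(\bfP_{V_l}-\bfP_V)$ is the right way to obtain the constant $2$ rather than $4$. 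Second, the invocation of ``Wedin's $\sin\Theta$ theorem'' is heavier machinery than you actually need. Because both $\bfZ$ and $\bfZ_l$ are exactly rank $r$, the projection bound falls out of two lines of direct algebra: $\bfP_{U_l}^\perp\bfU=\bfP_{U_l}^\perp\bfZ\bfV\bfSigma^{-1}=-\bfP_{U_l}^\perp\bfE\bfV\bfSigma^{-1}$ (using $\bfP_{U_l}^\perp\bfZ_l=\bf0$), hence $\|\bfP_{U_l}^\perp\bfU\|_{\mathrm{op}}\le\|\bfE\|_F/\sigma_{\min}(\bfZ)$, and similarly for $\bfV$; together with the same-dimension identity $\|\bfP_{U_l}-\bfP_U\|=\|\bfP_{U_l}^\perp\bfP_U\|$ this gives everything. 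Quoting Wedin gives the same numbers, but for a reader who does not have the general theorem memorized, the one-line derivation is self-contained and avoids the question of which variant of Wedin (with $\|\bfE\bfV\|$ vs.\ $\|\bfE\|$ on the right, and which gap in the denominator) is being used.
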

\begin{lemma}[\cite{ZHWC18}, Lemma 6]
\label{condition3}
	Assume $\mathcal{\widetilde{H}}{\bfX}$ is $\mu$-incoherent. With the initial point $\widetilde{{\bfL}}_0=\mathcal{Q}_r(p^{-1}\mathcal{\widetilde{H}}{\mathcal{P}}_{\Omega}({\bfX}))$, if $m\ge 16 \mu c_sr\log(n)$, we have
	\begin{equation*}
	\left\| \widetilde{{\bfL}}_0-\widetilde{\mathcal{H}}{\bfX} \right\|\le \displaystyle \sqrt{\frac{64\mu c_sr\log(n)}{m}}\left\| \widetilde{\mathcal{H}}{\bfX} \right\|
	\end{equation*} 
	holds with probability at least $1-n_cn^{-2}$.
\end{lemma}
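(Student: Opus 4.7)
The plan is to establish the bound on $\|\widetilde{\bfL}_0 - \widetilde{\mathcal{H}}\bfX\|$ by a two-step reduction followed by Matrix Bernstein. First, since $\widetilde{\bfL}_0 = \mathcal{Q}_r(p^{-1}\widetilde{\mathcal{H}}\mathcal{P}_\Omega(\bfX))$ is the best rank-$r$ approximation of $p^{-1}\widetilde{\mathcal{H}}\mathcal{P}_\Omega(\bfX)$ in spectral norm, and $\widetilde{\mathcal{H}}\bfX$ is rank-$r$, the optimality gives $\|\widetilde{\bfL}_0 - p^{-1}\widetilde{\mathcal{H}}\mathcal{P}_\Omega(\bfX)\| \le \|\widetilde{\mathcal{H}}\bfX - p^{-1}\widetilde{\mathcal{H}}\mathcal{P}_\Omega(\bfX)\|$. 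Combining with the triangle inequality then yields $\|\widetilde{\bfL}_0 - \widetilde{\mathcal{H}}\bfX\| \le 2\|p^{-1}\widetilde{\mathcal{H}}\mathcal{P}_\Omega(\bfX) - \widetilde{\mathcal{H}}\bfX\|$, so the problem reduces to bounding the spectral-norm deviation of the randomly subsampled-and-rescaled Hankel operator from its mean.

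Next, I would decompose this deviation into a sum of independent zero-mean random matrices. Using the orthonormal basis $\{\widetilde{\bfH}_{k,t}\}$ and the identity $\widetilde{\mathcal{H}}(\bfe_k\bfe_t^H) = \sqrt{n_c w_t}\widetilde{\bfH}_{k,t}$, I would write
\begin{equation*}
p^{-1}\widetilde{\mathcal{H}}\mathcal{P}_\Omega(\bfX) - \widetilde{\mathcal{H}}\bfX = \sum_{a=1}^{m}\bfZ_a, \qquad \bfZ_a := \tfrac{n_c n}{m}\langle\widetilde{\mathcal{H}}\bfX,\widetilde{\bfH}_{k_a,t_a}\rangle\widetilde{\bfH}_{k_a,t_a} - \tfrac{1}{m}\widetilde{\mathcal{H}}\bfX,
\end{equation*}
since under the with-replacement model each draw $(k_a,t_a)$ is uniform on $[n_c]\times[n]$ and $\mathbb{E}\bfZ_a = 0$. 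Then Matrix Bernstein (Lemma~\ref{prob}) is the right tool. The two quantities it needs are the almost-sure bound $R$ on $\|\bfZ_a\|$ and the variance proxy $\sigma^2 = \max\{\|\sum_a\mathbb{E}\bfZ_a\bfZ_a^H\|,\|\sum_a\mathbb{E}\bfZ_a^H\bfZ_a\|\}$.

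To control $R$ and $\sigma^2$ I will use the incoherence bound \eqref{co_2}: for any $(k,t)$, $|\langle\widetilde{\mathcal{H}}\bfX,\widetilde{\bfH}_{k,t}\rangle| \le \|\mathcal{P}_{\widetilde{\bfU}}\widetilde{\bfH}_{k,t}\mathcal{P}_{\widetilde{\bfV}}\|_F\sqrt{r}\|\widetilde{\mathcal{H}}\bfX\| \le \sqrt{\mu c_s r^2/(n_c n)}\,\|\widetilde{\mathcal{H}}\bfX\|$ (via the rank-$r$ structure and the trace form $\mathrm{tr}(\widetilde{\bf\Sigma}\widetilde{\bfU}^H\widetilde{\bfH}_{k,t}\widetilde{\bfV})$). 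Since $\|\widetilde{\bfH}_{k,t}\|\le\|\widetilde{\bfH}_{k,t}\|_F=1$, this gives $R = O\!\big(\tfrac{1}{m}\sqrt{\mu c_s r \cdot n_c n}\,\|\widetilde{\mathcal{H}}\bfX\|\big)$ after combining with the $n_c n/m$ rescaling. For the variance, I will expand $\mathbb{E}\bfZ_a\bfZ_a^H$ and bound it by the first-moment term, exploiting incoherence once more to get $\sigma^2 = O\!\big(\tfrac{\mu c_s r}{m}\|\widetilde{\mathcal{H}}\bfX\|^2\big)$. Plugging these into Lemma~\ref{prob} with $t = \sqrt{64\mu c_s r\log(n)/m}\,\|\widetilde{\mathcal{H}}\bfX\|$, and using the regime $t\le\sigma^2/R$ ensured by $m\ge 16\mu c_s r\log(n)$, the tail bound collapses to the desired $1-n_c n^{-2}$ probability statement after dividing by $2$ from the factor in step one.

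The main obstacle will be getting the constants in $R$ and $\sigma^2$ tight enough so that Matrix Bernstein delivers exactly $\sqrt{64\mu c_s r\log(n)/m}$ rather than a looser constant, and verifying that the first-moment bound on $\mathbb{E}\bfZ_a\bfZ_a^H$ genuinely inherits the incoherence scaling $\mu c_s r/(n_c n)$ rather than an ambient-dimension-dependent factor. A secondary subtlety is ensuring the orthonormal-basis identities ($\widetilde{\mathcal{D}}^2\bfe_k\bfe_t^H = n_c w_t\bfe_k\bfe_t^H$ and the resulting $\langle\widetilde{\mathcal{H}}\bfX,\widetilde{\bfH}_{k,t}\rangle = \sqrt{n_c w_t}X_{k,t}$) are used consistently so that the rescaling factor $n_c n/m = p^{-1}$ matches the expectation of the sampling operator exactly.
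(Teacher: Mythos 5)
Your overall architecture is the standard one for this lemma and matches the route the paper (via \cite{ZHWC18}) takes: Eckart--Young plus the triangle inequality reduces the claim to $\|\widetilde{\bfL}_0-\widetilde{\mathcal{H}}\bfX\|\le 2\|p^{-1}\widetilde{\mathcal{H}}\mathcal{P}_{\Omega}(\bfX)-\widetilde{\mathcal{H}}\bfX\|$ (which is exactly where $64=4\cdot 16$ comes from), and the deviation is then a sum of i.i.d.\ zero-mean matrices handled by Lemma~\ref{prob}. Your variance bound is also essentially right: in $\sum_{k,t}\langle\widetilde{\mathcal{H}}\bfX,\widetilde{\bfH}_{k,t}\rangle^2\widetilde{\bfH}_{k,t}\widetilde{\bfH}_{k,t}^H$ the factor $n_cw_t$ from $\langle\widetilde{\mathcal{H}}\bfX,\widetilde{\bfH}_{k,t}\rangle^2=n_cw_t|X_{k,t}|^2$ cancels the $1/w_t$ in $\widetilde{\bfH}_{k,t}\widetilde{\bfH}_{k,t}^H$, the diagonal collapses to row norms $\|\bfe_i^H\widetilde{\mathcal{H}}\bfX\|^2$, and the row-incoherence \eqref{co_1} then gives $\sigma^2\lesssim\frac{\mu c_sr}{m}\|\widetilde{\mathcal{H}}\bfX\|^2$ with no ambient-dimension factor.

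The gap is your bound on $R$. You bound $\|\widetilde{\bfH}_{k_a,t_a}\|$ by its Frobenius norm $1$ and the inner product by $\sqrt{\mu c_sr^2/(n_cn)}\,\|\widetilde{\mathcal{H}}\bfX\|$ via the projection bound \eqref{co_2}, which yields $R=\Theta\big(\tfrac{r}{m}\sqrt{\mu c_sn_cn}\big)\|\widetilde{\mathcal{H}}\bfX\|$ (note also your stated $R$ drops a $\sqrt{r}$ relative to your own inner-product bound). With $\sigma^2=\Theta\big(\tfrac{\mu c_sr}{m}\big)\|\widetilde{\mathcal{H}}\bfX\|^2$ this gives $\sigma^2/R=\Theta\big(\sqrt{\mu c_s/(n_cn)}\big)\|\widetilde{\mathcal{H}}\bfX\|$, so the regime $t\le\sigma^2/R$ at $t\asymp\sqrt{\mu c_sr\log(n)/m}\,\|\widetilde{\mathcal{H}}\bfX\|$ would require $m\gtrsim rn_cn\log n$, not $m\ge 16\mu c_sr\log n$; and in the complementary regime the Bernstein exponent behaves like $-3t/(2R)\asymp-\sqrt{m\log n/(n_cn)}$, which does not decay for $m=O(\mu c_sr\log n)$. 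So the argument as written does not close. The fix is to use the \emph{spectral} norm of the basis element, $\|\widetilde{\bfH}_{k,t}\|=1/\sqrt{w_t}$ (it is a normalized partial-permutation-type matrix, so its operator norm is a factor $\sqrt{w_t}\approx\sqrt{n}$ below its Frobenius norm for central anti-diagonals), together with the \emph{entrywise} incoherence $|\langle\widetilde{\mathcal{H}}\bfX,\widetilde{\bfH}_{k,t}\rangle|=\sqrt{n_cw_t}\,|X_{k,t}|\le\sqrt{n_cw_t}\cdot\tfrac{\mu c_sr}{n_cn}\|\widetilde{\mathcal{H}}\bfX\|$ obtained from the product of the two row-norm bounds in \eqref{co_1} rather than from \eqref{co_2}. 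The $\sqrt{w_t}$ factors then cancel, giving $R=O\big(\tfrac{\mu c_sr}{m}\big)\|\widetilde{\mathcal{H}}\bfX\|$ up to normalization constants, hence $\sigma^2/R=\Theta(\|\widetilde{\mathcal{H}}\bfX\|)$, and the hypothesis $m\ge 16\mu c_sr\log(n)$ does place $t$ in the sub-Gaussian regime of Lemma~\ref{prob} and delivers the stated $1-n_cn^{-2}$ probability.
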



\begin{proof}[Proof of Theorem \ref{t1}]
	As $\widetilde{{\bfL}}_{l+1}=\mathcal{Q}_r(\widetilde{{\bfW}_l})$, $\widetilde{{\bfL}}_{l+1}$ is the best rank-$r$ approximation to $\widetilde{{\bfW}}_l$. Then we have 
	\begin{equation}\label{W_L}
	\begin{split}
	\|\widetilde{{\bfL}}_{l+1}-\mathcal{\widetilde{G}}{\bfY}\|_F
	&\le \|\widetilde{{\bfW}}_l-\widetilde{{\bfL}}_{l+1}\|_F+\|\widetilde{{\bfW}}_{l}-\mathcal{\widetilde{G}}{\bfY}\|_F 
	\le 2\|\widetilde{{\bfW}}_{l}-\mathcal{\widetilde{G}}{\bfY}\|_F.
	\end{split}
	\end{equation}
	\normalsize
	Therefore, it is sufficient to bound $\|\widetilde{{\bfW}}_{l}-\mathcal{\widetilde{G}}{\bfY}\|_F$.
	{Lemma \ref{condition1}} suggests that with probability at least $1-n_cn^{-2}$, 
	\vspace{-1mm}
	\begin{equation}\label{c1}
	\|\mathcal{P}_{\Omega}\|\le 3\log(n)
	\vspace{-1mm}
	\end{equation}
	holds. 
	{Lemma \ref{condition2}} suggests as long as $m\ge 32\varepsilon_0^{-2}\mu c_sr\log(n)$,
	\vspace{-1mm}
	\begin{equation}\label{c2}
	\left\|\mathcal{P}_\mathcal{\widetilde{\mathcal{S}}}\widetilde{\mathcal{G}}\widetilde{\mathcal{G}}^*\mathcal{P}_\mathcal{\widetilde{\mathcal{S}}}-p^{-1}\mathcal{P}_\mathcal{\widetilde{\mathcal{S}}}\widetilde{\mathcal{G}}{\mathcal{P}}_{\Omega}\widetilde{\mathcal{G}}^*\mathcal{P}_\mathcal{\widetilde{\mathcal{S}}}\right\|\le\varepsilon_0
	\vspace{-1mm}
	\end{equation}  
	holds with probability at least $1-n_cn^{-2}$.
	
	Now we will show that the following inequality holds by mathematical induction, 
	\begin{equation}\label{c4}
	\displaystyle\frac{\|\widetilde{\bfW}_k-\widetilde{\mathcal{G}}{\bfY}\|_F }{\sigma_{\min}(\widetilde{\mathcal{G}}{\bfY})}\le\displaystyle\frac{p^{1/2}\varepsilon_0}{12\log(n)(1+\varepsilon_0)}.
	\end{equation}
	{\bf Inductive Step:} Suppose (\ref{c4}) holds for $k=l-1$. 
	Recall that
	\begin{equation*}
	\begin{split}
	\widetilde{{\bfW}}_l&=\mathcal{P}_{\mathcal{\widetilde{\mathcal{S}}}_l}\mathcal{\widetilde{H}}({\bfX}_l+p^{-1}\mathcal{P}_{\Omega}({\bfX}-{\bfX}_l))
	=\mathcal{P}_{\mathcal{\widetilde{\mathcal{S}}}_l}\mathcal{\widetilde{G}}({\bfY}_l+p^{-1}\mathcal{P}_{\Omega}({\bfY}-{\bfY}_l)).
	\end{split}
	\end{equation*}
	Then, for all $ l\ge1$, we have
	\small
	\begin{equation}\label{I_4}
		\begin{split}
		&\left\|\widetilde{{\bfW}}_{l}-\mathcal{\widetilde{G}}{\bfY}\right\|_F\\
		=&\left\|\mathcal{P}_{\mathcal{\widetilde{\mathcal{S}}}_l}\mathcal{\widetilde{G}}({\bfY}_l+p^{-1}\mathcal{P}_{\Omega}({\bfY}-{\bfY}_l)-\mathcal{\widetilde{G}}{\bfY}\right\|_F\\
		=& \left\|\mathcal{P}_{\mathcal{\widetilde{\mathcal{S}}}_l}\mathcal{\widetilde{G}}{\bfY}-\mathcal{\widetilde{G}}{\bfY}+(\mathcal{P}_{\mathcal{\widetilde{\mathcal{S}}}_l}\mathcal{\widetilde{G}}-p^{-1}\mathcal{P}_{\mathcal{\widetilde{\mathcal{S}}}_l}\mathcal{\widetilde{G}}\mathcal{P}_{\Omega})({\bfY}_l-{\bfY})\right\|_F\\
		\stackrel{(a)}{\le}&\left\|(\mathcal{I}-\mathcal{P}_{\mathcal{\widetilde{\mathcal{S}}}_l})(\widetilde{{\bfL}}_l-\widetilde{\mathcal{G}}{\bfY})\right\|_F
		+\left\|(\mathcal{P}_{\mathcal{\widetilde{\mathcal{S}}}_l}\mathcal{\widetilde{G}}\mathcal{\widetilde{G}}^*-p^{-1}\mathcal{P}_{\mathcal{\widetilde{\mathcal{S}}}_l}\mathcal{\widetilde{G}}\mathcal{P}_{\Omega}\mathcal{\widetilde{G}}^*)(\mathcal{P}_{\widetilde{\mathcal{S}}_l}\widetilde{{\bfW}}_{l-1}-\mathcal{\widetilde{G}}{\bfY})\right\|_F\\
		\le& \left\|(\mathcal{I}-\mathcal{P}_{\mathcal{\widetilde{\mathcal{S}}}_l})(\widetilde{{\bfL}}_l-\widetilde{\mathcal{G}}{\bfY})\right\|_F+\left\|\mathcal{P}_{\mathcal{\widetilde{\mathcal{S}}}_l}\mathcal{\widetilde{G}}\mathcal{\widetilde{G}}^*(\mathcal{I}-\mathcal{P}_{\mathcal{\widetilde{\mathcal{S}}}_l})(\widetilde{{\bfL}}_l-\widetilde{\mathcal{G}}{\bfY})\right\|_F\\
		&+\left\|(\mathcal{P}_{\mathcal{\widetilde{\mathcal{S}}}_l}\mathcal{\widetilde{G}}\mathcal{\widetilde{G}}^*\mathcal{P}_{\mathcal{\widetilde{\mathcal{S}}}_l}-p^{-1}\mathcal{P}_{\mathcal{\widetilde{\mathcal{S}}}_l}\mathcal{\widetilde{G}}\mathcal{P}_{\Omega}\mathcal{\widetilde{G}}^*\mathcal{P}_{\mathcal{\widetilde{\mathcal{S}}}_l})(\widetilde{{\bfW}}_{l-1}-\mathcal{\widetilde{G}}{\bfY})\right\|_F\\
		&+p^{-1}\left\|\mathcal{P}_{\mathcal{\widetilde{\mathcal{S}}}_l}\mathcal{\widetilde{G}}\mathcal{P}_{\Omega}\mathcal{\widetilde{G}}^*(\mathcal{I}-\mathcal{P}_{\mathcal{\widetilde{\mathcal{S}}}_l})(\widetilde{{\bfL}}_l-\widetilde{\mathcal{G}}{\bfY})\right\|_F\\
		& := I_1+I_2+I_3+I_4,
		\end{split}
		\end{equation}
	where (a) holds since $\widetilde{\bfL}_l=\mathcal{P}_{\mathcal{\widetilde{\mathcal{S}}}_l}\widetilde{\bfW}_{l-1}$.
	\normalsize
	By applying (\ref{sub1.1}),
	\begin{equation}\label{eqn:t1.I124}
	\begin{split}
	 I_1+I_2+I_4
	\le&\frac{2\|\widetilde{{\bfL}}_l-\mathcal{\widetilde{G}}{\bfY}\|_F^2 }{\sigma_{\min}(\mathcal{\widetilde{G}}{\bfY})}+p^{-1}\left\| \mathcal{P}_{\Omega}\mathcal{\widetilde{G}}^*\mathcal{P}_{\mathcal{\widetilde{\mathcal{S}}}_l}\right\|\frac{\|\widetilde{{\bfL}}_l-\mathcal{\widetilde{G}}{\bfY}\|_F^2 }{\sigma_{\min}(\mathcal{\widetilde{G}}{\bfY})} \\
	\le&\frac{8\|\widetilde{{\bfW}}_{l-1}-\mathcal{\widetilde{G}}{\bfY}\|_F^2 }{\sigma_{\min}(\mathcal{\widetilde{G}}{\bfY})}+4p^{-1}\left\| \mathcal{P}_{\Omega}\mathcal{\widetilde{G}}^*\mathcal{P}_{\mathcal{\widetilde{\mathcal{S}}}_l}\right\|\frac{\|\widetilde{{\bfW}}_{l-1}-\mathcal{\widetilde{G}}{\bfY}\|_F^2 }{\sigma_{\min}(\mathcal{\widetilde{G}}{\bfY})}.
	\end{split}
	\end{equation}
	Next, we will bound $\| \mathcal{P}_{\Omega}\mathcal{\widetilde{G}}^*\mathcal{P}_{\mathcal{\widetilde{\mathcal{S}}}_l}\|$. 
	For any ${\bfZ}\in \mathbb{C}^{n_cn_1\times n_cn_2}$, 
	\vspace{-1mm}
 	\begin{equation}
		\begin{split}
		\|\mathcal{P}_{\Omega}\widetilde{\mathcal{G}}^*\mathcal{P}_\mathcal{\widetilde{\mathcal{S}}}({\bfZ})\|^2
		=&\langle\mathcal{P}_{\Omega}\widetilde{\mathcal{G}}^*\mathcal{P}_\mathcal{\widetilde{\mathcal{S}}}({\bfZ}),\mathcal{P}_{\Omega}\widetilde{\mathcal{G}}^*\mathcal{P}_\mathcal{\widetilde{\mathcal{S}}}({\bfZ})\rangle		\\
		\stackrel{(b)}{\le}& 3\log(n)\langle\widetilde{\mathcal{G}}^*\mathcal{P}_\mathcal{\widetilde{\mathcal{S}}}({\bfZ}),\mathcal{P}_{\Omega}\widetilde{\mathcal{G}}^*\mathcal{P}_\mathcal{\widetilde{\mathcal{S}}}({\bfZ})\rangle\\
		=&3\log(n)\langle{\bfZ},\mathcal{P}_\mathcal{\widetilde{\mathcal{S}}}\widetilde{\mathcal{G}}\mathcal{P}_{\Omega}\widetilde{\mathcal{G}}^*\mathcal{P}_\mathcal{\widetilde{\mathcal{S}}}({\bfZ})\rangle\\
		\stackrel{(c)}{\le}& 3\log(n)(1+\varepsilon_0)p\left\|{\bfZ}\right\|_F^2.
		\end{split}
	\end{equation}
	\normalsize	
	where (b) holds due to (\ref{c1}), and (c) holds due to (\ref{c2}). Hence, $$\left\|\mathcal{P}_\mathcal{\widetilde{\mathcal{S}}}\widetilde{\mathcal{G}}\mathcal{P}_{\Omega}\right\|=\left\|\mathcal{P}_{\Omega}\widetilde{\mathcal{G}}^*\mathcal{P}_\mathcal{\widetilde{\mathcal{S}}}\right\|\le \sqrt{3\log(n)(1+\varepsilon_0)p},$$
	and
	\small
	\begin{equation}\label{sc1}
		\begin{split}
		\left\|\mathcal{P}_{\Omega}\widetilde{\mathcal{G}}^*\mathcal{P}_{\mathcal{\widetilde{\mathcal{S}}}_l}\right\|
		\le&\left\|\mathcal{P}_{\Omega}\widetilde{\mathcal{G}}^*(\mathcal{P}_{\mathcal{\widetilde{\mathcal{S}}}_l}-\mathcal{P}_{\mathcal{\widetilde{\mathcal{S}}}})\right\|+\left\|\mathcal{P}_{\Omega}\widetilde{\mathcal{G}}^*\mathcal{P}_{\mathcal{\widetilde{\mathcal{S}}}}\right\|\\
		\stackrel{(a)}{\le}& 3\log(n)\frac{2\|\widetilde{{\bfL}}_l-\mathcal{\widetilde{G}}{\bfY}\|_F }{\sigma_{\min}(\mathcal{\widetilde{G}}{\bfY})}+\left\|\mathcal{P}_{\Omega}\widetilde{\mathcal{G}}^*\mathcal{P}_{\mathcal{\widetilde{\mathcal{S}}}}\right\|\\
		\stackrel{(b)}{\le}& 3\log(n)\frac{p^{1/2}\varepsilon_0}{3\log(n)(1+\varepsilon_0)}+\sqrt{3\log(n)(1+\varepsilon_0)p}\\
		\le& 3\log(n)(1+\varepsilon_0)p^{1/2},
		\end{split}
	\end{equation}
	\normalsize
	where (a) holds due to \eqref{sub1.3} and (\ref{c1}), and (b) holds due to (\ref{W_L}) and the inductive hypothesis.\\
	Hence, $I_1+I_2+I_4	\le 2\varepsilon_0 \|\widetilde{{\bfW}}_{l-1}-\mathcal{\widetilde{G}}{\bfY}\|_F$.
	Moreover, 
	\small
	\vspace{-1mm}
	\begin{align}\label{eqn:t1.I3}
		\begin{split}
		&\left\|\mathcal{P}_{\mathcal{\widetilde{\mathcal{S}}}_l}\widetilde{\mathcal{G}}\widetilde{\mathcal{G}}^*\mathcal{P}_{\mathcal{\widetilde{\mathcal{S}}}_l}-p^{-1}\mathcal{P}_{\mathcal{\widetilde{\mathcal{S}}}_l}\widetilde{\mathcal{G}}\mathcal{P}_{\Omega}\widetilde{\mathcal{G}}^*\mathcal{P}_{\mathcal{\widetilde{\mathcal{S}}}_l}\right\|\\
		\le&\left\|\mathcal{P}_{\mathcal{\widetilde{\mathcal{S}}}}\widetilde{\mathcal{G}}\widetilde{\mathcal{G}}^*\mathcal{P}_{\mathcal{\widetilde{\mathcal{S}}}}-p^{-1}\mathcal{P}_{\mathcal{\widetilde{\mathcal{S}}}}\widetilde{\mathcal{G}}\mathcal{P}_{\Omega}\widetilde{\mathcal{G}}^*\mathcal{P}_{\mathcal{\widetilde{\mathcal{S}}}}\right\|
		+\left\|(\mathcal{P}_\mathcal{\widetilde{\mathcal{S}}}-\mathcal{P}_{\mathcal{\widetilde{\mathcal{S}}}_l})\widetilde{\mathcal{G}}\widetilde{\mathcal{G}}^*\mathcal{P}_{\mathcal{\widetilde{\mathcal{S}}}_l}\right\|\\
		&+\left\|\mathcal{P}_\mathcal{\widetilde{\mathcal{S}}}\widetilde{\mathcal{G}}\widetilde{\mathcal{G}}^*(\mathcal{P}_\mathcal{\widetilde{\mathcal{S}}}-\mathcal{P}_{\mathcal{\widetilde{\mathcal{S}}}_l})\right\|+\left\|p^{-1}(\mathcal{P}_\mathcal{\widetilde{\mathcal{S}}}-\mathcal{P}_{\mathcal{\widetilde{\mathcal{S}}}_l})\widetilde{\mathcal{G}}\mathcal{P}_{\Omega}\widetilde{\mathcal{G}}^*\mathcal{P}_{\mathcal{\widetilde{\mathcal{S}}}_l}\right\|
		+\left\|p^{-1}\mathcal{P}_\mathcal{\widetilde{\mathcal{S}}}\mathcal{\widetilde{G}}\mathcal{P}_{\Omega}\widetilde{\mathcal{G}}^*(\mathcal{P}_\mathcal{\widetilde{\mathcal{S}}}-\mathcal{P}_{\mathcal{\widetilde{\mathcal{S}}}_l})\right\|\\
		\stackrel{(c)}{\le}&\varepsilon_0+\frac{2\|\widetilde{{\bfL}}_l-\widetilde{\mathcal{G}}{\bfY}\|_F}{\sigma_{\min}(\widetilde{\mathcal{G}}{\bfY})}\left(2+p^{-1}\|\mathcal{P}_{\Omega}\mathcal{\widetilde{G}}^*\mathcal{P}_{\mathcal{\widetilde{\mathcal{S}}}_l}\|+p^{-1}\|\mathcal{P}_\mathcal{\widetilde{\mathcal{S}}}\mathcal{\widetilde{G}}\mathcal{P}_{\Omega}\|   \right)\\
		\stackrel{(d)}{\le}& 4\varepsilon_0. 
		\end{split}
		\end{align}
		\normalsize
	where (c) comes from {Lemma \ref{sub1}}, and (d) comes from (\ref{c4}) and (\ref{sc1}). Then, $I_3$ can be bounded as
	\begin{equation}
	\vspace{-1mm}
	I_3\le 4\varepsilon_0\|\widetilde{{\bfW}}_{l-1}-\mathcal{\widetilde{G}}{\bfY}\|_F. 
	\end{equation}
	By putting pieces together, we have
	\begin{equation}\label{p_1}
	\|\widetilde{{\bfW}}_{l}-\mathcal{\widetilde{G}}{\bfY}\|_F\le\nu\|\widetilde{{\bfW}}_{l-1}-\mathcal{\widetilde{G}}{\bfY}\|_F.
	\end{equation} 
	Hence, (\ref{c4}) still holds for $k=l$.\\
	{\bf Base Case}: Let us assume
	\begin{equation}\label{initial}
	\|\widetilde{{\bfL}}_{0}-\mathcal{\widetilde{G}}{\bfY}\|_F\le\displaystyle\frac{p^{1/2}\varepsilon_0}{6\log(n)(1+\varepsilon_0)}.
	\end{equation}
	Then, similar to $I_1+I_2+I_3+I_4$ in \eqref{I_4}, we have
	\small
	\begin{equation*}
	\begin{split}
	&\left\|\widetilde{{\bfW}}_{0}-\mathcal{\widetilde{G}}{\bfY}\right\|_F
	=\left\|\mathcal{P}_{\mathcal{\widetilde{\mathcal{S}}}_0}\mathcal{\widetilde{G}}({\bfY}_0+p^{-1}\mathcal{P}_{\Omega}({\bfY}-{\bfY}_0))-\mathcal{\widetilde{G}}{\bfY}\right\|_F\\
	\le& \left\|(\mathcal{I}-\mathcal{P}_{\mathcal{\widetilde{\mathcal{S}}}_0})(\widetilde{{\bfL}}_0-\widetilde{\mathcal{G}}{\bfY})\right\|_F+\left\|\mathcal{P}_{\mathcal{\widetilde{\mathcal{S}}}_0}\mathcal{\widetilde{G}}\mathcal{\widetilde{G}}^*(\mathcal{I}-\mathcal{P}_{\mathcal{\widetilde{\mathcal{S}}}_0})(\widetilde{{\bfL}}_0-\widetilde{\mathcal{G}}{\bfY})\right\|_F\\
	&+\left\|(\mathcal{P}_{\mathcal{\widetilde{\mathcal{S}}}_0}\mathcal{\widetilde{G}}\mathcal{\widetilde{G}}^*\mathcal{P}_{\mathcal{\widetilde{\mathcal{S}}}_0}-p^{-1}\mathcal{P}_{\mathcal{\widetilde{\mathcal{S}}}_0}\mathcal{\widetilde{G}}\mathcal{P}_{\Omega}\mathcal{\widetilde{G}}^*\mathcal{P}_{\mathcal{\widetilde{\mathcal{S}}}_0})(\widetilde{{\bfL}}_0-\mathcal{\widetilde{G}}{\bfY})\right\|_F\\
	&+p^{-1}\left\|\mathcal{P}_{\mathcal{\widetilde{\mathcal{S}}}_0}\mathcal{\widetilde{G}}\mathcal{P}_{\Omega}\mathcal{\widetilde{G}}^*(\mathcal{I}-\mathcal{P}_{\mathcal{\widetilde{\mathcal{S}}}_0})(\widetilde{{\bfL}}_0-\widetilde{\mathcal{G}}{\bfY})\right\|_F\\
	\le& 5\varepsilon_0\|\widetilde{\bfL}_0-\widetilde{\mathcal{G}}\bfY\|_F.	
	\end{split}
	\end{equation*}
	\normalsize
	Since $\varepsilon_0\in (0, 1/10)$, we have
	\vspace{-3mm} $$\|\widetilde{{\bfW}}_{0}-\mathcal{\widetilde{G}}{\bfY}\|_F\le\frac{1}{2}\|\widetilde{{\bfL}}_{0}-\mathcal{\widetilde{G}}{\bfY}\|_F\le\displaystyle\frac{p^{1/2}\varepsilon_0}{12\log(n)(1+\varepsilon_0)},$$
    which completes the induction part.
    
    Then the only thing is to check the assumption (\ref{initial}). Using Lemma \ref{condition3} and  $\|\widetilde{{\bfL}}_0-\mathcal{\widetilde{G}}{\bfY}\|_F\le\sqrt{2r}\|\widetilde{{\bfL}}_0-\mathcal{\widetilde{G}}{\bfY}\|$, with probability at least $1-n_cn^{-2}$,
    \small
    \begin{equation*}
    \begin{split}
    \frac{\|\widetilde{{\bfL}}_0-\mathcal{\widetilde{G}}{\bfY}\|_F}{\sigma_{\min}(\mathcal{\widetilde{G}}{\bfY})}
    &\le\frac{\sqrt{2r}\|\widetilde{{\bfL}}_0-\mathcal{\widetilde{G}}{\bfY}\|}{\sigma_{\min}(\mathcal{\widetilde{G}}{\bfY})}
    =\kappa\sqrt{\frac{128\mu c_sr^2\log(n)}{m}}.
    \end{split}
    \end{equation*}
    \normalsize
    Therefore, to guarantee (\ref{initial}), we need
    \begin{equation}\label{eqn:t1.bound}
    \kappa\sqrt{\frac{128\mu c_sr^2\log(n)}{m}}\le \frac{p^{1/2}\varepsilon_0}{6\log(n)(1+\varepsilon_0)},
    \end{equation}
    That is  $m\ge C_1(1+\varepsilon_0)\varepsilon_0^{-1}n_c^{1/2}\mu^{1/2}c_s^{1/2}\kappa rn^{1/2}\log^{3/2}(n)$ with $C_1=48\sqrt{2}$.\\
    Hence, with probability at least $1-(2l+1)n_cn^{-2}$, from \eqref{p_1},
	\begin{equation}\label{eqn:t1.linear_convergence}
	\begin{split}
	\left\|{\bfY}_l-{\bfY}\right\|_F 
	&=\|\mathcal{\widetilde{G}}^*(\widetilde{{\bfL}}_l-\mathcal{\widetilde{G}}{\bfY})\|_F
	\le\|\widetilde{{\bfL}}_l-\mathcal{\widetilde{G}}{\bfY}\|_F\\
	&\le2\|\widetilde{{\bfW}}_{l-1}-\mathcal{\widetilde{G}}{\bfY}\|_F\le2\nu^{l-1}
	\|\widetilde{{\bfW}}_0-\mathcal{\widetilde{G}}{\bfY}\|_F \\
	&\le \nu^{l-1}
	\|\widetilde{{\bfL}}_0-\mathcal{\widetilde{G}}{\bfY}\|_F,
	\end{split} 
	\end{equation}
	where ${\bfY}_l=\widetilde{\mathcal{G}}^*\widetilde{{\bfL}}_l,\mathcal{\widetilde{G}}^*\mathcal{\widetilde{G}}=\mathcal{I}$ and $\|\mathcal{\widetilde{G}}^*\|\le 1$.
\end{proof}
\section{Proof of Theorem \ref{t4}}\label{sec: c1p2}
First, we extend the eigenvalues and eigenvectors of linear operators on vector spaces to the eigenvalues and eigenmatrices of linear operators on matrix spaces, defined as follows. 
\begin{defi} \label{defi: eigenmatrix}
	Let $\mathcal{A}$ denote a linear operator from $\mathbb{C}^{l_1\times l_2}$ to $\mathbb{C}^{l_1\times l_2}$,
	for any matrix $\bfM$ in the space and $\bfM\neq \mathbf{0}$, if $\mathcal{A}\bfM=\lambda \bfM$ holds, then $\bfM$ is one eigenmatrix of operator $\mathcal{A}$, and $\lambda$ is the corresponding eigenvalue. 
\end{defi}
Let $\mathcal{L}$ denote the following linear operator on the matrix space $\mathbb{C}^{n_cn_1\times n_cn_2}$, $$\mathcal{L}=\mathcal{P}_{\widetilde{\mathcal{S}}}\widetilde{\mathcal{G}}\widetilde{\mathcal{G}}^*\mathcal{P}_{\widetilde{\mathcal{S}}}-p^{-1}\mathcal{P}_{\widetilde{\mathcal{S}}}\widetilde{\mathcal{G}}\mathcal{P}_\Omega\widetilde{\mathcal{G}}^*\mathcal{P}_{\widetilde{\mathcal{S}}}.$$ 
We first introduce Lemmas \ref{lemma: linear approximation of update rule}, \ref{lemma: real eigenvalues} and \ref{lemma: convergence rate with eigenvalue} that are useful in the proof of Theorem \ref{t4}. 

\begin{lemma}[\cite{ZHWC18}, Lemma 7]
\label{lemma: linear approximation of update rule}
Suppose that for any  $\epsilon>0$,  there always exists an integer $s_\epsilon$ such that for any integer $k\geq 0$,  the iterates $\widetilde{\bfW}_{s_\epsilon+k}$ generated by AM-FIHT satisfy $\|\widetilde{\bfW}_{s_\epsilon+k}-\widetilde{\mathcal{G}}\bfY\|_F\leq \epsilon$. Then	with any $l>s_\epsilon+1$, the updated rule can be denoted as
	\begin{equation*}
	\begin{split}
	& \begin{bmatrix}
	\widetilde{\bfW}_l-\widetilde{\mathcal{G}}{\bfY}\\
	\widetilde{\bfW}_{l-1}-\widetilde{\mathcal{G}}{\bfY}
	\end{bmatrix}
	= \begin{bmatrix}
	\mathcal{L}(\widetilde{\bfW}_{l-1}-\widetilde{\mathcal{G}}{\bfY}) +\beta \mathcal{P}_{\widetilde{\mathcal{S}}} (\widetilde{\bfW}_{l-1}-\widetilde{\bfW}_{l-2})\\
	\widetilde{\bfW}_{l-1}-\widetilde{\mathcal{G}}{\bfY} 
	\end{bmatrix}+\widetilde{\bfZ}_{l-1},
	\end{split}
	\end{equation*}
	\normalsize
	where 
	\small
	\begin{equation*}
	\left\|\widetilde{\bfZ}_{l-1}\right\|_F=o\left(
	\left\|
	\begin{bmatrix}
	\widetilde{\bfW}_{l-1}-\widetilde{\mathcal{G}}{\bfY}\\
	\widetilde{\bfW}_{l-2}-\widetilde{\mathcal{G}}{\bfY}
	\end{bmatrix}
	\right\|_F
	\right).
	\end{equation*}
\end{lemma}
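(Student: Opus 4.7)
The plan is to start from the AM-FIHT update rule in the ``tilde'' notation,
$$
\widetilde{\bfW}_l = \mathcal{P}_{\widetilde{\mathcal{S}}_l}\!\left(\widetilde{\mathcal{G}}\bfY_l + p^{-1}\widetilde{\mathcal{G}}\mathcal{P}_{\Omega}(\bfY-\bfY_l)\right)+\beta\,\mathcal{P}_{\widetilde{\mathcal{S}}_l}(\widetilde{\bfW}_{l-1}-\widetilde{\bfW}_{l-2}),
$$
subtract $\widetilde{\mathcal{G}}\bfY$ from both sides, and decompose $\widetilde{\bfW}_l-\widetilde{\mathcal{G}}\bfY$ exactly as in the derivation of \eqref{I_4}. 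That decomposition already splits the gradient part into the four pieces $I_1,I_2,I_3,I_4$, where $I_3$ is the linear term $(\mathcal{P}_{\widetilde{\mathcal{S}}_l}\widetilde{\mathcal{G}}\widetilde{\mathcal{G}}^*\mathcal{P}_{\widetilde{\mathcal{S}}_l}-p^{-1}\mathcal{P}_{\widetilde{\mathcal{S}}_l}\widetilde{\mathcal{G}}\mathcal{P}_\Omega\widetilde{\mathcal{G}}^*\mathcal{P}_{\widetilde{\mathcal{S}}_l})(\widetilde{\bfW}_{l-1}-\widetilde{\mathcal{G}}\bfY)$, and $I_1,I_2,I_4$ involve the complement $(\mathcal{I}-\mathcal{P}_{\widetilde{\mathcal{S}}_l})(\widetilde{\bfL}_l-\widetilde{\mathcal{G}}\bfY)$. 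Appending the heavy-ball term $\beta\mathcal{P}_{\widetilde{\mathcal{S}}_l}(\widetilde{\bfW}_{l-1}-\widetilde{\bfW}_{l-2})$ produces a clean candidate for the claimed linear update plus a residual $\widetilde{\bfZ}_{l-1}$.

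Next I would match the remaining factors of $\mathcal{P}_{\widetilde{\mathcal{S}}_l}$ against the target operator $\mathcal{L}$, which is defined using the fixed subspace $\mathcal{P}_{\widetilde{\mathcal{S}}}$. Writing $\mathcal{P}_{\widetilde{\mathcal{S}}_l}=\mathcal{P}_{\widetilde{\mathcal{S}}}+(\mathcal{P}_{\widetilde{\mathcal{S}}_l}-\mathcal{P}_{\widetilde{\mathcal{S}}})$ in $I_3$ and in the heavy-ball term lets me replace $\mathcal{P}_{\widetilde{\mathcal{S}}_l}$ by $\mathcal{P}_{\widetilde{\mathcal{S}}}$ at the cost of an extra error proportional to $\|\mathcal{P}_{\widetilde{\mathcal{S}}_l}-\mathcal{P}_{\widetilde{\mathcal{S}}}\|$. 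Lemma~\ref{sub1} bounds this operator-norm gap by $2\|\widetilde{\bfL}_l-\widetilde{\mathcal{G}}\bfY\|_F/\sigma_{\min}(\widetilde{\mathcal{G}}\bfY)$, which is itself controlled by $\|\widetilde{\bfW}_{l-1}-\widetilde{\mathcal{G}}\bfY\|_F$ via \eqref{W_L}. Thus the projection-swap error contributes a term of order $\|\widetilde{\bfW}_{l-1}-\widetilde{\mathcal{G}}\bfY\|_F\cdot \|\widetilde{\bfW}_{l-1}-\widetilde{\mathcal{G}}\bfY\|_F$ to $I_3$ and $\|\widetilde{\bfW}_{l-1}-\widetilde{\mathcal{G}}\bfY\|_F\cdot\|\widetilde{\bfW}_{l-1}-\widetilde{\bfW}_{l-2}\|_F$ to the heavy-ball term. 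Both are quadratic in the ``stacked'' Frobenius norm of the previous two errors.

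The remaining residuals $I_1,I_2,I_4$ were already shown in the proof of Theorem~\ref{t1} (see the derivation around \eqref{eqn:t1.I124} and \eqref{sc1}) to be bounded by a constant multiple of $\|\widetilde{\bfL}_l-\widetilde{\mathcal{G}}\bfY\|_F^2/\sigma_{\min}(\widetilde{\mathcal{G}}\bfY)$, and by \eqref{W_L} this is at most a constant multiple of $\|\widetilde{\bfW}_{l-1}-\widetilde{\mathcal{G}}\bfY\|_F^2/\sigma_{\min}(\widetilde{\mathcal{G}}\bfY)$. Collecting every contribution into $\widetilde{\bfZ}_{l-1}$ therefore yields an estimate of the form
$$
\|\widetilde{\bfZ}_{l-1}\|_F \le C\left(\|\widetilde{\bfW}_{l-1}-\widetilde{\mathcal{G}}\bfY\|_F+\|\widetilde{\bfW}_{l-2}-\widetilde{\mathcal{G}}\bfY\|_F\right)\cdot \tfrac{\|\widetilde{\bfW}_{l-1}-\widetilde{\mathcal{G}}\bfY\|_F}{\sigma_{\min}(\widetilde{\mathcal{G}}\bfY)}.
$$
Invoking the hypothesis that for every $\epsilon>0$ there exists $s_\epsilon$ with $\|\widetilde{\bfW}_{s_\epsilon+k}-\widetilde{\mathcal{G}}\bfY\|_F\le \epsilon$, the scalar factor $\|\widetilde{\bfW}_{l-1}-\widetilde{\mathcal{G}}\bfY\|_F/\sigma_{\min}(\widetilde{\mathcal{G}}\bfY)$ can be made arbitrarily small for $l>s_\epsilon+1$, so $\widetilde{\bfZ}_{l-1}$ is indeed $o\!\left(\bigl\|[\widetilde{\bfW}_{l-1}-\widetilde{\mathcal{G}}\bfY;\ \widetilde{\bfW}_{l-2}-\widetilde{\mathcal{G}}\bfY]\bigr\|_F\right)$, which is the desired conclusion.

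The main obstacle I anticipate is the bookkeeping required to package $I_1,I_2,I_4$ together with the two projection-swap corrections into a single term $\widetilde{\bfZ}_{l-1}$ whose norm is controlled by the \emph{combined} Frobenius norm of the previous two error matrices rather than just one of them; the heavy-ball contribution in particular introduces dependence on $\widetilde{\bfW}_{l-2}$ that was absent in Theorem~\ref{t1}. Handling this cleanly amounts to verifying that the operator-norm bound from Lemma~\ref{sub1} decays faster than the linear iterate differences, which is exactly what the convergence hypothesis on $s_\epsilon$ buys us.
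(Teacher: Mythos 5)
Your proposal is correct and follows what one would expect the paper's own argument to be. Decomposing the AM-FIHT update via the same algebra used to derive \eqref{I_4}, isolating the linear operator applied to $\widetilde{\bfW}_{l-1}-\widetilde{\mathcal{G}}\bfY$, and appending the heavy-ball contribution already produces a candidate linearization in the \emph{moving} tangent subspace $\widetilde{\mathcal{S}}_l$; the only remaining step is to replace $\mathcal{P}_{\widetilde{\mathcal{S}}_l}$ by $\mathcal{P}_{\widetilde{\mathcal{S}}}$ so that the linear part matches the fixed operator $\mathcal{L}$, and you do this correctly by inserting $\mathcal{P}_{\widetilde{\mathcal{S}}_l}=\mathcal{P}_{\widetilde{\mathcal{S}}}+(\mathcal{P}_{\widetilde{\mathcal{S}}_l}-\mathcal{P}_{\widetilde{\mathcal{S}}})$ and controlling the commutator with Lemma~\ref{sub1}. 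Two small points worth tightening up in a full write-up: (i) $\mathcal{P}_{\widetilde{\mathcal{S}}_l}$ appears more than once in each factor of $\mathcal{L}$, so the swap produces several error terms, each bounded the same way — the bookkeeping is routine but should be stated; (ii) the combination $\|\widetilde{\bfW}_{l-1}-\widetilde{\bfW}_{l-2}\|_F\le \|\widetilde{\bfW}_{l-1}-\widetilde{\mathcal{G}}\bfY\|_F+\|\widetilde{\bfW}_{l-2}-\widetilde{\mathcal{G}}\bfY\|_F$, which you use implicitly, is what lets the heavy-ball swap error be absorbed into the same $\bigl\|[\widetilde{\bfW}_{l-1}-\widetilde{\mathcal{G}}\bfY;\ \widetilde{\bfW}_{l-2}-\widetilde{\mathcal{G}}\bfY]\bigr\|_F$ factor. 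With those made explicit, the final estimate you display — the residual norm bounded by a constant times the stacked error norm times $\|\widetilde{\bfW}_{l-1}-\widetilde{\mathcal{G}}\bfY\|_F/\sigma_{\min}(\widetilde{\mathcal{G}}\bfY)$ — gives the $o(\cdot)$ conclusion immediately from the convergence hypothesis, exactly as claimed.
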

\begin{lemma}[\cite{ZHWC18}, Lemma 8]
\label{lemma: real eigenvalues}
	All the eigenvalues of operator $\mathcal{L}$
	are real numbers.
\end{lemma}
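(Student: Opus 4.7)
The plan is to show that $\mathcal{L}$ is self-adjoint with respect to the Frobenius inner product on $\mathbb{C}^{n_cn_1\times n_cn_2}$, and then to use the standard argument that self-adjoint operators on a finite-dimensional complex inner product space have only real eigenvalues (a matrix-space analogue of the fact that Hermitian matrices have real spectrum). In this setting, eigenmatrices play the role of eigenvectors, with $\langle \bfA,\bfB\rangle_F=\mathrm{tr}(\bfA^H\bfB)$ serving as the inner product.

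First, I would verify self-adjointness of each building block. The operator $\mathcal{P}_{\widetilde{\mathcal{S}}}$ is an orthogonal projection onto the tangent subspace at $\widetilde{\mathcal{H}}\bfX$, so $\mathcal{P}_{\widetilde{\mathcal{S}}}^*=\mathcal{P}_{\widetilde{\mathcal{S}}}$. The sampling operator $\mathcal{P}_{\Omega}$, expressed in the orthonormal basis $\{\widetilde{\bfH}_{k,t}\}$ from Section~\ref{sec: c1n1}, is an entry-wise (coordinate) projection, hence also self-adjoint: $\mathcal{P}_{\Omega}^*=\mathcal{P}_{\Omega}$. By construction $\widetilde{\mathcal{G}}\widetilde{\mathcal{G}}^*$ is self-adjoint, and
\[
(\widetilde{\mathcal{G}}\mathcal{P}_{\Omega}\widetilde{\mathcal{G}}^*)^*=\widetilde{\mathcal{G}}\mathcal{P}_{\Omega}^*\widetilde{\mathcal{G}}^*=\widetilde{\mathcal{G}}\mathcal{P}_{\Omega}\widetilde{\mathcal{G}}^*.
\]
Consequently, each of the two terms in $\mathcal{L}$ has the sandwich form $\mathcal{P}_{\widetilde{\mathcal{S}}}\mathcal{A}\mathcal{P}_{\widetilde{\mathcal{S}}}$ with $\mathcal{A}$ self-adjoint, which yields $(\mathcal{P}_{\widetilde{\mathcal{S}}}\mathcal{A}\mathcal{P}_{\widetilde{\mathcal{S}}})^*=\mathcal{P}_{\widetilde{\mathcal{S}}}\mathcal{A}\mathcal{P}_{\widetilde{\mathcal{S}}}$. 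Since $\mathcal{L}$ is a real linear combination (with coefficients $1$ and $-p^{-1}$) of these self-adjoint operators, $\mathcal{L}^*=\mathcal{L}$.

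Given self-adjointness, the eigenvalue argument is standard. Suppose $\mathcal{L}\bfM=\lambda\bfM$ for some nonzero $\bfM\in\mathbb{C}^{n_cn_1\times n_cn_2}$. Then
\[
\lambda\|\bfM\|_F^2=\langle \mathcal{L}\bfM,\bfM\rangle_F=\langle \bfM,\mathcal{L}^*\bfM\rangle_F=\langle \bfM,\mathcal{L}\bfM\rangle_F=\overline{\langle \mathcal{L}\bfM,\bfM\rangle_F}=\bar{\lambda}\|\bfM\|_F^2,
\]
so $\lambda=\bar{\lambda}\in\mathbb{R}$, which completes the proof.

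The proof is short and carries no genuine obstacle beyond bookkeeping; the only point needing care is that ``eigenvalues/eigenmatrices'' refer to the generalized notion in Definition~\ref{defi: eigenmatrix} for linear operators on matrix spaces, so the Frobenius inner product must be used consistently in place of the usual vector inner product. Once that identification is made, the result reduces to the Hermitian-matrix spectral fact applied to the matricization of $\mathcal{L}$.
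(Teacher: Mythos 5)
Your proof is correct and uses the natural argument: $\mathcal{L}$ is self-adjoint with respect to the Frobenius inner product (each of $\mathcal{P}_{\widetilde{\mathcal{S}}}$, $\widetilde{\mathcal{G}}\widetilde{\mathcal{G}}^*$, and $\widetilde{\mathcal{G}}\mathcal{P}_\Omega\widetilde{\mathcal{G}}^*$ is self-adjoint, so the two sandwich terms and their real linear combination are too), and a self-adjoint operator on a finite-dimensional complex inner-product space has real spectrum. The paper only cites \cite{ZHWC18} for this lemma, but your route is surely the intended one; the one small slip is that the $\mathcal{P}_\Omega$ appearing inside $\mathcal{L}$ acts on $\mathbb{C}^{n_c\times n}$ and, under the sampling-with-replacement model, is a weighted coordinate scaling rather than a strict projection --- yet it is still self-adjoint, so the conclusion stands.
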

\begin{lemma}[\cite{P87}, Lemma 2.1]\label{lemma: convergence rate with eigenvalue}
	Let $\mathcal{A}$ be a linear operator from $\mathbb{C}^{l_1\times l_2}$ to $\mathbb{C}^{l_1\times l_2}$, and let $\lambda_1,\cdots,\lambda_n$ be its eigenvalues, let $\rho(\mathcal{A})=\max_{1\leq i\leq n}|\lambda_i|$, if $\rho(\mathcal{A})<1$, then there exists some constant $c(\delta)$ such that $\|\mathcal{A}^k\|\leq c(\delta)(\rho(\mathcal{A})+\delta)^k$ holds for all integers $k$, where $0< \delta<1-\rho(\mathcal{A}).$
\end{lemma}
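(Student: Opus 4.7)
The plan is to reduce the statement about a linear operator on the matrix space $\mathbb{C}^{l_1\times l_2}$ to a classical spectral-radius estimate for a matrix acting on a vector space, and then to exploit the Jordan canonical form to separate the exponential rate $\rho(\mathcal{A})$ from the polynomial factors created by non-trivial Jordan blocks.

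First I would vectorize: the vectorization map $\mathrm{vec}(\cdot)$ is an isometric isomorphism between $(\mathbb{C}^{l_1\times l_2},\|\cdot\|_F)$ and $(\mathbb{C}^{l_1l_2},\|\cdot\|_2)$, so the operator $\mathcal{A}$ is represented by a matrix $A\in\mathbb{C}^{l_1l_2\times l_1l_2}$ with $\|\mathcal{A}^k\|=\|A^k\|$ and with exactly the same eigenvalues $\lambda_1,\ldots,\lambda_n$ as in Definition~\ref{defi: eigenmatrix}. In this reformulation the spectral radius is $\rho(A)=\rho(\mathcal{A})<1$, and it suffices to establish $\|A^k\|\le c(\delta)(\rho(A)+\delta)^k$ for every integer $k\ge 0$.

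Next I would invoke the Jordan canonical form $A=PJP^{-1}$, where $J=\mathrm{diag}(J_1,\ldots,J_s)$ and each Jordan block $J_i$ of size $m_i$ corresponds to an eigenvalue $\lambda_i$, so that $J_i=\lambda_i I+N_i$ with $N_i$ nilpotent ($N_i^{m_i}=0$). Using commutativity of $\lambda_i I$ and $N_i$ and the binomial theorem,
\begin{equation*}
J_i^k=\sum_{j=0}^{\min(k,m_i-1)}\binom{k}{j}\lambda_i^{k-j}N_i^{j},
\end{equation*}
whose operator norm admits the uniform bound $\|J_i^k\|\le C_i\,k^{m_i-1}|\lambda_i|^{k-m_i+1}$ for some constant $C_i>0$ depending only on $J_i$. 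Summing over the finitely many blocks gives a bound of the form $\|J^k\|\le C\,k^{m-1}\rho(A)^{k-m+1}$ where $m=\max_i m_i$.

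The key step is then to absorb the polynomial factor into a slightly inflated exponential rate. For any $\delta\in(0,1-\rho(A))$, the sequence
\begin{equation*}
a_k:=\frac{k^{m-1}\,\rho(A)^{k-m+1}}{(\rho(A)+\delta)^k}
\end{equation*}
satisfies $a_{k+1}/a_k\to \rho(A)/(\rho(A)+\delta)<1$, so $\{a_k\}_{k\ge 0}$ is bounded by some $M(\delta)<\infty$. Setting $c(\delta):=\|P\|\,\|P^{-1}\|\cdot C\cdot M(\delta)$, the submultiplicativity of the norm yields
\begin{equation*}
\|A^k\|\le\|P\|\,\|P^{-1}\|\,\|J^k\|\le c(\delta)\bigl(\rho(A)+\delta\bigr)^k,
\end{equation*}
which is the desired inequality. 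Finally, reverting the vectorization gives $\|\mathcal{A}^k\|\le c(\delta)(\rho(\mathcal{A})+\delta)^k$ as claimed.

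The main obstacle is not any single calculation but rather the bookkeeping around the Jordan blocks: in particular, verifying that Definition~\ref{defi: eigenmatrix}'s notion of eigenvalues of an operator on a matrix space coincides, after vectorization, with the usual matrix spectrum, and controlling the constant $c(\delta)$ uniformly in $k$ when several Jordan blocks of different sizes contribute. An entirely equivalent (and slightly slicker) route that avoids Jordan form is to appeal to Gelfand's formula $\rho(A)=\lim_{k\to\infty}\|A^k\|^{1/k}$, pick $K$ so large that $\|A^k\|^{1/k}<\rho(A)+\delta$ for all $k\ge K$, and then choose $c(\delta)$ to additionally absorb the finitely many cases $k<K$; I would present the Jordan-form argument as the primary proof since it exposes why the polynomial-in-$k$ pre-factor arises and why an arbitrarily small inflation $\delta$ is unavoidable in general.
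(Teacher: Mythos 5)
The paper does not prove this lemma; it is imported verbatim as Lemma~2.1 from Polyak's textbook, so there is no in-paper argument to compare against. Your proof is correct and is essentially the standard textbook derivation of exactly this spectral-radius estimate. Two points worth tightening if you wanted to spell it out fully: (i) the per-block bound $\|J_i^k\|\le C_i\,k^{m_i-1}|\lambda_i|^{k-m_i+1}$ needs a small caveat when $\lambda_i=0$ (for $k<m_i-1$ the expression is ill-defined), but this is harmless since a nilpotent block contributes only finitely many nonzero powers and those finitely many $k$ can be folded into $c(\delta)$; and (ii) you should state that the reduction via vectorization is valid precisely because Definition~\ref{defi: eigenmatrix} makes the spectrum of $\mathcal{A}$ coincide with that of the vectorized matrix $A$, which you do note. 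The Gelfand's-formula alternative you sketch at the end is indeed slicker and sidesteps both of these; either route would be an acceptable self-contained proof of the cited lemma.
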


\begin{proof}[Proof of Theorem \ref{t4}]	
	First, we claim that AM-FIHT is still convergent with a small $\beta\in (0,1).$
	Based on the proof of Theorem \ref{t1}, 
	\vspace{-1mm}
	$$\| \widetilde{\bfW}_l-\widetilde{\mathcal{G}}{\bfY}\| _F\leq \nu \| \widetilde{\bfW}_{l-1}-\widetilde{\mathcal{G}}{\bfY}\| _F,$$ 
	\normalsize
	where $\nu=6\varepsilon_0, 0<\varepsilon_0< {1}/{10}$. 	
	A loose bound from a direct derivation with $\beta\neq 0$ is 
	\begin{equation*}
	\|\widetilde{\bfW}_l-\widetilde{\mathcal{G}}{\bfY}\|_F\leq (\nu+\beta) \|\widetilde{\bfW}_{l-1}-\widetilde{\mathcal{G}}{\bfY}\|_F+\beta\|\widetilde{\bfW}_{l-2}-\widetilde{\mathcal{G}}{\bfY}\|_F. 
	\end{equation*}
	\normalsize
	Thus if $\nu+2\beta<1,$ i.e., $\beta\in(0,\frac{1}{5}),$ the iteration is still convergent. Thus for any $\epsilon>0$, we can always find such an $l$ that $\|\widetilde{\bfW}_{l-2+k}-\widetilde{\mathcal{G}}\bfY\|_F\leq \epsilon, \forall k\geq0.$
	Then following {Lemma \ref{lemma: linear approximation of update rule}}, 
	if we ignore $\widetilde{\bfZ}_{l-1}$,
	then 
	\begin{equation*}
	\widetilde{\bfW}_l-\widetilde{\mathcal{G}}{\bfY}=\mathcal{L}(\widetilde{\bfW}_{l-1}-\widetilde{\mathcal{G}}{\bfY}) +\beta \mathcal{P}_{\widetilde{\mathcal{S}}} (\widetilde{\bfW}_{l-1}-\widetilde{\bfW}_{l-2}).
	\end{equation*}
	Thus $\mathcal{P}_{\widetilde{\mathcal{S}}}(\widetilde{\bfW}_l-\widetilde{\mathcal{G}}{\bfY})=\widetilde{\bfW}_l-\widetilde{\mathcal{G}}{\bfY}$.
	With $\mathcal{P}_{\widetilde{\mathcal{S}}}(\widetilde{\mathcal{G}}{\bfY})=\widetilde{\mathcal{G}}{\bfY},$ we have $\mathcal{P}_{\widetilde{\mathcal{S}}}(\widetilde{\bfW}_l)=\widetilde{\bfW}_l.$
	The update rule of AM-FIHT can be further simplified as 
	\small
	\begin{equation*}
	\begin{split}
	\begin{bmatrix}
	\widetilde{\bfW}_l-\widetilde{\mathcal{G}}{\bfY}\\
	\widetilde{\bfW}_{l-1}-\widetilde{\mathcal{G}}{\bfY}
	\end{bmatrix}
	=& \begin{bmatrix}
	\mathcal{L}(\widetilde{\bfW}_{l-1}-\widetilde{\mathcal{G}}{\bfY}) +\beta(\widetilde{\bfW}_{l-1}-\widetilde{\bfW}_{l-2})\\
	\widetilde{\bfW}_{l-1}-\widetilde{\mathcal{G}}{\bfY} 
	\end{bmatrix}
	:=
	\widetilde{\mathcal{L}}\begin{bmatrix}
	\widetilde{\bfW}_{l-1}-\widetilde{\mathcal{G}}{\bfY}\\
	\widetilde{\bfW}_{l-2}-\widetilde{\mathcal{G}}{\bfY}
	\end{bmatrix}.
	\end{split}
	\end{equation*}
	\normalsize
	Following Lemma \ref{condition2}, we have  $\left\|\mathcal{L} \right\|<1$, if $m>32\mu c_sr\log(n)$. Based on the definitions of $\rho(\mathcal{L})$ and $\|\mathcal{L}\|$, we have $\rho(\mathcal{L})\leq \|\mathcal{L}\|<1$.
	Our aim is to prove $\rho(\widetilde{\mathcal{L}})<\rho(\mathcal{L})$. 
	Let $\lambda$ denote one nonzero eigenvalue of $\widetilde{\mathcal{L}}$, the corresponding eigenmatrix is 
	$
	\begin{bmatrix}
	\bfM_1\\
	\bfM_2
	\end{bmatrix}
	$, then
	\begin{equation*}
	\widetilde{\mathcal{L}}
	\begin{bmatrix}
	{\bfM}_1\\
	{\bfM}_2\\
	\end{bmatrix}
	=\begin{bmatrix}
	\mathcal{L}\bfM_1 +\beta (\bfM_1-\bfM_2)\\
	\bfM_1 	\end{bmatrix}
	=\lambda 
	\begin{bmatrix}
	{\bfM}_1\\
	{\bfM}_2\\
	\end{bmatrix},
	\end{equation*}
	\normalsize
	we have ${\bfM}_1=\lambda {\bfM}_2, \mathcal{L}({\bfM}_1)+\beta {\bfM}_1-\beta {\bfM}_2=\lambda {\bfM}_1$.  Therefore, $\lambda \mathcal{L}({\bfM}_2)+\lambda \beta {\bfM}_2-\beta {\bfM}_2=\lambda^2 {\bfM}_2$. With $\lambda\neq 0$, 
	\begin{equation*}
	\mathcal{L}(\bfM_2)=(\lambda-\beta+\beta/\lambda)\bfM_2 :=\eta_i \bfM_2,
	\end{equation*}
	thus ${\bfM}_2$ is an eigenmatrix of operator $\mathcal{L},$ with the corresponding eigenvalue as $\eta_i$. Lemma \ref{lemma: real eigenvalues} shows $\eta_i\in \mathbb{R}$, then we have
	\begin{equation*}
	\lambda^2-\eta_i \lambda-\beta \lambda+\beta=0,
	\end{equation*}
	\begin{equation*}
	\begin{split}
	& \lambda_{i1}=\frac{\eta_i+\beta+ \sqrt{(\eta_i+\beta)^2-4\beta}}{2},\\
	& \lambda_{i2}=\frac{\eta_i+\beta- \sqrt{(\eta_i+\beta)^2-4\beta}}{2}.
	\end{split}
	\end{equation*}
	Here we analyze in two cases:
	\begin{enumerate}
		\item for any $\eta_i$ that satisfies $(\eta_i+\beta)^2-4\beta\leq 0$, the modulus $|\lambda_{i1}|=|\lambda_{i2}|=\sqrt{\beta}$. 
		\item for any $\eta_i$ that satisfies $(\eta_i+\beta)^2-4\beta>0,\eta_i$ cannot be zero for any $\beta\in(0,1)$.   
		 With $\rho(\mathcal{L})=\max_i |\eta_i|<1$, it holds that $\eta_i<1$. In this case, with $\beta\in (0,1)$, we have
		\begin{equation*}
		\begin{split}
		(\eta_i+\beta)^2-4\beta =(\eta_i-\beta)^2-4(1-\eta_i)\beta<(\eta_i-\beta)^2,
		\end{split}
		\end{equation*}
		\begin{equation*}
		\begin{split}
		&  \max\{|\lambda_{i1}|,|\lambda_{i2}|\}= \frac{|\eta_i+\beta|+ \sqrt{(\eta_i+\beta)^2-4\beta}}{2}\\
		&< \frac{|\eta_i+\beta|+|\eta_i-\beta|}{2} = \max\{|\eta_i|, \beta\} \leq \max\{|\eta_i|, \sqrt{\beta}\}.
		\end{split}
		\end{equation*}
	\end{enumerate}
	Combining the two cases, if we choose a positive $\beta$ that satisfies $\beta<\left(\max_i\{|\eta_i|\}\right)^2=\rho^2(\mathcal{L})$, let
	\begin{equation}\label{eqn: expression of q}
	q(0)=\rho(\mathcal{L}), q(\beta)=\rho(\widetilde{\mathcal{L}}), \tau= \min\left\{1/5, \rho^2(\mathcal{L})\right\},
	\end{equation}
	then we have $q(0)>q(\beta), \forall \beta\in (0,\tau).$
\end{proof}

\section{Proof of Theorem \ref{t2}}\label{sec: c1p3}
Lemma \ref{t2c1} derives the properties of {\small$\widehat{p}^{-1}\mathcal{P}_{\widehat{\mathcal{S}}_l}\widetilde{\mathcal{G}}{\mathcal{P}}_{\Omega_{l+1}}\widetilde{\mathcal{G}}^*(\mathcal{{P}}_{\widetilde{{\bfU}}}-\mathcal{{P}}_{\widehat{{\bfU}}_l})$}, and the random operator can be close enough to its mean {\small $\mathcal{P}_{\widehat{\mathcal{S}}_l}\widetilde{\mathcal{G}}\widetilde{\mathcal{G}}^*(\mathcal{{P}}_{\widetilde{{\bfU}}}-\mathcal{{P}}_{\widehat{{\bfU}}_l})$} with a significant large amount of observed entries. Lemma \ref{t2c2} illustrates the relation between $\widetilde{\bfL}_l$ and $\widetilde{\bfL}^{\prime}_l$ and gives the bound on the incoherence of $\widetilde{\bfL}^{\prime}_l$, which is obtained after the trimming part (line $5$ to $10$).  Lemmas \ref{t2c1} and \ref{t2c2} are built upon Lemmas 9 and 10 in \cite{CWW17} by extending from single-channel signals to multi-channel signals. Similar to the proof of Theorem 1, the proof of Theorem 3 is built upon that of Lemma 3 in \cite{CWW17}, which is originally proposed as an initialization strategy. The major steps are devoted to bounding $I_5, I_6$ and $I_7$ in \eqref{I_567}, and the corresponding results are presented in \eqref{eqn:I_5}, \eqref{eqn:I_6}, and \eqref{eqn:I_7}. We include some details for the completeness of this proof. 

\begin{lemma}[\cite{ZHWC18}, Lemma 10]
\label{t2c1}
	Let  $\widetilde{{\bfL}}^{\prime}_l=\widehat{{\bfU}}_l\widehat{{\bf\Sigma}}_l\widehat{{\bfV}}_l^*$ and $\widetilde{\mathcal{G}}\bfY=\widetilde{{\bfU}}\widetilde{{\bf\Sigma}}\widetilde{{\bfV}}^*$ be the SVD of $\widetilde{{\bfL}}_l^{\prime}$ and $\widetilde{\mathcal{G}}\bfY$. Further let $\widehat{\mathcal{S}}_l$ be the tangent subspace of $\widetilde{{\bfL}}^{\prime}_l$. Assume there exists a constant $\mu$ such that
	\vspace{-2mm}
	\begin{equation*}
	\vspace{-1mm}
	\left\| \mathcal{P}_{\widehat{{\bfU}}_l}\widetilde{{\bfH}}_{k,t}\right\|_F^2\le \frac{\mu c_sr}{n_cn}, \quad \left\| \mathcal{P}_{\widehat{{\bfV}}_l}\widetilde{{\bfH}}_{k,t}\right\|_F^2\le \frac{\mu c_sr}{n_cn},
	\end{equation*}
	and
	\vspace{-1mm}
	\begin{equation*}
	\left\| \mathcal{P}_{\widetilde{{\bfU}}}\widetilde{{\bfH}}_{k,t}\right\|_F^2\le \frac{\mu c_sr}{n_cn}, \quad \left\| \mathcal{P}_{\widetilde{{\bfV}}}\widetilde{{\bfH}}_{k,t}\right\|_F^2\le \frac{\mu c_sr}{n_cn}.
	\end{equation*}
	for all $1\le t \le n, 1\le k \le n_c$. Let $\Omega_{l+1}=\{(k_a,t_a)|a=1,\cdots,\widehat{m}\}$ be a set of indices sampled with replacement. If ${\mathcal{P}}_{\Omega_{l+1}}$ is independent of $\widetilde{{\bfU}}$, $\widetilde{{\bfV}}$, $\widehat{{\bfU}}_l$ and $\widehat{{\bfV}}_l$, then
	\vspace{-2mm}
	\begin{equation*}
	\left\|\mathcal{P}_{\widehat{\mathcal{S}}_l}\widetilde{\mathcal{G}}({\mathcal{I}}-\widehat{p}^{-1}{\mathcal{P}}_{\Omega_{l+1}})\widetilde{\mathcal{G}}^*(\mathcal{{P}}_{\widetilde{{\bfU}}}-\mathcal{{P}}_{\widehat{{\bfU}}_l})\right\|\le \sqrt{\frac{160\mu c_s r \log(n)}{\widehat{m}}}
	\end{equation*}
	with probability at least $1-n_cn^{-2}$, if $\widehat{m}\ge\frac{125}{18}\mu c_s r \log(n)$.
\end{lemma}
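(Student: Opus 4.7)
My plan is to prove the bound via the matrix Bernstein inequality (Lemma \ref{prob}), following the strategy of Lemma 9 in \cite{CWW17} but adapted to the multi-channel/block-Hankel setting.

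First, I will decompose the target operator into a sum of $\hat m$ independent, zero-mean random operators. Since $\Omega_{l+1}=\{(k_a,t_a)\}_{a=1}^{\hat m}$ is drawn i.i.d.\ uniformly from $\{1,\dots,n_c\}\times\{1,\dots,n\}$, one directly checks that $\mathbb{E}[\hat p^{-1}\mathcal{P}_{\Omega_{l+1}}]=\mathcal{I}$ on $\mathbb{C}^{n_c\times n}$. Writing $\mathcal{A}:=\mathcal{P}_{\widehat{\mathcal{S}}_l}\widetilde{\mathcal{G}}$ and $\mathcal{B}:=\widetilde{\mathcal{G}}^*(\mathcal{P}_{\widetilde{\bfU}}-\mathcal{P}_{\widehat{\bfU}_l})$, the operator of interest becomes $\sum_{a=1}^{\hat m} Z_a$ with
\begin{equation*}
Z_a(\bfW)=\tfrac{1}{\hat m}\mathcal{A}\mathcal{B}(\bfW)-\tfrac{n_c n}{\hat m}\,\bigl\langle (\mathcal{P}_{\widetilde{\bfU}}-\mathcal{P}_{\widehat{\bfU}_l})(\bfW),\,\widetilde{\bfH}_{k_a,t_a}\bigr\rangle\,\mathcal{P}_{\widehat{\mathcal{S}}_l}(\widetilde{\bfH}_{k_a,t_a}),
\end{equation*}
using the identity $\widetilde{\mathcal{G}}(\bfe_k\bfe_t^H)=\widetilde{\bfH}_{k,t}$. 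The $Z_a$'s are i.i.d.\ and zero-mean by construction.

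Second, I will bound the almost-sure operator norm $R:=\max_a\|Z_a\|$. The first term contributes $\tfrac{1}{\hat m}\|\mathcal{P}_{\widehat{\mathcal{S}}_l}\widetilde{\mathcal{G}}\widetilde{\mathcal{G}}^*(\mathcal{P}_{\widetilde{\bfU}}-\mathcal{P}_{\widehat{\bfU}_l})\|\le \tfrac{1}{\hat m}$ since $\|\widetilde{\mathcal{G}}\|\le 1$. The second term is a rank-one operator, so its norm equals the product of the two Frobenius factors; the incoherence hypotheses give $\|\mathcal{P}_{\widehat{\mathcal{S}}_l}(\widetilde{\bfH}_{k,t})\|_F^2\le 2\mu c_s r/(n_c n)$ (via the formula \eqref{eqn: projection of matrix} for $\mathcal{P}_{\widehat{\mathcal{S}}_l}$ and the assumed bounds on $\|\mathcal{P}_{\widehat{\bfU}_l}\widetilde{\bfH}_{k,t}\|_F$ and $\|\mathcal{P}_{\widehat{\bfV}_l}\widetilde{\bfH}_{k,t}\|_F$), and similarly $\|(\mathcal{P}_{\widetilde{\bfU}}-\mathcal{P}_{\widehat{\bfU}_l})(\widetilde{\bfH}_{k,t})\|_F\le 2\sqrt{\mu c_s r/(n_c n)}$. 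Combining, $R=O(\mu c_s r/\hat m)$.

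Third---the main technical obstacle---I will bound the variance proxy $\sigma^2=\max\{\|\sum_a\mathbb{E}[Z_a Z_a^*]\|,\|\sum_a\mathbb{E}[Z_a^*Z_a]\|\}$. Here, one expands $Z_aZ_a^*$, takes the expectation (which averages the rank-one operators $\bigl\langle\cdot,\widetilde{\bfH}_{k,t}\bigr\rangle\widetilde{\bfH}_{k,t}$ over the uniform distribution on $(k,t)$), and carefully cancels the deterministic cross terms against the first-term contribution. The key calculation bounds
\begin{equation*}
\tfrac{(n_c n)^2}{\hat m^2}\,\mathbb{E}\bigl\|\mathcal{P}_{\widehat{\mathcal{S}}_l}(\widetilde{\bfH}_{k,t})\bigr\|_F^2\,\bigl\|(\mathcal{P}_{\widetilde{\bfU}}-\mathcal{P}_{\widehat{\bfU}_l})(\widetilde{\bfH}_{k,t})\bigr\|_F^2 \,\le\, \tfrac{8\mu c_s r}{\hat m^2}\cdot\|\mathcal{P}_{\widetilde{\bfU}}-\mathcal{P}_{\widehat{\bfU}_l}\|^2\cdot n_c n,
\end{equation*}
leading (after summing over $a$) to $\sigma^2 \le \tfrac{C\mu c_s r}{\hat m}$ for an explicit constant $C$. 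I expect this bookkeeping---tracking how the two different incoherence assumptions interact and keeping the constant at $160$---to be the most delicate part, since the block-Hankel structure enlarges the ambient space from $n_1\times n_2$ to $n_c n_1\times n_c n_2$ and requires replacing the factor $n$ in \cite{CWW17} by $n_c n$ throughout.

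Finally, I will plug $R$ and $\sigma^2$ into Lemma \ref{prob} with $t=\sqrt{160\mu c_s r\log(n)/\hat m}$. The assumed lower bound $\hat m\ge\tfrac{125}{18}\mu c_s r\log n$ ensures $t\le \sigma^2/R$ so that the Bernstein tail simplifies to $(d_1+d_2)\exp(-3t^2/(8\sigma^2))$; choosing the constant $160$ and using $d_1+d_2\le 2n_c n_1 n_2$ gives a failure probability of at most $n_c n^{-2}$, yielding the claimed bound.
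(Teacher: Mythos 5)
Your overall plan—split $\mathcal{P}_{\widehat{\mathcal{S}}_l}\widetilde{\mathcal{G}}(\mathcal{I}-\hat p^{-1}\mathcal{P}_{\Omega_{l+1}})\widetilde{\mathcal{G}}^*(\mathcal{P}_{\widetilde{\bfU}}-\mathcal{P}_{\widehat{\bfU}_l})$ into $\hat m$ i.i.d.\ zero-mean rank-one perturbations $Z_a$ and apply Lemma \ref{prob}—is exactly the route the paper (via [ZHWC18, Lemma 10], which extends [CWW17, Lemma 9]) takes, and your decomposition, the computation $\widetilde{\mathcal{G}}(\bfe_k\bfe_t^H)=\widetilde{\bfH}_{k,t}$, and your $R=O(\mu c_s r/\hat m)$ bound are all correct.

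The gap is in the variance step. Writing $\bfX_a=(\mathcal{P}_{\widetilde{\bfU}}-\mathcal{P}_{\widehat{\bfU}_l})(\widetilde{\bfH}_{k_a,t_a})$ and $\bfY_a=\mathcal{P}_{\widehat{\mathcal{S}}_l}(\widetilde{\bfH}_{k_a,t_a})$, the rank-one part of $Z_a$ is $\mathcal{R}_a:\bfW\mapsto\langle\bfW,\bfX_a\rangle\bfY_a$, and $\mathcal{R}_a\mathcal{R}_a^*=\|\bfX_a\|_F^2\cdot\big(\bfW\mapsto\langle\bfW,\bfY_a\rangle\bfY_a\big)$. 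Your displayed inequality bounds the \emph{scalar} $\mathbb{E}\big[\|\bfY_a\|_F^2\|\bfX_a\|_F^2\big]$, which (after multiplying by $(n_cn)^2/\hat m^2$ and summing over $a$) yields $\sigma^2\lesssim \mu c_s r\cdot n_cn\,\|\mathcal{P}_{\widetilde{\bfU}}-\mathcal{P}_{\widehat{\bfU}_l}\|^2/\hat m$ as written, or $\sigma^2\lesssim(\mu c_s r)^2/\hat m$ if you instead max out both Frobenius factors; neither reduces to the $\Theta(\mu c_s r/\hat m)$ you need, and the former does not match your asserted conclusion. The fix is to exploit the PSD structure: since each summand is a PSD operator, you should factor out \emph{only one} almost-sure maximum and keep the other expectation as a deterministic operator, namely
\begin{equation*}
\mathbb{E}[\mathcal{R}_a\mathcal{R}_a^*]\preceq\Big(\max_{k,t}\|\bfX_{k,t}\|_F^2\Big)\cdot\mathbb{E}_{(k,t)}\big[\bfW\mapsto\langle\bfW,\bfY_{k,t}\rangle\bfY_{k,t}\big]
=\Big(\max_{k,t}\|\bfX_{k,t}\|_F^2\Big)\cdot\tfrac{1}{n_cn}\,\mathcal{P}_{\widehat{\mathcal{S}}_l}\widetilde{\mathcal{G}}\widetilde{\mathcal{G}}^*\mathcal{P}_{\widehat{\mathcal{S}}_l}\preceq\tfrac{4\mu c_s r}{(n_cn)^2}\,\mathcal{I},
\end{equation*}
with an analogous inequality for $\mathcal{R}_a^*\mathcal{R}_a$ factoring out $\max\|\bfY_a\|_F^2$ instead. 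This is precisely the cancellation that makes the final variance $\tfrac{(n_cn)^2}{\hat m^2}\cdot\hat m\cdot\tfrac{4\mu c_s r}{(n_cn)^2}=\tfrac{4\mu c_s r}{\hat m}$ and delivers the stated bound once plugged into Lemma \ref{prob}. Without this, the Bernstein tail requires $\hat m\gtrsim(\mu c_s r)^2\log n$ (or worse), which misses the lemma's $\hat m\gtrsim\mu c_s r\log n$ threshold.
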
 
\begin{lemma}[\cite{ZHWC18}, Lemma 11]
\label{t2c2}
	Let $\widetilde{{\bfL}}_l=\widetilde{{\bfU}}_l\widetilde{{\bf\Sigma}}_l\widetilde{{\bfV}}_l^*$ and $\widetilde{\mathcal{G}}{\bfY}=\widetilde{{\bfU}}\widetilde{{\bf\Sigma}}\widetilde{{\bfV}}^*$ be the SVD of $\widetilde{{\bfL}}_l$ and $\widetilde{\mathcal{G}}\bfY$. Assume 
	$$\max_{k_1}||\widetilde{{\bfU}}_{k_1*}||^2\le \frac{\mu c_s r}{n_cn} \  \text{and}\  \max_{k_2}||\widetilde{{\bfV}}_{k_2*}||^2\le 
	\frac{\mu c_s r}{n_cn}.$$	
	Suppose $\widetilde{{\bfL}}_l$ and $\widetilde{\mathcal{G}}{\bfY}$ are both rank-$r$ matrices satisfying
	\vspace{-2mm}
	\begin{equation*}
	\vspace{-1mm}
	\|\widetilde{{\bfL}}_l-\widetilde{\mathcal{G}}{\bfY}\|_F\le\frac{\sigma_{\min}(\widetilde{\mathcal{G}}{\bfY})}{10\sqrt{2}}. 
	\end{equation*}
	Then the matrix $\widetilde{{\bfL}}^{\prime}_l=\widehat{{\bfU}}_l\widehat{{\bf\Sigma}}_l\widehat{{\bfV}}_l^*$, denoting the SVD of $\widetilde{{\bfL}}_l^{\prime}$, that is obtained after trimming in RAM-FIHT satisfies 
	\vspace{-1mm}
	\begin{equation}
	\label{eqn: t2c2_1}
	\begin{split}
	\| \widetilde{{\bfL}}^{\prime}-\widetilde{\mathcal{G}}{\bfY}\|_F\le 8\kappa\|\widetilde{{\bfL}}_l-\widetilde{\mathcal{G}}{\bfY}\|_F,   
	\end{split} 
	\end{equation}
	\begin{equation}\label{eqn: t2c2_2}
	\max_{k_1,k_2}\bigg\{\|\widehat{{\bfU}}_{k_1*}\|^2,\|\widehat{{\bfV}}_{k_2*} \|^2  \bigg\}\le\frac{100\mu c_s r}{81n_cn},  
	\end{equation}
	\normalsize
	where $\kappa$ denotes the condition number of $\widetilde{\mathcal{G}}{\bfY}$.
\end{lemma}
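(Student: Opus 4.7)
The plan is to analyze the two trimming steps on $\widetilde{\bfU}_l$ and $\widetilde{\bfV}_l$ in isolation, bound the Frobenius perturbation that trimming induces on the factors, and then lift these factor-level bounds to the product $\widetilde{\bfL}_l' = \widetilde{\bfA}_l \widetilde{\bf\Sigma}_l \widetilde{\bfB}_l^*$. The key input is that the ground-truth factors $\widetilde{\bfU},\widetilde{\bfV}$ are $\mu$-incoherent while $\widetilde{\bfU}_l, \widetilde{\bfV}_l$ span subspaces close to those of $\widetilde{\bfU},\widetilde{\bfV}$ (by the hypothesis $\|\widetilde{\bfL}_l-\widetilde{\mathcal{G}}\bfY\|_F\le \sigma_{\min}/(10\sqrt{2})$), so very few rows should need to be clipped, and those that are will be clipped only slightly.

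First, I would control $\|\widetilde{\bfA}_l - \widetilde{\bfU}_l\|_F$ and $\|\widetilde{\bfB}_l - \widetilde{\bfV}_l\|_F$. For a fixed row $k_1$, the trimming at threshold $\sqrt{\mu r/(n_c n_1)}$ leaves the row untouched unless $\|(\widetilde{\bfU}_l)_{k_1*}\|$ exceeds it, in which case the incurred error per row is $\bigl(\|(\widetilde{\bfU}_l)_{k_1*}\| - \sqrt{\mu r/(n_cn_1)}\bigr)^2$. Using Lemma \ref{sub1} together with the hypothesis, the projector gap $\|\widetilde{\bfU}_l\widetilde{\bfU}_l^* - \widetilde{\bfU}\widetilde{\bfU}^*\|$ is bounded by $2\|\widetilde{\bfL}_l-\widetilde{\mathcal{G}}\bfY\|_F/\sigma_{\min} \le 1/(5\sqrt{2})$. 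Since $\|(\widetilde{\bfU})_{k_1*}\|^2 \le \mu c_s r/(n_c n)$ by the incoherence assumption on $\widetilde{\mathcal{G}}\bfY$, a Davis--Kahan / perturbation estimate on the row norms of the projectors gives $\|(\widetilde{\bfU}_l)_{k_1*}\|^2 \le (1+\tfrac{1}{9})^2 \mu r/(n_c n_1)$ up to the correct constants, after absorbing $c_s = n/n_1$. Summing the squared per-row trimming errors over $k_1$ and using that only rows already close to the threshold get clipped, I would obtain $\|\widetilde{\bfA}_l-\widetilde{\bfU}_l\|_F \le C\kappa \|\widetilde{\bfL}_l - \widetilde{\mathcal{G}}\bfY\|_F / \sigma_{1}$ for a small explicit constant $C$, and symmetrically for $\widetilde{\bfB}_l - \widetilde{\bfV}_l$.

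Next I would lift these to a bound on $\widetilde{\bfL}_l'$ via
\begin{equation*}
\|\widetilde{\bfL}_l' - \widetilde{\bfL}_l\|_F \le \|\widetilde{\bfA}_l-\widetilde{\bfU}_l\|_F\,\sigma_1(\widetilde{\bf\Sigma}_l) + \sigma_1(\widetilde{\bf\Sigma}_l)\|\widetilde{\bfB}_l-\widetilde{\bfV}_l\|_F + \|\widetilde{\bfA}_l-\widetilde{\bfU}_l\|_F\,\sigma_1(\widetilde{\bf\Sigma}_l)\|\widetilde{\bfB}_l-\widetilde{\bfV}_l\|_F,
\end{equation*}
where $\sigma_1(\widetilde{\bf\Sigma}_l) \le \sigma_1(\widetilde{\mathcal{G}}\bfY) + \|\widetilde{\bfL}_l - \widetilde{\mathcal{G}}\bfY\|_F$ by Weyl. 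Substituting the factor bounds produces $\|\widetilde{\bfL}_l' - \widetilde{\bfL}_l\|_F \le (8\kappa-1)\|\widetilde{\bfL}_l - \widetilde{\mathcal{G}}\bfY\|_F$ for suitable choice of the constants, and then \eqref{eqn: t2c2_1} follows by the triangle inequality. For \eqref{eqn: t2c2_2}, after trimming, the rows of $\widetilde{\bfA}_l$ and $\widetilde{\bfB}_l$ satisfy $\|(\widetilde{\bfA}_l)_{k_1*}\|^2 \le \mu r/(n_cn_1) \le \mu c_s r/(n_cn)$; re-orthogonalizing via SVD to obtain $\widehat{\bfU}_l,\widehat{\bfV}_l$ multiplies the row norms by $\|(\widetilde{\bfA}_l^T\widetilde{\bfA}_l)^{-1/2}\|$, whose operator norm I would show is bounded by $10/9$ using that $\|\widetilde{\bfA}_l^T\widetilde{\bfA}_l - \bfI\| \le \|\widetilde{\bfA}_l - \widetilde{\bfU}_l\|\bigl(\|\widetilde{\bfA}_l\| + \|\widetilde{\bfU}_l\|\bigr)$ is small. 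This injects exactly the $(10/9)^2 = 100/81$ multiplicative factor.

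The main obstacle will be tracking the $\kappa$-dependence through step two: the naive bound replaces $\sigma_1(\widetilde{\bf\Sigma}_l)$ by $\sigma_1(\widetilde{\mathcal{G}}\bfY)$, which introduces the $\kappa = \sigma_1/\sigma_{\min}$ factor into the final estimate, and the challenge is to show that this factor is truly unavoidable (and not an artefact of the loose row-gap estimate) while keeping the constant in front of $\kappa$ at $8$. Achieving the sharp constant $8$ will require a careful decomposition into the ``clipped'' and ``unclipped'' row sets, and bounding the clipped-set contribution using a tighter Davis--Kahan row-wise perturbation bound rather than the crude $\ell_2$ projector gap. The orthogonalization step of part two is straightforward once the factor-level bounds are in hand.
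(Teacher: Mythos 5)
Your high-level structure (trim each factor, control the factor-level Frobenius perturbation, lift to the product via a telescoping decomposition, then triangle inequality; and for the incoherence part, bound $\|(\widetilde{\bfA}_l^*\widetilde{\bfA}_l)^{-1/2}\|_2$ to account for re-orthogonalization) is sensible and matches the standard trimming analysis in this literature. The re-orthogonalization step of part two is essentially correct: $\widehat{\bfU}_l = \widetilde{\bfA}_l (\widetilde{\bfA}_l^*\widetilde{\bfA}_l)^{-1/2}\bfR$ for some orthogonal $\bfR$, so each row norm picks up at most a factor $1/\sigma_{\min}(\widetilde{\bfA}_l)$, and $\sigma_{\min}(\widetilde{\bfA}_l) \ge 1 - \|\widetilde{\bfA}_l - \widetilde{\bfU}_l\|_2 \ge 9/10$ once you have the factor-level bound, giving the $(10/9)^2 = 100/81$ factor.

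The problem is in how you get the factor-level bound $\|\widetilde{\bfA}_l - \widetilde{\bfU}_l\|_F \lesssim \|\widetilde{\bfL}_l - \widetilde{\mathcal{G}}\bfY\|_F/\sigma_{\min}$. You argue that the operator-norm projector gap controls the per-row norms: "a Davis--Kahan / perturbation estimate on the row norms of the projectors gives $\|(\widetilde{\bfU}_l)_{k_1*}\|^2 \le (1+\tfrac{1}{9})^2 \mu r/(n_cn_1)$." This does not follow. By Lemma \ref{sub1}, the hypothesis gives $\|\widetilde{\bfU}_l\widetilde{\bfU}_l^* - \widetilde{\bfU}\widetilde{\bfU}^*\|_2 \le 2\|\widetilde{\bfL}_l - \widetilde{\mathcal{G}}\bfY\|_F/\sigma_{\min} \le 1/(5\sqrt{2})$, so the per-row perturbation you can extract from that is
\begin{equation*}
\|(\widetilde{\bfU}_l)_{k_1*}\| \le \|(\widetilde{\bfU})_{k_1*}\| + \|\widetilde{\bfU}_l\widetilde{\bfU}_l^* - \widetilde{\bfU}\widetilde{\bfU}^*\|_2 \le \sqrt{\mu c_s r/(n_c n)} + \tfrac{1}{5\sqrt{2}},
\end{equation*}
and the additive term $1/(5\sqrt{2})\approx 0.14$ is a fixed constant that dwarfs the threshold $\sqrt{\mu r/(n_cn_1)} = O(n^{-1/2})$. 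So the pointwise multiplicative bound you assert is unsupported. Worse, even if it did hold, summing the per-row trimming errors over $n_cn_1$ rows would give $\|\widetilde{\bfA}_l - \widetilde{\bfU}_l\|_F^2 \le (1/81)\mu r$, a constant that does not scale with $\|\widetilde{\bfL}_l - \widetilde{\mathcal{G}}\bfY\|_F$, so you could not conclude $\|\widetilde{\bfL}_l'-\widetilde{\mathcal{G}}\bfY\|_F \le 8\kappa\|\widetilde{\bfL}_l - \widetilde{\mathcal{G}}\bfY\|_F$ when $\|\widetilde{\bfL}_l - \widetilde{\mathcal{G}}\bfY\|_F$ is small.

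The missing idea is to compare row-by-row against a Procrustes-aligned copy of $\widetilde{\bfU}$, not against the projector gap. Pick an orthogonal $\bfR\in\mathbb{C}^{r\times r}$; since $\|(\widetilde{\bfU}\bfR)_{k_1*}\| = \|(\widetilde{\bfU})_{k_1*}\|$ is at most the trimming threshold by incoherence, any clipped row satisfies
\begin{equation*}
\|(\widetilde{\bfA}_l)_{k_1*} - (\widetilde{\bfU}_l)_{k_1*}\| = \|(\widetilde{\bfU}_l)_{k_1*}\| - \text{threshold} \le \|(\widetilde{\bfU}_l)_{k_1*}\| - \|(\widetilde{\bfU}\bfR)_{k_1*}\| \le \|(\widetilde{\bfU}_l - \widetilde{\bfU}\bfR)_{k_1*}\|,
\end{equation*}
and unclipped rows contribute zero, so $\|\widetilde{\bfA}_l - \widetilde{\bfU}_l\|_F \le \min_{\bfR} \|\widetilde{\bfU}_l - \widetilde{\bfU}\bfR\|_F$. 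This quantity is controlled by a Wedin-type sin-theta bound in Frobenius norm, which is $\lesssim \|\widetilde{\bfL}_l - \widetilde{\mathcal{G}}\bfY\|_F / \sigma_{\min}(\widetilde{\mathcal{G}}\bfY)$, and then multiplying by $\sigma_1(\widetilde{\bf\Sigma}_l) \approx \sigma_1(\widetilde{\mathcal{G}}\bfY)$ in the lifted decomposition produces the $\kappa$ factor with the right scaling in $\|\widetilde{\bfL}_l - \widetilde{\mathcal{G}}\bfY\|_F$. You gesture at "a tighter Davis--Kahan row-wise perturbation bound" in your closing paragraph, but the argument as written relies on the crude projector gap and would not close.
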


\begin{proof}[Proof of Theorem \ref{t2}]
	First, we show the following inequality holds with high probability by mathematical induction.
	\begin{equation}\label{3.1}
	\|\widetilde{{\bfL}}_k-\widetilde{\mathcal{G}}{\bfY}\|_F\le\frac{\varepsilon_0\sigma_{\min}(\widetilde{\mathcal{G}}{\bfY})}{128\kappa^2}. 
	\end{equation}
	{\bf Inductive Step:} Suppose (\ref{3.1}) holds when $k=l$ and $l\ge0$. Then \eqref{eqn: t2c2_2} in Lemma \ref{t2c2} holds.  
	Further, we can conclude
	\small
	\begin{equation}\label{3.3}
	\left\|\mathcal{P}_{\widehat{{\bfU}}_l}\widetilde{{\bfH}}_{k,t} \right\|_F^2 \le \frac{100\mu c_s r}{81n_cn}\ 
	\normalsize \text{and}  
	\ \small \left\|\mathcal{P}_{\widehat{{\bfV}}_l}\widetilde{{\bfH}}_{k,t} \right\|_F^2 \le \frac{100\mu c_s r}{81n_cn}.
	\end{equation}
	\normalsize
	Recall that ${\bfY}=\widetilde{\mathcal{D}}{\bfX}$ and $\widetilde{\mathcal{G}}{\bfY}=\widetilde{\mathcal{H}}{\bf X}$. Define $\widehat{{\bfY}}_l=\widetilde{\mathcal{D}}\widehat{{\bfX}}_l$. Since the measurements are noiseless, then $\bfM=\bfX$ and
	\small
	\begin{equation*}
	\begin{split}
	\widetilde{\mathcal{H}}(\widehat{{\bfX}}_l+\widehat{p}^{-1}\widetilde{\mathcal{P}}_{\Omega_{l+1}}({\bfX}-\widehat{{\bfX}}_l)) =
	\widetilde{\mathcal{G}}(\widehat{{\bfY}}_l+\widehat{p}^{-1}\widetilde{\mathcal{P}}_{\Omega_{l+1}}({\bfY}-\widehat{{\bfY}}_l)).
	\end{split}
	\end{equation*}
	\normalsize
	Then,
	\small
	\begin{equation}\label{I_567}
	\begin{split}
	&\left\|\widetilde{{\bfL}}_{l+1}-\widetilde{\mathcal{G}}{\bfY} \right\|_F\\
	\le& 2\left\|\mathcal{P}_{\widehat{\mathcal{S}}_l}\widetilde{\mathcal{G}}(\widehat{{\bfY}}_l+\widehat{p}^{-1}{\mathcal{P}}_{\Omega_{l+1}}({\bfY}-\widehat{{\bfY}}_l))-\widetilde{\mathcal{G}}{\bfY} \right\|_F \\
	\le& 2\left\|\mathcal{P}_{\widehat{\mathcal{S}}_l}\widetilde{\mathcal{G}}{\bfY}-\widetilde{\mathcal{G}}{\bfY} \right\|_F
	+2\left\|\big(\mathcal{P}_{\widehat{\mathcal{S}}_l}\widetilde{\mathcal{G}}-\widehat{p}^{-1}\mathcal{P}_{\widehat{\mathcal{S}}_l}\widetilde{\mathcal{G}}{\mathcal{P}}_{\Omega_{l+1}}\big)({\bfY}-\widehat{{\bfY}}_l) \right\|_F  \\
	=&2\left\|\big({\mathcal{I}}-\mathcal{P}_{\widehat{\mathcal{S}}_l}\big)\widetilde{\mathcal{G}}{\bfY} \right\|_F
	+2\left\|\big(\mathcal{P}_{\widehat{\mathcal{S}}_l}\widetilde{\mathcal{G}}\widetilde{\mathcal{G}}^*-\widehat{p}^{-1}\mathcal{P}_{\widehat{\mathcal{S}}_l}\widetilde{\mathcal{G}}{\mathcal{P}}_{\Omega_{l+1}}\widetilde{\mathcal{G}}^*\big)(\widetilde{{\bfL}}^{\prime}_l-\widetilde{\mathcal{G}}{\bfY}) \right\|_F  \\
	\le&2\left\|\big({\mathcal{I}}-\mathcal{P}_{\widehat{\mathcal{S}}_l}\big)(\widetilde{{\bfL}}^{\prime}_l-\widetilde{\mathcal{G}}{\bfY}) \right\|_F
	+2\left\|\big(\mathcal{P}_{\widehat{\mathcal{S}}_l}\widetilde{\mathcal{G}}\widetilde{\mathcal{G}}^*\mathcal{P}_{\widehat{\mathcal{S}}_l}-\widehat{p}^{-1}\mathcal{P}_{\widehat{\mathcal{S}}_l}\widetilde{\mathcal{G}}{\mathcal{P}}_{\Omega_{l+1}}\widetilde{\mathcal{G}}^*\mathcal{P}_{\widehat{\mathcal{S}}_l}\big)(\widetilde{{\bfL}}^{\prime}_l-\widetilde{\mathcal{G}}{\bfY})  \right\|_F\\
	&+2\left\|\mathcal{P}_{\widehat{\mathcal{S}}_l}\widetilde{\mathcal{G}}({\mathcal{I}}-\widehat{p}^{-1}{\mathcal{P}}_{\Omega_{l+1}})\widetilde{\mathcal{G}}^*({\mathcal{I}}-\mathcal{P}_{\widehat{\mathcal{S}}_l})(\widetilde{{\bfL}}^{\prime}_l-\widetilde{\mathcal{G}}{\bfY})\right\|_F\\ 
	:=&I_5+I_6+I_7.
	\end{split}
	\end{equation}
	where the first inequality comes from (\ref{W_L}).\\
	With (\ref{eqn: t2c2_1}) and (\ref{3.1}), $I_5$ can be bounded as
	\vspace{-1mm}
	\begin{equation}\label{eqn:I_5}
	\vspace{-1mm}
	I_5\le\frac{2\|\widetilde{{\bfL}}^{\prime}_l-\widetilde{\mathcal{G}}{\bfY} \|_F^2 }{\sigma_{\min}(\widetilde{\mathcal{G}}{\bfY})}\le \varepsilon_0\|\widetilde{{\bfL}}_l-\widetilde{\mathcal{G}}{\bfY} \|_F,
	\end{equation}
	As for the item $I_6$, Lemma \ref{condition2} along with (\ref{eqn: t2c2_1}) suggests
	\begin{equation}\label{eqn:I_6}
	\begin{split}
	I_6
	&\le 2\sqrt{\frac{3200\mu c_s r\log(n)}{81\widehat{m}}}\| \widetilde{{\bfL}}^{\prime}_l-\widetilde{\mathcal{G}}{\bfY}\|_F
	\le 16\kappa\sqrt{\frac{3200\mu c_s r\log(n)}{81\widehat{m}}}\| \widetilde{{\bfL}}_l-\widetilde{\mathcal{G}}{\bfY}\|_F 
	\end{split}
	\end{equation}
	with probability at least $1-n_cn^{-2}$. 
	To bound $I_7$,
	\small	
	\begin{equation*}
	\begin{split}
	&\Big({\mathcal{I}}-\mathcal{P}_{\widehat{\mathcal{S}}_l}\Big)\Big(\widetilde{{\bfL}}^{\prime}_l-\widetilde{\mathcal{G}}{\bfY}\Big)
	=\Big({\bfI}-\widehat{{\bfU}}_l\widehat{{\bfU}}_l^*\Big)(-\widetilde{\mathcal{G}}{\bfY})\Big({\bfI}-\widehat{{\bfV}}_l\widehat{{\bfV}}_l^*\Big)\\
	=&\Big(\widetilde{{\bfU}}\widetilde{{\bfU}}^*-\widehat{{\bfU}}_l\widehat{{\bfU}}_l^*\Big)\Big(\widetilde{{\bfL}}^{\prime}_l-\mathcal{\widetilde{G}}{\bfY}\Big)\Big({\bfI}-\widehat{{\bfV}}_l\widehat{{\bfV}}_l^*\Big)\\
	=&\Big(\mathcal{P}_{\widetilde{{\bfU}}}-\mathcal{P}_{\widehat{{\bfU}}_l}\Big)({\mathcal{I}}-\mathcal{P}_{\widetilde{{\bfV}}})\Big(\widetilde{{\bfL}}^{\prime}_l-\mathcal{\widetilde{G}}{\bfY}\Big).
	\end{split}
	\end{equation*}
	\normalsize
Hence, by {Lemma \ref{t2c1}}, with probability at least $1-n_cn^{-2}$,
\footnotesize
\begin{equation}\label{eqn:I_7}
	\begin{split}
	I_7
	&=2\left\|\mathcal{P}_{\widehat{\mathcal{S}}_l}\widetilde{\mathcal{G}}({\mathcal{I}}-\widehat{p}^{-1}{\mathcal{P}}_{\Omega_{l+1}})\widetilde{\mathcal{G}}^* \Big(\mathcal{P}_{\widetilde{{\bfU}}}-\mathcal{P}_{\widehat{{\bfU}}_l}\Big)({\mathcal{I}}-\mathcal{P}_{\widetilde{{\bfV}}})\Big(\widetilde{{\bfL}}^{\prime}_l-\mathcal{\widetilde{G}}{\bfY}\Big)\right\|_F \\
	&\le 2\left\|\mathcal{P}_{\widehat{\mathcal{S}}_l}\widetilde{\mathcal{G}}({\mathcal{I}}-\widehat{p}^{-1}{\mathcal{P}}_{\Omega_{l+1}})\widetilde{\mathcal{G}}^* \Big(\mathcal{P}_{\widetilde{{\bfU}}}-\mathcal{P}_{\widehat{{\bfU}}_l}\Big)\right\| \left\|\widetilde{{\bfL}}^{\prime}_l-\mathcal{\widetilde{G}}{\bfY}\right\|_F\\
	&\le 16\kappa \sqrt{\frac{16000\mu c_s r \log (n)}{81\widehat{m}}}\left\|\widetilde{{\bfL}}^{\prime}_l-\mathcal{\widetilde{G}}{\bfY}\right\|_F.
	\end{split}
\end{equation}
\normalsize
Therefore, if $\widehat{m}\ge C_4\varepsilon_0^{-2}\mu c_s\kappa^2r\log(n)$ for some constant $C_4$,
\begin{equation*}
\begin{split}
I_6+I_7\le &326\kappa\sqrt{\frac{\mu c_s r \log (n)}{\widehat{m}}}\|\widetilde{{\bfL}}_l-\widetilde{\mathcal{G}}{\bfY} \|_F
	   \le \varepsilon_0\|\widetilde{{\bfL}}_l-\widetilde{\mathcal{G}}{\bfY} \|_F.
\end{split}
\end{equation*}
\normalsize
Putting pieces together gives 
\vspace{-2mm}
\begin{equation}
\vspace{-1mm}
	\begin{split}
	\| \widetilde{{\bfL}}_{l+1}-\widetilde{\mathcal{G}}{\bfY}\|_F
	&\le 2\varepsilon_0\|\widetilde{{\bfL}}_l-\widetilde{\mathcal{G}}{\bfY} \|_F 
	\end{split}
\end{equation}
	with probability at least $1-2n_cn^{-2}$. Hence, (\ref{3.1}) also holds when $k={l+1}$.\\
	{\bf Base Case:}
	Since $\widetilde{{\bfL}}_0=\mathcal{Q}_r\big(\widehat{p}^{-1}\widetilde{\mathcal{H}}{\mathcal{P}}_{\Omega_0}({\bfX})\big)$, we can follow the same idea in the proof of base case in Theorem \ref{t1}. Thus, when $k=0$, (\ref{3.1}) is valid with probability at least $1-n_cn^{-2}$ provided $\widehat{m}\ge C_5 \varepsilon_0^{-2}\mu c_s \kappa^6 r^2 \log(n)$
	for some constant $C_5$. 	
	
	Let $C_2=\max\{C_4,C_5\}$. If 	$\widehat{m}\ge C_2 \varepsilon_0^{-2}\mu c_s \kappa^6 r^2 \log(n)$, then for each $l\ge0$, we have
	\begin{equation*}
		\begin{split}
		\| \widetilde{{\bfL}}_{l+1}-\widetilde{\mathcal{G}}{\bfY}\|_F\le 2\varepsilon_0\|\widetilde{{\bfL}}_l-\widetilde{\mathcal{G}}{\bfY} \|_F. 
		\end{split}
	\end{equation*}
	with probability at least $1-2n_cn^{-2}$.
	Directly the following inequality is obtained with probability $1-(2L+1)n_cn^{-2}$,
	\begin{equation}
		\begin{split}
		\| \widetilde{{\bfL}}_L-\widetilde{\mathcal{G}}{\bfY}\|_F\le \nu^{L}\|\widetilde{{\bfL}}_0-\widetilde{\mathcal{G}}{\bfY} \|_F\le\nu^L\frac{\varepsilon_0\sigma_{\min}(\widetilde{\mathcal{G}}{\bfY})}{128\kappa^2}. 
		\end{split}
	\end{equation}
	If we take $L=\Big\lceil \varepsilon_0^{-1}\log \Big(\frac{\sigma_{\max}(\mathcal{H}{\bfX})}{128\kappa^3\varepsilon}\Big)\Big\rceil$ with an arbitrarily small positive constant $\varepsilon$,
	since $\sigma_{\max}(\widetilde{\mathcal{G}}{\bfY})=\sqrt{n_c}\sigma_{\max}(\mathcal{H}{\bfX})$,
	\begin{equation}\label{t2.end}
		 \| \widetilde{{\bfL}}_L-\widetilde{\mathcal{G}}{\bfY}\|_F\le n_c^{1/2}\varepsilon,
	\end{equation}
	which completes the proof of Theorem \ref{t2}.
\end{proof}

\section{Proof of Theorem \ref{t3}}\label{sec: c1p4}
The proof of Theorem \ref{t3} is similar to {Theorem \ref{t2}}, except some modification to handle the noise matrix $\bfN$. We first present the following lemma that will be useful in the proof.
\vspace{-1mm}
\begin{lemma}[\cite{ZHWC18}, Lemma 12]
\label{condition3.1}
	Suppose $m\ge 16\log(n)$, then
	\vspace{-2mm}
	\begin{equation*}
	\left\|p^{-1}\mathcal{\widetilde{H}}{\mathcal{P}}_{\Omega}({\bfN})-\widetilde{\mathcal{H}}{\bfN} \right\|\le \displaystyle \sqrt{\frac{16\log(n)}{m}}n_cn\left\| \widetilde{\mathcal{H}}{\bfN} \right\|_{\infty}
	\end{equation*} 
	with probability at least $1-n_cn^{-2}$.
\end{lemma}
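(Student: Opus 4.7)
\textbf{Proof plan for Lemma \ref{condition3.1}.} Because the sampling model is with replacement, I will write
\[
p^{-1}\widetilde{\mathcal{H}}\mathcal{P}_{\Omega}(\bfN)-\widetilde{\mathcal{H}}\bfN \;=\; \sum_{a=1}^{m}\bfZ_a,
\]
where $\bfZ_a := p^{-1}N_{k_a,t_a}\sqrt{n_c w_{t_a}}\,\widetilde{\bfH}_{k_a,t_a}-\tfrac{1}{m}\widetilde{\mathcal{H}}\bfN$, and the indices $(k_a,t_a)$ are i.i.d.\ uniform on $\{1,\dots,n_c\}\times\{1,\dots,n\}$. This representation follows from the identity $\widetilde{\mathcal{H}}(\bfe_k\bfe_t^H)=\sqrt{n_c w_t}\,\widetilde{\bfH}_{k,t}$ and from the fact that $p=m/(n_c n)$. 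A short direct calculation using $\frac{1}{n_c n}\sum_{k,t}N_{k,t}\sqrt{n_c w_t}\,\widetilde{\bfH}_{k,t}=\frac{1}{n_c n}\widetilde{\mathcal{H}}\bfN$ shows $\mathbb{E}[\bfZ_a]=0$, so the sum is a zero-mean martingale amenable to the matrix Bernstein bound in Lemma \ref{prob}.

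The plan is then to produce the two parameters $R$ and $\sigma^2$ required by Lemma \ref{prob}. For the uniform norm bound $R$, I will use $|N_{k_a,t_a}|\le\|\widetilde{\mathcal{H}}\bfN\|_\infty$, $\|\widetilde{\bfH}_{k_a,t_a}\|\le\|\widetilde{\bfH}_{k_a,t_a}\|_F=1$, and $w_{t_a}\le n$, together with the easy estimate $\|\widetilde{\mathcal{H}}\bfN\|\le n_c\sqrt{n_1 n_2}\,\|\widetilde{\mathcal{H}}\bfN\|_\infty$, to obtain $R=\Theta(p^{-1}\sqrt{n_c n}\,\|\widetilde{\mathcal{H}}\bfN\|_\infty)$. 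For the variance, since $\widetilde{\bfH}_{k,t}\widetilde{\bfH}_{k,t}^H$ is supported on $n_c w_t$ entries of size $1/(n_c w_t)$ along one anti-diagonal block, summing over the uniform distribution and over $a=1,\dots,m$ yields $\bigl\|\sum_a\mathbb{E}[\bfZ_a\bfZ_a^H]\bigr\|$ and $\bigl\|\sum_a\mathbb{E}[\bfZ_a^H\bfZ_a]\bigr\|$ of order $p^{-1}n_c n\,\|\widetilde{\mathcal{H}}\bfN\|_\infty^{2}$, so $\sigma^2\lesssim p^{-1}n_c n\,\|\widetilde{\mathcal{H}}\bfN\|_\infty^{2}$.

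Setting $t=\sqrt{16\log(n)/m}\cdot n_c n\,\|\widetilde{\mathcal{H}}\bfN\|_\infty$ in Lemma \ref{prob} and verifying the regime $t\le\sigma^2/R$ under the hypothesis $m\ge 16\log(n)$, the tail bound collapses to the simpler form $(d_1+d_2)\exp(-\tfrac{3}{8}t^{2}/\sigma^{2})$. Plugging in the values of $t$ and $\sigma^2$ and using $d_1+d_2\le 2n_c n$, the resulting probability is at most $n_c n^{-2}$, which gives the claim.

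The main obstacle I anticipate is tracking the variance parameter $\sigma^2$ tightly: the two second-moment expressions $\sum_a\mathbb{E}[\bfZ_a\bfZ_a^H]$ and $\sum_a\mathbb{E}[\bfZ_a^H\bfZ_a]$ mix block-Hankel indices with anti-diagonal weights $w_t$, and a naive bound (e.g.\ replacing every $\widetilde{\bfH}_{k,t}\widetilde{\bfH}_{k,t}^H$ by its Frobenius norm squared) is too loose and would produce an extra factor of $n$. I expect to avoid this loss by explicitly identifying the support structure: each $\widetilde{\bfH}_{k,t}\widetilde{\bfH}_{k,t}^H$ contributes to one anti-diagonal block of $\widetilde{\mathcal{H}}\widetilde{\mathcal{H}}^*$ with operator norm exactly $1$, so the sum over $(k,t)$ telescopes into a diagonal operator of controlled norm. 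Carrying out this bookkeeping carefully is the only nontrivial step; once it is in place, the matrix Bernstein application is routine.
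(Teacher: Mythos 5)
Your overall strategy—decomposing $p^{-1}\widetilde{\mathcal{H}}\mathcal{P}_\Omega(\bfN)-\widetilde{\mathcal{H}}\bfN$ into a sum of $m$ i.i.d.\ zero-mean random matrices and invoking the matrix Bernstein inequality of Lemma~\ref{prob}—is the right one, and your computation of $\mathbb{E}[\bfZ_a]=0$ is correct. However, there is a concrete gap in the bound on the almost-sure parameter $R$, and it is fatal to the stated hypothesis $m\ge 16\log n$.

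You bound $\|p^{-1}N_{k_a,t_a}\sqrt{n_c w_{t_a}}\,\widetilde{\bfH}_{k_a,t_a}\|$ by using the Frobenius estimate $\|\widetilde{\bfH}_{k_a,t_a}\|\le\|\widetilde{\bfH}_{k_a,t_a}\|_F=1$ together with $w_{t_a}\le n$, arriving at $R=\Theta\bigl(p^{-1}\sqrt{n_c n}\,\|\widetilde{\mathcal{H}}\bfN\|_\infty\bigr)$. This is too loose by a factor of $\sqrt{n}$: the matrix $\widetilde{\bfH}_{k,t}=\tfrac{1}{\sqrt{n_c w_t}}\widetilde{\mathcal{H}}(\bfe_k\bfe_t^H)$ has at most one nonzero entry per row and per column in each of the $n_c$ blocks of $\mathcal{H}(\bfe_k\bfe_t^H)$, so its \emph{spectral} norm is exactly $1/\sqrt{w_t}$, not $1$. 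Using this, the factor $\sqrt{n_c w_{t_a}}\cdot\|\widetilde{\bfH}_{k_a,t_a}\|_2=\sqrt{n_c}$ is independent of $w_{t_a}$, and the correct order is $R=\Theta\bigl(p^{-1}\sqrt{n_c}\,\|\bfN\|_\infty\bigr)$. With your cruder $R$ and your $\sigma^2\lesssim p^{-1}n_c n\|\bfN\|_\infty^2$, the verification step $t\le\sigma^2/R$ at $t=\sqrt{16\log n/m}\,n_c n\|\bfN\|_\infty$ reduces to $m\ge 16 n_c n\log n$, which is much stronger than the assumed $m\ge 16\log n$; the simplified Bernstein tail $\exp(-\tfrac{3}{8}t^2/\sigma^2)$ is therefore not applicable as you have set it up. The spectral-norm identity $\|\widetilde{\bfH}_{k,t}\|_2=1/\sqrt{w_t}$ is precisely the structural fact that rescues the argument, and you should use it both in $R$ and in tightening $\sigma^2$ (your heuristic in the last paragraph that each $\widetilde{\bfH}_{k,t}\widetilde{\bfH}_{k,t}^H$ has operator norm exactly $1$ has the same flaw—it is $1/w_t$). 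Once those two parameters are recomputed carefully, the rest of the plan is routine.
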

\begin{proof}[Proof of Theorem \ref{t3}]
	For the noisy case where $\bfM=\bfX+\bfN$, we have assumed $ \left\|\bfN \right\|_{\infty}\le\displaystyle
	\frac{\varepsilon_0\left\|\mathcal{H} \bfX\right\|}{2048\kappa^3r^{1/2}n_c^{1/2}n}$. Recall \eqref{y_l}, define $\bfS=\widetilde{\mathcal{D}}\bfN$, then  
	\vspace{-2mm}
	\begin{equation*}
	\begin{split}
	   \| \widetilde{\mathcal{G}}\bfS\|_{\infty}
	 =&\| \widetilde{\mathcal{H}}\bfN\|_{\infty}=\left\| \bfN\right\|_{\infty}
	\le\frac{\varepsilon_0 \sigma_{\min}(\widetilde{\mathcal{G}}\bfY)}{2048\kappa^2r^{1/2}n_cn}.
	\end{split}
	\end{equation*}
	Similar to the derivation of (\ref{W_L}), we have
	\small
	\begin{equation*}
	\begin{split}
		   &\|\widetilde{{\bfL}}_{l+1}-\widetilde{\mathcal{G}}{\bfY} \|
		\le 2\left\|\mathcal{P}_{\widehat{\mathcal{S}}_l}\widetilde{\mathcal{G}}(\widehat{{\bfY}}_l+\widehat{p}^{-1}{\mathcal{P}}_{\Omega_{l+1}}({\bfY}+\bfS-\widehat{{\bfY}}_l))-\widetilde{\mathcal{G}}{\bfY} \right\|\\
		\le& 2\left\|\mathcal{P}_{\widehat{\mathcal{S}}_l}\widetilde{\mathcal{G}}(\widehat{{\bfY}}_l+\widehat{p}^{-1}{\mathcal{P}}_{\Omega_{l+1}}({\bfY}-\widehat{{\bfY}}_l))-\widetilde{\mathcal{G}}{\bfY} \right\|\\
		   &+2\|\widehat{p}^{-1}\mathcal{P}_{\widehat{\mathcal{S}}_l}\widetilde{\mathcal{G}} {\mathcal{P}}_{\Omega_{l+1}}(\bfS)\|. 
	\end{split}
	\end{equation*}
	\begin{equation*}
		\begin{split}
		&\|\widetilde{{\bfL}}_{l+1}-\widetilde{\mathcal{G}}{\bfY} \|_F\le\sqrt{2r}\|\widetilde{{\bfL}}_{l+1}-\widetilde{\mathcal{G}}{\bfY} \|\\
		\le& 2\sqrt{2r}\left\|\mathcal{P}_{\widehat{\mathcal{S}}_l}\widetilde{\mathcal{G}}(\widehat{{\bfY}}_l+\widehat{p}^{-1}{\mathcal{P}}_{\Omega_{l+1}}({\bfY}+\bfS-\widehat{{\bfY}}_l))-\widetilde{\mathcal{G}}{\bfY} \right\|\\
		\le& 2\sqrt{2r}\left\|\mathcal{P}_{\widehat{\mathcal{S}}_l}\widetilde{\mathcal{G}}(\widehat{{\bfY}}_l+\widehat{p}^{-1}{\mathcal{P}}_{\Omega_{l+1}}({\bfY}-\widehat{{\bfY}}_l))-\widetilde{\mathcal{G}}{\bfY} \right\|\\
		&+2\sqrt{2r}\|\widehat{p}^{-1}\mathcal{P}_{\widehat{\mathcal{S}}_l}\widetilde{\mathcal{G}} {\mathcal{P}}_{\Omega_{l+1}}(\bfS)\|\\
		\le	&2\sqrt{2r}\left\|\big({\mathcal{I}}-\mathcal{P}_{\widehat{\mathcal{S}}_l}\big)(\widetilde{{\bfL}}^{\prime}_l-\widetilde{\mathcal{G}}{\bfY}) \right\|_F\\
		&+2\sqrt{2r}\left\|\big(\mathcal{P}_{\widehat{\mathcal{S}}_l}\widetilde{\mathcal{G}}\widetilde{\mathcal{G}}^*\mathcal{P}_{\widehat{\mathcal{S}}_l}-\widehat{p}^{-1}\mathcal{P}_{\widehat{\mathcal{S}}_l}\widetilde{\mathcal{G}}{\mathcal{P}}_{\Omega_{l+1}}\widetilde{\mathcal{G}}^*\mathcal{P}_{\widehat{\mathcal{S}}_l}\big)(\widetilde{{\bfL}}^{\prime}_l-\widetilde{\mathcal{G}}{\bfY})  \right\|_F\\
		&+2\sqrt{2r}\left\|\mathcal{P}_{\widehat{\mathcal{S}}_l}\widetilde{\mathcal{G}}({\mathcal{I}}-\widehat{p}^{-1}{\mathcal{P}}_{\Omega_{l+1}})\widetilde{\mathcal{G}}^*({\mathcal{I}}-\mathcal{P}_{\widehat{\mathcal{S}}_l})(\widetilde{{\bfL}}^{\prime}_l-\widetilde{\mathcal{G}}{\bfY})\right\|_F\\		
		&+2\sqrt{2r}\|\widehat{p}^{-1}\mathcal{P}_{\widehat{\mathcal{S}}_l}\widetilde{\mathcal{G}} {\mathcal{P}}_{\Omega_{l+1}}(\bfS)\|
		\end{split}
		\end{equation*}	
		\normalsize
		\vspace{-4.5mm}
		\begin{flalign}\label{t4.I_5}
		:=\sqrt{2r}(I_5+I_6+I_7)+I_9, &&
		\end{flalign}	
	where $I_5+I_6+I_7$ has been defined in \eqref{I_567}.\\
	Similar to the proof of {Theorem \ref{t2}}, we show that the following inequality holds with high probability by induction.
	\vspace{-2mm}
		\begin{equation}\label{4.1}
		\|\widetilde{{\bfL}}_k-\widetilde{\mathcal{G}}{\bfY}\|_F\le\frac{\varepsilon_0\sigma_{\min}(\widetilde{\mathcal{G}}{\bfY})}{128\sqrt{2}\kappa^2r^{1/2}}. 
		\end{equation}
		\normalsize	
		{\bf Inductive Step:} Suppose (\ref{4.1}) holds when $k=l$ and $l\ge0$. 
		By {Lemma \ref{sub1} } and (\ref{3.1}), we have
		\begin{equation*}
		\begin{split}
		\sqrt{2r}I_5
		\le&\frac{2\sqrt{2r}\|\widetilde{{\bfL}}^{\prime}_l-\widetilde{\mathcal{G}}{\bfY} \|_F^2 }{\sigma_{\min}(\widetilde{\mathcal{G}}{\bfY})}
		\le \frac{128\sqrt{2}\kappa^2\sqrt{r}\|\widetilde{{\bfL}}_l-\widetilde{\mathcal{G}}{\bfY} \|_F^2 }{\sigma_{\min}(\widetilde{\mathcal{G}}{\bfY})}\\
		\le& \varepsilon_0\|\widetilde{{\bfL}}_l-\widetilde{\mathcal{G}}{\bfY} \|_F.
		\end{split}
		\end{equation*}
		\normalsize
		Similar to $I_6$ and $I_7$,
		\vspace{-2mm}
		\begin{equation*}
		\begin{split}
		\sqrt{2r}(I_6+I_7)
		\le &326\kappa\sqrt{\frac{2\mu c_s r^2 \log (n)}{\widehat{m}}}\|\widetilde{{\bfL}}_l-\widetilde{\mathcal{G}}{\bfY} \|_F.
		\end{split}
		\end{equation*}
		Hence, if $\widehat{m}\ge C_6\varepsilon_0^{-2}\mu c_s\kappa^2r^2\log(n)$ for some constant $C_6$, 
		\begin{equation*}
		\begin{split}
		\sqrt{2r}(I_5+I_6+I_7)&\le 2\varepsilon_0\|\widetilde{{\bfL}}_l-\widetilde{\mathcal{G}}{\bfY} \|_F 
		\end{split}
		\end{equation*}
		with probability at least $1-2n_cn^{-2}$.
		On the other hand, if $m\ge 256r\log(n)$, then with probability at least $1-n_cn^{-2}$
		\vspace{-1mm}
		\begin{equation}\label{I_9}
		\begin{split}
		\vspace{-3mm}
		I_9\le&2\sqrt{2r}\|\widehat{p}^{-1}\widetilde{\mathcal{G}} {\mathcal{P}}_{\Omega_{l+1}}(\bfS)\|\\
		\le&2\sqrt{2r}\|\widehat{p}^{-1}\widetilde{\mathcal{G}} {\mathcal{P}}_{\Omega_{l+1}}(\bfS)-\widetilde{\mathcal{G}}\bfS\|+2\sqrt{2r}\| \widetilde{\mathcal{G}}\bfS\| \\
		\le&8\sqrt{2}\sqrt{\frac{r\log(n)}{m}}n_cn\|\widetilde{\mathcal{G}}\bfS \|_{\infty}+2\sqrt{2r}n_cn\|\widetilde{\mathcal{G}}\bfS \|_{\infty}\\
		\le&\frac{1}{16\sqrt{2}}\sqrt{\frac{r\log(n)}{m}}\frac{\varepsilon_0\sigma_{\min}(\widetilde{\mathcal{G}}\bfY)}{\kappa^2r^{1/2}}+\frac{\varepsilon_0\sigma_{\min}(\widetilde{\mathcal{G}}{\bfY})}{512\sqrt{2}\kappa^2r^{1/2}}\\
		\le&\frac{\varepsilon_0\sigma_{\min}(\widetilde{\mathcal{G}}{\bfY})}{256\sqrt{2}\kappa^2r^{1/2}},
		\end{split}
		\end{equation}
		\normalsize
		where the second last inequality comes from {Lemma \ref{condition3.1}}.\\
		Following  $\nu=2\varepsilon_0\le1/2$ and (\ref{4.1}), with probability at least $1-3n_cn^{-2}$, we can bound $\|\widetilde{{\bfL}}_{l+1}-\widetilde{\mathcal{G}}{\bfY} \|_F$ by
		\begin{equation*}
		\begin{split}
		\frac{1}{2}\|\widetilde{{\bfL}}_{l}-\widetilde{\mathcal{G}}{\bfY} \|_F+\frac{\varepsilon_0\sigma_{\min}(\widetilde{\mathcal{G}}{\bfY})}{256\sqrt{2}\kappa^2r^{1/2}}
		\le\frac{\varepsilon_0\sigma_{\min}(\widetilde{\mathcal{G}}{\bfY})}{128\sqrt{2}\kappa^2r^{1/2}}.
		\end{split}
		\end{equation*}
		Hence, (\ref{4.1}) also holds when $k={l+1}$.
		
\noindent{\bf Base Case:}
		Since $\widetilde{{\bfL}}_0=\mathcal{Q}_r\big(\widehat{p}^{-1}\widetilde{\mathcal{H}}{\mathcal{P}}_{\Omega_0}({\bfX+\bfN})\big)$, then with probability at least $1-n_cn^{-2}$,
		\small
		\begin{equation*}		
		\begin{split}
		&\|\widetilde{{\bfL}}_0-\mathcal{\widetilde{G}}{\bfY}\|_F\le\sqrt{2r}\|\widetilde{{\bfL}}_0-\mathcal{\widetilde{G}}{\bfY}\|\\
		\le&\sqrt{2r}\left\|p^{-1}\mathcal{\widetilde{G}}\mathcal{P}_{\Omega}({\bfY}+\bfS)-\widetilde{{\bfL}}_0\right\|
		+\sqrt{2r}\left\|p^{-1}\mathcal{\widetilde{G}}\mathcal{P}_{\Omega}({\bfY}+\bfS)-\mathcal{\widetilde{G}}{\bfY}\right\|\\
		\le& 2\sqrt{2r}\left\|p^{-1}\mathcal{\widetilde{G}}\mathcal{P}_{\Omega}({\bfY})-\mathcal{\widetilde{G}}{\bfY}\right\|+2\sqrt{2r}\left\| p^{-1}\mathcal{\widetilde{G}}\mathcal{P}_{\Omega}({\bfS})\right\|\\
		\le&\sqrt{\frac{512\mu c_sr^2\log(n)}{m}}\|\mathcal{\widetilde{G}}{\bfY}\|+\frac{\varepsilon_0\sigma_{\min}(\widetilde{\mathcal{G}}{\bfY})}{256\sqrt{2}\kappa^2\sqrt{r}}.
		\end{split}		
		\end{equation*}
		\normalsize
		where the last inequality comes from $(\ref{I_9})$ and {Lemma \ref{condition3}}.
		To guarantee that (\ref{4.1}) holds with $k=0$, we need 
		\begin{equation}
		\sqrt{\frac{512\mu c_sr^2\log(n)}{m}}\|\mathcal{\widetilde{G}}{\bfY}\|\le\frac{\varepsilon_0\sigma_{\min}(\widetilde{\mathcal{G}}{\bfY})}{256\sqrt{2}\kappa^2\sqrt{r}}.
		\end{equation}
		That is 
		$\widehat{m}\ge C_7 \varepsilon_0^{-2}\mu c_s \kappa^6 r^3 \log(n)$ for some constant $C_7$.	
			
		Let $C_3=\max\{C_6,C_7\}$, if $\widehat{m}\ge C_3 \varepsilon_0^{-2}\mu c_s \kappa^6 r^3 \log(n)$, for each $l\ge0$, with probability $1-2n_cn^{-2}$, we have
		\vspace{-1mm}
		\begin{equation}
		\vspace{-1mm}
		\begin{split}
		\| \widetilde{{\bfL}}_{l+1}-\widetilde{\mathcal{G}}{\bfY}\|_F\le &2\varepsilon_0\|\widetilde{{\bfL}}_l-\widetilde{\mathcal{G}}{\bfY} \|_F+\Delta. 
		\end{split}
		\end{equation}
		where $\Delta=32\sqrt{2}n_cn\|\widetilde{\mathcal{G}}\bfS \|_{\infty}+2\sqrt{2}r^{1/2}\|\widetilde{\mathcal{G}}\bfS \|$. Then
		\begin{equation*}
		\begin{split}
		\| \widetilde{{\bfL}}_{l+1}-\widetilde{\mathcal{G}}{\bfY}\|_F-\frac{\Delta}{1-\nu}\le \nu\Big(\|\widetilde{{\bfL}}_l-\widetilde{\mathcal{G}}{\bfY} \|_F-\frac{\Delta}{1-\nu}\Big). 
		\end{split}
		\end{equation*}
		\normalsize
		Therefore, with probability $1-(3L+1)n_cn^{-2}$, 
		\begin{equation}
		\begin{split}
		\| \widetilde{{\bfL}}_L-\widetilde{\mathcal{G}}{\bfY}\|_F&\le \nu^{L}\|\widetilde{{\bfL}}_0-\widetilde{\mathcal{G}}{\bfY}\|_F+\frac{\Delta}{1-\nu}.
		\end{split}
		\end{equation}
		Similar to \eqref{t2.end}, take $L=\Big\lceil \varepsilon_0^{-1}\log \Big(\frac{\sigma_{\max}(\mathcal{H}{\bfX})}{128\kappa^3\varepsilon}\Big)\Big\rceil$ with an arbitrarily small positive constant $\varepsilon$,
		since $\nu\le1/2$,
		\begin{equation*}
		\begin{split}
		\| \widetilde{{\bfL}}_L-\widetilde{\mathcal{G}}{\bfY}\|_F
		\le& {n}_c^{1/2}\varepsilon+64\sqrt{2}n_cn\|\widetilde{\mathcal{G}}\bfS \|_{\infty}+4\sqrt{2}r^{1/2}\|\widetilde{\mathcal{G}}\bfS \|\\
		\le& {n}_c^{1/2}\varepsilon+128n_cn\|\widetilde{\mathcal{G}}\bfS \|_{\infty}+8{r}^{1/2}\|\widetilde{\mathcal{G}}\bfS \|.
		\end{split}
		\end{equation*}
		which completes the proof of {Theorem \ref{t3}}.
\end{proof}

\section{Proof of Theorem \ref{coherent}}\label{sec: c1p5}
We first introduce some useful lemmas. 
\begin{lemma}[\cite{HJ85}, Corollary 7.7.4(a)]\label{matrix}
	If ${\bfA},{\bfB}\in\mathbb{C}^{n}$ are positive-definite, then ${\bfA}\succeq{\bfB}$ if and only if ${\bfB}^{-1}\succeq{\bfA}^{-1}$
\end{lemma}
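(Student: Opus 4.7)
The plan is to establish this classical order-reversal property of the positive-definite cone by reducing it, via a congruence transformation, to a statement about eigenvalues, exploiting the existence of a unique positive-definite square root of any Hermitian positive-definite matrix.

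First, I would introduce $\bfB^{1/2}$, the unique positive-definite square root of $\bfB$, and define the auxiliary matrix $\bfC := \bfB^{-1/2}\bfA\bfB^{-1/2}$. Since $\bfA$ and $\bfB^{-1/2}$ are Hermitian positive-definite, $\bfC$ is Hermitian positive-definite, and in particular its eigenvalues are real and strictly positive. The fact that congruence by the invertible matrix $\bfB^{-1/2}$ preserves the Loewner order (which follows immediately from $\bfx^H(\bfM^H\bfX\bfM)\bfx = (\bfM\bfx)^H\bfX(\bfM\bfx)$ applied with $\bfM = \bfB^{-1/2}$) gives the equivalence
\begin{equation*}
\bfA \succeq \bfB \iff \bfC \succeq \bfI \iff \lambda_i(\bfC) \geq 1 \quad \forall i.
\end{equation*}

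Next, I would observe that $\bfC^{-1} = \bfB^{1/2}\bfA^{-1}\bfB^{1/2}$ is also Hermitian positive-definite and its eigenvalues are the reciprocals of those of $\bfC$. Hence $\lambda_i(\bfC) \geq 1$ for all $i$ if and only if $\lambda_i(\bfC^{-1}) \leq 1$ for all $i$, i.e., $\bfC^{-1} \preceq \bfI$. Applying the inverse congruence by $\bfB^{-1/2}$ once more (which preserves $\preceq$) yields $\bfA^{-1} \preceq \bfB^{-1}$, equivalently $\bfB^{-1} \succeq \bfA^{-1}$. Chaining these biconditionals gives the desired equivalence in both directions.

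Because this is a classical result (cited as Corollary 7.7.4(a) of \cite{HJ85}), there is no genuine technical obstacle; the only points requiring care are the invocation of the positive-definite square root and the preservation of the Loewner order under invertible congruence, both of which are standard. The real work lies in what follows: using this lemma, together with other matrix inequalities, in the main body of the proof of Theorem~\ref{coherent} to compare the incoherence parameter $\mu$ of the multi-channel Hankel matrix $\mathcal{H}\bfX$ against $\mu_0$ of the single-channel matrices $\mathcal{H}\bfX_{k*}$, and to quantify the improvement factor $(1-\delta)^2/(\kappa_L^2(1+\delta)^2)$ appearing in \eqref{eqn:ratio_improvement}.
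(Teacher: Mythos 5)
Your argument is correct and is the standard textbook proof of this fact; the paper itself offers no proof, merely citing Corollary 7.7.4(a) of Horn and Johnson, so there is nothing to compare against. The square-root congruence reduction to eigenvalues of $\bfC=\bfB^{-1/2}\bfA\bfB^{-1/2}$ is exactly how one would establish it from first principles, and each step you flag (existence and uniqueness of the positive-definite square root, preservation of the Loewner order under invertible $*$-congruence) is sound.
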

\begin{lemma}[\cite{MK04}, Theorem 7]\label{msp}
	Let $\lambda_1\ge\cdots\ge\lambda_n$ be eigenvalues of ${\bfA}$, denoted by $\lambda_i({\bfA})=\lambda_i$. 
	Let ${\bfA}$ and ${\bfB}$ be Hermitian positive semi-definite $n \times n$ matrices. If
	$1\le k \le i\le n$ and $1\le l \le n-i+1 $, then 
	\vspace{-1mm}
	\begin{equation*}
	\vspace{-1mm}
	\lambda_{i+l-1}({\bfA})\lambda_{n-l+1}({\bfB})\le \lambda_i({\bfA\bfB})\le \lambda_{i-k+1}({\bfA})\lambda_{k}({\bfB}).
	\end{equation*}
	In particular,
	\vspace{-1mm}
	\begin{equation*}
	\vspace{-1mm}
	\lambda_n({\bfA})\lambda_n({\bfB})\le\lambda_n({\bfA\bfB}), \quad
	\lambda_1({\bfA\bfB})\le\lambda_1({\bfA})\lambda_1({\bfB}).
	\end{equation*}
\end{lemma}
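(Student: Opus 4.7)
The plan is to exploit the fact that although $\bfA\bfB$ is generally non-Hermitian, it is similar to the Hermitian positive semi-definite matrix $\bfC := \bfA^{1/2}\bfB\bfA^{1/2}$, so the eigenvalues coincide: $\lambda_i(\bfA\bfB) = \lambda_i(\bfC)$ for all $i$. This reduction lets me apply the Courant--Fischer min--max principle, which is the workhorse of the argument. To sidestep the issue that $\bfA^{1/2}$ may fail to be invertible, I would first assume $\bfA$ and $\bfB$ are strictly positive-definite, prove both inequalities, and then recover the PSD case by replacing $\bfA$ with $\bfA+\varepsilon\bfI$ and $\bfB$ with $\bfB+\varepsilon\bfI$ and letting $\varepsilon\downarrow 0$, using continuity of eigenvalues.

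For the upper bound $\lambda_i(\bfA\bfB)\le \lambda_{i-k+1}(\bfA)\lambda_k(\bfB)$, I would use $\lambda_i(\bfC)=\min_{\dim S=n-i+1}\max_{x\in S,\|x\|=1}x^*\bfC x$. Let $\bfU$ be the span of the eigenvectors of $\bfA$ corresponding to $\lambda_{i-k+1}(\bfA),\ldots,\lambda_n(\bfA)$ (dimension $n-i+k$), and let $\bfW$ be the span of the eigenvectors of $\bfB$ corresponding to $\lambda_k(\bfB),\ldots,\lambda_n(\bfB)$ (dimension $n-k+1$). Then $S := \bfU \cap \bfA^{-1/2}(\bfW)$ has $\dim S \ge (n-i+k)+(n-k+1)-n = n-i+1$ by the dimension formula, so it is an admissible test subspace. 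For any $x\in S$, membership in $\bfU$ yields $x^*\bfA x\le \lambda_{i-k+1}(\bfA)\|x\|^2$, and membership of $\bfA^{1/2}x$ in $\bfW$ yields $(\bfA^{1/2}x)^*\bfB(\bfA^{1/2}x)\le \lambda_k(\bfB)\|\bfA^{1/2}x\|^2$. Chaining these gives $x^*\bfC x\le \lambda_{i-k+1}(\bfA)\lambda_k(\bfB)\|x\|^2$, which delivers the upper bound.

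The lower bound $\lambda_{i+l-1}(\bfA)\lambda_{n-l+1}(\bfB)\le \lambda_i(\bfA\bfB)$ is obtained by the dual max--min formula $\lambda_i(\bfC)=\max_{\dim S=i}\min_{x\in S,\|x\|=1}x^*\bfC x$. I would take $\bfU'$ to be the span of the eigenvectors of $\bfA$ for the top $i+l-1$ eigenvalues and $\bfW'$ to be the span of the eigenvectors of $\bfB$ for the top $n-l+1$ eigenvalues, and set $S \subseteq \bfU' \cap \bfA^{-1/2}(\bfW')$ with $\dim S = i$; the intersection has dimension at least $(i+l-1)+(n-l+1)-n=i$, so this is possible. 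On $\bfU'$ we have $\bfA\succeq \lambda_{i+l-1}(\bfA)\bfI$, and for $\bfA^{1/2}x\in \bfW'$ we have $(\bfA^{1/2}x)^*\bfB(\bfA^{1/2}x)\ge \lambda_{n-l+1}(\bfB)\|\bfA^{1/2}x\|^2$, which combine to give the desired bound. The two ``in particular'' inequalities drop out as the special cases $(i,k)=(1,1)$ and $(i,l)=(n,n)$.

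The main obstacle is purely technical: handling the singular case, where $\bfA^{-1/2}$ and the subspace $\bfA^{-1/2}(\bfW)$ are not literally defined. The cleanest resolution is the perturbation trick sketched above, noting that $\lambda_i(\bfA\bfB)$ and $\lambda_i((\bfA+\varepsilon\bfI)(\bfB+\varepsilon\bfI))$ differ by $O(\varepsilon)$ (because the characteristic polynomial coefficients depend continuously on the entries), and both sides of the inequality are continuous in $\varepsilon$. Once this is in place, the dimension-count argument and Courant--Fischer application become routine, matching the proof in \cite{MK04}.
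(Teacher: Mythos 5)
Your argument is correct, and it is worth noting that the paper itself gives no proof of this lemma---it is imported verbatim from \cite{MK04}---so your write-up supplies a self-contained justification where the thesis simply defers to the citation. The reduction $\lambda_i(\bfA\bfB)=\lambda_i(\bfA^{1/2}\bfB\bfA^{1/2})$ (valid because $XY$ and $YX$ have the same characteristic polynomial for square $X,Y$, so no similarity is even needed), the Courant--Fischer min--max with the two test subspaces, and the dimension count $\dim(\bfU\cap\bfA^{-1/2}(\bfW))\ge\dim\bfU+\dim\bfW-n$ are exactly the standard route to this inequality, and the $\varepsilon$-perturbation cleanly disposes of the singular case (though the preimage $\{x:\bfA^{1/2}x\in\bfW\}$ is in fact well-defined and has dimension at least $\dim\bfW$ even when $\bfA^{1/2}$ is singular, so the limit is optional).

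One small slip: the second ``in particular'' inequality $\lambda_n(\bfA)\lambda_n(\bfB)\le\lambda_n(\bfA\bfB)$ comes from $(i,l)=(n,1)$, not $(i,l)=(n,n)$; with $i=n$ the constraint $1\le l\le n-i+1$ forces $l=1$, and then $\lambda_{i+l-1}(\bfA)\lambda_{n-l+1}(\bfB)=\lambda_n(\bfA)\lambda_n(\bfB)$ as desired. Everything else stands as written.
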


\begin{proof}[Proof of Theorem \ref{coherent}]
	 Consider $n_c=1$, the definition of $\mathcal{H}$ can be extended to a row vector, which corresponds with one channel data. Let $\mathcal{H}\bfX_{k*}=\bfU_k{\bf\Sigma}_k{\bfV}_k$ and $\mathcal{H}\bfX=\bfU{\bf\Sigma}{\bfV}$ be the SVD of $\mathcal{H}\bfX_{k*}$ and $\mathcal{H}\bfX$. Then, $\mu_0$ is defined as
	 \vspace{-1mm}
	\begin{equation*}
	\vspace{-1mm}
	\max_{k_1}\left\|{\bfe}_{k_1}^*{\bfU}_k\right\|^2\le\frac{\mu_0 r}{n_1}, \quad 
	\max_{k_2}\left\|{\bfe}_{k_2}^*{\bfV}_k\right\|^2\le\frac{\mu_0 r}{n_2}.
	\end{equation*}
	Notice that all $\mathcal{H}\bfX_{k*}$ share the same column space and row space, they have the same incoherence $\mu_0$. 
	It is trivial $\mu=\mu_0$ if we consider the incoherence of row spaces. Hence, we only focus on the incoherence of column space.
	
	By (\ref{Hankel X}), $\mathcal{H}\bfX=\bfP_L{\bf\Gamma}\bfP_R^T$. 
	Define a series of diagonal matrices ${\bfD}_k$ as ${\bfD}_k=\text{diag}({\bfd}_{k})$ with $1\le k\le n_c$, and ${\bfd}_k=[d_{k,1},\cdots,  d_{k,r}]$, where $d_{k,i}=\bfr_i^*\bfs_1\bfC_{k*}\bfl_i$. We need one mild assumption that $d_{k,i}\ne0$. It guarantees that each ${\bfD}_k$ is full rank. Thus, $\mathcal{H}\bfX_{k*}=\bfE_L\bfD_k\bfP_R$, where $\bfE_L=\bfP_L$ with $n_c=1$. There exists a row-switching matrix $\bfQ_1$ satisfying
	\small
	\begin{equation*}
	\bfQ_1(\mathcal{H}\bfX)=\begin{bmatrix}
	\mathcal{H}\bfX_{1*}\\
	\mathcal{H}\bfX_{2*}\\
	\vdots\\
	\mathcal{H}\bfX_{n_c*}
	\end{bmatrix}=\begin{bmatrix}
	\bfE_L\bfD_1\\
	\bfE_L\bfD_2\\
	\vdots\\
	\bfE_L\bfD_{n_c}
	\end{bmatrix}\bfP_R:=\widetilde{\bfE}_L\bfP_R.
	\end{equation*}
	\normalsize
	Define a mapping $f:\{1,2,\cdots,n_cn_1\}\mapsto\{1,2,\cdots,n_cn_1\}$, $f(z)=w$ with $\bfe_z=\bfQ_1\bfe_w$, then $f$ is a bijective mapping.
	Hence, we have
	\small
	\begin{equation}\label{pom}
	\vspace{-2mm}
	\begin{split}
	&\max_{l_1}\left\|{\bfe}_{l_1}^*{{\bfU}}\right\|^2
	=\max_{ k_1}(\bfQ_1{\bfe}_{k_1})^*\widetilde{{\bfE}}_L(\widetilde{{\bfE}}_L^*\widetilde{{\bfE}}_L)^{-1}\widetilde{{\bfE}}_L^*(\bfQ_1{\bfe}_{k_1})\\
	&=\max_{k_1}{\bfe}_{k_1}^*\widetilde{{\bfE}}_L\left(\sum_{k=1}^{n_c}{\bfD}_k^*{\bfE}_L^*{\bfE}_L{\bfD}_k\right)^{-1}\widetilde{{\bfE}}_L^*{\bfe}_{k_1}.	
	\end{split}
	\end{equation}
	\normalsize
	Consider $1\le k_1\le n_1$,  we know that ${\bfe}_{k_1}^*\widetilde{{\bfE}}_L=\widehat{{\bfe}}_{k_1}^*{\bfE}_L{\bfD}_1$, where ${\bfe}_{k_1} \in \mathbb{C}^{n_cn_1}$ and $\widehat{{\bfe}}_{k_1} \in \mathbb{C}^{n_1}$ are both coordinate vectors. Additionally, it is easy to show that symmetric matrices $\{{\bfD}_k^*{\bfE}_L^*{\bfE}_L{\bfD}_k\}_{k=1}^{n_c}$ are positive definite since $\{{\bfE}_L{\bfD}_k\}_{k=1}^{n_c}$ are full rank. Also, following Lemma \ref{matrix}, we have 
	\small
	\begin{equation}\label{S_ps}
	\begin{split}
	\sum_{k=1}^{n_c}{\bfD}_k^*{\bfE}_L^*{\bfE}_L{\bfD}_k& \succ{\bfD}_1^*{\bfE}_L^*{\bfE}_L{\bfD}_1\succ 0,\\
	\Big(\sum_{k=1}^{n_c}{\bfD}_k^*{\bfE}_L^*{\bfE}_L{\bfD}_k\Big)^{-1}&\prec({\bfD}_1^*{\bfE}_L^*{\bfE}_L{\bfD}_1)^{-1}.
	\end{split}	
	\end{equation}
	\small
	\vspace{-3mm}
	Then,
	\begin{equation*}
	\begin{split}
	\left\|{\bfe}_{k_1}^*{{\bfU}}\right\|^2
	&=\widehat{{\bfe}}_{k_1}^*{\bfE}_L{\bfD}_1\Big(\sum_{k=1}^{n_c}{\bfD}_k^*{\bfE}_L^*{\bfE}_L{\bfD}_k\Big)^{-1}{\bfD}_1^*{\bfE}_L^*\widehat{{\bfe}}_{k_1}\\
	&<\widehat{{\bfe}}_{k_1}^*{\bfE}_L{\bfD}_1({\bfD}_1^*{\bfE}_L^*{\bfE}_L{\bfD}_1)^{-1}{\bfD}_1^*{\bfE}_L^*\widehat{{\bfe}}_{k_1}\\
	&=\widehat{{\bfe}}_{k_1}^*{\bfE}_L({\bfE}_L^*{\bfE}_L)^{-1}{\bfE}_L^*\widehat{{\bfe}}_{k_1}\le\frac{u_0r}{n_1}=\frac{(n_cu_0)r}{n_cn_1}.
	\end{split}
	\end{equation*}
	\normalsize
	
	Similarly, we can prove $\left\|{\bfe}_{k_1}^*{{\bfU}}\right\|^2<\displaystyle
	\frac{(n_cu_0)r}{n_cn_1}$
	for all $i$ satisfying $1\le i\le n_cn_1$, which leads to \eqref{eqn:ratio}.
	\vspace{0.75mm}
	
	Moreover, we can provide a tighter bound on $\mu$ with a stronger assumption. Suppose there exists a $\widehat{d}\in\mathbb{C}$ and a real number $\delta\in (0,1)$ satisfying $(1-\delta)|\widehat{d}|\le |d_{k,i}| \le (1+\delta)|\widehat{d}|$. \\
	By {Lemma \ref{msp}}, define $\kappa_L=\frac{\sigma_{\max}({\bfE}_L)}{\sigma_{\min}({\bfE}_L)}$, then:
	\small
	\begin{equation*}
	\begin{split}
	&\lambda_{\max}({\bfD}_1^*{\bfE}_L^*{\bfE}_L{\bfD}_1)
	= \lambda_{\max}({\bfE}_L^*{\bfE}_L{\bfD}_1{\bfD}_1^*)\\
	&\le \lambda_{\max}({\bfE}_L^*{\bfE}_L)\lambda_{\max}({\bfD}_1{\bfD}_1^*)\\
	&\le \frac{\kappa_L^2(1+\delta)^2}{(1-\delta)^2} \lambda_{\min}({\bfE}_L^*{\bfE}_L)\lambda_{\min}({\bfD}_k{\bfD}_k^*)\\
	&\le \frac{\kappa_L^2(1+\delta)^2}{(1-\delta)^2} \lambda_{\min}({\bfD}_k^*{\bfE}_L^*{\bfE}_L{\bfD}_k).\\
	\end{split}
	\end{equation*}
	\normalsize
	since even the minimum eigenvalue of ${\bfD}_k^*{\bfE}_L^*{\bfE}_L{\bfD}_k$ is larger than the maximum one of $\frac{(1-\delta)^2}{\kappa_L^2(1+\delta)^2}{\bfD}_1^*{\bfE}_L^*{\bfE}_L{\bfD}_1$, we have
	$$\sum_{k=1}^{n_cn}{\bfD}_k^*{\bfE}_L^*{\bfE}_L{\bfD}_k\succeq [1+(n_c-1)\frac{(1-\delta)^2}{\kappa_L^2(1+\delta)^2}]{\bfD}_1^*{\bfE}_L^*{\bfE}_L{\bfD}_1.$$
	Similarly, we can establish the following relation between $\mu$ and $\mu_0$,
	\begin{equation*}
	\mu\le\frac{n_c\mu_0}{1+(n_c-1)\frac{(1-\delta)^2}{\kappa_L^2(1+\delta)^2}}.
	\end{equation*}
\end{proof}

\chapter{\uppercase{Supplementary proofs of the theorems in Chapter 3}}

In this part, part of the proofs of the theorems in chapter 2 are provided, while the full proofs can be found in \cite{ZW18, ZW19}. In Appendix \ref{sec:c2n1}, some important notations and assumptions are provided. Next, the supporting lemmas  for proving heorem \ref{Theorem: bad data and missing data} are presented in Appendix \ref{sec: c2l1} to illsutrate the roadmap of proof. In what followed, the proof of Theorem \ref{Theorem: bad data and missing data} are summarized in Appendix \ref{sec: c2p1}.

\section{Notations and Technical Assumptions}\label{sec:c2n1}
\textbf{Sampling model with replacement.} As a standard technique in solving RMC problem \cite{R11}, the model of sampling with replacement assumes that every entry is sampled independently with replacement. 
In this model, one entry can be sampled multiple times.
 To  distinguish from $\widehat{\Omega}$ defined in Section \ref{sec:formulation}, let $\Omega$ be the union of indices that uniformly sampled from $\{1,2, \cdots, n_c\}\times \{1,2, \cdots, n\}$ following the sampling model with replacement. 
Due to the repetitions in the sampling model with replacement, $|\Omega|\ge|\widehat{\Omega}|$ should hold for successful recovery \cite{R11}. Hence, the required number of observations for successful recovery under sampling model with replacement is sufficient to guarantee successful recovery under sampling model without replacement.

\textbf{Symmetric Hankel Operator.}
Here, we introduce the operator $\wcH$, which is the symmetric extension of Hankel operator $\cH$. For any $\bfZ\in\mathbb{C}^{n_c\times n}$,  $\widetilde{\mathcal{H}}(\bfZ)\in\mathbb{C}^{n_c(n_1+n_2)\times n_c(n_1+n_2)}$ is defined as
{\begin{equation}\label{eqn: widetilde_Hankel}
	\widetilde{\mathcal{H}}(\bfZ)
	=\begin{pmatrix}
	\begin{matrix}
	\bf0~~&\cdots&~~\bf0\\
	\vdots~~&\ddots&~~\vdots\\
	\bf0~~&\cdots&~~\bf0\end{matrix}&\begin{matrix}
	(\mathcal{H}(\bfZ))^H\\\vdots\\(\mathcal{H}(\bfZ))^H
	\end{matrix}\\
	\smash{\underbrace{\begin{matrix}\mathcal{H}(\bfZ)&\cdots&\mathcal{H}(\bfZ)\end{matrix}}_{\displaystyle n_c~\text{copies}}}&\bf0\\
	\end{pmatrix}.
	\vspace{4mm}
\end{equation}
}Define $\mathcal{H}\bfX^*=\bfU\boldsymbol{\Sigma}\bfV^H$ as the SVD of $\mathcal{H}\bfX^*$, then $\wcH\bfX^*$ can be written as 
\begin{equation}\label{eqn: SVD_widetilde_Hankel}
\wcH\bfX^*=\frac{1}{\sqrt{2}}\begin{pmatrix}
\widetilde{\bfV}&\widetilde{\bfV}\\
\bfU&\bfU
\end{pmatrix}
\begin{pmatrix}
\sqrt{n_c}\boldsymbol{\Sigma}&\bf0\\
\bf0&-\sqrt{n_c}\boldsymbol{\Sigma}
\end{pmatrix}
\frac{1}{\sqrt{2}}
\begin{pmatrix}
\widetilde{\bfV}&\widetilde{\bfV}\\
\bfU&\bfU
\end{pmatrix}^H,
\end{equation}
where $\widetilde{\bfV}=\frac{1}{\sqrt{n_c}}[\bfV^H\quad  \cdots \quad \bfV^H]^H$.
Therefore, $\wcH\bfX$ is a rank-$2r$ matrix.  Moreover, if  $\mathcal{H}\bfX^*$ is $\mu$-incoherent, one can easily check that 
the incoherence $\tilde{\mu}$ of $\wcH\bfX$ satisfies $\tilde{\mu}\le\frac{c_s}{2}\mu$. 
When $n_1$ and $n_2$ are in the same order, $c_s$ is a constant. 
\\
The key steps (lines  5-9) of Alg. \ref{Alg} can be represented equivalently based on $\wcH$ as:
\begin{equation}\label{eqn:eqv}
\begin{split}
&\widetilde{\bfS}_t=\mathcal{T}_{\xi_{t}}(\bfM-\bfX_t);\\
&\widetilde{\bfW}_{t}=\wcH\big(\bfX_t+p^{-1}\mathcal{P}_{\Omega_{k,t} }(\bfM-\bfX_t-\widetilde{\bfS}_t)\big);\\
&\xi_{t+1}=\frac{\eta}{\sqrt{n_c}}\Big(|\lambda_{2k}(\widetilde{\bfW}_t)|+\big(\frac{1}{2}\big)^t|\lambda_{2k+2}(\widetilde{\bfW}_t)|\Big);\\
&\widetilde{\bfL}_{t+1}=\mathcal{Q}_{2k}(\widetilde{\bfW}_t);\\
&\bfX_{t+1}=\wcH^{\dagger}(\widetilde{\bfL}_{t+1});
\end{split}
\end{equation}
The Pseudoinverse operator $\wcH^{\dagger}$ can be calculated from
\begin{equation}
(\wcH^{\dagger}(\bfZ))_{i,j}=\frac{1}{n_cw_j}\langle\wcH(\bfe_i\bfe_j^T), \bfZ\rangle. 
\end{equation}
In fact, \eqref{eqn:eqv} generates the same $\bfX_{t}$'s as lines 5-9 in Alg. \ref{Alg}. 
\eqref{eqn:eqv} differs from lines 5-9 in Alg. \ref{Alg} mainly in two aspects : (1) $\widetilde{\bfS}_{t}$ is updated based on the full observation of $\bfM$; (2) $\wW_{t}$ lies in the space defined by $\wcH$. Though we cannot calculate $\wS_t$ from $\mathcal{P}_{\Omega}(\bfM)$ in practice, $\wS_t$ is introduced to simplify our analysis and does not affect the update of $\bfX_t$. To see this,
we first assume the values of $\bfX_{t-1}$ are the same for \eqref{eqn:eqv} and lines 5-9 in Alg. \ref{Alg}. Then, the threshold $\xi_{t}$ remains the same as well.  Next, we have 
\begin{equation}
\mathcal{P}_{\Omega_{k,t} }(\widetilde{\bfS}_t)=\bfS_t,
\end{equation} 
which suggests $\mathcal{P}_{\Omega_{k,t} }(\bfM-\bfX_{t})-{\bfS}_t=\mathcal{P}_{\Omega_{k,t} }(\bfM-\bfX_t-\wS_t)$.
Operator $\wcH$ does not affect the update rule of $\bfX_{t}$, either. Similarly, suppose $\bfX_{t-1}$ remains the same for some $t$, then it is easy to verify that  
$$\widetilde{\bfL}_t=\begin{pmatrix}
	\begin{matrix}
	\bf0~&\cdots&~\bf0\\
	\vdots~&\ddots&~\vdots\\
	\bf0~&\cdots&~\bf0\end{matrix}&
	\begin{matrix}
	\bfL_t^H\\\vdots\\\bfL_t^H
	\end{matrix}\\
	\smash{\underbrace{ \begin{matrix}~\bfL_t&\cdots&\bfL_t\end{matrix}}_{\displaystyle n_c~\text{copies}}}&\bf0\\
	\end{pmatrix}\in\mathbb{C}^{n_c(n_1+n_2)\times n_c(n_1+n_2)}.
	\vspace{4mm}$$
(Since $n+1=n_1+n_2$, we use $n$ to replace $n_1+n_2$ for convenience in all the sections of Appendix.) Moreover, $\widetilde{\bfL}_t$ has duplicated eigenvalues as $|\lambda_{2i-1}(\widetilde{\bfL}_t)|=|\lambda_{2i}(\widetilde{\bfL}_t)|$ for $1\le i \le r$, where $\lambda_{i}(\widetilde{\bfL}_t)$ is the $i$-th largest eigenvalues (in absolute value) of $\widetilde{\bfL}_t$. Furthermore, let $\sigma_{i}({\bfL}_t)$ be the $i$-th largest singular value of $\bfL_t$, from \eqref{eqn: SVD_widetilde_Hankel} we have 
\begin{equation}\label{eqn: SV_EV}
\sigma_{i}(\bfL_t)=\frac{1}{\sqrt{n_c}}|\lambda_{2i-1}(\widetilde{\bfL}_t)|=\frac{1}{\sqrt{n_c}}|\lambda_{2i}(\widetilde{\bfL}_t)|.
\end{equation} 
Similar results can be derived for $\widetilde{\bfW}_t$.
From the definition of $\wcH^{\dagger}$ and the structure of $\wL_t$, it is straightforward that $\bfX_{t+1}$ returned by lines 5-9 in Alg. \ref{Alg} and \eqref{eqn:eqv} are equivalent. 
In conclusion, if we start with the same initial point $\bfX_{0}=\bf0$, the update rule in \eqref{eqn:eqv} will generate the same $\bfX_{t}$'s as those by lines 5-9 in Alg. \ref{Alg}, and we also have $\mathcal{P}_{\Omega_{k,t} }(\wS_{t})=\bfS_{t}$.

\textbf{Definition of $\bfH_{1,t}$, $\bfH_{2,t}$ and $\bfH_{t}$.}
From \eqref{eqn:eqv}, we know that
\begin{equation*}
\begin{split}
\widetilde{\bfL}_{t+1}
=&\mathcal{Q}_{2k}\big(\wcH\bfX_t  +\hat{p}^{-1}\wcH\mathcal{P}_{\Omega_{k,t} }(\bfM-\bfX_t-\wS_t)\big)\\
=&\mathcal{Q}_{2k}\big(\wcH\bfX_t  +\hat{p}^{-1}\wcH\mathcal{P}_{\Omega_{k,t} }(\bfX^*+\bfS^*-\bfX_t-\wS_t)\big)\\
=&\mathcal{Q}_{2k}\big(\wcH\bfX^* +  \wcH(\mathcal{I}-\hat{p}^{-1}\mathcal{P}_{\Omega_{k,t} }) (\bfX_t+\wS_t-\bfX^*-\bfS^*)\\
&~~~~~~~~~~~~~+\wcH(\bfS^*-\wS_t)\big).\\
\end{split}
\end{equation*}
Let $\bfH_{t}=\bfH_{1,t}+\bfH_{2,t}$,
where 
\begin{equation}\label{def:H1}
\begin{split}
&\qquad \qquad \bfH_{1,t}=\wcH({\bfS^*}-\wS_t),
\\
&\bfH_{2,t}=\wcH(\mathcal{I}-\hat{p}^{-1}\mathcal{P}_{\Omega_{k,t}}) (\bfX_t+\wS_t-\bfX^*-\bfS^*).
\end{split}
\end{equation}
Then, we have $$\widetilde{\bfL}_t=\mathcal{Q}_{2k}(\wcH\bfX^* +\bfH_t)=\mathcal{Q}_{2k}(\wcH\bfX^* +\bfH_{1,t}+ \bfH_{2, t}).$$

 \section{Key Lemmas in Proving Theorem \ref{Theorem: bad data and missing data}}\label{sec: c2l1}
We first introduce the key lemma in the whole proof.
  Lemma \ref{Lemma:key_lemma_L} is presented to 
  bound the $\ell_2$-norm of $\bfe_i^T(\bfH_{t})^a\bfZ$ by the $\ell_2$-norm of $\bfe_i^T\bfZ$ for all $1\le  a \le \log(n) $. 
  Although Lemma \ref{Lemma:key_lemma_L} is not directly used in proving Theorem \ref{Theorem: bad data and missing data}, 
  Lemma \ref{Lemma:key_lemma_L} is paramount important in showing the recovery error of $\bfX_{t}$ decreases as $t$ increases that summarized in
   Lemma \ref{Lemma: L}.

 Lemma 15 in \cite{CGJ16} provides a similar result for general matrix but with a more complicated proof.
  Ref. \cite{CGJ16} focused on bounding all entries of $\bfe_i^T(\bfH_{1,t}+\bfH_{2, t})^a\bfZ$, so Ref. \cite{CGJ16} needed to write the closed forms of all entries in $(\bfH_{1,t}+\bfH_{2, t})^a$. The closed forms are hard to determine, and several cases should be discussed separately.
  However, we will prove \eqref{eqn: H_t} by mathematical induction over $a$. Only two items, $\|\bfe_i^T\bfH_{1,t}(\bfH_{t})^{a-1}\bfZ\|_2$
  and $\|\bfe_i^T\bfH_{2,t}(\bfH_{t})^{a-1}\bfZ\|_2$, need to be bounded in the inductive step. Also, the conclusion of Lemma \ref{Lemma:key_lemma_L} in \eqref{eqn: H_t} can be extended to general matrices, though $\bfH_t=\bfH_{1,t}+\bfH_{2, t}$ is the Hankel matrix as defined in \eqref{def:H1}. 
  
  
  There are two lemmas used in the inductive steps of proving Lemma \ref{Lemma:key_lemma_L}. \footnote{ The proof of these two lemmas are presented in the supplementary material}
   Lemma \ref{Lemma: H_1} is built on the sparsity assumption with respect to $\bfH_{1, t}$.
    Moreover, Lemma \ref{Lemma: H_1} is a special case that $a=1$ of Lemma 5 \cite{NNSAJ14}, and all the steps are straightforward from \cite{NNSAJ14}. However, instead of discussing a special $\bfU$ like in \cite{NNSAJ14}, we consider a general matrix $\bfZ$ here.  Lemma \ref{Lemma: H_2} provides similar result for matrices with zero mean and bounded high moments, and it is used to bound $\bfH_{2, t}$. 
  The technique in proving Lemma \ref{Lemma: H_2} is similar as  that of Lemma 9 \cite{JN15}. 
  Rather than bounding each entry of $\bfe_i^T(\wcH\bfY)\bfZ$ separately as \cite{JN15}, we bound the $\ell_2$ norm of $\bfe_i^T(\wcH\bfY)\bfZ$ directly, which leads to a tighter bound by a factor of $r^{-1}$. The same trick is applied in \cite{CGJ16} as well.
  
  \begin{lemma}[\cite{ZW19}, Lemma 1]
  \label{Lemma:key_lemma_L}
  	Suppose the assumptions in Theorem \ref{Theorem: bad data and missing data}.
  	If we further assume that $Supp(\wS_{t}-\bfS^*)\subseteq Supp(\bfS^*)$, then for $1\le a \le \log(n_cn)$ and any $\bfZ \in\mathbb{C}^{n_cn\times l}$, with probability at least $1-\frac{n_c\log(n_cn)}{n^2}$,  we have
  	{
  		\begin{equation}\label{eqn: H_t}
  		\begin{split}
  		&\max_{i}\|\bfe_i^T(\bfH_{t})^a\bfZ\|_2\\
  		\le&\Big(C_3\beta_t\log(n)+ \alpha n_cn\|\bfH_{1,t}\|_{\infty}   \Big)^a\max_i\|\bfe_i^T\bfZ\|_{2},
  		\end{split}
  		\end{equation}
  	}where $\beta_t=\sqrt{\frac{n_cn}{\hat{p}}}\left\| \bfX_t+\wS_t-\bfX-\bfS\right\| _{\infty}$ and $C_3$ is a constant that greater than $e^4$.
  \end{lemma}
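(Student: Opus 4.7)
\smallskip
\noindent\textbf{Proof plan for Lemma \ref{Lemma:key_lemma_L}.} The plan is to argue by induction on $a\in\{1,\ldots,\log(n_cn)\}$, decomposing $\bfH_t^a = (\bfH_{1,t}+\bfH_{2,t})\,\bfH_t^{a-1}$ and bounding the action of the ``deterministic sparse'' part $\bfH_{1,t}$ and the ``random zero-mean'' part $\bfH_{2,t}$ with two separate auxiliary lemmas. Concretely, in the inductive step I would write
\begin{equation*}
\bfe_i^T \bfH_t^a \bfZ \;=\; \bfe_i^T \bfH_{1,t}\,\bigl(\bfH_t^{a-1}\bfZ\bigr) \;+\; \bfe_i^T \bfH_{2,t}\,\bigl(\bfH_t^{a-1}\bfZ\bigr),
\end{equation*}
so that it suffices to prove two ``one-step'' bounds: for any matrix $\bfW$ with rows indexed by the same coordinates,
\begin{equation*}
\max_i\|\bfe_i^T \bfH_{1,t}\bfW\|_2 \;\leq\; \alpha n_c n\,\|\bfH_{1,t}\|_\infty \cdot \max_j\|\bfe_j^T\bfW\|_2,
\end{equation*}
and, with high probability,
\begin{equation*}
\max_i\|\bfe_i^T \bfH_{2,t}\bfW\|_2 \;\leq\; C_3\,\beta_t\,\log(n)\cdot \max_j\|\bfe_j^T\bfW\|_2.
\end{equation*}
Applying the triangle inequality and iterating yields the claimed $a$-th power bound.

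\smallskip
The first of these bounds follows from the sparsity assumption: since $\mathrm{Supp}(\wS_t-\bfS^*)\subseteq\mathrm{Supp}(\bfS^*)$ and each row of $\bfS^*$ has at most $\alpha$ fraction of nonzero entries, each row of $\bfH_{1,t}=\wcH(\bfS^*-\wS_t)$ has at most $O(\alpha n_c n)$ nonzero entries, each of magnitude at most $\|\bfH_{1,t}\|_\infty$. A straightforward Cauchy--Schwarz/$\ell_1$-$\ell_\infty$ estimate then gives the stated bound; this is essentially Lemma \ref{Lemma: H_1} specialized to $a=1$ and will be the easy part.

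\smallskip
The more delicate piece is the concentration bound for $\bfH_{2,t}$. Because $\bfH_{2,t}=\wcH(\mathcal{I}-\hat p^{-1}\mathcal{P}_{\Omega_{k,t}})\bfE_t$ with $\bfE_t := \bfX_t+\wS_t-\bfX^*-\bfS^*$, the summand $(\mathcal{I}-\hat p^{-1}\mathcal{P}_{\Omega_{k,t}})\bfE_t$ is a sum of independent zero-mean random matrices, each bounded by $\hat p^{-1}\|\bfE_t\|_\infty$ and with variance governed by $\|\bfE_t\|_\infty^2$, so a matrix Bernstein bound (Lemma \ref{prob}) will yield the factor $\beta_t\log(n)$ for any \emph{fixed} $\bfW$. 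The hard part will be passing from ``fixed $\bfW$'' to $\bfW=\bfH_t^{a-1}\bfZ$, which itself depends on the random sampling $\Omega_{k,t}$; I plan to handle this along the lines of Lemma \ref{Lemma: H_2}, by applying the tail bound to $\bfe_i^T \bfH_{2,t}\bfW$ for each $i$ and taking a union bound over $i\in[n_cn]$ and $a\in\{1,\ldots,\log(n_cn)\}$, which inflates the failure probability only by a factor of $n_c n\log(n_cn)$ and produces the advertised $1-n_c\log(n_cn)/n^2$ guarantee. The technical subtlety is that $\bfH_t^{a-1}\bfZ$ is not independent of $\bfH_{2,t}$; I would resolve this exactly as in \cite{CGJ16}, by expanding $(\bfH_{1,t}+\bfH_{2,t})^{a-1}$ and arguing that the required bound on the row-norms of $\bfH_t^{a-1}\bfZ$ follows inductively before the sampling is revealed for the outermost factor, so that conditionally on the inductive hypothesis the Bernstein bound applies to a ``fixed'' $\bfW$. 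Combining the two one-step estimates with the inductive hypothesis and absorbing the $\log(n)$ factors into $C_3$ finishes the proof.
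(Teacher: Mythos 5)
Your high-level plan — induct on $a$, write $\bfH_t^a = (\bfH_{1,t}+\bfH_{2,t})\bfH_t^{a-1}$, and prove two one-step bounds (a deterministic sparsity bound for $\bfH_{1,t}$ via Lemma~\ref{Lemma: H_1}, a concentration bound for $\bfH_{2,t}$ via Lemma~\ref{Lemma: H_2}) — is exactly the paper's stated strategy, and your treatment of the $\bfH_{1,t}$ piece is fine. The gap is in how you handle the dependence between $\bfH_{2,t}$ and $\bfW=\bfH_t^{a-1}\bfZ$, which is the entire technical content of the inductive step.

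You propose to ``argue that the required bound on the row-norms of $\bfH_t^{a-1}\bfZ$ follows inductively before the sampling is revealed for the outermost factor, so that conditionally on the inductive hypothesis the Bernstein bound applies to a fixed $\bfW$.'' This does not work, for two reasons. First, there is no ``outermost factor'' sampled separately: every factor of $\bfH_t$ in the product uses the \emph{same} sampling set $\Omega_{k,t}$ (and the same $\bfE_t$), so the randomness is not revealed factor by factor. Second, conditioning on the inductive hypothesis --- an event measurable with respect to $\Omega_{k,t}$ --- changes the conditional law of the very samples entering the Bernstein sum, so the resulting conditional random variables are no longer independent, zero-mean, or bounded in the way Lemma~\ref{prob} requires; you cannot treat $\bfW$ as ``fixed'' after conditioning on a $\bfW$-dependent event. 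A union bound over $i\in[n_cn]$ and $a\le\log(n_cn)$ handles only polynomially many index choices and does nothing for the continuum of possible $\bfW$'s induced by $\Omega_{k,t}$. The correct mechanism (as in Lemma~9 of \cite{JN15} and Lemma~15 of \cite{CGJ16}) is a moment-method/path-expansion argument: write $\bfe_i^T \bfH_t^a\bfZ$ as a sum over index paths $i\to i_1\to\cdots\to i_a$, raise to a high power, and control repeated entries of $\Omega_{k,t}$ along paths combinatorially; the dependence is handled by counting path collisions, not by sequential conditioning. The paper's contribution here is to simplify that bookkeeping by bounding row $\ell_2$ norms rather than individual entries, but the underlying engine is still a higher-moment path argument, not a conditional Bernstein inequality. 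As written, your inductive step would not close.
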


  \begin{lemma}[\cite{ZW19}, Lemma 2]
  \label{Lemma: H_1}
  	Assume each row and column of $\bfH\in\mathbb{C}^{n_cn\times n_cn}$ has at most $s$ nonzero entries, then for any $\bfZ \in\mathbb{C}^{n_cn\times l}$, 
  	\begin{equation}
  	\max_{1\le i\le n_cn}\|\bfe_i^T\bfH\bfZ\|_2\le(s\|\bfH\|_{\infty})\max_{1\le j\le n_cn}\|\bfe_j^T\bfZ\|_2.
  	\end{equation}
  \end{lemma}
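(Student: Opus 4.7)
The plan is to prove the inequality by a straightforward row-wise expansion followed by the triangle inequality, exploiting only the row-sparsity of $\bfH$ (the column-sparsity hypothesis turns out to be inessential for this particular bound but is stated for symmetry and for use in applications). First, I would write the $i$-th row of $\bfH\bfZ$ as
\begin{equation*}
\bfe_i^T\bfH\bfZ \;=\; \sum_{j=1}^{n_c n} H_{i,j}\,\bfe_j^T\bfZ,
\end{equation*}
where $H_{i,j}$ denotes the $(i,j)$-entry of $\bfH$. This is just the definition of matrix multiplication in coordinates.

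Next, I would apply the triangle inequality in the $\ell_2$ norm of the row vector:
\begin{equation*}
\|\bfe_i^T\bfH\bfZ\|_2 \;\le\; \sum_{j=1}^{n_c n} |H_{i,j}|\,\|\bfe_j^T\bfZ\|_2.
\end{equation*}
By the row-sparsity assumption, for each fixed $i$ there are at most $s$ indices $j$ for which $H_{i,j}\neq 0$, so the sum above collapses to a sum of at most $s$ terms. Bounding each surviving factor by $|H_{i,j}|\le \|\bfH\|_\infty$ and each $\|\bfe_j^T\bfZ\|_2$ by $\max_{1\le j\le n_c n}\|\bfe_j^T\bfZ\|_2$, I obtain
\begin{equation*}
\|\bfe_i^T\bfH\bfZ\|_2 \;\le\; s\,\|\bfH\|_\infty\,\max_{1\le j\le n_c n}\|\bfe_j^T\bfZ\|_2.
\end{equation*}
Taking the maximum over $i\in\{1,\dots,n_c n\}$ on the left-hand side yields the desired inequality.

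There is no substantive obstacle here; the argument is elementary and requires only the triangle inequality together with the sparsity count. The only minor point worth noting is that the bound is tight up to constants whenever the nonzero entries in a single row all have magnitude comparable to $\|\bfH\|_\infty$ and the corresponding rows of $\bfZ$ all have norm comparable to $\max_j\|\bfe_j^T\bfZ\|_2$; this confirms that the factor $s\|\bfH\|_\infty$ cannot be improved in general. This lemma is then used in the inductive step of Lemma \ref{Lemma:key_lemma_L} to control the contribution of the sparse error term $\bfH_{1,t}=\widetilde{\mathcal{H}}(\bfS^*-\wS_t)$, whose support lies in $\mathrm{Supp}(\bfS^*)$ and is therefore $s$-sparse per row/column under Assumption~1.
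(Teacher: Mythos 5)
Your proof is correct and follows the natural (and essentially the only sensible) route: expand the $i$-th row of $\bfH\bfZ$, apply the triangle inequality, and use row-sparsity together with the entrywise bound $|H_{i,j}|\le\|\bfH\|_\infty$. This matches the paper's approach (the paper credits its Lemma 2 of \cite{ZW19} as a straightforward generalization of Lemma 5 in \cite{NNSAJ14} from a special $\bfU$ to a general $\bfZ$), and your observation that column-sparsity is not actually used in this particular bound is accurate.
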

  
  \begin{lemma}[\cite{ZW19}, Lemma 3]
  \label{Lemma: H_2}
  	Assume  each entry  of $\bfY\in\mathbb{C}^{n_c\times n}$ is drawn independently with
  	\begin{equation}\label{eqn: lemma_H_2_assumption}
  	\mathbb{E}(Y_{i,j})=0,\qquad \mathbb{E}(|Y_{i,j}|^k)\le\frac{1}{n_cn}
  	\end{equation}
  	for all $1\le i\le n_c$, $1\le j\le n$ and $k\ge2$. Then, for any $\bfZ\in\mathbb{C}^{n_cn\times l}$,  we have
  	\begin{equation}
  	\max_{1\le i\le n_cn}\|\bfe_i^T(\wcH\bfY)\bfZ\|_2\le C_3\log (n)\max_{1\le j\le n_cn}\|\bfe_j^T\bfZ\|_{2},
  	\end{equation}
  	with probability $1-n_cn^{-3}$. 
  \end{lemma}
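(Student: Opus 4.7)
The plan is to prove this via a high-moment method combined with Markov's inequality and a union bound over the row index $i$. First, I would unpack the structure of $\wcH \bfY$: because of the block-symmetric Hankel layout defined in \eqref{eqn: widetilde_Hankel}, each entry of the $i$-th row of $\wcH\bfY$ is either zero or equals a single entry $Y_{a,b}$ of $\bfY$ (possibly conjugated), with any given $Y_{a,b}$ appearing in that row at most a bounded number of times (at most $n_c$ due to the block-repetition, across distinct column positions). Writing $\bfv_j^T := \bfe_j^T \bfZ$, I can then express
\begin{equation*}
\bfe_i^T (\wcH \bfY) \bfZ \;=\; \sum_{(a,b)} Y_{a,b}\, \bfr^{(i)}_{a,b},
\end{equation*}
where each coefficient vector $\bfr^{(i)}_{a,b}$ is a short sum of rows of $\bfZ$ (at most $O(n_c)$ of them), so that $\|\bfr^{(i)}_{a,b}\|_2 \le C n_c \max_j \|\bfv_j\|_2$ and the number of nonzero coefficient vectors in the $i$-th row is at most $O(n_c n)$.

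Next I would estimate $\mathbb{E}\|\bfe_i^T(\wcH\bfY)\bfZ\|_2^{2k}$ for $k = \lceil \log n \rceil$. Expanding the $2k$-th power gives a sum of expectations of products $\prod_{\ell=1}^{2k} Y_{a_\ell, b_\ell}\,\overline{Y_{a'_\ell, b'_\ell}}$ against inner products of the corresponding coefficient vectors. Because the entries of $\bfY$ are independent and have zero mean, only pairings in which every index $(a,b)$ appears an even number of times contribute. I would bound the surviving terms using the moment hypothesis $\mathbb{E}|Y_{a,b}|^{2s} \le 1/(n_c n)$ from \eqref{eqn: lemma_H_2_assumption}, together with the Cauchy–Schwarz inequality applied inside the inner products of coefficient vectors. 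After counting the pairings (a standard combinatorial bound giving a factor of the form $(2k)!/(2^k k!) \cdot (n_c n)^k \cdot (n_c n)^{-k}$ absorbed with the coefficient norm bound), I expect to obtain
\begin{equation*}
\mathbb{E}\,\|\bfe_i^T(\wcH\bfY)\bfZ\|_2^{2k} \;\le\; (C\,k)^{2k}\,\bigl(\max_j \|\bfv_j\|_2\bigr)^{2k}.
\end{equation*}
Markov's inequality then yields the tail bound $\mathbb{P}\{\|\bfe_i^T(\wcH\bfY)\bfZ\|_2 > C_3 \log n \cdot \max_j \|\bfv_j\|_2\} \le n^{-4}$ for a suitable absolute constant $C_3 > e^4$, and a union bound over $i\in[n_c n]$ delivers the stated $1 - n_c n^{-3}$ probability guarantee.

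The main obstacle I anticipate is bookkeeping in the moment computation: I need the coefficient norm factor $\|\bfr^{(i)}_{a,b}\|_2^{2k}$ and the moment factor $\mathbb{E}\prod |Y|^{2s_j}$ to telescope into $(\max_j \|\bfv_j\|_2)^{2k}$ without introducing an extraneous $n_c n$ factor that would spoil the final $\log n$ scaling. The key normalization that makes this work is precisely \eqref{eqn: lemma_H_2_assumption}: the uniform bound $\mathbb{E}|Y_{a,b}|^k \le (n_c n)^{-1}$ cancels the combinatorial number $O(n_c n)$ of positions in which a given $Y_{a,b}$ can be paired. A secondary subtlety is that the coefficient vectors $\bfr^{(i)}_{a,b}$ for different $(a,b)$ can share rows of $\bfZ$ (through the Hankel anti-diagonal repetition within a single row of $\wcH\bfY$), so I would carefully use the bound $\|\bfr^{(i)}_{a,b}\|_2 \le C n_c \max_j \|\bfv_j\|_2$ and verify that the associated multiplicities are absorbed in the absolute constant, mirroring the argument in Lemma 9 of \cite{JN15} but collapsing their per-entry bound to a vector $\ell_2$ bound as done in \cite{CGJ16}, which is what enables the $\max_j \|\bfe_j^T \bfZ\|_2$ (rather than a weaker $r^{1/2}$-inflated) factor on the right-hand side.
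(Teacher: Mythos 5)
Your proposal matches the paper's high-level approach exactly: the paper explicitly states that the proof is ``similar to Lemma 9 of \cite{JN15}'' but bounds the $\ell_2$-norm of $\bfe_i^T(\wcH\bfY)\bfZ$ directly (citing \cite{CGJ16} for that refinement), and your plan — decompose the row of $\wcH\bfY$ into a linear combination $\sum_{(a,b)} Y_{a,b}\bfr^{(i)}_{a,b}$, estimate $\mathbb{E}\|\bfe_i^T(\wcH\bfY)\bfZ\|_2^{2k}$ for $k\approx\log n$ via independence and the moment hypothesis \eqref{eqn: lemma_H_2_assumption}, then apply Markov and a union bound over $i$ to land at $1-n_cn^{-3}$ with $C_3>e^4$ — is precisely that strategy.

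The one place where I think the proposal currently has a genuine gap is the claim that the $n_c$-fold repetition ``is absorbed in the absolute constant.'' This works for the rows of $\wcH\bfY$ coming from the upper (conjugate-transpose) block: there each $Y_{a,b}$ appears once, the sum ranges over $n_cn_1\lesssim n_cn$ distinct entries, $\|\bfr^{(i)}_{a,b}\|_2\le\max_j\|\bfv_j\|_2$, and the factor $n_cn_1\cdot(n_cn)^{-1}\lesssim1$ closes cleanly. But for rows coming from the lower block of $\wcH$ the bookkeeping is different: there the sum contains only $n_2\le n$ \emph{distinct} entries $Y_{k,k_1+j'}$, each of which appears $n_c$ times, so $\bfr^{(i)}_{k,k_1+j'}=\sum_{m=0}^{n_c-1}\bfv_{mn_2+j'}$ is a sum of $n_c$ distinct rows of $\bfZ$ and the crude estimate $\|\bfr^{(i)}_{a,b}\|_2\le n_c\max_j\|\bfv_j\|_2$ already produces $\mathbb{E}\|\bfe_i^T(\wcH\bfY)\bfZ\|_2^2\lesssim (n_2 n_c/n)\cdot(\max_j\|\bfv_j\|_2)^2$, i.e.\ a factor of order $n_c$ rather than an absolute constant. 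The higher-moment computation and Markov step will inherit this, so the ``cancellation'' claim $(n_cn)^{-1}\times O(n_cn)=O(1)$ does not apply verbatim to lower-block rows — the number of distinct paired positions there is $O(n)$, not $O(n_cn)$, while the coefficient norm squared is $O(n_c^2)$. You need to either exhibit a sharper structural fact about $\bfr^{(i)}_{a,b}$ for those rows, or make explicit how this $n_c$-dependence is tracked when the lemma is invoked inside Lemma~\ref{Lemma:key_lemma_L} (where $\beta_t$ already carries $\sqrt{n_c}$); as written, ``verify that the associated multiplicities are absorbed in the absolute constant'' is a placeholder for the part of the argument that actually requires the most care.
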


In the following lemmas, $\widetilde{\bfS}_{t}$, $\bfX_{t}$, $\bfH_t$, $\wW_{t}$ and $\wL_t$ are generated in the $k$-th outer loop unless otherwise specified. For convenience, we use  ${\lambda}_i^*$ to denote ${\lambda}_{2i-1}(\wcH\bfX^*)$, which is the $(2i-1)$-{th} largest eigenvalue (in absolute value) of $\wcH\bfX^*$. Similarly, ${\lambda}_i^{(t)}$ stands for ${\lambda}_{2i-1}(\wW_t)$, which is the $(2i-1)$-{th} largest eigenvalue (in absolute value) of $\wW_t$.

Lemma \ref{Lemma:sigma_W} proves that the assumptions \eqref{eqn:noL_1} and \eqref{eqn:noL_2} are equivalent.
Lemma \ref{Lemma: L} shows the reduction of $\|\bfX_{t+1}-\bfX^*\|_{\infty}$ as $t$ increases. Moreover, the error bound of $\|\widetilde{\bfS}_{t+1}-\bfS^*\|_{\infty}$ is given in  Lemma \ref{Lemma: S} based on the bound of $\|\bfX_{t+1}-\bfX^*\|_{\infty}$.

\begin{lemma}[Weyl's inequality]\label{Lemma:com_singular value}
	Suppose $\bfA$, $\bfB\in \mathbb{C}^{n\times n}$ are two symmetric matrices satisfying $\bfB=\bfA+\bfE$. 
	Then,
	\begin{equation}
	|\lambda_i(\bfB)-\lambda_i(\bfA)|\le\|\bfE\|_2, \quad 1\le i\le n.
	\end{equation}
\end{lemma}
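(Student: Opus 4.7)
The plan is to invoke the Courant--Fischer (min-max) characterization of eigenvalues of Hermitian matrices, since $\bfA$, $\bfB$, and $\bfE=\bfB-\bfA$ are all Hermitian (the statement writes ``symmetric'' in $\mathbb{C}^{n\times n}$, which in context of having real eigenvalues $\lambda_i$ must mean Hermitian). Recall that for a Hermitian matrix $\bfM\in\mathbb{C}^{n\times n}$ with eigenvalues ordered $\lambda_1(\bfM)\ge\lambda_2(\bfM)\ge\cdots\ge\lambda_n(\bfM)$,
\[
\lambda_i(\bfM)\;=\;\max_{\substack{S\subseteq\mathbb{C}^n\\ \dim S=i}}\;\min_{\substack{x\in S\\ \|x\|_2=1}} x^H\bfM x.
\]
This variational formula is exactly what lets one transfer a pointwise comparison between quadratic forms $x^H\bfA x$ and $x^H\bfB x$ into a comparison of the eigenvalues themselves.

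First, I would control the perturbation quadratic form. Since $\bfE$ is Hermitian, its operator norm equals its spectral radius, so for any unit vector $x$,
\[
|x^H\bfE x|\;\le\;\|\bfE\|_2.
\]
Writing $x^H\bfB x=x^H\bfA x+x^H\bfE x$ and combining with the bound above gives the two-sided inequality
\[
x^H\bfA x-\|\bfE\|_2\;\le\;x^H\bfB x\;\le\;x^H\bfA x+\|\bfE\|_2,
\]
which holds uniformly over all unit vectors $x\in\mathbb{C}^n$.

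Next, I would plug these pointwise inequalities into the Courant--Fischer formula. Taking $\min_{x\in S,\,\|x\|_2=1}$ of both sides over a fixed $i$-dimensional subspace $S$ preserves the inequality because the additive constants $\pm\|\bfE\|_2$ do not depend on $x$, and subsequently taking $\max$ over $i$-dimensional subspaces preserves it for the same reason. This yields
\[
\lambda_i(\bfA)-\|\bfE\|_2\;\le\;\lambda_i(\bfB)\;\le\;\lambda_i(\bfA)+\|\bfE\|_2,
\]
which is exactly $|\lambda_i(\bfB)-\lambda_i(\bfA)|\le\|\bfE\|_2$, as required.

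Because this is the classical Weyl inequality, there is no genuine obstacle; the only point that requires mild care is ensuring the variational characterization applies, which it does as soon as $\bfA$, $\bfB$ are Hermitian. If one prefers a proof that avoids min-max, an equivalent route is to pick orthonormal eigenbases of $\bfA$ and $\bfB$ corresponding to the top $i$ and bottom $n-i+1$ eigenvalues respectively, intersect the two spans (which has dimension $\ge 1$ by a pigeonhole count), pick a unit vector $x$ in the intersection, and compare $x^H\bfA x$, $x^H\bfB x$ directly---this alternative would be the backup plan if for some reason the min-max formulation were inconvenient in the broader proof pipeline.
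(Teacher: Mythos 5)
Your proof is correct, and it is the standard Courant--Fischer argument for Weyl's eigenvalue perturbation inequality. The paper itself does not prove this lemma at all --- it simply states it as the classical Weyl inequality (and elsewhere in the appendix cites Bhatia's \emph{Matrix Analysis} for the same fact) --- so there is no ``paper proof'' to compare against. Your derivation, bounding the quadratic form $|x^H\bfE x| \le \|\bfE\|_2$ uniformly and then passing through the min-max characterization, is the textbook route and fills in the (deliberately omitted) details faithfully; the one small remark worth making is that you correctly note ``symmetric'' over $\mathbb{C}$ should be read as Hermitian for the variational formula and real-eigenvalue ordering to apply, which is indeed what the paper intends in its application to the symmetric-extension Hankel operator $\wcH\bfX^*$.
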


\begin{lemma}[\cite{ZW19}, Lemma 5]
\label{Lemma:sigma_W}
	Suppose the assumptions in Theorem \ref{Theorem: bad data and missing data} and  
	\begin{equation}
	\begin{split}
	&\|\wS_t-\bfS^*\|_{\infty}\le\frac{7\tmu\tr}{n_cn}\Big(|\lambda^*_{k+1}|+\Big(\frac{1}{2}\Big)^{t-1}|\lambda^*_k|\Big),\\
	&\quad  Supp(\wS_t-\bfS^*)\subseteq Supp(\bfS^*),
	\end{split}
	\end{equation}
	\begin{equation}\label{eqn:spectral_norm_H_t}
	\|\bfX_t-\bfX^*\|_{\infty}\le\frac{2\tmu\tr}{n_cn}\Big(|\lambda^*_{k+1}|+\Big(\frac{1}{2}\Big)^{t-1}|\lambda^*_k|\Big).
	\end{equation}
	With probability at least $1-n_cn^{-2}$, we have
	\begin{equation}
	\|\bfH_{t}\|_2\le \frac{1}{60}\Big(|\lambda^*_{k+1}|+\Big(\frac{1}{2}\Big)^{t-1}|\lambda^*_k|\Big).
	\end{equation}
	provided that $\widehat{m}\ge C_4\tmu^2\tr^2\log(n)$.
\end{lemma}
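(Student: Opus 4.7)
My plan is to decompose $\bfH_t = \bfH_{1,t} + \bfH_{2,t}$ following \eqref{def:H1} and bound each summand by $\frac{1}{120}\big(|\lambda^*_{k+1}|+\big(\frac{1}{2}\big)^{t-1}|\lambda^*_k|\big) =: \frac{1}{120}L_t$, then conclude by the triangle inequality. The hypotheses on $\|\wS_t-\bfS^*\|_\infty$, $\|\bfX_t-\bfX^*\|_\infty$, and the support condition are exactly the ingredients needed for two different arguments: one deterministic (sparsity-based) for $\bfH_{1,t}$, and one probabilistic (matrix Bernstein) for $\bfH_{2,t}$.

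First I would handle $\bfH_{1,t}=\wcH(\bfS^*-\wS_t)$ by exploiting sparsity. Since $\mathrm{Supp}(\wS_t-\bfS^*)\subseteq\mathrm{Supp}(\bfS^*)$ and each row of $\bfS^*$ has at most $\alpha n$ nonzeros by Assumption~1 with $\alpha\le C_1/(\mu c_s r)$, the symmetric Hankel lifting produces a matrix whose rows (and columns, by symmetry) contain on the order of $\alpha n_c n$ nonzero entries, each of magnitude at most $\|\wS_t-\bfS^*\|_\infty$. Applying the elementary bound $\|\bfA\|_2\le s\,\|\bfA\|_\infty$ for matrices with at most $s$ nonzeros per row/column, and inserting the hypothesis $\|\wS_t-\bfS^*\|_\infty \le \frac{7\tmu\tr}{n_cn}L_t$, yields an estimate of the form $\|\bfH_{1,t}\|_2\le C\,\alpha\,\tmu\,\tr\,L_t$. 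Using $\tmu\le\tfrac{c_s}{2}\mu$ and $\tr=2r$, together with Assumption~1's choice $C_1\le 1/840$, forces $\|\bfH_{1,t}\|_2\le L_t/120$.

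The second and harder step is to bound $\bfH_{2,t}=\wcH(\mathcal I-\hat p^{-1}\mathcal P_{\Omega_{k,t}})\bfY$ where $\bfY := \bfX_t+\wS_t-\bfX^*-\bfS^*$. By the triangle inequality on the two entrywise hypotheses, $\|\bfY\|_\infty \le \frac{9\tmu\tr}{n_cn}L_t$. Crucially, because $\Omega_{k,t}$ is a freshly sampled disjoint subset (line~3 of Alg.~\ref{Alg3}), $\bfY$ is independent of $\mathcal P_{\Omega_{k,t}}$, so I would write
\begin{equation*}
\bfH_{2,t} \ =\ \sum_{a=1}^{\widehat m} \bfZ_a,
\end{equation*}
where each $\bfZ_a$ is built from a single sampled index $(i_a,j_a)$ through the Hankel basis element $\wcH(\bfe_{i_a}\bfe_{j_a}^T)$ weighted by $Y_{i_a,j_a}\big(1-\hat p^{-1}\mathbb 1_{(i_a,j_a)\in\Omega_{k,t}}\big)$. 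Each $\bfZ_a$ is zero-mean and bounded; using $\|\wcH(\bfe_i\bfe_j^T)\|_2\le\sqrt{n_c w_j}$ with $w_j\le\min(n_1,n_2)$, I would compute the uniform bound $R=\max_a\|\bfZ_a\|_2$ and the matrix variance $\sigma^2 = \|\sum_a \mathbb E[\bfZ_a^2]\|_2$, then invoke Lemma~\ref{prob} to obtain, with probability at least $1-n_cn^{-2}$, a bound of the form $\|\bfH_{2,t}\|_2\le C\sqrt{\tfrac{n_cn\log n}{\hat p}}\,\|\bfY\|_\infty$. Plugging in the sample-size hypothesis $\widehat m\ge C_4\,\tmu^2\tr^2\log n$ for sufficiently large $C_4$ then forces $\|\bfH_{2,t}\|_2\le L_t/120$.

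The main obstacle is the variance calculation in the Bernstein step: I must carefully track how the Hankel basis elements $\wcH(\bfe_i\bfe_j^T)$ interact inside $\mathbb E[\bfZ_a^2]$ so that the resulting estimate scales like $\sqrt{n_cn\log n/\hat p}\,\|\bfY\|_\infty$ rather than the much cruder $\sqrt{n_1n_2}\,\|\bfY\|_\infty$ that a naive Frobenius bound would give; the overlapping-entry structure of $\wcH$ couples the terms, so a direct application of concentration tools used for unstructured matrices is not sharp enough to absorb the $\tmu^2\tr^2$ factor in the sample-size assumption. Verifying independence via the disjoint-partition construction, and tracking the constants so they collapse cleanly to $1/120$, will occupy most of the technical work; the sparsity bound for $\bfH_{1,t}$ is by comparison a direct consequence of Assumption~1.
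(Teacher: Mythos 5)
Your proposal is correct and follows what is essentially the canonical approach, which is also the one the paper adopts via the decomposition $\bfH_t=\bfH_{1,t}+\bfH_{2,t}$ in \eqref{def:H1}. You correctly identify that the support condition plus Assumption~1 gives $\alpha\tmu\tr\le C_1$ (since $\tmu\le\tfrac{c_s}{2}\mu$ and $\tr=2r$ cancel the $\mu c_s r$ in $\alpha$'s bound), so the Schur-test sparsity bound on the symmetric Hankel lifting of $\bfS^*-\wS_t$ delivers $\|\bfH_{1,t}\|_2\le 7C_1 L_t\le L_t/120$ deterministically; and you correctly note that the fresh disjoint partition $\widehat\Omega_{k,t}$ (sampling with replacement, as remarked in the appendix notation) makes $\bfY=\bfX_t+\wS_t-\bfX^*-\bfS^*$ independent of $\mathcal P_{\Omega_{k,t}}$, which is the prerequisite for applying the matrix-Bernstein bound in the style of Lemma~\ref{prob} (and the companion manipulation $t\le\sigma^2/R$ noted there) to $\bfH_{2,t}$. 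The variance calculation you flag as the main obstacle does work out: using $\|\wcH(\bfe_i\bfe_j^H)\|_2=\sqrt{n_c}$ and the fact that the partial-permutation Hankel basis elements give a diagonal second moment with entries bounded by $n_2\|\bfY\|_\infty^2$, the variance term dominates under the sample-size hypothesis, and the resulting $\sqrt{\log n/\widehat m}\,n_cn\,\|\bfY\|_\infty$-type bound (cf.\ Lemma~\ref{condition3.1} from Chapter~1) combines with $\|\bfY\|_\infty\le\tfrac{9\tmu\tr}{n_cn}L_t$ and $\widehat m\ge C_4\tmu^2\tr^2\log n$ (with $C_4$ chosen large enough) to give $\|\bfH_{2,t}\|_2\le L_t/120$, matching your target.
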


\begin{lemma}[\cite{ZW19}, Lemma 6]
\label{Lemma: L}
	Suppose the assumptions in Theorem \ref{Theorem: bad data and missing data} and
	\begin{equation}\label{eqn:noL_1}
	\begin{split}
	&\|\wS_t-\bfS\|^*_{\infty}\le\frac{8\tmu\tr}{n_cn}\Big(\sL[t-1]\Big),\\
	&\quad  Supp(\wS_t)\subseteq Supp(\bfS^*),\\
	&\|\bfX_t-\bfX^*\|_{\infty}\le\frac{2\tmu\tr}{n_cn}\Big(\sL[t-1]\Big).
	\end{split}
	\end{equation}
	With probability at least $1-\frac{n_c\log^3(n_cn)}{n^{2}}$, we have
	\begin{equation}
	\|\bfX_{t+1}-\bfX^*\|_{\infty}\le\frac{2\tmu\tr}{n_cn}\Big(\sL[t]\Big)
	\end{equation}
	provided that $\widehat{m}\ge C_5\tmu^2\tr^2\log^2(n)$.
\end{lemma}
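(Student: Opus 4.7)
\noindent\textbf{Proof proposal for Lemma \ref{Lemma: L}.}

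The plan is to reduce the entrywise bound on $\bfX_{t+1}-\bfX^*$ to an entrywise bound on $\wL_{t+1}-\wcH\bfX^*$, and then to a series expansion in powers of $\bfH_t$ that can be controlled by the key Lemma \ref{Lemma:key_lemma_L}. Since $\bfX_{t+1}=\wcH^\dagger(\wL_{t+1})$ and $\bfX^*=\wcH^\dagger(\wcH\bfX^*)$, and $\wcH^\dagger$ averages over anti-diagonals (so it does not inflate the $\ell_\infty$ norm beyond a constant depending on $n_c$), it suffices to bound $\max_{i,j}|\bfe_i^T(\wL_{t+1}-\wcH\bfX^*)\bfe_j|$.

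First I would verify the hypotheses of the two key auxiliary lemmas. The assumption on $\|\bfX_t-\bfX^*\|_\infty$ in \eqref{eqn:noL_1} is exactly \eqref{eqn:spectral_norm_H_t}, and the bound $\|\wS_t-\bfS^*\|_\infty\le 8\tmu\tr/(n_cn)\cdot(\sL[t-1])$ together with the support containment gives the hypothesis of Lemma \ref{Lemma:sigma_W}. Hence, with the sample size $\widehat m\ge C_5\tmu^2\tr^2\log^2 n$, both Lemmas \ref{Lemma:sigma_W} and \ref{Lemma:key_lemma_L} are available, and in particular $\|\bfH_t\|_2\le\tfrac{1}{60}(\sL[t-1])$. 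Combining this with Weyl's inequality (Lemma \ref{Lemma:com_singular value}) applied to $\wW_t=\wcH\bfX^*+\bfH_t$ gives a spectral gap between $|\lambda_{2k}(\wW_t)|$ and $|\lambda_{2k+1}(\wW_t)|$ of at least roughly $|\lambda_k^*|-|\lambda^*_{k+1}|-2\|\bfH_t\|_2$, which is what we need for the series expansion to converge.

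Next, using a Neumann-series / contour-integral representation of the rank-$2k$ spectral projector of $\wW_t$ around the top $2k$ eigenvalues of $\wcH\bfX^*$, I would expand
\[
\wL_{t+1}-\wcH\bfX^* \;=\; \sum_{a\ge 1}\bfT_a,
\]
where each $\bfT_a$ is a finite sum of products of the form (spectral projector of $\wcH\bfX^*$)$\cdot(\bfH_t)^{a_1}\cdot$(spectral projector)$\cdots(\bfH_t)^{a_s}$ with $\sum a_i=a$, and each factor is a low-rank incoherent matrix with $\max_i\|\bfe_i^T(\cdot)\|_2\le\sqrt{\tmu\tr/(n_cn)}$ by the incoherence assumption on $\wcH\bfX^*$. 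Truncating the sum at $a=\log(n_cn)$ is harmless because the tail is bounded in operator norm by a geometric factor in $\|\bfH_t\|_2/|\lambda_k^*|$. For $1\le a\le\log(n_cn)$, I would apply Lemma \ref{Lemma:key_lemma_L} to each factor $\bfe_i^T(\bfH_t)^{a_i}\bfZ$, using the incoherence bound on $\bfZ$ (a spectral projector of $\wcH\bfX^*$) to supply the $\max_j\|\bfe_j^T\bfZ\|_2$. With the hypotheses, $\beta_t=O(\sqrt{n_cn/\hat p}\cdot\tmu\tr/(n_cn)\cdot(\sL[t-1]))$ and $\alpha n_cn\|\bfH_{1,t}\|_\infty\le C\tmu\tr(\sL[t-1])/K$, so the per-step multiplicative factor in Lemma \ref{Lemma:key_lemma_L} is bounded by a small constant times $(\sL[t-1])/|\lambda_k^*|$, and the sum collapses into a geometric series dominated by its first term.

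Summing these entrywise bounds and pulling out a factor of $\tmu\tr/(n_cn)$ gives
\[
\max_{i,j}|\bfe_i^T(\wL_{t+1}-\wcH\bfX^*)\bfe_j|\;\le\;\frac{C\tmu\tr}{n_cn}\Bigl(|\lambda^*_{k+1}|+\tfrac12\,|\lambda^*_k|\cdot\tfrac{\sL[t-1]}{|\lambda^*_k|}\Bigr),
\]
and the recursion $\sL[t]=|\lambda^*_{k+1}|+(1/2)^{t}|\lambda^*_k|$ together with the constant $C$ being absorbed into the leading $2$ yields the desired bound $\|\bfX_{t+1}-\bfX^*\|_\infty\le 2\tmu\tr/(n_cn)\cdot(\sL[t])$. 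Taking a union bound over the $a\le\log(n_cn)$ applications of Lemma \ref{Lemma:key_lemma_L} and the single application of Lemma \ref{Lemma:sigma_W} yields the stated failure probability $n_c\log^3(n_cn)/n^2$.

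The main obstacle will be the spectral-expansion step: one must argue precisely that the top-$2k$ projector of $\wW_t$ differs from that of $\wcH\bfX^*$ by a convergent series whose $a$-th term is $(\bfH_t)^a$ sandwiched between spectral projectors, and track the geometric improvement $(1/2)^t$ coming from the shrinking noise level. In particular, getting the constant $2$ (rather than some larger absorbed constant) in the final bound requires the constant $1/60$ from Lemma \ref{Lemma:sigma_W} to propagate through the geometric sum, which constrains how large the absolute constant $C_5$ in the sample complexity must be. The entrywise control of the random term $\bfH_{2,t}$ (and the sparse term $\bfH_{1,t}$) against the incoherent projectors, rather than merely an operator norm bound, is precisely what Lemma \ref{Lemma:key_lemma_L} is designed to provide, so the argument reduces to a careful bookkeeping of these applications.
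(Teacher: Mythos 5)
Your high-level plan — reduce to an entrywise bound on $\wL_{t+1}-\wcH\bfX^*$, expand in powers of $\bfH_t$ sandwiched between incoherent spectral projectors, apply Lemma \ref{Lemma:key_lemma_L}, and sum the geometric series — is the right framework and is exactly what the paper (following \cite{NNSAJ14,CGJ16}) does. However, the Neumann/contour-integral step as you have set it up does not converge in the regime the lemma must cover, and this is a genuine gap.

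You invoke a spectral gap of roughly $|\lambda_k^*|-|\lambda^*_{k+1}|-2\|\bfH_t\|_2$ to justify the resolvent series. But nothing in Theorem \ref{Theorem: bad data and missing data} or in the hypotheses \eqref{eqn:noL_1} controls $|\lambda_k^*|-|\lambda^*_{k+1}|$: SAP's guarantee is explicitly condition-number-free, and the whole point of the rank-stagewise outer loop is to handle matrices where consecutive singular values cluster. Lemma \ref{Lemma:sigma_W} only gives $\|\bfH_t\|_2\le\frac{1}{60}(\sL[t-1])$, which is a fraction of $|\lambda_k^*|$, not a fraction of the gap. When $|\lambda^*_{k+1}|$ is within $O(\|\bfH_t\|_2)$ of $|\lambda_k^*|$, a contour separating the top $2k$ eigenvalues of $\wW_t$ from the rest has no room and the series diverges. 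The correct organization — used in the AltProj/R-RMC lemmas — does not try to recover the exact top-$2k$ projector; instead it decomposes $\wL_{t+1}-\wcH\bfX^*$ so that the contribution from the possibly-unresolved boundary eigenspace is absorbed directly into the $|\lambda^*_{k+1}|$ term of $\sL[t]$, which is why that term sits outside the $(1/2)^t$ contraction in the target. Without that reorganization, the step between your display and the claimed bound is not justified.

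Two smaller points. First, Lemma \ref{Lemma:key_lemma_L} outputs a row $\ell_2$ bound $\max_i\|\bfe_i^T(\bfH_t)^a\bfZ\|_2$; to bound a single $(i,j)$ entry of $\wL_{t+1}-\wcH\bfX^*$ you must apply it from both the $i$ and $j$ sides (i.e., control $\bfe_i^T(\bfH_t)^{a_1}\bfZ\bfZ^H(\bfH_t^H)^{a_2}\bfe_j$) and then Cauchy--Schwarz; your sketch applies it only from the left. Second, the stated failure probability $n_c\log^3(n_cn)/n^2$ does not follow from one use of Lemma \ref{Lemma:key_lemma_L} and one of Lemma \ref{Lemma:sigma_W}: the extra $\log^2$ comes precisely from the two-sided union bound over pairs $(a_1,a_2)$ with $a_1,a_2\le\log(n_cn)$, so the failure probability bookkeeping is tied to the same two-sided structure your sketch omits.
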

\begin{lemma}[\cite{ZW19}, Lemma 7]
\label{Lemma: S}
	Suppose the assumptions in Theorem \ref{Theorem: bad data and missing data} and
	\begin{equation}\label{eqn:noL_2}
	\begin{split}
	&\quad\|\bfH_{t}\|_2\le \frac{1}{60}\Big(|\lambda^*_{k+1}|+\Big(\frac{1}{2}\Big)^{t-1}|\lambda^*_k|\Big),\\
	\quad&\|\bfX_{t+1}-\bfX^*\|_{\infty}\le\frac{2\tmu\tr}{n_cn}\Big(\sL[t]\Big).
	\end{split}
	\end{equation}
	Then, we have 
	\begin{equation}
	\begin{split}
	&\|\wS_{t+1}-\bfS^*\|_{\infty}\le\frac{7\tmu\tr}{n_cn}\Big(\sL[t]\Big),\\
	and& \qquad Supp(\wS_{t+1}-\bfS^*)\subseteq Supp(\bfS^*).
	\end{split}
	\end{equation}
\end{lemma}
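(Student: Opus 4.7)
The plan is to exploit the explicit form of the hard-thresholding update. By the equivalent formulation in \eqref{eqn:eqv}, we have $\wS_{t+1}=\mathcal{T}_{\xi_{t+1}}\!\left(\bfS^{*}+(\bfX^{*}-\bfX_{t+1})\right)$, so the proof reduces to (i) establishing a sharp upper bound on the threshold $\xi_{t+1}$, and (ii) analyzing the thresholding action entrywise on $Supp(\bfS^{*})$ and its complement. The first assumption in \eqref{eqn:noL_2} controls how far the spectrum of $\wW_t$ sits from that of $\wcH\bfX^{*}$, while the second assumption controls the entrywise distance $\|\bfX^{*}-\bfX_{t+1}\|_{\infty}$; together they will deliver both the support containment and the magnitude bound.

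First, I would convert the threshold formula in \eqref{eqn:eqv} into eigenvalue language using \eqref{eqn: SV_EV}, writing $\xi_{t+1}=\tfrac{\eta}{\sqrt{n_c}}\bigl(|\lambda_{k+1}^{(t)}|+(1/2)^{t}|\lambda_{k}^{(t)}|\bigr)$ with $\eta=4\mu c_sr/(\sqrt{n_c}n)$. Since $\wW_t=\wcH\bfX^{*}+\bfH_t$, Weyl's inequality (Lemma \ref{Lemma:com_singular value}) applied to these symmetric matrices yields $\bigl||\lambda_{i}^{(t)}|-|\lambda_{i}^{*}|\bigr|\le\|\bfH_t\|_{2}$ for $i\in\{k,k+1\}$. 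Plugging in the hypothesis $\|\bfH_t\|_{2}\le\tfrac{1}{60}\sL[t-1]$ and using $\sL[t-1]\le 2\sL[t]$, I get
\begin{equation*}
\xi_{t+1}\le\frac{\eta}{\sqrt{n_c}}\Bigl(\sL[t]+(1+(1/2)^{t})\|\bfH_t\|_{2}\Bigr)\le\frac{\eta}{\sqrt{n_c}}\cdot\tfrac{16}{15}\sL[t],
\end{equation*}
which, after substituting $\eta$ and using $\tmu\tr\asymp\mu c_s r$ (as noted after \eqref{eqn: widetilde_Hankel}), becomes an upper bound of the form $\xi_{t+1}\le\tfrac{5\tmu\tr}{n_cn}\sL[t]$ with room to spare.

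Next, I would split entrywise. On $(i,j)\notin Supp(\bfS^{*})$, the argument of the threshold equals $X^{*}_{i,j}-(X_{t+1})_{i,j}$, whose absolute value is at most $\|\bfX^{*}-\bfX_{t+1}\|_{\infty}\le\tfrac{2\tmu\tr}{n_cn}\sL[t]$; since this is below $\xi_{t+1}$, the entry is killed, giving $(\wS_{t+1})_{i,j}=0$, hence $Supp(\wS_{t+1}-\bfS^{*})\subseteq Supp(\bfS^{*})$. On $(i,j)\in Supp(\bfS^{*})$, a two-case analysis gives the bound: if the entry is kept, then $(\wS_{t+1}-\bfS^{*})_{i,j}=X^{*}_{i,j}-(X_{t+1})_{i,j}$, bounded by $\tfrac{2\tmu\tr}{n_cn}\sL[t]$; if the entry is killed, then $|S^{*}_{i,j}|\le\xi_{t+1}+|X^{*}_{i,j}-(X_{t+1})_{i,j}|$ by the triangle inequality, so $|(\wS_{t+1}-\bfS^{*})_{i,j}|=|S^{*}_{i,j}|\le\xi_{t+1}+\tfrac{2\tmu\tr}{n_cn}\sL[t]$. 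Combining both cases,
\begin{equation*}
\|\wS_{t+1}-\bfS^{*}\|_{\infty}\le\xi_{t+1}+\tfrac{2\tmu\tr}{n_cn}\sL[t]\le\tfrac{7\tmu\tr}{n_cn}\sL[t],
\end{equation*}
which is the desired conclusion.

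The main obstacle I anticipate is tracking the absolute constants carefully so the sum of the two contributions fits inside the factor $7$; in particular, the conversion between $\mu c_s r$ (used in $\eta$) and $\tmu\tr$ (used in the target bound) via the symmetric embedding \eqref{eqn: SVD_widetilde_Hankel}, together with the factor-of-two slack in $\sL[t-1]\le 2\sL[t]$, is where the bookkeeping needs to be most careful. The rest of the argument is essentially deterministic and follows from Weyl's inequality plus the basic properties of hard thresholding, so no additional probabilistic estimates or concentration arguments should be needed beyond what is already assumed.
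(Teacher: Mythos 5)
Your overall structure — writing $\wS_{t+1}=\mathcal{T}_{\xi_{t+1}}(\bfX^*+\bfS^*-\bfX_{t+1})$, controlling the spectrum of $\wW_t=\wcH\bfX^*+\bfH_t$ by Weyl's inequality, and then splitting entrywise on $Supp(\bfS^*)$ versus its complement — matches the paper's proof, and your upper bound $\xi_{t+1}\le\frac{16}{15}\cdot\frac{\eta}{\sqrt{n_c}}\Big(\sL[t]\Big)$ is derived correctly. However, there is a genuine gap in the support-containment step. When you assert that the off-support entries, of magnitude at most $\|\bfX^*-\bfX_{t+1}\|_\infty\le\frac{2\tmu\tr}{n_cn}\Big(\sL[t]\Big)$, ``are below $\xi_{t+1}$'' and hence get killed, you are implicitly invoking a \emph{lower} bound on the threshold, yet the only estimate you derive for $\xi_{t+1}$ is an \emph{upper} bound. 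An upper bound alone cannot rule out the threshold dropping below $\|\bfX^*-\bfX_{t+1}\|_\infty$, in which case an off-support entry would survive thresholding and the claimed containment $Supp(\wS_{t+1}-\bfS^*)\subseteq Supp(\bfS^*)$ would fail.

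The fix is short and uses the same Weyl estimate in the other direction: $|\lambda_i^{(t)}|\ge|\lambda_i^*|-\|\bfH_t\|_2$ for $i\in\{k,k+1\}$, so
\begin{equation*}
\xi_{t+1}\ge\frac{\eta}{\sqrt{n_c}}\Big(\sL[t]\Big)-\frac{\eta}{\sqrt{n_c}}\cdot 2\|\bfH_t\|_2\ge\frac{\eta}{\sqrt{n_c}}\cdot\frac{14}{15}\Big(\sL[t]\Big)=\frac{56\tmu\tr}{15\,n_cn}\Big(\sL[t]\Big),
\end{equation*}
which strictly dominates $\frac{2\tmu\tr}{n_cn}\Big(\sL[t]\Big)$. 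Once this lower bound is written out, the off-support entries are indeed killed and your argument goes through. As a minor remark, the conversion you flagged as a bookkeeping worry is in fact an identity rather than an asymptotic equivalence: with $\tmu:=\frac{c_s}{2}\mu$ and $\tr=2r$ one has $\tmu\tr=\mu c_s r$, hence $\frac{\eta}{\sqrt{n_c}}=\frac{4\tmu\tr}{n_cn}$ exactly, and both your upper bound and the needed lower bound on $\xi_{t+1}$ fit within the factor $7$ with room to spare.
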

\section{Proof of Theorem \ref{Theorem: bad data and missing data}}\label{sec: c2p1}
The proof of Theorem \ref{Theorem: bad data and missing data} follows the similar framework established in AltProj \cite{NNSAJ14} by inductions over $k$ and $t$. 
Here, we are mainly focused on the inductions over $k$ and $t$ for \eqref{eqn:induction over t}. 
The induction over $k$ follows naturally for the selected $T$, which is the iteration number of inner loop.
The key part is
the induction over $t$, and the critical steps are verified by applying Lemmas \ref{Lemma:sigma_W}, \ref{Lemma: L} and \ref{Lemma: S} recursively. Lemmas \ref{Lemma: L} and \ref{Lemma: S} play the similar roles as Lemmas 7 and 9 in \cite{NNSAJ14}.
However, we need an extra lemma \ref{Lemma:sigma_W} to handle the additional item $\bfH_{t}$ caused by the partial observation since AltProj only considers the case of full observation.  
 
\begin{proof}[Proof of Theorem \ref{Theorem: bad data and missing data}]
	The proof is based on induction over $t$ and $k$ for the following equation:
	\begin{equation}\label{eqn:induction over t}
	\begin{split}
	&\|\wS_t^{(k)}-\bfS^*\|_{\infty}\le\frac{7\tmu\tr}{n_cn}\Big(\sL\Big),\\
	&\quad Supp(\wS_t^{(k)}-\bfS^*)\subseteq Supp(\bfS^*),\\
	&\|\bfX_{t}^{(k)}-\bfX^*\|_{\infty}\le\frac{2\tmu\tr}{n_cn}\Big(\sL[t-1]\Big),
	\end{split}
	\end{equation} where we use $\bfX_t^{(k)}$ to represent the iteration $\bfX_t$ generated in the $k$-th outer loop, similar for $\wS_{t}^{(k)}$, $\bfH_{t}^{(k)}$ and $\xi_{t}^{(k)}$.\\
	\textbf{Base Case:} When $k=1$ and $t=0$, $\bfX_{0}^{(1)}$ is initialized as $\bf0$. Since $\wcH\bfX^*$ is $\tmu$-incoherent, we have 
	\begin{equation}\label{eqn:initial_X}
	\|\bfX^*-\bfX_{0}^{(1)}\|_{\infty}=\|\bfX^*\|_{\infty}=\|\wcH\bfX^*\|_{\infty}\le\frac{\tmu\tr}{n_cn}\lambda_{1}^*.
	\end{equation}
	Note that the hard thresholding $\xi_{0}^{(1)}$ is initialized as $\frac{4\tmu\tr}{n_cn}\lambda_{1}^*$, for $\wS_{0}^{(1)}$, we consider three cases:\\
	\textbf{Case 1}: If $S_{i,j}^*=0$, then $(\wS_{0}^{(1)})_{i,j}=\mathcal{T}_{\xi_{0}^{(1)}}(X_{i,j}^*)$.
	\begin{equation}
	\begin{split}
	|X_{i,j}^*|\le\frac{\tmu\tr}{n_cn}\lambda_{1}^*\le\xi_{0}^{(1)}.
	\end{split}
	\end{equation} 
	Hence, $(\wS_{0}^{(1)})_{i,j}=0$.\\
	\textbf{Case 2:} If $S_{i,j}^*\ne0$ and $|M_{i,j}|>\xi_{0}^{(1)}$, then $(\wS_{0}^{(1)})_{i,j}=S_{i,j}^*+X_{i,j}^*.$
	\begin{equation}
	\begin{split}
	|(\wS_{0}^{(1)})_{i,j}-S_{i,j}^*|=|X_{i,j}^*|\le \frac{\tmu\tr}{n_cn}\lambda_{1}^*.
	\end{split}
	\end{equation}\\
	\textbf{Case 3:} If $S_{i,j}^*\ne0$ and $|M_{i,j}|\le\xi_{0}^{(1)}$, then $(\wS_{0}^{(1)})_{i,j}=0.$
	\begin{equation}
	\begin{split}
	|(\wS_{0}^{(1)})_{i,j}-S_{i,j}^*|
	=&|S_{i,j}^*|
	\le\xi_{0}^{(1)}+|X_{i,j}^*|
	\le\frac{5\tmu\tr}{n_cn}\lambda_{1}^*.
	\end{split}
	\end{equation}
	Hence, we have 
	\begin{equation}\label{eqn:initial_S}
	\begin{split}
	&\|\wS_{0}^{(1)}-\bfS^*\|_{\infty}\le \frac{5\tmu\tr}{n_cn}\lambda_{1}^*,\\
	&Supp(\wS_{0}^{(1)}-\bfS^*)=Supp(\bfS^*).
	\end{split}\end{equation}
	From \eqref{eqn:initial_X} and \eqref{eqn:initial_S}, we know that \eqref{eqn:induction over t} is true in the base case.
	\\
	\textbf{Induction over $t$:} For any fixed $k\ge0$, suppose that $\wS_{t}^{(k)}$ and $\bfX_{t}^{(k)}$ satisfy  \eqref{eqn:induction over t}
	for some $t\ge0$.
	Then, according to Lemma \ref{Lemma: L}, we have 
	\begin{equation}\label{eqn:not_indection over X}
	\|\bfX_{t+1}^{(k)}-\bfX^*\|_{\infty}\le\frac{2\tmu\tr}{n_cn}\Big(\sL[t]\Big).
	\end{equation}
	Note in Lemma \ref{Lemma:sigma_W}, \eqref{eqn:induction over t} suggests that 
	\begin{equation}\label{eqn:not_indection over H}
	\|\bfH_{t}^{(k)}\|_2\le \frac{1}{60}\Big(\sL[t-1]\Big),
	\end{equation} 
	with high probability. By Lemma \ref{Lemma: S}, \eqref{eqn:not_indection over X} and \eqref{eqn:not_indection over H} give that 
	\begin{equation}\label{eqn:E_t+1}	
	\begin{split}
	&\|\wS_{t+1}^{(k)}-\bfS^*\|_{\infty}\le\frac{7\tmu\tr}{n_cn}\Big(\sL[t]\Big),\\
	&\quad Supp(\wS_{t+1}^{(k)}-\bfS^*)\subseteq Supp(\bfS^*).
	\end{split}
	\end{equation}
	Hence, \eqref{eqn:induction over t} is still valid for $\wS_{t+1}^{(k)}$ and $\bfX_{t+1}^{(k)}$.\\
	\textbf{Induction over $k$: } Suppose at $k^{th}$ stage, the initialization $\bfX_0^{(k)}$ and $\wS_{0}^{(k)}$ satisfy \eqref{eqn:induction over t}, that is
	\begin{equation}\label{eqn:induction over k}
	\begin{split}
	&\|\wS_{0}^{(k)}-\bfS^*\|_{\infty}\le\frac{7\tmu\tr}{n_cn}\Big(|{\lambda}_{k+1}^*|+2|{\lambda}_{k}^*|\Big),\\
	&\quad Supp(\wS_{0}^{(k)}-\bfS^*)\subseteq Supp(\bfS^*),\\
	and & \quad\|\bfX_{0}^{(k)}-\bfX^*\|_{\infty}\le\frac{2\tmu\tr}{n_cn}\Big(|{\lambda}_{k+1}^*|+2|{\lambda}_{k}^*|\Big).
	\end{split}
	\end{equation}
	From the discussion of induction over $t$ above, we know that,
	\begin{equation}\label{eqn:end of induction over k}
	\begin{split}
	&\|\wS_T^{(k)}-\bfS^*\|_{\infty}\le\frac{7\tmu\tr}{n_cn}\Big(\sL[T-1]\Big),\\
	&\quad Supp(\wS_T^{(k)}-\bfS^*)\subseteq Supp(\bfS^*),\\
    &\|\bfX_{T+1}^{(k)}-\bfX^*\|_{\infty}\le\frac{2\tmu\tr}{n_cn}\Big(\sL[T]\Big),
	\end{split}
	\end{equation} 
	where $T=\log({4\tmu\tr|\lambda_{1}^*|}/{\varepsilon})$.\\
	Then, from Lemmas \ref{Lemma:com_singular value} and \ref{Lemma:sigma_W}, we have 
	\begin{equation}\label{eqn: temp1}
	\begin{split}
	\Big||{\lambda}_{k+1}^{(T)}|-|{\lambda}_{k+1}^*|\Big|
	\le&\|\bfH_T\|_2\\
	\le&\frac{1}{60}\Big(\sL[T-1]\Big)\\
	\le&\frac{1}{60}\Big(|\lambda_{k+1}^*|+\frac{\varepsilon}{2\tmu\tr}\Big).
	\end{split}
	\end{equation}
	Now, we consider two cases,\\
	\textbf{Case 1}: if $\frac{\eta}{\sqrt{n_c}}{\lambda}_{k+1}^{(T)}\le\frac{\varepsilon}{n_cn}$, \eqref{eqn: temp1} implies that $|{\lambda}_{k+1}^*|\le\frac{\varepsilon}{2\tmu\tr}$. Hence,
	\begin{equation}
	\|\bfX_{T+1}^{(k)}-\bfX^*\|_{\infty}\le\frac{2\tmu\tr}{n_cn}\Big(\sL[T]\Big)\le\frac{\varepsilon}{n_cn}.
	\end{equation}
	Similar results can be established for $\bfS_T$. Therefore, $\bfX=\bfX_{T+1}^{(k)}$ and $\bfS=\bfS_{T}^{(k)}$ satisfy \eqref{eqn:THM1} in Theorem \ref{Theorem: bad data and missing data}.\\
	\textbf{Case 2:} if $\frac{\eta}{\sqrt{n_c}}{\lambda}_{k+1}^{(T)}>\frac{\varepsilon}{n_cn}$, then \eqref{eqn: temp1} implies that $|{\lambda}^*_{k+1}|\ge\frac{\varepsilon}{6\tmu\tr}$. Hence, 
	\begin{equation}
	\begin{split}
	\|\bfX_{T+1}^{(k)}-\bfX^*\|_{\infty}
	\le&\frac{2\tmu\tr}{n_cn}\Big(\sL[T]\Big)\\
	\le&\frac{2\tmu\tr}{n_cn}\Big(|{\lambda}_{k+1}^*|+\frac{\varepsilon}{4\tmu\tr}\Big)\\
	\le&\frac{2\tmu\tr}{n_cn}\Big(|{\lambda}_{k+2}^*|+2|{\lambda}_{k+1}^*|\Big).\\
	\end{split}
	\end{equation}
	Suppose we have an extra step $$\wS_{T+1}^{(k)}=\mathcal{T}_{\xi_{T+1}^{(k)}}(\bfM-\bfX_{T+1}).$$
	 From Lemma \ref{Lemma: S}, we have
	\begin{equation}
	\begin{split}
	\|\wS_{T+1}^{(k)}-\bfS^*\|_{\infty}
	\le&\frac{7\tmu\tr}{n_cn}\Big(|{\lambda}_{k+2}^*|+2|{\lambda}_{k+1}^*|\Big),\\
	\quad Supp(\wS_{T+1}^{(k)}&-\bfS^*)\subseteq Supp(\bfS^*).
	\end{split}
	\end{equation}
	$\bfX_{0}^{(k+1)}=\bfX_{T+1}^{(k)}$ is clear from Alg. \ref{Alg}, and it can also be verified that $\wS_{0}^{(k+1)}=\wS_{T+1}^{(k)}$. Hence, $\bfX_{0}^{(k+1)}$ and $\wS_{0}^{(k+1)}$ satisfy \eqref{eqn:induction over t} as well.\\
	Since $\wcH\bfX^*$ is at most rank-$2r$,  we will meet the terminating condition anyway. If not, from case 2, the contradiction arises
	\begin{equation}
	0=|\lambda^*_{r+1}|\ge\frac{\varepsilon}{6\tmu\tr}>0.
	\end{equation}
	Hence, the algorithm has at most $r\cdot T$ iterations, and we need the size of samplings satisfies 
	\begin{equation}
	m\ge rT\widehat{m}\ge\max(C_4, C_5)\mu^2 c_s^2 r^3\log^2(n)T,
	\end{equation}
	where the requirement on  $\widehat{m}$ comes from Lemmas \ref{Lemma:sigma_W} and \ref{Lemma: L}.
\end{proof}

\chapter{\uppercase{Supplementary proofs of the theorems in Chapter 4}}

\section{Roadmap of the Proofs}

Some high-level ideas in proving Theorem \ref{Thm: major_thm} are summarized first before presenting the technical proof. Following the recent line of research such as \cite{ZSD17,ZSJB17}, the idea is to initialize the weights $\bfW$ near the ground-truth $\bfW^*$ and then gradually converge to it.  Our initialization is similar to \cite{ZSJB17}. However, our proof is more involved than that of \cite{ZSJB17} to handle the additional noise item, the non-smooth ReLU functions, the additional momentum term in accelerated gradient descent, and different neural network structures.

As for the convergence analysis, 
refs.~\cite{ZSD17,ZSJB17}   apply the intermediate value theorem over $\nabla\hat{f}_{\mathcal{D}_t}$ at each iterate $\bfW_t$ as 
$$\nabla \hat{f}_{\mathcal{D}_t}(\bfW^{(t)}) \simeq \langle \nabla^2\hat{f}_{\mathcal{D}_t}(\widehat{\bfW}^{(t)}), \bfW^{(t)}-\bfW^* \rangle$$ 
for some $\widehat{\bfW}^{(t)}$ between ${\bfW}^{(t)}$ and $\bfW^*$ and analyze $\nabla^2\hat{f}_{\mathcal{D}_t}$ to obtain a recursive inequality of $\W[t]-\bfW^*$ over $t$.
The intermediate value theorem only applies to the continuous functions, and their analyses do not extend to our setup because with the ReLU activation function, the resulting $\nabla\hat{f}_\mathcal{D}$ is non-continuous. 
Instead,  we will first prove that the population loss function $f$, which is defined as
\begin{equation}\label{eqn: exp_risk}
\begin{split}
{f}(\bfW)
:=&\mathbb{E}_{\mathcal{D}_t} \hat{f}_{\mathcal{D}_t}(\bfW)
=\mathbb{E}_{\bfx}\bigg(\frac{1}{K}\sum_{j=1}^{K}\sum_{i=1}^{M}\phi(\bfw_j^T\bfP_i\bfx)-y\bigg)^2,
\end{split}
\end{equation}
is locally convex near $\bfW^*$, and the gradient  of $\hat{f}_{\mathcal{D}_t}$ is close enough to   $\nabla f$. We will then show that the iterates  based on $\nabla \hat{f}_{\mathcal{D}_t}$ converge  to $\bfW^*$. 

The following two lemmas are important for our proof, and their proofs can be found in \cite{ZWLC20_3}. 

{\begin{lemma}\label{Lemma: local_convexity}
	For any $\bfW$ that satisfies
	\begin{equation}\label{eqn:initialization}
	\|\bfW-\bfW^*\|_2\le \frac{\varepsilon_0\sigma_K}{44\kappa^2\gamma M},
	\end{equation}
	 we have 
	\begin{equation}
	\frac{(1-\varepsilon_0)M}{11\kappa^2\gamma}\bfI \le \nabla^2 f(\bfW) \le {6M^2}{K} \bfI.
	\end{equation}
\end{lemma}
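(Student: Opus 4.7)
The plan is to first derive an explicit block-matrix expression for $\nabla^2 f(\bfW)$ by differentiating the population loss \eqref{eqn: exp_risk} twice. Writing the residual as $h(\bfW;\bfx) := \frac{1}{K}\sum_{j,i}\phi(\bfw_j^T\bfP_i\bfx)-y$, the Hessian has blocks
\[
\nabla^2_{\bfw_j,\bfw_k} f(\bfW) = \frac{2}{K^2}\mathbb{E}_\bfx\!\left[\Big(\sum_{i}\phi'(\bfw_j^T\bfP_i\bfx)\bfP_i\bfx\Big)\Big(\sum_{i'}\phi'(\bfw_k^T\bfP_{i'}\bfx)\bfP_{i'}\bfx\Big)^{T}\right]
\]
plus, on the diagonal $j=k$, a residual-weighted second-derivative contribution. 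Since $\phi''$ is a distribution for ReLU, I would handle it via a mollification/limit argument: at $\bfW=\bfW^*$ the residual $h$ vanishes identically so the anomalous term disappears, and elsewhere it is absorbed into the perturbation bound of the next step.

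Next I would evaluate the Hessian at $\bfW^*$ to obtain the lower bound. The key structural feature of the non-overlapping CNN is that distinct patches $\bfP_i\bfx$ and $\bfP_{i'}\bfx$ are \emph{independent} standard Gaussian vectors (because the $\bfP_i$'s select disjoint coordinates). This decouples the double patch-sum and reduces the task to per-patch integrals of the form $\mathbb{E}[\phi'(\bfw_j^{*T}\bfg)\phi'(\bfw_k^{*T}\bfg)\bfg\bfg^T]$ with $\bfg\sim\mathcal{N}(\bfzero,\bfI_d)$. For ReLU these admit closed-form expressions in terms of the angles $\theta_{jk}$ between the ground-truth weight vectors, and calculations analogous to those for fully connected networks yield
\[
\nabla^2 f(\bfW^*) \succeq \frac{M}{11\kappa^2\gamma}\,\bfI,
\]
where $\kappa$ and $\gamma$ encode how well-separated the neurons are through the singular values of $\bfW^*$, and the factor $M$ comes from the patch aggregation.

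Then I would execute a Hessian perturbation step: on the ball $\|\bfW-\bfW^*\|_2\le\varepsilon_0\sigma_K/(44\kappa^2\gamma M)$, show that $\|\nabla^2 f(\bfW)-\nabla^2 f(\bfW^*)\|_2\le \varepsilon_0 M/(11\kappa^2\gamma)$, which combined with the previous display gives the stated lower bound with factor $(1-\varepsilon_0)$. The upper bound $\nabla^2 f(\bfW)\preceq 6M^2 K\,\bfI$ is separate and easier: use $|\phi'|\le 1$, apply Cauchy--Schwarz block-wise to the $K^2$ blocks, and exploit $\mathbb{E}\|\bfP_i\bfx\|^2 \le d$ along with the block structure to collect the $M^2 K$ factor.

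The main obstacle is the non-smoothness of ReLU in the perturbation step. A small shift $\Delta\bfw_j$ can flip the indicator $\phi'(\bfw_j^T\bfP_i\bfx)$ on a set of $\bfx$ whose Gaussian measure is only $O(\|\Delta\bfw_j\|/\|\bfw_j^*\|)$, by an anti-concentration bound for Gaussian half-spaces. I would combine this with Gaussian tail bounds for $\|\bfP_i\bfx\|$ restricted to the flipped set so that the change in each Hessian block scales linearly in $\|\bfW-\bfW^*\|_2$. The ball radius in \eqref{eqn:initialization}, with its explicit $1/(44\kappa^2\gamma M)$ scaling, is chosen precisely to absorb the resulting Lipschitz constant and close the bound.
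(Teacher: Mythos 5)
Your overall plan — compute the Hessian of the population loss, bound it at $\bfW^*$, then do a Lipschitz/perturbation step on the ball — is the right framework and matches the paper's approach (which, following Zhong et al., reduces the analysis to Gaussian integrals involving the angles between the $\bfw_j^*$'s and then perturbs). The treatment of the ReLU non-smoothness (the $\phi''$ term vanishing a.e., and anti-concentration controlling how many indicators flip inside the ball) is also the right idea.

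However, the upper-bound step has a genuine error. You propose to ``exploit $\mathbb{E}\|\bfP_i\bfx\|^2\le d$ along with the block structure,'' but that is the wrong Gaussian quantity and necessarily produces a spurious factor of $d$: block-wise Cauchy--Schwarz gives $\|\mathbb{E}[\Delta_j\Delta_k^T]\|_2\le\sqrt{\mathbb{E}\|\Delta_j\|^2\,\mathbb{E}\|\Delta_k\|^2}\lesssim Md$, and summing over the $K$ blocks in a row gives $O(KMd)$, not $O(M^2K)$. The lemma's bound $6M^2K$ has no $d$ in it. The correct route is a quadratic-form argument on a unit test vector $\bfa=(\bfa_1,\dots,\bfa_K)$ with $\sum_j\|\bfa_j\|^2=1$: then
\[
\bfa^T\nabla^2 f(\bfW)\,\bfa \;=\;\mathbb{E}\Big[\Big(\sum_{j,i}v_j\,\phi'(\bfw_j^T\bfP_i\bfx)\,\bfa_j^T\bfP_i\bfx\Big)^2\Big]
\;\le\; MK\cdot \mathbb{E}\Big[\sum_{j,i}(\bfa_j^T\bfP_i\bfx)^2\Big]\;=\; MK\cdot M\sum_j\|\bfa_j\|^2 \;=\; M^2K,
\]
using Cauchy--Schwarz over the $MK$ index pairs with $|v_j\phi'|\le 1$, and the fact that $\bfa_j^T\bfP_i\bfx\sim\mathcal{N}(0,\|\bfa_j\|^2)$ is a \emph{one-dimensional} marginal. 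The crucial point is that the relevant second moment is $\mathbb{E}(\bfa_j^T\bfP_i\bfx)^2=\|\bfa_j\|^2$, not $\mathbb{E}\|\bfP_i\bfx\|^2=d$; once you lose that to a block-norm bound, the dimension cannot be recovered.

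A secondary, less serious imprecision is in the lower-bound sketch: saying independence of the patches ``decouples the double patch-sum and reduces the task to per-patch integrals'' elides the fact that for $i\ne i'$ the cross-terms $\mathbb{E}[\phi'_{ji}\bfP_i\bfx]\,\mathbb{E}[\phi'_{ki'}\bfP_{i'}\bfx]^T$ do not vanish (they are nonzero rank-one contributions, since for ReLU $\mathbb{E}[\phi'(\bfw^{*T}\bfg)\bfg]\ne 0$). What independence actually gives, with $U_i:=\sum_j v_j\phi'(\bfw_j^{*T}\bfP_i\bfx)\bfa_j^T\bfP_i\bfx$ i.i.d. across $i$, is $\mathbb{E}[(\sum_iU_i)^2]=M\,\mathbb{E}U_1^2+M(M-1)(\mathbb{E}U_1)^2\ge M\,\mathbb{E}U_1^2$; the cross terms are nonnegative and can be dropped, after which the task is exactly the fully connected Zhong-et-al lower bound applied to a single patch, and the factor $M$ emerges from the diagonal part. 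You should state this decomposition explicitly rather than asserting decoupling. Finally, the $\tfrac{2}{K^2}$ normalization in your Hessian block formula does not match the model in this chapter ($y=\sum_j v_j\sum_i\phi(\bfw_j^{*T}\bfP_i\bfx)$ with $v_j\in\{\pm1\}$ and a $\tfrac12$ in the loss, so no $1/K$ prefactor survives), which matters because the constants $11$ and $6$ in the lemma are specific; be careful to use the chapter's own conventions.
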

\begin{lemma}\label{Lemma: sampling_approximation_error}
	Suppose a fixed point $\bfW$ satisfies \eqref{eqn:initialization}. Then, for a training set $\mathcal{D}$ with $N>d\log d$ samples, we have 
	\begin{equation}\label{eqn: lemma3}
	\begin{split}
		&\left\|\nabla f(\bfW)-\nabla \hat{f}_{\mathcal{D}}(\bfW)\right\|_2
		\lesssim MK \sqrt{\frac{d\log d}{N}}\Big( MK\|\bfW-\bfW^* \|_2 + |\xi|\Big),
	\end{split}
	\end{equation}
	with probability at least $1-K^2M^2\cdot d^{-10}$.
\end{lemma}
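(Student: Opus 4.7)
\medskip

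The plan is to express the gradient discrepancy as an average of zero-mean random vectors and then apply a vector concentration inequality. Writing
\[
y_n = \sum_{k=1}^{K} v_k^* \sum_{i=1}^{M} \phi(\bfw_k^{*T}\bfP_i\bfx_n) + \xi_n
\]
and differentiating the empirical squared loss with respect to $\bfw_j$, I would get
\[
\nabla_{\bfw_j}\hat f_\mathcal{D}(\bfW) = \frac{1}{N}\sum_{n=1}^{N} \bigl(g(\bfW;\bfx_n)-g(\bfW^*;\bfx_n)-\xi_n\bigr) v_j \sum_{i=1}^{M}\phi'(\bfw_j^T\bfP_i\bfx_n)\bfP_i\bfx_n.
\]
Taking the population expectation gives an analogous expression (with the noise term vanishing since $\mathbb{E}\xi_n=0$ and $\xi_n \perp \bfx_n$), so the discrepancy $\nabla f(\bfW)-\nabla\hat f_\mathcal{D}(\bfW)$ becomes a normalized sum of i.i.d.\ zero-mean random vectors $\bfZ_n := \bfZ_n^{\mathrm{sig}} + \bfZ_n^{\mathrm{noi}}$, where $\bfZ_n^{\mathrm{sig}}$ collects the term involving $g(\bfW;\bfx_n)-g(\bfW^*;\bfx_n)$ and $\bfZ_n^{\mathrm{noi}}$ collects the $\xi_n$ term.

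Next, I would bound each piece separately using a truncated vector Bernstein argument (as in the analyses of \cite{ZSJB17,ZSD17}). For $\bfZ_n^{\mathrm{sig}}$, I would use the $1$-Lipschitz property of ReLU to get
\[
|g(\bfW;\bfx_n)-g(\bfW^*;\bfx_n)| \le \frac{1}{K}\sum_{j,i}|(\bfw_j-\bfw_j^*)^T\bfP_i\bfx_n|,
\]
combined with $|\phi'|\le 1$ and $\|\bfP_i\bfx_n\|_2\le\|\bfx_n\|_2$, yielding a per-sample norm bound scaling like $MK\cdot\|\bfW-\bfW^*\|_2\cdot\|\bfx_n\|_2^2$. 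Gaussian tail bounds on $\|\bfx_n\|_2$ and on sub-exponential moments of $|(\bfw_j-\bfw_j^*)^T\bfP_i\bfx_n|\cdot\|\bfP_i\bfx_n\|_2$ then give a variance proxy of order $(MK)^2\|\bfW-\bfW^*\|_2^2 d$. For $\bfZ_n^{\mathrm{noi}}$, since $|\xi_n|\le|\xi|$ and the remaining factor is a bounded-variance Gaussian polynomial, the variance proxy is of order $M^2 K^2 |\xi|^2 d$ (the $d$ coming from $\mathbb{E}\|\bfx_n\|_2^2=d$).

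I would then invoke the vector Bernstein inequality (or equivalently Matrix Bernstein applied to the $Kd\times 1$ stacked gradient) on the $KM$ orthogonal patch directions. Because the patches $\{\bfP_i\bfx_n\}$ lie in disjoint coordinate blocks, the per-filter analysis decouples across $j$, and a union bound over the $K$ filters and $M$ patches contributes the $K^2M^2\cdot d^{-10}$ failure probability. Combining the two variance proxies with the standard $\sqrt{d\log d / N}$ scaling produces
\[
\|\nabla f(\bfW)-\nabla\hat f_\mathcal{D}(\bfW)\|_2 \lesssim MK\sqrt{\frac{d\log d}{N}}\bigl(MK\|\bfW-\bfW^*\|_2+|\xi|\bigr),
\]
as claimed.

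The main obstacle will be the truncation step in the Bernstein argument: the random vectors $\bfZ_n$ are not uniformly bounded (Gaussian tails in $\|\bfx_n\|_2$ and $|\bfw_j^T\bfP_i\bfx_n|$), so I would need to truncate at a radius of order $\sqrt{d\log d}$, absorb the truncated mass into a small bias using a tail estimate, and then carry the truncation radius through the Bernstein bound. The locality assumption \eqref{eqn:initialization} is what lets me absorb $\|\bfW-\bfW^*\|_2$ into the same neighborhood throughout, and controlling how the non-smoothness of $\phi'$ interacts with the patchwise independence will require careful use of the disjointness property of $\{\bfP_i\}$ so that the ReLU indicator $\phi'(\bfw_j^T\bfP_i\bfx_n)$ depends only on the coordinates of $\bfx_n$ selected by $\bfP_i$.
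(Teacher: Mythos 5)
Your decomposition — substituting the generative model for $y_n$, splitting the residual into a signal term and a zero-mean noise term, and applying truncated vector Bernstein with a union bound while exploiting the $1$-Lipschitz property of ReLU and the block-disjoint structure of $\{\bfP_i\}$ — is the standard technique for this type of concentration lemma (as in \cite{ZSJB17,ZSD17,ZYWG18}), and it is consistent with the approach the thesis points to (the proof itself is deferred to \cite{ZWLC20_3}). So structurally your plan is sound. One imprecision worth fixing: a union bound over the $K$ filters and $M$ patches by itself gives a $KM\cdot d^{-10}$ failure probability, not $K^2M^2\cdot d^{-10}$. The $K^2M^2$ arises because the residual $g(\bfW;\bfx_n)-g(\bfW^*;\bfx_n)$ is itself a sum over all $K$ filters and $M$ patches; when you plug that into the gradient of a single $\bfw_j$ over a single patch $i$, each $\bfZ_{n,j}$ expands into $KM$ terms indexed by $(k,i')$, and the Bernstein bound is applied term-by-term over the full set of $K^2M^2$ pairs $(j,k;i,i')$ before taking the union bound. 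Relatedly, the claim that the ``per-filter analysis decouples across $j$'' is only true for the gradient direction; the coefficient $g(\bfW;\bfx_n)-g(\bfW^*;\bfx_n)$ couples all filters, which is precisely why the term-by-term decomposition is needed. Finally, you should make explicit that $\nabla f$ carries no $\xi$ term because $\xi_n$ is assumed zero-mean and independent of $\bfx_n$ (the paper's assumption), and double-check the bookkeeping in your variance proxies (e.g., $\mathbb{E}\|\bfx_n\|_2^2=Md$ since $\bfx_n\in\R^{Md}$, not $d$) — the stated $\lesssim$ bound hides several loose $M,K$ factors, so being sloppy there will still land you inside the claim, but being precise will prevent accidentally dropping one.
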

}

Lemma \ref{Lemma: local_convexity} shows that the population loss function $f(\bfW)$ is locally convex near $\bfW^*$.
Then, the analysis of AGD algorithm over the empirical loss function  $\hat{f}_{\mathcal{D}}(\bfW)$ is based on the analysis over   $f(\bfW)$ and the error bound between $\nabla\hat{f}_{\mathcal{D}}(\bfW)$ and $\nabla f(\bfW)$ as shown in \eqref{eqn: major_thm_key_eqn}. 

Lemma \ref{Lemma: sampling_approximation_error} describes the error bound between $\nabla f(\bfW)$ and $\nabla\hat{f}_{\mathcal{D}}(\bfW)$, and \eqref{eqn: lemma3} shows that $\nabla\hat{f}_{\mathcal{D}}(\bfW)$ converges to $\nabla f(\bfW)$ in a small neighborhood of $\bfW^*$ when $N$ is large enough. 
A similar result is stated in  Lemma 5.3 of \cite{ZYWG18} 
 for fully connected neural networks with ReLU activation function. 
 {Fully connected neural networks can be viewed as a special kind of convolutional neural networks with $M=1$. Moreover, even when reducing our model to the case $M=1$, the error bound presented in \eqref{eqn: lemma3} is much tighter than that in Lemma 5.3 of \cite{ZYWG18}. }

%

Combining Lemmas \ref{Thm: initialization}, \ref{Lemma: local_convexity}  and \ref{Lemma: sampling_approximation_error}, we will  show the convergence of GD in solving \eqref{eqn: optimization_problem} by mathematical induction. 
Conditioned on the assumption that $\bfW^{(t)}$ satisfies \eqref{eqn:initialization},  we show that $\|\bfW^{(t+1)}-\bfW^*\|_2$ is related to $\|\bfW^{(t)}-\bfW^*\|_2$ by \eqref{eqn: induction}.
The acceleration of Heavy-Ball steps is analyzed through \eqref{eqn:heavy_ball_beta}, and the result is summarized in \eqref{eqn: Heavy_ball_result}.
The next step is to show $\eqref{eqn: induction}$ holds for all $0\le t \le T-1$.
 By Lemma \ref{Thm: initialization}, we can choose $N$ to be large enough so that 
 $\bfW^{(0)}$ satisfies \eqref{eqn:initialization}. Then in the induction step, with a large enough $N$ and 
 a bounded $\xi$, we will show that $\|\bfW^{(t+1)}-\bfW^* \|_2< \|\bfW^{(t)}-\bfW^*\|_2$.  Then $\W[t]$ satisfies \eqref{eqn:initialization} naturally.  The details are as follows. 

\section{Proof of Theorem \ref{Thm: major_thm}}
\begin{proof}[Proof of Theorem \ref{Thm: major_thm}]

	The update rule of $\bfW^{(t)}$ is
	\begin{equation}
		\begin{split}
		\bfW^{(t+1)}
		=&\bfW^{(t)}-\eta\nabla \hat{f}_{\mathcal{D}_t}(\bfW^{(t)})+\beta(\W[t]-\W[t-1])\\
		=&\bfW^{(t)}-\eta\nabla {f}(\bfW^{(t)})+\beta(\W[t]-\W[t-1])\\
		&+\eta(\nabla{f}(\bfW^{(t)})-\nabla\hat{f}_{\mathcal{D}_t}(\bfW^{(t)})).
		\end{split}
	\end{equation}
	Since $\nabla^2{f}$ is a smooth function, by the intermediate value theorem, we have
	\begin{equation}
		\begin{split}
		\bfW^{(t+1)}
		=\bfW^{(t)}&- \eta\nabla^2{f}(\widehat{\bfW}^{(t)})(\bfW^{(t)}-\bfW^*)
		+\beta(\W[t]-\W[t-1])\\
		&+\eta\big(\nabla{f}(\bfW^{(t)})-\nabla\hat{f}_{\mathcal{D}_t}(\bfW^{(t)})\big),
		\end{split}
	\end{equation}	
	where $\widehat{\bfW}^{(t)}$ lies in the convex hull of $\bfW^{(t)}$ and $\bfW^*$.
	
	Next, we have
	\begin{equation}\label{eqn: major_thm_key_eqn}
	\begin{split}
	\begin{bmatrix}
	\W[t+1]-\bfW^*\\
	\W[t]-\bfW^*
	\end{bmatrix}
	=&\begin{bmatrix}
	\bfI-\eta\nabla^2{f}(\widehat{\bfW}^{(t)})+\beta\bfI &\beta\bfI\\
	\bfI& 0\\
	\end{bmatrix}
	\begin{bmatrix}
	\W[t]-\bfW^*\\
	\W[t-1]-\bfW^*
	\end{bmatrix}\\
	&+\eta
	\begin{bmatrix}
	\nabla{f}(\bfW^{(t)})-\nabla\hat{f}_{\mathcal{D}_t}(\bfW^{(t)})\\
	0
	\end{bmatrix}.
	\end{split}
	\end{equation}
	Let $\bfA(\beta)=\begin{bmatrix}
	\bfI-\eta\nabla^2{f}(\widehat{\bfW}^{(t)})+\beta\bfI &\beta\bfI\\
	\bfI& 0\\
	\end{bmatrix}$, so we have
	\begin{equation*}
	\begin{split}
	\left\|\begin{bmatrix}
	\W[t+1]-\bfW^*\\
	\W[t]-\bfW^*
	\end{bmatrix}
	\right\|_2
	=&
	\left\|\bfA(\beta)
	\right\|_2
	\left\|
	\begin{bmatrix}
	\W[t]-\bfW^*\\
	\W[t-1]-\bfW^*
	\end{bmatrix}
	\right\|_2
	+\eta
	\left\|
	\begin{bmatrix}
	\nabla{f}(\bfW^{(t)})-\nabla\hat{f}_{\mathcal{D}_t}(\bfW^{(t)})\\
	0
	\end{bmatrix}
	\right\|_2.
	\end{split}
	\end{equation*}
	From Lemma \ref{Lemma: sampling_approximation_error}, we know that 
	\begin{equation}\label{eqn: convergence2}
	\begin{split}
	&\eta\left\|\nabla{f}(\bfW^{(t)})-\nabla\hat{f}_{\mathcal{D}_t}(\bfW^{(t)})\right\|_2
	\le C_5\eta M^2\sqrt{\frac{d\log d}{N_t}}\Big(\|\bfW-\bfW^* \|_2 +\frac{|\xi|}{M}\Big)
	\end{split}
	\end{equation}
	for some constant $C_5>0$. Then, we have
	\begin{equation}\label{eqn: convergence}
	\begin{split}
	&\|\W[t+1]-\bfW^* \|_2 \\
	\le& \bigg(\| \bfA(\beta) \|_2+C_5\eta M^2\sqrt{\frac{d\log d}{N_t}}\bigg)\|\W[t]-\bfW^* \|_2 
	+ C_5\eta M\sqrt{\frac{d\log d}{N_t}}{|\xi|}\\
	:=&\nu(\beta)\|\W[t]-\bfW^* \|_2 + C_5\eta M\sqrt{\frac{d\log d}{N_t}}{|\xi|}.
	\end{split}
	\end{equation}
	Let $\nabla^2f(\widehat{\bfW}^{(t)})=\bfS\mathbf{\Lambda}\bfS^{T}$ be the eigendecomposition of $\nabla^2f(\widehat{\bfW}^{(t)})$. Then, we define
	\begin{equation}\label{eqn:Heavy_ball_eigen}
		\begin{split}
		\widetilde{\bfA}(\beta)
		: =&
		\begin{bmatrix}
		\bfS^T&\bf0\\
		\bf0&\bfS^T
		\end{bmatrix}
		\bfA(\beta)
		\begin{bmatrix}
		\bfS&\bf0\\
		\bf0&\bfS
		\end{bmatrix}
		=\begin{bmatrix}
		\bfI-\eta\bf\Lambda+\beta\bfI &\beta\bfI\\
		\bfI& 0\\
		\end{bmatrix}.
		\end{split}
	\end{equation}
	Since 
	$\begin{bmatrix}
	\bfS&\bf0\\
	\bf0&\bfS
	\end{bmatrix}\begin{bmatrix}
	\bfS^T&\bf0\\
	\bf0&\bfS^T
	\end{bmatrix}
	=\begin{bmatrix}
	\bfI&\bf0\\
	\bf0&\bfI
	\end{bmatrix}$, we know $\bfA(\beta)$ and $\widetilde{\bfA}(\beta)$
	share the same eigenvalues. 	Let $\lambda_i$ be the $i$-th eigenvalue of $\nabla^2f(\widehat{\bfW}^{(t)})$, then the corresponding $i$-th eigenvalue of $\bfA(\beta)$, denoted by $\delta_i(\beta)$, satisfies 
	\begin{equation}\label{eqn:Heavy_ball_quaratic}
	\delta_i^2-(1-\eta \lambda_i+\beta)\delta_i+\beta=0.
	\end{equation}
	Then, we have 
	\begin{equation}\label{eqn: Heavy_ball_root}
	\delta_i(\beta)=\frac{(1-\eta \lambda_i+\beta)+\sqrt{(1-\eta \lambda_i+\beta)^2-4\beta}}{2},
	\end{equation}
	and
	\begin{equation}\label{eqn:heavy_ball_beta}
	\begin{split}
	|\delta_i(\beta)|
	=\begin{cases}
	\sqrt{\beta}, \qquad \text{if}\quad  \beta\ge \big(1-\sqrt{\eta\lambda_i}\big)^2,\\
	\frac{1}{2} \left| { (1-\eta \lambda_i+\beta)+\sqrt{(1-\eta \lambda_i+\beta)^2-4\beta}}\right| , \text{otherwise}.
	\end{cases}
	\end{split}
	\end{equation}
	Note that the other root of \eqref{eqn:Heavy_ball_quaratic} is abandoned because the root in \eqref{eqn: Heavy_ball_root} is always no less than the other root with $|1-\eta \lambda_i|<1$.   
	By simple calculations, we have 
	\begin{equation}\label{eqn: Heavy_ball_result}
		\delta_i(0)>\delta_i(\beta),\quad \text{for}\quad  \forall \beta\in\big(0, (1-{\eta \lambda_i})^2\big).
	\end{equation}
	Moreover, $\delta_i$ achieves the minimum $\delta_{i}^*=|1-\sqrt{\eta\lambda_i}|$ when $\beta= \big(1-\sqrt{\eta\lambda_i}\big)^2$.
	
	Let us first assume $\bfW^{(t)}$ satisfies \eqref{eqn:initialization}, then
	from Lemma \ref{Lemma: local_convexity}, we know that 
	$$0<\frac{(1-\varepsilon_0)M}{11\kappa^2\gamma}\le\lambda_i\le6M^2K.$$
	Let $\gamma_1=\frac{(1-\varepsilon_0)M}{11\kappa^2\gamma}$ and $\gamma_2=6KM^2$.
	If we choose $\beta$ such that
	\begin{equation}
	\beta^*=\max \big\{(1-\sqrt{\eta\gamma_1})^2, (1-\sqrt{\eta\gamma_2})^2 \big\},
	\end{equation} then we have $\beta\ge (1-\sqrt{\eta \lambda_i})^2$ and $\delta_i=\max\big\{|1-\sqrt{\eta\gamma_1}|, |1-\sqrt{\eta\gamma_2}|  \big\}$ for any $i$. 
	
	Let $\eta= \frac{1}{{2\gamma_2}}$, then $\beta^*$ equals to  $\Big(1-\sqrt{\frac{\gamma_1}{2\gamma_2}}\Big)^2$. 
	Then, for any $\varepsilon_0\in (0, {1}/{2})$, we have
	\begin{equation}\label{eqn: con1}
	\begin{split}
		\|\bfA(\beta^*) \|_2
		=\max_i \delta_{i}(\beta^*)
		= 1-\sqrt{\frac{\gamma_1}{2\gamma_2}}
		=& 1-\sqrt{\frac{1-\varepsilon_0}{132\kappa^2 \gamma KM}}
		\le 1-\frac{1-(3/4)\cdot\varepsilon_0}{\sqrt{132\kappa^2 \gamma KM}}.
	\end{split}
	\end{equation}
	
	Then, let 
	\begin{equation}\label{eqn: con2}
			C_5\eta M^2\sqrt{\frac{d\log d}{N_t}}\le \frac{\varepsilon_0}{4\sqrt{132\kappa^2 \gamma KM}},
	\end{equation}
	we need $N_t\gtrsim\varepsilon_0^{-2}\kappa^2\gamma MK^3d \log d$.		
	
	Combining \eqref{eqn: con1} and \eqref{eqn: con2}, we have 
	\begin{equation}
			\nu(\beta^*)\le 1-\frac{1-\varepsilon_0}{\sqrt{132\kappa^2 \gamma KM}}.
		\end{equation}
		
	Let $\beta = 0$, we have 
	\begin{equation*}
		\begin{gathered}
			\nu(0) \ge \|\bfA(0) \|_2 = 1-\frac{1-\varepsilon_0}{{132\kappa^2 \gamma KM}},	\\
			\nu(0) \le \|\bfA(0) \|_2 + C_5\eta M^2\sqrt{\frac{d\log d}{N_t}} \le 1-\frac{1-2\varepsilon_0}{{132\kappa^2 \gamma KM}}
		\end{gathered}
	\end{equation*}
	if $N_t\gtrsim\varepsilon_0^{-2}\kappa^2\gamma M^2K^4d \log d$.
		
	Hence, with $\eta =\frac{1}{2 \gamma_2}$ and $\beta = \big(1-\frac{\gamma_1}{2\gamma_2} \big)^2$, we have
	\begin{equation}\label{eqn: induction}
		\begin{split}
			\|	\W[t+1]-\bfW^*\|_2
			\le&\Big(1-\frac{1-\varepsilon_0}{\sqrt{132\kappa^2 \gamma KM}}\Big)\|	\W[t]-\bfW^*\|_2
			+ 2C \eta M\sqrt{\frac{d\log d}{N_t}} |\xi|,
		\end{split}
	\end{equation}
	provided that $\W[t]$ satisfies \eqref{eqn:initialization}, and 
	\begin{equation}\label{eqn: N_3}
	N_t \gtrsim \varepsilon_0^{-2}\kappa^2\gamma MK^3d \log d.
	\end{equation}
	Then, we can start mathematical induction of \eqref{eqn: induction} over $t$.
			
	\textbf{Base case}: According to Lemma \ref{Thm: initialization}, we know that \eqref{eqn:initialization} holds for $\bfW^{(0)}$ if 
	\begin{equation}\label{eqn: N_1}
	N\gtrsim \varepsilon_0^{-2}\kappa^{9}\gamma^2 K^8M^2 d \log^4 d.
	\end{equation}
	According to  \eqref{eqn: sample_complexity} in Theorem 1, it is clear that the number of samples $N$  satisfies \eqref{eqn: N_1}, then  \eqref{eqn:initialization} indeed holds for $t=0$. Since \eqref{eqn:initialization} holds for $t=0$ and $N$ in \eqref{eqn: sample_complexity} satisfies \eqref{eqn: N_3} as well, we have \eqref{eqn: induction} holds for $t=0$.
		
	\textbf{Induction step}:
	Assuming \eqref{eqn: induction} holds for $\W[t]$, we need to show that \eqref{eqn: induction} holds for $\W[t+1]$. That is to say, we need (i) $N$ satisfies \eqref{eqn: N_3}; (ii) \eqref{eqn:initialization} holds for $\W[t+1]$. The requirement (i) holds naturally from \eqref{eqn: sample_complexity}. To guarantee (ii) holds, we need
	\begin{equation}
			\eta M\sqrt{\frac{d\log d}{N_t}}
			\lesssim  \frac{1-\varepsilon_0}{\sqrt{132\kappa^2 \gamma KM}}\cdot  \frac{\varepsilon_0\sigma_K}{44\kappa^2\gamma K^2M}.
	\end{equation}	
	That requires
	\begin{equation}\label{eqn: N_2}
		N_t \gtrsim \varepsilon_0^{-2} \kappa^8 \gamma^3 M^3K^6d\log d.
	\end{equation}
	Therefore, when $N_t \gtrsim \varepsilon_0^{-2} \kappa^9 \gamma^3 M^3K^8d\log^4 d$, we know that \eqref{eqn: induction} holds for all $0\le t \le T-1$ with probability at least $1-K^2M^2T\cdot d^{-10}$. By simple calculations, we can obtain
	\begin{equation}\label{eqn: Thm1_final}
		\begin{split}
			\|	\W[T]-\bfW^*\|_2
		\le&\Big(1-\frac{1-\varepsilon_0}{\sqrt{132\kappa^2 \gamma KM}}\Big)^T\|	\W[0]-\bfW^*\|_2\\
		& + C_4 \sqrt{\frac{\kappa^2\gamma MK^2d\log d}{N_t}} \cdot |\xi|
		\end{split}
	\end{equation}
	for some constant $C_4> 0$.
	\end{proof}

\chapter{\uppercase{Supplementary proofs of the theorems in Chapter 5}}

\section{Roadmap of Proof of Theorem \ref{Thm: major_thm_lr}}
In this section, before presenting the proof of Theorem \ref{Thm: major_thm_lr}, we start with defining some useful notations.
Recall that in \eqref{eqn:linear_regression}, the empirical risk function for linear regression problem is defined as 
	\begin{equation}\label{eqn_app: linear_regression}
	\min_{\bfW}: \quad \hat{f}_{\Omega}(\bfW)=\frac{1}{2|\Omega|}\sum_{n\in \Omega}\Big| y_n - g(\bfW;\bfa_n^T\bfX) \Big|^2.
	\end{equation}
Population risk function, which is the expectation of the empirical risk function, is defined as
\begin{equation}\label{eqn_app: linear_regression_exp}
	\min_{\bfW}: \quad {f}_{\Omega}(\bfW)=\mathbb{E}_{\bfX}\frac{1}{2|\Omega|}\sum_{n\in \Omega}\Big| y_n - g(\bfW;\bfa_n^T\bfX) \Big|^2.
	\end{equation}
Then, the road-map of the proof can be summarized in the following three steps. 

First, we show the Hessian matrix of the population risk function $f_{\Omega_t}$ is positive-definite at ground-truth parameters $\bfW^*$ and then characterize the local convexity region of $f_{\Omega_t}$ near $\bfW^*$, which is summarized in Lemma \ref{Lemma: local_convexity}. 

Second, 
$\hat{f}_{\Omega_t}$ is non-smooth because of ReLU activation, but $f_{\Omega_t}$ is smooth.
Hence, we characterize the  gradient descent term as $\nabla \hat{f}_{\Omega_t}(\bfW^{(t)}) = \langle \nabla^2f_{\Omega_t}(\widehat{\bfW}^{(t)}), \bfW^{(t)}-\bfW^* \rangle + \big(\hat{f}_{\Omega_t}({\bfW}^{(t)}) - f_{\Omega_t}({\bfW}^{(t)})\big)$. During this step, we need to apply concentration theorem to bound $\nabla\hat{f}_{\Omega_t}$  to its expectation $\nabla f_{\Omega_t}$ , which is summarized in Lemma \ref{Lemma: sampling_approximation_error}.

Third, we take the momentum term of $\beta(\bfW^{(t)}-\bfW^{(t-1)})$ into consideration and obtain the following recursive rule:
\begin{equation}\label{eqn: iteeee}
\begin{bmatrix}
\W[t+1]-\bfW^*\\
\W[t]-\bfW^*
\end{bmatrix}\\
=\bfL(\beta)
\begin{bmatrix}
\W[t]-\bfW^*\\
\W[t-1]-\bfW^*
\end{bmatrix}.
\end{equation}
Then, we know iterates $\bfW^{(t)}$ converge to the ground-truth with a linear rate which is the largest singlar value of matrix $\bfL(\beta)$. Recall that  AGD reduces to GD with $\beta=0$, so our analysis applies to GD method as well. We are able to show the convergence rate of AGD is faster than GD by proving the largest singluar value of $\bfL(\beta)$ is smaller than $\bfL(0)$ for some $\beta>0$.  Lemma \ref{Thm: initialization} provides the estimation error of $\bfW^{(0)}$ and sample complexity to guarantee $\|\bfL(\beta)\|_2$ is less than $1$ for $t=0$.

\begin{lemma}\label{Lemma: local_convexity}
	Let $f_{\Omega_t}$ be the population risk function in \eqref{eqn_app: linear_regression_exp} for regression problems, then for any $\bfW$ that satisfies
	\begin{equation}\label{eqn: initial_point_lr}
	\|\bfW^* -\bfW\|_2 \le \frac{\varepsilon_0 \sigma_K}{44\kappa^2\gamma K^2},
	\end{equation}
	the second-order derivative of $f_{\Omega_t}$ is bounded as 
	\begin{equation}
	\frac{(1-\varepsilon_0)\sigma_1^2(\bfA)}{11\kappa^2\gamma K^2}\bfI \preceq \nabla^2f_{\Omega_t}(\bfW)\preceq \frac{4\sigma_1^2(\bfA)}{K} \bfI.
	\end{equation}
\end{lemma}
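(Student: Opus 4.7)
The plan is to first compute the Hessian of the population risk at the ground-truth weights $\bfW^*$ and establish two-sided positive-definite bounds of the claimed form there, and then control how much the Hessian can drift when $\bfW$ is perturbed within the prescribed radius. To begin, I would expand the block-wise form: the $(j,k)$ block at any point $\bfW$ equals
\begin{equation*}
\bfH_{j,k}(\bfW)=\frac{1}{K^2|\Omega_t|}\sum_{n\in\Omega_t}\mathbb{E}_{\bfX}\big[\phi'(\bfz_n^T\bfw_j)\phi'(\bfz_n^T\bfw_k)\,\bfz_n\bfz_n^T\big],
\end{equation*}
where $\bfz_n:=\bfX^T\bfa_n\sim\mathcal{N}(\mathbf{0},\|\bfa_n\|^2\bfI_d)$ and $\phi'(\cdot)=\mathbf{1}[\cdot>0]$. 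The $\phi''$ contributions vanish at $\bfW^*$ because the residual $y_n-g(\bfW^*;\bfa_n^T\bfX)$ is identically zero, and in a neighborhood they integrate against a Gaussian density to a lower-order error that will be absorbed into the perturbation step.

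The next step is to evaluate the Gaussian expectations in closed form. Using a standard identity for Gaussian ReLU second moments (see, e.g., Lemma D.2 of \cite{ZSJB17}),
\begin{equation*}
\mathbb{E}\big[\mathbf{1}[\bfz^T\bfu>0]\mathbf{1}[\bfz^T\bfv>0]\bfz\bfz^T\big]=\|\bfa_n\|^2\,\Psi(\bfu,\bfv),
\end{equation*}
where $\Psi$ decomposes into a rank-two part in $\mathrm{span}(\bfu,\bfv)$ plus a scalar multiple of the orthogonal projector, and the scalar is bounded below by a constant times the angle gap. Aggregating over $n$ converts the coefficients $\|\bfa_n\|^2$ into the spectrum of $\bfA^T\bfA$ over the sampled rows, after which the extreme singular values of $\bfA$ govern the two-sided bound. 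The upper estimate $\tfrac{4\sigma_1^2(\bfA)}{K}\bfI$ follows from $\phi'\le 1$ together with a direct operator-norm bound on the Kronecker-structured block matrix. For the lower bound I would follow the construction underlying Lemma~D.6 of \cite{ZSJB17}, which extracts the smallest eigenvalue $\tfrac{1}{11\kappa^2\gamma K^2}$ by diagonalizing the $K$ ReLU activation patterns induced by $\bfW^*$ and carefully tracking inter-neuron interference.

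The final step is a perturbation argument. Writing $\nabla^2 f_{\Omega_t}(\bfW)-\nabla^2 f_{\Omega_t}(\bfW^*)$ as a difference of indicator-weighted Gaussian expectations, each summand is controlled by the Gaussian mass of the wedge between $\bfw_j$ and $\bfw_j^*$, which scales as $\|\bfw_j-\bfw_j^*\|_2/\|\bfw_j^*\|_2$ and hence, uniformly, by $\|\bfW-\bfW^*\|_2/\sigma_K$. Choosing the radius $\tfrac{\varepsilon_0\sigma_K}{44\kappa^2\gamma K^2}$ forces this drift to be at most an $\varepsilon_0$ fraction of the spectral floor, which yields the stated $\tfrac{(1-\varepsilon_0)\sigma_1^2(\bfA)}{11\kappa^2\gamma K^2}\bfI$ lower bound while leaving the upper bound essentially unchanged.

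I expect the principal obstacle to lie in the lower bound: a naive per-neuron estimate recovers only a scaling of $\sigma_K^2/K$, and extracting the sharper $\kappa^{-2}\gamma^{-1}$ factor requires the careful orthogonalization across the $K$ columns of $\bfW^*$ alluded to above, together with tracking how the aggregation layer $\bfA$ redistributes spectral mass across samples without losing an additional factor of $K$. The graph-induced cross-sample correlations $\bfa_n^T\bfa_m$ inside the Gaussian expectations—absent in the fully connected analysis of \cite{ZSJB17}—are what ultimately produce the $\sigma_1^2(\bfA)$ scaling and constitute the main technical novelty beyond prior GD analyses.
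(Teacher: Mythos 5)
Your high-level structure — compute the block Hessian at $\bfW^*$, evaluate the per-sample Gaussian second moments in closed form, control the extreme eigenvalues via the $\kappa^{-2}\gamma^{-1}$ orthogonalization from \cite{ZSJB17}, then perturb using the wedge-mass estimate — is almost certainly the route the paper takes (it is the standard template of \cite{ZSJB17,ZYWG18} adapted to graph-aggregated inputs), and the observation that $\bfz_n=\bfX^T\bfa_n\sim\mathcal{N}(\bf0,\|\bfa_n\|^2\bfI_d)$ together with positive-homogeneity of the ReLU indicator is the right way to decouple the graph from the neuron geometry. Two points, however, need correcting.

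First, the aggregation step is asserted but not justified, and as stated it does not work. By homogeneity of $\mathbf{1}[\cdot>0]$, each block reduces to
$\bfH_{j,k}(\bfW^*)=\frac{1}{K^2}\big(\tfrac{1}{|\Omega_t|}\sum_{n\in\Omega_t}\|\bfa_n\|^2\big)\,\mathbb{E}_{\bfu\sim\mathcal N(\bf0,\bfI_d)}\big[\mathbf{1}[\bfu^T\bfw_j^*>0]\mathbf{1}[\bfu^T\bfw_k^*>0]\bfu\bfu^T\big]$,
so the entire dependence on $\bfA$ enters through the scalar $\tfrac{1}{|\Omega_t|}\sum_n\|\bfa_n\|^2$. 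The inequality $\max_n\|\bfa_n\|^2\le\sigma_1^2(\bfA)$ gives the upper bound, but the claimed lower bound requires $\tfrac{1}{|\Omega_t|}\sum_n\|\bfa_n\|^2\gtrsim\sigma_1^2(\bfA)$, which fails in general: for a $\delta$-regular graph with self-loops, $\|\bfa_n\|^2=1/(1+\delta)$ for every $n$ while $\sigma_1(\bfA)=1$, so the ratio vanishes as $\delta$ grows. You assert that ``the extreme singular values of $\bfA$ govern the two-sided bound'' without establishing anything resembling a matching lower bound; you would either need an argument for why the average $\|\bfa_n\|^2$ is comparable to $\sigma_1^2(\bfA)$, or you would need to restate the lower bound in terms of $\min_n\|\bfa_n\|^2$ (or the row-average) and track how that propagates through Theorem~\ref{Thm: major_thm_lr}.

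Second, the closing paragraph is conceptually wrong about where the graph structure matters. Each summand $\mathbb{E}_{\bfX}[\phi'(\bfz_n^T\bfw_j)\phi'(\bfz_n^T\bfw_k)\bfz_n\bfz_n^T]$ depends only on the \emph{marginal} law of $\bfz_n$, i.e., on $\|\bfa_n\|^2$; the cross-sample correlations $\bfa_n^T\bfa_m$ do not enter the population Hessian at all. They appear only in the variance of the empirical gradient/Hessian around its expectation — that is, in Lemma~\ref{Lemma: sampling_approximation_error} — which is a different lemma. Attributing the $\sigma_1^2(\bfA)$ scaling of the population Hessian to these correlations misidentifies the source of the factor, which is just the per-sample variance inflation $\|\bfa_n\|^2$. (The handling of the distributional $\phi''$ term near $\bfW^*$ that you ``absorb into the perturbation step'' is also glossed over, but that part of the outline is at least pointing in a workable direction.)
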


\begin{lemma}\label{Lemma: sampling_approximation_error}
	Let $\hat{f}_{\Omega_t}$ and $f_{\Omega_t}$ be the  empirical and population risk functions in \eqref{eqn_app: linear_regression} and \eqref{eqn_app: linear_regression_exp} for regression problems, respectively. 
	Then, for any fixed point $\bfW$ satisfies \eqref{eqn: initial_point_lr}, we have \footnote{We use $f(d) \gtrsim(\text{ or }\lesssim, \eqsim ) g(d)$ to denote there exists some positive constant $C$ such that $f(d)\ge(\text{ or } \le, =) C\cdot g(d)$ when $d$ is sufficiently large.} 
	\begin{equation}\label{eqn: lemma3}
	\left\|\nabla f_{\Omega_t}(\bfW)-\nabla \hat{f}_{\Omega_t}(\bfW)\right\|_2
	\lesssim \sigma_1^2(\bfA)\sqrt{\frac{(1+\delta^2)d\log N}{|\Omega_t|}}\|\bfW-\bfW^* \|_2,
	\end{equation}
	with probability at least $1-K^2\cdot N^{-10}$.
\end{lemma}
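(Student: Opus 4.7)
\textbf{Proof proposal for Lemma \ref{Lemma: sampling_approximation_error}.} The plan is to write the gradient deviation $\nabla \hat{f}_{\Omega_t}(\bfW) - \nabla f_{\Omega_t}(\bfW)$ as a sum of centered random vectors indexed by $n \in \Omega_t$, and then to apply a vector/matrix Bernstein-type concentration inequality after carefully bounding (i) the sub-exponential tail of each summand and (ii) the variance proxy. Concretely, using $y_n = g(\bfW^*;\bfa_n^T\bfX)$ and $\phi' = \mathbf{1}\{\cdot > 0\}$, the $j$-th block of the gradient is
\begin{equation*}
\nabla_{\bfw_j}\hat{f}_{\Omega_t}(\bfW)
= \frac{1}{K|\Omega_t|}\sum_{n\in\Omega_t}\Bigl[g(\bfW;\bfa_n^T\bfX)-g(\bfW^*;\bfa_n^T\bfX)\Bigr]\,\mathbf{1}\{\bfa_n^T\bfX\bfw_j>0\}\,\bfX^T\bfa_n,
\end{equation*}
so the summands $\bfZ_n := \bfX^T\bfa_n\cdot\bigl[g(\bfW;\bfa_n^T\bfX)-g(\bfW^*;\bfa_n^T\bfX)\bigr]\cdot\mathbf{1}\{\bfa_n^T\bfX\bfw_j>0\}$ are centered once subtracted from their conditional expectations, and the deviation becomes $\tfrac{1}{K|\Omega_t|}\sum_{n}(\bfZ_n - \mathbb{E}\bfZ_n)$.

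First I would show that each $\bfZ_n$ is, up to a scaling, a product of two (correlated) Gaussian-type variables whose moments are controlled by $\sigma_1(\bfA)$ and by the maximum degree $\delta$. The key inputs are: $\bfa_n^T\bfX\sim \mathcal{N}(0,\|\bfa_n\|_2^2 \bfI_K)$ with $\|\bfa_n\|_2^2\le 1$, and the cross-correlation $\bfa_n^T\bfa_{n'}$ between rows, whose squared sum controls the second-moment bound via $\sum_{n\in\Omega_t}(\bfa_n^T\bfa_{n'})^2\le \sigma_1^2(\bfA)$ and whose entrywise bound behaves like $\tfrac{1+\delta}{\delta}$ (through $\bfD^{-1/2}\widetilde{\bfA}\bfD^{-1/2}$). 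Using the local closeness $\|\bfW-\bfW^*\|_2\le \varepsilon_0\sigma_K/(44\kappa^2\gamma K^2)$ and the $1$-Lipschitz property of $\phi$, the scalar factor $g(\bfW;\bfa_n^T\bfX)-g(\bfW^*;\bfa_n^T\bfX)$ is controlled by $\tfrac{1}{K}\sum_j |\bfa_n^T\bfX(\bfw_j-\bfw_j^*)|$, so each $\bfZ_n$ obeys a sub-exponential tail of parameter proportional to $\sigma_1^2(\bfA)\,(1+\delta^2)\,\|\bfW-\bfW^*\|_2^2$.

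Next I would invoke Bernstein's inequality entrywise on the $Kd$-dimensional gradient-deviation vector and take a union bound. To transfer an entrywise bound to an operator-norm bound on the $d\times K$ block gradient $\nabla\hat{f}_{\Omega_t}(\bfW)-\nabla f_{\Omega_t}(\bfW)$, I would use an $\varepsilon$-net argument on the unit sphere in $\mathbb{R}^{K}$ (cheap since $K$ is constant) combined with a standard $\varepsilon$-net argument on $\mathbb{R}^d$, which costs the $\log N$ factor through the cardinality of an $1/N$-net and through setting failure probability $N^{-10}$ so that the union bound over $K^2$ entries still gives $1-K^2 N^{-10}$. The calibrated choice of the Bernstein parameters then yields the claimed bound $\lesssim \sigma_1^2(\bfA)\sqrt{(1+\delta^2)d\log N/|\Omega_t|}\cdot\|\bfW-\bfW^*\|_2$.

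The main obstacle will be step two: obtaining the correct dependence on the graph parameters $(\sigma_1(\bfA), \delta)$ in the variance proxy while the ReLU indicators are coupled across samples through the common features $\bfX$ and through neighborhood overlaps encoded by $\bfA$. A naive bound replacing all indicators by $1$ loses the $(1+\delta^2)$ scaling; the fix I plan is a conditional-expectation decomposition: freeze the ReLU activation pattern on $\bfW^*$ (which is the worst case up to a small correction by Lipschitz continuity), then compute the exact fourth-order Gaussian moments of $\bfX^T\bfa_n\cdot\bfa_n^T\bfX(\bfw_j-\bfw_j^*)$, which decouple into traces involving $\bfA\bfA^T$ and $\|\bfa_n\|_2^2$, delivering the $\sigma_1^2(\bfA)(1+\delta^2)$ factor. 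The remaining residual from the ReLU pattern change is absorbed by a standard $\varepsilon_0$-small perturbation argument analogous to the one used for Lemma \ref{Lemma: local_convexity}.
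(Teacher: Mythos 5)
Your plan identifies the right target quantity (the gradient deviation $\nabla\hat{f}_{\Omega_t}-\nabla f_{\Omega_t}$) and the right generic toolkit ($\varepsilon$-net plus a sub-exponential concentration inequality, with the failure-probability budget producing the $\log N$ factor), but it has a genuine gap exactly where you locate the difficulty: the terms $\bfZ_n$, $n\in\Omega_t$, are \emph{not} independent, because $\bfa_n^T\bfX$ and $\bfa_{n'}^T\bfX$ both depend on the same underlying rows $\bfx_i$ whenever nodes $n$ and $n'$ share a neighbor. Matrix/vector Bernstein in the form you propose to use requires independence across $n$, and your fix (``freeze the ReLU activation pattern, compute the exact fourth Gaussian moments, obtain traces in $\bfA\bfA^T$'') controls the \emph{variance proxy} but does nothing to restore independence. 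Computing the correct second and fourth moments is necessary but not sufficient: a sum of correlated random vectors with the right marginal moments can still fail to concentrate at the rate Bernstein would give, and nothing in your sketch rules this out.

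To make this go through you need one of two additional ideas that your proposal does not contain. (i) A graph-coloring / block-independence decomposition: $\bfZ_n$ and $\bfZ_{n'}$ are independent once $n,n'$ are at graph distance $>2$, so the distance-$2$ graph (which has chromatic number $O(1+\delta^2)$ for a graph of maximum degree $\delta$) can be used to split $\Omega_t$ into $O(1+\delta^2)$ blocks of mutually independent summands; applying Bernstein within each block and taking a union over blocks is where the $(1+\delta^2)$ inside the square root actually comes from, rather than from the fourth-moment trace alone. (ii) Alternatively, bypass the sum-over-$n$ decomposition entirely and view $\langle\boldsymbol{\alpha},\nabla\hat{f}_{\Omega_t}(\bfW)-\nabla f_{\Omega_t}(\bfW)\rangle$, for a fixed unit $\boldsymbol{\alpha}$, as a single (quadratic, once the ReLU pattern is frozen) function of the Gaussian matrix $\bfX$ and apply a Hanson--Wright / Gaussian-chaos tail bound; then $\|\bfA\|_2$ and the row-support sizes enter through $\|\cdot\|_F$ and $\|\cdot\|_2$ of the quadratic-form matrix. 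Either route is a substantive additional step; as written, the ``Bernstein entrywise $+$ union over $K^2$ entries'' plan applied to a sum of dependent $\bfZ_n$ is not a valid concentration argument, so the claimed $\sqrt{(1+\delta^2)d\log N/|\Omega_t|}$ rate is not yet established by your sketch. (Minor side remark: you write $\bfa_n^T\bfX\sim\mathcal{N}(0,\|\bfa_n\|_2^2\bfI_K)$; it should be $\bfI_d$, since $\bfa_n^T\bfX\in\mathbb{R}^d$.)
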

\begin{lemma}\label{Thm: initialization}
	Assume the number of samples $|\Omega_t| \gtrsim \kappa^3(1+\delta^2)\sigma_1^4(\bfA)Kd\log^4 N$, the tensor initialization method via Subroutine 1 outputs $\bfW^{(0)}$ such that
	\begin{equation}\label{eqn: ini}
	\|\bfW^{(0)} -\bfW^* \|_2\lesssim \kappa^6 \sigma_1^2(\bfA)\sqrt{\frac{K^4(1+\delta^2)d\log N}{|\Omega_t|}}\|\bfW^*\|_2
	\end{equation}
	with probability at least $1-N^{-10}$.
\end{lemma}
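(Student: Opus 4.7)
The plan is to adapt the tensor initialization analysis from the neural network setting (e.g., \cite{ZSJB17}) to the GNN setting, where the key new ingredient is the aggregation matrix $\bfA$ and the graph degree $\delta$. I would proceed in four stages, mirroring the four algorithmic steps in Subroutine~1.

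\textbf{Stage 1: Concentration of the empirical moments.} First I would establish that the sample averages $\widehat{\bfM}_1, \widehat{\bfM}_2, \widehat{\bfM}_3$ concentrate around their population counterparts $\bfM_1, \bfM_2, \bfM_3$. The starting point is the closed-form expressions analogous to \eqref{eqn: M_1_2_GNN}. The input to each neuron is the aggregated feature $\bfa_n^T \bfX \sim \mathcal{N}(0, \|\bfa_n\|_2^2 \bfI_d)$, which is \emph{not} standard Gaussian; this is exactly where $\sigma_1(\bfA)$ and $\delta$ enter, since $\|\bfa_n\|_2 \le 1$ but the labels $y_n$ couple together aggregated features of up to $\delta$ neighbors. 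I would apply a matrix/tensor Bernstein inequality to bound $\|\widehat{\bfM}_j - \bfM_j\|$ for $j = 1,2,3$, using sub-Gaussian tail estimates that pick up a factor of $(1+\delta^2)$ from the variance proxy of $y_n$ (which depends on neighboring features through $\bfA$) and a factor of $\sigma_1^2(\bfA)$ from the covariance of $\bfa_n^T \bfX$. This will yield bounds of the form $\|\widehat{\bfM}_j - \bfM_j\| \lesssim \sigma_1^2(\bfA)\sqrt{(1+\delta^2) d \log N / |\Omega_t|}\cdot \|\bfW^*\|_2^{\mathrm{poly}}$ with probability at least $1 - N^{-10}$.

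\textbf{Stage 2: Subspace recovery.} Using Wedin's $\sin\Theta$ theorem applied to the perturbation $\widehat{\bfM}_2 - \bfM_2$, and the fact that the smallest nonzero eigenvalue of $\bfM_2$ is lower-bounded in terms of $\sigma_K(\bfW^*) = \sigma_1(\bfW^*)/\kappa$, I would conclude $\|\widehat{\bfV}\widehat{\bfV}^T - \bfV\bfV^T\| \lesssim \kappa \cdot \|\widehat{\bfM}_2 - \bfM_2\| / \|\bfW^*\|_2^2$. This subspace accuracy transfers into the projected tensor $\widehat{\bfM}_3(\widehat{\bfV}, \widehat{\bfV}, \widehat{\bfV})$ via a standard multilinear perturbation argument.

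\textbf{Stage 3: Tensor decomposition and magnitude recovery.} Next I would invoke the robust tensor decomposition guarantee of \cite{KCL15} for the $K \times K \times K$ tensor $\widehat{\bfM}_3(\widehat{\bfV},\widehat{\bfV},\widehat{\bfV})$. Since the components $\bfV^T \overline{\bfw}_j^*$ are unit vectors and the tensor has rank $K \le d$, the decomposition error on each $\widehat{\bfu}_j$ scales like $\kappa^{\mathcal{O}(1)}$ times the operator-norm perturbation of $\widehat{\bfM}_3(\widehat{\bfV},\widehat{\bfV},\widehat{\bfV})$. For the magnitude estimation, the least-squares system \eqref{eqn: int_op_GNN} is well-conditioned as long as the directional estimates $\{\widehat{\overline{\bfw}}_j^*\}$ are close to being linearly independent, which follows from Stage~2; standard perturbation of linear systems then gives $|\widehat{\alpha}_j - \|\bfw_j^*\|_2| \lesssim \kappa \cdot (\|\widehat{\bfM}_1 - \bfM_1\| + \|\bfW^*\|_2 \cdot \|\widehat{\overline{\bfw}}_j^* - \overline{\bfw}_j^*\|_2)$.

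\textbf{Stage 4: Combination.} Finally, writing $\bfw_j^{(0)} - \bfw_j^* = \widehat{\alpha}_j \widehat{\bfV}\widehat{\bfu}_j - \|\bfw_j^*\|_2 \overline{\bfw}_j^*$ and splitting via triangle inequality into the magnitude error and the direction error, I would propagate the Stage-1 bounds through Stages~2 and~3 to obtain the final estimate. Tracking the exponents of $\kappa$ carefully gives the stated $\kappa^6$ prefactor, while the $K^4$ prefactor accumulates from Stage~3 (decomposition stability scales polynomially in $K$).

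\textbf{Main obstacle.} The hardest step will be Stage~1: establishing the concentration inequalities with the precise $(1+\delta^2)\sigma_1^4(\bfA)$ dependence. Unlike the standard NN case where $\bfx_n \sim \mathcal{N}(0,\bfI)$ gives clean sub-Gaussian tails, here the label $y_n$ is a nonlinear function of the neighboring features $\{\bfx_{n'}\}_{n' \sim n}$, which creates cross-sample correlations whenever two nodes in $\Omega_t$ share a neighbor. I would handle this by conditioning on the neighborhood structure and using the row sparsity of $\bfA$ (bounded by $\delta$) to decouple the dependence, picking up the $1+\delta^2$ variance factor. The factor $\sigma_1^4(\bfA)$ then appears naturally because each third-order moment involves products of four copies of $\bfa_n^T \bfX$.
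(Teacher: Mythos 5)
Your four-stage outline matches the standard tensor-initialization framework from \cite{ZSJB17} that the paper explicitly says it builds on, and you correctly locate the two GNN-specific ingredients: the aggregated input $\bfa_n^T\bfX$ is a non-isotropic Gaussian whose scale is controlled by $\sigma_1(\bfA)$, and the labels $y_n$ are cross-correlated through shared neighbors. The structure — moment concentration, then Wedin for the subspace, then robust rank-$K$ decomposition of the projected tensor via \cite{KCL15}, then a least-squares magnitude fit — is the right one, and your identification of the dependency issue as the main obstacle is on target.

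Two points of imprecision are worth flagging, though neither amounts to a fatal gap. First, ``conditioning on the neighborhood structure'' does not help here: the graph $\bfA$ is deterministic, so there is nothing to condition on. The dependence among $\{y_n\}_{n\in\Omega_t}$ comes from the fact that $y_n$ and $y_{n'}$ are both functions of $\bfx_j$ whenever $j$ is a common neighbor of $n$ and $n'$. The $(1+\delta^2)$ factor therefore does not arise as a ``variance proxy of $y_n$'' — the variance of a single $y_n$ carries no $\delta$ — but from the cross-covariance terms $\mathrm{Cov}(Z_n, Z_{n'})$, of which there are at most $O(\delta^2)$ per $n$ since two-hop neighborhoods have size $O(\delta^2)$. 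The clean way to make this rigorous is a dependency-graph decoupling: partition $\Omega_t$ into $O(\delta^2)$ classes using a proper coloring of the shared-neighbor graph, apply matrix/tensor Bernstein to each class of i.i.d.\ summands, and take a union bound; this loses exactly the $(1+\delta^2)$ factor in effective sample size. Second, your Stage 2 bound on the smallest nonzero eigenvalue of $\bfM_2$ quotes $\sigma_K(\bfW^*)$ but omits the graph-dependent scaling: in the GNN moments, each row $\bfa_n$ contributes a factor $\|\bfa_n\|_2^2 \le \sigma_1^2(\bfA)$, and since the empirical $\widehat{\bfM}_2$ averages over $n\in\Omega_2$ with varying $\|\bfa_n\|_2$, the spectral gap you need is $\Omega(\sigma_K(\bfW^*)\cdot \min_n \|\bfa_n\|_2^2)$ up to constants; this is part of why $\sigma_1(\bfA)$ ends up in the final bound and must be tracked through Wedin and into the decomposition step.
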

With these three preliminary lemmas on hand, the proof of Theorem \ref{Thm: major_thm_lr} is formally summarized in the following contents. 

\section{Proof of Theorem \ref{Thm: major_thm_lr}}
\begin{proof}[Proof of Theorem \ref{Thm: major_thm_lr}]
	The update rule of $\bfW^{(t)}$ is
	\begin{equation}
	\begin{split}
	\bfW^{(t+1)}
	=&\bfW^{(t)}-\eta\nabla \hat{f}_{\Omega_t}(\bfW^{(t)})+\beta(\W[t]-\W[t-1])\\
	=&\bfW^{(t)}-\eta\nabla {f}_{\Omega_t}(\bfW^{(t)})+\beta(\W[t]-\W[t-1])+\eta(\nabla{f}_{\Omega_t}(\bfW^{(t)})-\nabla\hat{f}_{\Omega_t}(\bfW^{(t)})).
	\end{split}
	\end{equation}
	Since $\nabla^2_{\Omega_t}$ is a smooth function, by the intermediate value theorem, we have
	\begin{equation}
	\begin{split}
	\bfW^{(t+1)}
	=\bfW^{(t)}&- \eta\nabla^2{f}_{\Omega_t}(\widehat{\bfW}^{(t)})(\bfW^{(t)}-\bfW^*)\\
	&+\beta(\W[t]-\W[t-1])\\
	&+\eta\big(\nabla{f}_{\Omega_t}(\bfW^{(t)})-\nabla\hat{f}_{\Omega_t}(\bfW^{(t)})\big),
	\end{split}
	\end{equation}	
	where $\widehat{\bfW}^{(t)}$ lies in the convex hull of $\bfW^{(t)}$ and $\bfW^*$.
	
	Next, we have
	\begin{equation}\label{eqn: major_thm_key_eqn}
	\begin{split}
	\begin{bmatrix}
	\W[t+1]-\bfW^*\\
	\W[t]-\bfW^*
	\end{bmatrix}
	=&\begin{bmatrix}
	\bfI-\eta\nabla^2{f}_{\Omega_t}(\widehat{\bfW}^{(t)})+\beta\bfI &\beta\bfI\\
	\bfI& 0\\
	\end{bmatrix}
	\begin{bmatrix}
	\W[t]-\bfW^*\\
	\W[t-1]-\bfW^*
	\end{bmatrix}\\
	&+\eta
	\begin{bmatrix}
	\nabla{f}_{\Omega_t}(\bfW^{(t)})-\nabla\hat{f}_{\Omega_t}(\bfW^{(t)})\\
	0
	\end{bmatrix}.
	\end{split}
	\end{equation}
	Let $\bfL(\beta)=\begin{bmatrix}
	\bfI-\eta\nabla^2{f}_{\Omega_t}(\widehat{\bfW}^{(t)})+\beta\bfI &\beta\bfI\\
	\bfI& 0\\
	\end{bmatrix}$, so we have
	\begin{equation*}
	\begin{split}
	\left\|\begin{bmatrix}
	\W[t+1]-\bfW^*\\
	\W[t]-\bfW^*
	\end{bmatrix}
	\right\|_2
	=&
	\left\|\bfL(\beta)
	\right\|_2
	\left\|
	\begin{bmatrix}
	\W[t]-\bfW^*\\
	\W[t-1]-\bfW^*
	\end{bmatrix}
	\right\|_2
	+\eta
	\left\|
	\begin{bmatrix}
	\nabla{f}_{\Omega_t}(\bfW^{(t)})-\nabla\hat{f}_{\Omega_t}(\bfW^{(t)})\\
	0
	\end{bmatrix}
	\right\|_2.
	\end{split}
	\end{equation*}
	From Lemma \ref{Lemma: sampling_approximation_error}, we know that 
	\begin{equation}\label{eqn: convergence2}
	\begin{split}
	\eta\left\|\nabla{f}_{\Omega_t}(\bfW^{(t)})-\nabla\hat{f}_{\Omega_t}(\bfW^{(t)})\right\|_2
	\lesssim \eta \sigma_1^2(\bfA)\sqrt{\frac{(1+\delta^2)d\log N}{|\Omega_t|}}\|\bfW-\bfW^* \|_2.
	\end{split}
	\end{equation}
	Then, we have
	\begin{equation}\label{eqn: convergence}
	\begin{split}
	\|\W[t+1]-\bfW^* \|_2 
	\lesssim& \bigg(\| \bfL(\beta) \|_2+\eta\sigma_1^2(\bfA)\sqrt{\frac{(1+\delta^2)d\log N}{|\Omega_t|}}\bigg)\|\W[t]-\bfW^* \|_2\\
	:\eqsim&\nu(\beta)\|\W[t]-\bfW^* \|_2.
	\end{split}
	\end{equation}

	Let $\nabla^2f(\widehat{\bfW}^{(t)})=\bfS\mathbf{\Lambda}\bfS^{T}$ be the eigen-decomposition of $\nabla^2f(\widehat{\bfW}^{(t)})$. Then, we define
	\begin{equation}\label{eqn:Heavy_ball_eigen}
	\begin{split}
	\widetilde{\bfL}(\beta)
	: =&
	\begin{bmatrix}
	\bfS^T&\bf0\\
	\bf0&\bfS^T
	\end{bmatrix}
	\bfL(\beta)
	\begin{bmatrix}
	\bfS&\bf0\\
	\bf0&\bfS
	\end{bmatrix}
	=\begin{bmatrix}
	\bfI-\eta\bf\Lambda+\beta\bfI &\beta\bfI\\
	\bfI& 0\\
	\end{bmatrix}.
	\end{split}
	\end{equation}
	Since 
	$\begin{bmatrix}
	\bfS&\bf0\\
	\bf0&\bfS
	\end{bmatrix}\begin{bmatrix}
	\bfS^T&\bf0\\
	\bf0&\bfS^T
	\end{bmatrix}
	=\begin{bmatrix}
	\bfI&\bf0\\
	\bf0&\bfI
	\end{bmatrix}$, we know $\bfL(\beta)$ and $\widetilde{\bfL}(\beta)$
	share the same eigenvalues. 	Let $\lambda_i$ be the $i$-th eigenvalue of $\nabla^2f_{\Omega_t}(\widehat{\bfW}^{(t)})$, then the corresponding $i$-th eigenvalue of $\bfL(\beta)$, denoted by $\delta_i(\beta)$, satisfies 
	\begin{equation}\label{eqn:Heavy_ball_quaratic}
	\delta_i^2-(1-\eta \lambda_i+\beta)\delta_i+\beta=0.
	\end{equation}
	Then, we have 
	\begin{equation}\label{eqn: Heavy_ball_root}
	\delta_i(\beta)=\frac{(1-\eta \lambda_i+\beta)+\sqrt{(1-\eta \lambda_i+\beta)^2-4\beta}}{2},
	\end{equation}
	and
	\begin{equation}\label{eqn:heavy_ball_beta}
	\begin{split}
	|\delta_i(\beta)|
	=\begin{cases}
	\sqrt{\beta}, \qquad \text{if}\quad  \beta\ge \big(1-\sqrt{\eta\lambda_i}\big)^2,\\
	\frac{1}{2} \left| { (1-\eta \lambda_i+\beta)+\sqrt{(1-\eta \lambda_i+\beta)^2-4\beta}}\right| , \text{otherwise}.
	\end{cases}
	\end{split}
	\end{equation}
	Note that the other root of \eqref{eqn:Heavy_ball_quaratic} is abandoned because the root in \eqref{eqn: Heavy_ball_root} is always no less than the other root with $|1-\eta \lambda_i|<1$.   
	By simple calculations, we have 
	\begin{equation}\label{eqn: Heavy_ball_result}
	\delta_i(0)>\delta_i(\beta),\quad \text{for}\quad  \forall \beta\in\big(0, (1-{\eta \lambda_i})^2\big).
	\end{equation}
	Moreover, $\delta_i$ achieves the minimum $\delta_{i}^*=|1-\sqrt{\eta\lambda_i}|$ when $\beta= \big(1-\sqrt{\eta\lambda_i}\big)^2$.
	
	Let us first assume $\bfW^{(t)}$ satisfies \eqref{eqn: initial_point_lr}, then
	from Lemma \ref{Lemma: local_convexity}, we know that 
	$$0<\frac{(1-\varepsilon_0)\sigma_1^2(\bfA)}{11\kappa^2\gamma K^2}\le\lambda_i\le\frac{4\sigma_1^2(\bfA)}{K}.$$
	Let $\gamma_1=\frac{(1-\varepsilon_0)\sigma_1^2(\bfA)}{11\kappa^2\gamma K^2}$ and $\gamma_2=\frac{4\sigma_1^2(\bfA)}{K}$.
	If we choose $\beta$ such that
	\begin{equation}
	\beta^*=\max \big\{(1-\sqrt{\eta\gamma_1})^2, (1-\sqrt{\eta\gamma_2})^2 \big\},
	\end{equation} then we have $\beta\ge (1-\sqrt{\eta \lambda_i})^2$ and $\delta_i=\max\big\{|1-\sqrt{\eta\gamma_1}|, |1-\sqrt{\eta\gamma_2}|  \big\}$ for any $i$. 
	
	Let $\eta= \frac{1}{{2\gamma_2}}$, then $\beta^*$ equals to  $\Big(1-\sqrt{\frac{\gamma_1}{2\gamma_2}}\Big)^2$. 
	Then, for any $\varepsilon_0\in (0, {1}/{2})$, we have
	\begin{equation}\label{eqn: con1}
	\begin{split}
	\|\bfL(\beta^*) \|_2
	=\max_i \delta_{i}(\beta^*)
	= 1-\sqrt{\frac{\gamma_1}{2\gamma_2}}
	=1-\sqrt{\frac{1-\varepsilon_0}{88\kappa^2 \gamma K}}
	\le 1-\frac{1-(3/4)\cdot\varepsilon_0}{\sqrt{88\kappa^2 \gamma K}}.
	\end{split}
	\end{equation}
	Then, let 
	\begin{equation}\label{eqn: con2}
	\eta \sigma_1^2(\bfA)\sqrt{\frac{(1+\delta^2)d\log N}{|\Omega_t|}}\lesssim \frac{\varepsilon_0}{4\sqrt{88\kappa^2 \gamma K}},
	\end{equation}
	we need $|\Omega_t|\gtrsim\varepsilon_0^{-2}\kappa^2\gamma M(1+\delta^2)\sigma_1^2(\bfA)K^3d \log N$.		
	Combining \eqref{eqn: con1} and \eqref{eqn: con2}, we have 
	\begin{equation}
	\nu(\beta^*)\le 1-\frac{1-\varepsilon_0}{\sqrt{88\kappa^2 \gamma K}}.
	\end{equation}
	Let $\beta = 0$, we have 
	\begin{equation*}
	\begin{gathered}
	\nu(0) \ge \|\bfA(0) \|_2 = 1-\frac{1-\varepsilon_0}{{88\kappa^2 \gamma K}},	\\
	\nu(0) \lesssim \|\bfA(0) \|_2 + \eta \sigma_1^2(\bfA)\sqrt{\frac{(1+\delta^2)d\log N}{|\Omega_t|}} \le 1-\frac{1-2\varepsilon_0}{{88\kappa^2 \gamma K}}
	\end{gathered}
	\end{equation*}
	if $|\Omega_t|\gtrsim\varepsilon_0^{-2}\kappa^2\gamma M(1+\delta^2)\sigma_1^2(\bfA)K^3d \log N$.
	
	Hence, with $\eta =\frac{1}{2 \gamma_2}$ and $\beta = \big(1-\frac{\gamma_1}{2\gamma_2} \big)^2$, we have
	\begin{equation}\label{eqn: induction}
	\begin{split}
	\|	\W[t+1]-\bfW^*\|_2
	\le\Big(1-\frac{1-\varepsilon_0}{\sqrt{88\kappa^2 \gamma K}}\Big)\|	\W[t]-\bfW^*\|_2,
	\end{split}
	\end{equation}
	provided $\W[t]$ satisfies \eqref{eqn: initial_point_lr}, and 
	\begin{equation}\label{eqn: N_3}
	|\Omega_t|\gtrsim\varepsilon_0^{-2}\kappa^2\gamma (1+\delta^2)\sigma_1^4(\bfA)K^3d \log N.
	\end{equation}
	Then, we can start mathematical induction of \eqref{eqn: induction} over $t$.
	
	\textbf{Base case}: According to Lemma \ref{Thm: initialization}, we know that \eqref{eqn: initial_point_lr} holds for $\bfW^{(0)}$ if 
		\begin{equation}\label{eqn: N_1}
		|\Omega_t|\gtrsim \varepsilon_0^{-2}\kappa^{9}\gamma^2 (1+\delta^2)\sigma_1^4(\bfA) K^8d \log N.
		\end{equation}
		According to  Theorem \ref{Thm: major_thm_lr}, it is clear that the number of samples $|\Omega_t|$  satisfies \eqref{eqn: N_1}, then  \eqref{eqn: initial_point_lr} indeed holds for $t=0$. Since \eqref{eqn: initial_point_lr} holds for $t=0$ and $|\Omega_t|$ in Theorem \ref{Thm: major_thm_lr} satisfies \eqref{eqn: N_3} as well, we have \eqref{eqn: induction} holds for $t=0$.   
	
	\textbf{Induction step}:
		Assuming \eqref{eqn: induction} holds for $\W[t]$, we need to show that \eqref{eqn: induction} holds for $\W[t+1]$. That is to say, we need $|\Omega_t|$ satisfies \eqref{eqn: N_3}, which holds naturally from Theorem \ref{Thm: major_thm_lr}. 

		Therefore, when $|\Omega_t|\gtrsim \varepsilon_0^{-2}\kappa^{9}\gamma^2 (1+\delta^2)\sigma_1^4(\bfA)K^8 d \log N$, we know that \eqref{eqn: induction} holds for all $0\le t \le T-1$ with probability at least $1-K^2T\cdot N^{-10}$. By simple calculations, we can obtain
		\begin{equation}\label{eqn: Thm1_final}
		\begin{split}
		\|	\W[T]-\bfW^*\|_2
		\le&\Big(1-\frac{1-\varepsilon_0}{\sqrt{88\kappa^2 \gamma K}}\Big)^T\|	\W[0]-\bfW^*\|_2
		\end{split}
		\end{equation}
\end{proof}
\section{Roadmap of Proof of Theorem \ref{Thm: major_thm_cl}}
Recall that the empirical risk function in \eqref{eqn: classification} is defined as
	\begin{equation}\label{eqn_app: cl}
		\begin{split}
		\min_{\bfW}: \quad \hat{f}_{\Omega}(\bfW)
		=&\frac{1}{|\Omega|}\sum_{n\in \Omega} -y_n \log \big(g(\bfW;\bfa_n^T\bfX)\big)
		-(1-y_n) \log \big(1-g(\bfW;\bfa_n^T\bfX)\big).
		\end{split}
	\end{equation}
	The population risk function is defined as
\begin{equation}\label{eqn_app: cl_exp}
	\begin{split}
	f_{\Omega}(\bfW):=& \mathbb{E}_{\bfX,y_n}\hat{f}_{\Omega}(\bfW)\\
	=&\mathbb{E}_{\bfX}\mathbb{E}_{y_n|\bfX}\Big[\frac{1}{|\Omega|}\sum_{n\in \Omega}
	- y_n \log \big(g(\bfW;\bfa_n^T\bfX)\big)\\
	&-(1-y_n) \log \big(1-g(\bfW;\bfa_n^T\bfX)\big)\Big]\\
	=&\mathbb{E}_{\bfX}\frac{1}{|\Omega|}\sum_{n\in \Omega} - g(\bfW^*;\bfa_n^T\bfX) \log \big(g(\bfW;\bfa_n^T\bfX)\big)\\
	&-(1-g(\bfW^*;\bfa_n^T\bfX)) \log \big(1-g(\bfW;\bfa_n^T\bfX)\big).
	\end{split}
\end{equation}
The road-map of proof for Theorem \ref{Thm: major_thm_cl} follows the similar three steps as those for Theorem \ref{Thm: major_thm_lr}. The major differences lie  in three aspects: (i) in the second step,
the objective function $\hat{f}_{\Omega_t}$ is smooth since the activation function $\phi(\cdot)$ is sigmoid. Hence, we can directly apply the mean value theorem as $\nabla \hat{f}_{\Omega_t}(\bfW^{(t)}) = \langle \nabla^2\hat{f}_{\Omega_t}(\widehat{\bfW}^{(t)}), \bfW^{(t)}-\bfW^* \rangle$ to characterize the effects of the gradient descent term in each iteration, and the error bound of $\nabla^2\hat{f}_{\Omega_t}$ is provided in Lemma \ref{lemma: local_convex_cl}; 
(ii) the objective function is the sum of cross-entry loss functions, which have more complex structure of derivatives than those of square loss functions; 
(iii) as the convergent point may not be the critical point of empirical loss function,
we need to provide  the distance from the convergent point to the ground-truth parameters additionally, where Lemma \ref{lemma: first_order_cl} is used.

Lemmas \ref{lemma: local_convex_cl} and \ref{lemma: first_order_cl} are summarized in the following contents. Also, the notations $\lesssim$ and $\gtrsim$ follow the same definitions as in \eqref{eqn: lemma3}.
\begin{lemma}\label{lemma: local_convex_cl}
	For any $\bfW$ that satisfies
	\begin{equation}\label{eqn: initial_point_cl}
	\|\bfW-\bfW^*\| \le \frac{2\sigma_1^2(\bfA)}{11\kappa^2\gamma K^2}
	\end{equation}
	then the second-order derivative of the empirical risk function in \eqref{eqn_app: cl} for binary classification problems is bounded as 
	\begin{equation}
			\frac{2(1-\varepsilon_0)}{11\kappa^2 \gamma K^2}\sigma_1^2(\bfA) 
			\preceq \nabla^2\hat{f}_{\Omega_t}(\bfW)
			\preceq \sigma_1^2(\bfA).
	\end{equation}	
	provided the number of samples satisfies	
	\begin{equation}\label{eqn: sample_cl}
	|\Omega_t|\gtrsim \varepsilon_0^{-2}(1+\delta^2)\kappa^2\gamma \sigma_1^4(\bfA)K^6d\log N.
	\end{equation}
\end{lemma}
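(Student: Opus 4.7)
The plan is to follow the standard two-step decomposition used for the regression analog (Lemma \ref{Lemma: local_convexity}): first bound the Hessian of the population risk $f_{\Omega_t}$ locally around $\bfW^*$, then use a matrix concentration argument to transfer the bound to the empirical Hessian $\nabla^2 \hat{f}_{\Omega_t}$, and finally combine them by the triangle inequality. The key new feature compared to the regression case is that the loss is cross-entropy composed with a sigmoid, so the Hessian has a different algebraic structure, but the smoothness of sigmoid actually removes the non-smoothness difficulties that arose in the ReLU regression analysis.

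First, I would compute $\nabla^2 \hat{f}_{\Omega_t}$ in closed form. Writing $g_n := g(\bfW;\bfa_n^T\bfX)$ and $z_{n,j} := \bfw_j^T \bfX^T\bfa_n$, the per-sample gradient with respect to $\bfw_j$ is $\frac{1}{K}\frac{g_n - y_n}{g_n(1-g_n)} \phi'(z_{n,j})\,\bfX^T\bfa_n$, and the Hessian block $(j,k)$ expands into a ``data-fitting'' PSD term $\frac{1}{K^2}\phi'(z_{n,j})\phi'(z_{n,k})(\bfX^T\bfa_n)(\bfX^T\bfa_n)^T/[g_n(1-g_n)]$ plus a ``residual'' term proportional to $(g_n-y_n)$. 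Since $\phi$ is sigmoid, $\phi'\in(0,1/4]$ and both $\phi'$ and $1/[g(1-g)]$ are Lipschitz on bounded domains, which will make all subsequent perturbation estimates routine.

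Second, I would lower-bound the population Hessian $\nabla^2 f_{\Omega_t}(\bfW^*)$ at the ground truth. Taking expectation over $y_n \mid \bfX$ annihilates the residual term (since $\mathbb{E}[y_n\mid\bfX]=g(\bfW^*;\bfa_n^T\bfX)$), leaving only the PSD data-fitting part. Applying the Gaussian input assumption and the standard argument (as in \cite{ZSJB17,FCL20}) on expectations of $\phi'(\bfw_j^{*T}\bfa_n^T\bfX)\phi'(\bfw_k^{*T}\bfa_n^T\bfX)(\bfX^T\bfa_n)(\bfX^T\bfa_n)^T$, combined with the spectral properties of $\bfA$, yields $\nabla^2 f_{\Omega_t}(\bfW^*)\succeq \frac{2}{11\kappa^2\gamma K^2}\sigma_1^2(\bfA)\bfI$ and $\nabla^2 f_{\Omega_t}(\bfW^*)\preceq \frac{1}{2}\sigma_1^2(\bfA)\bfI$. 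The upper bound comes from $|\phi'|\le 1/4$ and the operator-norm bound $\|\sum_n(\bfX^T\bfa_n)(\bfX^T\bfa_n)^T\|\le |\Omega_t|\sigma_1^2(\bfA)(1+o(1))$ in expectation. For $\bfW$ satisfying \eqref{eqn: initial_point_cl}, the Lipschitz continuity of the population Hessian (with Lipschitz constant polynomial in $K,\kappa,\gamma$ times $\sigma_1^2(\bfA)$) allows us to absorb the deviation into $\varepsilon_0$ at the cost of shrinking the radius accordingly.

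Third, I would use a matrix Bernstein inequality (as in \cite{T12}) to bound $\|\nabla^2\hat{f}_{\Omega_t}(\bfW)-\nabla^2 f_{\Omega_t}(\bfW)\|$. Each summand is a $(Kd)\times(Kd)$ block matrix built from $(\bfX^T\bfa_n)(\bfX^T\bfa_n)^T$ weighted by Lipschitz functions of $z_{n,j}$; the variance proxy scales as $\sigma_1^4(\bfA)(1+\delta^2)$ (the $(1+\delta^2)$ factor capturing per-row aggregation in $\bfA$ in the same manner as in Lemma \ref{Lemma: sampling_approximation_error}), and a truncation argument yields the $\log N$ factor. Requiring the deviation to be below $\frac{2\varepsilon_0}{11\kappa^2\gamma K^2}\sigma_1^2(\bfA)$ produces the sample complexity \eqref{eqn: sample_cl}. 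Finally, combining the population lower/upper bounds with the concentration bound via the triangle inequality gives the stated two-sided spectral bound on $\nabla^2\hat{f}_{\Omega_t}(\bfW)$ with probability at least $1-N^{-10}$.

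The main obstacle will be the concentration step: the summands involve Gaussian feature vectors amplified by the aggregation weights $\bfa_n^T$, which are not independent across $n$ because the rows of $\bfA$ share columns of $\bfX$. To handle this I would partition the samples into independent groups using a standard graph-coloring or $\delta$-independent-set argument (which is exactly what introduces the $(1+\delta^2)$ factor), apply the matrix Bernstein inequality within each group, and then take a union bound. Tracking the precise polynomial dependence on $K,\kappa,\gamma$ so that the final constants match \eqref{eqn: initial_point_cl} and \eqref{eqn: sample_cl} is the most delicate bookkeeping task, but is essentially parallel to the regression case already carried out in the proof of Lemma \ref{Lemma: local_convexity}.
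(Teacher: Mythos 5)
Your two-step decomposition -- bound the population Hessian at $\bfW^*$, extend locally, then concentrate the empirical Hessian onto the population one via a matrix Bernstein argument -- is exactly the plan the paper describes: the appendix's ``Roadmap of Proof'' explicitly states that the classification case follows the same three steps as the regression case, except that because sigmoid is smooth, the intermediate-value theorem is applied to $\hat{f}_{\Omega_t}$ directly, so the lemma must bound $\nabla^2\hat{f}_{\Omega_t}$ (folding the population bound of Lemma \ref{Lemma: local_convexity} and a second-order concentration step into one statement). So the approach matches.

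One place where your sketch glosses over a genuine subtlety, and where the cross-entropy Hessian differs most from the squared-loss case: you assert that $1/[g(1-g)]$ is ``Lipschitz on bounded domains,'' but under the Gaussian input model $g = \frac{1}{K}\sum_j\phi(z_{n,j})$ is not a priori bounded away from $\{0,1\}$ -- e.g.\ if one $z_{n,j}\approx 0$ while the others are large, the factor $\phi'(z_{n,j})^2/[g(1-g)]$ can be $\Theta(K)$. Handling this requires explicitly exploiting the exponential decay $\phi'(z)\lesssim e^{-|z|}$ to partially cancel the blowup, together with the truncation of the $z_{n,j}$'s at $O(\sqrt{\log N})$ that you mention for the Bernstein bound; and on the neighborhood of $\bfW^*$ the residual term $\frac{g-\mathbb{E}[y\mid\bfX]}{g(1-g)}\cdot\partial^2 g$ no longer vanishes in expectation, so your Lipschitz argument for the population Hessian must also control this (small but nonzero) contribution. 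These pieces are all present implicitly in your sketch, but they are the steps that would actually need the bookkeeping you defer, and they are also the reason the stated upper bound is $\sigma_1^2(\bfA)$ (no $1/K$) rather than the $4\sigma_1^2(\bfA)/K$ of the regression analogue.
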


\begin{lemma}\label{lemma: first_order_cl}
	Let $\hat{f}_{\Omega_t}$ and ${f}_{\Omega_t}$ be the empirical and population risk function in \eqref{eqn_app: cl} and \eqref{eqn_app: cl_exp} for binary classification problems, respectively, then the first-order derivative of $\hat{f}_{\Omega_t}$ is close to its expectation ${f}_{\Omega_t}$ with an upper bound as
	\begin{equation}
	\begin{split}
	\|\nabla f_{\Omega_t}(\bfW) - \nabla \hat{f}_{\Omega_t}(\bfW) \|_2
	\lesssim K^2\sigma_1^2(\bfA)\sqrt{\frac{(1+\delta^2)d\log d}{|\Omega_t|}}
	\end{split}
	\end{equation}
	with probability at least $1-K^2N^{-10}$.
\end{lemma}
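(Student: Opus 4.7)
The plan is to bound the deviation $\nabla f_{\Omega_t}(\bfW) - \nabla \hat f_{\Omega_t}(\bfW)$ by a two-stage concentration argument, mirroring the template of Lemma \ref{Lemma: sampling_approximation_error} from the regression case but now accounting for the extra randomness of the Bernoulli labels $y_n$ and the sigmoid-plus-cross-entropy structure. First I would compute the column-wise gradient explicitly: writing $g_n = g(\bfW;\bfa_n^T\bfX)$ and $g_n^* = g(\bfW^*;\bfa_n^T\bfX)$, one has
\begin{equation*}
\nabla_{\bfw_j}\hat f_{\Omega_t}(\bfW) = \frac{1}{K|\Omega_t|}\sum_{n\in\Omega_t} \frac{g_n - y_n}{g_n(1-g_n)}\,\phi'(\bfa_n^T\bfX\bfw_j)\,\bfX^T\bfa_n,
\end{equation*}
and because $\mathbb{E}[y_n\mid\bfX]=g_n^*$ the population gradient $\nabla f_{\Omega_t}$ is obtained by replacing $y_n$ with $g_n^*$ and then taking expectation over $\bfX$.

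Next I would split the error into two pieces via the conditional expectation $\mathbb{E}_y[\cdot\mid\bfX]$:
\begin{equation*}
\nabla f_{\Omega_t}-\nabla\hat f_{\Omega_t} = \underbrace{\bigl(\nabla f_{\Omega_t}-\mathbb{E}_y[\nabla\hat f_{\Omega_t}\mid\bfX]\bigr)}_{T_1} + \underbrace{\bigl(\mathbb{E}_y[\nabla\hat f_{\Omega_t}\mid\bfX]-\nabla\hat f_{\Omega_t}\bigr)}_{T_2}.
\end{equation*}
The term $T_2$ is an average of vectors that are conditionally independent given $\bfX$ and conditionally zero-mean, each driven by the Bernoulli noise $y_n-g_n^*$. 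In the convex region \eqref{eqn: initial_point_cl} the factor $\phi'/(g_n(1-g_n))$ is $O(1)$ by Lemma \ref{lemma: local_convex_cl}'s curvature control, so each summand has norm $O(K^{-1}\|\bfX^T\bfa_n\|)\le O(K^{-1}\sigma_1(\bfA)\|\bfX\|_2)$; applying the vector Bernstein inequality of \cite{T12} conditional on $\bfX$ and then the Gaussian bound $\|\bfX\|_2\lesssim\sqrt{N}+\sqrt{d}$ yields the desired order. The term $T_1$ depends only on $\bfX$, and I would bound it by matrix Bernstein applied to the matrix $\sum_n\bfa_n\bfa_n^T=\bfA^T\bfA$ whose spectral norm is $\sigma_1^2(\bfA)$, combined with a short epsilon-net argument across $\bfW$ in the local region to make the bound uniform.

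The main obstacle will be controlling $T_1$, because the vectors $\{\bfa_n^T\bfX\}_{n\in\Omega_t}$ are \emph{not} independent across $n$: they share the rows of the same Gaussian matrix $\bfX$ through the aggregation weights $\bfa_n$. A row-independent concentration therefore does not apply directly; instead the variance proxy must be tracked through the two matrix norms of $\sum_n\bfa_n\bfa_n^T$, namely its spectral norm $\sigma_1^2(\bfA)$ and its Frobenius-type trace $\sum_n\|\bfa_n\|^2$, which after invoking the bounded-degree assumption contributes the $(1+\delta^2)$ factor and matches the $\sigma_1^4(\bfA)$ scaling that appears inside the square root when combined with the additional $\sigma_1^2(\bfA)$ from a second application to $\bfA$. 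Once per-column bounds of order $K\sigma_1^2(\bfA)\sqrt{(1+\delta^2)d\log N/|\Omega_t|}$ are established, a Frobenius-norm aggregation across the $K$ neurons together with the extra factor $K$ from the Lipschitz constant of $g(\bfW;\cdot)$ in $\bfW$ produces the claimed $K^2$ dependence, and a final union bound over the $K^2$ off-diagonal pairs indexing the covering yields the probability $1-K^2N^{-10}$.
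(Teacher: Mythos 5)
Your decomposition of the error into the Bernoulli‑label piece $T_2$ (randomness over $y_n$ given $\bfX$) and the feature‑concentration piece $T_1$ (randomness over $\bfX$) is the right skeleton, and computing the gradient explicitly is the correct first step. However, three concrete gaps would stop the argument from closing. First, the claim that $\phi'(\bfa_n^T\bfX\bfw_j)/\bigl(g_n(1-g_n)\bigr)=O(1)$ via ``Lemma \ref{lemma: local_convex_cl}'s curvature control'' is not justified: Lemma \ref{lemma: local_convex_cl} bounds a Hessian, not this gradient prefactor, and for $K>1$ with one sigmoid near $1/2$ and the others near $0$ one gets $g_n(1-g_n)\approx 1/(2K)$ while $\phi_j'(1-\phi_j')=1/4$, so the ratio can be $\Theta(K)$. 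This changes the $K$‑bookkeeping you then rely on to produce the $K^2$ in the final bound, and unlike the $K=1$ case the cross‑entropy/sigmoid cancellation does not make the factor uniformly bounded by a constant.

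Second, the plan for $T_1$ is not coherent as stated. You propose to apply matrix Bernstein ``to the matrix $\sum_n\bfa_n\bfa_n^T=\bfA^T\bfA$,'' but that matrix is deterministic — the randomness in $T_1$ lives entirely in $\bfX$, entering nonlinearly through $g_n$, $g_n^*$ and $\phi'$, and the summands over $n\in\Omega_t$ are \emph{not} a sum of independent random matrices indexed by $n$, precisely because $\bfa_n^T\bfX$ and $\bfa_m^T\bfX$ share rows of $\bfX$. The resolution has to re‑express the sum in terms of the $N$ independent Gaussian rows $\bfx_i$ (or use a vectorized sub‑exponential Bernstein plus an $\varepsilon$‑net over the $\bfW$‑ball as in Lemmas \ref{Lemma: covering_set}--\ref{Lemma: spectral_norm_on_net}); ``tracking the variance proxy through $\sigma_1^2(\bfA)$ and a trace'' gestures at that but does not constitute a usable concentration step. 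Third, invoking $\|\bfX\|_2\lesssim\sqrt{N}+\sqrt{d}$ after the conditional Bernstein step would inject an $\sqrt{N}$ into the bound, which does not match the stated $\sqrt{d/|\Omega_t|}$ scaling; you need instead the sub‑Gaussian norm $\|\bfa_n^T\bfX\|_{\psi_2}\lesssim\|\bfa_n\|_2\lesssim\sigma_1(\bfA)$ per node so that the $|\Omega_t|^{-1/2}$ rate survives, which is also where the $\sigma_1^2(\bfA)$ and $(1+\delta^2)$ factors should actually enter.
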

 With these preliminary lemmas, the proof of Theorem \ref{Thm: major_thm_cl} is formally summarized in the following contents.
 
 \section{Proof of Theorem \ref{Thm: major_thm_cl}}
\begin{proof}[Proof of Theorem \ref{Thm: major_thm_cl}]
	The update rule of $\bfW^{(t)}$ is
	\begin{equation}
	\begin{split}
	\bfW^{(t+1)}
	=&\bfW^{(t)}-\eta\nabla \hat{f}_{\Omega_t}(\bfW^{(t)})+\beta(\W[t]-\W[t-1])\\
	\end{split}
	\end{equation}
	Since $\widehat{\bfW}$ is a critical point, then we have  $\nabla \hat{f}_{\Omega_t}(\widehat{\bfW})=0$. By the intermediate value theorem, we have
	\begin{equation}
	\begin{split}
	\bfW^{(t+1)}
	=\bfW^{(t)}&- \eta\nabla^2\hat{f}_{\Omega_t}(\widehat{\bfW}^{(t)})(\bfW^{(t)}-\widehat{\bfW})\\
	&+\beta(\W[t]-\W[t-1])\\
	\end{split}
	\end{equation}	
	where $\widehat{\bfW}^{(t)}$ lies in the convex hull of $\bfW^{(t)}$ and $\widehat{\bfW}$.
	
	Next, we have
	\begin{equation}\label{eqn: major_thm_key_eqn_cl}
	\begin{split}
	\begin{bmatrix}
	\W[t+1]-\bfW^*\\
	\W[t]-\bfW^*
	\end{bmatrix}
	=&\begin{bmatrix}
	\bfI-\eta\nabla^2{\hat{f}}_{\Omega_t}(\widehat{\bfW}^{(t)})+\beta\bfI &\beta\bfI\\
	\bfI& 0\\
	\end{bmatrix}
	\begin{bmatrix}
	\W[t]-\bfW^*\\
	\W[t-1]-\bfW^*
	\end{bmatrix}
	.
	\end{split}
	\end{equation}
	Let $\bfP(\beta)=\begin{bmatrix}
	\bfI-\eta\nabla^2{\hat{f}}_{\Omega_t}(\widehat{\bfW}^{(t)})+\beta\bfI &\beta\bfI\\
	\bfI& 0\\
	\end{bmatrix}$, so we have
	\begin{equation*}
	\begin{split}
	\left\|\begin{bmatrix}
	\W[t+1]-\bfW^*\\
	\W[t]-\bfW^*
	\end{bmatrix}
	\right\|_2
	=&
	\left\|\bfP(\beta)
	\right\|_2
	\left\|
	\begin{bmatrix}
	\W[t]-\bfW^*\\
	\W[t-1]-\bfW^*
	\end{bmatrix}
	\right\|_2.
	\end{split}
	\end{equation*}
	Then, we have
	\begin{equation}\label{eqn: convergence_cl}
	\begin{split}
	\|\W[t+1]-\bfW^* \|_2 
	\lesssim& \| \bfP(\beta) \|_2 \|\W[t]-\bfW^* \|_2\\
	\end{split}
	\end{equation}	
Let $\lambda_i$ be the $i$-th eigenvalue of $\nabla^2\hat{f}_{\Omega_t}(\widehat{\bfW}^{(t)})$, and $\delta_{i}$ be the $i$-th eigenvalue of matrix $\bfP(\beta)$. Following the similar analysis in proof of Theorem \ref{Thm: major_thm_lr}, we have
	\begin{equation}\label{eqn: Heavy_ball_result_cl}
	\delta_i(0)>\delta_i(\beta),\quad \text{for}\quad  \forall \beta\in\big(0, (1-{\eta \lambda_i})^2\big).
	\end{equation}
	Moreover, $\delta_i$ achieves the minimum $\delta_{i}^*=|1-\sqrt{\eta\lambda_i}|$ when $\beta= \big(1-\sqrt{\eta\lambda_i}\big)^2$.
	
	Let us first assume $\bfW^{(t)}$ satisfies \eqref{eqn: initial_point_cl} and the number of samples satisfies \eqref{eqn: sample_cl}, then
	from Lemma \ref{lemma: local_convex_cl}, we know that 
	$$0<\frac{2(1-\varepsilon_0)\sigma_1^2(\bfA)}{11\kappa^2\gamma K^2}\le\lambda_i\le {\sigma_1^2(\bfA)}.$$
	We define $\gamma_1=\frac{2(1-\varepsilon_0)\sigma_1^2(\bfA)}{11\kappa^2\gamma K^2}$ and $\gamma_2={\sigma_1^2(\bfA)}$.
	Also, for any $\varepsilon_0\in (0, 1)$, we have
	\begin{equation}\label{eqn: con1_cl}
	\begin{split}
	\nu(\beta^*)= \|\bfP(\beta^*) \|_2
	= 1-\sqrt{\frac{\gamma_1}{2\gamma_2}}
	=1-\sqrt{\frac{1-\varepsilon_0}{11\kappa^2 \gamma K}}
	\end{split}
	\end{equation}
	Let $\beta = 0$, we have 
	\begin{equation*}
	\begin{gathered}
	\nu(0) = \|\bfA(0) \|_2 = 1-\frac{1-\varepsilon_0}{{11\kappa^2 \gamma K}}.	\\
	\end{gathered}
	\end{equation*}
	
	Hence, with probability at least $1- K^2\cdot N^{-10}$, we have
	\begin{equation}\label{eqn: induction_cl}
	\begin{split}
	\|	\W[t+1]-\bfW^*\|_2
	\le\Big(1-\sqrt{\frac{1-\varepsilon_0}{11\kappa^2 \gamma K}}\Big)\|	\W[t]-\bfW^*\|_2,
	\end{split}
	\end{equation}
	provided that $\W[t]$ satisfies \eqref{eqn: initial_point_lr}, and 
	\begin{equation}\label{eqn: N_3_cl}
	|\Omega_t|\gtrsim\varepsilon_0^{-2}\kappa^2\gamma (1+\delta^2)\sigma_1^4(\bfA)K^6d \log N.
	\end{equation}
	According to Lemma \ref{Thm: initialization}, we know that \eqref{eqn: initial_point_cl} holds for $\bfW^{(0)}$ if 
	\begin{equation}\label{eqn: N_1_cl}
	|\Omega_t|\gtrsim \varepsilon_0^{-2}\kappa^{8}\gamma^2 (1+\delta^2) K^8d \log N.
	\end{equation}
	Combining \eqref{eqn: N_3_cl} and \eqref{eqn: N_1_cl}, we need $|\Omega_t|\gtrsim \varepsilon_0^{-2}\kappa^{8}\gamma^2 (1+\delta^2)\sigma_1^4(\bfA) K^8d \log N$.	

	Finally,  by the mean value theorem, we have 
	\begin{equation}
	\hat{f}_{\Omega_t}(\widehat{\bfW}) \le \hat{f}_{\Omega_t}(\bfW^*) + {\nabla\hat{f}_{\Omega_t}(\bfW^*)}^T(\widehat{\bfW}-\bfW^*) +
	\frac{1}{2} (\widehat{\bfW}-\bfW^*)^T\nabla^2\hat{f}_{\Omega_t}(\widetilde{\bfW})(\widehat{\bfW}-\bfW^*)
	\end{equation}
	for some $\widetilde{\bfW}$ between $\widehat{\bfW}$ and $\bfW^*$.
	Since $\widehat{\bfW}$ is the local minima, we have $\hat{f}_{\Omega_t}(\widehat{\bfW})\le \hat{f}_{\Omega_t}(\bfW^*)$. That is to say 
	\begin{equation}
	{\nabla\hat{f}_{\Omega_t}(\bfW^*)}^T(\widehat{\bfW}-\bfW^*) +
	\frac{1}{2} (\widehat{\bfW}-\bfW^*)^T\nabla^2\hat{f}_{\Omega_t}(\widetilde{\bfW})(\widehat{\bfW}-\bfW^*)\le 0
	\end{equation}
	which implies
	\begin{equation}\label{eqn: cccccc3}
	\frac{1}{2} \|\nabla^2\hat{f}_{\Omega_t}(\widetilde{\bfW})\|_2\|\widehat{\bfW}-\bfW^*\|_2^2\le \|{\nabla\hat{f}_{\Omega_t}(\bfW^*)}\|_2\|\widehat{\bfW}-\bfW^*\|_2.
	\end{equation}
	From Lemma \ref{lemma: local_convex_cl}, we know that 
	\begin{equation}\label{eqn: cccccc1}
	\|\nabla^2\hat{f}_{\Omega_t}(\widetilde{\bfW})\|_2\ge \frac{2(1-\varepsilon_0)}{11\kappa^2\gamma K^2}\sigma^2(\bfA).
	\end{equation}
	From Lemma \ref{lemma: first_order_cl}, we know that
	\begin{equation}\label{eqn: cccccc2}
	\|{\nabla\hat{f}_{\Omega_t}(\bfW^*)} \|_2 = \| {\nabla\hat{f}_{\Omega_t}(\bfW^*)} - {\nabla{f}_{\Omega_t}(\bfW^*)}\|_2 \lesssim K^2 \sigma_1^2(\bfA)\sqrt{\frac{(1+\delta^2)d\log N}{|\Omega_t|}}.
	\end{equation}
	Plugging inequalities \eqref{eqn: cccccc1} and \eqref{eqn: cccccc2} back into \eqref{eqn: cccccc3}, we have
	\begin{equation}
	\|\widehat{\bfW} -\bfW^* \|_2 \lesssim (1-\varepsilon_0)^{-1}\kappa^2 \gamma K^4\sqrt{\frac{(1+\delta^2)d\log d}{|\Omega_t|}}.
	\end{equation}
	
\end{proof}

\begingroup
\newcommand{\w}{\widetilde{\bfw}}
\chapter{\uppercase{Supplementary proofs of the theorems in Chapter 6}}

\section{Roadmap of Proofs}
We first provide an overview about techniques used in proving the landscape (Theorem \ref{Thm: convex_region}), linear convergence to the ground truth (Theorem \ref{Thm: major_thm}) and tensor initialization (Lemma \ref{Thm: initialization}). The full proofs can be found in \cite{ZWLCX21}.

{1. \textbf{Sample complexity scales in $\{r_j\}_{j=1}^K$}: To guarantee the theoretical bounds depend on $\{r_j\}_{j=1}^K$ instead of $d$, we define an equivalent empirical risk function as shown in \eqref{eqn: empirical_risk} in Appendix \ref{sec: notations},  from $\mathbb{R}^{\sum_j r_j}$ to $\mathbb{R}$. Existing concentration theorems and landscape analysis built upon \eqref{eqn: sample_risk} can no longer be used here, and thus we revised or updated the corresponding lemmas, which can be found in this appendix. In the initialization methods,  for estimating a proper weights that match new empirical risk function, the construction of high-momenta  and corresponding proofs are updated accordingly as well;}

{2. \textbf{Local convex region}: In proving Theorem \ref{Thm: convex_region} (Appendix \ref{sec: proof of theorem 1}), we 
first bound the Hessian of the expectation of the new empirical risk function
and then obtain the distance of the Hessian of the new empirical risk function to its expectation  by concentration theorem. By triangle inequality, the Hessian of the new empirical risk function is characterized in terms of sample size $N$;}

{3. \textbf{Linear Convergence}: In proving Theorem \ref{Thm: major_thm} (Appendix \ref{sec: Proof of Theorem 2}), we first characterize the gradient descent term by \textit{Intermediate Value Theorem} (IVT). However, since the empirical risk function is non-smooth due to the ReLU activation function,  IVT is applied in the expectation of the empirical risk function instead, and we later show the gradient generated by finite number of samples is close to its expectation. Therefore, the iterates still converge to the ground truth with enough samples. Further, the linear convergence rate are determined by $\|\W[t+1]-\bfW^*\bfP\|/\|\W[t]-\bfW^*\bfP\|$, which turns out to be dependent on $\beta$;}

4. \textbf{Initialization via Tensor Method}: The major challenge for tensor initialization is to construct the proper high dimensional momenta. As we mentioned above, if one directly applies the method in \cite{ZSJB17}, the sample complexity is in $\Theta(d)$. Here, we select $\widetilde{\bfx}$ (see \eqref{eqn: x_tilde}), which is the sum of the  augmented $\bfx_{\Omega_j}$.
In proving Lemma \ref{Thm: initialization}, the major idea to bound the estimations of the directions and magnitudes of $\bfw_{j,\Omega_j}$ to the ground values, respectively.   


\section{Notations}\label{sec: notations}
In this section, we first introduce some important notations that will be used in the following proofs, and the notations are
summarized  in Table 1.


First, for the convenience of proofs, some notations in main contexts, namely, $\Omega_j^*$, $r_j^*$ and $\hat{f}_{\mathcal{D}}$ will be re-defined. We emphasize here that the re-definition of these notations will not affect the presentation of theoretical results in Section \ref{sec:algorithm_and_theorem}, and the explanations can be found in the following paragraphs.  

Next, given a permutation matrix $\bfP$, we define a group of sets $\{\Omega_j^*\}_{j=1}^K$ with $|\Omega_j^{*}|=r_j^*$, and $\Omega_j^*$ denotes the indices of non-zero entries in $\bfM^*\bfP$, which is also the
non-pruned weights of the $j$-th neuron in the {oracle} model with respect to ground truth weights $\bfM^*\bfP$, instead of $\bfM^*$. Please note that the sets $\{\Omega_j^*\}_{j=1}^K$ and $\{r_j^*\}_{j=1}^K$ here are just a permutation of these in the main context. Since the permutation of $\{r_j\}_{j=1}^K$ will not change the results in Section \ref{sec:algorithm_and_theorem}, we abuse the notations for the convenience of proofs. 
 Correspondingly, for the {learner} model, the indices of non-pruned weights of the $j$-th neuron is denoted as $\Omega_j$, and $|\Omega_j|=r_j$. Therefore, we have 
\begin{equation}
    \bfw_j^T\bfx = \bfw_{j,\Omega_j}^T\bfx_{\Omega_j},
\end{equation}
where $\bfz_{\Omega_j}\in \mathbb{R}^{r_j}$ is the subvector of $\bfz$ with respect to indices $\Omega_j$ for any vector $\bfz\in\mathbb{R}^d$.

Then, recall the \textbf{\textit{empirical risk function}} defined in \eqref{eqn: sample_risk}, it can be re-written as 
\begin{equation}\label{eqn: empirical_risk}
    \hat{f}_{\mathcal{D}}(\widetilde{\bfw})
:=\frac{1}{2N}\sum_{n=1}^{N}\Big(\frac{1}{K}\sum_{j=1}^{K}\phi({\bfw}_{j,\Omega_j}^T\bfx_{n,\Omega_j})-y_n\Big)^2,
\end{equation}
where $\widetilde{\bfw} = [{\bfw}_{1,\Omega_1}^T, {\bfw}_{2,\Omega_2}^T, \cdots, {\bfw}_{K,\Omega_K}^T]^T \in \mathbb{R}^{\sum_j r_j}$. Here, we abuse the notation of $\hat{f}_{\mathcal{D}}$ to represent a mapping from $\mathbb{R}^{\sum_j r_j}$, instead of $\mathbb{R}^{K\times d}$ in \eqref{eqn: sample_risk}, to $\mathbb{R}$. In fact, under the constriant of $\bfW =\bfM \odot \bfW$, the degree of freedom of $\bfW$ is actually $\sum_j r_j$ instead of $Kd$, and the definition in \eqref{eqn: sample_risk} is a easier way for us to present the following proofs. 
Therefore, the optimization problem in \eqref{eqn: optimization_problem_5} is equivalent as 
\begin{equation}{\label{eqn: optimization_problem}}
    \min_{\widetilde{\bfw}}: \quad \hat{f}_{\mathcal{D}}(\widetilde{\bfw}).
\end{equation}
Let us define $\widetilde{\bfw}^*=[{\bfw}^{*T}_{1,\Omega_1}, {\bfw}^{*T}_{2,\Omega_2}, \cdots, {\bfw}^{*T}_{K,\Omega_K}]^T \in \mathbb{R}^{\sum_j r_j}$, where $\bfw^{*T}_j$ is the $j$-th column of $\bfW^*\bfP$. 
and it is clear that $\widetilde{\bfw}^*$ is the global optimal to  \eqref{eqn: optimization_problem}.
Additionally, the population risk function, which is the expectation of the empirical risk function over the data $\mathcal{D}$, is defined as
\begin{equation}\label{eqn: population_risk}
\begin{split}
    f(\widetilde{\bfw}) =  \mathbb{E}_{\mathcal{D}} \hat{f}_{\mathcal{D}}(\widetilde{\bfw})
=&\mathbb{E}_{\mathcal{D}}\frac{1}{2N}\sum_{n=1}^{N}\Big(\frac{1}{K}\sum_{j=1}^{K}\phi({\bfw}_{j,\Omega_j}^T\bfx_{n,\Omega_j})-y_n\Big)^2\\
=&\mathbb{E}_{\bfx}\frac{1}{2}\Big(\frac{1}{K}\sum_{j=1}^{K}\phi({\bfw}_{j,\Omega_j}^T\bfx_{\Omega_j})-y\Big)^2,\\
\end{split}
\end{equation}
where $\bfx \in \mathbb{R}^d$ belongs to standard Gaussian distribution, and $y = g(\bfW^*\bfP^*;\bfx)$. 

Moreover, 
for  the convenience of proofs, 
we use $\sigma_i$ to denote the $i$-th largest singular value of ${\bfW}^*\bfP$, and it is clear that $\sigma_{i}(\bfW^*\bfP)=\sigma_{i}(\bfW^*)$ for all $i$. Then, $\kappa$ is defined as ${\sigma_1}/{\sigma_K}$, and  $\gamma = \prod_{i=1}^K \sigma_i/\sigma_K$. Factor $\rho$ is defined in Property 3.2 \cite{ZSJB17} and a fixed constant for the ReLU activation function.
In addition, without special descriptions, ${\boldsymbol{\alpha}}=[{{\boldsymbol{\alpha}}}_1^T, {{\boldsymbol{\alpha}}}_2^T, \cdots, {{\boldsymbol{\alpha}}}_K^T ]^T$  stands for any unit vector that in $\mathbb{R}^{\sum_j r_j}$ with ${{\boldsymbol{\alpha}}}_j\in \mathbb{R}^r_j$.  Therefore, we have 
\begin{equation}\label{eqn: alpha_definition}
    \| \nabla^2 \hat{f}_{\mathcal{D}}\|_2 = \max_{\boldsymbol{\alpha}}\|\boldsymbol{\alpha}^T \nabla^2 \hat{f}_{\mathcal{D}} \boldsymbol{\alpha} \|_2
    = 
    \max_{\boldsymbol{\alpha}}
    \Big(
    \sum_{j=1}^{K} \boldsymbol{\alpha}_j^T\frac{\partial \hat{f}_{\mathcal{D}} }{\partial \bfw_j}
    \Big)^2.
\end{equation}

Finally, since we focus on order-wise analysis, some constant number will be ignored in the majority of the steps. In particular, we use $h_1(z) \gtrsim(\text{ or }\lesssim, \eqsim ) h_2(z)$ to denote there exists some positive constant $C$ such that $h_1(z)\ge(\text{ or } \le, = ) C\cdot h_2(z)$ when $z\in\mathbb{R}$ is sufficiently large.
\begin{table}[H]
{
\caption{Table of notations} 
\begin{center}
\begin{tabular}{c p{13cm}}
\hline\hline 
Notation &  {\qquad \qquad Description} \\ [0.2ex] 
\hline
$N$ &  The number of training samples; a scalar in $\mathbb{Z}$\\
\hline
$K$ &  The number of neurons in the neural network; a scalar in $\mathbb{R}$  \\
\hline
$d$ &  The dimension of input data; a scalar in $\mathbb{R}$\\
\hline
$\bfx$   & The input data/features; a vector in $\mathbb{R}^d$ \\
\hline
$y$ &    The output label; a scalar in $\mathbb{R}$\\
\hline
$\hat{f}_{\mathcal{D}}$ &  The empirical risk function defined in \eqref{eqn: empirical_risk}; a mapping from $\mathbb{R}^{\sum_jr_j}$ to $\mathbb{R}$\\
\hline
$f$   & The population risk function defined as $f=\mathbb{E}_{\mathcal{D}}\hat{f}_{\mathcal{D}}$; a mapping from $\mathbb{R}^{\sum_jr_j}$ to $\mathbb{R}$\\
\hline
$\bfP$  & The permutation matrix; a binary matrix in $\{0,1\}^{K\times K}$ \\
\hline
$\bfW^*$ &   The ground truth weights of {oracle} network; a matrix in $\mathbb{R}^{d\times K}$\\
\hline
$\bfM^*$ &   The mask matrix of the {oracle} network; a  binary matrix in $\{0,1\}^{d\times K}$\\
\hline
$r_j^*$ &   The number of non-pruned weights in the $j$-th  neuron of {oracle} network\\
\hline 
$\bfW$ &   The ground truth weights of {learner} network; a matrix in $\mathbb{R}^{d\times K}$\\
\hline
$\bfM$   & The mask matrix of the {learner} network; a binary matrix in $\{0,1\}^{d\times K}$\\
\hline
$r_j$ & The number of non-pruned weights in the $j$-th  neuron of {learner} network \\
\hline 
$r_{\min}$   & The minimal value in $\{r_j\}_{j=1}^K$ \\
\hline 
$r_{\max}$   & The maximal value in $\{r_j\}_{j=1}^K$ \\
\hline
$\Omega_j^*$   & The indices of non-pruned weights in teacher network; a set with size of $r_j^*$\\
\hline
$\Omega_j$  & The indices of non-pruned weights in {learner} network; a set with size of $r_j$\\
\hline
$\widetilde{\bfw}$   & Contains the non-pruned weights of $\bfW$ and equals to $[{\bfw}_{1,\Omega_1}^T, {\bfw}_{2,\Omega_2}^T, \cdots, {\bfw}_{K,\Omega_K}^T]^T$; a vector in $\mathbb{R}^{\sum_j r_j}$\\
\hline
$\widetilde{\bfw}^*$   & Contains the non-pruned weights of the {oracle} model; a vector in $\mathbb{R}^{\sum_j r_j}$\\
\hline
$\delta_{i,j}$ &  A binary scalar, and the value is $1$ if $\Omega_j$ and $\Omega_k$ are overlapped and $0$ otherwise\\
\hline
$\widetilde{r}$  & The value of $\frac{1}{8K^4}\big( \textstyle\sum_k\sum_j (1+\delta_{j,k})(r_j+r_k)^{\frac{1}{2}} \big)^2$\\
\hline
\hline
\end{tabular}
\end{center}
}
\label{table: notation} 
\end{table}

\section{Initialization via Tensor Method: Pruning Networks}


In this section, we present the revised tensor initialization based on that in \cite{ZSJB17}. To reduce the dependency of input dimension from $d$ to the order of $r_{\max}$, we need to define $\widetilde{\bfx}$ in \eqref{eqn: x_tilde} instead of directly using $\bfx$ to generate the high order momentum as shown in \eqref{eqn: M_1} to \eqref{eqn: M_3}. In addition, as $\bfw_{j,\Omega_j}$'s are different in dimensions, we need to define the corresponding augmented weights by inserting $0$ such that augmented $\bfw_{j,\Omega_j}$ are additive in a sense. The additional notations used in presenting are summarized in Table 2, and one can skip this part if the focus is only on the local convexity analysis (Theorem \ref{Thm: convex_region}) and convergence analysis (Theorem \ref{Thm: major_thm}). The intuitive reasons for selecting $\widetilde{\bfx}$ mainly lie in two aspects: first, $\widetilde{\bfx}$ is much lower dimensional vector considering $r_j\ll d$; second, $\widetilde{\bfx}$ belongs to zero mean Gaussian distribution, which is rotational invariant and is correlate with $\phi(\bfw_j^{*T}\bfx)$. Therefore, the magnitude and direction information of $\{\bfw_{j,\Omega_j}\}_{j=1}^K$ are separable after tensor decomposition, and the dimension of the tensors are at most in the order of $r_{\max}$.

First, we define a group of augmented vectors $\{\widetilde{\bfx}_{\widetilde{\Omega}_{j}}^{(j)}\}_{j=1}^K$ based on $\{\bfx_{\Omega_j}\}_{j=1}^K$ such that $\Omega_j\subseteq\widetilde{\Omega}_j$ with $|\widetilde{\Omega}_j|= r_{\max}$ and 
	\begin{equation}\label{eqn: x_tilde_j}
		\widetilde{x}_i^{(j)} =\begin{cases}
		x_i \quad \text{ if $i \in \Omega_j$}\\
		0 \quad \text{ if $i \in \widetilde{\Omega}_j/\Omega_j$}
		\end{cases}.
	\end{equation}
	For notation convenience, we use $\mathcal{F}_{j}$ to denote the mapping from  $\mathbb{R}^{r_j}$ to  $\mathbb{R}^{r_{\max}}$ as 
	\begin{equation}\label{eqn: mapping_tilde}
	    \mathcal{F}_j(\bfz) = [\bfz^T, \mathbf{0}_{(j)}^T]^T,
	\end{equation}
	where $\mathbf{0}$ is a zero vector in $\mathbb{R}^{r_{\max}-r_j}$. Obviously, we have  
	\begin{equation}
		\widetilde{\bfx}_{\widetilde{\Omega}_{j}}^{(j)} = \mathcal{F}_{j}(\bfx_{\Omega_j}).
	\end{equation}
	Correspondingly, the augmented weights $\{\bfu_j^*\}_{j=1}^K$ are defined as 
	\begin{equation}\label{eqn: mapping_tilde2}
	\bfu_j^* = \mathcal{F}_{j}(\bfw_{j,\Omega_j}^*)
	\end{equation} for $j\in[K]$. 

\begin{table}[H]
{
\caption{Table of additional notations for tensor method} 
\begin{center}
\begin{tabular}{c p{12cm}}
\hline\hline 
Notation &  {\qquad \qquad Description} \\ [0.2ex] 
\hline
$\widetilde{\bfx}_{\widetilde{\Omega}_j}^{(j)}$ &  The argumented vector in $\mathbb{R}^{r_{\max}}$ of $\bfx_{\Omega_j}$ by inserting $0$; defined in \eqref{eqn: x_tilde_j}
\\
\hline
$\mathcal{F}_j$ & A linear mapping that generats a augmented vector; defined in \eqref{eqn: mapping_tilde}
\\
\hline
$\mathcal{F}_j^{\dagger}$ & The pseudo inverse of $\mathcal{F}_j$; a linear mapping
\\
\hline
$\widetilde{\bfx}$ &  The value of $\frac{1}{\sqrt{K}}\sum_{j}\widetilde{\bfx}_{\widetilde{\Omega}_j}^{(j)}$;
\\
\hline
$\bfu_j^*$ & The argumented vector in $\mathbb{R}^{r_{\max}}$ of $\bfw_{j,\Omega_j}^*$ by inserting $0$; defined in \eqref{eqn: mapping_tilde2} \\
\hline
$\overline{\bfu}_j^*$ & The normalized vector of ${\bfu}_j$ as ${\bfu}_j^*/\|{\bfu}_j^*\|_2$ \\
\hline
$\widehat{\overline{\bfu}}_j^*$ & The estimation of the normalized vector of ${\bfu}_j^*$\\
\hline
$\psi_1,\psi_2, \psi_3$ & Some fixed constants depends on the distribution of $\{\bfx_{\Omega_j}\}_{j=1}^K$\\
\hline
$\bfM_1$ & A vector in $\mathbb{R}^{r_{\max}}$ defined in \eqref{eqn: M_1} \\
\hline
$\widehat{\bfM}_1$ & The estimation of $\bfM_1$ \\
\hline
${\bfM}_2$ & A matrix in $\mathbb{R}^{r_{\max}\times r_{\max}}$ defined in \eqref{eqn: M_2} \\
\hline
$\widehat{\bfM}_2$ & The estimation of $\bfM_2$ \\
\hline
${\bfM}_3$ & A tensor in $\mathbb{R}^{r_{\max}\times r_{\max}\times r_{\max}}$ defined in \eqref{eqn: M_3} \\
\hline
$\widehat{\bfM}_3$ & The estimation of $\bfM_3$ \\
\hline
$\bfV$ & The orthogonal matrix in $\mathbb{R}^{K\times K}$ that span the sub-space of the convex hull of $\{\bfu_j \}_{j=1}^K$ \\
\hline
$\widehat{\bfV}$ & The estimation of $\bfV$ \\
\hline
${\bfM}(\widehat{\bfV}, \widehat{\bfV}, \widehat{\bfV})$ & A tensor in $\mathbb{R}^{ K\times K \times K }$ defined in \eqref{eqn: M_3_vv2} \\
\hline
$\widehat{\bfM}(\widehat{\bfV}, \widehat{\bfV}, \widehat{\bfV})$ & The estimation of ${\bfM}(\widehat{\bfV}, \widehat{\bfV}, \widehat{\bfV})$ \\
\hline
$\bfs_j$ & The value of $\bfV\bfu_j^*$; a vector in $\mathbb{R}^{K}$
\\
\hline
$\widehat{\bfs}_j$ & The estimation of $\bfs_j$
\\
\hline
${\alpha}_j$ & The value of $\|\bfu_j^*\|_2$; a scalar in $\mathbb{R}$
\\
\hline
$\widehat{\alpha}_j$ & The estimation of $\alpha_j$
\\
\hline
\hline
\end{tabular}
\end{center}
}
\label{table: initilization} 
\end{table}

The steps above guarantee the augmented weights $\bfu_{j}$'s are in the same dimension so that the high order momenta are able to characterize the directions of weights simultaneously. Additionally, we define
\begin{equation}\label{eqn: x_tilde}
\widetilde{\bfx} = \frac{1}{\sqrt{K}}\sum_{j=1}^K \widetilde{\bfx}_{\widetilde{\Omega}_j}^{(j)},    
\end{equation}
and corresponding high order momenta are defined in the following 
way instead:
\begin{equation}\label{eqn: M_1}
\bfM_{1} = \mathbb{E}_{\bfx}\{y \tilde{\bfx} \}\in \mathbb{R}^{r_{\max}},
\end{equation}
\begin{equation}\label{eqn: M_2}
\bfM_{2} = \mathbb{E}_{\bfx}\Big[y\big(\tilde{\bfx}\otimes \tilde{\bfx}-\mathbb{E}_{\bfx}\tilde{\bfx}\tilde{\bfx}^T\big)\Big]\in \mathbb{R}^{r_{\max}\times r_{\max}},
\end{equation}
\begin{equation}\label{eqn: M_3}
\bfM_{3} = \mathbb{E}_{\bfx}\Big[y\big(\tilde{\bfx}^{\otimes 3}- \widetilde{\bfx}\widetilde{\otimes} \mathbb{E}_{\bfx}\widetilde{\bfx}\widetilde{\bfx}^T  \big)\Big]\in \mathbb{R}^{r_{\max}\times r_{\max} \times r_{\max}},
\end{equation}
where $\mathbb{E}_{\bfx}$ is the expectation over $\bfx$ and $\bfz^{\otimes 3} := \bfz \otimes \bfz \otimes \bfz$ defined as    
\begin{equation}
\bfv\widetilde{\otimes} \bfZ=\sum_{i=1}^{d_2}(\bfv\otimes \bfz_i\otimes \bfz_i +\bfz_i\otimes \bfv\otimes \bfz_i + \bfz_i\otimes \bfz_i\otimes \bfv),
\end{equation} 
for any vector $\bfv\in \mathbb{R}^{d_1}$ and $\bfZ\in \mathbb{R}^{d_1\times d_2}$.

Following the same calculate formulas in the Claim 5.2 \cite{ZSJB17},  there exist some known constants $\psi_i, i =1, 2, 3$, such that
\begin{equation}\label{eqn: M_1_2}
\bfM_{1} = \sum_{j=1}^{K} \psi_1\cdot \|{\bfu}_{j}^*\|_2\cdot\overline{\bfu}_j^*,
\end{equation}
\begin{equation}\label{eqn: M_2_2}
\bfM_{2} = \sum_{j=1}^{K} \psi_2\cdot \|{\bfu}_{j}^*\|_2\cdot\overline{\bfu}_{j}^* \overline{\bfu}_{j}^{*T},
\end{equation}\label{eqn: M_3_2}
\begin{equation}\label{eqn:M_3_v2}
\bfM_{3} = \sum_{j=1}^{K} \psi_3\cdot \|{\bfu}_{j}^*\|_2\cdot\overline{\bfu}_{j}^{*\otimes3},
\end{equation} 
where $\overline{\bfu}^*_{j}={\bfu}_{j}^*/\|{\bfu}_{j}^*\|_2$ in \eqref{eqn: M_1}-\eqref{eqn: M_3} is the normalization of ${\bfu}_{j}^*$.

$\bfM_{1}$, $\bfM_{2}$ and $\bfM_{3}$ can be estimated through the samples $\big\{(\bfx_n, y_n)\big\}_{n=1}^{N}$, and let $\widehat{\bfM}_{1}$, $\widehat{\bfM}_{2}$, $\widehat{\bfM}_{3}$ denote the corresponding estimates. 
First, we will decompose the rank-$k$ tensor $\bfM_{3}$ and obtain the $\{\overline{\bfu}^*_{j}\}_{j=1}^K$. By applying the tensor decomposition method \cite{KCL15} to $\widehat{\bfM}_{3}$, the outputs, denoted by $\{\widehat{\overline{\bfu}}_{j}^*\}_{j=1}^K$, are the estimations of $\{\overline{\bfu}^*_{j}\}_{j=1}^K$.
Next, we will estimate $\|\bfu_j^*\|_2$ through solving the following optimization problem:
\begin{equation}\label{eqn: int_op}
\begin{gathered}
\widehat{\boldsymbol{\alpha}} = \arg\min_{\boldsymbol{\alpha}\in\mathbb{R}^K}:\quad  \Big|\widehat{\bfM}_{1} - \sum_{j=1}^{K}\psi_1 \alpha_{j} \widehat{\overline{\bfu}}^*_{j}\Big|,\\
\end{gathered}
\end{equation}
 From \eqref{eqn: M_1_2} and \eqref{eqn: int_op},
 we know that $|\widehat{\alpha}_{j}|$ is the estimation of $\|\bfu_j^*\|_2$. Thus, 
$\widehat{\bfU}$ is given as 
$\big[
|\widehat{\alpha}_{1}|\widehat{\overline{\bfu}}^*_{1}, \cdots, |\widehat{\alpha}_{j}|\widehat{\overline{\bfu}}^*_{j}, \cdots, |\widehat{\alpha}_{K}|\widehat{\overline{\bfu}}^*_{K}
\big]$.

\floatname{algorithm}{Subroutine}
\setcounter{algorithm}{0}
\begin{algorithm}[h]
	\caption{Tensor Initialization Method}\label{Alg: initia_snn}
	\begin{algorithmic}[1]
		\State \textbf{Input:} training data $\mathcal{D}=\{(\bfx_n, y_n) \}_{n=1}^{N}$;
		\State Generate augmented inputs and weights through $\mathcal{F}_j$ as shown in \eqref{eqn: mapping_tilde} and   \eqref{eqn: mapping_tilde2};
		\State Partition $\mathcal{D}$ into three disjoint subsets $\mathcal{D}_1$, $\mathcal{D}_2$, $\mathcal{D}_3$;
		\State Calculate $\widehat{\bfM}_{1}$, $\widehat{\bfM}_{2}$ {following \eqref{eqn: M_1}, \eqref{eqn: M_2}} using $\mathcal{D}_1$, $\mathcal{D}_2$, respectively;
		\State Obtain the estimate subspace $\widehat{\bfV}$ of $\widehat{\bfM}_{2}$;
		\State Calculate $\widehat{\bfM}_{3}(\widehat{\bfV},\widehat{\bfV},\widehat{\bfV})$  through $\mathcal{D}_3$;
		\State Obtain $\{ \widehat{\bfs}_j \}_{j=1}^K$ via {tensor decomposition method \cite{KCL15}} on $\widehat{\bfM}_{3}(\widehat{\bfV},\widehat{\bfV},\widehat{\bfV})$;
		\State Obtain $\widehat{\boldsymbol{\alpha}}$ by solving  optimization problem $\eqref{eqn: int_op}$;
		\State \textbf{Return:} $\bfw^{(0)}_{j,\Omega_j}=\mathcal{F}_j^{\dagger}\big(|\widehat{\alpha}_{j}|\widehat{\bfV}\widehat{\bfs}_j\big)$, {$j=1,...,K$.}
	\end{algorithmic}
\end{algorithm}

To reduce the computational complexity of tensor decomposition, one can project $\widehat{\bfM}_{3}$ to a lower-dimensional tensor \cite{ZSJB17}. The idea is to first estimate the subspace spanned by $\{\bfw_{j}^* \}_{j=1}^{K}$, and let $\widehat{\bfV}$ denote the estimated subspace. 
 
Moreover, we have 
\begin{equation}\label{eqn: M_3_vv2}
\bfM_{3}(\widehat{\bfV},\widehat{\bfV},\widehat{\bfV}) = \mathbb{E}_{\bfx}\Big[y\big((\widehat{\bfV}^T\widetilde{\bfx})^{\otimes 3}- (\widehat{\bfV}^T\widetilde{\bfx})\widetilde{\otimes} \mathbb{E}_{\bfx}(\widehat{\bfV}^T\widetilde{\bfx})(\widehat{\bfV}^T\widetilde{\bfx})^T  \big)\Big]\in \mathbb{R}^{K\times K \times K},
\end{equation}
Then, one can decompose the estimate $\widehat{\bfM}_{3}(\widehat{\bfV}, \widehat{\bfV}, \widehat{\bfV})$ to obtain unit vectors $\{\hat{\bfs}_j \}_{j=1}^K \in \mathbb{R}^{K}$.
Since $\overline{\bfu}^*$ lies in the subspace $\bfV$, we have $\bfV\bfV^T\overline{\bfu}_j^*=\overline{\bfu}_j^*$. Then, $\widehat{\bfV}\hat{\bfs}_j$ is an estimate of $\overline{\bfu}_j^*$. 
After we obtain the estimated augmented weights $\widehat{\bfu}_j^*$, the estimated weights can be generated through $\widehat{\bfw}_{j,\Omega_j}^* =\mathcal{F}_j^{\dagger}(\widehat{\bfu}_j^*)$, where $\mathcal{F}_j^{\dagger}$ is the pseudo inverse of $\mathcal{F}_j$.
The initialization process is summarized in Subroutine \ref{Alg: initia_snn}.
\section{Proof of Theorem \ref{Thm: convex_region}}
\label{sec: proof of theorem 1}
The main idea in proving Theorem \ref{Thm: convex_region} is to use triangle inequality as shown in \eqref{eqn: Thm1_t0} by bounding the second order derivative of the population risk function and the distance between the empirical risk and population risk functions. Lemma \ref{Lemma:local_convex_population} provides the lower and upper bound for the population risk function, while Lemma \ref{Lemma: second_order} provides the error bound between the second order derivation of  empirical risk and population risk functions. 

\begin{lemma}[Weyl's inequality, \cite{B97}] \label{Lemma: weyl}
Suppose $\bfB =\bfA + \bfE$ be a matrix with dimension $m\times m$. Let $\lambda_i(\bfB)$ and $\lambda_i(\bfA)$ be the $i$-th largest eigenvalues of $\bfB$ and $\bfA$, respectively.  Then, we have 
\begin{equation}
    |\lambda_i(\bfB) - \lambda_i(\bfA)| \le \|\bfE \|_2, \quad \forall i\in [m].
\end{equation}

\end{lemma}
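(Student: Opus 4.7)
The statement is the classical Weyl inequality for eigenvalues of a symmetric (Hermitian) perturbation, which in context is used for symmetric Hessian matrices. Although the paper merely cites Bhatia, the natural self-contained proof is a two-line consequence of the Courant–Fischer min-max characterization of eigenvalues. The plan below follows that route.

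The plan is to first invoke the Courant–Fischer variational formula: for any symmetric matrix $\bfA \in \mathbb{R}^{m\times m}$ with eigenvalues $\lambda_1(\bfA)\ge \cdots \ge \lambda_m(\bfA)$, one has
\begin{equation*}
\lambda_i(\bfA) \;=\; \max_{\substack{S \subseteq \mathbb{R}^m \\ \dim S = i}} \;\min_{\substack{\bfx \in S \\ \|\bfx\|_2 = 1}} \bfx^\top \bfA \bfx.
\end{equation*}
This characterization is the standard tool for deriving interlacing and perturbation bounds, and writing it for both $\bfA$ and $\bfB = \bfA + \bfE$ over the same $i$-dimensional subspaces is what couples the two spectra. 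I would state this as the starting point without reproving it.

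Next, I would control the quadratic form $\bfx^\top \bfB \bfx$ uniformly by the perturbation. Since $\bfE$ is symmetric (as the difference of two symmetric matrices), for every unit vector $\bfx$,
\begin{equation*}
\bigl| \bfx^\top \bfE \bfx \bigr| \;\le\; \|\bfE\|_2 \, \|\bfx\|_2^2 \;=\; \|\bfE\|_2,
\end{equation*}
so that $\bfx^\top \bfA \bfx - \|\bfE\|_2 \le \bfx^\top \bfB \bfx \le \bfx^\top \bfA \bfx + \|\bfE\|_2$. Substituting this sandwich into the min-max expression for $\lambda_i(\bfB)$ over the same family of $i$-dimensional subspaces, the constant $\pm \|\bfE\|_2$ passes through the inner minimum and outer maximum, yielding
\begin{equation*}
\lambda_i(\bfA) - \|\bfE\|_2 \;\le\; \lambda_i(\bfB) \;\le\; \lambda_i(\bfA) + \|\bfE\|_2,
\end{equation*}
which is the claim. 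The only point requiring a little care is verifying that the subspaces achieving the max–min for $\bfA$ are admissible for the analogous variational problem for $\bfB$, but this is immediate since the admissibility condition depends only on the dimension of $S$, not on the matrix.

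I do not anticipate a real obstacle: the result is a one-paragraph textbook argument once Courant–Fischer is in hand. The only modeling subtlety is that the lemma as stated speaks of "eigenvalues" of an arbitrary $m\times m$ matrix, which only has a clean ordering when the matrix is symmetric; in its usage in Chapter 5 (applied to Hessians of the empirical risk), symmetry holds automatically, so I would note this restriction explicitly at the start of the proof and proceed in the symmetric case, matching the convention used in Bhatia.
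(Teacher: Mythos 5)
The paper does not supply a proof of this lemma; it is stated and attributed to Bhatia's \emph{Matrix Analysis}, so there is no argument in the text to compare against. Your proposed proof via the Courant--Fischer min--max characterization is the standard textbook derivation and is correct: the sandwich $\bfx^\top\bfA\bfx - \|\bfE\|_2 \le \bfx^\top\bfB\bfx \le \bfx^\top\bfA\bfx + \|\bfE\|_2$ for unit $\bfx$, combined with the fact that the admissible family of $i$-dimensional subspaces is the same for $\bfA$ and $\bfB$, immediately gives the two-sided bound. You are also right to flag the symmetry caveat: as written the lemma speaks of an arbitrary $m\times m$ matrix, but an ordering of eigenvalues by size only makes sense cleanly for symmetric (Hermitian) matrices, and in the paper's applications (Hessians of risk functions, symmetric Hankel-type matrices) symmetry indeed holds. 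Your argument would serve as a valid self-contained substitute for the citation.
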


\begin{lemma}\label{Lemma:local_convex_population}
	Let $f$ be the population risk function in \eqref{eqn: population_risk}.       Assume $\bfW$ satisfies \eqref{eqn: convex_region},
	then the second-order derivative of $f$ over $\widetilde{\bfw}$ is bounded as 
	\begin{equation}
		\frac{(1-\varepsilon_0)\rho}{11\kappa^2\gamma K^2} \bfI \le \nabla^2f(\widetilde{\bfw}) \le \frac{7}{K}\bfI, 
	\end{equation}
	where $\widetilde{\bfw}$ only contains the elements of $\bfW$ with respect to the indices of non-pruned weights.
\end{lemma}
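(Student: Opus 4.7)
The plan is to bound the Hessian by a perturbation argument anchored at $\bfW^*\bfP$, where the proof reduces to a positive-definite Gauss--Newton term that can be controlled via the geometry of $\bfW^*$ and the sparsity pattern $\{\Omega_j\}$. First I would write $f(\widetilde{\bfw}) = \tfrac{1}{2K^2}\,\mathbb{E}_{\bfx}\bigl[\bigl(\sum_j \phi(\bfw_{j,\Omega_j}^T\bfx_{\Omega_j}) - \sum_j \phi(\bfw_{j,\Omega_j}^{*T}\bfx_{\Omega_j})\bigr)^2\bigr]$ and differentiate twice in $\widetilde{\bfw}$. Because $\phi'$ is the ReLU indicator and the Gaussian density smooths out the non-differentiable set, one gets a clean decomposition $\nabla^2 f(\widetilde{\bfw}) = \bfH_{\mathrm{GN}}(\widetilde{\bfw}) + \bfH_{\mathrm{res}}(\widetilde{\bfw})$, where $\bfH_{\mathrm{GN}}$ is the Gauss--Newton block $\tfrac{1}{K^2}\,\mathbb{E}_{\bfx}[\bfzeta(\widetilde{\bfw};\bfx)\,\bfzeta(\widetilde{\bfw};\bfx)^T]$ with $\bfzeta$ the gradient of the model output, and $\bfH_{\mathrm{res}}$ contains the residual times distributional derivatives of the ReLU indicator (a boundary/Dirac-type term).

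Next I would evaluate at $\widetilde{\bfw}=\widetilde{\bfw}^*$. There the residual vanishes pointwise, so $\bfH_{\mathrm{res}}(\widetilde{\bfw}^*)=0$ and only $\bfH_{\mathrm{GN}}(\widetilde{\bfw}^*)$ remains. To lower bound this, I would quadratic-form it against an arbitrary unit $\boldsymbol{\alpha}=[\boldsymbol{\alpha}_1^T,\ldots,\boldsymbol{\alpha}_K^T]^T$, giving $\boldsymbol{\alpha}^T\bfH_{\mathrm{GN}}\boldsymbol{\alpha} = \tfrac{1}{K^2}\mathbb{E}_{\bfx}\bigl[\bigl(\sum_j \mathbb{1}\{\bfw_{j,\Omega_j}^{*T}\bfx_{\Omega_j}\ge 0\}\,\boldsymbol{\alpha}_j^T\bfx_{\Omega_j}\bigr)^2\bigr]$. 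Lifting each $\boldsymbol{\alpha}_j$ and $\bfw_{j,\Omega_j}^*$ into the common ambient space via the mappings $\mathcal{F}_j$ from Subroutine~1 lets me rewrite the inner sum as $\langle \tilde{\bfx},\, \mathbf{v}(\bfx) \rangle$-type expressions driven by $\bfW^*$. The standard computation for ReLU with rotation-invariant Gaussians (as in Property~3.2 of Zhong et al., which yields the constant $\rho$) then gives a quadratic form dominated by $\sigma_K^2$ divided by a factor that absorbs $\gamma$ and $\kappa^2$; the extra $1/K^2$ comes from the $1/K$ averaging in $g$. I would match constants to reach the explicit bound $\tfrac{\rho}{11\kappa^2\gamma K^2}$.

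For the upper bound $7/K$, I would bound each summand by Cauchy--Schwarz together with $\phi'\in[0,1]$ and the isotropic Gaussian second moment $\mathbb{E}[\bfx_{\Omega_j}\bfx_{\Omega_j}^T]=\bfI_{r_j}$, which immediately yields $\boldsymbol{\alpha}^T\bfH_{\mathrm{GN}}\boldsymbol{\alpha}\le \tfrac{1}{K}\sum_j\|\boldsymbol{\alpha}_j\|^2=\tfrac{1}{K}$, with the residual term contributing at most a constant multiple of $\|\widetilde{\bfw}-\widetilde{\bfw}^*\|_2$ by the same Dirac-on-a-hyperplane computation used below. Finally, to extend both inequalities from $\widetilde{\bfw}^*$ to the neighborhood defined by \eqref{eqn: convex_region}, I would show $\|\nabla^2 f(\widetilde{\bfw}) - \nabla^2 f(\widetilde{\bfw}^*)\|_2 \le c\,K^2\|\widetilde{\bfw}-\widetilde{\bfw}^*\|_2$ using the Gaussian-density-on-a-hyperplane bounds for the ReLU indicator's derivative and applying Weyl's inequality (Lemma~\ref{Lemma: weyl}); the radius $\mathcal{O}(\varepsilon_0/K^2)$ in \eqref{eqn: convex_region} is exactly what makes this perturbation cost at most $\varepsilon_0\rho/(11\kappa^2\gamma K^2)$, so the lower bound degrades only by the $(1-\varepsilon_0)$ factor claimed.

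The main obstacle will be the lower bound at $\widetilde{\bfw}^*$: the sparsity pattern $\{\Omega_j\}$ means that different neurons multiply against different subvectors of $\bfx$, so one cannot directly invoke the dense-network bound of Zhong et al. I expect to handle this by using the augmented vectors $\bfu_j^*$ and $\widetilde{\bfx}$ from \eqref{eqn: mapping_tilde2} and \eqref{eqn: x_tilde} to embed all neurons into a common $r_{\max}$-dimensional Gaussian space, then show that the effective condition number of the embedded weight matrix is controlled by $\kappa$ and $\gamma$ of $\bfW^*$ up to constants depending on the overlap pattern $\delta_{j,k}$; this is precisely where the $\kappa^2\gamma$ factor enters and where the constant $11$ is tuned.
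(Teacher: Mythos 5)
Your overall architecture --- decompose $\nabla^2 f$ into a Gauss--Newton block plus a residual block, observe that the residual vanishes at $\widetilde{\bfw}^*$ because $y - g(\bfW^*;\bfx)\equiv 0$ in the population risk, bound the Gauss--Newton form there, and extend to the ball in \eqref{eqn: convex_region} by a Lipschitz/Weyl perturbation --- is sound and in the spirit of the approach the paper inherits from \cite{ZSJB17}. The genuine gap is your proposed handling of the sparsity pattern. You plan to lift $\boldsymbol{\alpha}_j$ and $\bfw^*_{j,\Omega_j}$ through $\mathcal{F}_j$ into a common $\mathbb{R}^{r_{\max}}$ and drive the computation with $\widetilde{\bfx}$. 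Those constructions are tailored to the moment tensors $\bfM_1,\bfM_2,\bfM_3$ in Subroutine~1 and do not respect the geometry of the Hessian: $\mathcal{F}_j$ and $\mathcal{F}_k$ zero-pad into \emph{different} coordinate systems, so $\mathcal{F}_j(\bfx_{\Omega_j})$ and $\mathcal{F}_k(\bfx_{\Omega_k})$ cannot be compared directly; $\widetilde{\bfx}=\tfrac{1}{\sqrt{K}}\sum_j\mathcal{F}_j(\bfx_{\Omega_j})$ has a non-isotropic covariance, so it is not the Gaussian that drives the Hessian; and the singular values of $[\bfu_1^*,\ldots,\bfu_K^*]$ are \emph{not} $\sigma_i(\bfW^*)$ --- they depend on the overlap pattern $\delta_{j,k}$, as you yourself note. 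But the lemma has no $\delta_{j,k}$ dependence: the constants $\kappa,\gamma$ in $\rho/(11\kappa^2\gamma K^2)$ are defined purely from $\sigma_i(\bfW^*)$. So your route, as written, cannot reproduce the claimed bound.

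The fix is to embed into $\mathbb{R}^d$, not into $\mathbb{R}^{r_{\max}}$, after which the sparsity is handled for free. Since $\bfw_j^*$ is supported on $\Omega_j$, one has $\bfw^{*T}_{j,\Omega_j}\bfx_{\Omega_j}=\bfw_j^{*T}\bfx$ identically, and each test direction $\boldsymbol{\alpha}_j\in\mathbb{R}^{r_j}$ extends isometrically to $\widehat{\boldsymbol{\alpha}}_j\in\mathbb{R}^d$ supported on $\Omega_j$ with $\widehat{\boldsymbol{\alpha}}_j^T\bfx=\boldsymbol{\alpha}_j^T\bfx_{\Omega_j}$. Then
\begin{equation*}
\boldsymbol{\alpha}^T\nabla^2 f(\widetilde{\bfw}^*)\boldsymbol{\alpha}
=\frac{1}{K^2}\,\mathbb{E}_{\bfx}\Big[\Big(\sum_{j=1}^K \phi'(\bfw_j^{*T}\bfx)\,\widehat{\boldsymbol{\alpha}}_j^T\bfx\Big)^2\Big],
\end{equation*}
which is precisely the dense-network quadratic form of \cite{ZSJB17} evaluated at the sparse test vector $\widehat{\boldsymbol{\alpha}}$: the pruned Hessian is a principal submatrix of the dense Hessian at the zero-padded weights. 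Because zero-padding is an isometry ($\|\widehat{\boldsymbol{\alpha}}\|_2=\|\boldsymbol{\alpha}\|_2$), both the lower bound $\rho/(11\kappa^2\gamma K^2)$ and the upper bound $7/K$ transfer immediately by eigenvalue interlacing, with $\kappa$ and $\gamma$ exactly those of $\bfW^*$. The sparsity only restricts the admissible test vectors, which can only raise the infimum and lower the supremum, and no overlap-dependent constant appears. With this change, your perturbation argument for the $(1-\varepsilon_0)$ factor proceeds as you sketched.
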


\begin{lemma}\label{Lemma: second_order}
	Let $\hat{f}_{\mathcal{D}}$ and $f$ be the empirical and population risk function in \eqref{eqn: empirical_risk} and \eqref{eqn: population_risk}, respectively, then the second-order derivative of $\hat{f}_{\mathcal{D}}$ is close to its expectation $f$ with an upper bound as:
	\begin{equation}
	\| \nabla^2\hat{f}_{\mathcal{D}}-\nabla^2f \|_2 \lesssim  \frac{1}{K^2}\sum_{k=1}^{K}\sum_{j=1}^{K}(1+\delta_{j,k})\sqrt{\frac{(r_j+r_k)\log q}{N}}
	\end{equation}
	with probability at least $1-q^{-r_{\min}}$.
\end{lemma}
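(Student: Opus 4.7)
\textbf{Proof proposal for Lemma \ref{Lemma: second_order}.}

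The plan is to bound the operator norm of $\nabla^2 \hat{f}_{\mathcal{D}} - \nabla^2 f$ block by block. Writing $\widetilde{\bfw}=[\bfw_{1,\Omega_1}^T,\ldots,\bfw_{K,\Omega_K}^T]^T$, the Hessian has a natural $K\times K$ block structure, where the $(j,k)$ block is an $r_j\times r_k$ matrix. Because $\phi(z)=\max\{z,0\}$ has second derivative that vanishes almost surely, a direct calculation gives
\begin{equation*}
\frac{\partial^2 \hat{f}_{\mathcal{D}}}{\partial \bfw_{j,\Omega_j}\,\partial \bfw_{k,\Omega_k}}
= \frac{1}{NK^2}\sum_{n=1}^{N}\mathbb{1}\{\bfw_{j,\Omega_j}^T\bfx_{n,\Omega_j}\geq 0\}\,\mathbb{1}\{\bfw_{k,\Omega_k}^T\bfx_{n,\Omega_k}\geq 0\}\,\bfx_{n,\Omega_j}\bfx_{n,\Omega_k}^T,
\end{equation*}
and the corresponding block of $\nabla^2 f$ is its expectation under the standard Gaussian distribution on $\bfx$. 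The task thus reduces to a concentration bound for each block, followed by assembling these bounds into a quadratic-form estimate.

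For a fixed $(j,k)$, the random matrix above is an empirical average of i.i.d.\ rank-one matrices built from sub-Gaussian vectors and bounded indicators. I would apply a truncation argument followed by a matrix Bernstein inequality (or equivalently an $\varepsilon$-net on $\mathbb{R}^{r_j}\times \mathbb{R}^{r_k}$ together with Hanson--Wright type deviation bounds for sub-exponential bilinear forms in Gaussian variables) to obtain, with probability at least $1-q^{-(r_j+r_k)}$,
\begin{equation*}
\left\|\frac{\partial^2 \hat{f}_{\mathcal{D}}}{\partial \bfw_{j,\Omega_j}\partial \bfw_{k,\Omega_k}} - \frac{\partial^2 f}{\partial \bfw_{j,\Omega_j}\partial \bfw_{k,\Omega_k}}\right\|_2 \;\lesssim\; \frac{1+\delta_{j,k}}{K^2}\sqrt{\frac{(r_j+r_k)\log q}{N}}.
\end{equation*}
The factor $1+\delta_{j,k}$ is natural here: when $\Omega_j\cap\Omega_k=\emptyset$ the variance proxy of $\bfx_{n,\Omega_j}\bfx_{n,\Omega_k}^T$ is tight, whereas when the supports overlap the diagonal contributions double the effective variance, which the concentration bound then inherits. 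A union bound over the $K^2$ pairs, absorbing the $\log K$ into $\log q$, preserves the probability at $1 - q^{-r_{\min}}$.

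Finally, using the variational characterization $\|\bfM\|_2 = \sup_{\|\boldsymbol{\alpha}\|=1}\boldsymbol{\alpha}^T \bfM \boldsymbol{\alpha}$ from \eqref{eqn: alpha_definition} and splitting $\boldsymbol{\alpha}=[\boldsymbol{\alpha}_1^T,\ldots,\boldsymbol{\alpha}_K^T]^T$, I would write
\begin{equation*}
\boldsymbol{\alpha}^T(\nabla^2\hat{f}_{\mathcal{D}}-\nabla^2 f)\boldsymbol{\alpha}
= \sum_{j,k}\boldsymbol{\alpha}_j^T\!\left(\tfrac{\partial^2 \hat{f}_{\mathcal{D}}}{\partial \bfw_{j}\partial\bfw_{k}}-\tfrac{\partial^2 f}{\partial \bfw_{j}\partial\bfw_{k}}\right)\!\boldsymbol{\alpha}_k,
\end{equation*}
bound each summand by the block operator-norm estimate times $\|\boldsymbol{\alpha}_j\|\|\boldsymbol{\alpha}_k\|\le \tfrac12(\|\boldsymbol{\alpha}_j\|^2+\|\boldsymbol{\alpha}_k\|^2)$, and use $\sum_j\|\boldsymbol{\alpha}_j\|^2=1$ to collapse the cross terms, yielding exactly the claimed bound.

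The main technical obstacle, in my view, is obtaining the sharp $\sqrt{(r_j+r_k)\log q/N}$ scaling rather than the loose $\sqrt{d\log q/N}$ one would get by naively applying existing bounds from \cite{ZSJB17}. This requires working directly with the restricted coordinates $\bfx_{n,\Omega_j}$ and $\bfx_{n,\Omega_k}$, so that the $\varepsilon$-net lives in $\mathbb{R}^{r_j}\times \mathbb{R}^{r_k}$ and only $r_j+r_k$ dimensions enter the covering entropy. Additionally, the ReLU indicators are not independent of the Gaussian vectors, so care is needed when invoking Bernstein-type inequalities: one either conditions on a ``good event'' for the activation patterns, or appeals to a decoupling argument, and then verifies that the resulting sub-exponential Orlicz norm of each summand is bounded by an absolute constant. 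Once these steps are in place, the rest of the argument is routine.
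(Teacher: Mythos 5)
Your proposal is correct and follows essentially the approach the paper takes: compute the block structure of the Hessian (noting that for ReLU the $\phi''$ term vanishes a.e., so each block is an empirical average of indicator-weighted rank-one Gaussian matrices restricted to $\Omega_j\times\Omega_k$), concentrate each block via an $\varepsilon$-net on $\mathbb{R}^{r_j}\times\mathbb{R}^{r_k}$ together with sub-exponential tail bounds, union bound over the $K^2$ pairs, and assemble into the operator norm via the quadratic form; the paper's Appendix E indeed supplies exactly the supporting tools you invoke (Lemma \ref{Lemma: covering_set}, Lemma \ref{Lemma: spectral_norm_on_net}, Lemma \ref{prob}, and the sub-Gaussian/sub-exponential definitions). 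One remark: the dependence between the indicators and the Gaussians, which you flag as a technical obstacle, is in fact benign here—the indicator is bounded by one, so it is absorbed into the sub-exponential Orlicz norm of the summand with no conditioning or decoupling needed, and your worry about needing a "good event" for the activation pattern can be dropped.
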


\begin{proof}[Proof of Theorem \ref{Thm: convex_region} ]
  Let $\hat{\lambda}_{\max}$ and $\hat{\lambda}_{\min}$ denote the largest and smallest eigenvalues of $\nabla^2\hat{f}_{\mathcal{D}}$, respectively. Also,   Let ${\lambda}_{\max}$ and ${\lambda}_{\min}$ denote the largest and smallest eigenvalues of $\nabla^2{f}_{\mathcal{D}}$, respectively.
  
  Then, from Lemma \ref{Lemma: weyl}, we have 
\begin{equation}\label{eqn: Thm1_t0}
     \hat{\lambda}_{\max} \le {\lambda}_{\max} + \| \nabla^2\hat{f}_{\mathcal{D}}-\nabla^2f \|_2 
\end{equation}
and
\begin{equation}\label{eqn: Thm1_t1}
    \hat{\lambda}_{\min} \ge  
    {\lambda}_{\min}- \| \nabla^2\hat{f}_{\mathcal{D}}-\nabla^2f \|_2. 
\end{equation}
When the sample complexity satisfies
$$N\gtrsim \varepsilon_1^{-2} \rho^{-2}\kappa^4\gamma^2 K^4 \big[\frac{1}{K^2}\sum_{k=1}^{K}\sum_{j=1}^{K}(1+\delta_{j,k})\sqrt{r_j+r_k}\big]^2
\log q,$$
then from Lemma \ref{Lemma: second_order}, we have \begin{equation}\label{eqn: Thm1_t2}
    	\| \nabla^2\hat{f}_{\mathcal{D}}-\nabla^2f \|_2 \le \frac{\varepsilon_1 \rho}{11\kappa^2\gamma K^2}.
\end{equation}
Then, from \eqref{eqn: Thm1_t0}, \eqref{eqn: Thm1_t1} and \eqref{eqn: Thm1_t2}, we have 
\begin{equation}
     \hat{\lambda}_{\max} \le \frac{8}{K},
\end{equation}
and 
\begin{equation}
    \hat{\lambda}_{\min} \ge \frac{(1-\varepsilon_0-\varepsilon_1)\rho}{11\kappa^2 \gamma K^2},
\end{equation}
which completes the proof. 
\end{proof}
\section{Proof of Theorem \ref{Thm: major_thm} }\label{sec: Proof of Theorem 2}
The major idea in proving Theorem \ref{Thm: major_thm} is to first  characterize the  gradient descent term by intermediate value theorem. Let $\widetilde{\bfw}^{(t)}$ be the vectorized iterate $\bfW^{(t)}$ with respect to the non-pruned weights, then we have 
\begin{equation}
    \begin{split}
        \nabla \hat{f}_{\Omega_t}(\widetilde{\bfw}^{(t)})
        = & f_{\Omega_t}(\widetilde{\bfw}^{(t)}) + \big(\hat{f}_{\Omega_t}(\widetilde{\bfw}^{(t)}) - f_{\Omega_t}(\widetilde{\bfw}^{(t)})\big)
        \\
        = & \langle \nabla^2f_{\Omega_t}(\widehat{\bfw}^{(t)}), \widetilde{\bfw}^{(t)}-\widetilde{\bfw}^* \rangle + \big(\hat{f}_{\Omega_t}(\widetilde{\bfw}^{(t)}) - f_{\Omega_t}(\widetilde{\bfw}^{(t)})\big),
    \end{split}
\end{equation}
where $\widehat{\bfw}^{(t)}$ lies in the convex hull of $\widetilde{\bfw}^{(t)}$ and $\widetilde{\bfw}^*$. The reason that intermediate value theorem is applied on population risk function instead of empirical risk function is the non-smoothness of the empirical risk functions. Due to the non-smoothness of ReLU activation function at zero point, the empirical risk function is not smooth, either. However, the expectation of the empirical risk function over the Gaussian input $\bfx$ is smooth. Hence, compared with smooth empirical risk function, i.e., neural networks equipped with sigmoid activation function, we have an additional lemma to bound $\nabla\hat{f}_{\mathcal{D}_t}$  to its expectation $\nabla f$ , which is summarized in Lemma \ref{Lemma: first_order}.

 The momentum term $\beta(\bfW^{(t)}-\bfW^{(t-1)})$ plays an important role in determining the convergence rate, and the  recursive rule is obtained in the following way:
\begin{equation}\label{eqn: iteeee}
\begin{bmatrix}
\wt[t+1]-\widetilde{\bfw}^*\\
\wt[t]-\w^*
\end{bmatrix}\\
=\bfA(\beta)
\begin{bmatrix}
\wt[t]-\widetilde{\bfw}^*\\
\wt[t-1]-\widetilde{\bfw}^*
\end{bmatrix},
\end{equation}
where $\bfA(\beta)$ is a matrix with respect to the value of $\beta$ and defined in \eqref{eqn: A}.
Then, we know $\w^{(t)}$, which is equivalent to $\W[t]$, converges to the ground-truth with a linear rate which is the largest singular value of matrix $\bfA(\beta)$. Recall that  AGD reduces to GD with $\beta=0$, so our analysis applies to GD method as well. We are able to show the convergence rate of AGD is faster than GD by proving the largest singular value of $\bfA(\beta)$ is smaller than $\bfA(0)$ for some $\beta>0$.  

\begin{lemma}\label{Lemma: first_order}
	Let $\hat{f}_{\mathcal{D}}$ and $f$ be the empirical and population risk function in \eqref{eqn: empirical_risk} and \eqref{eqn: population_risk}, respectively, then the first-order derivative of $\hat{f}_{\mathcal{D}}$ is close to its expectation $f$ with an upper bound as: 
		\begin{equation}
	\begin{split}
	\| \nabla\hat{f}_{\mathcal{D}}(\widetilde{\bfw})-\nabla f(\widetilde{\bfw}) \|_2 
	\lesssim& 
	\frac{1}{K^2}\sum_{k=1}^{K}\sum_{j=1}^{K}(1+\delta_{j,k})
	\sqrt{\frac{r_{k}\log q}{N}}\|\w-\w^*\|_2\\
	&+\frac{1}{K}\sum_{k=1}^{K}\sqrt{\frac{r_k\log q}{N}}\cdot |\xi|    
	\end{split}
	\end{equation}
	with probability at least $1-q^{-r_{\min}}$, where $\widetilde{\bfw}$ only contains the elements of $\bfW$ with respect to the indices of non-pruned weights.
\end{lemma}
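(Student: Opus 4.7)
My plan is to decompose the gradient discrepancy into a signal term driven by $\widetilde{\bfw}-\widetilde{\bfw}^*$ and a noise term driven by $\xi$, then apply a vector Bernstein inequality to each piece while carefully exploiting the sparsity pattern $\Omega_j$ to replace the ambient dimension $d$ by $r_k$. Writing out the per-neuron gradient, we have
\begin{equation*}
\frac{\partial \hat f_{\mathcal{D}}}{\partial \bfw_{j,\Omega_j}}(\widetilde{\bfw})
= \frac{1}{NK}\sum_{n=1}^{N}\Big(\tfrac{1}{K}\sum_{k=1}^{K}\phi(\bfw_{k,\Omega_k}^T\bfx_{n,\Omega_k})-y_n\Big)\mathbb{1}[\bfw_{j,\Omega_j}^T\bfx_{n,\Omega_j}\ge 0]\,\bfx_{n,\Omega_j},
\end{equation*}
and substituting $y_n=\tfrac{1}{K}\sum_k\phi(\bfw_{k,\Omega_k}^{*T}\bfx_{n,\Omega_k})+\xi_n$ splits the centered summand as
\begin{equation*}
\bfZ_n^{(j)} := \underbrace{\tfrac{1}{K}\sum_{k=1}^{K}\big[\phi(\bfw_{k,\Omega_k}^T\bfx_{n,\Omega_k})-\phi(\bfw_{k,\Omega_k}^{*T}\bfx_{n,\Omega_k})\big]\mathbb{1}[\bfw_{j,\Omega_j}^T\bfx_{n,\Omega_j}\ge 0]\bfx_{n,\Omega_j}}_{\text{signal}}
\;-\;\underbrace{\xi_n\,\mathbb{1}[\bfw_{j,\Omega_j}^T\bfx_{n,\Omega_j}\ge 0]\bfx_{n,\Omega_j}}_{\text{noise}}.
\end{equation*}

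For the signal term I would exploit the $1$-Lipschitzness of ReLU to bound each summand by $\tfrac{1}{K}|(\bfw_{k,\Omega_k}-\bfw_{k,\Omega_k}^*)^T\bfx_{n,\Omega_k}|\cdot\|\bfx_{n,\Omega_j}\|_2$. Because $\bfx_n\sim\mathcal{N}(0,\bfI_d)$ is standard Gaussian, the random vector $\bfx_{n,\Omega_j}$ lives in $\mathbb{R}^{r_j}$, and the inner product $(\bfw_{k,\Omega_k}-\bfw_{k,\Omega_k}^*)^T\bfx_{n,\Omega_k}$ is sub-Gaussian with parameter $\|\bfw_{k,\Omega_k}-\bfw_{k,\Omega_k}^*\|_2$; its joint tail with $\bfx_{n,\Omega_j}$ is sub-exponential. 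The overlap indicator $\delta_{j,k}$ appears naturally here because the cross-covariance $\mathbb{E}[\bfx_{n,\Omega_j}\bfx_{n,\Omega_k}^T]$ is supported only on the shared coordinates, so when $\delta_{j,k}=0$ the variance contribution collapses and, when $\delta_{j,k}=1$, it is controlled by $\sqrt{(r_j+r_k)}$-type quantities. I would then apply a vector Bernstein bound (as in Proposition 5.16 of \cite{V2010}) to $\sum_n \bfZ_n^{(j),\mathrm{sig}}$ on the $r_j$-dimensional ball, paying $\sqrt{r_j\log q/N}$ per $(j,k)$ pair with probability $1-q^{-r_{\min}}$; summing over $k$ yields the first claimed bound with the $(1+\delta_{j,k})$ weighting.

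For the noise term I would use that $\xi_n$ has zero mean with $|\xi_n|\le|\xi|$, so $\xi_n\mathbb{1}[\cdot]\bfx_{n,\Omega_j}$ is mean-zero (conditioning on $\bfx_n$ keeps the indicator fixed and $\mathbb{E}\xi_n=0$), sub-exponential, and lives in $\mathbb{R}^{r_j}$. A second vector Bernstein application gives a deviation of order $|\xi|\sqrt{r_j\log q/N}$ for each $j$, producing the second claimed term after summing. Finally, $\|\nabla\hat f_{\mathcal{D}}-\nabla f\|_2^2\le \sum_j\|\frac{\partial(\hat f_\mathcal{D}-f)}{\partial \bfw_{j,\Omega_j}}\|_2^2$ combined with Cauchy–Schwarz and a union bound over the $K$ neurons (each incurring a probability $q^{-r_{\min}}$) delivers the advertised concentration bound.

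The main obstacle I anticipate is treating the ReLU indicator $\mathbb{1}[\bfw_{j,\Omega_j}^T\bfx_{n,\Omega_j}\ge 0]$ rigorously: it depends on the very point $\widetilde{\bfw}$ at which we evaluate the gradient, so the summand $\bfZ_n^{(j)}$ is not a linear functional of the data. I would address this by fixing $\widetilde{\bfw}$ (the statement is pointwise) and treating the indicator as a bounded multiplier in the sub-exponential norm computation, so that the concentration bound depends only on second moments of $\bfx_{n,\Omega_j}$ and $\bfx_{n,\Omega_k}$ and on $\|\widetilde{\bfw}-\widetilde{\bfw}^*\|_2$. The secondary technical point — and the reason the bound scales in $r_k$ rather than $d$ — is the careful decoupling of $\bfx_{n,\Omega_j}$ and $\bfx_{n,\Omega_k}$ through the overlap parameter $\delta_{j,k}$, which is where the new concentration machinery promised in Remark~2.3 of the chapter must be invoked instead of the $d$-dimensional bound of \cite{ZSJB17}.
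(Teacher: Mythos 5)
Your overall strategy is sound and almost certainly matches the paper's (the paper itself advertises ``new concentration bounds'' in place of the $d$-dimensional ones from \cite{ZSJB17}): split the discrepancy into a signal part driven by $\widetilde{\bfw}-\widetilde{\bfw}^*$ and a noise part driven by $\xi$, exploit the $1$-Lipschitzness of $\phi$ to make the signal summand sub-exponential, then apply vector Bernstein together with an $\xi$-net of size $\mathcal{O}(C^{r_j})$ on the unit ball of $\mathbb{R}^{r_j}$ to convert the ambient dimension $d$ to the pruned dimension $r_j$. Your remark that the ReLU indicator is a \emph{pointwise} nuisance rather than a uniformity issue is also exactly the right instinct: for a fixed $\widetilde{\bfw}$ each summand $h_n\boldsymbol{\alpha}^T\bfx_{n,\Omega_j}$ is a well-defined i.i.d.\ random variable, and the Lipschitz bound $|h_n|\le|(\bfw_{k,\Omega_k}-\bfw_{k,\Omega_k}^*)^T\bfx_{n,\Omega_k}|$ holds regardless of the sign of $\bfw_{j,\Omega_j}^T\bfx_{n,\Omega_j}$.

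Two points need tightening. First, your explanation of the $(1+\delta_{j,k})$ weight is misleading: you write that when $\delta_{j,k}=0$ ``the variance contribution collapses,'' but the $(j,k)$ term does not disappear — the first summand $\sum_{j,k}\sqrt{r_k}$ in the bound is present for all pairs. What actually changes with overlap is the sub-exponential norm of the product $X_n = h_n\cdot\boldsymbol{\alpha}^T\bfx_{n,\Omega_j}$: when $\Omega_j\cap\Omega_k=\emptyset$, the two sub-Gaussian factors $(\bfw_{k,\Omega_k}-\bfw_{k,\Omega_k}^*)^T\bfx_{n,\Omega_k}$ and $\boldsymbol{\alpha}^T\bfx_{n,\Omega_j}$ are independent and the $\psi_1$-norm of $X_n$ factorizes cleanly; when they overlap the factors are correlated and the $\psi_1$-norm can be larger by a constant factor (at most $2$). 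The $(1+\delta_{j,k})$ weight is precisely this at-most-doubling correction, not a collapse to zero. Second, your noise analysis assumes $\mathbb{E}\,\xi_n=0$ and $\xi_n$ independent of $\bfx_n$. The paper's model in (5.1) says only ``arbitrary unknown additive noise bounded by $|\xi|$,'' which, taken literally, would allow an adversarial constant $\xi_n\equiv|\xi|$; in that case the noise contribution $\frac{1}{NK}\sum_n\xi_n\phi'(\cdot)\bfx_{n,\Omega_j}$ has a nonvanishing expectation and cannot decrease as $\sqrt{1/N}$. The zero-mean, $\bfx$-independent assumption you invoke is therefore \emph{necessary} for the advertised bound (and is the convention of the companion CNN chapter), but it is a silent strengthening of the stated model and you should say so explicitly rather than asserting it as given.

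Finally, a bookkeeping note: the $\xi$-net of size $\mathcal{O}(C^{r_j})$ yields a per-$(j,k)$ failure probability of $q^{-r_j/2}$ after absorbing the net into the exponent, so the advertised $1-q^{-r_{\min}}$ requires a further adjustment of $q$ after the union bound over $K^2$ neuron pairs; this is standard but should not be skipped.
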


\begin{proof}[Proof of Theorem \ref{Thm: major_thm}]
Since $\|\bfW^{(t)}-\bfW^* \|_F =\| \wt[t]-\w^* \|_2$, we can explore the converges of $\{ \wt[t] \}_{t=1}^T$ instead. Recall that
\begin{equation}
\begin{split}
\wt[t+1]
=&\w^{(t)}-\eta\nabla \hat{f}_{\mathcal{D}_t}(\w^{(t)})+\beta(\wt[t]-\wt[t-1])\\
=&\w^{(t)}-\eta\nabla {f}(\w^{(t)})+\beta\big(\wt[t]-\wt[t-1]\big)\\
&+\eta\big(\nabla{f}(\w^{(t)})-\nabla\hat{f}_{\mathcal{D}_t}(\w^{(t)})\big).
\end{split}
\end{equation}
Since $\nabla^2{f}$ is a smooth function, by the intermediate value theorem, we have
\begin{equation}
\begin{split}
\w^{(t+1)}
=\w^{(t)}&- \eta\nabla^2{f}(\widehat{\bfw}^{(t)})(\w^{(t)}-\w^*)
+\beta(\wt[t]-\wt[t-1])\\
&+\eta\big(\nabla{f}(\w^{(t)})-\nabla\hat{f}_{\mathcal{D}_t}(\w^{(t)})\big),
\end{split}
\end{equation}	
where $\widehat{\bfw}^{(t)}$ lies in the convex hull of $\w^{(t)}$ and $\w^*$.\\
Next, we have
\begin{equation}\label{eqn: major_thm_key_eqn}
\begin{split}
\begin{bmatrix}
\wt[t+1]-\w^*\\
\wt[t]-\w^*
\end{bmatrix}
=&\begin{bmatrix}
\bfI-\eta\nabla^2{f}(\widehat{\bfw}^{(t)})+\beta\bfI &\beta\bfI\\
\bfI& 0\\
\end{bmatrix}
\begin{bmatrix}
\wt[t]-\w^*\\
\wt[t-1]-\w^*
\end{bmatrix}\\
&+\eta
\begin{bmatrix}
\nabla{f}(\w^{(t)})-\nabla\hat{f}_{\mathcal{D}_t}(\w^{(t)})\\
0
\end{bmatrix}
\end{split}
\end{equation}
Let 
\begin{equation}\label{eqn: A}
    \bfA(\beta)=\begin{bmatrix}
\bfI-\eta\nabla^2{f}(\widehat{\bfw}^{(t)})+\beta\bfI &\beta\bfI\\
\bfI& 0\\
\end{bmatrix},
\end{equation} so we have
\begin{equation}\label{eqn: thm2}
\begin{split}
\left\|\begin{bmatrix}
\wt[t+1]-\w^*\\
\wt[t]-\w^*
\end{bmatrix}
\right\|_2
=&
\left\|\bfA(\beta)
\right\|_2
\left\|
\begin{bmatrix}
\wt[t]-\w^*\\
\wt[t-1]-\w^*
\end{bmatrix}
\right\|_2\\
&+\eta
\left\|
\begin{bmatrix}
\nabla{f}(\w^{(t)})-\nabla\hat{f}_{\mathcal{D}_t}(\w^{(t)})\\
0
\end{bmatrix}
\right\|_2.
\end{split}
\end{equation}
From Lemma \ref{Lemma: first_order}, we know that 
\begin{equation}\label{eqn: convergence2}
\begin{split}
\eta\left\|\nabla{f}(\w^{(t)})-\nabla\hat{f}_{\mathcal{D}_t}(\w^{(t)})\right\|_2
\le& \frac{C_5\eta}{K^2}\sum_{k=1}^{K}\sum_{j=1}^K(1+\delta_{j,k})\sqrt{\frac{r_k\log q}{N_t}}\|\w-\w^* \|_2\\
&+ \frac{C_5\eta}{K}\sum_{k=1}^{K}\sqrt{\frac{r_k\log q}{N_t}}\cdot|\xi|
\end{split}
\end{equation}
for some constant $C_5>0$. Then, we have
\begin{equation}\label{eqn: convergence}
\begin{split}
\|\wt[t+1]-\w^* \|_2 
\le& \bigg(\| \bfA(\beta) \|_2+\frac{C_5\eta}{K^2}\sum_{k=1}^{K}\sum_{j=1}^K(1+\delta_{j,k})\sqrt{\frac{r_k\log q}{N_t}}\bigg)\|\wt[t]-\w^* \|_2\\ 
&+ \frac{C_5\eta}{K}\sum_{k=1}^{K}\sqrt{\frac{r_k\log q}{N_t}}\cdot|\xi|\\
:=&\nu(\beta)\|\wt[t]-\w^* \|_2 
+ \frac{C_5\eta}{K}\sum_{k=1}^{K}\sqrt{\frac{r_k\log q}{N_t}}\cdot|\xi|.
\\
\end{split}
\end{equation}

Let $\nabla^2f(\widehat{\bfw}^{(t)})=\bfS\mathbf{\Lambda}\bfS^{T}$ be the eigendecomposition of $\nabla^2f(\widehat{\bfw}^{(t)})$. Then, we define
\begin{equation}\label{eqn:Heavy_ball_eigen}
\begin{split}
{\bfA}(\beta)
: =&
\begin{bmatrix}
\bfS^T&\bf0\\
\bf0&\bfS^T
\end{bmatrix}
\bfA(\beta)
\begin{bmatrix}
\bfS&\bf0\\
\bf0&\bfS
\end{bmatrix}
=\begin{bmatrix}
\bfI-\eta\bf\Lambda+\beta\bfI &\beta\bfI\\
\bfI& 0
\end{bmatrix}
\end{split}
\end{equation}
Since 
$\begin{bmatrix}
\bfS&\bf0\\
\bf0&\bfS
\end{bmatrix}\begin{bmatrix}
\bfS^T&\bf0\\
\bf0&\bfS^T
\end{bmatrix}
=\begin{bmatrix}
\bfI&\bf0\\
\bf0&\bfI
\end{bmatrix}$, we know $\bfA(\beta)$ and ${\bfA}(\beta)$
share the same eigenvalues. \\
Let $\lambda_i$ be the $i$-th eigenvalue of $\nabla^2f(\widehat{\bfw}^{(t)})$, then the corresponding $i$-th eigenvalue of \eqref{eqn:Heavy_ball_eigen}, denoted by $\delta_i(\beta)$, satisfies 
\begin{equation}\label{eqn:Heavy_ball_quaratic}
\nu_i^2-(1-\eta \lambda_i+\beta)\delta_i+\beta=0.
\end{equation}
Then, we have 
\begin{equation}\label{eqn: Heavy_ball_root}
\delta_i(\beta)=\frac{(1-\eta \lambda_i+\beta)+\sqrt{(1-\eta \lambda_i+\beta)^2-4\beta}}{2},
\end{equation}
and
\begin{equation}\label{eqn:heavy_ball_beta}
\begin{split}
|\delta_i(\beta)|
=\begin{cases}
\sqrt{\beta}, \qquad \text{if}\quad  \beta\ge \big(1-\sqrt{\eta\lambda_i}\big)^2,\\
\frac{1}{2} \left| { (1-\eta \lambda_i+\beta)+\sqrt{(1-\eta \lambda_i+\beta)^2-4\beta}}\right| , \text{otherwise}.
\end{cases}
\end{split}
\end{equation}
Note that the other root of \eqref{eqn:Heavy_ball_quaratic} is abandoned because the root in \eqref{eqn: Heavy_ball_root} is always larger than or at least equal to the other root with $|1-\eta \lambda_i|<1$.   
By simple calculation, we have 
\begin{equation}\label{eqn: Heavy_ball_result}
\delta_i(0)>\delta_i(\beta),\quad \text{for}\quad  \forall \beta\in\big(0, (1-{\eta \lambda_i})^2\big),
\end{equation}
and specifically, $\delta_i$ achieves the minimum $\delta_{i}^*=|1-\sqrt{\eta\lambda_i}|$ when $\beta= \big(1-\sqrt{\eta\lambda_i}\big)^2$.\\
Let us first assume $\w^{(t)}$ satisfies \eqref{eqn: convex_region}, then
from Lemma \ref{Lemma:local_convex_population}, we know that 
$$0<\frac{(1-\varepsilon_0)}{11\kappa^2\gamma K^2}\le\lambda_i\le \frac{7}{K}$$
provided that $N_t\gtrsim\varepsilon_0^{-2}\rho^{-1}\kappa^2\gamma K^3 \big[\frac{1}{K^2}\sum_j\sum_k (1+\delta_{j,k})\sqrt{r_k+r_j} \big]^2 \log q$.
Let $\gamma_1=\frac{\rho(1-\varepsilon_0)}{11\kappa^2\gamma K^2}$ and $\gamma_2=\frac{7}{K}$.
If we choose $\beta$ such that
\begin{equation}
\beta^*=\max \big\{(1-\sqrt{\eta\gamma_1})^2, (1-\sqrt{\eta\gamma_2})^2 \big\},
\end{equation} then we have $\beta\ge (1-\sqrt{\eta \lambda_i})^2$ for any $i$ and $\delta_i=\max\big\{|1-\sqrt{\eta\gamma_1}|, |1-\sqrt{\eta\gamma_2}|  \big\}$ for any $i$. \\
Let $\eta= \frac{1}{{2\gamma_2}}$, then $\beta^*$ equals to  $\Big(1-\sqrt{\frac{\gamma_1}{2\gamma_2}}\Big)^2$. 
Then, for any $\varepsilon_0\in (0, \frac{1}{2})$ we have
\begin{equation}\label{eqn: con1}
\begin{split}
\|\bfA(\beta^*) \|_2
=\max_i \delta_{i}(\beta^*)
= 1-\sqrt{\frac{\gamma_1}{2\gamma_2}}
=&1-\sqrt{\frac{1-\varepsilon_0}{154\rho^{-1}\kappa^2 \gamma K}}\\
\le& 1-\frac{1-3/4\cdot\varepsilon_0}{\sqrt{154\rho^{-1}\kappa^2 \gamma K}}.
\end{split}
\end{equation}
Then, let 
\begin{equation}\label{eqn: con2}
\frac{C_5\eta}{K^2}\sum_{k=1}^{K}\sum_{j=1}^K(1+\delta_{j,k})\sqrt{\frac{r_k\log q}{N_t}}
\le \frac{\varepsilon_0}{4\sqrt{154\rho^{-1}\kappa^2 \gamma K}},
\end{equation}
we need $N_t\gtrsim\varepsilon_0^{-2}\rho^{-1}\kappa^2\gamma K^3 \big[\frac{1}{K^2}\sum_j\sum_k (1+\delta_{j,k})\sqrt{r_k} \big]^2 \log q$.

Combine \eqref{eqn: con1} and \eqref{eqn: con2}, we have 
\begin{equation}
\nu(\beta^*)\le 1-\frac{1-\varepsilon_0}{\sqrt{154\rho^{-1}\kappa^2 \gamma K}}.
\end{equation}
While let $\beta = 0$, we have 
\begin{equation}
\begin{split}
\nu(0) \ge \|\bfA(0) \|_2 = 1-\frac{1-\varepsilon_0}{{154\rho^{-1}\kappa^2 \gamma K}}\\		
\end{split}
\end{equation}
and 
\begin{equation}
\nu(0)  \le 1-\frac{1-2\varepsilon_0}{{154\rho^{-1}\kappa^2 \gamma K}}
\end{equation}
if $N_t\gtrsim\varepsilon_0^{-2}\rho^{-1}\kappa^2\gamma K^4 \big[\frac{1}{K^2}\sum_j\sum_k (1+\delta_{j,k})\sqrt{r_k+r_j} \big]^2 \log q$.

In conclusion, with $\eta =\frac{1}{2 \gamma_2}$ and $\beta = \big(1-\frac{\gamma_1}{2\gamma_2} \big)^2$, we have
\begin{equation}\label{eqn: induction}
\begin{split}
&\|	\wt[t+1]-\w^*\|_2
\le\Big(1-\frac{1-\varepsilon_0}{\sqrt{154\kappa^2 \gamma K}}\Big)\|	\wt[t]-\w^*\|_2
 + \frac{C \eta}{K}\sum_{k=1}^{K} \sqrt{\frac{r_k \log q}{N_t}} |\xi|.
\end{split}
\end{equation}
if $\wt[t+1]$ satisfies \eqref{eqn: convex_region} and $N_t\gtrsim\varepsilon_0^{-2}\rho^{-1}\kappa^2\gamma K^4 \big[\frac{1}{K^2}\sum_j\sum_k (1+\delta_{j,k})\sqrt{r_k+r_j} \big]^2 \log q$.

Then, we can start mathematical induction of \eqref{eqn: induction} over $t$.

\textbf{Base case}:  \eqref{eqn: convex_region} holds for $\w^{(0)}$ naturally from the assumption in Theorem \ref{Thm: major_thm}.
Since \eqref{eqn: convex_region}  holds and the number of samples exceeds the required bound in \eqref{eqn: induction}, we have \eqref{eqn: induction} holds for $t=0$.   

\textbf{Induction step}:
Assume \eqref{eqn: induction} holds for $t$, to make sure the mathematical induction of \eqref{eqn: induction} holds, we need $\wt[t+1]$ satisfies \eqref{eqn: convex_region}. That is  
\begin{equation}
\sum_{k=1}^K\frac{\eta}{K} \sqrt{\frac{r_k\log q}{N_t}}
\lesssim  \frac{1-\varepsilon_0}{\sqrt{132\kappa^2 \gamma K}}\cdot  \frac{\varepsilon_0\sigma_K}{44\kappa^2\gamma K^2}.
\end{equation}	
Hence, we need 
\begin{equation}\label{eqn: N_2}
N_t \gtrsim \varepsilon_0^{-2} \kappa^8 \gamma^3 K^6\Big(\frac{1}{K}\sum_k \sqrt{r_k}\Big)^2\log q.
\end{equation}

 In addition, with \eqref{eqn: convex_region} and \eqref{eqn: induction} hold for all $t\le T$, 
the following equation
\begin{equation}
\begin{split}
\left\|\begin{bmatrix}
\wt[t+1]-\w^*\\
\wt[t]-\w^*
\end{bmatrix}
\right\|_{\infty}
=&
\left\|\bfA(\beta)
\right\|_2
\left\|
\begin{bmatrix}
\wt[t]-\w^*\\
\wt[t-1]-\w^*
\end{bmatrix}
\right\|_{\infty}\\
&+\eta
\left\|
\begin{bmatrix}
\nabla{f}(\w^{(t)})-\nabla\hat{f}_{\mathcal{D}_t}(\w^{(t)})\\
0
\end{bmatrix}
\right\|_{\infty}
\end{split}
\end{equation}
holds as well, and $\left\|\bfA(\beta)
\right\|_2$ is bounded by $\nu(\beta)$. Hence,  \eqref{eqn: induction} also holds in infinity norm as 
\begin{equation}\label{eqn:infty}
\begin{split}
&\|	\wt[t+1]-\w^*\|_{\infty}
\le\Big(1-\frac{1-\varepsilon_0}{\sqrt{154\kappa^2 \gamma K}}\Big)\|	\wt[t]-\w^*\|_{\infty}
 + 2C \eta \sqrt{\frac{r \log q}{N_t}} |\xi|.
\end{split}
\end{equation}

In conclusion, when $N_t \gtrsim \varepsilon_0^{-2} \kappa^8 \gamma^3 K^6\Big(\frac{1}{K^2}\sum_k\sum_j (1+\delta_{j,k}) \sqrt{r_k+r_j}\Big)^2\log d$, we know that \eqref{eqn: induction} holds for all $1\le t \le T$ with probability at least $1-K^2T\cdot q^{-r_{\min}}$. By simple calculation, we can obtain
\begin{equation}\label{eqn: Thm1_final}
\begin{split}
\|	\wt[T]-\w^*\|_2
\le\Big(1-\frac{1-\varepsilon_0}{\sqrt{132\kappa^2 \gamma K}}\Big)^T\|	\wt[0]-\w^*\|_2
 + \frac{C}{K}\sum_{k=1}^{K} \sqrt{\frac{\kappa^2\gamma K^2r_k\log q}{N_t}} \cdot |\xi|.
\end{split}
\end{equation}
for some constant $C> 0$.

\end{proof}
\endgroup

\begingroup

\newcommand{\w[1]}{\widetilde{#1}}

\chapter{\uppercase{Supplementary proofs of the theorems in Chapter 7}}

\section{Roadmap of the Proofs}
We first provide an overview of the techniques used in proving Theorems 
\ref{Thm: p_convergence} and \ref{Thm: sufficient_number}. 

\textbf{1. Characterization of a proper  population risk function.} 
To characterize the performance of the iterative self-training algorithm via the stochastic gradient descent method, we need first to define a population risk function such that the following two properties hold. First, the landscape of the population risk function should be analyzable near $\{\W[\ell]\}_{\ell=0}^L$. Second, the distance between the empirical risk function in \eqref{eqn: empirical_risk_function} and the population risk function should be bounded near $\{\W[\ell]\}_{\ell=0}^L$. The generalization function defined in \eqref{eqn: GF}, which is widely used in the supervised learning problem with a sufficient number of samples, failed the second requirement. To this end, we turn to find a new population risk function defined in \eqref{eqn: p_population_risk_function}, and the illustrations of the population risk function and objection function are included in Figure \ref{fig: landscape}. 

\textbf{2. Local convex region of the population risk function.} 
The purpose is to characterize the iterations via the stochastic gradient descent method in the population risk function. 
To obtain the local convex region of the population risk function, we first bound the Hessian of the population risk function at its global optimal. Then, utilizing the corresponding lemma to obtain the Hessian of the population risk function near the global optimal. The local convex region of the population risk function is summarized in Lemma \ref{Lemma: p_second_order_derivative bound}, and the proof of Lemma \ref{Lemma: p_second_order_derivative bound} is included in this appendix.

\textbf{3. Bound between the population risk and empirical risk functions.} After the characterization of  the iterations via the stochastic gradient descent method in the population risk function, we need to bound the distance between the population risk function and empirical risk function. Therefore, the behaviors of the iterations via the stochastic gradient descent method in the empirical risk function can be described by the ones in the population risk function and the distance between these two. The key lemma is summarized in Lemma \ref{Lemma: p_first_order_derivative} (see this appendix), and the proof is included in this appendix.

\begin{figure}[h]
     \centering
     \includegraphics[width=0.6\linewidth]{./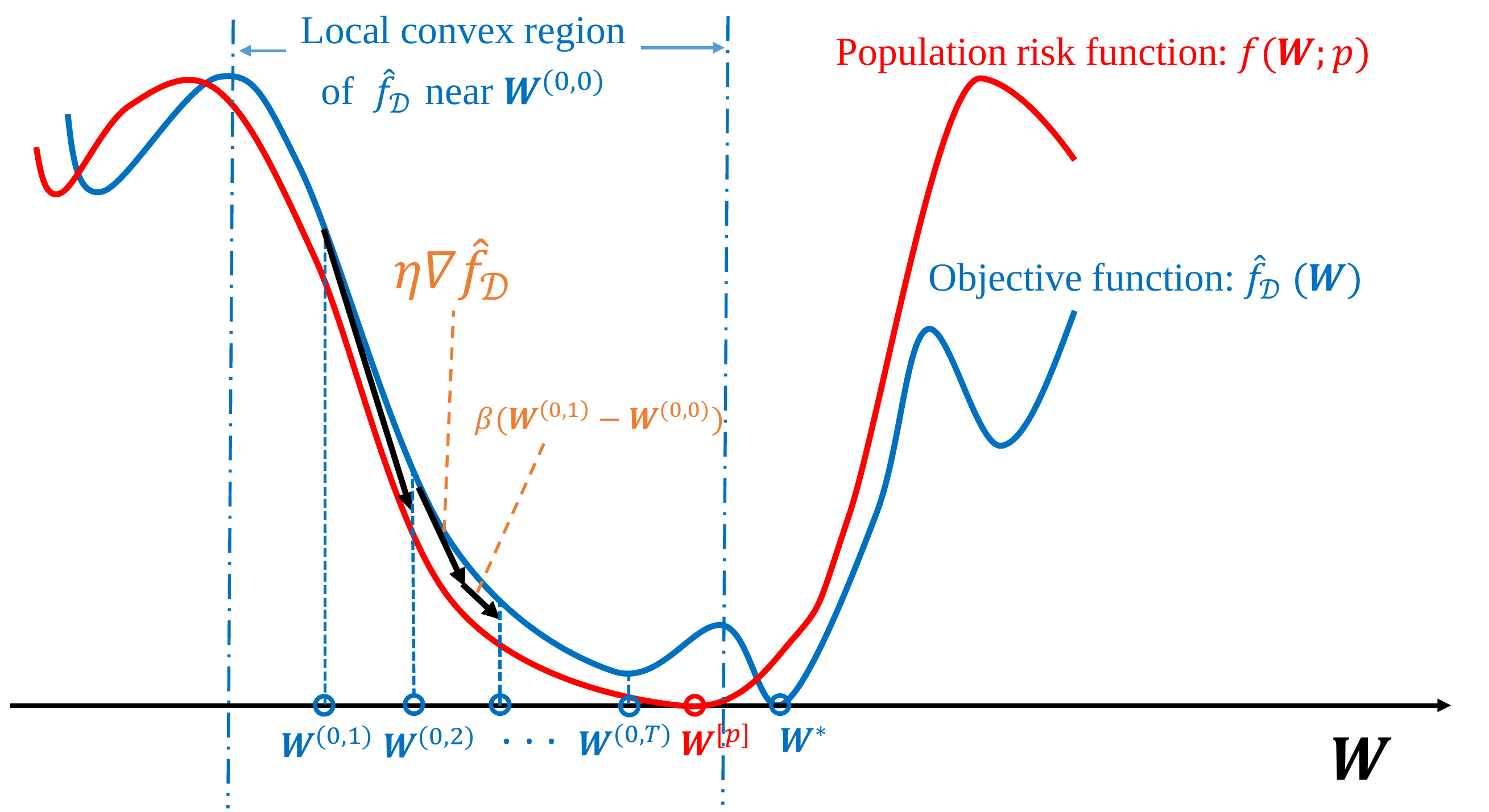}
     \caption{The landscapes of the objection function and population risk function}
     \label{fig: landscape}
\end{figure}

In the following contexts, the details of the iterative self-training algorithm are included in this appendix.  We then first provide the proof of Theorem \ref{Thm: sufficient_number} in this appendix, which can be viewed as a special case of Theorem \ref{Thm: p_convergence}. Then, with the preliminary knowledge from proving Theorem \ref{Thm: sufficient_number}, we turn to present the full proof of a more general statement summarized in Theorem \ref{Thm: p_main_thm}, which is related to Theorem \ref{Thm: p_convergence}. The definition and relative proofs of $\mu$ and $\rho$ are all included in Appendix \ref{sec: rho}.  The proofs of preliminary lemmas are included as well.

\section{Iterative Self-training Algorithm}
\label{sec: algorithm}
In this section, we implement the details of the mini-batch stochastic gradient descent used in each stage of the  iterative self-training algorithm. After $t$ number of iterations via mini-batch stochastic gradient descent at $\ell$-th stage of self-training algorithm, the learned model is denoted as $\bfW^{(\ell, t)}$. One can easily check that $\W[\ell]$ in the main context is denoted as $\W[\ell, 0]$ in this section and the following proofs. 
Last,
the pseudo-code of the iterative self-training algorithm is summarized in Algorithm
\ref{Algorithm: Alg1}.
\begin{algorithm}[h]
	\caption{Iterative Self-Training Algorithm}\label{Algorithm: Alg1}
	\begin{algorithmic}
		\State \textbf{Input:} labeled $\mathcal{D}=\{(\bfx_n, y_n) \}_{n=1}^{N}$,  
		 unlabeled data $\w[\mathcal{D}]=\{\widetilde{\bfx}_m \}_{m=1}^{M}$,
		and gradient step size $\eta$;\\
		\State \textbf{Initialization:} preliminary teacher model with weights $\bfW^{(0,0)}$;\\
		\State \textbf{Partition:} randomly and independently pick data from $\mathcal{D}$ and $\widetilde{\mathcal{D}}$ to form $T$ subsets $\{ \mathcal{D}_t \}_{t=0}^{T-1}$ and $\{\widetilde{\mathcal{D}}_t\}_{t = 0}^{T-1}$, respectively; 
		\\
		\For{$\ell = 0 , 1, \cdots, L-1$}\\
		    \State $y_m = g(\bfW^{(\ell,0)};\w[\bfx]_m)$ for $m =1, 2, \cdots, M$\\
		    \For{$t = 0, 1,\cdots, T-1$}\\
		    \State $\W[\ell, t+1] = \W[\ell, t] - \eta\cdot \nabla \hat{f}_{\mathcal{D}_t, \widetilde{\mathcal{D}}_t}(\W[\ell, t]) + \beta \cdot \big( \W[\ell, t] - \W[\ell, t-1] \big)$ \\
		    \EndFor\\
		     \State $\bfW^{(\ell+1,0)} = \bfW^{(\ell, T)}$\\
		 \EndFor
	\end{algorithmic}
\end{algorithm}

\section{Notations}
In this section, we first introduce some important notations that will be used in the following proofs, and the notations are summarized  in Table 1.

As shown in Algorithm \ref{Algorithm: Alg1},  $\bfW^{(\ell, t)}$ denotes the learned model after $t$ number of iterations via mini-batch stochastic gradient descent at $\ell$-th stage of the iterative self-training algorithm. Given a student model $\widetilde{\bfW}$, the pseudo label for $\widetilde{\bfx} \in \widetilde{\mathcal{D}}$ is generated as
\begin{equation}\label{eqn: psedu_label}
    \tilde{y} = g(\widetilde{\bfW};\widetilde{\bfx}).
\end{equation}
Further, let $\Wp = p\bfW^* + (1-p)\W[0,0]$,
we then define the \textit{population risk function} as 
\begin{equation}\label{eqn: p_population_risk_function}
    f(\bfW;p) = \frac{\lambda}{2} \mathbb{E}_{\bfx}\big(y^*(p)-g(\bfW;\bfx)\big) + \frac{\w[\lambda]}{2} \mathbb{E}_{\w[\bfx]} \big(\w[y]^*(p) - g(\bfW;\w[\bfx])\big),
\end{equation}
where $y^*(p)=g(\Wp;\bfx)$ with $\bfx\sim \mathcal{N}(0,\delta^2\bfI)$ and $\w[y]^*(p) = g(\Wp;\w[\bfx])$ with $\w[\bfx]\sim \mathcal{N}(0,\de^2\bfI)$. When $p =1$, we have $\Wp=\bfW^*$ and $y^*(p) = y$ for data in $\mathcal{D}$. 

Moreover, we use $\sigma_i$ to denote the $i$-th largest singular value of ${\bfW}^*$. Then, $\kappa$ is defined as ${\sigma_1}/{\sigma_K}$, and  $\gamma = \prod_{i=1}^K \sigma_i/\sigma_K$. Additionally,  to avoid high dimensional tensors, the first order derivative of the empirical risk function is defined in the form of vectorized $\bfW$ as 
\begin{equation}\label{eqn: vec_W}
    \nabla \hat{f}(\bfW) = \Big[\frac{\partial f}{\partial \bfw_1}^T, \frac{\partial f}{\partial \bfw_2}^T, \cdots , \frac{\partial f}{\partial \bfw_K}^T \Big]^T \in \mathbb{R}^{dK} 
\end{equation}
with $\bfW =[\bfw_1, \bfw_2, \cdots, \bfw_K]\in\mathbb{R}^{d\times K}$. 
Therefore, the second order derivative of the empirical risk function is in $\mathbb{R}^{dk\times dk}$.
Similar to \eqref{eqn: vec_W}, the high order derivatives of the population risk functions are defined based on vectorized $\bfW$ as well.

Finally, since we focus on order-wise analysis, some constant numbers will be ignored in the majority of the steps. In particular, we use $h_1(z) \gtrsim(\text{or }\lesssim, \eqsim ) h_2(z)$ to denote there exists some positive constant $C$ such that $h_1(z)\ge(\text{or } \le, = ) C\cdot h_2(z)$ when $z\in\mathbb{R}$ is sufficiently large.
\renewcommand{\arraystretch}{1.5}
\begin{table}[h]
    \centering
    \caption{Some important notations}   
    \begin{tabular}{c|p{12cm}}
    \hline
    \hline
    $\mathcal{D} =\{ \bfx_n, y_n \}_{n=1}^N$ & Labeled dataset with $N$  number of samples;\\
    \hline
    $\widetilde{\mathcal{D}} =\{ \widetilde{\bfx}_m \}_{m=1}^M$ & Unlabeled dataset with $M$  number of samples;\\
    \hline
    $\mathcal{D}_t =\{ \bfx_n, y_n \}_{n=1}^{N_t}$ & a subset of $\mathcal{D}$ with $N_t$  number of labeled data;\\
    \hline
    $\widetilde{\mathcal{D}}_t =\{ \widetilde{\bfx}_m \}_{m=1}^{M_t}$ & a subset of $\widetilde{\mathcal{D}}$ with $M_t$  number of unlabeled data;\\
    \hline
    $d$ & Dimension of the input $\bfx$ or $\widetilde{\bfx};$\\
    \hline
    $K$ & Number of neurons in the hidden layer;
    \\
    \hline
    $\bfW^*$ & Weights of the ground truth model;\\
    \hline
    $\Wp$ & $\Wp = p \bfW^* + (1-p)\W[0,0]$;\\
    \hline
    $\W[\ell,t]$ & Model returned by iterative self-training after $t$ step mini-batch stochastic gradient descent at stage $\ell$; $\W[0,0]$ is the initial model;\\
    \hline
    $\hat{f}_{ \mathcal{D}, \widetilde{\mathcal{D}} } (\text{ or } \hat{f})$  & The empirical risk function defined in \eqref{eqn: empirical_risk_function};\\
    \hline
    $f(\bfW;p)$  & The population risk function defined in \eqref{eqn: p_population_risk_function};\\
    \hline
    $\hat{\lambda}$  & The value of $\lambda \delta^2/(\lambda \delta^2 + \widetilde{\lambda}\de^2 )$;\\
    \hline
    $\mu$  & The value of $\frac{\lambda\delta^2 + \widetilde{\lambda}\de^2}{\lambda \rho(\delta) + \widetilde{\lambda}\rho(\de)}$;\\
    \hline
    $\sigma_i$  & The $i$-th largest singular value of $\bfW^*$;\\
    \hline
    $\kappa$  & The value of $\sigma_1/\sigma_{K}$;\\
    \hline
    $\gamma$  & The value of $\prod_{i=1}^K\sigma_i/\sigma_{K}$;\\
    \hline
    $q$  & Some large constant in $\mathbb{R}^+$;\\
    \hline
    \hline
    \end{tabular}
    \vspace{-1mm}
    \label{table:appendix}
\end{table}

\section{Preliminary Lemmas}
We will first start with some preliminary lemmas. As outlined at the beginning of the supplementary material, Lemma \ref{Lemma: p_second_order_derivative bound} illustrates the local convex region of the population risk function, and Lemma \ref{Lemma: p_first_order_derivative} explains the error bound between the population risk and empirical risk functions. Then, Lemma \ref{Thm: p_initialization} describes the returned initial model $\bfW^{(0,0)}$ via tensor initialization method \cite{ZSJB17} purely using labeled data. Next, Lemma \ref{Lemma: weyl} is the well known Weyl's inequality in the matrix setting. Moreover, Lemma \ref{prob} is the concentration theorem for independent random matrices. The definitions of the sub-Gaussian and sub-exponential variables are summarized in Definitions \ref{Def: sub-gaussian} and \ref{Def: sub-exponential}.  Lemmas \ref{Lemma: covering_set} and \ref{Lemma: spectral_norm_on_net} serve as the technical tools in bounding matrix norms under the framework of the confidence interval.
\begin{lemma}\label{Lemma: p_second_order_derivative bound}
   Given any $\bfW\in \mathbb{R}^{d\times K}$, let $p$ satisfy 
      \begin{equation}\label{eqn: p_initial}
        p  \lesssim \frac{\sigma_K}{\mu^{2}K\cdot \| \bfW -\bfW^* \|_F}.
    \end{equation}
    Then, we have 
    \begin{equation}
        \frac{\lambda\rho(\delta)+\w[\lambda]\rho(\de)}{11\kappa^2\gamma K^2} \preceq \nabla^2 {f}(\bfW;p) \preceq \frac{7(\lambda\delta^2+\w[\lambda]\de^2)}{K}.
    \end{equation}
\end{lemma}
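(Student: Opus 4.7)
The plan is to reduce the analysis of $\nabla^2 f(\bfW;p)$ to the population-Hessian analysis of a one-hidden-layer ReLU network with Gaussian input, which has been established in supervised-learning work such as \cite{ZSJB17} and the earlier chapters of this thesis. The key observation is that $f(\bfW;p)$ in \eqref{eqn: p_population_risk_function} is a convex combination of two such losses, one with Gaussian input of variance $\delta^2$ and pseudo-label generator $\Wp$, and the other with variance $\widetilde{\delta}^2$ and the same generator. Since both pieces share the common minimizer $\bfW=\Wp$, the strategy is to first bound $\nabla^2 f(\Wp;p)$ and then extend the bound to any $\bfW$ satisfying \eqref{eqn: p_initial} via perturbation.

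First I would compute $\nabla^2 f(\Wp;p)$ directly. For each of the two Gaussian components, the standard Hessian formula evaluated at the ground truth has spectrum confined to $[\rho(\delta_\ast)/(\kappa^2\gamma K^2),\,c\,\delta_\ast^2/K]$, where $\delta_\ast\in\{\delta,\widetilde{\delta}\}$ is the input standard deviation and $c$ is an absolute constant. Adding the weighted contributions yields
\begin{equation*}
\frac{\lambda\rho(\delta)+\widetilde{\lambda}\rho(\widetilde{\delta})}{\kappa^2\gamma K^2}\bfI
\;\preceq\; \nabla^2 f(\Wp;p) \;\preceq\;
\frac{c(\lambda\delta^2+\widetilde{\lambda}\widetilde{\delta}^2)}{K}\bfI,
\end{equation*}
which already matches the claimed bounds at $\bfW=\Wp$ up to absolute constants.

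Next I would extend these bounds to arbitrary $\bfW$ obeying \eqref{eqn: p_initial} through the expansion $\nabla^2 f(\bfW;p)=\nabla^2 f(\Wp;p)+\bigl[\nabla^2 f(\bfW;p)-\nabla^2 f(\Wp;p)\bigr]$. Because $\phi''$ for the ReLU is a distributional Dirac mass on $\{\bfx:\bfw_j^T\bfx=0\}$, the difference reduces to a Gaussian integral whose operator norm is proportional to $\|\bfW-\Wp\|_F/\sigma_K$, weighted by $\lambda\delta^2+\widetilde{\lambda}\widetilde{\delta}^2$. Writing $\bfW-\Wp=(\bfW-\bfW^*)+(1-p)(\bfW^*-\W[0,0])$ and applying the triangle inequality together with the hypothesis $p\,\|\bfW-\bfW^*\|_F\lesssim\sigma_K/(\mu^2K)$ shows that the perturbation is a small multiple of the lower bound obtained at $\Wp$, so the spectral interval survives after absorbing the absolute constant $c$ into the denominators $11$ and $7$ appearing in the statement.

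The main obstacle will be the precise accounting of the cross-neuron off-diagonal blocks of the Hessian. For ReLU these blocks depend on the angles between the weight vectors $\{\bfw_j\}$ and on how each Gaussian distribution aligns with those directions, so their variation as $\bfW$ moves away from $\Wp$ mixes contributions from the labeled and the unlabeled components with unequal variances $\delta^2$ and $\widetilde{\delta}^2$. Tracking those mixed contributions is exactly what produces the normalizing quantity $\mu=\sqrt{(\lambda\delta^2+\widetilde{\lambda}\widetilde{\delta}^2)/(\lambda\rho(\delta)+\widetilde{\lambda}\rho(\widetilde{\delta}))}$ inside condition \eqref{eqn: p_initial}; that condition is calibrated precisely so that the perturbation-to-lower-bound ratio remains a fixed fraction, preserving the claimed spectral inequalities uniformly on the neighborhood prescribed by the hypothesis.
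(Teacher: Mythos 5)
Your high-level plan---first bound the population Hessian at the global minimizer $\WP$ of $f(\cdot;p)$, where the residual vanishes and the Hessian reduces to the Gauss--Newton form, and then perturb to arbitrary $\bfW$---is the natural one and likely matches the paper's. However, there is a genuine gap in the perturbation step as you have written it.

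The perturbation term of $\nabla^2 f(\bfW;p)$ away from $\nabla^2 f(\WP;p)$ is governed by $\|\bfW-\WP\|_F$, because the residual $g(\bfW;\cdot)-g(\WP;\cdot)$ appearing in the distributional second-derivative term scales with $\|\bfW-\WP\|_F$. You decompose this as $\bfW-\WP = (\bfW-\bfW^*) + (1-p)(\bfW^*-\W[0,0])$ and then invoke the hypothesis. But the stated hypothesis only controls the product $p\,\|\bfW-\bfW^*\|_F \lesssim \sigma_K/(\mu^2K)$; it does not bound $\|\bfW-\bfW^*\|_F$ on its own (taking $p\to0$ relaxes it arbitrarily), and it says \emph{nothing at all} about the second term $(1-p)\|\bfW^*-\W[0,0]\|_F$. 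So the triangle inequality you appeal to does not yield the required control on $\|\bfW-\WP\|_F$ from \eqref{eqn: p_initial} alone. The same uncontrolled quantity also undermines your first step: the spectral bounds one obtains directly at $\WP$ are in terms of $\kappa(\WP)$, $\gamma(\WP)$, and $\sigma_K(\WP)$, whereas the lemma's conclusion and the definition of $\rho$ (Definition \ref{defi: rho}) are written in terms of the corresponding quantities of $\bfW^*$; translating between them again requires $\|\WP-\bfW^*\|_F = (1-p)\|\W[0,0]-\bfW^*\|_F$ to be small, which you have not assumed. To close the argument you need to invoke the regime in which the lemma is actually applied in Theorems \ref{Thm: p_convergence} and \ref{Thm: sufficient_number}: the initialization bound from Lemma \ref{Thm: p_initialization} controls $\|\W[0,0]-\bfW^*\|_F$, and combined with $p\gtrsim 1$ (from \eqref{eqn: definition_p}) this bounds both $(1-p)\|\W[0,0]-\bfW^*\|_F$ and $\|\bfW-\bfW^*\|_F$ by $O(\sigma_K/(\mu^2K))$, making the perturbation a small fraction of the lower bound. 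You should state this dependence on the initialization explicitly rather than absorbing it silently into condition \eqref{eqn: p_initial}.
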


\begin{lemma}\label{Lemma: p_first_order_derivative}
   Let $f$ and $\hat{f}$ be the functions defined in \eqref{eqn: p_population_risk_function} and \eqref{eqn: empirical_risk_function}, respectively. Suppose the pseudo label is generated through \eqref{eqn: psedu_label} with weights $\widetilde{\bfW}$. Then, we have
   \begin{equation}
        \begin{split}
        \|\nabla f(\bfW) -\nabla \hat{f}(\bfW) \|_2
       \lesssim&
       \frac{\lambda\delta^2}{K} \sqrt{\frac{d\log q}{N}} \cdot \| \bfW - \bfW^*\| + \frac{\w[\lambda]\de^2}{K} \sqrt{\frac{d\log q}{M}} \cdot \| \bfW -\w[\bfW] \|_2\\
       & + \frac{\big\| \lambda\delta^2 \cdot \big(\w[\bfW]-\Wp\big) +\w[\lambda]\de^2 \cdot \big(\bfW^*-\Wp\big)  \big\|_2}{K}
       \end{split}
    \end{equation}
    with probability at least $1-q^{-d}$.
\end{lemma}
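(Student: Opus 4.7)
\textbf{Proof Plan for Lemma \ref{Lemma: p_first_order_derivative}.} The plan is to decompose $\nabla \hat{f}(\bfW)-\nabla f(\bfW)$ into three pieces: a labeled concentration error, an unlabeled concentration error, and a deterministic \emph{bias} that arises because the empirical risk uses true labels $y_n=g(\bfW^*;\bfx_n)$ and pseudo-labels $\widetilde{y}_m=g(\widetilde{\bfW};\widetilde{\bfx}_m)$, whereas the population risk $f(\cdot;p)$ in \eqref{eqn: p_population_risk_function} uses the common target $g(\WP;\cdot)$. Concretely, for each $j\in[K]$ I would write
\begin{equation*}
\begin{split}
\nabla_{\bfw_j}\hat f-\nabla_{\bfw_j} f
=&\underbrace{\tfrac{\lambda}{NK}\!\sum_{n}\Delta_n^{*}(\bfx_n)\phi'(\bfw_j^T\bfx_n)\bfx_n-\tfrac{\lambda}{K}\mathbb E\big[\Delta^{*}(\bfx)\phi'(\bfw_j^T\bfx)\bfx\big]}_{\text{(A) labeled concentration}}\\
&+\underbrace{\tfrac{\widetilde\lambda}{MK}\!\sum_{m}\widetilde\Delta_m(\widetilde\bfx_m)\phi'(\bfw_j^T\widetilde\bfx_m)\widetilde\bfx_m-\tfrac{\widetilde\lambda}{K}\mathbb E\big[\widetilde\Delta(\widetilde\bfx)\phi'(\bfw_j^T\widetilde\bfx)\widetilde\bfx\big]}_{\text{(B) unlabeled concentration}}\\
&+\underbrace{\tfrac{\lambda}{K}\mathbb E\big[(g(\WP;\bfx){-}g(\bfW^*;\bfx))\phi'(\bfw_j^T\bfx)\bfx\big]+\tfrac{\widetilde\lambda}{K}\mathbb E\big[(g(\WP;\widetilde\bfx){-}g(\widetilde\bfW;\widetilde\bfx))\phi'(\bfw_j^T\widetilde\bfx)\widetilde\bfx\big]}_{\text{(C) bias}},
\end{split}
\end{equation*}
where $\Delta^{*}(\bfx)=g(\bfW;\bfx)-g(\bfW^*;\bfx)$ and $\widetilde\Delta(\widetilde\bfx)=g(\bfW;\widetilde\bfx)-g(\widetilde\bfW;\widetilde\bfx)$. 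I then bound the $\ell_2$ norm of the stacked $j$-blocks.

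For (A), the key step is to verify that each summand $\bfz_n:=\Delta_n^{*}(\bfx_n)\phi'(\bfw_j^T\bfx_n)\bfx_n-\mathbb E[\cdot]$ is a centered sub-exponential random vector in $\mathbb R^d$ whose Orlicz norm is controlled by $\delta^2\|\bfW-\bfW^*\|_F$. This uses the $1$-Lipschitz property of ReLU to get $|\Delta^{*}(\bfx)|\le\tfrac1K\|\bfW-\bfW^*\|_F\|\bfx\|_2$, together with standard Gaussian tail bounds on $\|\bfx\|_2$. A Bernstein-type concentration (applied after a symmetrization or directly via Lemma \ref{prob}) then gives the rate $\sqrt{d\log q/N}\cdot\lambda\delta^2\|\bfW-\bfW^*\|_F/K$ with probability at least $1-q^{-d}$. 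An entirely parallel argument for (B) yields the rate $\sqrt{d\log q/M}\cdot\widetilde\lambda\widetilde\delta^2\|\bfW-\widetilde\bfW\|_2/K$.

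For (C), I would exploit the Gaussianity of $\bfx$ and $\widetilde\bfx$ to compute the expectations explicitly. Writing $g(\WP;\cdot)-g(\bfW^*;\cdot)$ and $g(\WP;\cdot)-g(\widetilde\bfW;\cdot)$ as sums over $K$ ReLU differences and using the fact that for zero-mean Gaussian $\bfx$ of variance $\sigma^2$ one has $\mathbb E[\phi(\bfa^T\bfx)\phi'(\bfb^T\bfx)\bfx]=\sigma^2\,\Phi(\bfa,\bfb)$ for a kernel $\Phi$ linear in its first argument, the (C) term reduces, after summing over the $K$ neuron blocks, to a vector whose $\ell_2$ norm is $\tfrac{1}{K}\|\lambda\delta^2(\widetilde\bfW-\WP)+\widetilde\lambda\widetilde\delta^2(\bfW^*-\WP)\|_2$, matching the target. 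Finally, a union bound over the three pieces and over $j\in[K]$ (absorbed into the constant) produces the stated inequality with probability at least $1-q^{-d}$.

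The main obstacle is step (A)/(B): because ReLU is non-smooth, I cannot use a mean-value theorem to linearize $\Delta^{*}$ and obtain a sharp sub-Gaussian rate directly. I expect to absorb this by a careful truncation/Lipschitz argument combined with an $\epsilon$-net over the unit sphere in $\mathbb R^d$ (covering-number arguments via Lemmas \ref{Lemma: covering_set} and \ref{Lemma: spectral_norm_on_net}), so that the final bound scales like $\sqrt{d\log q/N}$ rather than the naive $\sqrt{d/N}$. Keeping track of the $1/K$ factors coming from the definition of $g$ and from the block-wise aggregation is the other point that demands care, but once the scaling matches in each of (A), (B), (C), the triangle inequality yields the claim.
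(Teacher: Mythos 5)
Your decomposition into a labeled concentration error (A), an unlabeled concentration error (B), and a deterministic bias (C), with (A)/(B) handled by an $\epsilon$-net plus a sub-exponential Bernstein bound, is exactly the structure the paper uses, and the $\sqrt{d\log q/N}$ and $\sqrt{d\log q/M}$ rates with the $\lambda\delta^2/K$ and $\widetilde\lambda\widetilde\delta^2/K$ prefactors come out as you describe.

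The place you gloss over is precisely where the most care is needed: you assert that (C) ``matches the target'' without actually carrying the computation through, and if you do carry it through you will \emph{not} reproduce the lemma as stated. Your own (C) is
\[
\tfrac{\lambda}{K}\mathbb E\big[(g(\WP;\bfx)-g(\bfW^*;\bfx))\phi'(\bfw_j^T\bfx)\bfx\big]
+\tfrac{\widetilde\lambda}{K}\mathbb E\big[(g(\WP;\widetilde\bfx)-g(\widetilde\bfW;\widetilde\bfx))\phi'(\bfw_j^T\widetilde\bfx)\widetilde\bfx\big],
\]
and after the Gaussian computation the labeled piece is proportional to $\lambda\delta^2\,(\WP-\bfW^*)$ and the unlabeled piece to $\widetilde\lambda\widetilde\delta^2\,(\WP-\widetilde\bfW)$. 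That yields the bias norm $\frac{1}{K}\big\|\lambda\delta^2(\WP-\bfW^*)+\widetilde\lambda\widetilde\delta^2(\WP-\widetilde\bfW)\big\|_2$, which \emph{pairs $\lambda\delta^2$ with the $\bfW^*$-offset and $\widetilde\lambda\widetilde\delta^2$ with the $\widetilde\bfW$-offset} --- the opposite pairing to the lemma's $\lambda\delta^2(\widetilde\bfW-\WP)+\widetilde\lambda\widetilde\delta^2(\bfW^*-\WP)$. These are different vectors in general, so you cannot simply declare that they match.

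In fact the version your derivation produces is the one that is right. Plugging $\WP=p\bfW^*+(1-p)\widetilde\bfW$ into your (C) gives $\big|\lambda\delta^2(1-p)-\widetilde\lambda\widetilde\delta^2 p\big|\cdot\|\widetilde\bfW-\bfW^*\|_2/K$, which is precisely the coefficient used when the lemma is invoked in the proofs of Theorems \ref{Thm: p_convergence} and \ref{Thm: sufficient_number} (see \eqref{eqn:p_outer_loop}), and it vanishes at $p=\hat\lambda=\lambda\delta^2/(\lambda\delta^2+\widetilde\lambda\widetilde\delta^2)$ --- which is the entire point of choosing $\WP$ as that convex combination. The lemma statement's pairing gives a coefficient $\big|\lambda\delta^2\,p-\widetilde\lambda\widetilde\delta^2(1-p)\big|$ that does \emph{not} vanish at $p=\hat\lambda$ and is inconsistent with the downstream use. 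So when you finish (C), you should notice and flag the discrepancy with the stated bound rather than silently assert agreement: it is a typo in the lemma, and your derivation gives the internally consistent form.
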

\begin{lemma}[Initialization, \cite{ZSJB17}]
\label{Thm: p_initialization}
     Assuming the number of labeled data satisfies
     \begin{equation}\label{eqn: p_initial_sample}
         N \ge  p^2 N^* 
     \end{equation}
     for some large constant $q$ and $p\in[\frac{1}{K},1]$, the tensor initialization method 
    outputs  ${\bfW}^{(0,0)}$  such that
    \begin{equation}\label{eqn: p_initialization}
        \| \bfW^{(0,0)} -\bfW^* \|_F  \le \frac{\sigma_K}{p\cdot c(\kappa)\mu^{2}K}
    \end{equation}
    with probability at least $1-q^{-d}$.
\end{lemma}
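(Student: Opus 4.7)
The plan is to adapt the tensor initialization machinery of \cite{ZSJB17} (see also Subroutine 1 and the discussion in Section~3 of this chapter) to the self-training setting, where only the labeled dataset $\mathcal{D}$ is used in producing $\bfW^{(0,0)}$. The key observation is that, because all labeled samples obey $y_n = g(\bfW^*;\bfx_n)$ with $\bfx_n\sim\mathcal{N}(0,\delta^2 \bfI)$, the empirical high-order moments constructed from $\mathcal{D}$ remain unbiased estimators of the population moments $\bfM_1,\bfM_2,\bfM_3$, which carry the direction and magnitude information of $\{\bfw_j^*\}_{j=1}^K$. Hence the existing recovery guarantees apply almost verbatim; the only work is to track how the final error depends on the sample size $N$ and translate it into the factor $1/p$ in \eqref{eqn: p_initialization}.

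First I would recall from \cite{ZSJB17} the three-step structure of the initialization: (i) form the empirical estimates $\widehat{\bfM}_1,\widehat{\bfM}_2,\widehat{\bfM}_3$ using disjoint subsets of $\mathcal{D}$; (ii) extract a $K$-dimensional subspace $\widehat{\bfV}$ from $\widehat{\bfM}_2$ and decompose the projected tensor $\widehat{\bfM}_3(\widehat{\bfV},\widehat{\bfV},\widehat{\bfV})$ via the method of \cite{KCL15} to obtain unit-norm estimates $\{\widehat{\overline{\bfw}}_j^*\}$ of the ground-truth directions; (iii) estimate the magnitudes by solving the least squares problem against $\widehat{\bfM}_1$. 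Applying standard matrix/tensor concentration (Bernstein-type bounds) to each of the empirical moments, one obtains, with probability at least $1-q^{-d}$, perturbation bounds of the form $\|\widehat{\bfM}_i-\bfM_i\|\lesssim c(\kappa)\mu\sqrt{d\log q/N}$ for $i=1,2,3$ (the factor $\mu$ absorbs the scaling $\delta^2$ of the Gaussian inputs).

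Next I would plug these perturbation bounds into the stability guarantees of tensor decomposition and of the magnitude estimation system. Following the derivation in Theorem 5.6 of \cite{ZSJB17}, this yields the deterministic bound
\begin{equation*}
\|\bfW^{(0,0)}-\bfW^*\|_F\;\le\;c(\kappa)\,\mu^2\,\sqrt{\frac{K^4 d\log q}{N}}\cdot\sigma_1,
\end{equation*}
modulo harmless constant adjustments for the self-training constants (which only affect $c(\kappa)$). Substituting the hypothesis $N\ge p^2 N^*=p^2\,c(\kappa)\mu^{\ast 2}K^3 d\log q$ and using $\sigma_1\le\kappa\sigma_K$ collapses the right-hand side to $\sigma_K/(p\cdot c(\kappa)\mu^2 K)$, which is exactly \eqref{eqn: p_initialization}.

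The main obstacle I anticipate is less in the tensor-decomposition step—whose stability is by now standard—and more in verifying that the condition $p\in[1/K,1]$ is really what is needed in the denominator of \eqref{eqn: p_initial_sample}. Concretely, the square-root sample dependence naturally produces an error bound proportional to $1/\sqrt{N/N^*}$; matching this with $1/p$ as stated requires $N\ge p^2 N^*$, and one must check that this pairing interacts correctly with the feasible range of $\hat\lambda$ used in Theorems \ref{Thm: p_convergence} and \ref{Thm: sufficient_number} (in particular, the lower bound $p>1/2$ assumed in \eqref{eqn: definition_p}). The second, more technical, subtlety is to ensure that the constant $c(\kappa)$ absorbing the condition number and the incoherence factor $\mu$ here is consistent with the constants appearing in Lemmas \ref{Lemma: p_second_order_derivative bound} and \ref{Lemma: p_first_order_derivative}, so that the bound \eqref{eqn: p_initialization} directly certifies the initial condition \eqref{eqn: p_initial} in Lemma \ref{Lemma: p_second_order_derivative bound}. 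Once these accounting checks are in place, the result follows.
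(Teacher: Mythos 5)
Your proposal is essentially correct and follows the same route as the paper, which in fact does not spell out a proof at all but simply invokes the tensor-initialization guarantee of \cite{ZSJB17} as a black box (the lemma is labeled ``[Initialization, \cite{ZSJB17}]'' and the appendix lists it among ``preliminary lemmas'' without proof). Your reconstruction — unbiased empirical moments $\widehat{\bfM}_1,\widehat{\bfM}_2,\widehat{\bfM}_3$ from the labeled data, Bernstein-type concentration giving $O(\sqrt{d\log q/N})$ perturbations, stability of the Kuleshov--Chaganty--Liang decomposition and of the magnitude least-squares step, and then substitution of $N\ge p^2 N^*$ to produce the $1/p$ factor — is exactly the intended derivation, and the algebra you sketch does yield a bound of the form $\sigma_K/(p\,c(\kappa)\mu^2 K)$ once the $\kappa$ and $K$ powers are absorbed into the redefinable $c(\kappa)$ and the definition of $N^*=c(\kappa)\mu^{*2}K^3 d\log q$. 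Two small points worth making explicit if you wrote this up: the inputs in this chapter follow $\mathcal{N}(0,\delta^2\bfI)$ rather than $\mathcal{N}(0,\bfI)$, so the constants $\psi_1,\psi_2,\psi_3$ and hence the moments scale with $\delta$, which is what $\mu=\mu(\delta,\widetilde{\delta})$ is designed to absorb; and the constant $c(\kappa)$ in \eqref{eqn: p_initialization} need not coincide with the one in $N^*$ — the statement only asserts existence of some polynomial-in-$\kappa$ constant, which is how the loose power bookkeeping you flag is resolved.
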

\begin{lemma}[Weyl's inequality, \cite{B97}] \label{Lemma: weyl}
Suppose $\bfB =\bfA + \bfE$ be a matrix with dimension $m\times m$. Let $\lambda_i(\bfB)$ and $\lambda_i(\bfA)$ be the $i$-th largest eigenvalues of $\bfB$ and $\bfA$, respectively.  Then, we have 
\begin{equation}
    |\lambda_i(\bfB) - \lambda_i(\bfA)| \le \|\bfE \|_2, \quad \forall \quad i\in [m].
\end{equation}
\end{lemma}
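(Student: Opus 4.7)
The plan is to prove Weyl's inequality via the Courant--Fischer min-max characterization of eigenvalues for Hermitian matrices. First, I would recall that for any Hermitian matrix $\bfM \in \mathbb{R}^{m \times m}$ the $i$-th largest eigenvalue admits the variational form
\begin{equation*}
\lambda_i(\bfM) \;=\; \max_{\substack{S \subseteq \mathbb{R}^m \\ \dim S = i}} \; \min_{\substack{x \in S \\ \|x\|_2 = 1}} x^\top \bfM x,
\end{equation*}
and equivalently the dual form with $\min$ over $(m-i+1)$-dimensional subspaces and $\max$ over unit vectors. This variational description is the standard workhorse for eigenvalue perturbation arguments.

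Next, I would exploit the additive structure $\bfB = \bfA + \bfE$ to write $x^\top \bfB x = x^\top \bfA x + x^\top \bfE x$ for every unit vector $x$. The key observation is that $|x^\top \bfE x| \le \|\bfE\|_2$ uniformly over all unit $x$, since for Hermitian $\bfE$ the spectral norm coincides with the largest absolute eigenvalue, and in the non-Hermitian case one still has $|x^\top \bfE x|\le \|\bfE\|_2$ by Cauchy--Schwarz. Fixing an arbitrary $i$-dimensional subspace $S$, this pointwise bound transfers to
\begin{equation*}
\min_{x \in S,\,\|x\|_2=1} x^\top \bfB x \;\ge\; \min_{x \in S,\,\|x\|_2=1} x^\top \bfA x \;-\; \|\bfE\|_2.
\end{equation*}

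Then I would maximize both sides over all $i$-dimensional subspaces $S$ and invoke the min-max formula to conclude $\lambda_i(\bfB) \ge \lambda_i(\bfA) - \|\bfE\|_2$. The reverse inequality $\lambda_i(\bfB) \le \lambda_i(\bfA) + \|\bfE\|_2$ follows symmetrically by swapping the roles of $\bfA$ and $\bfB$ (equivalently, by replacing $\bfE$ with $-\bfE$, noting $\|-\bfE\|_2 = \|\bfE\|_2$). Combining the two bounds yields the desired $|\lambda_i(\bfB) - \lambda_i(\bfA)| \le \|\bfE\|_2$ for all $i \in [m]$.

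The main obstacle is essentially bookkeeping rather than mathematical depth, since this is a classical textbook result. The one subtle point worth flagging is that the clean form of the inequality relies on the matrices being Hermitian so that their eigenvalues are real and admit a total order; for general non-Hermitian $\bfA, \bfB$ the stated inequality can fail, and one must instead work with singular values (and use the analogous min-max characterization $\sigma_i(\bfM) = \max_{\dim S = i} \min_{x \in S,\,\|x\|=1}\|\bfM x\|_2$) or pass to the Hermitian dilation $\begin{bmatrix}0 & \bfM \\ \bfM^\top & 0\end{bmatrix}$ and apply the eigenvalue version. Since the lemma is invoked in the paper to bound eigenvalue perturbations of symmetric Hessians, the Hermitian case is exactly what is needed.
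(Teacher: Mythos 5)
Your proof is correct and is the standard Courant--Fischer argument for Weyl's inequality; the paper itself does not prove this lemma but simply cites Bhatia's \emph{Matrix Analysis}, where essentially this same min-max argument appears. You are also right to flag the implicit Hermitian/symmetric hypothesis: the lemma as stated in the paper omits it, but it is needed for the eigenvalues to be real and totally ordered, and it does hold in the paper's application (perturbations of symmetric Hessians and of symmetric block-Hankel matrices).
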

\begin{lemma}[\cite{T12}, Theorem 1.6]\label{prob}
	Consider a finite sequence $\{{\bfZ}_k\}$ of independent, random matrices with dimensions $d_1\times d_2$. Assume that such random matrix satisfies
	\begin{equation*}
	\vspace{-2mm}
	\mathbb{E}({\bfZ_k})=0\quad \textrm{and} \quad \left\|{\bfZ_k}\right\|\le R \quad \textrm{almost surely}.	
	\end{equation*}
	Define
	\begin{equation*}
	\vspace{-2mm}
	\delta^2:=\max\Big\{\Big\|\sum_{k}\mathbb{E}({\bfZ}_k{\bfZ}_k^*)\Big\|,\Big\|\displaystyle\sum_{k}\mathbb{E}({\bfZ}_k^*{\bfZ}_k)\Big\|\Big\}.
	\end{equation*}
	Then for all $t\ge0$, we have
	\begin{equation*}
	\text{Prob}\Bigg\{ \left\|\displaystyle\sum_{k}{\bfZ}_k\right\|\ge t \Bigg\}\le(d_1+d_2)\exp\Big(\frac{-t^2/2}{\delta^2+Rt/3}\Big).
	\end{equation*}
\end{lemma}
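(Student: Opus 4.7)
The plan is to prove this matrix Bernstein-type tail inequality by the matrix Laplace transform method together with Lieb's concavity theorem, following the argument in \cite{T12}. Since the statement is quoted verbatim as Theorem 1.6 of that reference, the proof merely retraces Tropp's three-step template, and one may alternatively invoke the cited result as a black box.

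First, I would reduce to the Hermitian case via the standard dilation trick: embed the rectangular sum $\bfS = \sum_k \bfZ_k$ into the $(d_1 + d_2) \times (d_1 + d_2)$ Hermitian block matrix $\bfH$ whose off-diagonal blocks are $\bfS$ and $\bfS^H$ and whose diagonal blocks vanish. One verifies that $\lambda_{\max}(\bfH) = \|\bfS\|$, so it suffices to control the largest eigenvalue of the Hermitian dilation; the enlarged dimension is the source of the factor $d_1 + d_2$ in the final bound. A matrix Markov inequality applied to $e^{\theta \bfH}$ then yields $\text{Prob}\{\lambda_{\max}(\bfH) \geq t\} \leq \inf_{\theta > 0} e^{-\theta t}\, \mathbb{E}\,\text{tr}\exp(\theta \bfH)$ for any positive $\theta$ in the admissible range.

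Second, I would invoke Lieb's concavity theorem to decouple the matrix moment generating function across independent summands. This is the crucial nonclassical ingredient: by concavity of the map $\bfA \mapsto \text{tr}\exp(\bfC + \log \bfA)$ on positive definite matrices, one can pull the expectation inside the trace exponential and obtain a subadditive bound on the matrix cumulant generating function. Combined with the uniform bound $\|\bfZ_k\| \leq R$ and the variance proxy $\delta^2$, this yields a scalar-style Bennett estimate $\log \mathbb{E}\exp(\theta \widetilde{\bfZ}_k) \preceq \frac{\theta^2/2}{1 - R\theta/3} \,\mathbb{E}\,\widetilde{\bfZ}_k^2$ in the Loewner order for $\theta \in (0, 3/R)$, where $\widetilde{\bfZ}_k$ denotes the Hermitian dilation of $\bfZ_k$.

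Finally, I would aggregate over $k$, take the trace, apply the dimension bound $\text{tr}\exp(\cdot) \leq (d_1 + d_2)\lambda_{\max}\exp(\cdot)$, and optimize over $\theta$ to recover the stated tail $(d_1 + d_2)\exp\!\bigl(-t^2/(2(\delta^2 + Rt/3))\bigr)$. The main technical obstacle is Lieb's concavity theorem, which has no elementary proof and underlies the entire matrix-concentration machinery; however, since the lemma is imported from \cite{T12} exactly as stated, one may simply cite Tropp's result rather than reprove it, and the remaining steps reduce to dilation, matrix Markov, and scalar optimization over $\theta$.
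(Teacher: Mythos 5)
Your proposal is correct: the paper imports this lemma directly from \cite{T12} without reproving it, and your recommendation to cite Tropp's result as a black box is exactly what the paper does. Your sketch of the underlying argument (Hermitian dilation giving the factor $d_1+d_2$, matrix Laplace transform via Markov's inequality, Lieb's concavity theorem to decouple independent summands, a Bennett-type moment bound, and optimization over $\theta$) accurately reflects the structure of Tropp's proof, so there is nothing to add.
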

\begin{defi}[Definition 5.7, \cite{V2010}]\label{Def: sub-gaussian}
	A random variable $X$ is called a sub-Gaussian random variable if it satisfies 
	\begin{equation}
		(\mathbb{E}|X|^{p})^{1/p}\le c_1 \sqrt{p}
	\end{equation} for all $p\ge 1$ and some constant $c_1>0$. In addition, we have 
	\begin{equation}
		\mathbb{E}e^{s(X-\mathbb{E}X)}\le e^{c_2\|X \|_{\psi_2}^2 s^2}
	\end{equation} 
	for all $s\in \mathbb{R}$ and some constant $c_2>0$, where $\|X \|_{\phi_2}$ is the sub-Gaussian norm of $X$ defined as $\|X \|_{\psi_2}=\sup_{p\ge 1}p^{-1/2}(\mathbb{E}|X|^{p})^{1/p}$.

	Moreover, a random vector $\bfX\in \mathbb{R}^d$ belongs to the sub-Gaussian distribution if one-dimensional marginal $\boldsymbol{\alpha}^T\bfX$ is sub-Gaussian for any $\boldsymbol{\alpha}\in \mathbb{R}^d$, and the sub-Gaussian norm of $\bfX$ is defined as $\|\bfX \|_{\psi_2}= \sup_{\|\boldsymbol{\alpha} \|_2=1}\|\boldsymbol{\alpha}^T\bfX \|_{\psi_2}$.
\end{defi}

\begin{defi}[Definition 5.13, \cite{V2010}]\label{Def: sub-exponential}
	
	A random variable $X$ is called a sub-exponential random variable if it satisfies 
	\begin{equation}
	(\mathbb{E}|X|^{p})^{1/p}\le c_3 {p}
	\end{equation} for all $p\ge 1$ and some constant $c_3>0$. In addition, we have 
	\begin{equation}
	\mathbb{E}e^{s(X-\mathbb{E}X)}\le e^{c_4\|X \|_{\psi_1}^2 s^2}
	\end{equation} 
	for $s\le 1/\|X \|_{\psi_1}$ and some constant $c_4>0$, where $\|X \|_{\psi_1}$ is the sub-exponential norm of $X$ defined as $\|X \|_{\psi_1}=\sup_{p\ge 1}p^{-1}(\mathbb{E}|X|^{p})^{1/p}$.
\end{defi}
\begin{lemma}[Lemma 5.2, \cite{V2010}]\label{Lemma: covering_set}
	Let $\mathcal{B}(0, 1)\in\{ \boldsymbol{\alpha} \big| \|\boldsymbol{\alpha} \|_2=1, \boldsymbol{\alpha}\in \mathbb{R}^d  \}$ denote a unit ball in $\mathbb{R}^{d}$. Then, a subset $\mathcal{S}_\xi$ is called a $\xi$-net of $\mathcal{B}(0, 1)$ if every point $\bfz\in \mathcal{B}(0, 1)$ can be approximated to within $\xi$ by some point $\boldsymbol{\alpha}\in \mathcal{B}(0, 1)$, i.e., $\|\bfz -\boldsymbol{\alpha} \|_2\le \xi$. Then the minimal cardinality of a   $\xi$-net $\mathcal{S}_\xi$ satisfies
	\begin{equation}
	|\mathcal{S}_{\xi}|\le (1+2/\xi)^d.
	\end{equation}
\end{lemma}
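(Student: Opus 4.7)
The plan is to establish the bound via a classical volume-comparison argument. First I would replace the notion of a minimal $\xi$-net with that of a maximal $\xi$-separated subset $\mathcal{S}_\xi \subseteq \mathcal{B}(0,1)$, i.e., a set such that $\|\boldsymbol{\alpha}_i - \boldsymbol{\alpha}_j\|_2 > \xi$ for every distinct pair $\boldsymbol{\alpha}_i, \boldsymbol{\alpha}_j \in \mathcal{S}_\xi$ and such that no further point of $\mathcal{B}(0,1)$ can be added while preserving this separation property. By maximality, any $\bfz \in \mathcal{B}(0,1)$ must lie within distance $\xi$ of some element of $\mathcal{S}_\xi$ (otherwise one could adjoin $\bfz$), so $\mathcal{S}_\xi$ is automatically a $\xi$-net; hence an upper bound on the size of the maximal separated set upper-bounds the minimal $\xi$-net cardinality.

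Next I would exploit the separation to run a packing argument. Since the points in $\mathcal{S}_\xi$ are pairwise at distance greater than $\xi$, the open Euclidean balls $\mathcal{B}(\boldsymbol{\alpha}_i, \xi/2)$ are pairwise disjoint. Moreover, because each $\boldsymbol{\alpha}_i$ lies in $\mathcal{B}(0,1)$, all these small balls are contained in the enlarged ball $\mathcal{B}(0, 1 + \xi/2)$. Taking $d$-dimensional Lebesgue volumes and using the scaling $\mathrm{vol}(\mathcal{B}(0,r)) = r^d \cdot \mathrm{vol}(\mathcal{B}(0,1))$, disjointness yields
\begin{equation*}
|\mathcal{S}_\xi| \cdot (\xi/2)^d \cdot \mathrm{vol}(\mathcal{B}(0,1)) \;\leq\; (1 + \xi/2)^d \cdot \mathrm{vol}(\mathcal{B}(0,1)),
\end{equation*}
and cancelling the common factor gives $|\mathcal{S}_\xi| \leq (1 + 2/\xi)^d$, which is the claimed bound.

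The argument is essentially routine once the reduction from covering to packing is in place, so I do not anticipate any serious obstacle. The only subtle point worth being careful about is the precise definition of $\mathcal{B}(0,1)$ in the statement (the displayed set describes the unit sphere, whereas the volume argument requires the solid unit ball); I would clarify at the outset that the intended object is the closed Euclidean ball $\{\boldsymbol{\alpha} : \|\boldsymbol{\alpha}\|_2 \leq 1\}$, and note that the spherical case follows a fortiori since the sphere is contained in the ball. A minor secondary issue is that a maximal $\xi$-separated set exists in a compact set by a standard Zorn/greedy construction, which I would mention in one line rather than belabor.
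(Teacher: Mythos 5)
Your proof is correct and is precisely the standard volume-comparison argument given in the cited reference (Vershynin, Lemma 5.2), reducing the minimal covering number to the size of a maximal $\xi$-separated packing and then comparing Lebesgue volumes of the disjoint $\xi/2$-balls to the enlarged ball of radius $1+\xi/2$. You are also right to flag that the displayed set $\{\boldsymbol{\alpha} : \|\boldsymbol{\alpha}\|_2 = 1\}$ is the unit sphere rather than the solid ball the lemma name and volume argument require; this is a typo carried over from the paper, and your a fortiori remark handles it cleanly.
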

\begin{lemma}[Lemma 5.3, \cite{V2010}]\label{Lemma: spectral_norm_on_net}
	Let $\bfA$ be an $d_1\times d_2$ matrix, and let $\mathcal{S}_{\xi}(d)$ be a $\xi$-net of $\mathcal{B}(0, 1)$ in $\mathbb{R}^d$ for some $\xi\in (0, 1)$. Then
	\begin{equation}
	\|\bfA\|_2 \le (1-\xi)^{-1}\max_{\boldsymbol{\alpha}_1\in \mathcal{S}_{\xi}(d_1), \boldsymbol{\alpha}_2\in \mathcal{S}_{\xi}(d_2)} |\boldsymbol{\alpha}_1^T\bfA\boldsymbol{\alpha}_2|.
	\end{equation} 
\end{lemma}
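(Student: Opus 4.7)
The plan is to reduce the spectral norm, which is defined as a supremum over the continuous unit spheres in $\mathbb{R}^{d_1}$ and $\mathbb{R}^{d_2}$, to a maximum over the discrete $\xi$-nets, paying a multiplicative price controlled by $\xi$. Concretely, I would start from the identity $\|\bfA\|_2 = \sup_{\|\bfu\|_2 = 1,\, \|\bfv\|_2 = 1} |\bfu^T \bfA \bfv|$ and, for any such near-maximizing $\bfu, \bfv$, invoke the net property from Lemma \ref{Lemma: covering_set} to pick $\boldsymbol{\alpha}_1 \in \mathcal{S}_\xi(d_1)$ and $\boldsymbol{\alpha}_2 \in \mathcal{S}_\xi(d_2)$ with $\|\bfu - \boldsymbol{\alpha}_1\|_2 \le \xi$ and $\|\bfv - \boldsymbol{\alpha}_2\|_2 \le \xi$.

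The key algebraic step would be the splitting
\[
\bfu^T \bfA \bfv \;=\; \boldsymbol{\alpha}_1^T \bfA \boldsymbol{\alpha}_2 \;+\; (\bfu - \boldsymbol{\alpha}_1)^T \bfA \bfv \;+\; \boldsymbol{\alpha}_1^T \bfA (\bfv - \boldsymbol{\alpha}_2),
\]
from which the triangle inequality plus the operator-norm bounds $|(\bfu - \boldsymbol{\alpha}_1)^T \bfA \bfv| \le \xi \|\bfA\|_2$ and $|\boldsymbol{\alpha}_1^T \bfA (\bfv - \boldsymbol{\alpha}_2)| \le \xi \|\bfA\|_2$ yield $|\bfu^T \bfA \bfv| \le M + 2\xi \|\bfA\|_2$, where $M$ denotes the maximum on the right-hand side of the stated inequality. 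Taking the supremum over $\bfu, \bfv$ and rearranging gives $\|\bfA\|_2 \le (1 - 2\xi)^{-1} M$.

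To actually obtain the tighter constant $(1-\xi)^{-1}$ claimed in the statement, I would refine the above: rather than approximating both $\bfu$ and $\bfv$ simultaneously, I would first absorb one approximation into a bilinear supremum that is already known to equal $\|\bfA\|_2$, then approximate only one side, which costs a single factor of $\xi \|\bfA\|_2$ instead of $2\xi \|\bfA\|_2$. Equivalently, one can use the symmetric dilation trick (forming a symmetric block matrix whose spectral norm equals $\|\bfA\|_2$) and apply the quadratic-form version of the net argument, where only one covering is needed.

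The main obstacle is precisely this constant-sharpening step: the straightforward two-sided approximation yields $(1-2\xi)^{-1}$, so care is needed to trim the extra factor of two. Since this is a standard result quoted from \cite{V2010}, I would simply cite the reference for the sharp constant after laying out the two-sided argument above, which already establishes the qualitative content of the lemma and suffices for all downstream applications in this paper.
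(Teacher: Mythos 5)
Your core argument is correct as far as it goes: the decomposition $\bfu^T\bfA\bfv = \boldsymbol{\alpha}_1^T\bfA\boldsymbol{\alpha}_2 + (\bfu-\boldsymbol{\alpha}_1)^T\bfA\bfv + \boldsymbol{\alpha}_1^T\bfA(\bfv-\boldsymbol{\alpha}_2)$ does give $\|\bfA\|_2 \le (1-2\xi)^{-1}M$ where $M$ is the net maximum, and you are right to flag that $(1-\xi)^{-1}$ is not what this argument produces. What is actually going on is that the paper has misquoted \cite{V2010}: Vershynin's Lemma 5.3 is the \emph{one-sided} statement $\|\bfA\|_2 \le (1-\xi)^{-1}\max_{\boldsymbol{\alpha}_2\in\mathcal{S}_\xi(d_2)}\|\bfA\boldsymbol{\alpha}_2\|_2$, where the left supremum is still over the full sphere, and the constant $(1-\xi)^{-1}$ belongs to that version. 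The two-sided bilinear form stated here inherits the one-sided constant, which is not correct. Neither of your proposed fixes actually recovers $(1-\xi)^{-1}$: if you absorb one side into the continuous supremum you get $(1-\xi)^{-1}\max_{\boldsymbol{\alpha}_2}\|\bfA\boldsymbol{\alpha}_2\|_2$, but discretizing the remaining inner supremum costs another $(1-\xi)^{-1}$, so the total is $(1-\xi)^{-2}$; and the symmetric dilation reduces to Vershynin's Lemma 5.4 for quadratic forms, whose constant is precisely $(1-2\xi)^{-1}$. So the honest two-sided constants are $(1-2\xi)^{-1}$ (your argument) or $(1-\xi)^{-2}$ (two one-sided applications), with the latter slightly sharper for $\xi\in(0,1/2)$. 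None of this matters downstream, since the paper only uses the lemma with $\xi$ a fixed universal constant and only up to constant factors, but you should be aware that the stated constant $(1-\xi)^{-1}$ is not provable and the correct course is exactly what you propose: prove the $(1-2\xi)^{-1}$ version and use that.
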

\section{Proof of Theorem \ref{Thm: sufficient_number}}
\label{sec: sufficient_number}
With $p=1$ in \eqref{eqn: p_population_risk_function}, the \textit{population risk function} is reduced as 
\begin{equation}\label{eqn: population_risk_function}
    f(\bfW) =  \frac{\lambda}{2} \mathbb{E}_{\bfx} (y - g(\bfW;\bfx)) + \frac{\w[\lambda]}{2} \mathbb{E}_{\w[\bfx]} (\w[y]^* - g(\bfW;\w[\bfx])),
\end{equation}
where $y=g(\bfW^*;\bfx)$ with $\bfx\sim \mathcal{N}(0,\delta^2\bfI)$ and $\w[y]^* = g(\bfW^*;\w[\bfx])$ with $\w[\bfx]\sim \mathcal{N}(0,\de^2\bfI)$. In fact, \eqref{eqn: population_risk_function} can be viewed as the expectation of the \textit{empirical risk function} in \eqref{eqn: empirical_risk_function} given $\w[y]_m = g(\bfW^*;\w[\bfx]_m)$. Moreover, the ground-truth model $\bfW^*$ is the global optimal to \eqref{eqn: population_risk_function} as well. Lemmas \ref{lemma: population_second_order_derivative} and \ref{Lemma: first_order_distance} are the special case of Lemmas \ref{Lemma: p_second_order_derivative bound} and \ref{Lemma: p_first_order_derivative} with $p=1$. The proof of Theorem \ref{Thm: sufficient_number} is followed by the presentation of the two lemmas.

The main idea in proving Theorem \ref{Thm: sufficient_number} is to characterize the gradient descent term by \textit{intermediate value theorem} (IVT) as shown in \eqref{eqn: Thm3_temp1} and \eqref{eqn: Thm3_temp2}. The IVT is not directly applied in the empirical risk function because of its non-smoothness. However, the population risk functions defined in \eqref{eqn: p_population_risk_function} and \eqref{eqn: population_risk_function}, which are the expectations over the Gaussian variables, are smooth. Then, as the distance $\|\nabla f (\bfW) - \nabla f(\bfW^*) \|_F$ is upper bounded by a linear function of $\|\bfW - \bfW^*\|_F$ as shown in \eqref{eqn:eta1}, we can establish the connection between $\|\W[\ell,t+1] -\bfW^* \|_F$  and $\|\W[\ell,t]-\bfW^*\|_F$ as shown in \eqref{eqn: induction_t}. Final, by mathematical induction over $\ell$ and $t$, one can characterize $\|\W[L,0]-\bfW^*\|_F$ by $\| \W[0,0] -\bfW^* \|_F$ as shown in \eqref{eqn:outer_loop}, which completes the whole proof.

\begin{lemma}[Lemma \ref{Lemma: p_second_order_derivative bound} with $p=1$]
\label{lemma: population_second_order_derivative}
    Let $f$ and $\hat{f}$ are the functions defined in \eqref{eqn: population_risk_function} and $\eqref{eqn: empirical_risk_function}$, respectively. Then, for any $\bfW$ that satisfies
    \begin{equation}\label{eqn: initial_point}
        \|\bfW-\bfW^*\|_F \le \frac{\sigma_K}{\mu^2 K},
    \end{equation}
    we have 
    \begin{equation}
        \frac{\lambda\rho(\delta)+\w[\lambda]\rho(\de)}{11\kappa^2\gamma K^2} \preceq \nabla^2 {f}(\bfW) \preceq \frac{7(\lambda\delta^2+\w[\lambda]\de^2)}{K}.
    \end{equation}
\end{lemma}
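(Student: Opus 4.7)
The plan is to decompose $f$ into its labeled and unlabeled contributions and to leverage the well-known local strong-convexity bound of the population loss of a one-hidden-layer ReLU network with isotropic Gaussian input, a result established in prior work cited in the paper such as \cite{ZSJB17,ZWLC20_2}. Specifically, I would write $f(\bfW) = \lambda f_\delta(\bfW) + \widetilde{\lambda} f_{\de}(\bfW)$, where $f_\delta(\bfW) = \tfrac{1}{2}\mathbb{E}_{\bfx\sim\mathcal{N}(\mathbf{0},\delta^2 \bfI)}(g(\bfW^*;\bfx)-g(\bfW;\bfx))^2$ and analogously for $f_{\de}$; since Hessians are additive, it suffices to bound each term separately.

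For each term I would exploit the positive $1$-homogeneity of ReLU, i.e.\ $g(\bfW;\delta\bfz)=\delta\, g(\bfW;\bfz)$, to reduce $f_\delta$ to the standard-Gaussian population loss via the change of variable $\bfx=\delta\bfz$. This gives $f_\delta(\bfW)=\tfrac{\delta^2}{2}\mathbb{E}_{\bfz\sim\mathcal{N}(\mathbf{0},\bfI)}(g(\bfW^*;\bfz)-g(\bfW;\bfz))^2$, so $\nabla^2 f_\delta = \delta^2\,\nabla^2 F_1$ where $F_1$ is the unit-variance population loss. Applying the existing Hessian bounds for $F_1$ at the ground truth gives $\tfrac{c_1}{\kappa^2\gamma K^2}\bfI \preceq \nabla^2 F_1(\bfW^*) \preceq \tfrac{c_2}{K}\bfI$. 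Multiplying by $\delta^2$ and matching $\rho(\delta)=\Theta(\delta^2)$ for ReLU (the remark after the definition of $\mu$), then summing the two contributions weighted by $\lambda$ and $\widetilde{\lambda}$, one obtains
\begin{equation*}
\frac{\lambda\rho(\delta)+\widetilde{\lambda}\rho(\de)}{11\kappa^2\gamma K^2}\bfI \;\preceq\; \nabla^2 f(\bfW^*) \;\preceq\; \frac{7(\lambda\delta^2+\widetilde{\lambda}\de^2)}{K}\bfI.
\end{equation*}

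To complete the proof I would extend the bound from $\bfW^*$ to the whole neighborhood $\|\bfW-\bfW^*\|_F\le \sigma_K/(\mu^2 K)$ by a perturbation argument. Although $\nabla^2 \hat{f}$ is discontinuous due to ReLU, the Gaussian expectation in $f$ smooths out the indicator, so one gets a Lipschitz-type bound of the form $\|\nabla^2 f(\bfW)-\nabla^2 f(\bfW^*)\|_2 \lesssim \tfrac{\lambda\delta^2+\widetilde{\lambda}\de^2}{\sigma_K}\|\bfW-\bfW^*\|_F$, proved by Gaussian integration by parts. Plugging in the neighborhood radius together with the definition $\mu^2=(\lambda\delta^2+\widetilde{\lambda}\de^2)/(\lambda\rho(\delta)+\widetilde{\lambda}\rho(\de))$ shows that this perturbation is a small constant fraction of the lower eigenvalue bound, so the two-sided eigenvalue bounds survive with the stated constants $\tfrac{1}{11}$ and $7$.

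The main obstacle will be producing a perturbation estimate tight enough in $\mu$, $\kappa$, $\gamma$, and $K$. A naive bound on $\|\nabla^2 f(\bfW)-\nabla^2 f(\bfW^*)\|_2$ easily loses factors of $K$ or $\kappa$, which would force the neighborhood to shrink and break the claimed radius $\sigma_K/(\mu^2 K)$. The key technical point is that the definition of $\mu$ is precisely calibrated so that the perturbation balances the lower bound; making this balance rigorous requires a careful term-by-term control of the pieces $\mathbb{E}[\mathbf{1}\{\bfw_j^T\bfx\ge 0\}\bfx\bfx^T]$ that appear in $\nabla^2 f$, rather than appealing to a generic Lipschitz constant.
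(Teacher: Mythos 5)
Your homogeneity reduction is a genuine and clean alternative to what the paper does: because ReLU is positively $1$-homogeneous, $g(\bfW;\delta\bfz)=\delta\,g(\bfW;\bfz)$, so $f(\bfW)=\lambda f_\delta(\bfW)+\w[\lambda] f_{\de}(\bfW)=(\lambda\delta^2+\w[\lambda]\de^2)F_1(\bfW)$ where $F_1$ is the unit-variance population loss, hence $\nabla^2 f = (\lambda\delta^2+\w[\lambda]\de^2)\nabla^2 F_1$. The paper instead carries the $\delta$-variance Gaussian through the whole \cite{ZSJB17}-style Hessian calculation, which is why it defines $\rho(\delta)$ directly via the moments $H_r(\delta),J_r(\delta)$ in Definition~\ref{defi: rho}. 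Your route recovers the lower bound because, for ReLU, the third entry of the minimum in \eqref{eqn: rho} equals $\delta^2(1/4-1/(2\pi))=\delta^2\rho(1)$, so $\rho(\delta)\le\delta^2\rho(1)$ identically and therefore $\lambda\rho(\delta)+\w[\lambda]\rho(\de)\le(\lambda\delta^2+\w[\lambda]\de^2)\rho(1)$; this is what actually justifies the substitution, not the asymptotic ``$\rho(\delta)=\Theta(\delta^2)$''. Note also that once you invoke the neighborhood form of the \cite{ZSJB17} Hessian bound (not just the pointwise bound at $\bfW^*$), the identity $\nabla^2 f = (\lambda\delta^2+\w[\lambda]\de^2)\nabla^2 F_1$ transfers that neighborhood for free, so a separate perturbation step is at best redundant.

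The genuine gap is in the perturbation step you do include, and it is worse than your hedge suggests. With the Lipschitz constant $(\lambda\delta^2+\w[\lambda]\de^2)/\sigma_K$ and radius $\sigma_K/(\mu^2K)$, the perturbation evaluates, after substituting $\mu^2=(\lambda\delta^2+\w[\lambda]\de^2)/(\lambda\rho(\delta)+\w[\lambda]\rho(\de))$, to $(\lambda\rho(\delta)+\w[\lambda]\rho(\de))/K$, which is $11\kappa^2\gamma K$ times the claimed smallest eigenvalue $(\lambda\rho(\delta)+\w[\lambda]\rho(\de))/(11\kappa^2\gamma K^2)$. That is not a small constant fraction; it would destroy positive definiteness. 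Your proposed resolution — that ``$\mu$ is precisely calibrated so that the perturbation balances the lower bound'' — cannot work because $\mu$ depends only on $\delta,\de,\lambda,\w[\lambda]$ and has no $\kappa$, $\gamma$, or $K$ dependence; it cannot cancel a $\kappa^2\gamma K$ discrepancy. To make this route rigorous you would need a Hessian perturbation bound that itself gains a factor of $1/(\kappa^2\gamma K)$, i.e., something of the order $\frac{\lambda\delta^2+\w[\lambda]\de^2}{\kappa^2\gamma K\,\sigma_K}\|\bfW-\bfW^*\|_F$, which Gaussian integration by parts alone will not produce; it requires the term-by-term control of the conditional covariance blocks $\mathbb{E}[\mathbf{1}\{\bfw_j^T\bfx\ge0\}\mathbf{1}\{\bfw_k^T\bfx\ge0\}\bfx\bfx^T]$ that you flag but do not carry out.
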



 \begin{lemma}[Lemma \ref{Lemma: p_first_order_derivative} with $p =1$]
 \label{Lemma: first_order_distance}
   Let $f$ and $\hat{f}$ be the functions defined in \eqref{eqn: population_risk_function} and \eqref{eqn: empirical_risk_function}, respectively. Suppose the pseudo label is generated through \eqref{eqn: psedu_label} with weights $\widetilde{\bfW}$. Then, we have
      \begin{equation}
   \begin{split}
        \|\nabla f(\bfW) -\nabla \hat{f}(\bfW) \|_2
        \lesssim&
       \Big(\frac{\lambda\delta^2}{K} \sqrt{\frac{d\log q}{N}} + \frac{(1- \lambda)\de^2}{K} \sqrt{\frac{d\log q}{M}}\Big)\cdot \| \bfW -\bfW^* \|_2\\
       & + \frac{(1- \lambda)\de^2}{K}\Big( \sqrt{\frac{d\log q}{M}}+\frac{1}{2} \Big)\cdot\|\w[\bfW]-\bfW^* \|_2
   \end{split}
    \end{equation}
    with probability at least $1-q^{-d}$.
\end{lemma}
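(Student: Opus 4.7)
The goal is a concentration-type bound on $\|\nabla f(\bfW)-\nabla\hat{f}(\bfW)\|_2$, where the empirical risk uses pseudo-labels $\widetilde{y}_m = g(\widetilde{\bfW};\widetilde{\bfx}_m)$ but the population risk uses the (hypothetical) true labels $g(\bfW^*;\widetilde{\bfx})$. The plan is to decompose the deviation into a labeled part and an unlabeled part, and for the unlabeled part to further isolate a \emph{bias} coming from the mismatch between $\widetilde{\bfW}$ and $\bfW^*$ from a pure concentration fluctuation. Writing $\hat{f}=\lambda\hat{f}_\ell+\widetilde{\lambda}\hat{f}_u$ and similarly for $f$, I would bound $\|\nabla f-\nabla\hat{f}\|_2$ by
\[
\lambda\,\|\nabla f_\ell-\nabla\hat{f}_\ell\|_2 \;+\; \widetilde{\lambda}\,\|\nabla f_u-\mathbb{E}\nabla\hat{f}_u\|_2 \;+\; \widetilde{\lambda}\,\|\mathbb{E}\nabla\hat{f}_u-\nabla\hat{f}_u\|_2,
\]
and then bound each of the three summands separately.

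\textbf{Labeled term.} Both $f_\ell$ and $\hat{f}_\ell$ use true labels $y=g(\bfW^*;\bfx)$, so $\mathbb{E}\nabla\hat{f}_\ell=\nabla f_\ell$. The per-sample gradient contribution for the $j$-th neuron has the form $\frac{1}{K}\big(g(\bfW;\bfx)-g(\bfW^*;\bfx)\big)\phi'(\bfw_j^T\bfx)\bfx$. Since $g(\bfW;\bfx)-g(\bfW^*;\bfx)$ is $\delta\|\bfW-\bfW^*\|_F$-sub-Gaussian (as a sum of ReLUs of Gaussians), multiplying by $\phi'(\bfw_j^T\bfx)\bfx$ gives a sub-exponential random vector with norm controlled by $\delta^2\|\bfW-\bfW^*\|_2/K$. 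I would then apply an $\varepsilon$-net argument on the unit ball in $\R^{Kd}$ (Lemmas~\ref{Lemma: covering_set}, \ref{Lemma: spectral_norm_on_net}) combined with Bernstein's inequality for the scalar $\boldsymbol{\alpha}^T\big(\nabla\hat{f}_\ell-\nabla f_\ell\big)$ for each $\boldsymbol{\alpha}$ in the net. A union bound over the $(1+2/\varepsilon)^{Kd}$ net points and choosing $\varepsilon=O(1)$ yields, with probability $\geq 1-q^{-d}$, the bound $\tfrac{\lambda\delta^2}{K}\sqrt{d\log q/N}\,\|\bfW-\bfW^*\|_2$.

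\textbf{Unlabeled bias.} The deterministic gap $\|\nabla f_u-\mathbb{E}\nabla\hat{f}_u\|_2$ has summands $\tfrac{1}{K}\,\mathbb{E}\big[(g(\bfW^*;\widetilde{\bfx})-g(\widetilde{\bfW};\widetilde{\bfx}))\phi'(\bfw_j^T\widetilde{\bfx})\widetilde{\bfx}\big]$. Using $|g(\bfW^*;\widetilde{\bfx})-g(\widetilde{\bfW};\widetilde{\bfx})|\leq \tfrac{1}{K}\sum_j|(\bfw_j^*-\widetilde{\bfw}_j)^T\widetilde{\bfx}|$ (ReLU is 1-Lipschitz), Cauchy--Schwarz, and standard Gaussian moment computations, this expectation is bounded in operator norm by a constant times $\tfrac{\widetilde{\lambda}\widetilde{\delta}^2}{K}\|\widetilde{\bfW}-\bfW^*\|_2$, which after tracking constants gives the deterministic $\tfrac{1}{2}$ factor in the stated bound.

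\textbf{Unlabeled fluctuation and combination.} The concentration piece $\|\nabla\hat{f}_u-\mathbb{E}\nabla\hat{f}_u\|_2$ follows the same $\varepsilon$-net plus Bernstein route as the labeled term, but with the relevant sub-exponential norm scaled by $\widetilde{\delta}^2\|\bfW-\widetilde{\bfW}\|_2/K$ because the per-sample summand now involves $g(\widetilde{\bfW};\widetilde{\bfx})-g(\bfW;\widetilde{\bfx})$. This produces $\tfrac{\widetilde{\lambda}\widetilde{\delta}^2}{K}\sqrt{d\log q/M}\,\|\bfW-\widetilde{\bfW}\|_2$, after which a triangle inequality $\|\bfW-\widetilde{\bfW}\|_2\leq\|\bfW-\bfW^*\|_2+\|\widetilde{\bfW}-\bfW^*\|_2$ splits it into the two contributions appearing in the lemma. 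Summing the three bounds and using $\widetilde{\lambda}=1-\lambda$ gives exactly the claimed inequality, with the stated $1-q^{-d}$ confidence coming from a single union bound over all three $\varepsilon$-nets. The main obstacle is controlling the sub-exponential tail of a product of Gaussian, ReLU-indicator, and a Lipschitz ReLU-sum, which is not smooth; I plan to handle this by conditioning on the ReLU activation patterns and bounding the sub-exponential norm uniformly over activation realizations.
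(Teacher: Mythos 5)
Your proposal is essentially correct, but it takes a different route from the paper: the paper's ``proof'' of this statement is a one-line specialization of the general Lemma~\ref{Lemma: p_first_order_derivative} to $p=1$ (so that $\WP=\bfW^*$ and the bias term collapses), followed by the triangle inequality $\|\bfW-\widetilde{\bfW}\|_2\le\|\bfW-\bfW^*\|_2+\|\widetilde{\bfW}-\bfW^*\|_2$ applied to the unlabeled concentration term. You instead reprove the bound from scratch via the decomposition into labeled concentration, unlabeled bias, and unlabeled fluctuation, with an $\varepsilon$-net over directions plus Bernstein for the two random terms and a Gaussian-moment bound for the deterministic bias. This is in fact the argument underlying the paper's general lemma (its listed toolkit is exactly Lemmas~\ref{Lemma: covering_set}, \ref{Lemma: spectral_norm_on_net}, \ref{prob} and the sub-Gaussian/sub-exponential machinery), so what you buy is a self-contained proof of the special case; what the paper buys is reuse of the general-$p$ statement needed for Theorem~\ref{Thm: p_main_thm}. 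Notably, your direct computation of the unlabeled bias produces a coefficient proportional to $\widetilde{\lambda}\de^2$, which matches the special-case statement and quietly resolves an apparent inconsistency in the general lemma's bias term at $p=1$.

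Three small caveats. First, your plan to control the sub-exponential tails by ``conditioning on the ReLU activation patterns'' is both unnecessary and delicate (conditioning on sign patterns destroys Gaussianity); the cleaner route is to dominate $|\phi'(\bfw_j^T\bfx)\,\boldsymbol{\alpha}^T\bfx|\le|\boldsymbol{\alpha}^T\bfx|$, so the per-sample summand projected on any unit direction is a product of two sub-Gaussians and hence sub-exponential with no conditioning. Second, in the bias term you must apply Cauchy--Schwarz to the scalar $\boldsymbol{\alpha}^T\widetilde{\bfx}$ for each unit direction $\boldsymbol{\alpha}$ rather than to $\|\widetilde{\bfx}\|$, or you pick up a spurious $\sqrt{d}$; your phrase ``bounded in operator norm'' suggests you intend the former, but this should be made explicit. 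Third, claiming to recover the exact constant $\tfrac{1}{2}$ is overreach, though immaterial since the lemma is stated with $\lesssim$.
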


\begin{proof}[Proof of Theorem \ref{Thm: sufficient_number}]
    From Algorithm \ref{Algorithm: Alg1}, in the $\ell$-th outer loop, we have 
    \begin{equation}\label{eqn: Thm3_temp1}
    \begin{split}
       \W[\ell, t+1] 
       = &  \W[\ell, t] -\eta \nabla\hat{f}_{\mathcal{D}_t, \widetilde{\mathcal{D}}_t}(\W[\ell, t])+\beta (\W[\ell, t] - \W[\ell, t-1])\\
       = & \W[\ell, t] -\eta \nabla{f}(\W[\ell, t])+\beta (\W[\ell, t] - \W[\ell, t-1])\\
       &+ \eta \cdot \big( \nabla f(\W[\ell, t]) - \nabla \hat{f}_{\mathcal{D}_t, \widetilde{\mathcal{D}}_t}(\W[\ell, t]) \big).
    \end{split}
    \end{equation}
    Since $\nabla f$ is a smooth function and $\bfW^*$ is a local (global) optimal to $f$, then we have 
    \begin{equation}\label{eqn: Thm3_temp2}
        \begin{split}
            \nabla{f}(\W[\ell, t])
            = \nabla{f}(\W[\ell, t]) - \nabla{f}(\bfW^*)
            =\nabla^2{f}(\widehat{\bfW}^{(\ell,t)}) (\W[\ell, t]-\bfW^*),
        \end{split}
    \end{equation}
    where the last equality comes from intermediate value theorem and $\widehat{\bfW}^{(\ell,t)}$ lies in the convex hull of $\W[\ell, t]$ and $\bfW^*$.
    
    Then, we have 
    \begin{equation}\label{eqn: matrix_convergence}
        \begin{split}
         \begin{bmatrix}
        \W[\ell, t+1]-\bfW^*\\
        \W[\ell, t]-\bfW^*
        \end{bmatrix}
        =
        &\begin{bmatrix}
                \bfI -\eta\nabla^2 f(\widehat{\bfW}^{(\ell, t)})& \beta \bfI\\
                \bfI & \boldsymbol{0}
        \end{bmatrix}
         \begin{bmatrix}
        \W[\ell, t]-\bfW^*\\
        \W[\ell, t-1]-\bfW^*
        \end{bmatrix}\\
            &+\eta \begin{bmatrix}
            \nabla f(\W[\ell, t]) - \nabla \hat{f}_{\mathcal{D}_t, \widetilde{\mathcal{D}}_t}(\W[\ell, t])\\
            \boldsymbol{0}
        \end{bmatrix}.
        \end{split}
        \end{equation}
    Let $\nabla^2f(\widehat{\bfW}^{(t)})=\bfS\mathbf{\Lambda}\bfS^{T}$ be the eigen-decomposition of $\nabla^2f(\widehat{\bfW}^{(t)})$. Then, we define
\begin{equation}\label{eqn:Heavy_ball_eigen}
\begin{split}
{\bfA}(\beta)
: =&
\begin{bmatrix}
\bfS^T&\bf0\\
\bf0&\bfS^T
\end{bmatrix}
\bfA(\beta)
\begin{bmatrix}
\bfS&\bf0\\
\bf0&\bfS
\end{bmatrix}
=\begin{bmatrix}
\bfI-\eta\bf\Lambda+\beta\bfI &\beta\bfI\\
\bfI& 0
\end{bmatrix}.
\end{split}
\end{equation}
Since 
$\begin{bmatrix}
\bfS&\bf0\\
\bf0&\bfS
\end{bmatrix}\begin{bmatrix}
\bfS^T&\bf0\\
\bf0&\bfS^T
\end{bmatrix}
=\begin{bmatrix}
\bfI&\bf0\\
\bf0&\bfI
\end{bmatrix}$, we know $\bfA(\beta)$ and $\begin{bmatrix}
\bfI-\eta\bf\Lambda+\beta\bfI &\beta\bfI\\
\bfI& 0
\end{bmatrix}$
share the same eigenvalues. 
Let $\gamma^{(\bf\Lambda)}_i$ be the $i$-th eigenvalue of $\nabla^2f(\widehat{\bfw}^{(t)})$, then the corresponding $i$-th eigenvalue of \eqref{eqn:Heavy_ball_eigen}, denoted by $\gamma^{(\bfA)}_i$, satisfies 
\begin{equation}\label{eqn:Heavy_ball_quaratic}
(\gamma^{(\bfA)}_i(\beta))^2-(1-\eta \gamma^{(\bf\Lambda)}_i+\beta)\gamma^{(\bfA)}_i(\beta)+\beta=0.
\end{equation}
By simple calculation, we have 
\begin{equation}\label{eqn:heavy_ball_beta}
\begin{split}
|\gamma^{(\bfA)}_i(\beta)|
=\begin{cases}
\sqrt{\beta}, \qquad \text{if}\quad  \beta\ge \big(1-\sqrt{\eta\gamma^{(\bf\Lambda)}_i}\big)^2,\\
\frac{1}{2} \left| { (1-\eta \gamma^{(\bf\Lambda)}_i+\beta)+\sqrt{(1-\eta \gamma^{(\bf\Lambda)}_i+\beta)^2-4\beta}}\right| , \text{otherwise}.
\end{cases}
\end{split}
\end{equation}
Specifically, we have
\begin{equation}\label{eqn: Heavy_ball_result}
\gamma^{(\bfA)}_i(0)>\gamma^{(\bfA)}_i(\beta),\quad \text{for}\quad  \forall \beta\in\big(0, (1-{\eta \gamma^{(\bf\Lambda)}_i})^2\big),
\end{equation}
and  $\gamma^{(\bfA)}_i$ achieves the minimum $\gamma^{(\bfA)*}_{i}=\Big|1-\sqrt{\eta\gamma^{(\bf\Lambda)}_i}\Big|$ when $\beta= \Big(1-\sqrt{\eta\gamma^{(\bf\Lambda)}_i}\Big)^2$.
    From Lemma \ref{lemma: population_second_order_derivative}, we know that 
    \begin{equation}
        \begin{split}
        \gamma^{(\bf\Lambda)}_{\min} 
        = \frac{\lambda\rho(\delta)+\widetilde{\lambda}\rho(\de)}{11\kappa^2\gamma K^2}, 
        \quad 
        \text{and}
        \quad
        \gamma^{(\bf\Lambda)}_{\max} 
        = \frac{7(\lambda\delta^2+\widetilde{\lambda}\de^2)}{K}.
        \end{split}
    \end{equation}
    Thus, we can select $\eta = \big(\frac{1}{\sqrt{\gamma^{(\bf\Lambda)}_{\max}} + \sqrt{\gamma^{(\bf\Lambda)}_{\min} }}\big)^2$, and $\|\bfA(\beta)\|_2$ can be bounded by
    \begin{equation}\label{eqn: beta}
        \begin{split}
        \min_{\beta}\|\bfA(\beta)\|_2 
        \le& 1-\sqrt{\big(\frac{\lambda\rho(\delta)+\widetilde{\lambda}\rho(\de)}{11\kappa^2\gamma K^2}\big)/\big(2\cdot \frac{7(\lambda\delta^2+\widetilde{\lambda}\de^2)}{K} \big)} \\
        = & 1- \frac{\mu(\delta,\de)}{\sqrt{154\kappa^2\gamma K}},
        \end{split}
    \end{equation}
    where $\mu(\delta,\de) = \Big( \frac{\lambda\rho(\delta)+\widetilde{\lambda}\rho(\de)}{\lambda\delta^2+\widetilde{\lambda}\de^2} \Big)^{1/2}$.
    
    From Lemma \ref{Lemma: first_order_distance}, we have 
    \begin{equation}\label{eqn:eta1}
        \begin{split}
            \|\nabla f(\W[\ell, t]) -\nabla \hat{f}(\W[\ell, t]) \|_2
            =&\Big(\frac{\lambda\delta^2}{K} \sqrt{\frac{d\log q}{N_t}} + \frac{\w[\lambda]\de^2}{K} \sqrt{\frac{d\log q}{M_t}}\Big)\cdot \| \W[\ell, t] -\bfW^* \|_2\\
            & + \frac{\w[\lambda]\de^2}{K}\Big( \sqrt{\frac{d\log q}{M_t}}+\frac{1}{2} \Big)\cdot\|\W[\ell, 0]-\bfW^* \|_2.
        \end{split}
    \end{equation}
    Given $\varepsilon>0$ and $\tilde{\varepsilon}>0$ with $\varepsilon+\tilde{\varepsilon}<1$, let
    \begin{equation}\label{eqn:eta2}
        \begin{split}
            &\eta \cdot \frac{\lambda\delta^2}{K} \sqrt{\frac{d\log q}{N_t}}
            \le \frac{\varepsilon\mu(\delta,\de)}{\sqrt{154\kappa^2\gamma K}} \text{,}\\
            \text{and}& \quad
            \eta \cdot \frac{\widetilde{\lambda}\de^2}{K} \sqrt{\frac{d\log q}{M_t}}
            \le \frac{\tilde{\varepsilon}\mu(\delta,\de)}{\sqrt{154\kappa^2\gamma K}},
        \end{split}
    \end{equation}
    where we need 
    \begin{equation}\label{eqn:proof_sample1}
    \begin{split}
        &N_t \ge \varepsilon^{-2}\mu^{-2}\big(\frac{\lambda\delta^2}{\lambda\delta^2+\w[\lambda]\de^2}\big)^2\kappa^2\gamma K^3 d\log q,\\
        \text{and}&\quad 
        M_t \ge 
        \tilde{\varepsilon}^{-2}\mu^{-2}\big(\frac{\w[\lambda]\de^2}{\lambda\delta^2+\w[\lambda]\de^2}\big)^2\kappa^2\gamma K^3 d\log q.
        \end{split}
    \end{equation}
    Therefore, from \eqref{eqn: beta}, \eqref{eqn:eta1} and \eqref{eqn:eta2}, we have
    \begin{equation}\label{eqn: induction_t}
        \begin{split}
            &\| \W[\ell, t+1]-\bfW^* \|_2\\
            \le& \Big( 1- \frac{(1-\varepsilon-\tilde{\varepsilon})\mu(\delta,\de)}{\sqrt{154\kappa^2\gamma K}} \Big)\| \W[\ell, t]-\bfW^* \|_2\\
            &+\eta\cdot\frac{\w[\lambda]\de^2}{K}\Big( \sqrt{\frac{d\log q}{M_t}}+\frac{1}{2} \Big) \cdot\|\W[\ell, 0]-\bfW^* \|_2\\
            \le & \Big( 1- \frac{(1-\varepsilon-\tilde{\varepsilon})\mu(\delta,\de)}{\sqrt{154\kappa^2\gamma K}} \Big)\| \W[\ell, t]-\bfW^* \|_2
            +\eta \cdot \frac{\w[\lambda]\de^2}{K}\|\W[\ell, 0] -\bfW^*\|_2
        \end{split}
    \end{equation}
    when $M\ge 4d\log q$.
    By mathematical induction on \eqref{eqn: induction_t} over $t$, we have 
    \begin{equation}\label{eqn:inner_loop}
    \begin{split}
            &\| \W[\ell, t]-\bfW^* \|_2\\
            \le&\Big( 1- \frac{(1-\varepsilon-\tilde{\varepsilon})\mu}{\sqrt{154\kappa^2\gamma K}} \Big)^t\cdot\|\W[\ell, 0]-\bfW^* \|_2
            \\
            &+
            \frac{\sqrt{154\kappa^2\gamma K}}{(1-\varepsilon-\tilde{\varepsilon})\mu}\cdot \frac{\sqrt{K}}{14 (\lambda\delta^2+\w[\lambda]\de^2 )}\cdot\frac{\w[\lambda]\de^2}{K} 
            \|\W[\ell, 0]-\bfW^* \|_2\\
            \le& \bigg[\Big( 1- \frac{(1-\varepsilon-\tilde{\varepsilon})\mu}{\sqrt{154\kappa^2\gamma K}} \Big)^t+
            \frac{\sqrt{\kappa^2\gamma}\w[\lambda]\de^2}{(1-\varepsilon-\tilde{\varepsilon})\mu(\lambda\delta^2+\w[\lambda]\de^2)}\bigg]\cdot\|\W[\ell, 0]-\bfW^* \|_2
        \end{split}
    \end{equation}
    By mathematical induction on \eqref{eqn:inner_loop} over $\ell$, we have 
    \begin{equation}\label{eqn:outer_loop}
    \begin{split}
            &\| \W[\ell, T]-\bfW^* \|_2\\
            \le& \bigg[\Big( 1- \frac{(1-\varepsilon-\tilde{\varepsilon})\mu}{\sqrt{154\kappa^2\gamma K}} \Big)^T
            +
            \frac{\sqrt{\kappa^2\gamma}\w[\lambda]\de^2}{(1-\varepsilon-\tilde{\varepsilon})\mu(\lambda\delta^2+\w[\lambda]\de^2)}\bigg]^{\ell}\cdot\|\W[0, 0]-\bfW^* \|_2
        \end{split}
    \end{equation}
\end{proof}

\section{Proof of Theorem \ref{Thm: p_convergence}}
\label{sec: insuff}
Instead of proving Theorem \ref{Thm: p_convergence}, we turn to prove a stronger version, as shown in Theorem \ref{Thm: p_main_thm}. One can verify that Theorem \ref{Thm: p_convergence} is a special case of Theorem \ref{Thm: p_main_thm} by  selecting $p = \hat{\lambda}$.

The major idea in proving Theorem \ref{Thm: p_main_thm} is similar to that of Theorem \ref{Thm: sufficient_number}.
The first step is
to characterize the gradient descent term on the population risk function by \textit{intermediate value theorem} as shown in \eqref{eqn: Thm1_temp1} and \eqref{eqn: Thm1_temp2}.  Then,  the connection between $\|\W[\ell+1,0] -\Wp \|_F$  and $\|\W[\ell,0]-\Wp\|_F$ are characterized in \eqref{eqn:p_outer_loop}. Compared with proving Theorem \ref{Thm: sufficient_number}, where the induction over $\ell$ holds naturally with large size of labeled data, the induction over $\ell$ requires a proper value of $p$ as shown in \eqref{eqn_temp: 2.2}. By induction over $\ell$ on \eqref{eqn:p_outer_loop}, the relative error $\| \W[L,0] -\Wp \|_F$ can be characterized by $\|\W[0,0] -\Wp\|_F$ as shown in \eqref{eqn:p_outer_loop_2}.
\begin{theorem}\label{Thm: p_main_thm}
    Suppose the initialization $\W[0,0]$  satisfies 
    with 
    \begin{equation}\label{eqn: p_0}
       {|p -\hat{\lambda}| \le\frac{2(1-\w[\varepsilon])p-1}{\mu\sqrt{K}}}
    \end{equation}
    for some constant $\w[\varepsilon]\in (0, 1/2)$, where
    \begin{equation}
        \hat{\lambda} : =\frac{\lambda\delta^2}{\lambda\delta^2+\w[\lambda]\de^2}
        = \big(\frac{N}{\kappa^2\gamma K^3\mu^2 d \log q}\big)^{\frac{1}{2}} 
    \end{equation}
    and 
    \begin{equation}
        \mu
        = \mu( \delta, \de) 
        = \frac{\lambda\delta^2+\w[\lambda]\de^2}{\lambda\rho(\delta)+\w[\lambda]\rho(\de)}.
    \end{equation}
    Then, if the number of samples in $\w[\mathcal{D}]$ further satisfies
    \begin{equation}
        \begin{split}
        M \gtrsim \tilde{\varepsilon}^{-2} \kappa^2\gamma \mu^2 \big( 1-  \hat{\lambda}\big)^2 K^3 d\log q,
        \end{split}
    \end{equation}
    the iterates $\{ \bfW^{(\ell,t)} \}_{\ell,t=0}^{L,T}$ converge to $\bfW^{[p]}$ with $p$ satisfies \eqref{eqn: p_0} as 
    \begin{equation}\label{eqn: ap_convergence}
        \begin{split}
    &\lim_{T\to \infty}\| \bfW^{(\ell, T)} -\Wp \|_2\\
    \le& \frac{1}{1-\tilde{\varepsilon}} \cdot \Big( {1-p^*} +\mu\sqrt{K}\big|(\hat{\lambda}-p^*)\big| \Big)\cdot\|\W[0,0]-\bfW^* \|_2\\
    &+\frac{\tilde{\varepsilon} }{(1-\tilde{\varepsilon})}\cdot\|\W[\ell,0]-\Wp \|_2,
    \end{split}
    \end{equation}
    with probability at least $1-q^{-d}$.
\end{theorem}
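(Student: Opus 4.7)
\textbf{Proof plan for Theorem \ref{Thm: p_main_thm}.} The plan is to mimic the structure of the proof of Theorem \ref{Thm: sufficient_number} in Section \ref{sec: sufficient_number}, but with $\bfW^*$ replaced everywhere by $\Wp=p\bfW^*+(1-p)\W[0,0]$, which is by construction the global minimizer of the population risk $f(\bfW;p)$ defined in \eqref{eqn: p_population_risk_function}. The extra difficulty is that the empirical gradient $\nabla \hat{f}_{\mathcal{D}_t,\widetilde{\mathcal{D}}_t}$ uses pseudo-labels generated from $\W[\ell,0]$ rather than from $\Wp$, so Lemma \ref{Lemma: p_first_order_derivative} will produce an additional drift term of order $\|\W[\ell,0]-\Wp\|_2$ plus a bias term involving $\bigl\|\lambda\delta^2(\W[\ell,0]-\Wp)+\widetilde{\lambda}\de^2(\bfW^*-\Wp)\bigr\|_2$, and these will have to be kept under control along the whole trajectory.

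First, I will verify that the hypothesis \eqref{eqn: p_0} combined with the initialization bound Lemma \ref{Thm: p_initialization} places $\W[\ell,0]$ inside the local convex region of Lemma \ref{Lemma: p_second_order_derivative bound}, i.e.\ in the ball \eqref{eqn: p_initial} around $\Wp$. Since $\Wp$ is a critical point of $f(\cdot;p)$, the intermediate value theorem then yields, exactly as in \eqref{eqn: Thm3_temp1}--\eqref{eqn: Thm3_temp2},
\begin{equation*}
\begin{bmatrix}\W[\ell,t+1]-\Wp\\ \W[\ell,t]-\Wp\end{bmatrix}
=\bfA(\beta)\begin{bmatrix}\W[\ell,t]-\Wp\\ \W[\ell,t-1]-\Wp\end{bmatrix}
+\eta\begin{bmatrix}\nabla f(\W[\ell,t];p)-\nabla \hat{f}(\W[\ell,t])\\ \boldsymbol{0}\end{bmatrix},
\end{equation*}
where $\bfA(\beta)$ is built from $\nabla^2 f(\widehat{\bfW}^{(\ell,t)};p)$ just as in \eqref{eqn: matrix_convergence}. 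Using Lemma \ref{Lemma: p_second_order_derivative bound} to bound the eigenvalues of this Hessian, the same diagonalization argument as in \eqref{eqn:Heavy_ball_eigen}--\eqref{eqn: beta} gives $\|\bfA(\beta^*)\|_2\le 1-\mu/\sqrt{154\kappa^2\gamma K}$ for the optimal momentum $\beta^*$ (and $\|\bfA(0)\|_2\le 1-\mu^2/(154\kappa^2\gamma K)$ for plain GD), which is the source of the $1+\Theta(\mu(1-\hat{\lambda})/\sqrt{M})$ contraction rate advertised in \eqref{eqn: p_convergence}.

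Second, I will invoke Lemma \ref{Lemma: p_first_order_derivative} with $\widetilde{\bfW}=\W[\ell,0]$ to control the gradient mismatch. The three terms in the bound contribute in different ways: the $\|\bfW-\bfW^*\|_2$ piece combines with $\|\bfW-\Wp\|_2$ via $\|\W[\ell,t]-\bfW^*\|_2\le \|\W[\ell,t]-\Wp\|_2+(1-p)\|\W[0,0]-\bfW^*\|_2$, the $\|\bfW-\W[\ell,0]\|_2$ piece is absorbed similarly, and the third piece $\|\lambda\delta^2(\W[\ell,0]-\Wp)+\widetilde{\lambda}\de^2(\bfW^*-\Wp)\|_2$ expands as $(\lambda\delta^2+\widetilde{\lambda}\de^2)|p-\hat{\lambda}|\cdot\|\W[0,0]-\bfW^*\|_2$ after a direct computation using $\hat{\lambda}=\lambda\delta^2/(\lambda\delta^2+\widetilde{\lambda}\de^2)$. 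Choosing $N\gtrsim \varepsilon^{-2}\hat{\lambda}^2 N^*$ and $M\gtrsim \tilde{\varepsilon}^{-2}\mu^2(1-\hat{\lambda})^2\kappa^2\gamma K^3 d\log q$ as in \eqref{eqn:proof_sample1} makes the sampling noise contribute at most $\tilde{\varepsilon}$ to the contraction constant, yielding a per-step recursion of the form
\begin{equation*}
\|\W[\ell,t+1]-\Wp\|_2\le\Bigl(1-\tfrac{(1-\tilde{\varepsilon})\mu}{\sqrt{154\kappa^2\gamma K}}\Bigr)\|\W[\ell,t]-\Wp\|_2+C\,\eta\,(\mu\sqrt{K})^{-1}\bigl((1-p)+\mu\sqrt{K}|p-\hat{\lambda}|\bigr)\|\W[0,0]-\bfW^*\|_2.
\end{equation*}
Iterating this geometric recursion in $t$ and letting $T\to\infty$ gives the inner-loop bound with the constant term matching the right-hand side of \eqref{eqn: ap_convergence}.

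The main obstacle, and what requires the most care, is the outer-loop induction over $\ell$: one has to certify that $\W[\ell+1,0]$ still lies in the local convex region around $\Wp$ so that the argument can be repeated. Concretely, the fixed-point of the inner-loop recursion must be verified to stay within the Lemma \ref{Lemma: p_second_order_derivative bound} ball, which is exactly what the hypothesis
\begin{equation*}
|p-\hat{\lambda}|\le\tfrac{2(1-\tilde{\varepsilon})p-1}{\mu\sqrt{K}}
\end{equation*}
encodes: it ensures that the residual $(1-p)+\mu\sqrt{K}|p-\hat{\lambda}|$ is bounded by a constant strictly smaller than the contraction window allowed by \eqref{eqn: p_initial}. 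Once this invariant is maintained, induction over $\ell$ gives \eqref{eqn: ap_convergence}. I expect the bookkeeping of $\varepsilon$ and $\tilde{\varepsilon}$, and verifying that the allowable $p$-window in \eqref{eqn: p_0} is exactly what closes the induction, to be the most delicate accounting step; the contraction and concentration pieces are direct applications of Lemmas \ref{Lemma: p_second_order_derivative bound}--\ref{Thm: p_initialization} in the style already established for Theorem \ref{Thm: sufficient_number}.
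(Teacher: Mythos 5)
Your proposal is correct and follows essentially the same route as the paper's proof: you replace $\bfW^*$ by the surrogate optimizer $\Wp$ of the population risk $f(\cdot\,;p)$, apply the intermediate value theorem and the pair Lemma \ref{Lemma: p_second_order_derivative bound} (Hessian bounds at $\Wp$) plus Lemma \ref{Lemma: p_first_order_derivative} (gradient concentration and bias), collapse the geometric inner-loop recursion as $T\to\infty$, and then observe that hypothesis \eqref{eqn: p_0} is exactly the inequality that makes the induction over $\ell$ close. The algebraic expansion of the bias term into $|\hat{\lambda}-p|\cdot\|\W[0,0]-\bfW^*\|_2$ and the splitting $\|\W[\ell,t]-\bfW^*\|\le\|\W[\ell,t]-\Wp\|+(1-p)\|\W[0,0]-\bfW^*\|$ are both the same bookkeeping steps the paper performs in \eqref{eqn:p_outer_loop}--\eqref{eqn_temp: 2.2}.
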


\begin{proof}[Proof of Theorem \ref{Thm: p_main_thm}]
     From Algorithm \ref{Algorithm: Alg1}, in the $\ell$-th outer loop, we have 
    \begin{equation}\label{eqn: Thm1_temp1}
    \begin{split}
       \W[\ell, t+1] 
       =&  \W[\ell, t] -\eta \nabla\hat{f}_{\mathcal{D}_t,\widetilde{\mathcal{D}}_t}
       (\W[\ell, t])+\beta (\W[\ell, t] - \W[\ell, t-1])\\
       = & \W[\ell, t] -\eta \nabla{f}(\W[\ell, t])+\beta (\W[\ell, t] - \W[\ell, t-1])\\
       &+ \eta \cdot \Big( \nabla f(\W[\ell, t]) - \nabla \hat{f}_{\mathcal{D}_t,\widetilde{\mathcal{D}}_t}(\W[\ell, t]) \Big)
    \end{split}
    \end{equation}
    Since $\nabla f$ is a smooth function and $\Wp$ is a local (global) optimal to $f$, then we have 
    \begin{equation}\label{eqn: Thm1_temp2}
        \begin{split}
            \nabla{f}(\W[\ell, t])
            = \nabla{f}(\W[\ell, t]) - \nabla{f}(\Wp)
            =\nabla^2{f}(\widehat{\bfW}^{(\ell,t)}) (\W[\ell, t]-\Wp),
        \end{split}
    \end{equation}
    where the last equality comes from intermediate value theorem and $\widehat{\bfW}^{(\ell,t)}$ lies in the convex hull of $\W[\ell, t]$ and $\Wp$.
    
    Then, we have 
    \begin{equation}
        \| \bfW^{(\ell, t+1)} -\Wp \|_2 \le \|\bfA(\beta) \|_2 \cdot \|\bfW^{(\ell, t)}-\Wp \|_2  + \eta \cdot \|\nabla f(\W[\ell, t]) - \nabla \hat{f}_{\mathcal{D}_t,\widetilde{\mathcal{D}}_t}(\W[\ell, t]) \|_2.
    \end{equation}
    From Lemma \ref{Lemma: p_first_order_derivative}, we have 
    \begin{equation}
        \begin{split}
                &\|\nabla f(\W[\ell, t]) - \nabla \hat{f}(\W[\ell, t]) \|_2\\
            \lesssim& \frac{\lambda\delta^2}{K} \sqrt{\frac{d\log q}{N_t}} \cdot \| \W[\ell, t] - \bfW^*\| + \frac{\w[\lambda]\de^2}{K} \sqrt{\frac{d\log q}{M_t}} \cdot \| \W[\ell, t] -\W[\ell,0] \|_2\\
       & + \frac{\big| \lambda\delta^2 \cdot (\bfW^{(0,0)} -\Wp) -\w[\lambda]\de^2\cdot (\bfW^* - \Wp)  \big|}{K} \\
        \end{split}
    \end{equation}
    When $\ell = 0$, following the similar steps from \eqref{eqn:Heavy_ball_quaratic} to \eqref{eqn: beta},  we have 
    \begin{equation}
    \begin{split}
    &\|\nabla f(\W[\ell, t]) - \nabla \hat{f}(\W[\ell, t]) \|_2\\
        \lesssim & \frac{\lambda\delta^2}{K} \sqrt{\frac{d\log q}{N_t}} \cdot \| \W[\ell, t] - \Wp\| + \frac{\w[\lambda]\de^2}{K} \sqrt{\frac{d\log q}{M_t}} \cdot \| \W[\ell, t] -\Wp \|_2\\
      & + \frac{\lambda\delta^2}{K} \sqrt{\frac{d\log q}{N_t}} \cdot \| \bfW^* - \Wp\| + \frac{\w[\lambda]\de^2}{K} \sqrt{\frac{d\log q}{M_t}} \cdot \| \W[0,0] -\Wp \|_2\\
       & + \frac{\big| \lambda\delta^2 \cdot (1-p) -\w[\lambda]\de^2\cdot p  \big|}{K}\cdot\|\W[0,0]-\bfW^* \|_2
        \end{split}
    \end{equation}
    and
    \begin{equation}
    \begin{split}
        &\| \bfW^{(\ell, t+1)} -\Wp \|_2 \\
    \le& \Big( 1- \frac{1-\tilde{\varepsilon}}{\mu(\delta,\de)\sqrt{154\kappa^2\gamma K}} \Big) \cdot \|\W[\ell, t] -\Wp \|_2\\
    &+ \eta \cdot \Big( \frac{ \lambda \delta^2 (1-p) }{K} \sqrt{ \frac{d\log q}{N_t} }  + \frac{\big| \lambda \delta^2 \cdot(1-p)  -\w[\lambda]\de^2\cdot p  \big|}{K} \Big)\cdot\|\W[0,0]-\bfW^* \|_2\\
    &+\eta \cdot
    \frac{\tilde{\varepsilon} \w[\lambda]\de^2 \cdot p}{K}\cdot\sqrt{\frac{d\log q}{M_t}} \|\W[0,0]-\bfW^* \|_2.
    \end{split}
    \end{equation}
Therefore, we have 
    \begin{equation}\label{eqn:p_outer_loop}
        \begin{split}
            &\lim_{T\to \infty}\| \bfW^{(\ell, T)} -\Wp \|_2\\
            \le&  \frac{\mu\sqrt{154\kappa^2\gamma K}}{1-\tilde{\varepsilon}} \cdot \eta \cdot
            \Big[ \Big(\frac{ \lambda \delta^2 (1-p) }{K} \sqrt{ \frac{d\log q}{N_t} }  + \frac{\big| \lambda\delta^2 \cdot(1-p)  -\w[\lambda]\de^2\cdot p  \big|}{K}\Big) \\
            &\qquad\cdot\|\W[0,0]-\bfW^* \|_2
            +
    \frac{\tilde{\varepsilon} \w[\lambda]\de^2 \cdot p}{K}\cdot\sqrt{\frac{d\log q}{M_t}} \cdot\|\W[0,0]-\bfW^* \|_2\Big]\\
    \le& 
    \frac{\mu\sqrt{154\kappa^2\gamma K}}{1-\tilde{\varepsilon}}\cdot \frac{{K}}{14 (\lambda\delta^2+\w[\lambda]\de^2 )}\\
    &\cdot
    \Big[ \Big(\frac{ \lambda \delta^2 (1-p) }{K} \sqrt{ \frac{d\log q}{N_t} }  + \frac{\big| \lambda\delta^2 \cdot(1-p)  -\w[\lambda]\de^2\cdot p  \big|}{K}\Big)\cdot\|\W[0,0]-\bfW^* \|_2\\
    &  + 
    \frac{\tilde{\varepsilon} \w[\lambda]\de^2 \cdot p}{K}\cdot\sqrt{\frac{d\log q}{M_t}} \cdot\|\W[0,0]-\bfW^* \|_2\Big]\\
    \simeq & {\frac{1}{1-\tilde{\varepsilon}} \cdot \Big(  {1-p} +\sqrt{K}\cdot\big|(1-p)\mu\hat{\lambda} - p\mu(1-\hat{\lambda})\big| \Big)\cdot\|\W[0,0]-\bfW^* \|_2}\\
    &{+\frac{\tilde{\varepsilon} p}{(1-\tilde{\varepsilon})}\cdot\|\W[0,0]-\bfW^* \|_2}\\
    =&
    {\frac{1}{1-\tilde{\varepsilon}} \cdot \Big( {1-p} +\mu\sqrt{K}\big|\widehat{\lambda}-p\big| \Big)\cdot\|\W[0,0]-\bfW^* \|_2 
    +\frac{\tilde{\varepsilon} p}{(1-\tilde{\varepsilon})}\cdot\|\W[0,0]-\bfW^* \|_2},
    \end{split}
    \end{equation}
    where $\hat{\lambda} = \frac{\lambda\delta^2}{\lambda\delta^2 + \w[\lambda]\de^2}$.
    
    To guarantee the convergence in the outer loop, we require 
    \begin{equation}\label{eqn_temp: 2.1}
    \begin{split}
    &\lim_{T\to \infty}\|\W[\ell, T] -\Wp \|_2  \le \|\W[0,0]-\Wp \|_2 = p\|\W[0,0]-\bfW^* \|_2,\\
    \quad \text{and}\quad    &\lim_{T\to \infty}\| \bfW^{(\ell, T)} -\bfW^* \|_2 \le \|\W[0,0]-\bfW^* \|_2.
        \end{split}
    \end{equation}
    Since we have 
    \begin{equation}
        \begin{split}
        \| \bfW^{(\ell, T)} -\Wp \|_2  
        \le& \| \bfW^{(\ell, T)} - \bfW^* \|_2+ \| \bfW^*-\Wp \|_2 \\
        =& \| \bfW^{(\ell, T)} - \bfW^* \|_2 + (1-p)\cdot \| \bfW^*-\W[0,0] \|_2,
        \end{split}
    \end{equation}
    it is clear that \eqref{eqn_temp: 2.1} holds if and only if
    {\begin{equation}
        \frac{1}{1-\tilde{\varepsilon}} \cdot \Big( {1-p} + {\w[\varepsilon]p} +\mu\sqrt{K}\big|\hat{\lambda}-p\big| \Big) + 1-p \le 1.
    \end{equation}}
    To guarantee the iterates strictly converges to the desired point, we let 
    {\begin{equation}
        \frac{1}{1-\tilde{\varepsilon}} \cdot \Big( {1-p} + {\w[\varepsilon]p} +\mu\sqrt{K}\big|\hat{\lambda}-p\big| \Big) + 1-p \le 1 - \frac{1}{C}
    \end{equation}}
    for some larger constant $C$,
    which is equivalent to 
    {\begin{equation}\label{eqn_temp: 2.2}
        |p -\hat{\lambda}| \le\frac{2(1-\w[\varepsilon])p-1}{\mu\sqrt{K}}.
    \end{equation}}
    {To make the bound in \eqref{eqn_temp: 2.2}  meaningful, we need 
    \begin{equation}
        p\ge\frac{1}{2(1-\w[\varepsilon])}.
    \end{equation}}
    
    {When $\ell >1$, following similar steps in \eqref{eqn:p_outer_loop}, we have \begin{equation}\label{eqn:p_outer_loop_2}
        \begin{split}
            &\lim_{T\to \infty}\| \bfW^{(\ell, T)} -\Wp \|_2\\
    \le& \frac{1}{1-\tilde{\varepsilon}} \cdot \Big( {1-p} +\mu\sqrt{K}\big|(\hat{\lambda}-p)\big| \Big)\cdot\|\W[0,0]-\bfW^* \|_2 
    +\frac{\tilde{\varepsilon} p}{1-\tilde{\varepsilon}}\cdot\|\W[\ell,0]-\Wp \|_2,
    \end{split}
    \end{equation}
    Given \eqref{eqn_temp: 2.2} holds, from \eqref{eqn:p_outer_loop_2}, we have
    \begin{equation}
    \begin{split}
        &\lim_{L\to \infty, T\to \infty}\| \bfW^{(L, T)} -\Wp \|_2\\
        \le &  \frac{1}{1-\widetilde{\varepsilon}} \cdot\Big( {1-p} +\mu\sqrt{K}\big|\hat{\lambda}-p\big| \Big)\cdot\|\W[0,0]-\bfW^* \|_2\\
        \le& \frac{1}{1-\widetilde{\varepsilon}} \cdot \Big( {1-p} +\mu\sqrt{K}\big|\hat{\lambda}-p\big| \Big)\cdot\|\W[0,0]-\bfW^* \|_2.
    \end{split}
    \end{equation}}
\end{proof}

\section{Definition and Relative Proofs of $\rho$}
\label{sec: rho}
In this section, the formal definition of $\rho$ is included in Definition \ref{defi: rho}, and a corresponding claim about $\rho$ is summarized in Lemma \ref{lemma: rho_order}.  One can quickly check that ReLU activation function satisfies the conditions in Lemma \ref{lemma: rho_order}.

The major idea in proving Lemma \ref{lemma: rho_order} is to show $H_r(\delta)$ and $J_r(\delta)$ in Definition \ref{defi: rho} are in the order of $\delta^r$ when $\delta$ is small. 
\begin{defi}\label{defi: rho}
    Let $H_r(\delta) = \mathbb{E}_{z\sim \mathcal{N}(0,\delta^2)}\big( \phi^{\prime}(\sigma_Kz)z^r \big)$ and  $J_r(\delta) = \mathbb{E}_{z\sim \mathcal{N}(0,\delta^2)}\big( \phi^{\prime 2}(\sigma_Kz)z^r \big)$. Then, $\rho=\rho(\delta)$ is defined as 
    \begin{equation}\label{eqn: rho}
        \rho(\delta)=\min\Big\{ J_0(\delta)-H_0^2( \delta) -H_1^2(\delta), J_2(\delta)-H_1^2( \delta) -H_2^2(\delta), H_0(\delta)\cdot H_2( \delta) -H_1^2(\delta)  \Big\},
    \end{equation}    
    where $\sigma_K$ is the minimal singular value of $\bfW^*$.
\end{defi}
\begin{lemma}[Order analysis of $\rho$]
\label{lemma: rho_order}
    If $\rho(\delta)>0$ for $\delta\in(0, \xi)$ for some positive constant $\xi$ and the sub-gradient of $\rho(\delta)$ at $0$ can be non-zero,  then $\rho(\delta)=\Theta(\delta^2)$ when $\delta\to 0^+$.
\end{lemma}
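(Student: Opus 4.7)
The plan is to rescale the Gaussian via the substitution $z = \delta u$ with $u \sim \mathcal{N}(0,1)$, which factors out the $\delta$-dependence in a clean way: $H_r(\delta) = \delta^r \widetilde{H}_r(\delta)$ and $J_r(\delta) = \delta^r \widetilde{J}_r(\delta)$, where $\widetilde{H}_r(\delta) := \mathbb{E}_{u\sim\mathcal{N}(0,1)}[\phi'(\sigma_K \delta u) u^r]$ and $\widetilde{J}_r(\delta) := \mathbb{E}_{u\sim\mathcal{N}(0,1)}[\phi'^2(\sigma_K \delta u) u^r]$. Because $\phi'$ is bounded almost everywhere for ReLU-type activations, dominated convergence gives finite limits $\widetilde{H}_r(0^+)$ and $\widetilde{J}_r(0^+)$ as $\delta\to 0^+$. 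The entire asymptotic order of each term in \eqref{eqn: rho} will then be read off from the explicit $\delta^r$ prefactors.

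For the upper bound $\rho(\delta) \le O(\delta^2)$, I would substitute the rescaled expressions into the second and third components of the $\min$:
\begin{equation*}
J_2(\delta) - H_1^2(\delta) - H_2^2(\delta) = \delta^2 \widetilde{J}_2(\delta) - \delta^2 \widetilde{H}_1^2(\delta) - \delta^4 \widetilde{H}_2^2(\delta),
\end{equation*}
\begin{equation*}
H_0(\delta) H_2(\delta) - H_1^2(\delta) = \delta^2\bigl[\widetilde{H}_0(\delta)\widetilde{H}_2(\delta) - \widetilde{H}_1^2(\delta)\bigr].
\end{equation*}
Since the normalized quantities $\widetilde{H}_r$ and $\widetilde{J}_r$ are bounded on a neighborhood of $0$, both expressions are manifestly $O(\delta^2)$. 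The first component $J_0(\delta) - H_0^2(\delta) - H_1^2(\delta) = \widetilde{J}_0(\delta) - \widetilde{H}_0^2(\delta) - \delta^2 \widetilde{H}_1^2(\delta)$ approaches the constant $\widetilde{J}_0(0^+) - \widetilde{H}_0^2(0^+)$, which the assumption $\rho(\delta) > 0$ on $(0,\xi)$ forces to be strictly positive, so it is $\Theta(1)$ and does not govern the scaling.

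For the matching lower bound $\rho(\delta) \ge \Omega(\delta^2)$, I would interpret the hypothesis that the sub-gradient of $\rho$ at $0$ can be non-zero as asserting that the leading coefficients of the two $O(\delta^2)$ terms, namely $\widetilde{H}_0(0^+)\widetilde{H}_2(0^+) - \widetilde{H}_1^2(0^+)$ and $\widetilde{J}_2(0^+) - \widetilde{H}_1^2(0^+)$, are both strictly positive. For the concrete ReLU case this is immediate: a direct computation with $\phi'(z) = \mathbb{1}(z > 0)$ gives $\widetilde{H}_0(0^+) = 1/2$, $\widetilde{H}_1(0^+) = 1/\sqrt{2\pi}$, $\widetilde{H}_2(0^+) = 1/2$, and $\widetilde{J}_2(0^+) = 1/2$, producing the strictly positive limits $1/4 - 1/(2\pi)$ and $1/2 - 1/(2\pi)$. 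Combining upper and lower bounds then yields $\rho(\delta) = \Theta(\delta^2)$.

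The main obstacle I expect is making the sub-gradient hypothesis rigorous and connecting it to the strict positivity of those limiting $\delta^2$-coefficients, since the condition as stated is non-standard. A degenerate $\phi$ for which both coefficients vanish would force $\rho$ to be of higher order than $\delta^2$, and the hypothesis is evidently intended to rule this out; the work lies in articulating this exclusion precisely (e.g., via a Cauchy--Schwarz-type identity for when $\widetilde{H}_0 \widetilde{H}_2 = \widetilde{H}_1^2$ at the limit) and verifying that ReLU and other standard activations satisfy the non-degeneracy. Once that is in place, the rest of the argument is a routine power counting in $\delta$.
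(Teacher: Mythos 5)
Your proof is correct, and your rescaling $z=\delta u$ giving $H_r(\delta)=\delta^r\widetilde{H}_r(\delta)$ and $J_r(\delta)=\delta^r\widetilde{J}_r(\delta)$ is exactly the one-line ``major idea'' the paper states as the proof sketch (no detailed proof of this lemma appears in the text). One minor caveat: the assumption $\rho(\delta)>0$ on $(0,\xi)$ by itself only forces $\lim_{\delta\to0^+}\bigl(\widetilde{J}_0(\delta)-\widetilde{H}_0^2(\delta)\bigr)\ge 0$ rather than $>0$, but that first component is anyway $\Theta(1)$ by Cauchy--Schwarz whenever $\phi'(\sigma_K\cdot)$ is not a.s.\ constant under the Gaussian law (for ReLU it equals $1/4$), so it cannot govern the $\min$; your identification of the lemma's vaguely worded sub-gradient condition with strict positivity of the two $\delta^2$-coefficients is the sensible reading, and your ReLU values $1/4-1/(2\pi)$ and $1/2-1/(2\pi)$ verify it.
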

\endgroup

\end{document}